\def\eqref#1{equation~\ref{#1}}
\def\1{\bm{1}}
\setlist{nolistsep}
\setlist{nosep}
 \newtheorem{conj}{Conjecture}
 \newtheorem{thm}{Theorem}
 \newtheorem*{thm*}{Theorem}
 \newtheorem*{lemma*}{Lemma}
 \newtheorem{rmk}{Remark}
 \newtheorem{lemma}[conj]{Lemma}
 \newtheorem{defn}[conj]{Definition}
 \newtheorem{coro}{Corollary}
 \newtheorem*{coro*}{Corollary}
 \newtheorem{assumption}{Assumption}
\newcommand{\f}[1]{\boldsymbol{#1}}
\newcommand{\bb}[1]{\mathbb{#1}}
\newcommand{\fl}[1]{\mathbf{#1}}
\newcommand{\ca}[1]{\mathcal{#1}}
\newcommand{\s}[1]{\mathsf{#1}}
\newcommand{\lr}[1]{{\left|\left|#1\right|\right|}}
\newcommand{\method}{AMHT-LRS\xspace}
\newcommand{\lrs}{\textsf{LRS}\xspace}
\def\vzero{{\bm{0}}}
\DeclareMathAlphabet{\mathsfit}{\encodingdefault}{\sfdefault}{m}{sl}
\SetMathAlphabet{\mathsfit}{bold}{\encodingdefault}{\sfdefault}{bx}{n}
\setlist{nolistsep}
\setlist{nosep}
\title{\textbf{Sample-Efficient Personalization: Modeling User Parameters as Low Rank Plus Sparse Components}}
\author[1]{Soumyabrata Pal\footnote{Equal Contribution.}}
\newcommand\CoAuthorMark{\footnotemark[\arabic{footnote}]} 
\author[1]{Prateek Varshney\protect\CoAuthorMark}
\author[1]{Prateek Jain}
\author[2]{Abhradeep Guha Thakurta}
\author[1]{Gagan Madan}
\author[1]{Gaurav Aggarwal}
\author[1]{Pradeep Shenoy}
\author[1]{Gaurav Srivastava}
\affil[1]{Google Research India}
\affil[2]{Google Brain}
\affil[1,2]{\textit {\{soumyabrata,vprateek,prajain,athakurta,gaganmadan, gauravaggarwal,shenoypradeep,gasrivastava\}@google.com}}
\newcommand{\fTheta}{\fl{\Theta}}
\begin{document}

\maketitle

\vspace{-20pt}

\begin{abstract}
Personalization of machine learning (ML) predictions for individual users/domains/enterprises is critical for practical recommendation systems. Standard personalization approaches involve learning a user/domain specific \emph{embedding} that is fed into a fixed global model which can be limiting. On the other hand, personalizing/fine-tuning model itself for each user/domain -- a.k.a meta-learning -- has high storage/infrastructure cost. Moreover, rigorous theoretical studies of scalable personalization approaches have been very limited. 
To address the above issues, we propose a novel meta-learning style approach that models network weights as a sum of low-rank and sparse components. This captures common information from multiple individuals/users together in the low-rank part while sparse part captures user-specific idiosyncrasies.   
We then study the framework in the linear setting, where the problem reduces to that of estimating the sum of a rank-$r$ and a $k$-column sparse matrix using a small number of linear measurements. We propose a computationally efficient alternating minimization method with iterative hard thresholding -- \method -- to learn the low-rank and sparse part. Theoretically, for the realizable Gaussian data setting, we show that \method solves the problem efficiently with nearly optimal sample complexity. Finally, a significant challenge in personalization is ensuring privacy of each user's sensitive data. We alleviate this problem by proposing a differentially private variant of our method that also is equipped with strong generalization guarantees. 
\end{abstract}

\keywords{Personalization Recommendation \and Alternating Minimization \and Low Rank \and Sparsity \and Differential Privacy}

\section{Introduction}\label{sec:intro}

Typical industrial recommendation systems cater to a large number of users/domains/enterprises with a small amount of user-specific data \cite{10.1016/j.knosys.2016.04.018, 10.1007/s10115-020-01455-2}. For instance, YouTube has $\sim1.3$ billion unique monthly active users while the average likes per user is small.
So, personalizing predictions for each user corresponds to the challenging task of training ML models with a few data-points. Typically, personalization literature has approached this problem from collaborative  \cite{schafer1999recommender} or content filtering \cite{burke2003hybrid} approach. Both these approaches, in some sense learn a user embedding or user specific feature vector which is then consumed by a global model to provide the personalized predictions. This can also be seen as a variation of the popular prompt learning approach \cite{lester2021power}. 

Naturally, the prompt learning approach is limiting because the global model might not be able to capture all the variations across users/domains unless it is of extremely large size which in turn leads to large inference/training cost. Moreover, user-descriptive feature vectors, limited by privacy concerns, might not be able to capture the user-taste explicitly. 

On the other extreme, such user-specific personalized models can be learned by fine-tuning a global model for each user. 
Vanilla approaches for fine-tuning can be  categorized as: 1) \textit{Neighborhood Models}: these methods learn a global model, which is then entirely "fine-tuned" to specific tasks \citep{guo2020parameter,howard2018universal,zaken2021bitfit}
2) \textit{Representation Learning}: these methods learn a low-dimensional representation of points which can be used to train task-specific learners \citep{javed2019meta,raghu2019rapid,lee2019meta,bertinetto2018meta,hu2021lora}. 
Neighborhood fine-tuning techniques have two key limitations: \textrm{I}) Infrastructure costs of hosting such models is prohibitive. For example, consider a scenario where we have a pre-trained 1GB model which is fine-tuned for 1M users, then the total storage cost itself is 1PB, and \textrm{II}) they typically work for a small number of data-rich users/domains/tasks, but the key regime for personalization in our work is a long tail of data-starved users. In this regime, neighborhood fine-tuning might lead to over-fitting as well.
Simple fixes like fine-tuning only the last layer often lead to a significantly worse performance  \citep{chen2020simple,salman2020adversarially}. 
Further, note that representation learning techniques can learn only the (low dimensional) common information across the users but cannot capture the user-level peculiarities as they share a fixed smaller representation space. These limitations make \textit{model personalization for a large number of data-starved users a challenging problem from the perspective of both storage and inference}.

Recently, with the advent of large ML models, there has been a huge push towards practical Parameter Efficient Fine-Tuning (PEFT) with a variety of techniques (see the survey  \cite{lialin2023scaling} and references therein). However, as described in \cite{lialin2023scaling}, \textit{a rigorous theoretical study of such approaches that can be scaled to large number of tasks (users) has been limited.}  
In this work, for the multi-user personalization problem, our first contribution is to introduce the \lrs (Low Rank and Sparse) framework that combines representation learning and sparse fine-tuning (two distinctive class of methods for PEFT) with two main goals: 1) propose a practical/efficient algorithm for learning a model for each user with minimal memory/parameter overhead, and 2) theoretically establish strong generalization properties despite having a limited amount of data for many users.

Let us describe the \lrs framework for multiple users in its full generality for any parameterized class of functions $\ca{F}$. 
Let $\fl{x}\in \bb{R}^d$ be the input point, and let $\widehat{y}=f(\fl{x}; \fl{\Theta})\in \bb{R}$ be the predicted label using a function $f\in \ca{F}$ parameterized by $\fl{\Theta}$. For simplicity, assume that $\fl{\Theta}$, the set of parameters, is represented in the form of a high dimensional vector. Let there be $t$ users, each with a small amount of user-specific data and each associated with a set of learnable model parameters $\fTheta^{(i)}$. Then, the goal is to learn $\fTheta^{(i)}$ for each user $i\in [t]$ such that 
(a) $\fTheta^{(i)}$ does not over-fit despite a small number of data-points generated specifically from user $i$, (b) $\{\fTheta^{(i)},1\leq i\leq t\}$ can be stored and used for fast inference even for large $t$,
Our method
\lrs attempts to address all the three requirements using a simple low-rank + sparse approach: we model each $\fTheta^{(i)}$ as $\fTheta^{(i)}:=\fl{U}^{\star}\cdot \fl{w}^{\star(i)} + \fl{b}^{\star(i)}$, where $\fl{U}^{\star}$ is a orthonormal matrix representing a global set of shared parameters (across users) corresponding to a low (say $r$)-dimensional subspace, $\fl{w}^{\star(i)}$ is a $r$-dimensional vector
and $\fl{b}^{\star(i)}$ is a $k$-sparse vector. Thus, we represent 
$\fTheta^{(i)}:=\fTheta^{(i,1)}+\fTheta^{(i,2)}$, where the first term $\fTheta^{(i,1)} = \fl{U}^{\star}\cdot \fl{w}^{\star(i)}$ denotes the low rank component of $\fTheta^{(i)}$ that is, the model parameters of the $i^{\s{th}}$ user lying in a low-dimensional subspace (common for all users) and the second term $\fTheta^{(i,2)} = \fl{b}^{\star(i)}$ is restricted to be sparse.
Hence, for each user, we only need to store weights of the $r$-dimensional vector $\fl{w}^{\star(i)}$ and the non-zero weights of the $k$-sparse $\fl{b}^{\star(i)}$.
Therefore, if $r$ and $k$ are small, then the memory overhead per user is small thus allowing efficient deployment. Moreover, we can expect that a small amount of user-specific data should be sufficient to train these few additional parameters per user without over-fitting. Finally, our framework provides users with the flexibility to either contribute to a central model using privacy preserving bill-board models (see Section~\ref{sec:dp}) or learn their parameters locally without contributing to the central model. This allows for greater customization and adaptability based on individual user preferences and requirements. 

Theoretically speaking, a recent line of work \citep{thekumparampil2021statistically,du2020few,tripuraneni2021provable,boursier2022trace, modelperosnalization} analyzes a framework exclusively on representation learning. They model $\fTheta^{(i)}:=\fl{U}^{\star}\cdot \fl{w}^{\star(i)}$ as the parameters associated with the $i^{\s{th}}$ user and provide theoretical guarantees in the linear model setting. However, their algorithm/analysis do not extend to our case since their model does not capture the additional sparsity component in $\fTheta^{(i)}$ - this additional non-convex constraint introduces several technical challenges in the analysis. Moreover, these works do not explore the personalization framework with privacy constraints.


In our second and primary contribution, we consider the problem of analyzing the \lrs framework from a theoretical lens. 
Specifically,
we analyze the instantiation of \lrs in the context of linear models. In this case, the training data for each user corresponds to a few linear measurements of the underlying user-specific model parameter $\f{\Theta}^{(i)}$. Our objective is to estimate the individual components of $\f{\Theta}^{(i)}$.
Training a model in the \lrs framework involves learning a global set of parameters characterizing the low dimensional subspace and the local user-specific parameters jointly. To this end, we propose a simple alternating minimization (AM) style iterative technique. Our method \method alternatingly estimates the global parameters and user-specific parameters independently for each user. To ensure sparsity of $(\fl{b}^{\star(i)})$'s we use an iterative hard-thresholding style estimator \citep{jain2014iterative}. 
For linear models, even estimating the global parameters i.e. the low rank subspace induced by $\fl{U}^{\star}$ is an NP-hard problem \cite{thekumparampil2021statistically}. 
Therefore, similar to \cite{thekumparampil2021statistically}, we consider the realizable setting where the data is generated from a Gaussian distribution.
In this case, we provide a novel analysis demonstrating the efficient convergence of our method, \method, towards the optimal solution.  We believe our analysis techniques are of independent interest as we track entry-wise error of different parameter estimates across iterations of \method. Below, we state our main result (Thm. \ref{thm:main}) informally:

\begin{thm*}[Informal] Suppose we are given $m\cdot t$ samples from $t$ linear regression training tasks (each corresponding to a user) of dimension $d$ with $m$ samples each. In the \lrs framework, the goal is to learn a new regression task’s parameters using $m$
samples, i.e., learn the $r$-dimensional weight vector defining the user-specific low rank representation and the user-specific $k$-sparse vector. Then \method with total $m\dot t = \Omega(kdr^4)$ samples and $m=\Omega(\max(k,r^3))$ samples per user can recover all the parameters exactly and in time nearly linear in $m\cdot t$.
\end{thm*}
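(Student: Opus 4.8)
The plan is to analyze \method as a perturbed alternating-minimization scheme and establish a contraction in a carefully chosen error potential that tracks \emph{entry-wise} (not just Frobenius) errors of the three parameter families: the shared subspace $\fl{U}$, the per-user weights $\fl{w}^{(i)}$, and the per-user sparse vectors $\fl{b}^{(i)}$. First I would set up the realizable Gaussian model: for each user $i\in[t]$ we observe $y_j^{(i)} = \langle \fl{x}_j^{(i)}, \fl{U}^\star \fl{w}^{\star(i)} + \fl{b}^{\star(i)}\rangle$ with $\fl{x}_j^{(i)}\sim \mathcal N(0,\mathbf I_d)$, and fix notation for the iterates. The proof proceeds by induction on the iteration count $\ell$: I maintain the inductive hypothesis that after iteration $\ell$, (i) the support of each $\fl{b}^{(i)}$ is correctly identified, (ii) $\mathrm{dist}(\fl{U}^{(\ell)},\fl{U}^\star)\le \gamma^\ell \cdot \mathrm{dist}(\fl{U}^{(0)},\fl{U}^\star)$ for some $\gamma<1$, and (iii) the per-user errors $\|\fl{w}^{(i,\ell)}-\fl{w}^{\star(i)}\|$ and $\|\fl{b}^{(i,\ell)}-\fl{b}^{\star(i)}\|$ are controlled by the subspace error. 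Initialization (getting into the basin) I would handle via a spectral/method-of-moments step on the pooled data, showing $\mathrm{dist}(\fl{U}^{(0)},\fl{U}^\star)$ is a small constant with high probability once $m t = \Omega(k d r^4)$.

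The core of each induction step splits into two half-steps. In the \textbf{user-update half-step}, with $\fl{U}$ fixed, each user solves a low-dimensional regression for $\fl{w}^{(i)}$ together with an iterative hard-thresholding recovery of the $k$-sparse $\fl{b}^{(i)}$ \citep{jain2014iterative}. Here I need a restricted-isometry / restricted-strong-convexity property for the design restricted to the $(r+k)$-dimensional model, which holds with $m = \Omega(\max(k, r^3))$ Gaussian samples per user (the $r^3$ arising from the entry-wise control we want rather than an $\ell_2$ bound, and from union bounds over candidate supports). The key deliverable is: given $\fl{U}$ close to $\fl{U}^\star$, IHT recovers the exact support of $\fl{b}^{\star(i)}$ and the residual errors in $\fl{w}^{(i)},\fl{b}^{(i)}$ are $O(\mathrm{dist}(\fl{U},\fl{U}^\star))$ times problem-dependent constants. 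In the \textbf{subspace-update half-step}, with all $\{\fl{w}^{(i)},\fl{b}^{(i)}\}$ fixed, $\fl{U}$ is re-estimated by pooling residuals $y_j^{(i)} - \langle \fl{x}_j^{(i)}, \fl{b}^{(i)}\rangle$ across users and taking the top-$r$ singular subspace of an appropriate matrix; using the Gaussian measurement structure and $mt = \Omega(kdr^4)$ total samples, a Davis–Kahan / Wedin perturbation argument gives $\mathrm{dist}(\fl{U}^{+},\fl{U}^\star) \le \tfrac12\,\mathrm{dist}(\fl{U},\fl{U}^\star) + (\text{lower-order cross terms})$, closing the contraction. Composing the two half-steps yields $\gamma<1$ and hence geometric convergence; once $\mathrm{dist}(\fl{U},\fl{U}^\star)$ falls below the noiseless threshold, exactness of supports plus invertibility of the per-user systems gives \emph{exact} recovery of all parameters, and the runtime is nearly linear in $mt$ because each iteration is a batch of small least-squares solves plus one truncated SVD.

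The main obstacle I anticipate is the coupling between the sparse part and the subspace part: errors in $\fl{U}$ contaminate the residuals used to recover $\fl{b}^{(i)}$, and a wrong support in $\fl{b}^{(i)}$ in turn biases the pooled residuals feeding the subspace update, so a naive Frobenius-norm analysis would not obviously contract. This is why the analysis must be \emph{entry-wise}: I would show that the perturbation injected into user $i$'s regression by the current $\fl{U}$-error is spread across coordinates in a way that keeps the $\ell_\infty$ error below half the minimum nonzero magnitude of $\fl{b}^{\star(i)}$, guaranteeing correct thresholding — this is precisely where the $r^3$ (per-user) and $r^4$ (aggregate) factors enter, through concentration of the relevant quadratic forms in the Gaussian design and union bounds over the $r$-dimensional directions and the $\binom{d}{k}$ supports. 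A secondary technical point is handling the rotational ambiguity of $\fl{U}$ (working modulo $O(r)$ via the $\mathrm{dist}(\cdot,\cdot)$ principal-angle metric) consistently across both half-steps, and ensuring the freshly-drawn-samples / sample-splitting bookkeeping across iterations does not blow up the total sample budget beyond the claimed $\Omega(kdr^4)$.
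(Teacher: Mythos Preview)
Your high-level strategy---induction with a contraction potential, entry-wise (not just Frobenius) error tracking, IHT for the sparse part, and careful handling of the sparse/subspace coupling---matches the paper's. But two concrete pieces are off or missing.

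First, the subspace-update half-step in \method is \emph{not} a spectral step: $\fl{U}^{(\ell)}$ is the closed-form least-squares solution $\s{vec}^{-1}_{d\times r}(\fl{A}^{-1}\s{vec}(\fl{V}))$ with $\fl{A}=\sum_i \fl{w}^{(i,\ell)}(\fl{w}^{(i,\ell)})^{\s{T}}\otimes (\fl{X}^{(i)})^{\s{T}}\fl{X}^{(i)}$, followed by QR. A Davis--Kahan/Wedin argument does not directly apply here; instead the paper expands this least-squares estimator around its expectation, writes the perturbation as a matrix Taylor series in $(\fl{I}+\fl{C}\fl{E})^{-1}$, and bounds both the Frobenius norm of $(\fl{I}-\fl{U}^\star(\fl{U}^\star)^{\s{T}})\fl{U}^{(\ell)}$ and, crucially, its $\s{L}_{2,\infty}$ norm. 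The latter is what feeds back into the $\ell_\infty$ control on the next $\fl{b}$-update and is (in the paper's words) ``the most technically involved component''; your proposal names the need for entry-wise control but does not identify that the object to bound is $\|(\fl{I}-\fl{U}^\star(\fl{U}^\star)^{\s{T}})\fl{U}^{(\ell)}\|_{2,\infty}$ (and $\|\fl{U}^{+(\ell)}\|_{2,\infty}$), nor the covering-plus-Taylor machinery used to get it.

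Second, the inductive hypothesis must carry more than the three items you list. The paper also propagates (i) incoherence of the iterate $\fl{W}^{(\ell)}$, i.e.\ bounds on $\max_i\|\fl{w}^{(i,\ell)}\|_2$ and on the eigenvalues of $\tfrac{r}{t}(\fl{W}^{(\ell)})^{\s{T}}\fl{W}^{(\ell)}$, and (ii) incoherence of $\fl{U}^{+(\ell)}$ via $\|\fl{U}^{+(\ell)}\|_{2,\infty}\le\sqrt{\nu^{(\ell)}/k}$. Without (i) you cannot control the conditioning of $\fl{A}$ in the $\fl{U}$-update; without (ii) the term $\sqrt{k}\,\|\fl{U}^{+(\ell-1)}\|_{2,\infty}$ appearing in the $\fl{w}$-update error (via $\|(\fl{U}^{+(\ell-1)})^{\s{T}}(\fl{b}^{\star(i)}-\fl{b}^{(i,\ell)})\|_2$) is unbounded and the contraction fails. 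Relatedly, the right per-user error to track is $\fl{h}^{(i,\ell)}=\fl{w}^{(i,\ell)}-(\fl{Q}^{(\ell-1)})^{-1}\fl{w}^{\star(i)}$ with $\fl{Q}^{(\ell-1)}=(\fl{U}^\star)^{\s{T}}\fl{U}^{+(\ell-1)}$, not $\fl{w}^{(i,\ell)}-\fl{w}^{\star(i)}$; the paper notes that analyzing $\fl{U}^{+(\ell)}-\fl{U}^\star$ directly incurs a bias from the $\fl{W}^\star$ error that does not shrink. Your proposal gestures at rotational ambiguity but the specific choice of $\fl{Q}^{(\ell-1)}$-aligned residuals is what makes the bias vanish in the $\fl{U}$-step.
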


That is, in the linear instantiation of \lrs, \method is able to estimate the underlying model parameters up to desired precision as long as the total number of users in the recommendation system is large enough. Moreover, we show that the sufficient sample complexity per user for our estimation guarantees scales only linear in $k$ and cubically in $r$ (nearly optimal); recall that $r,k$ are much smaller than the ambient dimension $d$. The detailed analysis of Theorem \ref{thm:main} is provided in Appendix \ref{app:detailed_proof}, \ref{app:generalization}.
Furthermore, using the billboard model of $(\epsilon,\delta)$ differential privacy (DP)~\citep{modelperosnalization, chien2021private,kearns2014mechanism}, we can extend  \method to preserve privacy of each individual user. 
For similar sample complexity as in the above theorem albeit with slightly worse dependence on $r$, we can guarantee strong generalization error up to a standard  error term due to privacy (see Theorem~\ref{thm:private2} and Appendix \ref{app:detailed_proof}).

Finally, to validate our theoretical contributions, we demonstrate  experimental results on synthetic and  real-world datasets using linear models. Also, we experiment with our framework applied to neural networks architectures (see Appendix  \ref{app:experiments_detailed},\ref{app:experiments_second}). Our experiments demonstrate the advantage/efficacy of our framework compared to natural baseline frameworks applied to the same model architecture.

\subsection{Other Related Work}\label{sec:model}

\noindent \textbf{Comparison with \cite{hu2021lora}}:
LORA (Low Rank Adaptation of Large Language Models) was proposed by \cite{hu2021lora} for meta-learning with large number of tasks at scale. Although the authors demonstrate promising experimental results, LORA only allows a central model (in a low dimensional manifold) and does not incorporate sparse fine-tuning. Hence, LORA becomes ineffective when output dimension is small. 
Moreover, LORA does not have any theoretical guarantees even in simple settings.

\noindent \textbf{Comparison with Prompt-based and Batch-norm Fine-tuning}: 
Another popular approach for personalization is to use prompt-based or batch-norm based fine-tuning \citep{wang2022learning,liu2021pre,lester2021power}; this usually involves a task-based feature embedding concatenated with the covariate. Note that in a linear model, such an approach will only lead to an additional scalar bias which can be easily modeled in our framework; thus our framework is richer and more expressive with a smaller number of parameters. 

\subsection{Preliminaries - Private Personalization}
For model personalization in recommendation systems, where we wish to have a personalized model for each user, privacy guarantees are of utmost importance. Due to sensitivity of user-data, we would want to preserve privacy of each {\em user} for which we use user-level $(\epsilon,\delta)$-DP as the privacy notion (see Definition~\ref{defn:diffPriv}). In this  setting, each user $i\in[t]$ holds a set of data samples $D^{(i)}=\{\fl{x}^{(i)}_{j},1\leq j\leq m\}$. Furthermore, users interact via a  central algorithm that maintains the common representation matrix $\fl{U}^{\star}$ which is guaranteed to be differentially private with respect to all the data samples of any single user.  The central algorithm publishes the current $\fl{U}^{\star}$ to all the users (a.k.a. on a billboard) and obtains further updates from the users. It has been shown in prior works~\citep{modelperosnalization,chien2021private,thakkar2019differentially} that such a billboard mechanism allows for significantly more accurate privacy preserving methods while ensuring user-level privacy. In particular, it allows learning of $\fl{U}^{\star}$ effectively, while each user can keep a part of the model which is personal to them, for example, the $\fl{w}^{\star(i)}, (\fl{b}^{\star(i)})$'s in our context. See Section 3 of ~\cite{modelperosnalization} for more details about billboard model in the  personalization setting. Traditionally, such model of private computation is called the billboard model of DP and is a subclass of joint DP~\citep{kearns2014mechanism}.

\begin{defn}
[Differential Privacy~\cite{DMNS,ODO,bun2016concentrated}]
A randomized algorithm $\mathcal{A}$ is $(\varepsilon,\delta)$-differentially private (DP) if for any pair of data sets $D$ and $D'$ that differ in one user (i.e., $|D\triangle D'|=1$), and for all $S$ in the output range of $\mathcal{A}$, we have 
$$\Pr[\mathcal{A}(D)\in S] \leq e^{\varepsilon} \cdot \Pr[\mathcal{A}(D')\in S] +\delta,$$
where probability is over the randomness of $\mathcal{A}$. Similarly, an algorithm $\mathcal{A}$ is $\rho$-zero Concentrated DP (zCDP) if $D_{\alpha}\left(\mathcal{A}(D)||\mathcal{A}(D')\right)\leq \alpha\rho$, 
where $D_{\alpha}$ is the R\'enyi divergence of order $\alpha$. 
\label{defn:diffPriv}
\end{defn}

In Definition ~\ref{defn:diffPriv}, when we define the notion of neighborhood, we define it with respect to the addition (removal) of a single user (i.e., additional removal of all the data samples $D_i$ for any user $i\in[t]$). In the literature~\cite{dwork2014algorithmic}, the definition is referred to as user-level DP.



\section{Linear \lrs with Gaussian Data}\label{subsec:linear_lrs}

\subsection{Problem Statement and Algorithm \method}
\noindent \textbf{Notations:} $[m]$  denotes the set $\{1,2,\dots,m\}$. 
For a matrix $\fl{A}$,  $\fl{A}_i$  denotes $i^{\s{th}}$ row of $\fl{A}$. For a vector $\fl{x}$, $x_i$ denotes  $i^{\s{th}}$ element of $\fl{x}$. We sometimes use $\fl{x}_j$ to denote an indexed vector; in this case $x_{j,i}$ denotes the $i^{\s{th}}$ element of  $\fl{x}_j$. $\|\cdot\|_2$ denotes euclidean norm of a vector and the operator norm of a matrix. $\|\cdot\|_{\infty}, \|\cdot\|_0$ will denote the $\ell_{\infty}$ and $\ell_0$ norms of a vector respectively. $\|\fl{A}\|_{2,\infty}=\max_i \|\fl{A}_i\|_2$ and $\|\fl{A}\|_{\s{F}}=\sqrt{\sum_i \|\fl{A}_i\|_2^2}$ denote the $\s{L}_{2,\infty}$ and Frobenius norm of a matrix respectively. For a sparse vector $\fl{v}\in \bb{R}^d$, we define the support $\s{supp}(\fl{v})\subseteq [d]$ to be the set of indices $\{i\in [d]\mid v_i \neq 0\}$. We use $\fl{I}$ to denote the identity matrix. $\widetilde{O}(\cdot),\widetilde{\Omega}(\cdot)$ notations subsume logarithmic factors. For a matrix $\fl{V}\in \bb{R}^{d\times r}$, 
$\s{vec}(\fl{V})\in \bb{R}^{dr}$ vectorizes the matrix $\fl{V}$ by stacking columns sequentially. Similarly $\s{vec}^{-1}_{d\times r}(\fl{v})$ inverts the $\s{vec}(\cdot)$ operation by reconstructing a $d\times r$ matrix from a vector of dimension $dr$ i.e. for the matrix $\fl{V}$, $\s{vec}^{-1}_{d\times r}(\s{vec}(\fl{V}))=\fl{V}$. Finally, let $\s{HT}:\bb{R}^d \times \bb{R}\rightarrow \bb{R}^d$ be a {\em hard thresholding} function that takes a vector $\fl{v}\in \bb{R}^d$ and a parameter $\Delta$ as input and returns a vector $\fl{v}'\in \bb{R}^d$ such that $v'_i = v_i$ if $|v_i|>\Delta$ and $0$ otherwise.

In this section, we describe our \lrs framework for the linear setting with Gaussian data, provide an efficient algorithm for parameter estimation, and provide rigorous analysis under realizable setting.  Formally speaking, consider $t$, $d$-dimensional linear regression tasks indexed by $i\in [t]$ where the $i^{\s{th}}$ training task is associated with the $i^{\s{th}}$ user. Recall that according to the definition of \lrs framework, every user/task $i\in [t]$ is associated with a set of unknown learnable parameters $\f{\Theta}^{(i)}\in \bb{R}^d$ that can be decomposed as $\f{\Theta}^{(i)}=\fl{U}^{\star}\fl{w}^{\star (i)}+\fl{b}^{\star (i)}$. 
Here $\fl{U}^{\star}\in \bb{R}^{d\times r}$ (satisfying $(\fl{U}^{\star})^{\s{T}}\fl{U}^{\star}=\fl{I}$) is the global shared set of parameters (across users) corresponding to the orthonormal basis vectors of a $r$-dimensional subspace. For the $i^{\s{th}}$ user, the $r$-dimensional user-specific parameters $\fl{w}^{\star(i)}\in \bb{R}^r$ corresponds to the weights of the basis vectors of low dimensional subspace defined by columns of $\fl{U}^{\star}$; similarly, the user-specific $k$-sparse vector $\fl{b}^{\star(i)}\in \bb{R}^d$ with $\|\fl{b}^{\star(i)}\|_0=k$  corresponds to the sparse component of the unknown parameters of the $i^{\s{th}}$ user. Henceforth, we will refer to the problem of estimating the unknown parameters of user $i\in [t]$ to be the $i^{\s{th}}$ training task.

For each task $i\in [t]$, $m \ll d$  samples $\{(\fl{x}^{(i)}_j,y^{(i)}_j)\}_{j=1}^{m}\in (\bb{R}^d\times \bb{R})^m$
are provided labelled as task $i$.  Next, we assume the following generative model for the data:  $\text{ for all } i\in[t], \text{ for all } j\in [m]$, the covariates $\{\fl{x}^{(i)}_j\}_{i,j}$ are independently generated from a $d$-dimensional Gaussian with identity covariance (denoted as $\ca{N}(\f{0},\fl{I}_d) $) and the expected response is a linear function of the corresponding covariate. More precisely, we have:
\begin{align}\label{eq:samples}
    &\fl{x}^{(i)}_j \sim \ca{N}(\f{0},\fl{I}_d) 
    \text{ and }
    y_j^{(i)}\mid \fl{x}^{(i)}_j = \langle \fl{x}^{(i)}_j, \fl{U}^{\star} \fl{w}^{\star(i)}+\fl{b}^{\star(i)} \rangle+z^{(i)}_j \quad \forall i\in [t],j\in [m],
\end{align}
where $z^{(i)}_j \sim \ca{N}(0,\sigma^2)$ are zero mean Gaussian random variables with variance $\sigma^2$. Furthermore, $\{\fl{x}^{(i)}_j,z^{(i)}_j\}_{i,j}$ are independent random variables.
We use $\fl{X}^{(i)}\in \bb{R}^{m \times d}$  to represent the matrix of covariates for the $i^{\s{th}}$ task s.t. $\fl{X}^{(i)}_j = (\fl{x}^{(i)}_j)^{\s{T}}$. Similarly, we write $\fl{y}^{(i)},\fl{z}^{(i)}\in \bb{R}^m$ to represent the user-specific response vector and noise vector respectively. 

Therefore, given the data-set $\{(\fl{x}^{(i)}_j,y^{(i)}_j)\}_{i,j}$, the problem reduces to that of  designing  statistically and computationally efficient algorithms to estimate the common  representation learning parameter $\fl{U}^\star$ as well as task-specific parameters $\{\fl{w}^{\star(i)}\}_{i \in [t]},\{\fl{b}^{\star(i)}\}_{i \in [t]}$. The ERM (Empirical Risk Minimizer) for this model assuming squared loss is given by the following objective function: 
\begin{align}\label{prob:general}
\text{\textbf{LRS:} }\quad \quad &\text{minimize } \ca{L}(\fl{U},\fl{W},\fl{B})\quad = \sum_{i\in [t]}\sum_{j \in [m]} \frac{1}{2}\Big(y^{(i)}_j-\langle \fl{x}^{(i)}_j, \fl{U} \fl{w}^{(i)}+\fl{b}^{(i)} \rangle \Big)^2\nonumber\\ 
& \text{ such that }\fl{U}^{\s{T}}\fl{U}=\fl{I} \text{ and } \|\fl{b}^{(i)}\|_0 \le k \; \forall i\in [t] 
\end{align}
where $\fl{U}\in \bb{R}^{d \times r}$, $\fl{W} = [\fl{w}^{(1)} \; \fl{w}^{(2)} \; \dots \; \fl{w}^{(t)}]^{\s{T}} \in \bb{R}^{t \times r}$ stores the estimated task-specific coefficients of the low-dimensional subspace, and $\fl{B}=[\fl{b}^{(1)} \; \fl{b}^{(2)} \; \dots \; \fl{b}^{(t)}] \in \bb{R}^{d \times t}$ stores the estimated task-specific  sparse vectors for {\em fine-tuning}. Note that the \lrs objective is non-convex due to: a)  bilinearity of $\fl{U},\fl{W}$, and b) non-convexity of $\ell_0$ norm constraint. 


To optimize the \lrs objective, we propose an Alternating Minimizing algorithm \method that starts with an initialization of the unknown parameters and iteratively updates them. \method handles the non-convexity in the objective and the constrained set by sequentially updating $\fl{U}$, $\fl{W}$ and $\fl{B}$ (while keeping the others fixed) with hard thresholding (recall the function $\s{HT}(\cdot,\cdot)$) to ensure sparsity of the columns of $\fl{B}$. 
 Let  $\fl{U}^{+(\ell-1)}, \{\fl{w}^{(i,\ell-1)}\}_{i \in [t]}$ and $\{\fl{b}^{(i,\ell-1)}\}_{i\in [t]}$ be the latest iterates at the beginning of the $\ell^{\s{th}}$ iteration in \method where we ensure that $\fl{U}^{+(\ell-1)}$ is an orthonormal matrix $\in \mathbb{R}^{d\times r}$, $\{\fl{w}^{(i,\ell-1)}\}_{i\in [t]}$ are $r$-dimensional vectors and $\{\fl{b}^{(i,\ell-1)}\}_{i\in [t]}$ are at most $k$-sparse. 
First, for each task $i\in [t]$, given estimates $\fl{U}^{+(\ell-1)},\fl{w}^{(i,\ell-1)}$, we can compute $\fl{b}^{(i,\ell)}$ by solving:
\begin{align}\label{eq:optimize1}
    \small &\s{argmin}_{\fl{b}\in \bb{R}^d} \lr{\fl{X}^{(i)}(\fl{U}^{+(\ell-1)}\fl{w}^{(i,\ell-1)}+\fl{b})-\fl{y}^{(i)}}_2 \text{ such that } \|\fl{b}\|_{0} \le k.  
\end{align}
While the problem is non-convex, note that it is equivalent to sparse linear regression with the modified response vector $\fl{y}^{(i)}-\fl{X}^{(i)}\fl{U}^{+(\ell-1)}\fl{w}^{(i,\ell-1)}$. Therefore, we can still apply a projected gradient descent algorithm which reduces to iterative hard thresholding. More precisely, for each task indexed by $i\in [t]$, in Line 4 of \method, we invoke another iterative sub-routine \textsc{OptimizeSparseVector} (Alg. \ref{algo:optimize_b}). In each iteration of \textsc{OptimizeSparseVector}, we run a gradient descent step on the estimate $\fl{b}$ (of $\fl{b}^{\star (i)}$) and subsequently apply the hard-thresholding function $\s{HT}(\cdot,\Delta)$ where $\Delta>0$ is set appropriately 
 in order to ensure that the updated estimate $\fl{b}$ is sparse 
 at the end of every iteration in Algorithm \ref{algo:optimize_b}.
Next, given estimates $\fl{U}^{+(\ell-1)},\fl{b}^{(i,\ell)}$, we update $\fl{w}^{(i,\ell)}$ by solving the following task-specific optimization problem 
\begin{align}\label{eq:update_w}
\s{argmin}_{\fl{w}\in \bb{R}^r} \lr{\fl{X}^{(i)}(\fl{U}^{+(\ell-1)}\fl{w}+\fl{b}^{(i,\ell)})-\fl{y}^{(i)}}_2 \; \forall \; i\in [t].  
\end{align}
Note that the objective in equation \ref{eq:update_w} allows a closed form solution and therefore, the updated task-specific estimate $ \fl{w}^{(i, \ell)}$ is given as (computed in Line 5 of \method):
\begin{align}\label{eq:soln_w}
    \fl{w}^{(i, \ell)} = \Big((\fl{X}^{(i)}\fl{U}^{+(\ell-1)})^{\s{T}}(\fl{X}^{(i)}\fl{U}^{+(\ell-1)})\Big)^{-1}\Big((\fl{X}^{(i)}\fl{U}^{+(\ell-1)})^{\s{T}}(\fl{y}^{(i)} -  \fl{X}^{(i)}\fl{b}^{(i, \ell)})\Big)
\end{align}
Subsequently, given the updated estimates of the task-specific parameters $\{\fl{w}^{(i,\ell)}\}_{i \in [t]}$ and $\{\fl{b}^{(i,\ell)}\}_{i\in [t]}$, we update the global $\fl{U}^{+(\ell)}$ in two steps: first, we compute $\fl{U}^{(\ell)}$ by solving
\begin{align}\label{eq:update_u}
    \s{argmin}_{\fl{U}\in \bb{R}^{d\times r}} \sum_{i\in [t]}\lr{\fl{X}^{(i)}(\fl{U}\fl{w}^{(i,\ell)}+\fl{b}^{(i,\ell)})-\fl{y}^{(i)}}_2.
\end{align}
followed by a QR decomposition of the solution $\fl{U}^{(\ell)}$ to obtain the updated estimate $\fl{U}^{+(\ell)}$; the QR factorization ensures that $\fl{U}^{+(\ell)}$ is orthonormal ($\fl{U}^{(\ell)}=\fl{U}^{+(\ell)}\fl{R}$). The objective in eq. \ref{eq:update_u} allows a closed form solution and therefore, we can compute $\fl{U}^{(\ell)}$ as (computed in line 7 of \method): 
\begin{align}\label{eq:soln_u}
&\fl{U}^{(\ell)} =  \s{vec}^{-1}_{d\times r}(\fl{A}^{-1}\s{vec}(\fl{V})) \text{ where }
\fl{A} := \sum_{i \in [t]}\Big(\fl{w}^{(i, \ell)}(\fl{w}^{(i, \ell)})^{\s{T}} \otimes \Big(\sum_{j=1}^{m}\fl{x}^{(i)}_j(\fl{x}^{(i)}_j)^{\s{T}}\Big)\Big)  \nonumber \\
&\text{ and }
\fl{V} := \sum_{i \in [t]} (\fl{X}^{(i)})^{\s{T}}\Big(\fl{y}^{(i)} - \fl{b}^{(i, \ell)}\Big)(\fl{w}^{(i, \ell)})^{\s{T}}
\end{align}
Here $\otimes$ denotes the Kronecker product of two matrices. 
Finally, we must ensure independence of the estimates (which are random variables themselves) from the data that is used in a particular update. 
We can ensure such statistical independence by using a fresh batch of samples in every iteration. 

\begin{algorithm*}[t]
\caption{\textsc{\method} (Algorithm for estimating $\fl{U}^{\star},\{\fl{w}^{\star(i)}\}_{i\in [t]}$ and $\{\fl{b}^{\star(i)}\}_{i\in [t]}$.)}  \label{algo:optimize_lrs1}
\begin{algorithmic}[1]
\small
\REQUIRE Data $\{(\fl{x}^{(i)}_j\in \bb{R}^d,y^{(i)}_j\in \bb{R})\}_{j=1}^{m}$ for all $i\in [t]$, column sparsity $k$ of $\fl{B}$. Initialization $\fl{U}^{+(0)},\{\fl{w}^{(i,0)}\}_{i\in [t]},\{\fl{b}^{(i,0)}\}_{i\in [t]}$ and
parameters $\s{B},\gamma^{(0)},\epsilon>0$ such that
$\lr{(\fl{I}-\fl{U}^{\star}(\fl{U}^{\star})^{\s{T}})\fl{U}^{+(0)}}_{\s{F}} \le \s{B}$, $\max_i \|\fl{b}^{(i, 0)} - \fl{b}^{\star(i)}\|_\infty \leq \gamma^{(0)}$ and $\epsilon$ is the desired parameter estimation accuracy.
\FOR{$\ell = 1, 2, \dots$} 
    \STATE Set $T^{(\ell)} = \Omega\Big(\ell\log\Big(\frac{\gamma^{(\ell-1)}}{\epsilon}\Big)\Big)$ \quad \texttt{// Beginning of the $\ell^{\s{th}}$ iteration}
    \FOR{$i = 1,2, \dots, t$}
        \STATE $\fl{b}^{(i, \ell)} \gets \s{Optimize Sparse Vector}((\fl{X}^{(i)}, \fl{y}^{(i)}), \fl{v} = \fl{U}^{+(\ell-1)}\fl{w}^{(i,\ell-1)}, \fl{b}=\fl{b}^{(i,\ell-1)}, \alpha = O\Big(c_4^{\ell-1}\frac{\s{B}}{\sqrt{k}}\Big), \beta = O(c_5^{\ell-1}\s{B}), \gamma = \gamma^{(\ell-1)}, \s{T} = T^{(\ell)})$ for suitable constants $1\ge c_4,c_5>0$. \texttt{\\ // Update the estimate of $\fl{b}^{\star(i)}$}\\
        \STATE Compute $\fl{w}^{(i, \ell)}$ as in equation \ref{eq:soln_w} 
        \texttt{// Update the estimate of $\fl{w}^{\star(i)}$}
    \ENDFOR
    \STATE 
 Compute  $\fl{U}^{(\ell)}$ as in equation \ref{eq:soln_u} and update $\fl{U}^{+(\ell)}=\s{QR}(\fl{U}^{(\ell)})$ such that $\fl{U}^{+(\ell)}$ is orthonormal.
 \texttt{\\ // Update the estimate of $\fl{U}^{\star}$}
    \STATE $\gamma^{(\ell)} \gets (c_3)^{\ell-1}\epsilon\s{B}$ for a suitable constant $c_3<1$. 
\ENDFOR
\STATE Return  $\fl{w^{(\ell)}}$, $\fl{U}^{+(\ell)}$ and $\{\fl{b}^{(i, \ell)} \}_{i\in [t]}$.
\end{algorithmic}
\end{algorithm*}

\begin{algorithm}[t]
\caption{\small \textsc{OptimizeSparseVector} (Projected Gradient Descent for Estimating Sparse Vector)   \label{algo:optimize_b}}
\begin{algorithmic}[1]
\small
\REQUIRE Data $(\fl{X},\fl{y})\in \bb{R}^{m\times d} \times \bb{R}^m$
where we minimize $\lr{\fl{y}-\fl{X}(\fl{v}^{\star}+\fl{b}^{\star})}_2$ s.t. $\lr{\fl{b}^{\star}}_0 \le k$. Estimate $\fl{v}$ (of $\fl{v}^{\star}$) and initialization $\fl{b}$ (of $\fl{b}^{\star}$).  Iterations $\s{T}$, parameters $\alpha,\beta,\gamma>0$ and suitable constant $c_1 \in [0,1/2]$. 
\FOR{$j$ = 1,2,\dots, $\s{T}$}
\STATE        $\fl{c} \gets \fl{b}-\frac{1}{m}\cdot 
        (\fl{X}^{(i)})^{\s{T}}  (\fl{X}^{(i)}\fl{b}+\fl{X}^{(i)}\fl{v}-\fl{y}^{(i)})\quad  $\texttt{\\// Gradient Descent step on $\fl{b}$ to compute intermediate vector $\fl{c}$} 
        
\STATE   $\Delta \gets  \alpha+c_1\Big(\gamma+\frac{\beta}{\sqrt{k}}\Big)$
and $\fl{b} \gets \s{HT}(\fl{c},\Delta)$ 
\texttt{\\ // Hard-Thresholding operation on $\fl{c}$ to ensure sparsity of $\fl{b}$.}
\STATE $\gamma \leftarrow 2c_1\gamma + 2(\alpha+\frac{c_1}{\sqrt{k}}\beta)$
\ENDFOR
\STATE Return vector $\fl{b}$.
\end{algorithmic}
\end{algorithm}

\textbf{Technical Challenges:} 
To highlight the main technical challenges in analysis, we begin with a warm-up case: the rank-$1$ setting (Theorem \ref{thm:main_toy} in Appendix \ref{app:finetuning}) where $\{\fl{w}^{\star (i)}\}$'s are fixed to be $1$.
Here, the learnable parameters of the $i^{\s{th}}$ user can be written as $\f{\Theta}^{(i)}=\fl{u}^{\star}+\fl{b}^{\star (i)}$ where $\fl{u}^{\star}\in \bb{R}^d$ is a global parameter vector shared across users and $\fl{b}^{\star (i)}\in \bb{R}^{d\times t}$ is just a $k$-sparse vector (see Remark \ref{rmk:special}). \method alternately updates the global parameter $\fl{u}^{\star}$ and the sparse user-specific parameter matrix $\fl{B}^{\star}=[\fl{b}^{\star(1)},\dots,\fl{b}^{\star(t)}]\in \bb{R}^{d \times t}$. The key novel step in the analysis is to track the entry-wise error in the intermediate estimates of $\fl{u}^{\star},\fl{B}^{\star}$ and their true values -- this requires a careful matrix taylor series decomposition of the least squares estimator and a covering argument.   

The general rank-$r$ setting with unknown   $(\fl{W}^{\star})^\s{T}=[\fl{w}^{\star(1)},\dots,\fl{w}^{\star(t)}]\in \bb{R}^{r \times t}$ poses several additional challenges.  First of all, similar to the special case outlined above, simply tracking the norms of $\fl{U}^{+(\ell)}-\fl{U}^{\star}$ is insufficient. Instead we track the norms of the orthonormal matrix $\fl{U}^{+(\ell)}$ projected on the subspace orthogonal to the one spanned by the columns of $\fl{U}^{\star}$ --  given by $(\fl{I}-\fl{U}^{\star}(\fl{U}^{\star})^{\s{T}})\fl{U}^{+(\ell)}$. 
This is because, while analyzing $\fl{U}^{+(\ell)}-\fl{U}^{\star}$, the estimate error of $\fl{W}^{\star}$ leads to a bias term that does not go down with iterations. Our novel approach combines the complementary ideas presented in  \cite{thekumparampil2021statistically}, where the authors consider only the low rank component and not the sparse component and secondly, in  \cite{netrapalli2014non}, where the authors design an AM algorithm for the problem of reconstructing the low rank and sparse components of an input matrix. In contrast, we only observe gaussian linear measurements of the individual columns of the parameter matrix. 

Hence, our analysis involves several crucial steps in each iteration: 1) We track the incoherence of several intermediate matrices corresponding to the latest estimates $\fl{W}^{(\ell)},\fl{U}^{(\ell)}$ of $\fl{W}^{\star},\fl{U}^{\star}$ 2) We also track the $\s{L}_{2,\infty}$ norm of the matrix $(\fl{I}-\fl{U}^{\star}(\fl{U}^{\star})^{\s{T}})\fl{U}^{(\ell)}$ to make progress on learning $\fl{B}^{\star}$. In particular, the second step is the most technically involved component of our analysis.

\subsection{Theoretical Guarantees and Analysis}\label{sec:theory}
As in prior works, we are interested in the few-shot learning regime when there are only a few samples per task. From information theoretic viewpoint, we expect the number of samples per task to scale linearly with the sparsity $k$ and rank $r$ and logarithmically with the dimension $d$. On the other hand, $\fl{U}^{\star}$ has $dr$ parameters and therefore, it is expected that the total number of samples across all tasks scales linearly with $dr$ which implies we would want the number of tasks $t$ to scale linearly with dimension $d$. Note that if the sparse vectors $\{\fl{b}^{\star (i)}\}_{i\in [t]}$ have the same support (or a high overlap between the supports), then the model parameters might not be uniquely identifiable. This is because, in that case, the matrix $\fl{B}^{\star}=[\fl{b}^{\star(1)},\dots,\fl{b}^{\star(t)}]\in \bb{R}^{d \times t}$ can be represented as a low-rank matrix itself. 
To establish identifiability of $\fl{U}^\star$ and sparse matrix $\fl{B}^\star$, we make the following assumption: 
\begin{assumption}[A1]\label{assum:row_sparse}
Consider the matrix $\fl{B^\star}\in \bb{R}^{d \times t}$ whose $i^{\s{th}}$ column is the vector $\fl{b}^{\star (i)}$. Then each row  of $\fl{B^\star}$ is $\zeta$-sparse i.e. $\|\fl{B}^\star_i\|_0\le \zeta$ for all $i\in [d]$, and each column is $k$-sparse.
\end{assumption}

Note that the orthonormal matrix $\fl{U}^{\star}$ cannot have extremely sparse columns otherwise it would be information theoretically impossible to separate columns of $\fl{U}^\star$ from $\fl{B}^\star$. Moreover, similar to \cite{tripuraneni2021provable}, we need to ensure that each task contributes to learning the underlying representation $\fl{U}^\star$. These properties  can be ensured by the standard incoherence assumptions \cite{tripuraneni2021provable,collins2021exploiting,netrapalli2014non}  and thus, we have

\begin{assumption}[A2]\label{assum:task_diversity}
Let $\lambda_1^{\star}$,$\lambda_r^{\star}$ be the largest and smallest eigenvalues of the task diversity matrix $(r/t)(\fl{W}^{\star})^{\s{T}}\fl{W}^{\star}\in \bb{R}^{r\times r}$ where $\fl{W}^{\star}=[\fl{w}^{\star(1)} \; \dots \; \fl{w}^{\star (t)}]^{\s{T}}$. We assume $\fl{W}^{\star}$ and the representation matrix $\fl{U}^{\star}$ are $\mu^{\star}$-incoherent i.e. 
$    \lr{\fl{W}^{\star}}_{2,\infty} \le \sqrt{\mu^{\star}\lambda_r^{\star}}  \text{ and } \lr{\fl{U}^{\star}}_{2,\infty} \le \sqrt{\frac{\mu^{\star}r}{d}}.$
\end{assumption}

Now, we are ready to state our main theorem formally:
\begin{thm}\label{thm:main}
Consider the \lrs problem (\eqref{prob:general}) with $t$ linear regression tasks and samples obtained by \eqref{eq:samples}. Let model parameters satisfy assumptions A1 and A2. Let $\fl{B}^{\star}$ satisfy row sparsity  $\zeta = O\Big(t(r^2\mu^{\star})^{-1}\sqrt{\frac{\lambda_r^{\star}}{\lambda_1^{\star}}}\Big)$, and column sparsity $k=O\Big(d\cdot  (\frac{\lambda_r^{\star}}{\lambda_1^{\star}})^2\Big)$. 
Suppose Algorithm \ref{algo:optimize_lrs1} is initialized with $\fl{U}^{+(0)}$ s.t. $\lr{(\fl{I}-\fl{U}^{\star}(\fl{U}^{\star})^{\s{T}})\fl{U}^{+(0)}}_{\s{F}} = O\Big(\sqrt{\frac{\lambda_r^{\star}}{\lambda_1^{\star}}}\Big)$ and  $\lr{\fl{U}^{+(0)}}_{2,\infty}=O(\sqrt{\mu^{\star}r/d})$, and is run for 
$\s{L}=\widetilde{O}(1)$ iterations 
Then, with high probability, the outputs $\fl{U}^{+(\s{L})},\{\fl{b}^{(i,\s{L})}\}_{i\in [t]}$ satisfy:  
\begin{align}
\hspace*{-10pt}
     &\lr{(\fl{I}-\fl{U}^{\star}(\fl{U}^{\star})^{\s{T}})\fl{U}^{+(\s{L})}}_{\s{F}} = \frac{\widetilde{O}(1)\sigma\s{S}}{\sqrt{\mu^{\star}\lambda^{\star}_r}}, \text{ and }\left|\left|\fl{b}^{(i, \s{L})}-\fl{b}^{\star (i)}\right|\right|_{\infty} \le \frac{\widetilde{O}(1)\sigma \s{S}}{\sqrt{k}}, i\in [t],\nonumber
\end{align}
respectively, where $\s{S}=\Big(\mu^\star\sqrt{\frac{r^3d}{mt}}+ \sqrt{\frac{r^3}{m\lambda^{\star}_r}} + \sqrt{\frac{k}{m}}\Big)$ provided the total number of samples satisfies: 
\begin{align}
&m=\widetilde{\Omega}\Big(k + r^2\mu^\star\Big(\frac{\lambda_1^\star}{\lambda_r^\star}\Big)^2 + \frac{\sigma^2r^3}{\lambda_r^\star}\Big),\nonumber\\ &mt=\widetilde{\Omega}\Big(r^3d\mu^{\star}\Big(r(\mu^\star)^4(\lambda_r^\star)^2k + \mu^\star\Big(\frac{\lambda_1^\star}{\lambda_r^\star}\Big)^2 + \sigma^2\Big(1+\frac{1}{\lambda_r^\star}\Big)\Big)\Big)\nonumber.
\end{align}

For a new task with $m$ additional labelled samples and task specific parameters $\fl{w}^{\star}\in \bb{R}^r$, $\fl{b}^{\star}\in \bb{R}^d, \lr{\fl{b}^{\star}}_0=k$, a modified version of \method (see Alg. ~\ref{algo:generalize} in Appendix \ref{app:generalization}) computes an estimate $\fl{w},\fl{b}$ of $\fl{w}^{\star},\fl{b}^{\star}$ such that with high probability, we have the following generalization bound: 
$$\ca{L}(\fl{U}^{+(\s{L})},\fl{w},\fl{b})-\ca{L}(\fl{U}^{\star},\fl{w}^{\star},\fl{b}^{\star}) =\widetilde{O}\Big(\sigma^2\Big(\s{S}^2 + \frac{k+r}{m}\Big)\Big).$$
\end{thm}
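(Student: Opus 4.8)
The plan is to prove Theorem~\ref{thm:main} in two stages: first establish the parameter recovery guarantees for $\fl{U}^{+(\s{L})}$ and $\{\fl{b}^{(i,\s{L})}\}$ via an inductive analysis across iterations of \method, and then deduce the generalization bound for a new task as a corollary. For the first stage, I would set up an induction hypothesis asserting that at the start of iteration $\ell$ the following hold simultaneously: (i) a Frobenius-norm contraction bound on $(\fl{I}-\fl{U}^{\star}(\fl{U}^{\star})^{\s{T}})\fl{U}^{+(\ell-1)}$ (call its bound $\s{B}_{\ell-1}$, decaying geometrically toward the noise floor $\widetilde{O}(\sigma\s{S}/\sqrt{\mu^\star\lambda_r^\star})$), (ii) an $\s{L}_{2,\infty}$ (incoherence) bound on the same matrix, (iii) an entrywise bound $\max_i\|\fl{b}^{(i,\ell-1)}-\fl{b}^{\star(i)}\|_\infty \le \gamma^{(\ell-1)}$ with $\gamma^{(\ell)}$ also decaying geometrically. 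The induction step then consists of three sub-arguments mirroring the three update blocks of the algorithm: analyze \textsc{OptimizeSparseVector} to show the $\fl{b}$-update contracts the entrywise error given the current $\fl{U},\fl{w}$ errors (this reuses the IHT-style argument of \cite{jain2014iterative} combined with a covering/Taylor-expansion bound on $\tfrac{1}{m}(\fl{X}^{(i)})^\s{T}\fl{X}^{(i)}-\fl{I}$ restricted to $O(k+\zeta)$-sparse supports); then analyze the closed-form $\fl{w}$-update \eqref{eq:soln_w}, controlling $\|\fl{w}^{(i,\ell)}-\fl{w}^{\star(i)}\|$ in terms of $\s{B}_{\ell-1}$, $\gamma^{(\ell)}$, and the noise; then analyze the closed-form $\fl{U}$-update \eqref{eq:soln_u} followed by QR, bounding both $\|(\fl{I}-\fl{U}^{\star}(\fl{U}^{\star})^{\s{T}})\fl{U}^{+(\ell)}\|_{\s{F}}$ and its $\s{L}_{2,\infty}$ norm, using Assumption~A2 (task diversity lower bound $\lambda_r^\star$) to invert the Kronecker-structured matrix $\fl{A}$ and concentration (matrix Bernstein over the $mt$ samples) to bound the deviation of $\fl{A},\fl{V}$ from their expectations.

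The second stage is comparatively routine. Given the trained $\fl{U}^{+(\s{L})}$ with $\|(\fl{I}-\fl{U}^{\star}(\fl{U}^{\star})^{\s{T}})\fl{U}^{+(\s{L})}\|_{\s{F}} = \widetilde{O}(\sigma\s{S}/\sqrt{\mu^\star\lambda_r^\star})$, I would run the single-task version of the procedure (Alg.~\ref{algo:generalize}) which alternates the $\fl{b}$ and $\fl{w}$ updates for the new task, inheriting the same per-iteration contraction lemmas restricted to one task. This yields estimates $\fl{w},\fl{b}$ with $\|\fl{U}^{+(\s{L})}\fl{w}+\fl{b} - (\fl{U}^{\star}\fl{w}^{\star}+\fl{b}^{\star})\|$ controlled. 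Then I would expand $\ca{L}(\fl{U}^{+(\s{L})},\fl{w},\fl{b}) - \ca{L}(\fl{U}^{\star},\fl{w}^{\star},\fl{b}^{\star})$: writing $y_j = \langle \fl{x}_j, \vtheta^\star\rangle + z_j$ with $\vtheta^\star=\fl{U}^\star\fl{w}^\star+\fl{b}^\star$ and $\widehat{\vtheta}=\fl{U}^{+(\s{L})}\fl{w}+\fl{b}$, the difference of squared losses telescopes into a term $\tfrac12\sum_j \langle \fl{x}_j, \widehat{\vtheta}-\vtheta^\star\rangle^2$ plus a cross term $\sum_j z_j\langle\fl{x}_j,\vtheta^\star-\widehat{\vtheta}\rangle$; bounding the quadratic form by $O(m)\|\widehat{\vtheta}-\vtheta^\star\|^2$ via Gaussian concentration (the error vector lives in an $O(r+k)$-dimensional effective subspace, giving the $\tfrac{k+r}{m}$ term) and the cross term by Cauchy--Schwarz with the sub-Gaussian noise, one obtains the stated $\widetilde{O}(\sigma^2(\s{S}^2 + (k+r)/m))$ bound.

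The main obstacle I expect is controlling the $\s{L}_{2,\infty}$ norm of $(\fl{I}-\fl{U}^{\star}(\fl{U}^{\star})^{\s{T}})\fl{U}^{(\ell)}$ through the $\fl{U}$-update, i.e. propagating incoherence. The closed-form solution \eqref{eq:soln_u} involves inverting the $dr\times dr$ matrix $\fl{A}$, which is only block-diagonally well-behaved in expectation ($\bb{E}\fl{A} = m\sum_i \fl{w}^{(i,\ell)}(\fl{w}^{(i,\ell)})^\s{T}\otimes \fl{I}_d$); the off-diagonal-in-$d$ fluctuations of $\fl{X}^{(i)\s{T}}\fl{X}^{(i)}$ couple coordinates of $\fl{U}^{(\ell)}$ and threaten to destroy the row-wise bound. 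Handling this requires a careful leave-one-out or entrywise Taylor expansion of $\fl{A}^{-1}$ around $(\bb{E}\fl{A})^{-1}$, combined with the row sparsity Assumption~A1 on $\fl{B}^\star$ (needed so the contribution of $\fl{X}^{(i)}\fl{b}^{(i,\ell)}$ to $\fl{V}$ does not concentrate mass on few rows) and with the incoherence of the current $\fl{w}$-iterates, which must itself be maintained inductively. This is precisely the step the authors flag as "the most technically involved component" in the Technical Challenges discussion, and it is where I would spend the bulk of the effort; the interplay with the $\fl{b}$-update (whose thresholding parameter $\Delta$ in Alg.~\ref{algo:optimize_b} is tuned using exactly these $\s{L}_{2,\infty}$ and $\gamma$ quantities) means the two inductions cannot be decoupled and must be carried jointly.
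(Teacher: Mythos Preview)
Your proposal is correct and matches the paper's approach: the same joint induction over Frobenius and $\s{L}_{2,\infty}$ bounds on the projected $\fl{U}$-iterate together with entrywise $\fl{b}$-errors, the same Neumann-series expansion of $\fl{A}^{-1}$ around $(\bb{E}\fl{A})^{-1}$ to propagate incoherence through the $\fl{U}$-update, maintenance of incoherence of the $\fl{w}$-iterates, and the same generalization argument for a new task. One small caveat: the $\fl{w}$-error you must track is $\fl{h}^{(i,\ell)}=\fl{w}^{(i,\ell)}-(\fl{Q}^{(\ell-1)})^{-1}\fl{w}^{\star(i)}$ with $\fl{Q}^{(\ell-1)}=(\fl{U}^\star)^{\s{T}}\fl{U}^{+(\ell-1)}$, not $\|\fl{w}^{(i,\ell)}-\fl{w}^{\star(i)}\|$ as you wrote, since the latter does not contract due to the rotational ambiguity you already handle correctly for $\fl{U}$.
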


Note that the per-task sample complexity of our method roughly scales as $m=(r^3+k)$, which is information theoretically optimal in $k$ and is roughly $r^2$ factor larger. Total sample complexity scales as $mt=kdr^4$, which is roughly $kr^3$ multiplicative factor larger than the information theoretic bound. Note that typically $r$ and $k$ are considered to be small, so the additional factors are small, but we leave further investigation into obtaining tighter bounds for future work. Finally, the generalization error scales as $\sigma^2 (r+k)/m$ which is nearly optimal. Note that, ignoring the \lrs framework, and directly learning the parameters of each task separately would lead to significantly larger error of $\sigma^2 d/m$.

\begin{rmk}[Runtime and Memory]
The run-time of Algorithm \ref{algo:optimize_lrs1} is dominated by the update for $\fl{U}^{+(\ell)}$. For each iteration $\ell$, Step 8 has a time complexity of $O((dr)^3+(mt)(dr)^2)$; however in practice, a gradient descent step for the update of $\fl{U}^{(\ell)}$ can bring down the time complexity to $O(mtdr)$. Moreover, the memory usage of Algorithm \ref{algo:optimize_lrs1} is $O((dr)^2+tr^2)$.
\end{rmk}

\begin{rmk}[Initialization]
Note that Algorithm \ref{algo:optimize_lrs1} has local convergence properties as described in Theorem \ref{thm:main}. In practice, typically we use random initialization for $\fl{U}^{+(0)}$. However, similar to the representation learning framework in \cite{tripuraneni2021provable}, we can use the Method of Moments to obtain a good initialization. See Appendix \ref{app:mom} for more details. 
\end{rmk}

\begin{rmk}[Special Settings]\label{rmk:special}
Consider the setting where, for each task, we just need to learn a single central model for all tasks and sparse 
fine-tune the weights for each task i.e. $\fl{w}^{\star (i)}=1$ for all $i \in [t]$ is fixed. In this case, \method obtains global convergence guarantees (Theorem \ref{thm:main_toy} in Appendix \ref{app:finetuning}). Moreover, if the central model $\fl{U}^{\star}$ is also frozen, then the task-based sparse fine-tuning reduces to standard compressed sensing. In the realizable setting, our framework recovers the standard generalization error of $\sigma^2 k/m$ in compressed sensing \citep{jain2017non}[Chapter 7].
\end{rmk}

\section{Private Linear \lrs: Privacy Preserving Personalization} \label{sec:dp}

We now extend our framework, algorithm and analysis to allow user-level differential privacy.
In this section, we provide a user level DP variant of Algorithm~\ref{algo:optimize_lrs1} in the billboard model. 
We obtain DP for the computation of each  $\fl{U}^{(\ell)}$ by perturbing the covariance matrix $\fl{A}$ and the linear term $\fl{V}$ in the algorithm with Gaussian noise to ensure that the contribution of any single user is protected. 
We start by introducing the function $\s{clip}:\bb{R}\times \bb{R}\rightarrow \bb{R}$ that takes as input a scalar $x$, parameter $\rho$ and returns $\s{clip}(x,\rho)=x\cdot\min\Big\{1, \frac{\rho}{x}\Big\}$. We can extend the definition of $\s{clip}$ to vectors and matrices by using $\s{clip}(\fl{v},\rho) = \fl{v}\cdot\min\Big\{1, \frac{\rho}{\|\fl{v}\|_2}\Big\}$ for a vector $\fl{v}$ and $\text{clip}(\fl{A},\rho) = \fl{A}\cdot\min\Big\{1, \frac{\rho}{\|\fl{A}\|_\s{F}}\Big\}$ for a matrix $\fl{A}$. In order to ensure that Algorithm \ref{algo:optimize_lrs1} is private, for input parameters $\s{A}_1,\s{A}_2,\s{A}_3,\s{A}_w$, we first clip the covariates and responses: for all $i\in [t],j \in [m]$, we will have $\widehat{\fl{x}^{(i)}_j}\leftarrow {\sf clip}\left(\fl{x}^{(i)}_j,\s{A}_1\right)$, $\widehat{y^{(i)}_j} \leftarrow {\sf clip}\left(y^{(i)}_j,\s{A}_2\right),$ $\widehat{(\fl{x}^{(i)}_j)^\s{T}\fl{b}^{(i, \ell)}}\leftarrow {\sf clip}\left((\fl{x}^{(i)}_j)^\s{T}\fl{b}^{(i, \ell)},\s{A}_3\right)$ and $\widehat{\fl{w}^{(i, \ell)}}\leftarrow {\sf clip}\left(\fl{w}^{(i, \ell)},\s{A}_w\right)$. Now, 
we can modify Line 7 in Algorithm \ref{algo:optimize_lrs1} as follows (let $\s{L}$ be the number of iterations of Alg. \ref{algo:optimize_lrs1}):
\begin{align}
    &\fl{A} := \frac{1}{mt}\Big(\sum_{i \in [t]}\Big(\widehat{\fl{w}^{(i, \ell)}}(\widehat{\fl{w}^{(i, \ell)}})^{\s{T}} \otimes \Big(\sum_{j=1}^{m}\widehat{\fl{x}^{(i)}_j}(\widehat{\fl{x}^{(i)}_j})^{\s{T}}\Big)\Big)  + \fl{N}^{(1)}\Big)    \label{line:opq1}\\
    &\fl{V} := \frac{1}{mt}\Big(\sum_{i \in [t]}\sum_{j \in [m]} \widehat{\fl{x}^{(i)}_j}\Big(\widehat{y^{(i)}_j} - (\widehat{\fl{x}^{(i)}_j)^{\s{T}}\fl{b}^{(i, \ell)}} \Big)(\widehat{\fl{w}^{(i, \ell)}})^{\s{T}} + \fl{N}^{(2)}\Big)\label{line:opq2}
\end{align}
where, for some $\sigma_{\s{DP}}>0$, each entry of $\fl{N}^{(1)}$ is independently generated from $\mathcal{N}\left(0,m^2\cdot \s{A}_1^4\cdot \s{A}_w^4\cdot {\sf L}\cdot\sigma_{\s{DP}}^2\right)$; similarly, each entry of $\fl{N}^{(2)}$ is independently generated from  $\mathcal{N}\left(0,m^2\cdot \s{A}_1^2(\s{A}_2+\s{A}_3)^2\s{A}_w^2\cdot {\sf L}\cdot\sigma_{\s{DP}}^2\right)$. We are now ready to state our main result:

\begin{thm}\label{thm:private1}
Algorithm \ref{algo:optimize_lrs1} (with modifications mentioned in~\eqref{line:opq1} and~\eqref{line:opq2}) satisfies $\sigma_{\s{DP}}^{-2}-\s{zCDP}$ and correspondingly satisfies $(\varepsilon, \delta)$-differential privacy in the billboard model, when we set the noise multiplier $\sigma_\s{DP} \geq 2\varepsilon^{-1}\sqrt{(\log(1/\delta)+\varepsilon)}
$. Furthermore, if $\varepsilon \leq \log(1/\delta)$, then $\sigma_\s{DP} \geq
\varepsilon^{-1}\sqrt{8 \log(1/\delta)}$ suffices to ensure $(\varepsilon, \delta)$-differential privacy.
\end{thm}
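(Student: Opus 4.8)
The plan is to prove the zCDP guarantee first and then convert it to $(\varepsilon,\delta)$-DP. Fix neighboring datasets $D,D'$ differing by adding or removing all $m$ samples of one user $i_0$. The structural fact that drives the billboard argument is that, conditioned on the public transcript generated before round $\ell$ (in particular on $\fl{U}^{+(\ell-1)}$), both $\widehat{\fl{w}^{(i,\ell)}}$ and $\fl{b}^{(i,\ell)}$ depend only on the samples of user $i$; hence in the sums defining the noised statistics $\fl{A},\fl{V}$ in~\eqref{line:opq1} and~\eqref{line:opq2}, moving from $D$ to $D'$ changes only the $i_0$-indexed summand. So the proof decomposes into: (i) a per-round Frobenius-sensitivity bound for $\fl{A}$ and $\fl{V}$; (ii) zCDP accounting of the Gaussian mechanism composed over the $\s{L}$ iterations, followed by post-processing and the billboard-to-joint-DP reduction; and (iii) the zCDP$\to(\varepsilon,\delta)$-DP conversion.

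For (i): after clipping, $\lr{\widehat{\fl{x}^{(i)}_j}}_2\le\s{A}_1$, $|\widehat{y^{(i)}_j}|\le\s{A}_2$, $|\widehat{(\fl{x}^{(i)}_j)^{\s{T}}\fl{b}^{(i,\ell)}}|\le\s{A}_3$, and $\lr{\widehat{\fl{w}^{(i,\ell)}}}_2\le\s{A}_w$. User $i_0$'s contribution to the unnormalized $\fl{A}$ is the single Kronecker term $\widehat{\fl{w}^{(i_0,\ell)}}(\widehat{\fl{w}^{(i_0,\ell)}})^{\s{T}}\otimes\big(\sum_j \widehat{\fl{x}^{(i_0)}_j}(\widehat{\fl{x}^{(i_0)}_j})^{\s{T}}\big)$; using $\lr{\fl{M}\otimes\fl{N}}_{\s{F}}=\lr{\fl{M}}_{\s{F}}\lr{\fl{N}}_{\s{F}}$ together with $\lr{\widehat{\fl{w}}\widehat{\fl{w}}^{\s{T}}}_{\s{F}}=\lr{\widehat{\fl{w}}}_2^2\le\s{A}_w^2$ and $\lr{\sum_j \widehat{\fl{x}_j}\widehat{\fl{x}_j}^{\s{T}}}_{\s{F}}\le\sum_j\lr{\widehat{\fl{x}_j}}_2^2\le m\s{A}_1^2$, this term has Frobenius norm at most $\Delta_1:=m\s{A}_1^2\s{A}_w^2$. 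Similarly, user $i_0$'s contribution to the unnormalized $\fl{V}$ has Frobenius norm at most $\sum_j\lr{\widehat{\fl{x}_j}}_2\,|\widehat{y_j}-\widehat{(\fl{x}_j)^{\s{T}}\fl{b}}|\,\lr{\widehat{\fl{w}}}_2\le\Delta_2:=m\s{A}_1(\s{A}_2+\s{A}_3)\s{A}_w$. These bounds are uniform over the value of the transcript, which is exactly what adaptive composition requires.

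For (ii): the noise added to the unnormalized $\fl{A}$ in round $\ell$ is i.i.d.\ $\ca{N}(0,\Delta_1^2\,\s{L}\,\sigma_{\s{DP}}^2)$, so the Gaussian mechanism releasing it is $\tfrac{\Delta_1^2}{2\Delta_1^2\s{L}\sigma_{\s{DP}}^2}=\tfrac{1}{2\s{L}\sigma_{\s{DP}}^2}$-zCDP; identically for $\fl{V}$ with variance $\Delta_2^2\,\s{L}\,\sigma_{\s{DP}}^2$, and dividing by $mt$ afterwards is post-processing. The $\s{L}$ iterations release these $2\s{L}$ statistics adaptively (round $\ell$'s statistics depend on the past only through the published $\fl{U}^{+}$), so by the adaptive composition theorem for zCDP~\citep{bun2016concentrated} the entire sequence of noised $(\fl{A},\fl{V})$ pairs is $\s{L}\cdot\big(\tfrac{1}{2\s{L}\sigma_{\s{DP}}^2}+\tfrac{1}{2\s{L}\sigma_{\s{DP}}^2}\big)=\sigma_{\s{DP}}^{-2}$-zCDP. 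Each billboard post $\fl{U}^{+(\ell)}$ is a deterministic function of that round's noised $(\fl{A},\fl{V})$ through the closed form~\eqref{eq:soln_u} followed by $\s{QR}$, and each user's remaining outputs $\fl{w}^{(i,\s{L})},\fl{b}^{(i,\s{L})}$ are computed from the billboard and that user's own data; by post-processing invariance of zCDP and the billboard-to-joint-DP argument used in \cite{modelperosnalization,kearns2014mechanism}, the overall mechanism is $\sigma_{\s{DP}}^{-2}$-zCDP with respect to user-level neighbors.

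For (iii): using the standard fact \citep{bun2016concentrated} that $\rho$-zCDP implies $(\rho+2\sqrt{\rho\log(1/\delta)},\delta)$-DP, it suffices to pick $\sigma_{\s{DP}}$ so that $\rho=\sigma_{\s{DP}}^{-2}$ obeys $\sqrt{\rho}\le\sqrt{\log(1/\delta)+\varepsilon}-\sqrt{\log(1/\delta)}$, since squaring this yields $\rho+2\sqrt{\rho\log(1/\delta)}\le\varepsilon$. Rationalizing, $\sqrt{\log(1/\delta)+\varepsilon}-\sqrt{\log(1/\delta)}=\tfrac{\varepsilon}{\sqrt{\log(1/\delta)+\varepsilon}+\sqrt{\log(1/\delta)}}\ge\tfrac{\varepsilon}{2\sqrt{\log(1/\delta)+\varepsilon}}$, so $\sigma_{\s{DP}}\ge 2\varepsilon^{-1}\sqrt{\log(1/\delta)+\varepsilon}$ suffices; and when $\varepsilon\le\log(1/\delta)$ we have $\sqrt{\log(1/\delta)+\varepsilon}+\sqrt{\log(1/\delta)}\le(1+\sqrt2)\sqrt{\log(1/\delta)}\le\sqrt{8\log(1/\delta)}$, giving the sharper condition $\sigma_{\s{DP}}\ge\varepsilon^{-1}\sqrt{8\log(1/\delta)}$. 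The step I expect to be the main obstacle is (i) together with the conditioning point in (ii): obtaining the tight Frobenius sensitivity for the Kronecker-structured statistic $\fl{A}$ and carefully justifying that, conditioned on everything posted so far, each user's summand in $\fl{A}$ and $\fl{V}$ is a function of that user's data alone, so that a one-user change is genuinely a single-summand change uniformly over the adaptively chosen functions; the Gaussian-mechanism zCDP value, composition, post-processing, billboard reduction, and the final algebra are then routine.
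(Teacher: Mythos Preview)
Your proposal is correct and follows essentially the same route as the paper: bound the per-user Frobenius sensitivity of the clipped statistics $\fl{A}$ and $\fl{V}$ by $m\s{A}_1^2\s{A}_w^2$ and $m\s{A}_1(\s{A}_2+\s{A}_3)\s{A}_w$, apply the Gaussian mechanism to get $\tfrac{1}{2\s{L}\sigma_{\s{DP}}^2}$-zCDP for each release, compose over $2\s{L}$ releases to obtain $\sigma_{\s{DP}}^{-2}$-zCDP, and then invoke the $\rho$-zCDP $\Rightarrow(\rho+2\sqrt{\rho\log(1/\delta)},\delta)$-DP conversion with the same rationalization of $\sqrt{\log(1/\delta)+\varepsilon}-\sqrt{\log(1/\delta)}$. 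Your treatment is in fact slightly more explicit than the paper's in spelling out the Kronecker Frobenius-norm identity, the adaptive-conditioning/billboard post-processing argument, and the derivation of the sharper $\varepsilon\le\log(1/\delta)$ condition, but the underlying argument is identical.
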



Next, we characterize the generalization properties of modified \method:

\begin{thm}\label{thm:private2}
Consider the LRS problem \eqref{prob:general} with all parameters $m,t,\zeta$ obeying the bounds stated in Theorem \ref{thm:main} and furthermore, $t=\widetilde{\Omega}(\frac{(rd)^{3/2}\sqrt{\log(1/\delta)+\epsilon}}{\epsilon}\mu^{\star})$.
Suppose we run \method (Step 7 in Alg. \ref{algo:optimize_lrs1} replaced with \ref{line:opq1} and \ref{line:opq2}) for $\s{L}=\widetilde{O}(1)$ iterations with $\s{A}_1 = \widetilde{O}(\sqrt{d}), \s{A}_2 = \widetilde{O}(\sqrt{\mu^\star\lambda_r^\star} + (\max_i \|\fl{b}^{\star(i)}\|_2)), 
\s{A}_3= \widetilde{O}\Big(\lambda_r^\star\sqrt{\frac{\mu^\star}{\lambda_1^\star}}\Big), \s{A}_w = \widetilde{O}(\sqrt{\mu^\star\lambda_r^\star})$. 
Then, with high probability, generalization error for a new task, with $m$ additional labelled samples and task specific parameters $\fl{w}^{\star}\in \bb{R}^r$, $\fl{b}^{\star}\in \bb{R}^d, \lr{\fl{b}^{\star}}_0=k$, satisfies:
\begin{align*}
&\ca{L}(\fl{U},\fl{w},\fl{b})-\ca{L}(\fl{U}^{\star},\fl{w}^{\star},\fl{b}^{\star}) =\widetilde{O}\Big(\sigma^2\s{S}^2+\frac{dr^2(\log(1/\delta)+\epsilon)(\lambda_r^{\star}\mu^{\star})^2}{\epsilon^2 t^2}\cdot (\kappa^2+r^2d^2) \Big)    
\end{align*}
where $(\fl{w},\fl{b})$ are estimates of $(\fl{w}^{\star},\fl{b}^{\star})$ obtained by Alg. \ref{algo:generalize}, $\s{S}=\Big(\mu^\star\sqrt{\frac{r^3d}{mt}}+ \sqrt{\frac{r^3}{m\lambda^{\star}_r}} + \sqrt{\frac{k}{m}}\Big)$, $\eta=\widetilde{O}\Big(t^{-1}\mu^\star r^2d^{3/2}\Big(1+\sqrt{\frac{\lambda_r^\star}{\lambda_1^\star}} + \max_{i\in[t]}\frac{\|\fl{b}^{\star(i)}\|_2}{\sqrt{\mu^\star\lambda_r^\star}}\Big)\sigma_{\s{DP}}\Big)$ and $\kappa=1+\sqrt{\frac{\lambda_r^\star}{\lambda_1^\star}} + \max_{i\in[t]}\frac{\|\fl{b}^{\star(i)}\|_2}{\sqrt{\mu^\star\lambda_r^\star}}$.
\end{thm}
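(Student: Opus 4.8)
The plan is to combine the privacy-utility tradeoff of the Gaussian mechanism used in \eqref{line:opq1}--\eqref{line:opq2} with the (non-private) convergence analysis of \method from Theorem~\ref{thm:main}, treating the injected noise $\fl{N}^{(1)},\fl{N}^{(2)}$ as an additional perturbation to the closed-form update \eqref{eq:soln_u} for $\fl{U}^{(\ell)}$. The first step is to control the clipping bias: under assumptions A1, A2 and the inductive hypotheses maintained in the proof of Theorem~\ref{thm:main} (incoherence of the intermediate iterates, $\s{L}_{2,\infty}$ control of $(\fl{I}-\fl{U}^\star(\fl{U}^\star)^\s{T})\fl{U}^{(\ell)}$, and $\|\fl{b}^{(i,\ell)}-\fl{b}^{\star(i)}\|_\infty$ small), I would show that with the stated choices $\s{A}_1 = \widetilde{O}(\sqrt d)$, $\s{A}_2,\s{A}_3,\s{A}_w$ as above, none of the clips are active with high probability — i.e.\ $\|\fl{x}^{(i)}_j\|_2 \le \s{A}_1$, $|y^{(i)}_j|\le \s{A}_2$, $|(\fl{x}^{(i)}_j)^\s{T}\fl{b}^{(i,\ell)}|\le \s{A}_3$, $\|\fl{w}^{(i,\ell)}\|_2 \le \s{A}_w$ simultaneously for all $i,j$ — using Gaussian tail bounds (for $\fl{x}^{(i)}_j$, $z^{(i)}_j$) and a union bound over the $mt$ samples and $\s{L}=\widetilde O(1)$ iterations. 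On this event the clipped statistics coincide with the unclipped ones, so the only difference from the analysis of Theorem~\ref{thm:main} is the additive noise term.

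The second step is to propagate the noise through the $\fl{U}^{(\ell)}$ update. Writing $\fl{U}^{(\ell)} = \s{vec}^{-1}_{d\times r}(\fl{A}^{-1}\s{vec}(\fl{V}))$ with $\fl{A},\fl{V}$ as in \eqref{line:opq1}--\eqref{line:opq2}, the perturbation relative to the noiseless update is $\fl{A}^{-1}\cdot\frac{1}{mt}(\fl{N}^{(1)}$-induced term $+\ \fl{N}^{(2)})$; since $\fl{A}$ concentrates around $(1/r)(\fl{W}^\star)^\s{T}\fl{W}^\star \otimes \fl{I}_d$ (as in Theorem~\ref{thm:main}'s analysis), $\|\fl{A}^{-1}\|_2 = O(r/\lambda_r^\star)$, and the operator/Frobenius norm of the noise matrices scales (via standard Gaussian random matrix bounds, $\|\fl{N}\|_2 = \widetilde O(\sqrt{dr})\cdot(\text{entrywise std})$) like $\widetilde O(m\sqrt{dr}\,\s{A}_1^2\s{A}_w^2\sqrt{\s{L}}\,\sigma_{\s{DP}})$ for $\fl{N}^{(1)}$ and $\widetilde O(m\sqrt{dr}\,\s{A}_1(\s{A}_2+\s{A}_3)\s{A}_w\sqrt{\s{L}}\,\sigma_{\s{DP}})$ for $\fl{N}^{(2)}$. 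Dividing by $mt$ and plugging in the parameter values, the extra error in $\|(\fl{I}-\fl{U}^\star(\fl{U}^\star)^\s{T})\fl{U}^{+(\ell)}\|_\s{F}$ per iteration is of order $\eta$ as defined in the statement (the $\kappa$, $r^2d^2$ factors come from $\s{A}_2$ possibly containing $\max_i\|\fl{b}^{\star(i)}\|_2$, and from $\s{A}_1^2 = \widetilde O(d)$, $\s{A}_w^2 = \widetilde O(\mu^\star\lambda_r^\star)$). I would then re-run the contraction argument of Theorem~\ref{thm:main} with this additive-$\eta$ term: the subspace error contracts geometrically as before but now to a floor of $\widetilde O(\sigma\s{S}/\sqrt{\mu^\star\lambda_r^\star}) + \widetilde O(\eta/\sqrt{\mu^\star\lambda_r^\star})$ rather than just the first term, and likewise $\|\fl{b}^{(i,\s{L})}-\fl{b}^{\star(i)}\|_\infty$ picks up a $\widetilde O(\eta/\sqrt k)$ term; the condition $t = \widetilde\Omega((rd)^{3/2}\sqrt{\log(1/\delta)+\epsilon}\,\mu^\star/\epsilon)$ is exactly what makes $\eta$ (hence the privacy floor) small enough that the inductive incoherence/$\s{L}_{2,\infty}$ invariants are still preserved across all $\s{L}$ iterations.

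The third step converts the parameter error at $\s{L}$ into the generalization bound. For a fresh task, Algorithm~\ref{algo:generalize} estimates $(\fl{w},\fl{b})$ from $m$ samples given the (private) $\fl{U}=\fl{U}^{+(\s{L})}$; as in the proof of the generalization bound in Theorem~\ref{thm:main}, $\ca{L}(\fl{U},\fl{w},\fl{b})-\ca{L}(\fl{U}^\star,\fl{w}^\star,\fl{b}^\star)$ decomposes into (a) a term from the subspace error $\|(\fl{I}-\fl{U}^\star(\fl{U}^\star)^\s{T})\fl{U}\|_\s{F}^2$, which is now $\widetilde O(\sigma^2\s{S}^2 + \eta^2/(\mu^\star\lambda_r^\star))$, and (b) the usual $\widetilde O(\sigma^2(k+r)/m)$ statistical error of estimating $r+k$ parameters from $m$ Gaussian measurements, which is absorbed into $\widetilde O(\sigma^2\s{S}^2)$ since $\s{S}^2 \ge k/m$. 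Substituting $\eta = \widetilde O(t^{-1}\mu^\star r^2 d^{3/2}\kappa\,\sigma_{\s{DP}})$ and $\sigma_{\s{DP}} = \widetilde O(\varepsilon^{-1}\sqrt{\log(1/\delta)+\varepsilon})$ (from Theorem~\ref{thm:private1}) gives $\eta^2/(\mu^\star\lambda_r^\star) = \widetilde O\big(\frac{dr^2(\log(1/\delta)+\epsilon)(\lambda_r^\star\mu^\star)^2}{\epsilon^2 t^2}(\kappa^2 + r^2d^2)\big)$, matching the claimed bound (the $\kappa^2+r^2d^2$ split tracking whether the dominant contribution to $\eta$ is the $\fl{N}^{(2)}$ or the $\fl{N}^{(1)}$ noise). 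I expect the main obstacle to be the second step: carefully verifying that injecting a noise term of size $\eta$ at every iteration does not break the delicate entrywise/$\s{L}_{2,\infty}$ incoherence invariants that the proof of Theorem~\ref{thm:main} maintains — in particular that the QR step $\fl{U}^{+(\ell)} = \s{QR}(\fl{U}^{(\ell)})$ still yields an iterate with $\|\fl{U}^{+(\ell)}\|_{2,\infty} = \widetilde O(\sqrt{\mu^\star r/d})$ after the perturbation, which is what couples the $\fl{U}$-update and the sparse-recovery subroutine and is the technically heaviest part of the non-private analysis as well.
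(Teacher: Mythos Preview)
Your proposal is essentially correct and follows the same approach as the paper. The only structural difference is that the paper does not ``re-run'' the non-private analysis with an added $\eta$ term; instead, all the key lemmas (the $\fl{U}$-update Frobenius and $\s{L}_{2,\infty}$ bounds, the incoherence corollaries, and the $\fl{R}$ bounds) are proved once with the DP noise terms $\sigma_1,\sigma_2$ carried through from the start, and the non-private Theorem~\ref{thm:main} is then recovered as the corollary $\sigma_{\s{DP}}=0$ --- but this is a matter of organization, not of argument, and your identification of the $\s{L}_{2,\infty}$ invariant as the delicate step is exactly right.
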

Note that the modified \method ensures $(\epsilon,\delta)$ differential privacy 
without any assumptions. However, Theorem \ref{thm:private2} still has good generalization properties; moreover, the per-task sample complexity guarantee $m$ still only needs to scale polylogarithmically with the dimension $d$. In other words, our algorithm can ensure good generalization along with privacy in data-starved settings as long as the number of tasks is large -  scales as $\sim d^{3/2}/\epsilon$. Similarly, generalization error for a new task has two terms: the first has a standard dependence on noise $\sigma^2$ and the second has a scaling of $d^3(\epsilon t)^{-2}$ which is standard in private linear regression and private meta-learning \citep{smith2017interaction,modelperosnalization}. 
Detailed proofs of our main results namely Theorems \ref{thm:main},\ref{thm:private2} are delegated to Appendix \ref{app:detailed_proof}, \ref{app:generalization}.




\section{Empirical results}\label{sec:experiments}\vspace*{-1pt}

\begin{figure*}[!htbp]
\centering\vspace*{-10pt}
\begin{subfigure}[t]{0.33\textwidth}
    \includegraphics[width=\linewidth]
    {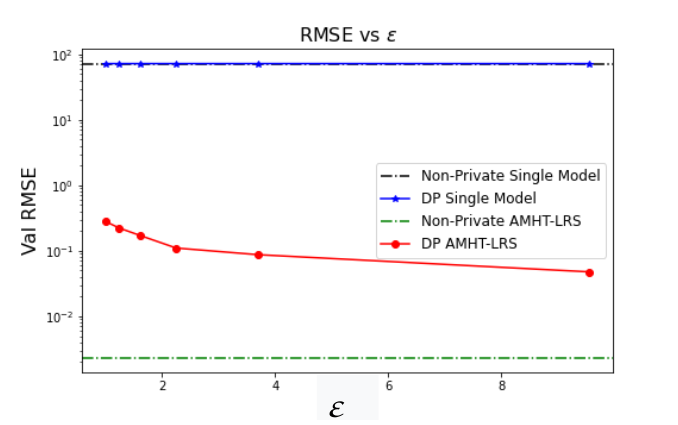}
    \caption{\small DP-\method Simulations}\label{fig:dp_simul_overall_RMSE}
\end{subfigure}
\begin{subfigure}[t]{.33\textwidth}
    \includegraphics[width=\linewidth]
    {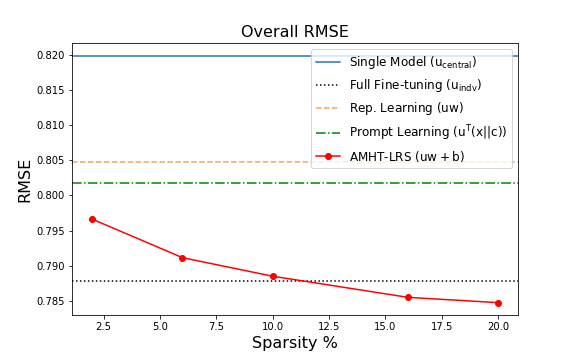}
    \caption{MovieLens}\label{fig:movieLens_overall_RMSE_main}
\end{subfigure}\hfill
\begin{subfigure}[t]{0.33\textwidth}
    \includegraphics[width=\linewidth]
    {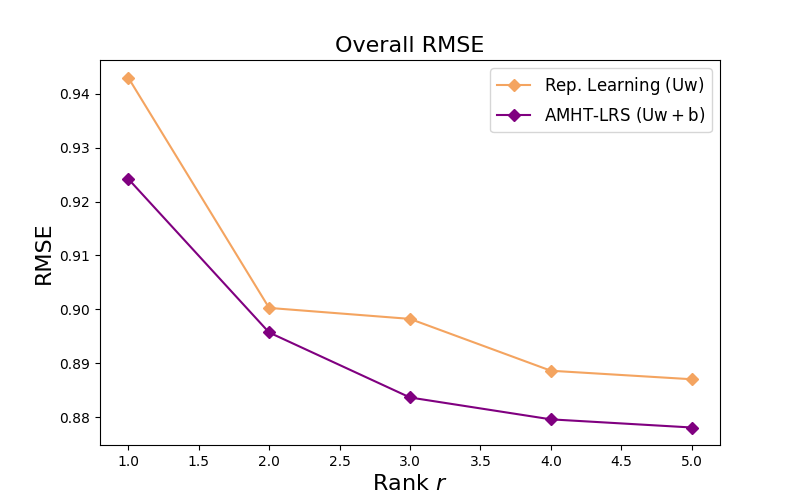}
    \caption{Netflix}\label{fig:netflix_1k_overall_RMSE}
\end{subfigure}\hfill
\caption{
\small
Comparison of \method with algorithms for other baseline frameworks on different datasets. Fig. a demonstrates that the overall RMSE on the simulated data for both the private and non-private versions of \method is far lower than the corresponding variants of Single Model. In Fig. b, on Movielens data, rank 1 version of \method (at 15\% sparsity) outperforms other baselines with small parameter overhead/user. In Fig. c, for Netflix data, we plot RMSE v/s rank curves for \method (at 2\% sparsity) and representation learning. We observe that for each rank value, \method improves upon the performance of representation learning.}
\label{fig:real_world_linear_overall}
\vspace*{-10pt}
\end{figure*}

In this section, the goal is to validate our theoretical guarantees for linear \lrs and compare \method with algorithms for other  baseline frameworks:
1) \textbf{Single Model ($\sf{u_{central}}$)}: learns a single model for all tasks, 2) \textbf{Full Fine-tuning ($\sf{u_{indv}}$)}  separate model for each task aka standard fine-tuning, 3) \textbf{Representation Learning or Rep. Learning $(\sf{uw})$}: only low rank model \citep{thekumparampil2021statistically} and 4) \textbf{Prompt Learning ($\sf{u^\s{T}(x \mid\mid c}$):} Modified covariate by  concatenation with a task-embedding vector. \\ 
{\bf a. \lrs on Recommendation Datasets}: We experiment with 2 datasets: MovieLens1M and Netflix. For the Movielens1M dataset, we have $241$ users with varying number of rated movies ($22-3070$). For the Netflix dataset, we have $1000$ users, each with at most $100$ rated movies. For Movielens1M dataset, we compare the rank-$1$ version of \method with other baselines where we vary the sparsity.
With only $15\%$ sparsity, \method becomes superior to other baselines.
For Netflix dataset, we compare the Representation Learning baseline with \method and vary the rank at $2\%$ sparsity - the remaining baselines perform significantly worse (Appendix \ref{app:experiments_detailed}). Clearly, \method provides gains on both datasets and corroborate our theory. Moreover, the number of additional parameters per task is also significantly small for \method (refer to Table~\ref{table:linear_memory} for exact numbers).     

{\bf b. DP \lrs Simulations}: Here, for a synthetic dataset, we note that while Differentially Private (DP) \method performs quite well for each $\epsilon$, both the private and non-private Single Model baselines fare badly even on higher values of $\epsilon$. Further, DP \method  achieves RMSE comparable to its non-private version by $\epsilon \approx 2$ mark.

Detailed version of these experiments is provided in Appendix \ref{app:experiments_detailed}. Additional experiments with neural network architectures and other synthetic datasets is provided in Appendix \ref{app:experiments_second}.

\section{Conclusion and Future Work}
\label{sec:conclusion}\vspace*{-6pt}
We presented a novel framework \lrs for model personalization that can scale to many users/domains, be accurate and preserves privacy provably. We proposed \method, that combines alternate minimization -- popular in representation learning -- with hard thresholding based methods. In the linear model with Gaussian data, we rigorously proved that \method is statistically and computationally efficient, and is able to generalize to new users with only $O(r+k)$ samples, where $r$ is the representation learning dimension and $k$ is the number of fine-tuning weights. Finally, we extended our result to ensure that privacy of each {\em user} is  preserved despite  sharing information. Extending our framework to non-realizable setting and adversarial settings are critical future directions.

\noindent \textbf{Limitations:} The main contributions of our works are theoretical. From a theoretical point of view, the limitations of our paper are discussed in Section \ref{sec:conclusion}. In particular, we believe that extending the theoretical analysis of \lrs framework to non-realizable settings is of significant importance - however, this will require non-trivial and novel technical/algorithmic ideas - we leave this as an important direction of future work.

\bibliography{bibfile}
\bibliographystyle{unsrtnat}

\newpage 
\appendix



\section{Empirical results}\label{app:experiments_detailed}\vspace*{-1pt}

\begin{figure*}[!htbp]
\centering
\begin{subfigure}[t]{.33\textwidth}
    \includegraphics[width=\linewidth]
    {figures/ICLR_movielens_overallRMSE.png}\vspace*{-5pt}
    \caption{MovieLens}\label{fig:movieLens_overall_RMSE}
\end{subfigure}\hfill
\begin{subfigure}[t]{0.33\textwidth}
    \includegraphics[width=\linewidth]
    {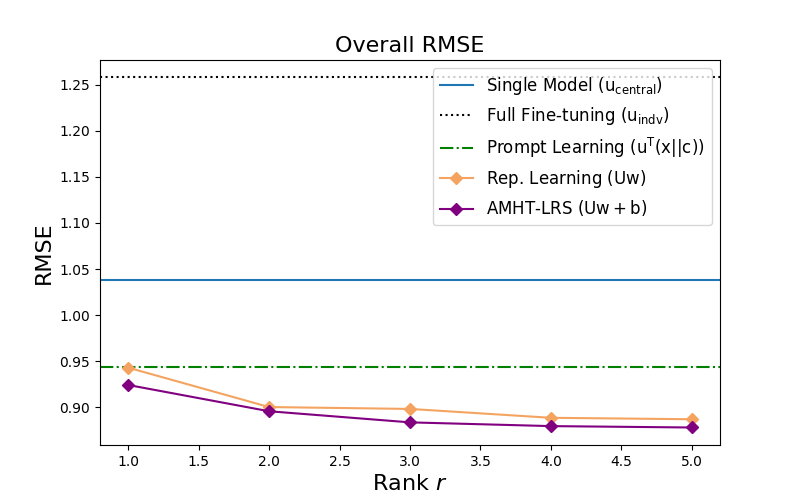}
    \caption{Netflix}\label{fig:netflix_overall_RMSE}
\end{subfigure}\hfill
\begin{subfigure}[t]{0.33\textwidth}
    \includegraphics[width=\linewidth]
    {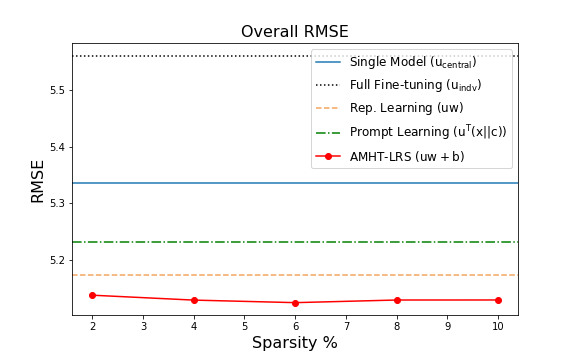}
    \caption{Jester}\label{fig:jester_overall_RMSE}
\end{subfigure}\vspace*{-10pt}
\caption{
\small
RMSE on different datasets for various methods as we increase the user-specific fine-tunable parameters. Note that \method outperforms other baselines with a small amount of parameter overhead/user. Full fine-tuning generalizes poorly in the more data-starved Netflix and Jester datasets.}
\label{fig:real_world_linear_overall_appendix}
\vspace*{-10pt}
\end{figure*}

\textbf{Compute Resources:} For all of our experiments we used Google Colaboratory, with a single V100 GPU having 12 GB RAM.

In this section, we are going to expand upon the information presented in Section~\ref{sec:experiments} and fill the gaps in implementation details. Recall that we developed \method with the following two goals: a) demonstrate that personalization with \method indeed improves accuracy for tasks with a small number of points, b) for a fixed budget of parameters, \method is significantly more accurate than existing baselines. 

Also, recall the nomenclature for the following baselines mentioned in the main paper: 1) \textbf{Single Model ($\sf{u_{central}}$)}: learns a single model for all tasks, 2) \textbf{Full Fine-tuning ($\sf{u_{indv}}$):}  separate model for each task aka standard fine-tuning, 3) \textbf{Representation Learning or Rep. Learning $(\sf{uw})$}: only low rank model \citep{thekumparampil2021statistically} and 4) \textbf{Prompt Learning ($\sf{u^\s{T}(x \mid\mid c}$):} Modified covariate by  concatenation with a task-embedding vector. Note that the models considered in \cite{hu2021lora, chua2021fine, denevi2018learning} all reduce to Full fine-tuning models (with much higher memory footprint) in the experiments that we consider. In the case of Rep. Learning and \method where $r > 1$, we will replace "$\sf{u}$" by "$\sf{U}$" in their respective nomenclature.
Lastly, the above approaches are not just restricted to linear models and can be extended to complex model classes such as Neural Networks (see Appendix~\ref{app:nn_experiments} for extension to 2-3 layer Neural Net architectures and Table~\ref{table:nn_memory} for memory footprint comparisons). 
Additional experiments on synthetic data can also be found in Appendix \ref{app:lrs_linear_synthetic_experiments}. 
Here, we focus on two sets of experiments: \\

\subsection{Linear Models on Recommendation Datasets}
We compare the performance of \method against the baselines mentioned above on 3 recommendation datasets: MovieLens1M, Netflix and Jester. For each dataset, we define training tasks at a user-level or with small groups of users. 
\subsubsection{MovieLens Dataset}\label{subsec:movielens_data}
The MovieLens 1M dataset comprises of 1M ratings of 6K users for 4K movies. Each user is associated with some demographic data namely gender, age group, and occupation in the MovieLens dataset. We partition the users into $241$ disjoint clusters where each cluster represents a unique combination of the demographic data. Each user group thus represents a "task" in the language of our paper. We partition the data into training and validation in the following way: for each task, we randomly choose $20\%$ movies rated by at least one user from that task and put all ratings made by users from that task for the chosen movie into the validation set. The remaining ratings belong to the training set. Based on the ratings in the training set, we fit a matrix of rank $50$ onto the ratings matrix and obtain a $50$ dimensional embedding of each movie. Thus, we ensure that there is no data leakage during embeddings generation. For each task, the samples consist of (movie embedding, average rating) tuples; the response is the average rating of the movie by users in that task. The number of samples per task varies from 22 to 3070 - clearly many clusters are data starved. We use the training data to learn the different models (see Section \ref{sec:experiments}; with hyper-parameter tuning) and use them to predict the ratings for the validation data. 
 
{\em Empirical Observations on MovieLens validation dataset}: 
With respect to the single model as reference, in the linear rank-$1$ case, the representation learning and the prompt learning based baselines have $1$ and $50$ additional parameters per task respectively; they are unable to personalize well. In contrast, with only $10\% (= 5)$ additional parameters per task, \method has smaller RMSE than fully fine-tuned model, which require 241x more parameters. See Figures \ref{fig:real_world_linear_overall} and Table~\ref{table:linear_memory} for clear comparisons.

\subsubsection{Netflix Dataset}\label{subsec:netflix_data}

We consider the Netflix Challenge dataset comprising of 17k users and 480k movies where ratings are provided as integers on a scale of $1-5$. We choose the top 1000 users who have rated the most movies and top 100 movies that have been rated the most and consider the $1000\times 100$ rating matrix restricted to these sets of top users and movies - this rating matrix comprises approximately 90k ratings. 
We perform the train-validation split in the following way: for $70$ users, we keep $10\%$ of their ratings in the training set (small data/user); for $70$ users, we kept $50\%$ of their ratings in their training set (medium data/user) and for the rest of $60$ users, we kept $90\%$ of their ratings in their training set (large data/user). 
All the observed ratings restricted to the $1000\times 100$ ratings matrix that are absent in the training set is inserted into the validation set. By using standard low rank matrix completion techniques \citep{chen2020noisy}, we complete the ratings matrix by minimizing the MSE w.r.t to the entries in the training set with a nuclear norm regularizer. Following this, we compute SVD $\fl{U}\f{\Sigma}\fl{V}^{\s{T}}$ and take the first $50$ columns of $\fl{V}$; this results in a truncated $1000\times 50$ dimensional matrix $\widehat{\fl{V}}$ where each row corresponds to a $50$-dimensional embedding of each movie. As before, the train-validation data split before embedding generation ensures that there is no data leakage while creating the embeddings. For each task representing each user, the samples consist of (movie embedding, average rating) tuples; the response is the average rating of the movie given by users in that task.  
For this experiment, instead of varying the number of fine-tunable parameters, we fix it at $2\%$ ($= 1$ parameter since $d=50$), and vary the rank $r$ for both rep. learning and \method. The different models (see Section \ref{sec:experiments}) are trained with some  hyper-parameter tuning) used to predict the ratings for the validation data. 

{\em Empirical Observations on Netflix validation data}:
We notice all the baselines other than rep. learning seem to perform significantly worse. Further, at each value of the rank $r$, \method with just $(r-1)*50 \text{ (from }\sf{U}) + (r-1)*1 \text{ (from }\sf{w}) =  (r-1)*51$ number of additional parameters shared across all users and 1 additional parameter (since $2\%$ of $d=50$ is 1) per user, outperforms all the other baselines. See Figures \ref{fig:real_world_linear_overall} and Table~\ref{table:linear_memory} for clear comparisons.

\subsubsection{Jester Dataset}\label{subsec:jester_data}

The Jester dataset comprises of 4.1M ratings from 73k users for 100 jokes with each rating being on a scale of $-10.0$ to $+10.0$. We choose 100 users who have rated all the 100 jokes and consider the $100\times 100$ rating matrix restricted to these users and jokes - this rating matrix is entirely filled. Similar to the Netflix dataset, we perform the train-validation split in the following way: for $30$ users, we keep $10\%$ of their ratings in the training set (small data/user); for $40$ users, we kept $50\%$ of their ratings in their training set (medium data/user) and for the rest of $30$ users, we kept $90\%$ of their ratings in their training set (large data/user). We use the training data to learn the different models (with some  hyper-parameter tuning) and use them to predict the ratings in the validation data. 

{\em Empirical Observations on Jester validation dataset}: 

The conclusions are mostly similar as in the Netflix and MovieLens cases. With only $2\% (= 1)$ additional parameter per task, \method has smaller RMSE than fully fine-tuned model, which require $100$x more parameters and the other baselines such as representation learning ($1$ additional parameter) and prompt learning ($50$ additional parameters). This holds true for both data-starved and data-surplus tasks. See Figures \ref{fig:real_world_linear_overall} and Table~\ref{table:linear_memory} for clear comparisons.

\subsubsection{DP \lrs Simulations} 
Here, for each task $i\in [t]$, we generate $m=100$ samples $\{(\fl{x}^{(i)}_j,y^{(i)}_j)\}_{j \in [m]}$ where $\fl{x}^{(i)}_j \sim \ca{N}(\f{0},\fl{I}_{d\times d})$, $y^{(i)}_j=\langle \fl{x}^{(i)}_j, \fl{u}^{\star}w^{\star(i)}+\fl{b}^{\star (i)} \rangle$ and $w^{\star(i)} = 1$ is fixed for simplicity. We select number of tasks $t = 5000$, data dimensions $d=10$, $k,\zeta$, and both the column and row sparsity level of $\{\fl{b}^{\star (i)}\}_{i\in [t]}$ to be $2$. We sample  $\fl{u}^{\star}$ uniformly from the unit sphere and the non-zero elements of $\{\fl{b}^{\star (i)}\}_{i\in [t]}$ are sampled i.i.d. from $\ca{N}(0,1)$ with the non-zero indices selected randomly. We run the algorithm for 15 epochs and use RDP sequential composition to compute the privacy risk accumulated over the epochs. We set smallest possible clipping norm values for $\s{A}_1, \s{A}_2$ and $\s{A}_3$ s.t. most samples don't get clipped. We fix $\delta = 10^{-5}$. Finally we plot the RMSE on the test set for different values of $\epsilon$.

\begin{figure}[!ht]
\vspace*{-10pt}
\centering
   \includegraphics[scale=0.4]{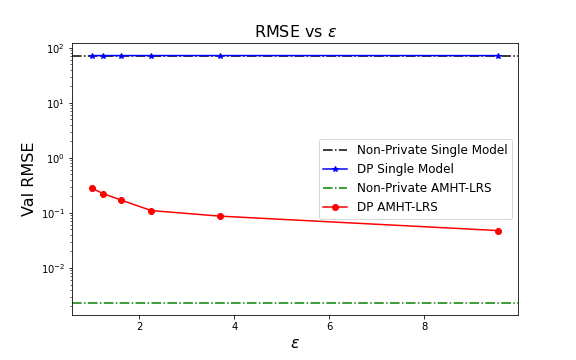}\vspace*{-5pt}
\vspace*{-15pt}
 \caption{\small  Comparison of Overall RMSE on the simulated data for both the private and non-private versions of \method and the Single Model Baseline.s}
\vspace*{-5pt}
\end{figure}\label{fig:dp_simul}

{\em Empirical Observations}: We note that while DP \method performs quite well for each $\epsilon$, both the private and non-private Single Model baselines fare badly even on higher values of $\epsilon$. Further, DP \method  achieves RMSE comparable to its non-private version by $\epsilon \approx 2$ mark.

\section{Additional Experiments and Setup}\label{app:experiments_second}

\begin{figure}[h]
\centering 
    \includegraphics[scale = 0.40]{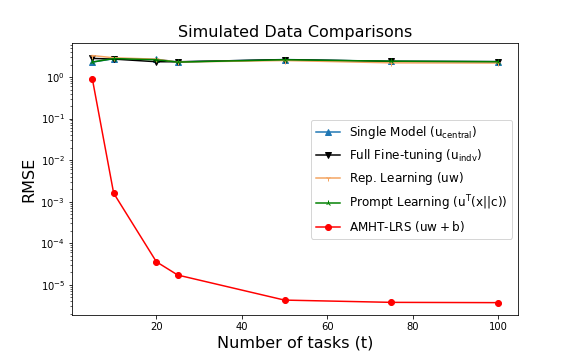}
  \caption{\small  Decrease in RMSE on Synthetic data for AMHT-LRS on increase in fine-tunable parameters}
  \label{fig:simulated}
  \vspace{-5pt}
  \smallskip
\end{figure}

\subsection{Synthetic experiments with non-private algorithms}\label{app:lrs_linear_synthetic_experiments}



\textbf{Synthetic dataset}: Here, for each task $i\in [t]$, we generate $m=100$ samples $\{(\fl{x}^{(i)}_j,y^{(i)}_j)\}_{j \in [m]}$ where $\fl{x}^{(i)}_j \sim \ca{N}(\f{0},\fl{I}_{d\times d})$, $y^{(i)}_j=\langle \fl{x}^{(i)}_j, \fl{u}^{\star}w^{\star(i)}+\fl{b}^{\star (i)} \rangle$. We select $d=150$, set $k,\zeta$, the column and row sparsity level of $\{\fl{b}^{\star (i)}\}_{i\in [t]}$ to be $10$ and $5$, respectively. We sample  $\fl{u}^{\star}$ uniformly from the unit sphere; non-zero elements of $\{\fl{b}^{\star (i)}\}_{i\in [t]}$ and $w^{\star(i)}$ are sampled i.i.d. from $\ca{N}(0,1)$ with the indices of zeros selected randomly.



Figure~\ref{fig:simulated} shows that not only having a single model can lead to poor performance, but a fully fine-tuned model per task can also be highly inaccurate as scarcity of data per task can leading to over-fitting.  Finally, low-rank representation learning as well as prompt tuning based techniques do not perform well due to lack of modeling power. In contrast, our method recovers the underlying parameters -- as also predicted by Theorem~\ref{thm:main} -- and provides 5 orders of magnitude better RMSE.

\subsection{Experiments with Neural Networks}\label{app:nn_experiments}

As described in Section \ref{sec:experiments}, our techniques/approaches can be extended to complex classes of non-linear model. To demonstrate this, we fix the class of models to Neural Networks (denoted by $\s{F}:\bb{R}^{d} \rightarrow \bb{R}$). Similar to Section \ref{sec:experiments}, we consider the following baselines:
 1) \textbf{Single Model ($\s{F}(x;\s{u}_{\text{central}})$)}: learns a single Neural Network model for all tasks, 2) \textbf{Full Fine-tuning ($\s{F}(x;\s{u}_{\text{indv}})$)} Learns a separate fully-trained Neural Network model for each task, 3) \textbf{Rep. Learning ($\s{F}(x;\s{uw})$)}: 
 Learns separate Neural Networks for each task such that the NN parameters of each task lie on a low dimensional manifold.
 4) \textbf{Prompt Learning ($\s{F}(x\mid \mid c;\s{u}_{\text{central}})$):} The covariate is concatenated with a trainable embedding of the corresponding task and a single Neural network model is trained with the modified covariates.
\method ($\s{F}(x;\s{uw}+\s{b}_{\text{sparse}})$) itself trains a separate Neural Network model for each task but assumes that the entire set of parameters of the  Neural networks for each task  can be represented as a Low Rank+Sparse matrix. And as before, for the the case of Rep. Learning or \method extensions to Neural Networks, if we use $r > 1$, we will replace "$\sf{u}$" by "$\sf{U}$" in their respective nomenclature.

As in the case of linear models, we use the training data to learn the parameters of different models (with some  hyper-parameter tuning) described above and use them to predict the ratings in the validation data. The overall average validation RMSE for \method and the $4$ different baselines (modified for neural networks) that we consider against different amounts of sparsity in $\s{b}_{\text{sparse}}$ is computed (shown in figures for each of three datasets that we experiment with). The memory footprint of the different methods (for each of the three datasets) has been provided in Table \ref{table:nn_memory}.

\paragraph{More Details on Experimental Setup:} To compute the single model and fully fine-tuned model metrics, we used batched gradient descent. To compute the low rank model metrics, we performed alternating optimization as per the algorithm described in  \citep{thekumparampil2021statistically}. Finally, to compute \method metrics, we used an $L_2$ regularization for each $\fl{b}^{(i, \ell)}$. For all the gradient based methods, we used Adam/AdamW Optimizer with weight decay and learning rate scheduler. We experimented with Cosine Annealing and Decay on Plateau schedulers. We performed a search over learning rates, $L_2$  weight decay values and learning rate scheduler hyperparameters (decay factor for Decay on Plateau and window size for Cosine Annealing) and reported the model metrics which gave the best overall RMSE on the validation dataset.

\subsubsection{Netflix Dataset}

\paragraph{\lrs with 2 layer Neural Net for Netflix:} 
For the Neural Network (NN) experiments on Netflix dataset, we consider the function class $\s{F}$ - a 2 layer Neural Net with a single hidden layer of 50 neurons and tanh activation.
 The training and the validation data on the Netflix dataset is same as created for the linear models (see Section \ref{subsec:netflix_data}). 
 The comparison of validation RMSE of \method and all the $4$ baselines corresponding to the Netflix dataset is given in Figure \ref{fig:Netflix_NN_overall}. 
 
 Observe that \method outperforms the rest of the baselines with a small memory overhead (see Table \ref{table:nn_memory}). In particular, the improvement in performance is achieved along with a significant improvement in memory cost compared to Full fine-tuning - \method (with only 1\% sparsity/tunable parameters for each user) outperforms the Full-Finetuning baseline at each rank $r$ value.
 
\begin{figure*}[!h]
\begin{subfigure}[t]{0.49\textwidth}
\centering
   \includegraphics[width=\linewidth]{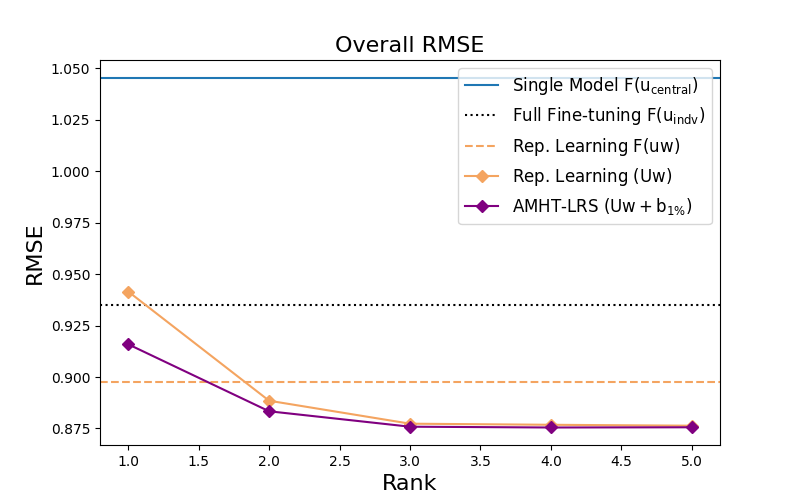}\vspace*{-5pt}
   \caption{\small Overall comparison of RMSE.}
       ~\label{fig:overall_netflix_1k_1}
 \end{subfigure}\hfill
 \begin{subfigure}[t]{0.49\textwidth}
\centering
   \includegraphics[width=\linewidth]{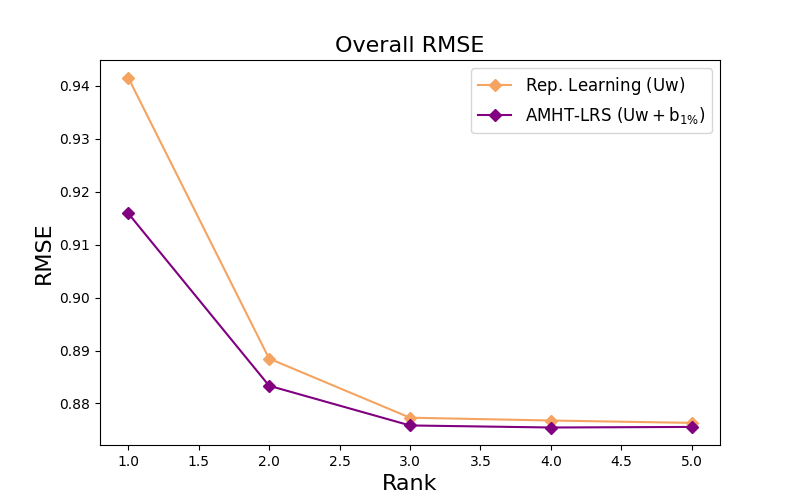}\vspace*{-5pt}
   \caption{\small Rep. Learning v/s \method}
       ~\label{fig:overall_netflix_1k_2}
 \end{subfigure}\hfill
\vspace*{-15pt}
 \caption{\small  Comparison of Overall RMSE on the Netflix validation data achieved by \method and the different baselines we consider where the training is modified for toy Neural Network models with a single hidden layer of $50$ neurons and tanh activation. \method outperforms other baselines at each rank value (see Table \ref{table:nn_memory} for exact numbers on model parameters).}
\end{figure*}\label{fig:Netflix_NN_overall}

\subsubsection{Jester Dataset}

\paragraph{\lrs with 2 layer Neural Net for Jester:}

For the Neural Network (NN) experiments on Jester dataset, we consider the function class $\s{F}$ - a 2 layer Neural Net with a single hidden layer comprising 50 neurons and tanh activation.
As before, the training and the validation data is the same that was created for the case of linear models (see Sec. \ref{subsec:jester_data}). 
The comparison of validation RMSE of \method and all the $4$ baselines corresponding to the Jester dataset is given in Figure \ref{fig:overall_jester2}. 
 
 Again, we observe that \method  outperforms the rest of the baselines with a small memory overhead (see Table \ref{table:nn_memory}). As before, the improvement in performance is achieved along with a significant improvement in memory cost compared to Full fine-tuning - \method (with only 2\% sparsity/tuneable parameters for each user) outperforms the Full-Finetuning baseline with only $1.5\%$ of the corresponding number of parameters.

\begin{figure*}[!h]
\begin{subfigure}[t]{\textwidth}
\centering
   \includegraphics[scale=0.5]{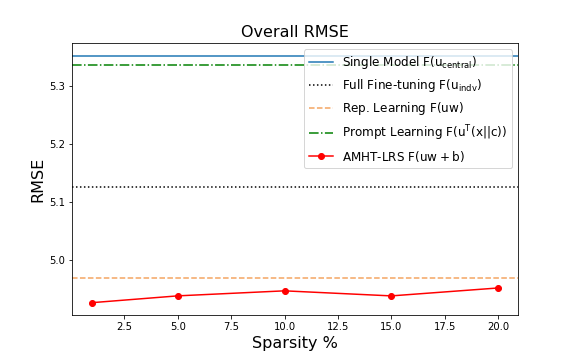}\vspace*{-5pt}
   \caption{\small Overall comparison of RMSE.}
       ~\label{fig:overall_jester2}
 \end{subfigure}%
 \vspace*{-15pt}
 \caption{\small  Comparison of Overall RMSE on the Jester validation data achieved by \method and the different baselines we consider where the training is modified for toy Neural Network models with a single hidden layer of $50$ neurons and tanh activation. \method outperforms other baselines along with a significantly smaller memory footprint (see Table \ref{table:nn_memory} for exact numbers on model parameters).}
\end{figure*}\label{fig:overall_jester}

\subsubsection{MovieLens Dataset}
 
\paragraph{\lrs with 3 layer Neural Net for MovieLens:} 
For the Neural Network (NN) experiments on MovieLens dataset, we consider the function class $\s{F}$ - a 3 layer Neural Net with 2 hidden layers of 50 neurons each and tanh activation. 
 The training and the validation data on the MovieLens dataset is created in a similar manner as discussed in Section \ref{sec:experiments}. The comparison of validation RMSE of \method and all the $4$ baselines corresponding to the MovieLens dataset is given in Figure \ref{fig:overall2}. Here, we can observe that \method has almost similar performance as the best performing baseline Full Fine-tuning ($\s{F}(x;\s{u}_{\text{indv}})$) while outperforming the other baselines. However, note that the individual models $\s{F}(x;\s{u}_{\text{indv}})$ have a high memory overhead since every trained model per task has the same memory usage as a single Neural Network model. In particular \method (with only 20\% sparsity/tunable parameters) matches the Full-finetuning baseline with approximately 20\% of the corresponding number of  parameters.

\begin{figure*}[!h]
\begin{subfigure}[t]{\textwidth}
\centering
   \includegraphics[scale = 0.5]{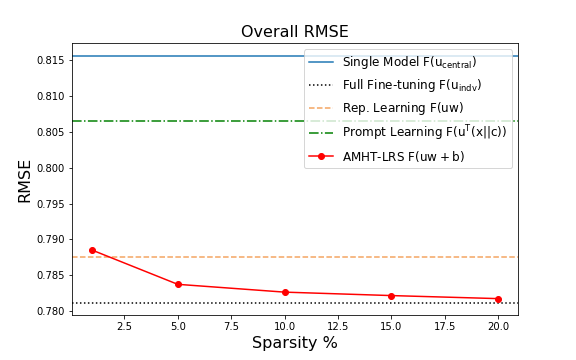}\vspace*{-5pt}
   \caption{\small Overall comparison of RMSE.}
       ~\label{fig:overall2}
 \end{subfigure}%
\vspace*{-15pt}
 \caption{\small  Comparison of RMSE on the MovieLens validation data achieved by \method and the different baselines we consider where the training is modified for toy Neural Network models with 2 hidden layers of $50$ neurons each. \method has similar performance as individual models $\s{F}(x;\s{u}_{\text{indv}})$ trained for each task; however our models have a significantly smaller memory footprint (see Table \ref{table:nn_memory} for exact numbers on model parameters).}
\end{figure*}\label{fig:movielens_NN_overall}

{\renewcommand{\arraystretch}{1.2}
\begin{table}[t!]
 \caption{Comparison of number of model parameters for Linear formulation of \method at different sparsity levels and different baselines. Note that our approach \method at 2\% and 10\% sparsity levels for MovieLens and Jester respectively has substantially less number of parameters than Full fine-tuning $\s{u}_{\text{indv}}$ and comparable number of model parameters with the other baselines. For Netflix, at each rank $r$ value, with just 1 additional parameter/user \method is able to outperform all other baselines.}
\vskip 0.15in
\begin{center}
\begin{small}
\begin{sc}
\begin{tabular}{|| c | l | r||} 
 \hline
 Dataset & Method & \#Parameters \\ [0.5ex] 
 \hline\hline
 \multirow{9}{5em}{MovieLens} & Single Model $(\s{u}_{\text{central}})$ & 50 \\
 & Full Fine-tuning $(\s{u}_{\text{indv}})$ & 12,050 \\
 & Rep. Learning $(\s{uw})$ & 291 \\
 & Prompt Learning $(\s{u}_{\text{central}}, x\mid \mid c)$ & 12,100\\
 & \method $(\s{uw}+\s{b}_{\text{2\% sparse}})$ & 532 \\
 & \method $(\s{uw}+\s{b}_{\text{6\% sparse}})$ & 1,014 \\
 & \method $(\s{uw}+\s{b}_{\text{10\% sparse}})$ & 1,496 \\
 & \method $(\s{uw}+\s{b}_{\text{16\% sparse}})$ & 2,219 \\
 & \method $(\s{uw}+\s{b}_{\text{20\% sparse}})$ & 2701\\ 
 \hline\hline
 \multirow{9}{5em}{Jester} & Single Model $(\s{u}_{\text{central}})$ & 50 \\ 
 & Full Fine-tuning $(\s{u}_{\text{indv}})$ & 5,000 \\
 & Rep. Learning $(\s{uw})$ & 150 \\
 & Prompt Learning $(\s{u}_{\text{central}}, x\mid \mid c)$ & 5,050\\
 & \method $(\s{uw}+\s{b}_{\text{2\% sparse}})$ & 250 \\
 & \method $(\s{uw}+\s{b}_{\text{4\% sparse}})$ & 350 \\
 & \method $(\s{uw}+\s{b}_{\text{6\% sparse}})$ & 450 \\
 & \method $(\s{uw}+\s{b}_{\text{8\% sparse}})$ & 550 \\
 & \method $(\s{uw}+\s{b}_{\text{10\% sparse}})$ & 650\\
 \hline\hline
 \multirow{9}{5em}{Netflix} & Single Model $(\s{u}_{\text{central}})$ & 50 \\ 
 & Full Fine-tuning $(\s{u}_{\text{indv}})$ & 50,000 \\
 & Rep. Learning $(\s{uw})$ Rank $r=1$& 1,050 \\
 & Rep. Learning $(\s{Uw})$ Rank $r=2$& 2,100 \\
 & Rep. Learning $(\s{Uw})$ Rank $r=3$& 3,150 \\
 & Rep. Learning $(\s{Uw})$ Rank $r=4$& 4,200 \\
 & Rep. Learning $(\s{Uw})$ Rank $r=5$& 5,250 \\
 & Prompt Learning $(\s{u}_{\text{central}},x\mid \mid c)$ & 10,050\\
 & \method $(\s{uw}+\s{b}_{\text{2\% sparse}})$ Rank $r=1$ & 2,050 \\
 & \method $(\s{Uw}+\s{b}_{\text{2\% sparse}})$ Rank $r=2$ & 3,100 \\
 & \method $(\s{Uw}+\s{b}_{\text{2\% sparse}})$ Rank $r=3$ & 4,150 \\
 & \method $(\s{Uw}+\s{b}_{\text{2\% sparse}})$ Rank $r=4$ & 5,200 \\
 & \method $(\s{Uw}+\s{b}_{\text{2\% sparse}})$ Rank $r=5$ & 6,250 \\
 \hline
 \end{tabular}
 \end{sc}
\end{small}
\end{center}
\vskip -0.1in
\end{table}\label{table:linear_memory}
}

{\renewcommand{\arraystretch}{1.2}
\begin{table}[t!]
 \caption{Comparison of number of model parameters for NN versions of \method at different sparsity levels and different baselines. As in Table~\ref{table:linear_memory}, our approach \method at 1\% and 5\% for MovieLens and Jester respectively sparsity levels has substantially less number of parameters than Full fine-tuning $\s{F}(x;\s{u}_{\text{indv}})$ and comparable number of model parameters with the other baselines. For Netflix, at each rank $r$ value, with just $1\%$ additional parameter/user \method is able to outperform all other baselines.}
\vskip 0.15in
\begin{center}
\begin{small}
\begin{sc}
\begin{tabular}{|| c | l | r||} 
 \hline
 Dataset & Method & \#Parameters \\ [0.5ex] 
 \hline\hline
 \multirow{9}{5em}{MovieLens} & Single Model $\s{F}(x;\s{u}_{\text{central}})$ & 5,151 \\
 & Full Fine-tuning $\s{F}(x;\s{u}_{\text{indv}})$ & 1,241,391 \\
 & Rep. Learning $\s{F}(x;\s{uw})$ & 5,392 \\
 & Prompt Learning $\s{F}(x\mid \mid c;\s{u}_{\text{central}})$ & 19,701\\
 & \method $\s{F}(x;\s{uw}+\s{b}_{\text{1\% sparse}})$ & 17,924 \\
 & \method $\s{F}(x;\s{uw}+\s{b}_{\text{5\% sparse}})$ & 67,570 \\
 & \method $\s{F}(x;\s{uw}+\s{b}_{\text{10\% sparse}})$ & 129,748 \\
 & \method $\s{F}(x;\s{uw}+\s{b}_{\text{15\% sparse}})$ & 191,685 \\
 & \method $\s{F}(x;\s{uw}+\s{b}_{\text{20\% sparse}})$ & 253,863\\ 
 \hline\hline
 \multirow{9}{5em}{Jester} & Single Model $\s{F}(x;\s{u}_{\text{central}})$ & 2,601 \\ 
 & Full Fine-tuning $\s{F}(x;\s{u}_{\text{indv}})$ & 260,100 \\
 & Rep. Learning $\s{F}(x;\s{uw})$ & 2,701 \\
 & Prompt Learning $\s{F}(x\mid \mid c;\s{u}_{\text{central}})$ & 10,101\\
 & \method $\s{F}(x;\s{uw}+\s{b}_{\text{1\% sparse}})$ & 5,302 \\
 & \method $\s{F}(x;\s{uw}+\s{b}_{\text{5\% sparse}})$ & 15,706 \\
 & \method $\s{F}(x;\s{uw}+\s{b}_{\text{10\% sparse}})$ & 28,711 \\
 & \method $\s{F}(x;\s{uw}+\s{b}_{\text{15\% sparse}})$ & 41,716 \\
 & \method $\s{F}(x;\s{uw}+\s{b}_{\text{20\% sparse}})$ & 54,721\\
 \hline\hline
 \multirow{9}{5em}{Netflix} & Single Model ($\s{F}(x;\s{u}_{\text{central}})$) & 2,601 \\ 
 & Full Fine-tuning $\s{F}(x;\s{u}_{\text{indv}})$ & 2,601,000 \\
 & Rep. Learning $\s{F}(x;\s{uw})$ Rank $r=1$ & 4,601 \\
 & Rep. Learning $\s{F}(x;\s{Uw})$ Rank $r=2$ & 9,202 \\
 & Rep. Learning $\s{F}(x;\s{Uw})$ Rank $r=3$ & 13,803 \\
 & Rep. Learning $\s{F}(x;\s{Uw})$ Rank $r=4$ & 18,404 \\
 & Rep. Learning $\s{F}(x;\s{Uw})$ Rank $r=5$ & 23,005 \\
 & Prompt Learning $\s{F}(x\mid \mid c;\s{u}_{\text{central}})$ & 55,101\\
 & \method $\s{F}(x;\s{uw}+\s{b}_{\text{1\% sparse}})$ Rank $r=1$ & 30,601 \\
 & \method $\s{F}(x;\s{Uw}+\s{b}_{\text{1\% sparse}})$ Rank $r=2$ & 35,202 \\
 & \method $\s{F}(x;\s{Uw}+\s{b}_{\text{1\% sparse}})$ Rank $r=3$ & 39,803 \\
 & \method $\s{F}(x;\s{Uw}+\s{b}_{\text{1\% sparse}})$ Rank $r=4$ & 44,404 \\
 & \method $\s{F}(x;\s{Uw}+\s{b}_{\text{1\% sparse}})$ Rank $r=5$ & 49,005 \\\hline
 \end{tabular}
 \end{sc}
\end{small}
\end{center}
\vskip -0.1in
\end{table}\label{table:nn_memory}
}



\section{Warm-Up: Central Model + Fine-tuning}\label{app:finetuning}

\subsection{Sparse Fine-tuning of Central Model}\label{subsec:finetuning}

Inspired by parameter efficient transfer learning applications shown in  \cite{guo2020parameter} where the authors propose learning a task-specific sparse vector, we consider the following simple variant of our problem in the noiseless rank-$1$ setting ($r=1$) with $(\fl{W}^{\star})^{\s{T}}$ being a multiple of an all $1$ $t$-dimensional vector. We will denote the representation vector by $\fl{u}^{\star}\in\bb{R}^d$ that is shared by all the tasks.
Therefore, the ERM for this model is given by the \lrs problem with $\fl{w}^{(i)}=1$ for all $i\in [t]$.
We can also pose this problem as the setting when the low rank representation of the datapoints corresponds to projection on a fixed unknown vector; there exists a central model (parameterized by the fixed unknown vector shared across tasks) and each task is a fine-tuned version of the central model. Our AM algorithm to solve the modified ERM problem is significantly simpler; in particular Steps 2-8 in Algorithm \ref{algo:optimize_lrs1} is replaced by the following set of updates given estimates $\fl{u}^{(\ell-1)}\in \bb{R}^d$ (of $\fl{u}^{\star}$) and $\{\fl{b}^{(i,\ell-1)}\}_{i\in [t]}$ (of $\{\fl{b}^{\star (i)}\}_{i\in [t]}$) in the $\ell^{\s{th}}$ iteration with a suitable choice of $\Delta^{(\ell)}$: 
\begin{align}
    &\fl{c}^{(i,\ell)} \gets \fl{b}^{(i,\ell-1)}- 
        (m^{-1}\fl{X}^{(i)})^{\s{T}}  (\fl{X}^{(i)}(\fl{u}^{(\ell-1)}+\fl{b}^{(i,\ell-1)})-\fl{y}^{(i)}) \label{eq:update_simple1} \\
     &\fl{b}^{(i,\ell)} \gets \s{HT}(\fl{c}^{(i,\ell)},\Delta^{(\ell)}) \label{eq:update_simple2} \\
     &\fl{u}^{(\ell)} \gets \Big(\sum_i  (\fl{X}^{(i)})^{\s{T}}\fl{X}^{(i)}\Big)^{-1} \Big(\sum_i (\fl{X}^{(i)})^{\s{T}}\Big(\fl{y}^{(i)}-
     \fl{X}^{(i)}\fl{b}^{(i,\ell)}\Big)\Big) \label{eq:update_simple3}
\end{align}
Notice that the updates in eq. \ref{eq:update_simple3} are only implemented once in each iteration (unlike Algorithm \ref{algo:optimize_b}) which improves the run-time as well as the sample complexity of the algorithm by logarithmic factors. The detailed Algorithm is provided in Appendix \ref{app:finetuning}. We present the main theorem below:

\begin{thm}\label{thm:main_toy}
Consider the \lrs problem with $t$ linear regression tasks and samples obtained by \eqref{eq:samples} where rank $r=1$, $\sigma=0$, $\fl{U}^{\star} \equiv \fl{u}^{\star}\in \bb{R}^d$ and $\fl{w}^{\star}_i \equiv w^{\star}\in \bb{R}$. Let model parameters $\{\fl{b}^{\star (i)}\}_{i\in [t]}$ satisfy assumption A1. Suppose Algorithm \ref{algo:optimize_lrs1} with modified updates (eq. \ref{eq:update_simple3}) is run for $\s{L}=\log\left(\epsilon_0^{-1}\Big(\max_{i\in [t]}\lr{\fl{b}^{\star(i)}}_{\infty}+\lr{\fl{u}^{\star}}_{\infty}+\frac{\lr{\fl{u}^{\star}}_2}{\sqrt{k}}\Big)\right)$ iterations. Then, w.p. $\geq 1-O(\delta_0)$, the outputs $\fl{u}^{(\s{L})},\{\fl{b}^{(i,\s{L})}\}_{i\in[t]}$ satisfy: $\lr{\fl{u}^{(\s{L})}-w^{\star}\fl{u}^{\star}}_{\infty} \le O(\epsilon_0)   \text{ and }\left|\left|\fl{b}^{(i, \s{L})}-\fl{b}^{\star (i)}\right|\right|_{\infty} \le O(\epsilon_0) \text{ for all }i\in [t]$ 
provided the total number of samples satisfy 
$    m=\widetilde{\Omega}(k),\; mt=\widetilde{\Omega}(d\sqrt{k}) \text{ and }mt^2=\widetilde{\Omega}(\zeta k d).$
\end{thm}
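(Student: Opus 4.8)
The target is geometric decay of the \emph{entrywise} errors $\fl{e}_{\fl{u}}^{(\ell)} := \fl{u}^{(\ell)}-w^{\star}\fl{u}^{\star}$ and $\fl{e}_{\fl{b}}^{(i,\ell)} := \fl{b}^{(i,\ell)}-\fl{b}^{\star(i)}$, proved by induction on $\ell$ while the threshold schedule is tracked in lockstep. Writing $\widehat{\fl{G}}^{(i)} := \tfrac1m(\fl{X}^{(i)})^{\s{T}}\fl{X}^{(i)}$ and using the noiseless model $\fl{y}^{(i)}=\fl{X}^{(i)}(w^{\star}\fl{u}^{\star}+\fl{b}^{\star(i)})$, updates (\ref{eq:update_simple1})--(\ref{eq:update_simple3}) collapse to the two identities $\fl{c}^{(i,\ell)}-\fl{b}^{\star(i)} = (\fl{I}-\widehat{\fl{G}}^{(i)})\fl{e}_{\fl{b}}^{(i,\ell-1)}-\widehat{\fl{G}}^{(i)}\fl{e}_{\fl{u}}^{(\ell-1)}$ and $\fl{e}_{\fl{u}}^{(\ell)} = -\big(\tfrac1t\sum_i\widehat{\fl{G}}^{(i)}\big)^{-1}\big(\tfrac1t\sum_i\widehat{\fl{G}}^{(i)}\fl{e}_{\fl{b}}^{(i,\ell)}\big)$, on which the whole argument rests. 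The inductive hypothesis at the end of iteration $\ell$ is: (i) $\s{supp}(\fl{b}^{(i,\ell)})\subseteq\s{supp}(\fl{b}^{\star(i)})$ for every $i$; (ii) $a_{\ell}:=\max_i\|\fl{e}_{\fl{b}}^{(i,\ell)}\|_{\infty}\le\rho\,a_{\ell-1}$ for a fixed $\rho<1$; and (iii) $\|\fl{e}_{\fl{u}}^{(\ell)}\|_{\infty}\le(\tfrac{\zeta}{t}+o(1))\,a_{\ell}$ together with an $\|\cdot\|_2$ bound $\|\fl{e}_{\fl{u}}^{(\ell)}\|_{2}\lesssim(\sqrt{\zeta k/t}+\sqrt{dk/(mt)})\,a_{\ell}$. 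The threshold is pinned to the current error, $\Delta^{(\ell)}\asymp a_{\ell-1}$.

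\textbf{The $\fl{b}$-step.} Given the hypothesis at $\ell-1$, bound $\|\fl{c}^{(i,\ell)}-\fl{b}^{\star(i)}\|_{\infty}\le\|\fl{e}_{\fl{u}}^{(\ell-1)}\|_{\infty}+\|(\fl{I}-\widehat{\fl{G}}^{(i)})\fl{e}_{\fl{b}}^{(i,\ell-1)}\|_{\infty}+\|(\fl{I}-\widehat{\fl{G}}^{(i)})\fl{e}_{\fl{u}}^{(\ell-1)}\|_{\infty}$. Since fresh samples are drawn each iteration, $\widehat{\fl{G}}^{(i)}$ is independent of the iterates, so one conditions on $\fl{e}_{\fl{b}}^{(i,\ell-1)}$ (which is $k$-sparse by (i)) and on $\fl{e}_{\fl{u}}^{(\ell-1)}$ and applies a per-coordinate Bernstein/Hanson--Wright estimate of the shape $\|(\fl{I}-\widehat{\fl{G}}^{(i)})\fl{v}\|_{\infty}\lesssim\sqrt{(\log d)/m}\,\|\fl{v}\|_{2}$ valid for a \emph{fixed} $\fl{v}$; no covering over supports is incurred, which is exactly why $m=\widetilde\Omega(k)$ --- rather than $\widetilde\Omega(k^2)$ --- keeps the second term $\lesssim\sqrt{k\log d/m}\,a_{\ell-1}$ at a small constant times $a_{\ell-1}$, the third being handled analogously from the $\|\cdot\|_2$ bound in (iii). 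With $\|\fl{e}_{\fl{u}}^{(\ell-1)}\|_{\infty}\le(\tfrac{\zeta}{t}+o(1))a_{\ell-1}$ this yields $\|\fl{c}^{(i,\ell)}-\fl{b}^{\star(i)}\|_{\infty}\le(\tfrac{\zeta}{t}+o(1))a_{\ell-1}$. Taking $\Delta^{(\ell)}$ a hair above this bound and invoking the elementary hard-thresholding fact --- $\s{supp}(\s{HT}(\fl{c},\Delta))\subseteq\s{supp}(\fl{b}^{\star})$ and $\|\s{HT}(\fl{c},\Delta)-\fl{b}^{\star}\|_{\infty}\le\Delta+\|\fl{c}-\fl{b}^{\star}\|_{\infty}$ whenever $\Delta\ge\|\fl{c}-\fl{b}^{\star}\|_{\infty}$ --- re-establishes (i) at $\ell$ and gives $a_{\ell}\le 2\Delta^{(\ell)}\le 2(\tfrac{\zeta}{t}+o(1))a_{\ell-1}$.

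\textbf{The $\fl{u}$-step (the heart).} Write $\tfrac1t\sum_i\widehat{\fl{G}}^{(i)}=\fl{I}+\fl{F}$ and Neumann-expand $(\fl{I}+\fl{F})^{-1}=\fl{I}-\fl{F}+\fl{F}^2-\cdots$, so $\fl{e}_{\fl{u}}^{(\ell)}=-(\fl{I}-\fl{F}+\cdots)\big(\tfrac1t\sum_i\fl{e}_{\fl{b}}^{(i,\ell)}+\tfrac1t\sum_i(\widehat{\fl{G}}^{(i)}-\fl{I})\fl{e}_{\fl{b}}^{(i,\ell)}\big)$. The leading term $-\tfrac1t\sum_i\fl{e}_{\fl{b}}^{(i,\ell)}$ is where the contraction is won: by invariant (i) and Assumption~\ref{assum:row_sparse}, each coordinate $j$ lies in $\s{supp}(\fl{b}^{(i,\ell)})$ for at most $\zeta$ of the $t$ tasks, hence $\|\tfrac1t\sum_i\fl{e}_{\fl{b}}^{(i,\ell)}\|_{\infty}\le\tfrac{\zeta}{t}a_{\ell}$ and $\|\tfrac1t\sum_i\fl{e}_{\fl{b}}^{(i,\ell)}\|_{2}\le\sqrt{\zeta k/t}\,a_{\ell}$ --- a genuine $\zeta/t$ shrinkage, which is precisely what makes the \emph{product} of this step with the factor-$2$ of hard thresholding contract, $\rho\approx 2\zeta/t+o(1)<1$ since $t\gtrsim\zeta$ is implied by the stated bounds. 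The corrections are controlled via $\|\fl{F}\|_{2}\lesssim\sqrt{d/(mt)}$ over the $mt$ pooled samples (which also delivers invertibility and conditioning once $mt\gtrsim d$): the first-order piece $\fl{F}\cdot\tfrac1t\sum_i\fl{e}_{\fl{b}}^{(i,\ell)}$ is of order $\sqrt{d/(mt)}\cdot\sqrt{\zeta k/t}\,a_{\ell}$, the source of $mt^2=\widetilde\Omega(\zeta kd)$; the term $\tfrac1t\sum_i(\widehat{\fl{G}}^{(i)}-\fl{I})\fl{e}_{\fl{b}}^{(i,\ell)}$ --- a sum of $t$ independent mean-zero entrywise summands, handled by Bernstein plus a union bound over coordinates --- has $\|\cdot\|_2$ of order $\sqrt{dk/(mt)}\,a_{\ell}$ (a fluctuation matrix spreads a $k$-sparse vector across all $d$ coordinates), so its product with $\fl{F}$ is of order $\tfrac{d\sqrt k}{mt}a_{\ell}$, the source of $mt=\widetilde\Omega(d\sqrt k)$; higher-order Neumann terms are negligible under the same bounds. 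Collecting, $\|\fl{e}_{\fl{u}}^{(\ell)}\|_{\infty}\le(\tfrac{\zeta}{t}+o(1))a_{\ell}$ and the $\|\cdot\|_2$ bound hold, re-establishing (iii), and composed with the $\fl{b}$-step this closes the induction with $a_{\ell}\le\rho\,a_{\ell-1}$.

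\textbf{Base case, wrap-up, and the main obstacle.} From $\fl{u}^{(0)}=\fl{0},\fl{b}^{(i,0)}=\fl{0}$, iteration $1$ is handled separately because $w^{\star}\fl{u}^{\star}+\fl{b}^{\star(i)}$ is dense: here $\fl{c}^{(i,1)}-\fl{b}^{\star(i)}=w^{\star}\fl{u}^{\star}+(\widehat{\fl{G}}^{(i)}-\fl{I})(w^{\star}\fl{u}^{\star}+\fl{b}^{\star(i)})$, and applying the fixed-vector concentration to the last, data-independent term, a threshold $\Delta^{(1)}\asymp\|w^{\star}\fl{u}^{\star}\|_{\infty}+\sqrt{(\log d)/m}\,(|w^{\star}|+\sqrt k\max_i\|\fl{b}^{\star(i)}\|_{\infty})$ already yields $\s{supp}(\fl{b}^{(i,1)})\subseteq\s{supp}(\fl{b}^{\star(i)})$ and $a_1\lesssim R:=\max_i\|\fl{b}^{\star(i)}\|_{\infty}+\|\fl{u}^{\star}\|_{\infty}+\|\fl{u}^{\star}\|_2/\sqrt k$ (using $m=\widetilde\Omega(k)$ to absorb the factor $\sqrt{(\log d)/m}\sqrt k$), after which one $\fl{u}$-update gives invariant (iii) at $\ell=1$. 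Running the recursion $\s{L}=\Theta(\log(R/\epsilon_0))$ further iterations drives $a_{\s{L}},\|\fl{e}_{\fl{u}}^{(\s{L})}\|_{\infty}\le\epsilon_0$ --- matching the iteration count in the statement --- and a union bound over the $\s{L}=\widetilde O(1)$ independent sample batches gives the high-probability conclusion. The genuinely delicate part is the entrywise analysis of the $\fl{u}$-update: because $m\ll d$, no individual $\widehat{\fl{G}}^{(i)}$ is anywhere near $\fl{I}$, so operator-norm bounds on per-task error vectors are unaffordable and one must push the least-squares error through the Neumann series of $(\tfrac1t\sum_i\widehat{\fl{G}}^{(i)})^{-1}$ using only $\|\cdot\|_{\infty}$ and $\|\cdot\|_2$, leaning at every turn on the averaging over $t$ tasks and on the row-sparsity $\zeta$; this is inseparable from maintaining the support-containment invariant (i) --- exactly what makes $[\fl{e}_{\fl{b}}^{(1,\ell)},\dots,\fl{e}_{\fl{b}}^{(t,\ell)}]$ only $\zeta$-row-sparse --- so the covering/Bernstein bookkeeping must carefully distinguish whether a given error vector is sparse, dense, or a $t$-average, and the threshold schedule must advance in lockstep with the error bound.
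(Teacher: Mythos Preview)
Your proposal is correct and follows essentially the same approach as the paper: the same $\fl{b}$-step decomposition with hard thresholding to maintain support containment, the same $\fl{u}$-step via a Neumann expansion of $\big(\tfrac1{mt}\sum_{i,j}\fl{x}^{(i)}_j(\fl{x}^{(i)}_j)^{\s T}\big)^{-1}$ exploiting the row-sparsity of $\fl{B}^\star$ to extract the $\zeta/t$ contraction, and the same inductive bookkeeping of both $\ell_\infty$ and $\ell_2$ errors (the paper names them $\gamma^{(\ell)},\beta^{(\ell)},\alpha^{(\ell)},\tau^{(\ell)}$ and couples them in Lemma~\ref{lemma:abgd-unknown-support-infty-bounds}, whereas you collapse them into $a_\ell$ plus invariant~(iii)). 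One small caveat: your claim that ``$t\gtrsim\zeta$ is implied by the stated bounds'' does not actually follow from $m=\widetilde\Omega(k)$, $mt=\widetilde\Omega(d\sqrt{k})$, $mt^2=\widetilde\Omega(\zeta kd)$ alone (these give only $t=\widetilde\Omega(\sqrt{\zeta d})$); the paper's appendix restatement quietly adds $\zeta=O(t)$ for exactly this reason, so the issue is shared rather than a gap in your argument.
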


\begin{rmk}
 Notice from Theorem \ref{thm:main_toy} that our AM algorithm in the sparse fine-tuning setting enjoys global convergence guarantees and does not require any initialization conditions. 
Secondly, we do not need $\fl{u}^{\star}$ to satisfy any incoherence property for convergence guarantees of Theorem \ref{thm:main_toy} (unlike Theorem \ref{thm:main}). Therefore, Theorem \ref{thm:main_toy} is interesting in itself and significantly improves on the guarantees of Theorem \ref{thm:main} directly applied to the special setting.
\end{rmk}

\subsection{Detailed Analysis and Proof of Theorem \ref{thm:main_toy}}
\begin{algorithm}[t]
\caption{\textsc{\method} (Central Model+Finetuning)}  \label{alg:Algorithm-2}
\begin{algorithmic}[1]
\REQUIRE Data $\{(\fl{x}^{(i)}_j\in \bb{R}^d,y^{(i)}_j\in \bb{R})\}_{j=1}^{m}$ for all $i\in [t]$, column sparsity $k$ of $\fl{B}$. Initial Error Bounds $\max_{i\in [t]} \|\fl{b}^{(0,\ell)}-\fl{b}^{\star (i)}\|_{2} \leq \alpha^{(0)}$,  $\max_{i\in [t]} \|\fl{b}^{(i,0)}-\fl{b}^{\star (i)}\|_{\infty} \leq \gamma^{(0)}$, $\|\fl{u}^{(0)}-\fl{u}^{\star}\|_{\infty} \leq \beta^{(0)}$ and $\|\fl{u}^{(0)}-\fl{u}^{\star}\|_{2} \leq \tau^{(0)}$.
\STATE Suitable constants $c_1,c_2,c_3>0$.
\FOR{$\ell = 1, 2, \dots$ (Until Convergence)}
\STATE    $\Delta^{(\ell)} \leftarrow \beta^{(\ell-1)}+\frac{c_1}{\sqrt{k}}\Big(\tau^{(\ell-1)} + \alpha^{(\ell-1)})$
\STATE        $\fl{c}^{(i,\ell)} \leftarrow \fl{b}^{(i,\ell-1)}-\frac{1}{m}\cdot 
        (\fl{X}^{(i)})^{\s{T}}  (\fl{X}^{(i)}(\fl{u}^{(\ell-1)}+\fl{b}^{(i,\ell-1)})-\fl{y}^{(i)})$ \label{alg:c-udpate}
\STATE        $\fl{b}^{(i,\ell)} \leftarrow \s{HT}(\fl{c}^{(i,\ell)},\Delta^{(\ell)})$ 
\STATE $\fl{u}^{(\ell)} \gets \Big(\frac{1}{mt}\sum_i \sum_j \fl{x}^{(i)}_j(\fl{x}^{(i)}_j)^{\s{T}}\Big)^{-1} \Big(\frac{1}{mt}\sum_i \sum_j \fl{x}^{(i)}_j(y^{(i)}_j-(\fl{x}^{(i)}_j)^{\s{T}}\fl{b}^{(i,\ell)})\Big)$
\STATE Set, $\gamma^{(\ell)} \leftarrow 2\beta^{(\ell-1)} + \frac{2c_1}{\sqrt{k}}\tau^{(\ell-1)} + 2c_1\gamma^{(\ell-1)}$
\STATE Set $\tau^{(\ell)} \leftarrow c_2\sqrt{k}\gamma^{(\ell)}$,
 $\beta^{(\ell)} \leftarrow c_3\gamma^{(\ell)}$ and
$\alpha^{(\ell)} \gets \sqrt{k}\gamma^{(\ell)}$
\ENDFOR
\STATE Return  $\fl{w^{(\ell)}}$, $\fl{U}^{+(\ell)}$ and $\{\fl{b}^{(i, \ell)} \}_{i\in [t]}$.
\end{algorithmic}
\end{algorithm}

\noindent In the fine-tuning model described in Section \ref{subsec:finetuning}, we consider a system comprising of $t$ tasks, each of which (indexed by $i\in [t]$) is parameterized by an unknown task-specific sparse parameter vector $\fl{b}^{\star (i)}\in \bb{R}^d$ satisfying $\|\fl{b}^{\star (i)}\|_0 \le k$ along with a dense unknown parameter vector $\fl{u}^{\star}\in \bb{R}^d$ that is shared across all tasks. Now, for each task $i\in [t]$,  we obtain samples $\{(\fl{x}^{(i)}_j,y^{(i)}_j)\}_{j=1}^{m}$ according to the following model:
\begin{align}\label{eq:samples_toy}
    &\fl{x}^{(i)}_j \sim \ca{N}(\f{0},\fl{I}_d) 
    \text{ and }
    y_j^{(i)}\mid \fl{x}^{(i)}_j = \langle \fl{x}^{(i)}_j, \fl{u}^{\star} +\fl{b}^{\star(i)} \rangle \; \text{ for all } i\in[t],j\in [m] 
\end{align}

We will assume that the model parameters $\{\fl{b}^{\star (i)}\}_{i\in [t]}$ satisfy Assumption \hyperref[assum:row_sparse]{A1}. More importantly, we do not assume \hyperref[assum:task_diversity]{A2} and furthermore, we do not assume that $\fl{u}^{\star}$ is unit-norm. Since $\fl{u}^{\star}$ is not unit norm, we can write it as $\fl{u}^{\star}= \frac{\fl{u}^{\star}}{\lr{\fl{u}^{\star}}_2} \lr{\fl{u}^{\star}}_2$. In order to map it to the statement of Theorem \ref{thm:main_toy} and the general problem statement in \ref{eq:samples}, we can immediately write $w^{\star}\leftarrow \lr{\fl{u}^{\star}}_2$ and $\fl{u}^{\star} \leftarrow \frac{\fl{u}^{\star}}{\lr{\fl{u}^{\star}}_2}$ (since $\fl{u}^{\star}$ in the statement of Theorem \ref{thm:main_toy} is unit-norm). Hence, we can simplify the notation significantly by assuming that $\fl{u}^{\star}$ is not unit norm and by subsuming the scalar $w^{\star}$ (which is same across all tasks for this special setting) with the norm of vector $\fl{u}^{\star}$. 

\paragraph{Initialization and Notations:} For $\ell=0$, we will initialize $\fl{u}^{(0)}=\f{0}$ and $\fl{b}^{(i,0)}=\f{0}$ for all tasks indexed by $i\in [t]$. 
For any $\ell \ge 0$, at the beginning of the $(\ell+1)^{\s{th}}$ iteration, we will use $\alpha^{(\ell)},\tau^{(\ell)}$ to denote known upper bounds on the $\ell_2$-norm of the approximated parameters and $\gamma^{(\ell)},\beta^{(\ell)}$ to denote known upper bounds on the $\ell_{\infty}$-norm of the approximated parameters that will hold with high probability as described below:
\begin{align}
    \max_{i\in [t]} \left|\left|\fl{b}^{(i,\ell)}-\fl{b}^{\star (i)}\right|\right|_{2} &\leq \alpha^{(\ell)} \quad \text{ and }\quad
    \max_{i\in [t]} \left|\left|\fl{b}^{(i,\ell)}-\fl{b}^{\star (i)}\right|\right|_{\infty} \leq \gamma^{(\ell)},\nonumber\\
    \lr{\fl{u}^{(\ell)}-\fl{u}^{\star}}_{\infty} &\leq \beta^{(\ell)} \quad \text{ and }\quad
    \lr{\fl{u}^{(\ell)}-\fl{u}^{\star}}_{2} \leq \tau^{(\ell)}.\nonumber
\end{align}


\begin{lemma}\label{lemma:unknown-support-bounds}
For some constant $c>0$ and for any iteration indexed by $\ell\in [\s{L}]$, we can have the following updates 
\begin{align}
    \gamma^{(\ell)} &= 2\beta^{(\ell-1)}+2c\sqrt{\frac{\log (td/\delta_0)}{m}}\Big(\tau^{(\ell-1)}+\alpha^{(\ell-1)}\Big),\nonumber\\
    \alpha^{(\ell)} &= 2\sqrt{k}\beta^{(\ell-1)}+2c\sqrt{\frac{k\log (td/\delta_0)}{m}}\Big(\tau^{(\ell-1)}+\alpha^{(\ell-1)}\Big) \nonumber\\
    &\s{support}(\fl{b}^{(i,\ell}) \subseteq \s{support}(\fl{b}^{\star(i)}).\nonumber
\end{align}
with probability $1-O(\delta_0)$.
\end{lemma}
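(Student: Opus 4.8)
Since the responses are noiseless, $\fl{y}^{(i)}=\fl{X}^{(i)}(\fl{u}^{\star}+\fl{b}^{\star(i)})$, and substituting this into the update in line~\ref{alg:c-udpate} collapses the gradient step to
\begin{align*}
\fl{c}^{(i,\ell)}-\fl{b}^{\star(i)}=(\fl{I}-\widehat{\Sigma}^{(i)})\big(\fl{b}^{(i,\ell-1)}-\fl{b}^{\star(i)}\big)-\widehat{\Sigma}^{(i)}\big(\fl{u}^{(\ell-1)}-\fl{u}^{\star}\big),\qquad \widehat{\Sigma}^{(i)}:=\tfrac1m(\fl{X}^{(i)})^{\s{T}}\fl{X}^{(i)} .
\end{align*}
So the plan has two steps: (i) an entrywise concentration bound showing $\lr{\fl{c}^{(i,\ell)}-\fl{b}^{\star(i)}}_{\infty}\le\Delta^{(\ell)}$ for the threshold $\Delta^{(\ell)}$ set by Algorithm~\ref{alg:Algorithm-2}, and then (ii) the standard hard-thresholding argument to extract support containment and the claimed $\gamma^{(\ell)},\alpha^{(\ell)}$.

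For step (i), I would fix $i\in[t]$ and a coordinate $p\in[d]$ and condition on the iterates of round $\ell-1$; by the fresh-samples rule the batch $\fl{X}^{(i)}$ used in round $\ell$ then has i.i.d.\ $\ca{N}(0,\fl{I})$ rows independent of the (now deterministic) vectors $\fl{b}^{(i,\ell-1)}-\fl{b}^{\star(i)}$ and $\fl{u}^{(\ell-1)}-\fl{u}^{\star}$. For any fixed $\fl{v}$, the $p$-th coordinate of $(\widehat{\Sigma}^{(i)}-\fl{I})\fl{v}$ is $\tfrac1m\sum_{j=1}^{m}\big(x^{(i)}_{j,p}\langle\fl{x}^{(i)}_j,\fl{v}\rangle-v_p\big)$; splitting $\langle\fl{x}_j,\fl{v}\rangle=v_px_{j,p}+\langle\fl{x}_j,\fl{v}_{-p}\rangle$ writes each summand as a centered $\chi^2$ piece of scale $|v_p|$ plus a product of two independent Gaussians of scale $\lr{\fl{v}_{-p}}_2$, hence sub-exponential with parameter $O(\lr{\fl{v}}_2)$, and Bernstein gives $\big|[(\widehat{\Sigma}^{(i)}-\fl{I})\fl{v}]_p\big|\le c\lr{\fl{v}}_2\sqrt{\log(1/\delta')/m}$ with probability $\ge 1-\delta'$ (valid since $m=\widetilde\Omega(k)\gtrsim\log(1/\delta')$). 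Applying this to $\fl{v}=\fl{b}^{(i,\ell-1)}-\fl{b}^{\star(i)}$ and to $\fl{v}=\fl{u}^{(\ell-1)}-\fl{u}^{\star}$, separating off the deterministic term in $\widehat{\Sigma}^{(i)}(\fl{u}^{(\ell-1)}-\fl{u}^{\star})=(\fl{u}^{(\ell-1)}-\fl{u}^{\star})+(\widehat{\Sigma}^{(i)}-\fl{I})(\fl{u}^{(\ell-1)}-\fl{u}^{\star})$ (its $p$-th coordinate bounded by $\lr{\fl{u}^{(\ell-1)}-\fl{u}^{\star}}_{\infty}\le\beta^{(\ell-1)}$), and union-bounding the $2td$ events with $\delta'=\Theta(\delta_0/(td))$, I get with probability $1-O(\delta_0)$ that
\begin{align*}
\max_{i\in[t]}\lr{\fl{c}^{(i,\ell)}-\fl{b}^{\star(i)}}_{\infty}\ \le\ \beta^{(\ell-1)}+c\sqrt{\tfrac{\log(td/\delta_0)}{m}}\big(\tau^{(\ell-1)}+\alpha^{(\ell-1)}\big)\ =:\ \Delta^{(\ell)},
\end{align*}
after plugging in the round-$(\ell-1)$ bounds $\lr{\fl{b}^{(i,\ell-1)}-\fl{b}^{\star(i)}}_2\le\alpha^{(\ell-1)}$ and $\lr{\fl{u}^{(\ell-1)}-\fl{u}^{\star}}_2\le\tau^{(\ell-1)}$; this matches the threshold used by the algorithm once the factor $c_1/\sqrt k$ in its threshold step is read as $c\sqrt{\log(td/\delta_0)/m}$ (the two coincide for a choice of $c_1$ that stays $O(1)$ as long as $m=\widetilde\Omega(k)$).

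For step (ii): when $p\notin\s{supp}(\fl{b}^{\star(i)})$, $|c^{(i,\ell)}_p|=|c^{(i,\ell)}_p-b^{\star(i)}_p|\le\Delta^{(\ell)}$, so $\s{HT}(\cdot,\Delta^{(\ell)})$ zeros that coordinate, giving $\s{supp}(\fl{b}^{(i,\ell)})\subseteq\s{supp}(\fl{b}^{\star(i)})$ (the third claim). When $p\in\s{supp}(\fl{b}^{\star(i)})$, either the coordinate is kept and $|b^{(i,\ell)}_p-b^{\star(i)}_p|=|c^{(i,\ell)}_p-b^{\star(i)}_p|\le\Delta^{(\ell)}$, or it is zeroed and then $|c^{(i,\ell)}_p|\le\Delta^{(\ell)}$, so $|b^{(i,\ell)}_p-b^{\star(i)}_p|=|b^{\star(i)}_p|\le|c^{(i,\ell)}_p|+|c^{(i,\ell)}_p-b^{\star(i)}_p|\le 2\Delta^{(\ell)}$; hence $\lr{\fl{b}^{(i,\ell)}-\fl{b}^{\star(i)}}_{\infty}\le 2\Delta^{(\ell)}=\gamma^{(\ell)}$. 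Finally $\fl{b}^{(i,\ell)}-\fl{b}^{\star(i)}$ is supported on $\s{supp}(\fl{b}^{\star(i)})$, a set of size at most $k$, so $\lr{\fl{b}^{(i,\ell)}-\fl{b}^{\star(i)}}_2\le\sqrt{k}\,\gamma^{(\ell)}=\alpha^{(\ell)}$, which is the second claim.

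The only genuinely delicate step is the entrywise concentration in (i): I need the probing vectors to remain independent of the current batch (ensured by resampling each round) and the quadratic form $\tfrac1m\sum_j x_{j,p}\langle\fl{x}_j,\fl{v}\rangle$, which mixes a diagonal $\chi^2$ contribution with an off-diagonal product-of-Gaussians contribution, to have a clean sub-exponential tail so that the $\sqrt{\log(td/\delta_0)/m}$ rate (rather than something worse) comes out; the union bound over $i,p$ and the substitution of the inductive bounds are then routine.
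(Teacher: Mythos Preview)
Your proposal is correct and follows essentially the same route as the paper: the paper writes the identical decomposition (moving the deterministic $\fl{u}^{(\ell-1)}-\fl{u}^{\star}$ term to the left side to get $(\fl{I}-\widehat\Sigma^{(i)})$ acting on both error vectors), invokes its Lemma~\ref{lemma:useful1} (the Hanson--Wright bound with $\fl{a}=\fl{e}_s$, which is exactly your diagonal-$\chi^2$-plus-off-diagonal-product-of-Gaussians computation) coordinatewise, and then takes the same union bound over $i\in[t],\,s\in[d]$ before running the identical hard-thresholding case analysis you give in step~(ii). The only cosmetic difference is that the paper black-boxes the entrywise concentration into Lemma~\ref{lemma:useful1} rather than sketching the sub-exponential/Bernstein argument directly.
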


\begin{proof}
Fix any $i\in [t]$. It is easy to see that update step \ref{alg:c-udpate} of Algorithm \ref{alg:Algorithm-2} gives us
\begin{align}
    &\fl{c}^{(i,\ell)}-\fl{b}^{\star (i)} = \Big(\fl{I}-\frac{1}{m} (\fl{X}^{(i)})^{\s{T}}\fl{X}^{(i)} \Big)\Big(\fl{b}^{(i,\ell-1)}-\fl{b}^{\star (i)}\Big)+\frac{1}{m}(\fl{X}^{(i)})^{\s{T}}\fl{X}^{(i)}(\fl{u}^{\star}-\fl{u}^{(\ell-1)})\nonumber\\
    &\implies \fl{c}^{(i,\ell)}-\fl{b}^{\star (i)} -\fl{u}^{\star}+\fl{u}^{(\ell-1)} = \Big(\fl{I}-\frac{1}{m} (\fl{X}^{(i)})^{\s{T}}\fl{X}^{(i)} \Big)\Big(\fl{b}^{(i,\ell-1)}-\fl{b}^{\star (i)}\Big)\nonumber\\
    \qquad &+\Big(\fl{I}-\frac{1}{m}(\fl{X}^{(i)})^{\s{T}}\fl{X}^{(i)}\Big)(\fl{u}^{(\ell-1)}-\fl{u}^{\star}). \label{eq:lemma-unknown-support-c-unplugged}
\end{align}

Let $\fl{e}_s\in \bb{R}^d$ denote the $s^{\s{th}}$ basis vector for which the $s^{\s{th}}$ coordinate entry is $1$ and all other coordinate entries are $0$. Then, note that:
\begin{align}
    &\left|\Big(\fl{c}^{(i,\ell)}-\fl{b}^{\star (i)} -\fl{u}^{\star}+\fl{u}^{(\ell-1)}\Big)_s\right| \nonumber\\
    &= \left|\fl{e}_s^{\s{\s{T}}}\Big(\fl{I}-\frac{1}{m} (\fl{X}^{(i)})^{\s{T}}\fl{X}^{(i)} \Big)\Big(\fl{b}^{(i,\ell-1)}-\fl{b}^{\star (i)}\Big)+\fl{e}_s^{\s{T}}\Big(\fl{I}-\frac{1}{m}(\fl{X}^{(i)})^{\s{T}}\fl{X}^{(i)}\Big)(\fl{u}^{(\ell-1)}-\fl{u}^{\star})\right| \nonumber\\
    &\leq \left|\fl{e}_s^{\s{T}}\Big(\fl{I}-\frac{1}{m} (\fl{X}^{(i)})^{\s{T}}\fl{X}^{(i)} \Big)\Big(\fl{b}^{(i,\ell-1)}-\fl{b}^{\star (i)}\Big)\right|+\left|\fl{e}_s^{\s{T}}\Big(\fl{I}-\frac{1}{m}(\fl{X}^{(i)})^{\s{T}}\fl{X}^{(i)}\Big)(\fl{u}^{(\ell-1)}-\fl{u}^{\star})\right| \nonumber\\
    &\leq \left|\frac{1}{m} \fl{e}_s^{\s{T}}(\fl{X}^{(i)})^{\s{T}}\fl{X}^{(i)} (\fl{b}^{(i,\ell-1)}-\fl{b}^{\star (i)}) - \fl{e}_s^{\s{T}}(\fl{b}^{(i,\ell-1)}-\fl{b}^{\star (i)})\right|\nonumber\\
    &\qquad \qquad +\left|\frac{1}{m}\fl{e}_s^{\s{T}}(\fl{X}^{(i)})^{\s{T}}\fl{X}^{(i)}(\fl{u}^{(\ell-1)}-\fl{u}^{\star}) - \fl{e}_s^{\s{T}}(\fl{u}^{(\ell-1)}-\fl{u}^{\star})\right| \nonumber\\
    &\leq c\sqrt{\frac{\log (1/\delta_0)}{m}}\Big(\tau^{(\ell-1)}+\alpha^{(\ell-1)}\Big),\nonumber
\end{align}
w.p. $\geq 1-O(\delta_0)$, where we invoke Lemma \ref{lemma:useful1} in the last step and plugging $\fl{a} = \fl{e}_s$ and $\fl{b} = \fl{b}^{(i,\ell-1)}-\fl{b}^{\star (i)}$ and $\fl{u}^{(\ell-1)}-\fl{u}^{\star}$ for the two terms respectively. Therefore, by taking a union bound over all entries $s\in [d]$, and a further union bound over all tasks ($t$ of them), we can conclude that for all $i\in [t]$, we must have
\begin{align}
    &\lr{\fl{c}^{(i,\ell)}-\fl{b}^{\star (i)} -\fl{u}^{\star}+\fl{u}^{(\ell-1)}}_{\infty} \leq c\sqrt{\frac{\log (td/\delta_0)}{m}}\Big(\tau^{(\ell-1)}+\alpha^{(\ell-1)}\Big) \nonumber\\
    &\implies \lr{\fl{c}^{(i,\ell)}-\fl{b}^{\star (i)} }_{\infty} \leq \beta^{(\ell-1)}+c\sqrt{\frac{\log (td/\delta_0)}{m}}\Big(\tau^{(\ell-1)}+\alpha^{(\ell-1)}\Big) \label{eq:lemma-unknown-support-term1}
\end{align}
w.p. $1-O(\delta_0)$. Now, we have 
\begin{align}
    \fl{b}^{(i,l)} &= \s{HT}(\fl{c}^{(i,\ell)},\Delta^{(\ell)}) \nonumber\\
    \implies b^{(i,l)}_s &= \begin{cases}
    c^{(i,\ell)}_s & \text{if } |c^{(i,\ell)}_s| > \Delta^{(\ell)},\\
    0 & \text{ otherwise},
  \end{cases}\label{eq:lemma-unknown-support-term-temp} \\
  \implies |b^{(i,l)}_s - b^{\star(i)}_s | &= \begin{cases}
    |c^{(i,\ell)}_s - b^{\star(i)}_s| & \text{if } |c^{(i,\ell)}_s| > \Delta^{(\ell)},\\
    |b^{\star(i)}_s| & \text{ otherwise}.
  \end{cases}\label{eq:lemma-unknown-support-term2}
\end{align}
Therefore if we set $\Delta^{(\ell)} = \beta^{(\ell-1)}+c\sqrt{\frac{\log (td/\delta_0)}{m}}\Big(\tau^{(\ell-1)}+\alpha^{(\ell-1)}\Big)$ (as described in Step 2 of the algorithm), then, by using \eqref{eq:lemma-unknown-support-term1} and \eqref{eq:lemma-unknown-support-term2}, we have
$\lr{\fl{b}^{(i,\ell)}-\fl{b}^{\star (i)} }_{\infty}\le 2\Delta^{(\ell)}$ and therefore,  
\begin{align}
    \implies \lr{\fl{b}^{(i,\ell)}-\fl{b}^{\star (i)} }_{\infty} &\leq 2\beta^{(\ell-1)}+2c\sqrt{\frac{\log (td/\delta_0)}{m}}\Big(\tau^{(\ell-1)}+\alpha^{(\ell-1)}\Big) = \gamma^{(\ell)}\label{eq:lemma-unknown-support-bound1}\\
    \text{and } \lr{\fl{b}^{(i,\ell)}-\fl{b}^{\star (i)} }_{2} &\leq 2\sqrt{k}\beta^{(\ell-1)}+2c\sqrt{\frac{k\log (td/\delta_0)}{m}}\Big(\tau^{(\ell-1)}+\alpha^{(\ell-1)}\Big) = \alpha^{(\ell)},\label{eq:lemma-unknown-support-bound2}
\end{align}
with probability $1-O(\delta_0)$. Furthermore, from equation \eqref{eq:lemma-unknown-support-term1} we have for any coordinate $s$
\begin{align}
    \Big|\Big(\fl{c}^{(i,\ell)}-\fl{b}^{\star (i)}\Big)_s\Big| &\leq \Delta^{(\ell)}.\nonumber
\end{align}
Thus, if $s \notin	 \s{support}(\fl{b}^{\star(i)})$, then the above gives $|\fl{c}^{(i,\ell)}| \leq \Delta^{(\ell)}$. Using this in \eqref{eq:lemma-unknown-support-term-temp} gives $b^{(i,l)}_s = 0$. Hence, for all $s\in [d]$, we must have $ s \notin	 \s{support}(\fl{b}^{\star(i)}) \implies s \notin	 \s{support}(\fl{b}^{\star(i, \ell)})$ implying that $\s{support}(\fl{b}^{(i,\ell}) \subseteq \s{support}(\fl{b}^{\star(i)})$. Hence, the proof of the lemma is complete.
\end{proof}

\begin{lemma}
For some constant $c>0$ and for any iteration indexed by $\ell>0$, we  have 
\begin{align}
    \tau^{(\ell)} &= \frac{\sqrt{\frac{2\zeta k}{t}}\gamma^{(\ell)}+4\alpha^{(\ell)} \sqrt{\frac{d\log (d/\delta_0)}{mt}}}{1-c\sqrt{\frac{d\log (1/\delta_0)}{mt}}}\nonumber
\end{align}
with probability $1-O(\delta_0)$.
\end{lemma}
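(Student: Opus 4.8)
The plan is to start from the closed-form least-squares update for $\fl{u}^{(\ell)}$ (Step 6 of Algorithm~\ref{alg:Algorithm-2}) and isolate the estimation error. Write $\fl{M}:=\tfrac{1}{mt}\sum_{i\in[t]}\sum_{j\in[m]}\fl{x}^{(i)}_j(\fl{x}^{(i)}_j)^{\s{T}}$ for the pooled empirical covariance and set $\fl{d}^{(i,\ell)}:=\fl{b}^{\star(i)}-\fl{b}^{(i,\ell)}$. Substituting the noiseless responses $y^{(i)}_j=(\fl{x}^{(i)}_j)^{\s{T}}(\fl{u}^{\star}+\fl{b}^{\star(i)})$ and using $\tfrac{1}{mt}\sum_{i,j}\fl{x}^{(i)}_j(\fl{x}^{(i)}_j)^{\s{T}}\fl{u}^{\star}=\fl{M}\fl{u}^{\star}$, a one-line manipulation yields $\fl{u}^{(\ell)}-\fl{u}^{\star}=\fl{M}^{-1}\big(\tfrac{1}{mt}\sum_{i,j}\fl{x}^{(i)}_j(\fl{x}^{(i)}_j)^{\s{T}}\fl{d}^{(i,\ell)}\big)$. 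From Lemma~\ref{lemma:unknown-support-bounds} we may take $\lr{\fl{d}^{(i,\ell)}}_{\infty}\le\gamma^{(\ell)}$, $\lr{\fl{d}^{(i,\ell)}}_{2}\le\alpha^{(\ell)}$ and $\s{support}(\fl{d}^{(i,\ell)})\subseteq\s{support}(\fl{b}^{\star(i)})$; combined with Assumption~\hyperref[assum:row_sparse]{A1}, the matrix $\fl{D}^{(\ell)}:=[\fl{d}^{(1,\ell)}\ \cdots\ \fl{d}^{(t,\ell)}]\in\bb{R}^{d\times t}$ then has $k$-sparse columns and $\zeta$-sparse rows.

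Next I would split $\tfrac{1}{mt}\sum_{i,j}\fl{x}^{(i)}_j(\fl{x}^{(i)}_j)^{\s{T}}\fl{d}^{(i,\ell)}=\tfrac1t\sum_i\fl{d}^{(i,\ell)}+\tfrac1t\sum_i\fl{E}^{(i)}\fl{d}^{(i,\ell)}$ with $\fl{E}^{(i)}:=\tfrac1m\sum_j\fl{x}^{(i)}_j(\fl{x}^{(i)}_j)^{\s{T}}-\fl{I}$, and bound the three pieces on a single event obtained by a union bound at the end. (i) Since $\fl{M}$ is the empirical covariance of $mt$ i.i.d.\ $\ca{N}(\f{0},\fl{I}_d)$ vectors, standard concentration gives $\lr{\fl{M}-\fl{I}}_2\le c\sqrt{d\log(1/\delta_0)/(mt)}$, hence $\lr{\fl{M}^{-1}}_2\le\big(1-c\sqrt{d\log(1/\delta_0)/(mt)}\big)^{-1}$, the denominator in the claim. (ii) The term $\tfrac1t\sum_i\fl{d}^{(i,\ell)}=\tfrac1t\fl{D}^{(\ell)}\fl{1}$ is deterministic given the earlier iterates; Cauchy--Schwarz applied within each row (at most $\zeta$ nonzeros) gives $\lr{\tfrac1t\fl{D}^{(\ell)}\fl{1}}_2^2\le\tfrac{\zeta}{t^2}\lr{\fl{D}^{(\ell)}}_{\s{F}}^2\le\tfrac{\zeta}{t^2}\cdot t\cdot k(\gamma^{(\ell)})^2$, so its norm is at most $\sqrt{\zeta k/t}\,\gamma^{(\ell)}$. (iii) For the fluctuation term, the fresh-sample device makes $\fl{d}^{(i,\ell)}$ independent of the batch $\fl{X}^{(i)}$ used in this update, and the $\fl{E}^{(i)}$ are independent across $i$; thus for each fixed coordinate $s$, $\tfrac1t\sum_i\fl{e}_s^{\s{T}}\fl{E}^{(i)}\fl{d}^{(i,\ell)}$ is an average of $t$ independent mean-zero sub-exponential variables with scales $\lesssim\lr{\fl{d}^{(i,\ell)}}_2/\sqrt m\le\alpha^{(\ell)}/\sqrt m$ (a Hanson--Wright-type bilinear-form estimate, in the spirit of Lemma~\ref{lemma:useful1}), and Bernstein plus a union bound over $s\in[d]$ gives $\lr{\tfrac1t\sum_i\fl{E}^{(i)}\fl{d}^{(i,\ell)}}_2\lesssim\alpha^{(\ell)}\sqrt{d\log(d/\delta_0)/(mt)}$. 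Plugging (i)--(iii) into $\lr{\fl{u}^{(\ell)}-\fl{u}^{\star}}_2\le\lr{\fl{M}^{-1}}_2\big(\lr{\tfrac1t\sum_i\fl{d}^{(i,\ell)}}_2+\lr{\tfrac1t\sum_i\fl{E}^{(i)}\fl{d}^{(i,\ell)}}_2\big)$ reproduces the stated formula for $\tau^{(\ell)}$, with the explicit constants $2$ and $4$ absorbing absolute-constant slack; note that this step needs only $mt=\widetilde{\Omega}(d)$.

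The main obstacle is piece (iii): one must carefully exploit that $\fl{d}^{(i,\ell)}$ is measurable with respect to earlier iterations' samples (and the previous sub-batch) and hence independent of the batch driving the current $\fl{u}$-update, identify the correct per-task sub-exponential scale of the bilinear form $\fl{e}_s^{\s{T}}\big(\tfrac1m(\fl{X}^{(i)})^{\s{T}}\fl{X}^{(i)}-\fl{I}\big)\fl{d}^{(i,\ell)}$, and apply a Bernstein inequality for a sum of independent sub-exponentials with heterogeneous scales $\{\lr{\fl{d}^{(i,\ell)}}_2/\sqrt m\}_{i\in[t]}$ to benefit from the averaging over the $t$ tasks. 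The $\sqrt d$ loss incurred by the coordinatewise union bound is harmless under the stated sample-complexity assumptions, but could be avoided with an $\epsilon$-net over the unit sphere if a tighter constant were desired; the remaining ingredients --- covariance concentration for $\fl{M}$ and the purely combinatorial bound on $\tfrac1t\sum_i\fl{d}^{(i,\ell)}$ --- are routine.
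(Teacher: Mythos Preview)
Your proposal is correct and follows essentially the same approach as the paper. Both arguments write $\fl{u}^{(\ell)}-\fl{u}^{\star}=\fl{M}^{-1}\fl{v}$, control $\lr{\fl{M}^{-1}}_2$ via covariance concentration to produce the denominator, bound the ``mean'' part $\tfrac{1}{t}\sum_i\fl{d}^{(i,\ell)}$ using the row sparsity $\zeta$ of $\fl{B}^{\star}$ (the paper does this coordinatewise and then sums, you use Cauchy--Schwarz within rows; both yield $\sqrt{\zeta k/t}\,\gamma^{(\ell)}$ after invoking $\alpha^{(\ell)}\le\sqrt{k}\gamma^{(\ell)}$), and bound the fluctuation $\fl{v}-\bb{E}\fl{v}$ via a coordinatewise sub-exponential concentration over the $mt$ Gaussian samples followed by an $\ell_2$ aggregation over the $d$ coordinates. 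Your explicit mean/fluctuation split and the paper's direct per-coordinate analysis of $v_h$ against $\bb{E}v_h$ are the same computation organized slightly differently; the two-stage Bernstein you sketch in (iii) is equivalent to the paper's single Bernstein over all $mt$ summands.
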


\begin{proof}

Update step 3 of the Algorithm for the $\ell^{\s{th}}$ iteration gives us
\begin{align}
\fl{u}^{(\ell)} &= \Big(\frac{1}{mt}\sum_i \sum_j \fl{x}^{(i)}_j(\fl{x}^{(i)}_j)^{\s{T}}\Big)^{-1} \Big(\frac{1}{mt}\sum_i \sum_j \fl{x}^{(i)}_j(\fl{x}^{(i)}_j)^{\s{T}}(\fl{u}^{\star}+\fl{b}^{\star (i)}-\fl{b}^{(i,\ell)})\Big) \nonumber\\
\implies \fl{u}^{(\ell)}-\fl{u}^{\star} &= \Big(\underbrace{\frac{1}{mt}\sum_i \sum_j \fl{x}^{(i)}_j(\fl{x}^{(i)}_j)^{\s{T}}}_{\fl{A}}\Big)^{-1} \Big(\underbrace{\frac{1}{mt}\sum_i \sum_j \fl{x}^{(i)}_j(\fl{x}^{(i)}_j)^{\s{T}}(\fl{b}^{\star (i)}-\fl{b}^{(i,\ell)})}_{\fl{v}}\Big).\nonumber
\end{align}
Let us denote the vector $\fl{b}^{\star (i)}-\fl{b}^{(i,\ell)}$ by $\fl{z}^{(i, \ell)}$ for simplicity. Notice that for any $h\in [d]$, we have
\begin{align}
 v_h &= \Big(\frac{1}{mt}\sum_i \sum_j \fl{x}^{(i)}_j(\fl{x}^{(i)}_j)^{\s{T}}\fl{z}^{(i, \ell)}\Big)_h\nonumber\\
 &= \frac{1}{mt}\sum_i\sum_j \Big(\Big(x^{(i)}_{j,h}\Big)^{2} z^{(i, \ell)}_h +\sum_{u:u\neq h}x^{(i)}_{j,h}x^{(i)}_{j,u}z^{(i, \ell)}_u\Big). \label{eq:lemma-unknown-support-unplugged}
\end{align}
Now, note that the random variable $\Big(x^{(i)}_{j,h}\Big)^{2} z^{(i, \ell)}_h$ is a $\Big(4(z^{(i, \ell)}_h)^2, 4|z^{(i, \ell)}_{h}|\Big)$ sub-exponential random variable. Similarly, $x^{(i)}_{j,h}x^{(i)}_{j,u}z^{(i, \ell)}_u$ is a $\Big(2(z^{(i, \ell)}_{u})^2, \sqrt{2} |z^{(i, \ell)}_{u}| \Big)$ sub-exponential random variable. Therefore, we must have  
\begin{align}
    &\Big(x^{(i)}_{j,h}\Big)^{2} z^{(i, \ell)}_h +\sum_{u:u\neq h}x^{(i)}_{j,h}x^{(i)}_{j,u}z^{(i, \ell)}_u \nonumber\\
    &= \Big( 4(z^{(i, \ell)}_h)^2 + 2\sum_{u:u\neq h} (z^{(i, \ell)}_{u})^2, \max\Big(4|z^{(i, \ell)}_{h}|, \max_{u:u\neq h}(\sqrt{2} |z^{(i, \ell)}_{u}|)\Big) \Big)\nonumber\\
    &= \Big( 4 \|\fl{z}^{(i, \ell)}\|^2_2, 4\|\fl{z}^{(i, \ell)}\|_{\infty}\Big) \text{ sub-exponential random variable.}\label{eq:lemma-unknown-support-1}
\end{align}
Furthermore, 
\begin{align}
    \E{v_h } &= \frac{1}{mt}\sum_i\sum_j \Big(\E{\Big(x^{(i)}_{j,h}\Big)^{2} z^{(i, \ell)}_h } + \E{\sum_{u:u\neq h}x^{(i)}_{j,h}x^{(i)}_{j,u}z^{(i, \ell)}_u }\Big)\nonumber\\
    &= \frac{1}{mt}\sum_i\sum_j \Big(z^{(i, \ell)}_h + 0\Big)\nonumber\\
    &= \frac{1}{t}\sum_iz^{(i, \ell)}_h. \label{eq:lemma-unknown-support-2}
\end{align}
Using \eqref{eq:lemma-unknown-support-1}, \eqref{eq:lemma-unknown-support-2} and Lemma~\ref{tail:sub_exp} in \eqref{eq:lemma-unknown-support-unplugged} implies that
\begin{align}
    \left|v_h - \frac{1}{t}\sum_i z^{(i, \ell)}_h\right| &\leq \max\Big(2 \|\fl{z}^{(i, \ell)}\|_2\sqrt{\frac{2\log (1/\delta_0)}{mt}}, 2 \|\fl{z}^{(i, \ell)}\|_{\infty}\frac{2\log (1/\delta_0)}{mt}\Big)\nonumber \\
    &\leq \underbrace{\max\Big(2 \alpha^{(\ell)} \sqrt{\frac{2\log (1/\delta_0)}{mt}}, 2 \gamma^{(\ell)} \frac{2\log (1/\delta_0)}{mt}\Big)}_{\epsilon_h}.\nonumber
\end{align}
will be true with probability at least $1-\delta_0$. On taking a union bound over all $h\in [d]$, we will have that
\begin{align}
     \left|v_h - \frac{1}{t}\sum_i z^{(i, \ell)}_h\right| \leq \underbrace{\max\Big(2 \alpha^{(\ell)} \sqrt{\frac{2\log (d/\delta_0)}{mt}}, 2 \gamma^{(\ell)} \frac{2\log (d/\delta_0)}{mt}\Big)}_{\epsilon_h}.\label{eq:lemma-unknown-support-plugged}
\end{align}
with probability $1-O(\delta_0)$. Note that $\|\fl{v}\|_2^2 = \sum_h v_h^2$. Hence, we have 
\begin{align}
    \sum_{h} v_h^2 &\leq \sum_{h} \Big(2\Big(\frac{1}{t}\sum_i z^{(i, \ell)}_h\Big)^2+2\epsilon_h^2\Big) \nonumber\\
    &\leq 2\zeta\sum_{h} \sum_i (\frac{z^{(i, \ell)}_h}{t})^2+2\sum_{h} \epsilon_h^2\nonumber\\
    &\leq 2\zeta\sum_{i} \sum_h \Big(\frac{z^{(i, \ell)}_h}{t}\Big)^2+2\sum_{h} \epsilon_h^2\nonumber\\
    &\leq \frac{2\zeta }{t}(\alpha^{(\ell)})^2+8(\alpha^{(\ell)})^2 \frac{2d\log (d/\delta_0)}{mt}\nonumber,
\end{align}
where we use that $2 \alpha^{(\ell)} \sqrt{\frac{2\log (d/\delta_0)}{mt}} > 2 \gamma^{(\ell)} \frac{2\log (d/\delta_0)}{mt}$.
Hence, with probability at least $1-O(\delta_0)$, we must have by using that $\alpha^{(\ell)} \le \sqrt{k}\gamma^{(\ell)}$
\begin{align}
    \|\fl{v}\|_2 \leq \sqrt{\frac{2\zeta k}{t}}\gamma^{(\ell)}+4\alpha^{(\ell)} \sqrt{\frac{d\log (d/\delta_0)}{mt}}\label{eq:lemma-unknown-support-num-val}
\end{align}
Furthermore, from Lemma~\ref{lemma:useful2}, we have with probability $1-\delta_0$ for any iterations $\ell\in [\s{L}]$
\begin{align}
    \|\frac{1}{mt}\sum_i \sum_j \fl{x}^{(i)}_j(\fl{x}^{(i)}_j)^{\s{T}} - \fl{I}\|_2 \leq c \sqrt{\frac{d\log (1/\delta_0)}{mt}}.\label{eq:lemma-unknown-support-denom-val}
\end{align}
implying that the minimum eigenvalue of the matrix $ \frac{1}{mt}\sum_i \sum_j \fl{x}^{(i)}_j(\fl{x}^{(i)}_j)^{\s{T}} - \fl{I}$ is at least $1-c \sqrt{\frac{d\log (1/\delta_0)}{mt}}$; hence the maximum eigenvalue of the matrix $ (\frac{1}{mt}\sum_i \sum_j \fl{x}^{(i)}_j(\fl{x}^{(i)}_j)^{\s{T}} - \fl{I})^{-1}$ is at most $(1-c \sqrt{\frac{d\log (1/\delta_0)}{mt}})^{-1}$. 
Using \eqref{eq:lemma-unknown-support-num-val} and \eqref{eq:lemma-unknown-support-denom-val}, we get for any iterations $\ell \in [\s{L}]$ with probability $1-O(\delta_0)$,
\begin{align}
    \|\fl{u}^{(\ell)}-\fl{u}^{\star}\|_2 \leq \frac{\sqrt{\frac{2\zeta k}{t}}\gamma^{(\ell)}+4\alpha^{(\ell)} \sqrt{\frac{d\log (d/\delta_0)}{mt}}}{1-c\sqrt{\frac{d\log (1/\delta_0)}{mt}}} \triangleq \tau^{(\ell)}.\label{eq:lemma-unknown-support-bound3}
\end{align}
\end{proof}

\begin{lemma}\label{lemma:u_infinity-norm-bound}
For some constant $c>0$ and for any iteration indexed by $\ell>0$, we have
\begin{align}
    \beta^{(\ell)} &= \Big(\frac{\zeta}{t} + 2c\sqrt{\frac{\log (d/\delta_0)}{mt}}\sqrt{\frac{2\zeta k}{t}}\Big)\gamma^{(\ell)} + \Big(c\sqrt{\frac{\log (d/\delta_0)}{mt}} + 8c\sqrt{d}\frac{\log (d/\delta_0)}{mt}\Big
  ) \alpha^{(\ell)}\nonumber
\end{align}
with probability at least $1-O(\delta_0)$.
\end{lemma}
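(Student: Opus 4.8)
The plan is to start from the identity already derived in the proof of the preceding lemma, $\fl{u}^{(\ell)}-\fl{u}^{\star}=\fl{A}^{-1}\fl{v}$, where $\fl{A}=\frac{1}{mt}\sum_{i\in[t]}\sum_{j\in[m]}\fl{x}^{(i)}_j(\fl{x}^{(i)}_j)^{\s{T}}$ and $\fl{v}=\frac{1}{mt}\sum_{i\in[t]}\sum_{j\in[m]}\fl{x}^{(i)}_j(\fl{x}^{(i)}_j)^{\s{T}}\fl{z}^{(i,\ell)}$ with $\fl{z}^{(i,\ell)}:=\fl{b}^{\star(i)}-\fl{b}^{(i,\ell)}$, and to split the error into a population-signal part, a coordinate-wise fluctuation part, and a preconditioner-error part:
\[
\fl{u}^{(\ell)}-\fl{u}^{\star}=\bar{\fl{z}}^{(\ell)}\;+\;\bigl(\fl{v}-\bar{\fl{z}}^{(\ell)}\bigr)\;+\;\bigl(\fl{A}^{-1}-\fl{I}\bigr)\fl{v},\qquad \bar{\fl{z}}^{(\ell)}:=\tfrac1t\textstyle\sum_{i\in[t]}\fl{z}^{(i,\ell)},
\]
and then bound each of the three pieces in $\lr{\cdot}_{\infty}$ and collect the resulting terms.

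For the signal part I would use Lemma~\ref{lemma:unknown-support-bounds}: since $\s{support}(\fl{b}^{(i,\ell)})\subseteq\s{support}(\fl{b}^{\star(i)})$, for each coordinate $h\in[d]$ the scalars $\{z^{(i,\ell)}_h\}_{i\in[t]}$ vanish except for tasks $i$ with $h\in\s{support}(\fl{b}^{\star(i)})$, and by Assumption~\hyperref[assum:row_sparse]{A1} there are at most $\zeta$ of those; hence $|\bar z^{(\ell)}_h|\le\frac{\zeta}{t}\max_{i}|z^{(i,\ell)}_h|\le\frac{\zeta}{t}\gamma^{(\ell)}$, giving $\lr{\bar{\fl{z}}^{(\ell)}}_{\infty}\le\frac{\zeta}{t}\gamma^{(\ell)}$, which is the first term of the claimed bound. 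For the fluctuation part I would reuse the entrywise estimate obtained inside the proof of the preceding lemma, namely $|v_h-\bar z^{(\ell)}_h|\le\epsilon_h$ with $\epsilon_h=\max\bigl(2\alpha^{(\ell)}\sqrt{2\log(d/\delta_0)/(mt)},\,4\gamma^{(\ell)}\log(d/\delta_0)/(mt)\bigr)$; since $mt=\widetilde{\Omega}(1)$ makes the first branch dominant, this yields $\lr{\fl{v}-\bar{\fl{z}}^{(\ell)}}_{\infty}\le c\sqrt{\log(d/\delta_0)/(mt)}\,\alpha^{(\ell)}$, matching the $c\sqrt{\log(d/\delta_0)/(mt)}\,\alpha^{(\ell)}$ piece.

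For the preconditioner-error part I would first record that $\lr{\fl{A}-\fl{I}}_2\le c\sqrt{d\log(1/\delta_0)/(mt)}$ by Lemma~\ref{lemma:useful2}, so that $\fl{A}$ is invertible and $\lr{\fl{A}^{-1}-\fl{I}}_2=O\bigl(\sqrt{d\log(1/\delta_0)/(mt)}\bigr)$ (bounding the eigenvalues of $\fl{A}^{-1}$ exactly as in the proof of the preceding lemma), and combine this with the $\ell_2$ bound $\lr{\fl{v}}_2\le\sqrt{2\zeta k/t}\,\gamma^{(\ell)}+4\alpha^{(\ell)}\sqrt{d\log(d/\delta_0)/(mt)}$ also established there. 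A black-box estimate $\lr{(\fl{A}^{-1}-\fl{I})\fl{v}}_{\infty}\le\lr{\fl{A}^{-1}-\fl{I}}_2\lr{\fl{v}}_2$ already produces the remaining two terms up to an extra $\sqrt d$ factor; to obtain the sharper coefficients $2c\sqrt{\log(d/\delta_0)/(mt)}\sqrt{2\zeta k/t}\,\gamma^{(\ell)}$ and $8c\sqrt d\,\log(d/\delta_0)/(mt)\,\alpha^{(\ell)}$ stated in the lemma, I would expand $\bigl((\fl{A}^{-1}-\fl{I})\fl{v}\bigr)_h$ entrywise, split $\fl{v}=\bar{\fl{z}}^{(\ell)}+(\fl{v}-\bar{\fl{z}}^{(\ell)})$ once more, and apply the sub-exponential concentration of Lemma~\ref{lemma:useful1} (and Lemma~\ref{tail:sub_exp}) to the resulting inner products, using that the sample batch feeding the $\fl{u}$-update is drawn fresh and is therefore independent of $\{\fl{z}^{(i,\ell)}\}_i$. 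Summing the three contributions and relabelling the absolute constant gives exactly the stated expression for $\beta^{(\ell)}$, and the $O(\delta_0)$ failure probability follows from a union bound over the $d$ coordinates together with the finitely many high-probability events invoked.

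The step I expect to be the main obstacle is precisely this last one: getting the $\sqrt d$ rather than the naive $d$ factor in the $\alpha^{(\ell)}$ coefficient forces a coordinate-level treatment of $(\fl{A}^{-1}-\fl{I})\fl{v}$ instead of a crude operator-norm bound, and one must carefully track which branch of each $\max(\cdot,\cdot)$ in the sub-exponential tails is active — which is exactly where the sample-complexity hypotheses $m=\widetilde{\Omega}(k)$, $mt=\widetilde{\Omega}(d\sqrt k)$ and $mt^2=\widetilde{\Omega}(\zeta k d)$ of Theorem~\ref{thm:main_toy} get used.
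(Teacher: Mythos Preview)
Your three-way decomposition and your treatment of the first two pieces match the paper exactly: it obtains $\lr{\fl{v}}_\infty\le\frac{\zeta}{t}\gamma^{(\ell)}+c\alpha^{(\ell)}\sqrt{\log(d/\delta_0)/(mt)}$ via the same row-sparsity and Hanson--Wright arguments you sketch. For the preconditioner piece the paper expands $\fl{A}^{-1}=\sum_{p\ge0}(-\fl{E})^p$ with $\fl{E}=\fl{A}-\fl{I}$ and then uses an $\epsilon$-net over the unit sphere of $\bb{R}^d$ to prove a \emph{uniform} bound $\lr{\fl{E}\fl{z}}_\infty\le c\sqrt{d\log(d/\delta_0)/(mt)}$ for all unit $\fl{z}$ (this is what handles the dependence of $\fl{E}^{p-1}\fl{v}$ on $\fl{E}$). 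But because the net has size $9^d$, the resulting estimate $\sum_{p\ge1}\lr{\fl{E}^p\fl{v}}_\infty=O\bigl(\sqrt{d\log(d/\delta_0)/(mt)}\bigr)\lr{\fl{v}}_2$ coincides with your black-box bound $\lr{\fl{A}^{-1}-\fl{I}}_2\lr{\fl{v}}_2$; the more elaborate argument buys no improvement here.

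Your worry about the ``main obstacle'' is therefore moot, and in fact the lemma statement as printed contains typos. The paper's own derivation (\eqref{eq:corrected-lemma-unknown-support-bound4}) and the downstream constant $c_3$ in Lemma~\ref{lemma:abgd-unknown-support-infty-bounds} both carry $2c\sqrt{d\log(d/\delta_0)/(mt)}\sqrt{2\zeta k/t}$ on the $\gamma^{(\ell)}$-term and $8c\,d\log(d/\delta_0)/(mt)$ (not $\sqrt d$) on the second $\alpha^{(\ell)}$-term --- exactly what your black-box estimate delivers. So no sharpening is needed, and your simpler argument already proves what the paper actually establishes and uses. Separately, the sharpening mechanism you sketch would not work as written: $\fl{A}$ and $\fl{v}$ are built from the \emph{same} batch of covariates, so independence of $\{\fl{z}^{(i,\ell)}\}$ from that batch does not decouple $(\fl{A}^{-1}-\fl{I})$ from $\fl{v}$, and $\fl{A}^{-1}$ is a nonlinear function of the data to which coordinate-wise Hanson--Wright does not directly apply; the paper's $\epsilon$-net is precisely the device that circumvents this coupling, and it does not remove the $\sqrt d$.
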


\begin{proof}
With probability at least $1-O(\delta_0)$, we have that $\lr{\fl{E}}_2 \le \sqrt{\frac{d\log 9}{mt}}$. We fix $mt=\Omega(d)$ so that $\lr{\fl{E}}_2 < 1$. 
Our goal is to bound the quantity $\|\fl{u}^{(\ell)}-\fl{u}^{\star}\|_{\infty}$ from above.
Denoting $\fl{A}=\fl{I}+\fl{E}$ and using the fact that $(\fl{I}+\fl{E})^{-1}= \fl{I}-\fl{E}+\fl{E}^{2}+\dots$ (since $\lr{\fl{E}}_2 < 1$), by using Lemma~\ref{lemma:useful0} and taking a union bound over all entries $s\in [d]$, we have with probability at least $1-\delta_0$,
\begin{align}
    &\lr{\fl{v}-\frac{1}{t}\sum_{i}(\fl{b}^{\star (i)}-\fl{b}^{(i,\ell)})}_{\infty} \nonumber\\
    & = \max_s \left|\frac{1}{mt}\sum_i \sum_j \fl{e}_s^{\s{T}}\fl{x}^{(i)}_j(\fl{x}^{(i)}_j)^{\s{T}}(\fl{b}^{\star (i)}-\fl{b}^{(i,\ell)})-\frac{1}{t}\sum_{i}\fl{e}_s^{\s{T}}(\fl{b}^{\star (i)}-\fl{b}^{(i,\ell)})\right| \nonumber\\
    & = \max_s \left|\frac{1}{mt}\sum_i \sum_j (\fl{x}^{(i)}_j)^{\s{T}}(\fl{b}^{\star (i)}-\fl{b}^{(i,\ell)})\fl{e}_s^{\s{T}}\fl{x}^{(i)}_j-\frac{1}{t}\sum_{i}\fl{e}_s^{\s{T}}(\fl{b}^{\star (i)}-\fl{b}^{(i,\ell)})\right| \nonumber\\
    &\le c\sqrt{\sum_i\sum_j \|(\fl{b}^{\star (i)}-\fl{b}^{(i,\ell)})\fl{e}_s^{\s{T}}\|_{\s{F}}^2 \frac{\log (d/\delta_0)}{m^2t^2}} \nonumber\\
    &\le c\alpha^{(\ell)}\sqrt{\frac{\log (d/\delta_0)}{mt}}.\nonumber
\end{align}
Hence with probability at least $1-\delta_0$, we will have the following statement
\begin{align}
    \lr{\fl{v}}_{\infty} \leq c\alpha^{(\ell)}\sqrt{\frac{\log (d/\delta_0)}{mt}}+\frac{\zeta}{t}\gamma^{(\ell)} \label{eq:lemma-unknown-support-vinfty-val}.
\end{align}
Since $\fl{u}^{(\ell)}-\fl{u}^{\star} = (\fl{I}+\fl{E})^{-1}\fl{v}$ with $\fl{E}=\frac{1}{mt}\sum_i \sum_j \fl{x}^{(i)}_j(\fl{x}^{(i)}_j)^{\s{T}}-\fl{I}$, we will have 
\begin{align}
    \lr{\fl{u}^{(\ell)}-\fl{u}^{\star}}_{\infty} \leq \sum_{j=0}^{\infty} \lr{\fl{E}^j\fl{v}}_{\infty}.\label{eq:lemma-unknown-support-unplugged1}
\end{align}
Let $\fl{\cV} \triangleq \{\fl{z} \in \bR^d | \|\fl{z}\| = 1\}$. Then for $\epsilon \leq 1$, there exists an $\epsilon$-net, $N_\epsilon \subset \fl{\cZ}$, of size $(1 + 2/\epsilon)^
d$ w.r.t the Euclidean norm, i.e. $\forall$  $\fl{z} \in \fl{\cZ}$, $\exists$ $\fl{z}' \in N_\epsilon$ s.t. $\|\fl{z} - \fl{z}'\|_2 \leq \epsilon$.
Now consider any $\fl{z} \in N_\epsilon$. Then, Lemma~\ref{lemma:useful1} with $\fl{a} = \fl{e}_s$ and $\fl{b} = \fl{z}$ and taking a union bound over all entries $s\in [d]$ gives
\begin{align}
    &\left|\fl{e}^{\s{T}}_s\Big(\frac{1}{mt}\sum_i \sum_j \fl{x}^{(i)}_j(\fl{x}^{(i)}_j)^{\s{T}}-\fl{I}\Big)\Big)\fl{z}\right| \leq c\|\fl{z}\|^2_2\max\Big(\sqrt{\frac{\log (1/\delta_0)}{mt}},\frac{\log (1/\delta_0)}{mt}\Big)\nonumber\\
    \implies  &\|\fl{E}\fl{z}\|_\infty \leq c\max\Big(\sqrt{\frac{\log (d|N_\epsilon|/\delta_0)}{mt}},\frac{\log (d|N_\epsilon|/\delta_0)}{mt}\Big)\nonumber\\
     &\leq c\max\Big(\sqrt{\frac{\log(d(1+2/\epsilon)^d/\delta_0)}{mt}},\frac{\log (d(1+2/\epsilon)^d/\delta_0)}{mt}\Big), \quad \forall \fl{v} \in N_\epsilon
\end{align}
Further, $\exists$ $\fl{z} \in N_\epsilon$ s.t. $\|\fl{z}' - \fl{z}\|_2 \leq \epsilon$. This implies that setting $\epsilon \gets 1/4$ and $c \gets 2c$ gives:
\begin{align}
    \|\fl{E}\fl{z}'\|_\infty &\leq \|\fl{E}(\fl{z}-\fl{z}')\|_\infty + \|\fl{E}\fl{z}\|_\infty\nonumber\\
    &\leq \|\fl{E}(\fl{z}-\fl{z}')\|_2 + \|\fl{E}\fl{z}\|_\infty\nonumber\\
    &\leq c\sqrt{\frac{d\log(d/\delta_0)}{mt}}.
\end{align}
with probability at least $1-\delta_0$. Hence, with probability at least $1-O(\delta_0)$, we have that $\lr{\fl{E}}_2 \le \sqrt{\frac{d\log 9}{mt}}$ and $\lr{\fl{E}\fl{z}}_{\infty} \le c\sqrt{\frac{d\log (d\delta_0^{-1})}{mt}}$ for all $\fl{z}\in \ca{V}$. Therefore, let us conditioned on these events in order to prove the next steps. We will show an upper bound on $\|\fl{A}^{-1}\fl{v}\|_\infty$.
\begin{align}
    \|\fl{A}^{-1}\fl{v}\|_\infty &= \|(\fl{I}+\fl{E})^{-1}\fl{v}\|_\infty\nonumber\\
    &\leq \sum_{j=0}^{\infty} \lr{\fl{E}^j\fl{v}}_{\infty}.
\end{align}
We have with probability at least $1-\delta_0$
\begin{align}
    \|\fl{E}^p\fl{v}\|_\infty &= \|\fl{E}\fl{E}^{p-1}\fl{v}\|_\infty\nonumber\\
    &= \lr{\Big(\fl{E}\|\fl{E}^{p-1}\fl{v}\|_2\Big)\Big(\frac{\fl{E}^{p-1}\fl{v}}{\|\fl{E}^{p-1}\fl{v}\|_2}\Big)}_\infty\nonumber\\
    &= \|\fl{E}^{p-1}\fl{v}\|_2\lr{\fl{E}\Big(\frac{\fl{E}^{p-1}\fl{v}}{\|\fl{E}^{p-1}\fl{v}\|_2}\Big)}_\infty\nonumber\\
    &\leq \|\fl{E}^{p-1}\fl{v}\|_2c\sqrt{\frac{d\log(d/\delta_0)}{mt}}\nonumber\\
    &\leq \Big(c\sqrt{\frac{d\log(1/\delta_0)}{mt}}\Big)^{p-1}c\sqrt{\frac{d\log(d/\delta_0)}{mt}}\|\fl{v}\|_2
\end{align}
Therefore, if $mt=\Omega(d\log (d/\delta_0))$ by taking a union bound we must have with probability at least $1-\delta_0$,
\begin{align}\label{eq:corrected-lemma-unknown-support-uinfty-unplugged-temp}
    \sum_{p=1}^{\infty}\lr{\fl{E^p v}}_{\infty} = O\Big(\sqrt{\frac{d\log (d/\delta_0)}{mt}}\Big)\lr{\fl{v}}_2.
\end{align}

Therefore we have w.p. $\geq 1-O(\delta_0)$
\begin{align}
    \lr{\fl{u}^{(\ell)}-\fl{u}^{\star}}_{\infty} \leq \lr{\fl{v}}_{\infty}+2c\sqrt{\frac{d\log (d/\delta_0)}{mt}}\lr{\fl{v}}_2. \label{eq:corrected-lemma-unknown-support-uinfty-unplugged}
\end{align}
Plugging the bounds of $\lr{\fl{v}}_{\infty}$ and $\lr{\fl{v}}_2$ from \eqref{eq:lemma-unknown-support-vinfty-val} and \eqref{eq:lemma-unknown-support-num-val} in \eqref{eq:corrected-lemma-unknown-support-uinfty-unplugged}, we obtain that w.p. $\geq 1-\delta_0)$
\begin{align}
  &\lr{\fl{u}^{(\ell)}-\fl{u}^{\star}}_{\infty} \nonumber\\
  &\leq c\alpha^{(\ell)}\sqrt{\frac{\log (d/\delta_0)}{mt}}+\frac{\zeta}{t}\gamma^{(\ell)} + 2c\sqrt{\frac{d\log (d/\delta_0)}{mt}}\Big(\sqrt{\frac{2\zeta k}{t}}\gamma^{(\ell)}+4\alpha^{(\ell)} \sqrt{\frac{d\log (d/\delta_0)}{mt}}\Big)\nonumber\\
  &= \Big(\frac{\zeta}{t} + 2c\sqrt{\frac{d\log (d/\delta_0)}{mt}}\sqrt{\frac{2\zeta k}{t}}\Big)\gamma^{(\ell)} + \Big(c\sqrt{\frac{\log (d/\delta_0)}{mt}} + 8c\frac{d\log (d/\delta_0)}{mt}\Big
  ) \alpha^{(\ell)} = \beta^{(\ell)}\label{eq:corrected-lemma-unknown-support-bound4}
\end{align}

\end{proof}
\begin{lemma}\label{lemma:abgd-unknown-support-infty-bounds}

After $\s{L}$ iterations, for some constant $c>0$. we will have with probability $1-O(\s{L} \delta_0)$,
\begin{align}
    \lr{\fl{u}^{(\s{L})}-\fl{u}^{\star}}_{\infty} &\le c_32^{\s{L}-1}(c_3 + c_1c_2+ c_1)^{\s{L}-1}\s{Z},\nonumber\\
    \left|\left|\fl{b}^{(i,\s{L})}-\fl{b}^{\star (i)}\right|\right|_{\infty} &\le 2^{\s{L}-1}(c_3 + c_1c_2+ c_1)^{\s{L}-1}\s{Z},\nonumber\\
    \left|\left|\fl{b}^{(i,\s{L})}-\fl{b}^{\star (i)}\right|\right|_{2} &\leq \sqrt{k}2^{\s{L}-1}(c_3 + c_1c_2+ c_1)^{\s{L}-1}\s{Z},\nonumber\\
    \lr{\fl{u}^{(\s{L})}-\fl{u}^{\star}}_{2} &\leq c_2\sqrt{k}2^{\s{L}-1}(c_3 + c_1c_2+ c_1)^{\s{L}-1}\s{Z},\nonumber
\end{align}
where
\begin{align}
    \s{Z}&=\Big(2\lr{\fl{u}^{(0)}-\fl{u}^{\star}}_{\infty} + \frac{2c_1}{\sqrt{k}}\lr{\fl{u}^{(0)}-\fl{u}^{\star}}_{2} + 2c_1\max_{i\in[t]}\lr{\fl{b}^{(i, 0)}-\fl{b}^{\star(i)}}_{\infty}\Big), \nonumber\\
    c_1 &= c\sqrt{\frac{k\log (tdL/\delta_0)}{m}},\nonumber\\
    c_2 &= \frac{\sqrt{\frac{2\zeta}{t}}+4\sqrt{\frac{d\log (dL/\delta_0)}{mt}}}{1-c\sqrt{\frac{d\log (L/\delta_0)}{mt}}},\nonumber\\
    c_3 &=  \Big(\frac{\zeta}{t} + 2c\sqrt{\frac{d\log (d/\delta_0)}{mt}}\sqrt{\frac{2\zeta k}{t}}\Big) + \sqrt{k}\Big(c\sqrt{\frac{\log (d/\delta_0)}{mt}} + 8c\frac{d\log (d/\delta_0)}{mt}\Big).\nonumber
\end{align}
\end{lemma}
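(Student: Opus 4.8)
The plan is to derive a single scalar recursion for $\gamma^{(\ell)}$ (the per-coordinate error of the sparse iterates $\fl{b}^{(i,\ell)}$) out of the three preceding per-iteration lemmas, unroll it over the $\s{L}$ iterations, and then read off the bounds on the other three error quantities. The key observation is that the three per-iteration lemmas collapse the apparent four-dimensional recursion in $(\alpha^{(\ell)},\gamma^{(\ell)},\tau^{(\ell)},\beta^{(\ell)})$ to a one-dimensional one: Lemma \ref{lemma:unknown-support-bounds} gives $\alpha^{(\ell)}=\sqrt{k}\,\gamma^{(\ell)}$ exactly (the $\alpha$-update is literally $\sqrt{k}$ times the $\gamma$-update), so substituting $\alpha^{(\ell)}=\sqrt{k}\,\gamma^{(\ell)}$ into the lemma bounding $\tau^{(\ell)}$ yields $\tau^{(\ell)}=c_2\sqrt{k}\,\gamma^{(\ell)}$, and into Lemma \ref{lemma:u_infinity-norm-bound} yields $\beta^{(\ell)}=c_3\,\gamma^{(\ell)}$, with $c_2,c_3$ exactly the constants named in the statement (after the standard step of replacing $\delta_0$ by $\delta_0/\s{L}$ in each iteration, which is what produces the $\log(\cdot\,\s{L}/\delta_0)$ factors inside $c_1,c_2,c_3$).

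First I would substitute these three identities back into the $\gamma$-update $\gamma^{(\ell)}=2\beta^{(\ell-1)}+2c\sqrt{\log(td/\delta_0)/m}\,(\tau^{(\ell-1)}+\alpha^{(\ell-1)})$ from Lemma \ref{lemma:unknown-support-bounds}: for $\ell\ge 2$ this gives $\gamma^{(\ell)}=\bigl(2c_3+c_1 c_2+c_1\bigr)\gamma^{(\ell-1)}$ with $c_1:=2c\sqrt{k\log(td\s{L}/\delta_0)/m}$, and bounding $2c_3+c_1 c_2+c_1\le 2(c_3+c_1 c_2+c_1)$ gives the clean contraction $\gamma^{(\ell)}\le 2(c_3+c_1 c_2+c_1)\,\gamma^{(\ell-1)}$. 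Next I would treat $\ell=1$ as a base case: since the algorithm initializes $\fl{u}^{(0)}=\f{0}$ and $\fl{b}^{(i,0)}=\f{0}$, the error vector $\fl{b}^{(i,0)}-\fl{b}^{\star(i)}$ is $k$-sparse, so $\alpha^{(0)}=\max_i\lr{\fl{b}^{(i,0)}-\fl{b}^{\star(i)}}_2\le\sqrt{k}\,\max_i\lr{\fl{b}^{(i,0)}-\fl{b}^{\star(i)}}_\infty$; plugging $\beta^{(0)}=\lr{\fl{u}^{(0)}-\fl{u}^\star}_\infty$, $\tau^{(0)}=\lr{\fl{u}^{(0)}-\fl{u}^\star}_2$ and this bound on $\alpha^{(0)}$ into the $\gamma$-update gives $\gamma^{(1)}\le\s{Z}$, with $\s{Z}$ exactly the quantity in the statement. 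Unrolling the contraction then yields $\gamma^{(\s{L})}\le 2^{\s{L}-1}(c_3+c_1 c_2+c_1)^{\s{L}-1}\s{Z}$, and the four claimed bounds follow from $\lr{\fl{b}^{(i,\s{L})}-\fl{b}^{\star(i)}}_\infty\le\gamma^{(\s{L})}$, $\lr{\fl{b}^{(i,\s{L})}-\fl{b}^{\star(i)}}_2\le\alpha^{(\s{L})}=\sqrt{k}\,\gamma^{(\s{L})}$, $\lr{\fl{u}^{(\s{L})}-\fl{u}^\star}_\infty\le\beta^{(\s{L})}=c_3\,\gamma^{(\s{L})}$, and $\lr{\fl{u}^{(\s{L})}-\fl{u}^\star}_2\le\tau^{(\s{L})}=c_2\sqrt{k}\,\gamma^{(\s{L})}$.

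For the probability bookkeeping I would note that each of the three per-iteration lemmas holds with probability $1-O(\delta_0)$ on its own fresh batch of samples, and since the iterates of iteration $\ell$ are independent of the samples used in iteration $\ell+1$ (guaranteed by resampling, as noted before Algorithm \ref{algo:optimize_lrs1}), a union bound over the $\s{L}$ iterations gives failure probability $O(\s{L}\delta_0)$, which also justifies the $\s{L}$ inside the logarithms in $c_1,c_2,c_3$. I expect the only genuinely error-prone part of this lemma to be the constant-tracking — checking that the coefficients produced by back-substitution line up with the stated $c_1,c_2,c_3$ and that the base case lands exactly on $\s{Z}$ — since the real analytic content (the matrix-Taylor-series expansion of the least-squares estimator and the $\epsilon$-net argument controlling $\lr{\fl{E}\fl{z}}_\infty$) already lives in the three preceding lemmas, so this is essentially an assembly step. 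I would also point out that the contraction factor $2(c_3+c_1 c_2+c_1)$ being strictly less than $1$ is \emph{not} needed here; it is invoked only in deducing Theorem \ref{thm:main_toy} from this lemma, where the sample-complexity lower bounds $m=\widetilde{\Omega}(k)$, $mt=\widetilde{\Omega}(d\sqrt{k})$, $mt^2=\widetilde{\Omega}(\zeta k d)$ make $c_1,c_2,c_3$ small enough that $\s{L}=\widetilde{O}(1)$ iterations drive the error below $\epsilon_0$.
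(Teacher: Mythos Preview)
Your proposal is correct and follows essentially the same route as the paper: collapse the four coupled recursions into the single scalar recursion $\gamma^{(\ell)}\le 2(c_3+c_1c_2+c_1)\gamma^{(\ell-1)}$ via the identities $\alpha^{(\ell)}=\sqrt{k}\gamma^{(\ell)}$, $\tau^{(\ell)}\le c_2\sqrt{k}\gamma^{(\ell)}$, $\beta^{(\ell)}\le c_3\gamma^{(\ell)}$, unroll to $\gamma^{(1)}=\s{Z}$, and read off the four bounds. One minor bookkeeping slip: you absorb a factor of $2$ into your definition of $c_1$ (writing $c_1:=2c\sqrt{k\log(\cdot)/m}$), whereas the lemma statement has $c_1=c\sqrt{k\log(\cdot)/m}$; with the statement's $c_1$ the recursion factor is already exactly $2(c_3+c_1c_2+c_1)$ without needing your crude bound $2c_3+c_1c_2+c_1\le 2(c_3+c_1c_2+c_1)$, so the final constants match on the nose rather than up to a factor of $2$ --- exactly the kind of constant-tracking issue you correctly flagged as the error-prone part.
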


\begin{proof}
Using Lemma~\ref{lemma:unknown-support-bounds} and the fact that $\alpha^{(\ell)} \leq \sqrt{k}\gamma^{(\ell)}$, we have for $\ell \geq 1$
\begin{align}
    \gamma^{(\ell)} &\leq 2\beta^{(\ell-1)} + \frac{2c_1}{\sqrt{k}}\tau^{(\ell-1)} + 2c_1\gamma^{(\ell-1)},\label{eq:lemma-abgd-unknown-support-infty-1}\\
    \tau^{(\ell)} &\leq c_2\sqrt{k}\gamma^{(\ell)},\label{eq:lemma-abgd-unknown-support-infty-2}\\
    \beta^{(\ell)} &\leq c_3\gamma^{(\ell)},\label{eq:lemma-abgd-unknown-support-infty-3}
\end{align}
Using \eqref{eq:lemma-abgd-unknown-support-infty-2} and \eqref{eq:lemma-abgd-unknown-support-infty-3} in \eqref{eq:lemma-abgd-unknown-support-infty-1}, we get
\begin{align}
    \gamma^{(\ell)} &\leq (2c_3 + 2c_1c_2+ 2c_1)\gamma^{(\ell-1)}\nonumber\\
    &= 2(c_3 + c_1c_2+ c_1)\gamma^{(\ell-1)}\nonumber\\
    &\leq \dots\nonumber\\
    &\leq 2^{\ell-1}(c_3 + c_1c_2+ c_1)^{\ell-1}\gamma^{(1)}\nonumber\\
    &\leq 2^{\ell-1}(c_3 + c_1c_2+ c_1)^{\ell-1}\Big(2\beta^{(0)} + \frac{2c_1}{\sqrt{k}}\tau^{(0)} + 2c_1\gamma^{(0)}\Big),\label{eq:lemma-abgd-unknown-support-infty-gamma-val}
\end{align}
where in the last step we plug in the value $\gamma^{(1)} \leq 2\beta^{(0)} + \frac{2c_1}{\sqrt{k}}\tau^{(0)} + 2c_1\gamma^{(0)}$ from \eqref{eq:lemma-abgd-unknown-support-infty-1} at $\ell = 1$.

Using \eqref{eq:lemma-abgd-unknown-support-infty-gamma-val} in \eqref{eq:lemma-abgd-unknown-support-infty-3} gives
\begin{align}
    \beta^{(\ell)} &\leq c_32^{\ell-1}(c_3 + c_1c_2+ c_1)^{\ell-1}\Big(2\beta^{(0)} + \frac{2c_1}{\sqrt{k}}\tau^{(0)} + 2c_1\gamma^{(0)}\Big).\label{eq:lemma-abgd-unknown-support-infty-beta-val}
\end{align}

Using \eqref{eq:lemma-abgd-unknown-support-infty-gamma-val}, \eqref{eq:lemma-abgd-unknown-support-infty-2} and $\alpha^{(\ell)} \leq \sqrt{k}\gamma^{(\ell)}$ further gives:
\begin{align}
    \alpha^{(\ell)} &\leq \sqrt{k}2^{\ell-1}(c_3 + c_1c_2+ c_1)^{\ell-1}\Big(2\beta^{(0)} + \frac{2c_1}{\sqrt{k}}\tau^{(0)} + 2c_1\gamma^{(0)}\Big),\label{eq:lemma-abgd-unknown-support-infty-alpha-val}\\
    \tau^{(\ell)} &\leq c_2\sqrt{k}2^{\ell-1}(c_3 + c_1c_2+ c_1)^{\ell-1}\Big(2\beta^{(0)} + \frac{2c_1}{\sqrt{k}}\tau^{(0)} + 2c_1\gamma^{(0)}\Big).\label{eq:lemma-abgd-unknown-support-infty-tau-val}
\end{align}

\eqref{eq:lemma-abgd-unknown-support-infty-gamma-val}, \eqref{eq:lemma-abgd-unknown-support-infty-beta-val},\eqref{eq:lemma-abgd-unknown-support-infty-alpha-val} and \eqref{eq:lemma-abgd-unknown-support-infty-tau-val} give us the required result.
\end{proof}

\begin{thm*}[Restatement of Theorem \ref{thm:main_toy}]
Consider the \lrs problem  with $t$ linear regression tasks and samples obtained by \eqref{eq:samples} where rank $r=1$, $\sigma=0$, $\fl{U}^{\star} \equiv \fl{u}^{\star}\in \bb{R}^d$ and $\fl{w}^{\star}_i \equiv w^{\star}\in \bb{R}$. Let model parameters $\{\fl{b}^{\star (i)}\}_{i\in [t]}$ satisfy assumption A1 with $\zeta=O(t)$. Suppose Algorithm \ref{algo:optimize_lrs1} with modified updates (eqns. \ref{eq:update_simple1},\ref{eq:update_simple2},\ref{eq:update_simple3}) is run for $\s{L}=\log\left(\epsilon_0^{-1}\Big(\max_{i\in [t]}\lr{\fl{b}^{\star(i)}}_{\infty}+\lr{\fl{u}^{\star}}_{\infty}+\frac{\lr{\fl{u}^{\star}}_2}{\sqrt{k}}\Big)\right)$ iterations. Then, w.p. $\geq 1-O(\delta_0)$, the outputs $\fl{u}^{(\s{L})},\{\fl{b}^{(i,\s{L})}\}_{i\in[t]}$ satisfy:  
\begin{align}\label{eq:toy_final-temp}
    \lr{\fl{u}^{(\s{L})}-w^{\star}\fl{u}^{\star}}_{\infty} \le O(\epsilon_0)   \text{ and }
    \left|\left|\fl{b}^{(i, \s{L})}-\fl{b}^{\star (i)}\right|\right|_{\infty} \le O(\epsilon_0) \text{ for all }i\in [t]. \nonumber
\end{align}
provided the total number of samples satisfy 
\begin{align}
    m=\widetilde{\Omega}(k),\; mt=\widetilde{\Omega}(d\sqrt{k}) \text{ and }mt^2=\widetilde{\Omega}(\zeta k d).\nonumber
\end{align}
\end{thm*}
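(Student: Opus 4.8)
The plan is to chain together the three per-iteration estimates already in hand — Lemma~\ref{lemma:unknown-support-bounds} (which controls the hard-thresholded update of the sparse vectors and, crucially, shows $\s{support}(\fl{b}^{(i,\ell)})\subseteq\s{support}(\fl{b}^{\star(i)})$, so that $\lr{\fl{b}^{(i,\ell)}}_0\le k$ and hence $\alpha^{(\ell)}\le\sqrt{k}\,\gamma^{(\ell)}$), the intermediate lemma bounding $\lr{\fl{u}^{(\ell)}-\fl{u}^{\star}}_2=\tau^{(\ell)}$, and Lemma~\ref{lemma:u_infinity-norm-bound} bounding $\lr{\fl{u}^{(\ell)}-\fl{u}^{\star}}_\infty=\beta^{(\ell)}$ — and then feed them into the master recursion of Lemma~\ref{lemma:abgd-unknown-support-infty-bounds}. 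First I would fix a fresh batch of samples for each of the $\s{L}$ iterations, so that the iterate entering iteration $\ell$ is statistically independent of the data used in that iteration; since $\s{L}=\widetilde{O}(1)$, this only inflates the sample requirement and the failure probability by a constant/logarithmic factor, and a union bound over the $\s{L}$ iterations turns each ``$1-O(\delta_0)$'' event into ``$1-O(\s{L}\delta_0)$'', i.e.\ $1-O(\delta_0)$ after rescaling $\delta_0$.

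The heart of the argument is to verify that, under the stated sample budget, the contraction factor $2(c_3+c_1c_2+c_1)$ appearing in Lemma~\ref{lemma:abgd-unknown-support-infty-bounds} is bounded by a constant strictly below $1$, say $\tfrac12$. I would bound the three pieces separately: $c_1=c\sqrt{k\log(tdL/\delta_0)/m}$ is a small constant once $m=\widetilde{\Omega}(k)$; $c_2=O(1)$ since $\zeta=O(t)$ gives $\sqrt{2\zeta/t}=O(1)$ and $mt=\widetilde{\Omega}(d\sqrt k)$ kills the $\sqrt{d\log/(mt)}$ terms, so $c_1c_2$ is again controlled by the condition on $m$; and in $c_3$ the term $2c\sqrt{d\log(d/\delta_0)/(mt)}\cdot\sqrt{2\zeta k/t}=O\big(\sqrt{\zeta k d\log/(mt^2)}\big)$ is small precisely because $mt^2=\widetilde{\Omega}(\zeta k d)$, the term $\sqrt k\big(c\sqrt{\log(d/\delta_0)/(mt)}+8cd\log(d/\delta_0)/(mt)\big)$ is small because $mt=\widetilde{\Omega}(d\sqrt k)$ (note the tightest factor here is $\sqrt k\, d/(mt)$), and $\zeta/t$ is a small constant by the hypothesis $\zeta=O(t)$. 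Thus all three sample conditions are genuinely used, and together they give $c_3+c_1c_2+c_1\le\tfrac14$.

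With the geometric contraction established, Lemma~\ref{lemma:abgd-unknown-support-infty-bounds} gives $\gamma^{(\s{L})},\beta^{(\s{L})}\le (\tfrac12)^{\s{L}-1}\s{Z}$ and $\alpha^{(\s{L})},\tau^{(\s{L})}\le \sqrt k\,(\tfrac12)^{\s{L}-1}\s{Z}$ up to the $O(1)$ prefactors $c_2,c_3$, where — since Algorithm~\ref{alg:Algorithm-2} is initialized at $\fl{u}^{(0)}=\vzero$ and $\fl{b}^{(i,0)}=\vzero$ and $c_1\le 1$ — we have $\s{Z}=O\big(\max_{i\in[t]}\lr{\fl{b}^{\star(i)}}_\infty+\lr{\fl{u}^{\star}}_\infty+\lr{\fl{u}^{\star}}_2/\sqrt k\big)$. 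Substituting $\s{L}=\log\!\big(\epsilon_0^{-1}\big(\max_i\lr{\fl{b}^{\star(i)}}_\infty+\lr{\fl{u}^{\star}}_\infty+\lr{\fl{u}^{\star}}_2/\sqrt k\big)\big)$ collapses every right-hand side to $O(\epsilon_0)$, yielding $\lr{\fl{b}^{(i,\s{L})}-\fl{b}^{\star(i)}}_\infty\le O(\epsilon_0)$ for all $i$ and $\lr{\fl{u}^{(\s{L})}-\fl{u}^{\star}}_\infty\le O(\epsilon_0)$; recalling that the shared scalar $w^{\star}$ has been absorbed into $\lr{\fl{u}^{\star}}_2$ (the reparametrization set up just before Lemma~\ref{lemma:unknown-support-bounds}) converts the last bound into $\lr{\fl{u}^{(\s{L})}-w^{\star}\fl{u}^{\star}}_\infty\le O(\epsilon_0)$, which is the claim.

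I expect the main obstacle to be the second step: carefully checking that the heterogeneous collection of error terms hidden in $c_1,c_2,c_3$ is simultaneously dominated by exactly the three inequalities $m=\widetilde{\Omega}(k)$, $mt=\widetilde{\Omega}(d\sqrt k)$, $mt^2=\widetilde{\Omega}(\zeta k d)$ and not by anything stronger — the $\beta$-recursion in particular couples $\gamma^{(\ell)}$ and $\alpha^{(\ell)}$ through a $\sqrt k\cdot d/(mt)$-type factor that is the tightest of the bunch and drives the $mt=\widetilde{\Omega}(d\sqrt k)$ requirement. By contrast, all of the probabilistic content (the sub-exponential tail bound for the coordinates $v_h$ of the least-squares residual, the matrix Neumann-series expansion, and the $\epsilon$-net argument bounding $\lr{\fl{E}\fl{z}}_\infty$) is already packaged inside the cited lemmas, so no new concentration work is needed at the level of the theorem — only the deterministic bookkeeping of the recursion and the choice of $\s{L}$.
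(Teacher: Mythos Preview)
Your proposal is correct and follows essentially the same approach as the paper: the paper's proof simply invokes Lemma~\ref{lemma:abgd-unknown-support-infty-bounds} after the reparametrization absorbing $w^{\star}$ into $\lr{\fl{u}^{\star}}_2$, and you have fleshed out the step the paper leaves implicit --- verifying that the three sample-size conditions $m=\widetilde{\Omega}(k)$, $mt=\widetilde{\Omega}(d\sqrt{k})$, $mt^2=\widetilde{\Omega}(\zeta k d)$ together force the contraction factor $2(c_3+c_1c_2+c_1)<1$. Your identification of which term in $c_3$ drives which sample requirement (in particular that $\sqrt{k}\cdot d\log/(mt)$ is the bottleneck for the $mt$ condition and $\sqrt{\zeta k d/(mt^2)}$ for the $mt^2$ condition) is accurate.
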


\begin{proof}
In order to map \ref{eq:samples_toy} to the statement of Theorem \ref{thm:main_toy} and the general problem statement in \ref{eq:samples}, recall that we can immediately write $w^{\star}\leftarrow \lr{\fl{u}^{\star}}_2$ and $\fl{u}^{\star} \leftarrow \frac{\fl{u}^{\star}}{\lr{\fl{u}^{\star}}_2}$ (since $\fl{u}^{\star}$ in the statement of Theorem \ref{thm:main_toy} is unit-norm). For the simplicity of notation, we had subsumed $w^{\star}$ within $\lr{\fl{u}^{\star}}_2$. Therefore, we directly use Lemma \ref{lemma:abgd-unknown-support-infty-bounds} to prove our theorem where the result is stated after mapping back to the setting in \ref{eq:samples_toy} and the Theorem statement. 
\end{proof}

\section{Algorithm and Proof of Theorem \ref{thm:main} (Parameter Recovery)}\label{app:detailed_proof}

\begin{algorithm}[!ht]
\caption{\textsc{AMHT-LRS} (Alternating Minimization for LRS in (\ref{prob:general}))}  \label{algo:optimize_lrs_main}
\begin{algorithmic}[1]
\REQUIRE Data $\{(\fl{x}^{(i)}_j\in \bb{R}^d,y^{(i)}_j\in \bb{R})\}_{j=1}^{m}$ for all $i\in [t]$, column sparsity $k$ of $\fl{B}$,  $\lr{\Delta(\fl{U}^{+(0)}, \fl{U}^{\star})}_\s{F} \le \s{B}$, $\max_i \|\fl{b}^{(i, 0)} - \fl{b}^{\star(i)}\|_\infty \leq \gamma^{(0)}$, Parameters $\epsilon>0$ and $\s{A}$.
\FOR{$\ell = 1, 2, \dots$}
    \STATE Set $T^{(\ell)} = \Omega\Big(\ell\log\Big(\frac{\gamma^{(\ell-1)}}{\epsilon}\Big)\Big)$
    \FOR{$i = 1,2, \dots, t$}
        \STATE $\fl{b}^{(i, \ell)} \gets \s{Optimize Sparse Vector}((\fl{X}^{(i)}, \fl{y}^{(i)}), \fl{v} = \fl{U}^{+(\ell-1)}\fl{w}^{(i,\ell-1)}, \alpha = O\Big(c_4^{\ell-1}\frac{\s{B}}{\sqrt{k}}+\s{A}\Big), \beta = O(c_5^{\ell-1}\s{B}+\s{A}), \gamma = \gamma^{(\ell-1)}+\s{A}, \s{T} = T^{(\ell)})$ \\
        \COMMENT{Use a fresh batch of data samples; $c_4,c_5$ are suitable constants}1
        \STATE  $\fl{w}^{(i, \ell)} = \Big((\fl{X}^{(i)}\fl{U}^{+(\ell-1)})^{\s{T}}(\fl{X}^{(i)}\fl{U}^{+(\ell-1)})\Big)^{-1}\Big((\fl{X}^{(i)}\fl{U}^{+(\ell-1)})^{\s{T}}(\fl{y}^{(i)} -  \fl{X}^{(i)}\fl{b}^{(i, \ell)})\Big)$ \COMMENT{Use a fresh batch of data samples}\label{optimize-lrs-w-update}
    \ENDFOR
    \STATE Set $\fl{A} := \sum_{i \in [t]}\Big(\fl{w}^{(i, \ell)}(\fl{w}^{(i, \ell)})^{\s{T}} \otimes \Big(\sum_{j=1}^{m}\fl{x}^{(i)}_j(\fl{x}^{(i)}_j)^{\s{T}}\Big)\Big)$  
    and $\fl{V} := \sum_{i \in [t]} (\fl{X}^{(i)})^{\s{T}}\Big(\fl{y}^{(i)} - \fl{b}^{(i, \ell)}\Big)(\fl{w}^{(i, \ell)})^{\s{T}}$  \COMMENT{Use a fresh batch of data samples}
    \STATE Compute  $\fl{U}^{(\ell)} =  \s{vec}^{-1}_{d\times r}(\fl{A}^{-1}\s{vec}(\fl{V}))$
    and $\fl{U}^{+(\ell)} \gets \s{QR}(\fl{U}^{(\ell)})$ \COMMENT{$\fl{U}^{(\ell)}=\fl{U}^{+(\ell)}\fl{R} $}
    \STATE $\gamma^{(\ell)} \gets (c_3)^{\ell-1}\epsilon\s{B}+\s{A}$ for a suitable constant $c_3<1$. 
\ENDFOR
\STATE Return  $\fl{w^{(\ell)}}$, $\fl{U}^{+(\ell)}$ and $\{\fl{b}^{(i, \ell)} \}_{i\in [t]}$.
\end{algorithmic}
\end{algorithm}

\begin{algorithm}[!ht]
\caption{\textsc{DP-AMHT-LRS} ( Private Alternating Minimization for LRS in (\ref{prob:general}))}  \label{algo:optimize_dp-lrs}
\begin{algorithmic}[1]
\REQUIRE Data $\{(\fl{x}^{(i)}_j\in \bb{R}^d,y^{(i)}_j\in \bb{R})\}_{j=1}^{m}$ for all $i\in [t]$, column sparsity $k$ of $\fl{B}$,  $\lr{\Delta(\fl{U}^{+(0)}, \fl{U}^{\star})}_\s{F} \le \s{B}$, $\max_i \|\fl{b}^{(i, 0)} - \fl{b}^{\star(i)}\|_\infty \leq \gamma^{(0)}$, Parameters $\epsilon>0$ and $\s{A}$.
\FOR{$\ell = 1, 2, \dots$}
    \STATE Set $T^{(\ell)} = \Omega\Big(\ell\log\Big(\frac{\gamma^{(\ell-1)}}{\epsilon}\Big)\Big)$
    \FOR{$i = 1,2, \dots, t$}
        \STATE $\fl{b}^{(i, \ell)} \gets \s{Optimize Sparse Vector}((\fl{X}^{(i)}, \fl{y}^{(i)}), \fl{v} = \fl{U}^{+(\ell-1)}\fl{w}^{(i,\ell-1)}, \alpha = O\Big(c_4^{\ell-1}\frac{\s{B}}{\sqrt{k}}+\s{A}\Big), \beta = O(c_5^{\ell-1}\s{B}+\s{A}), \gamma = \gamma^{(\ell-1)}+\s{A}, \s{T} = T^{(\ell)})$ for suitable constants $c_4,c_5$\\
        \STATE  $\fl{w}^{(i, \ell)} = \Big((\fl{X}^{(i)}\fl{U}^{+(\ell-1)})^{\s{T}}(\fl{X}^{(i)}\fl{U}^{+(\ell-1)})\Big)^{-1}\Big((\fl{X}^{(i)}\fl{U}^{+(\ell-1)})^{\s{T}}(\fl{y}^{(i)} -  \fl{X}^{(i)}\fl{b}^{(i, \ell)})\Big)$ 
        \label{optimize_dp-lrs-w-update}
    \ENDFOR
    \STATE $\forall i,j: \widehat{\fl{x}^{(i)}_j}\leftarrow {\sf clip}_{A_1}\left(\fl{x}^{(i)}_j\right)$, $\widehat{y^{(i)}_j} \leftarrow {\sf clip}_{A_2}\left(y^{(i)}_j\right)$, $\widehat{(\fl{x}^{(i)}_j)^\s{T}\fl{b}^{(i, \ell)}}\leftarrow {\sf clip}_{A_3}\left((\fl{x}^{(i)}_j)^\s{T}\fl{b}^{(i, \ell)}\right)$ and $\widehat{\fl{w}^{(i, \ell)}}\leftarrow {\sf clip}_{A_w}\left(\fl{w}^{(i, \ell)}\right)$  
    \STATE $\fl{A} := \frac{1}{mt}\Big(\sum_{i \in [t]}\Big(\widehat{\fl{w}^{(i, \ell)}}(\widehat{\fl{w}^{(i, \ell)}})^{\s{T}} \otimes \Big(\sum_{j=1}^{m}\widehat{\fl{x}^{(i)}_j}(\widehat{\fl{x}^{(i)}_j})^{\s{T}}\Big)\Big)  + \fl{N}_1\Big)$
    \STATE $\fl{V} := \frac{1}{mt}\Big(\sum_{i \in [t]}\sum_{j \in [m]} \widehat{\fl{x}^{(i)}_j}\Big(\widehat{y^{(i)}_j} - (\widehat{\fl{x}^{(i)}_j)^{\s{T}}\fl{b}^{(i, \ell)}} \Big)(\widehat{\fl{w}^{(i, \ell)}})^{\s{T}} + \fl{N}_2\Big)$ 
    \STATE $\fl{U}^{(\ell)} =  \s{vec}^{-1}_{d\times r}(\fl{A}^{-1}\s{vec}(\fl{V}))$
    \STATE $\fl{U}^{+(\ell)} \gets \s{QR}(\fl{U}^{(\ell)})$ \COMMENT{$\fl{U}^{(\ell)}=\fl{U}^{+(\ell)}\fl{R} $}
    \STATE $\gamma^{(\ell)} \gets (c_3)^{\ell-1}\epsilon\s{B}+\s{A}$ for a suitable constant $c_3<1$. 
\ENDFOR
\STATE Return  $\fl{w^{(\ell)}}$, $\fl{U}^{+(\ell)}$ and $\{\fl{b}^{(i, \ell)} \}_{i\in [t]}$.
\end{algorithmic}
\end{algorithm}

\begin{assumption}[A3]\label{assum:incoherence_u}
We assume that $\lr{\fl{U}^{\star}}_{2,\infty} \le \sqrt{\nu^{\star}/k}$ for some constant $\nu^{\star}>0$.
\end{assumption}

Note that Assumption A3 is weaker than Assumption A2 where $\lr{\fl{U}^{\star}}_{2,\infty} \le \sqrt{\mu^{\star}/d}$ provided $k\le \frac{d\nu^{\star}}{\mu^{\star}}$. We will use Assumption A3 in place of A2 for simplicity of exposition and for sharper guarantees as well. Recall that in the general setting described in eq. \ref{eq:samples}, we obtain samples that are generated according to the following process:
\begin{align}
    &\fl{x}^{(i)}_j \sim \ca{N}(\f{0},\fl{I}_d) 
    \text{ and }
    y_j^{(i)}\mid \fl{x}^{(i)}_j = \langle \fl{x}^{(i)}_j, \fl{U}^{\star} \fl{w}^{\star(i)}+\fl{b}^{\star(i)} \rangle+z^{(i)}_j  \; \text{ for all } i\in[t],j\in [m], 
\end{align}

where each $z^{(i)}_j\sim \ca{N}(0,\sigma^2)$ denotes the independent measurement noise with known variance $\sigma^2$. For each task $i\in [t]$, we will denote the noise vector to be $\fl{z}^{(i)}$ such that its $j^{\s{th}}$ co-ordinate is $\fl{z}^{(i)}_j$. Further, with some abuse of notation we will denote:
\begin{itemize}
    \item $\lambda_j \equiv \lambda_j\Big(\frac{r}{t}(\fl{W}^{(\ell)})^\s{T}\fl{W}^{(\ell)}\Big)$,
    $\lambda^\star_j$ = $\lambda_j\Big(\frac{r}{t}(\fl{W}^{\star})^\s{T}\fl{W}^{\star}\Big) \equiv \lambda_j\Big(\frac{r}{t}(\fl{W}^{\star})^\s{T}\fl{W}^{\star}\Big)$ $\forall$ $j \in [r]$,
    \item $\mu \equiv \mu^{(\ell)}$ and $\mu^\star$ for the incoherence factors for $\fl{W}^{(\ell)}$ and $\fl{W}^\star$ respectively,
    \item $\nu \equiv \nu^{(\ell)}$ for the incoherence factor of $\fl{U}^{+(\ell)}$.
\end{itemize}
We will now prove Theorem \ref{thm:main} via an inductive argument. We will start with the base case. 

\subsection{Base Case}\label{inductive-corollary:optimize_dp-lrs-base-case}
We initialize $\fl{W}^{(0)} = \vzero$ and recall $\lr{(\fl{I}-\fl{U}^{\star}(\fl{U}^{\star})^{\s{T}})\fl{U}^{+(0)}}_{\s{F}} = O\Big(\sqrt{\frac{\lambda_r^{\star}}{\lambda_1^{\star}}}\Big)$, $\lr{\fl{U}^{+(0)}}\le \sqrt{\frac{\nu^{(0)}}{k}}$ where $\nu^{(0)}$ is an appropriate constant less than $1$.
We use Lemma~\ref{inductive-corollary:optimize_dp-lrs-innerloop-output-bounds} that is proved later in its full generality. We have by using Lemma~\ref{inductive-corollary:optimize_dp-lrs-innerloop-output-bounds} :
\begin{align}
    \lr{\fl{b}^{(i,1)}-\fl{b}^{\star (i)} }_{2} \leq 2\varphi^{(i)} + \epsilon
    \text{ and } \lr{\fl{b}^{(i,1)}-\fl{b}^{\star (i)} }_{\infty} \leq \frac{1}{\sqrt{k}}\Big(2\varphi^{(i)} + \epsilon\Big)\nonumber
\end{align}
with probability at least $1-T^{(\ell)}\delta$, where $\varphi^{(i)}$ is an upper-bound on $\widehat{\varphi}^{(i)}$ s.t. 
\begin{align}
    \widehat{\varphi}^{(i)} &= 2\Big(\sqrt{k}\|\fl{U}^{\star}\fl{w}^{\star(i)}\|_{\infty}+c_1\|\fl{U}^{\star}\fl{w}^{\star(i)}\|_2+\sigma \sqrt{\frac{k\log (d\delta^{-1})}{m}}\Big)\nonumber\\
    &\leq 2\Big(\sqrt{k}\|\fl{U}^{\star}\|_{2,\infty}\|\fl{w}^{\star(i)}\|_2+c_1\|\fl{w}^{\star(i)}\|_2+\sigma \sqrt{\frac{k\log (d\delta^{-1})}{m}}\Big)\nonumber\\
    &\leq 2\Big(\sqrt{\nu^\star}+c_1\Big)\|\fl{w}^{\star(i)}\|_2+2\sigma \sqrt{\frac{k\log (d\delta^{-1})}{m}}.\nonumber
\end{align}
Choosing $\epsilon = 4\Big(\sqrt{\nu^\star}+c_1\Big)$ gives us the required expression for $\ell = 0$.
Hence, we have that for $c' =O\Big(\frac{1} {\s{B}_{\fl{U}^{(0)}}}\frac{\lambda^{\star}_1}{\lambda^{\star}_r}\Big)$, we will have that 
\begin{align}
    &\lr{\fl{b}^{(i,0)}-\fl{b}^{\star(i)}}_2 \le c'\max(\epsilon,\lr{\fl{w}^{\star(i)}}_2)\s{B}_{\fl{U}^{(0)}}\sqrt{\frac{\lambda^{\star}_r}{\lambda^{\star}_1}} +4\sigma \sqrt{\frac{k\log (d\delta^{-1})}{m}}\\
    &\lr{\fl{b}^{(i,0)}-\fl{b}^{\star(i)}}_\infty \le c'\max(\epsilon,\lr{\fl{w}^{\star(i)}}_2)\s{B}_{\fl{U}^{(0)}}\sqrt{\frac{\lambda^{\star}_r}{k\lambda^{\star}_1}}+4\sigma \sqrt{\frac{\log (d\delta^{-1})}{m}}
\end{align}
\subsection{Inductive Step}

We will begin with the inductive assumption. Note that these assumptions are true in the base case as well due to our initialization and optimizing the task-specific sparse vector. Let
\begin{align}
    &\resizebox{.97\textwidth}{!}{$\Lambda = \cO\Big( \sqrt{\lambda^{\star}_r\mu^{\star}}\Big(\frac{\sigma_2 r}{mt\lambda^{\star}_r}+\frac{\sigma_1r^{3/2}}{mt\lambda_r^\star}\sqrt{rd\log rd}+\sigma\sqrt{\frac{r^3d\mu^\star\log^2 (r\delta^{-1})}{mt\lambda_r^\star}}\Big) + \sigma\Big(\sqrt{\frac{r^3\log^2(r\delta^{-1})}{m\lambda^{\star}_r}}\Big) + \sqrt{\frac{k\log (d\delta^{-1})}{m}}\Big)\Big)$} \nonumber\\
    &\Lambda' = \cO\Big(\frac{\Lambda}{\sqrt{\mu^{\star}\lambda^{\star}_r}}\Big).\nonumber
\end{align}

\begin{assumption}[Inductive Assumption]\label{assum:init}
At the beginning of the $\ell^{\s{th}}$ iteration, we will use  $q^{(\ell-1)}, \s{B}_{\fl{u}^{+(\ell-1)}}$ to describe the following upper bounds on the quantities of interest:
\begin{align}
    &\text{1) } 1/2 < \lambda_{\min}(\fl{Q}^{(\ell-1)}) \leq \lambda_{\max}(\fl{Q}^{(\ell-1)}) < 1, \text{ where } \fl{Q}^{(\ell-1)} := \langle (\fl{U}^{\star})^\s{T} \fl{U}^{+(\ell-1)}\rangle\label{eq:rec-1}\\
    &\text{2) } 
    \|\Delta(\fl{U}^{+(\ell-1)}, \fl{U}^{\star})\|_\s{F} = \|(\fl{I} - \fl{U}^{\star}(\fl{U}^{\star})^{\s{T}})\fl{U}^{+(\ell-1)}\|_\s{F} = \|\fl{U}^{+(\ell-1)} - \fl{U}^{\star}\fl{Q}^{(\ell-1)}\|_{\s{F}} \nonumber\\
    &\le \s{B}_{\fl{U^{(\ell-1)}}}\sqrt{\frac{\lambda_r^\star}{\lambda_1^\star}} + \Lambda',\label{eq:rec-2}\\
    &\text{3) }\|\fl{b}^{(i, \ell)}-\fl{b}^{\star(i)}\|_2 \le c'\|(\fl{U}^{+(\ell-1)} - \fl{U}^{\star}\fl{Q}^{(\ell-1)})(\fl{Q}^{(\ell-1)})^{-1}
   \fl{w}^{\star(i)}\|_2 \nonumber\\
   &\leq c'\max\{\epsilon, \|\fl{w}^{\star(i)}\|_2\}\s{B}_{\fl{U^{(\ell-1)}}}\sqrt{\frac{\lambda_r^\star}{\lambda_1^\star}} + \Lambda,\label{eq:rec-3}\\
    &\text{4) }\|\fl{b}^{(i, \ell)}-\fl{b}^{\star(i)}\|_\infty \le  c'\|(\fl{U}^{+(\ell-1)} - \fl{U}^{\star}\fl{Q}^{(\ell-1)})(\fl{Q}^{(\ell-1)})^{-1}
   \fl{w}^{\star(i)}\|_2/\sqrt{k} \nonumber\\
   &\leq c'\max\{\epsilon, \|\fl{w}^{\star(i)}\|_2\}\s{B}_{\fl{U^{(\ell-1)}}}\sqrt{\frac{\lambda_r^\star}{\lambda_1^\star k}} + \frac{\Lambda}{\sqrt{k}}, \label{eq:rec-4}\\
    &\text{5) }\|\fl{U}^{+(\ell-1)}\|_{2, \infty} \le \sqrt{\nu^{(\ell-1)}/k} ,\label{eq:rec-5}
\end{align} 
where $\nu^{(\ell-1)} < \frac{1}{181c}$, $c' > 0$ and $\Lambda'<1/1000$. Note that $\Lambda,\Lambda'$ are fixed and do not change with iterations. 
\end{assumption}

Note that the base case satisfies the inductive assumption for our problem. Let us denote $\fl{h}^{(i, \ell)}\triangleq \fl{w}^{(i, \ell)} - (\fl{Q}^{(\ell-1)})^{-1}\fl{w}^{\star(i)}$ and $(\fl{H}^{(\ell)})^\s{T} = \begin{bmatrix}
\fl{h}^{(1, \ell)} & \fl{h}^{(2, \ell)} & \dots & \fl{h}^{(t, \ell)}
\end{bmatrix}_{\bR^{r\times t}}$.

\begin{lemma}\label{lemma:optimize_dp-lrs-wbound}
For some constant $c>0$ and for any iteration indexed by $\ell>0$, we  have
\begin{align}
    &\|\fl{h}^{(i, \ell)}\|_2 \leq \frac{1}{1 - c\sqrt{\frac{r\log(1/ \delta_0)}{m}}}\Big\{\|\fl{U}^{+(\ell-1)} - \fl{U}^{\star}\fl{Q}^{(\ell-1)}\|_\s{F}\|(\fl{Q}^{(\ell-1)})^{-1}\|\|\fl{w}^{\star(i)}\|_2\cdot\nonumber\\
    &\qquad \Big(\|\fl{U}^{+(\ell-1)} - \fl{U}^{\star}\fl{Q}^{(\ell-1)}\|_\s{F}+c\sqrt{\frac{\log(r/\delta_0)}{m}}\|\fl{U}^{+(\ell-1)}\|_{\s{F}}\Big)\nonumber\\
    &\qquad +\|\fl{b}^{\star(i)} - \fl{b}^{(i, \ell)}\|_2\Big(\sqrt{k}\|\fl{U}^{+(\ell-1)}\|_{2, \infty}+c\sqrt{\frac{\log(r/\delta_0)}{m}}\|\fl{U}^{+(\ell-1)}\|_{\s{F}} \Big)+\sigma\sqrt{\frac{r\log^2 (r\delta^{-1})}{m}}\Big\},\nonumber\\
    &\|\fl{H}^{(\ell)}\|_\s{F} \leq \frac{1}{1 - c\sqrt{\frac{r\log(1/ \delta_0)}{m}}}\cdot\Big\{\|\fl{U}^{+(\ell-1)} - \fl{U}^{\star}\fl{Q}^{(\ell-1)}\|_\s{F}\|(\fl{Q}^{(\ell-1)})^{-1}\|\sqrt{\frac{t}{r}\lambda^\star_1}\cdot \nonumber\\
    &\qquad \Big(\|\fl{U}^{+(\ell-1)} - \fl{U}^{\star}\fl{Q}^{(\ell-1)}\|_\s{F}+c\sqrt{\frac{\log(r/\delta_0)}{m}}\|\fl{U}^{(\ell-1)}\|_{\s{F}}\Big)\nonumber\\
    &\qquad + \sqrt{k\zeta}\|\fl{U}^\star\|_\s{F}\|\fl{b}^{\star(i)} - \fl{b}^{(i, \ell)}\|_\infty + \sqrt{\sum_{i \in [t]} \Big(c\|\fl{b}^{\star(i)} - \fl{b}^{(i, \ell)}\|_2\sqrt{\frac{\log(r/\delta_0)}{m}}\|\fl{U}^{+(\ell-1)}\|_{\s{F}}\Big)^2}\nonumber\\
    &\qquad +\sigma\sqrt{\frac{rt\log^2 (r\delta^{-1})}{m}}\Big\}\nonumber
\end{align}
with probability at least $1-\delta_0$, where $\fl{h}^{(i, \ell)} = \fl{w}^{(i, \ell)} - (\fl{Q}^{(\ell-1)})^{-1}\fl{w}^{\star(i)}$.
\end{lemma}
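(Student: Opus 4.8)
\textbf{Proof proposal for Lemma~\ref{lemma:optimize_dp-lrs-wbound}.}

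The plan is to start from the closed-form expression for $\fl{w}^{(i,\ell)}$ in \eqref{eq:soln_w} and subtract off the ``target'' $(\fl{Q}^{(\ell-1)})^{-1}\fl{w}^{\star(i)}$ to obtain a clean expression for $\fl{h}^{(i,\ell)}$. Writing $\fl{y}^{(i)} = \fl{X}^{(i)}(\fl{U}^{\star}\fl{w}^{\star(i)} + \fl{b}^{\star(i)}) + \fl{z}^{(i)}$ and using $\fl{U}^{\star}\fl{w}^{\star(i)} = \fl{U}^{+(\ell-1)}(\fl{Q}^{(\ell-1)})^{-1}\fl{w}^{\star(i)} - (\fl{U}^{+(\ell-1)} - \fl{U}^{\star}\fl{Q}^{(\ell-1)})(\fl{Q}^{(\ell-1)})^{-1}\fl{w}^{\star(i)}$, the normal equations give
\[
\fl{h}^{(i,\ell)} = \Big((\fl{X}^{(i)}\fl{U}^{+(\ell-1)})^{\s{T}}(\fl{X}^{(i)}\fl{U}^{+(\ell-1)})\Big)^{-1}(\fl{X}^{(i)}\fl{U}^{+(\ell-1)})^{\s{T}}\Big( -\fl{X}^{(i)}\fl{g}^{(i)} + \fl{X}^{(i)}(\fl{b}^{\star(i)} - \fl{b}^{(i,\ell)}) + \fl{z}^{(i)}\Big),
\]
where $\fl{g}^{(i)} := (\fl{U}^{+(\ell-1)} - \fl{U}^{\star}\fl{Q}^{(\ell-1)})(\fl{Q}^{(\ell-1)})^{-1}\fl{w}^{\star(i)}$. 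So $\fl{h}^{(i,\ell)}$ splits into three contributions: a representation-error term in $\fl{g}^{(i)}$, a sparse-estimation-error term in $\fl{b}^{\star(i)} - \fl{b}^{(i,\ell)}$, and a noise term in $\fl{z}^{(i)}$.

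Next I would control each piece using concentration for Gaussian designs. The normalization factor $\big(\tfrac1m(\fl{X}^{(i)}\fl{U}^{+(\ell-1)})^{\s{T}}(\fl{X}^{(i)}\fl{U}^{+(\ell-1)})\big)^{-1}$ has operator norm at most $(1 - c\sqrt{r\log(1/\delta_0)/m})^{-1}$ with high probability, since $\fl{X}^{(i)}\fl{U}^{+(\ell-1)}$ is an $m\times r$ standard Gaussian matrix (using $(\fl{U}^{+(\ell-1)})^{\s{T}}\fl{U}^{+(\ell-1)} = \fl{I}$); this produces the leading prefactor. For the cross terms $\tfrac1m(\fl{X}^{(i)}\fl{U}^{+(\ell-1)})^{\s{T}}\fl{X}^{(i)}\fl{v}$ with a fixed vector $\fl{v}$ (here $\fl{v} = \fl{g}^{(i)}$ or $\fl{v} = \fl{b}^{\star(i)} - \fl{b}^{(i,\ell)}$), I would decompose $\fl{v} = \fl{U}^{+(\ell-1)}(\fl{U}^{+(\ell-1)})^{\s{T}}\fl{v} + \fl{P}^{\perp}\fl{v}$. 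The parallel part contributes $\|(\fl{U}^{+(\ell-1)})^{\s{T}}\fl{v}\|_2$ exactly (after normalization), and here I would use $\|(\fl{U}^{+(\ell-1)})^{\s{T}}\fl{g}^{(i)}\|_2 \le \|\fl{U}^{+(\ell-1)} - \fl{U}^{\star}\fl{Q}^{(\ell-1)}\|_{\s{F}}\cdot\|(\fl{Q}^{(\ell-1)})^{-1}\|\cdot\|\fl{w}^{\star(i)}\|_2$ (since $(\fl{U}^{+(\ell-1)})^{\s{T}}\fl{U}^{\star}\fl{Q}^{(\ell-1)}$ terms telescope appropriately against $\fl{U}^{+(\ell-1)}$), and for the sparse part $\|(\fl{U}^{+(\ell-1)})^{\s{T}}(\fl{b}^{\star(i)} - \fl{b}^{(i,\ell)})\|_2 \le \sqrt{k}\,\|\fl{U}^{+(\ell-1)}\|_{2,\infty}\,\|\fl{b}^{\star(i)}-\fl{b}^{(i,\ell)}\|_\infty$ using that the difference is $O(k)$-sparse (by $\s{supp}(\fl{b}^{(i,\ell)})\subseteq\s{supp}(\fl{b}^{\star(i)})$, cf. Lemma~\ref{lemma:unknown-support-bounds}), bounded in turn by $\sqrt{k}\|\fl{U}^{+(\ell-1)}\|_{2,\infty}\|\fl{b}^{\star(i)}-\fl{b}^{(i,\ell)}\|_2/\sqrt{k}$ — so the factor $\sqrt{k}\|\fl{U}^{+(\ell-1)}\|_{2,\infty}$ appears, matching the claim. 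The orthogonal part $\tfrac1m(\fl{X}^{(i)}\fl{U}^{+(\ell-1)})^{\s{T}}\fl{X}^{(i)}\fl{P}^{\perp}\fl{v}$ is a sum of products of independent Gaussians, concentrating around $0$ at scale $\sqrt{\log(r/\delta_0)/m}\,\|\fl{P}^{\perp}\fl{v}\|_2 \le \sqrt{\log(r/\delta_0)/m}\,\|\fl{U}^{+(\ell-1)}\|_{\s{F}}\cdot(\cdot)$ — here one bounds $\|\fl{P}^{\perp}\fl{g}^{(i)}\|_2 \le \|\fl{U}^{+(\ell-1)} - \fl{U}^{\star}\fl{Q}^{(\ell-1)}\|_{\s{F}}\|(\fl{Q}^{(\ell-1)})^{-1}\|\|\fl{w}^{\star(i)}\|_2$ and $\|\fl{P}^{\perp}(\fl{b}^{\star(i)}-\fl{b}^{(i,\ell)})\|_2 \le \|\fl{b}^{\star(i)}-\fl{b}^{(i,\ell)}\|_2$. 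The noise term $\tfrac1m(\fl{X}^{(i)}\fl{U}^{+(\ell-1)})^{\s{T}}\fl{z}^{(i)}$ is, conditionally on $\fl{X}^{(i)}$, Gaussian with covariance $\tfrac{\sigma^2}{m^2}(\fl{X}^{(i)}\fl{U}^{+(\ell-1)})^{\s{T}}(\fl{X}^{(i)}\fl{U}^{+(\ell-1)})$, giving a bound of order $\sigma\sqrt{r\log^2(r\delta^{-1})/m}$ after a net/operator-norm argument. Collecting these and taking a union bound over the pieces gives the per-task bound on $\|\fl{h}^{(i,\ell)}\|_2$.

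For the aggregated bound on $\|\fl{H}^{(\ell)}\|_{\s{F}} = \sqrt{\sum_i \|\fl{h}^{(i,\ell)}\|_2^2}$, I would sum the squared per-task bounds and simplify term by term: $\sum_i \|\fl{w}^{\star(i)}\|_2^2 = \tfrac{t}{r}\Tr((\tfrac{r}{t})(\fl{W}^{\star})^{\s{T}}\fl{W}^{\star}) \le \tfrac{t}{r}\cdot r\lambda_1^\star$, giving the $\sqrt{(t/r)\lambda_1^\star}$ factor on the representation term; for the sparse term, $\sum_i \big(\sqrt{k}\|\fl{U}^{+(\ell-1)}\|_{2,\infty}\|\fl{b}^{\star(i)}-\fl{b}^{(i,\ell)}\|_\infty\big)^2$ — here I would exploit Assumption A1 (row sparsity $\zeta$) to write the Frobenius-type sum over the columns of $\fl{B}^\star - \fl{B}^{(\ell)}$ as $\sqrt{k\zeta}\|\fl{U}^\star\|_{\s{F}}\|\fl{b}^{\star(i)} - \fl{b}^{(i,\ell)}\|_\infty$ (bounding the worst-case $\ell_\infty$ over tasks), and the noise term aggregates to $\sigma\sqrt{rt\log^2(r\delta^{-1})/m}$. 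The main obstacle I anticipate is getting the sparse-estimation contribution to aggregate correctly: one must carefully use \emph{both} the column sparsity $k$ (to pass from $\ell_2$ to $\ell_\infty$ via the incoherence $\|\fl{U}^{+(\ell-1)}\|_{2,\infty}$) \emph{and} the row sparsity $\zeta$ (to sum over tasks without losing a factor of $t$), and to make sure the incoherence of the \emph{current} iterate $\fl{U}^{+(\ell-1)}$ — controlled by the inductive hypothesis \eqref{eq:rec-5}, $\nu^{(\ell-1)} < 1/(181c)$ — is what enters, rather than that of $\fl{U}^\star$. Handling the $\|\fl{U}^{+(\ell-1)}\|_{\s{F}} = \sqrt{r}$ factors in the orthogonal-component terms cleanly, and tracking which $\log$ factors come from the net over the sphere versus the union over tasks, will be the other bookkeeping-heavy part.
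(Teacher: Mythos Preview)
Your overall strategy is right and closely tracks the paper's: write $\fl{h}^{(i,\ell)}$ as the inverse Gram matrix times a sum of representation-error, sparse-error, and noise pieces; bound the inverse Gram matrix via an $\epsilon$-net over $\bb{R}^r$; then handle each piece separately and aggregate. Your parallel/perpendicular decomposition of $\fl{v}$ is a valid alternative to the paper's ``mean $+$ fluctuation'' split (the paper writes $\tfrac{1}{m}(\fl{U}^{+(\ell-1)})^{\s{T}}(\fl{X}^{(i)})^{\s{T}}\fl{X}^{(i)}\fl{v} = (\fl{U}^{+(\ell-1)})^{\s{T}}\fl{v} + (\fl{U}^{+(\ell-1)})^{\s{T}}(\tfrac{1}{m}(\fl{X}^{(i)})^{\s{T}}\fl{X}^{(i)}-\fl{I})\fl{v}$ and applies Lemma~\ref{lemma:useful1} coordinate-wise to the second part); both lead to the same conclusion.

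There is, however, a real gap in your treatment of the representation-error bias. You write
\[
\|(\fl{U}^{+(\ell-1)})^{\s{T}}\fl{g}^{(i)}\|_2 \;\le\; \|\fl{U}^{+(\ell-1)} - \fl{U}^{\star}\fl{Q}^{(\ell-1)}\|_{\s{F}}\cdot\|(\fl{Q}^{(\ell-1)})^{-1}\|\cdot\|\fl{w}^{\star(i)}\|_2,
\]
which is only \emph{linear} in $\|\fl{U}^{+(\ell-1)} - \fl{U}^{\star}\fl{Q}^{(\ell-1)}\|_{\s{F}}$. But the lemma as stated requires the \emph{quadratic} contribution
\[
\|\fl{U}^{+(\ell-1)} - \fl{U}^{\star}\fl{Q}^{(\ell-1)}\|_{\s{F}}^{2}\,\|(\fl{Q}^{(\ell-1)})^{-1}\|\,\|\fl{w}^{\star(i)}\|_2
\]
from this term (look at the first summand inside the parenthesis). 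The missing idea is the idempotence of the projector: since $\fl{Q}^{(\ell-1)} = (\fl{U}^{\star})^{\s{T}}\fl{U}^{+(\ell-1)}$, one has
\[
(\fl{U}^{+(\ell-1)})^{\s{T}}\big(\fl{U}^{\star}\fl{Q}^{(\ell-1)} - \fl{U}^{+(\ell-1)}\big)
= -(\fl{U}^{+(\ell-1)})^{\s{T}}\big(\fl{I} - \fl{U}^{\star}(\fl{U}^{\star})^{\s{T}}\big)\fl{U}^{+(\ell-1)}
= -(\fl{U}^{+(\ell-1)})^{\s{T}}\big(\fl{I} - \fl{U}^{\star}(\fl{U}^{\star})^{\s{T}}\big)^{2}\fl{U}^{+(\ell-1)},
\]
which factors as a product of two copies of $(\fl{I} - \fl{U}^{\star}(\fl{U}^{\star})^{\s{T}})\fl{U}^{+(\ell-1)} = \fl{U}^{+(\ell-1)} - \fl{U}^{\star}\fl{Q}^{(\ell-1)}$ and yields the quadratic bound. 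Without this, your bound on $\|\fl{h}^{(i,\ell)}\|_2$ would be too loose by exactly the factor $\|\fl{U}^{+(\ell-1)} - \fl{U}^{\star}\fl{Q}^{(\ell-1)}\|_{\s{F}}$ on the dominant term, and the resulting inequality would not match the lemma (nor would it support the geometric decrease used downstream in Corollary~\ref{inductive-corollary:optimize_dp-lrs-h,H-bounds}).

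The remaining steps --- the $\sqrt{k}\,\|\fl{U}^{+(\ell-1)}\|_{2,\infty}$ bound on the sparse bias via support restriction, the noise bound, and the Frobenius aggregation using $\sum_i \|\fl{w}^{\star(i)}\|_2^2 \le (t/r)\,r\lambda_1^{\star}$ and the $k,\zeta$ sparsity structure --- are all in line with the paper. (Your worry about $\fl{U}^{+(\ell-1)}$ versus $\fl{U}^{\star}$ in the $\sqrt{k\zeta}\,\|\cdot\|_{\s{F}}$ factor is harmless: both are orthonormal with Frobenius norm $\sqrt{r}$.)
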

\begin{proof}
According to the update step \eqref{optimize_dp-lrs-w-update}, we have 
\begin{align}
    \fl{w}^{(i, \ell)} - (\fl{Q}^{(\ell-1)})^{-1}\fl{w}^{\star(i)} &= \Big(\frac{(\fl{X}^{(i)}\fl{U}^{+(\ell-1)})^{\s{T}}(\fl{X}^{(i)}\fl{U}^{+(\ell-1)})}{m}\Big)^{-1}\cdot\nonumber\\
    &\qquad \Big(\frac{(\fl{X}^{(i)}\fl{U}^{+(\ell-1)})^{\s{T}}(\fl{y}^{(i)} -  \fl{X}^{(i)}\fl{b}^{(i, \ell)})}{m}\Big) - (\fl{Q}^{(\ell-1)})^{-1}\fl{w}^{\star(i)}\nonumber
\end{align}
\begin{align}
    &\iff \fl{h}^{(i, \ell)}\nonumber\\
    &:=\underbrace{\Big(\frac{(\fl{X}^{(i)}\fl{U}^{+(\ell-1)})^{\s{T}}(\fl{X}^{(i)}\fl{U}^{+(\ell-1)})}{m}\Big)^{-1}}_{\fl{A}}\cdot\nonumber\\
    &\qquad \underbrace{\Big(\frac{(\fl{X}^{(i)}\fl{U}^{+(\ell-1)})^{\s{T}}(\fl{y}^{(i)} -  \fl{X}^{(i)}\fl{b}^{(i, \ell)})}{m} - \frac{(\fl{X}^{(i)}\fl{U}^{+(\ell-1)})^{\s{T}}(\fl{X}^{(i)}\fl{U}^{+(\ell-1)})(\fl{Q}^{(\ell-1)})^{-1}\fl{w}^{\star(i)}}{m}\Big)}_{\fl{z}}.  \label{eq:lemma-optimize_dp-lrs-wbound-w-unplugged}
\end{align}
Therefore, 
\begin{align}
    \|\fl{h}^{(i, \ell)}\|_\infty \leq \|\fl{A}\|\|\fl{z}\|_\infty \qquad \text{and} \qquad \|\fl{h}^{(i, \ell)}\|_2 \leq \|\fl{A}\|\|\fl{z}\|_2. \label{eq:lemma-optimize_dp-lrs-wbound-w-unplugged-2}
\end{align}
We will analyse the terms $\fl{A}$ and $\fl{z}$ separately.\\\\
\textbf{Analysis of $\fl{A}$:}\\\\
Note that:
\begin{align}
    \fl{A}^{-1} &= \frac{1}{m}(\fl{X}^{(i)}\fl{U}^{+(\ell-1)})^{\s{T}}(\fl{X}^{(i)}\fl{U}^{+(\ell-1)})\nonumber\\
    &= \frac{1}{m}(\fl{U}^{+(\ell-1)})^{\s{T}}(\fl{X}^{(i)})^{\s{T}}\fl{X}^{(i)}\fl{U}^{+(\ell-1)}\nonumber\\
    &= \frac{1}{m}(\fl{U}^{+(\ell-1)})^{\s{T}}\Big(\sum_j\fl{x}^{(i)}_j(\fl{x}^{(i)}_j)^{\s{T}}\Big)\fl{U}^{+(\ell-1)}\nonumber\\
    &= \frac{1}{m}\sum_j(\fl{U}^{+(\ell-1)})^{\s{T}}\fl{x}^{(i)}_j(\fl{x}^{(i)}_j)^{\s{T}}\fl{U}^{+(\ell-1)}.\label{eq:lemma-optimize_dp-lrs-wbound-Ainv-unplugged}
\end{align}
Now, let $\fl{\cV} \triangleq \{\fl{v} \in \bR^r | \|\fl{v}\| = 1\}$. Then for $\epsilon \leq 1$, there exists an $\epsilon$-net, $N_\epsilon \subset \fl{\cV}$, of size $(1 + 2/\epsilon)^
r$ w.r.t the Euclidean norm, i.e. $\forall$  $\fl{v} \in \fl{\cV}$, $\exists$ $\fl{v}' \in N_\epsilon$ s.t. $\|\fl{v} - \fl{v}'\|_2 \leq \epsilon$.
Then for any $\fl{v} \in N_\epsilon$,
\begin{align}
    &\fl{v}^{\s{T}}\Big(\frac{1}{m}\sum_j(\fl{U}^{+(\ell-1)})^{\s{T}}\fl{x}^{(i)}_j(\fl{x}^{(i)}_j)^{\s{T}}\fl{U}^{+(\ell-1)}\Big)\fl{v} \nonumber\\
    &\qquad = \frac{1}{m}\sum_j\Big(\fl{v}^{\s{T}}(\fl{U}^{+(\ell-1)})^{\s{T}}\Big)\fl{x}^{(i)}_j(\fl{x}^{(i)}_j)^{\s{T}}\Big(\fl{U}^{+(\ell-1)}\fl{v}\Big)\nonumber\\
    &\qquad = \frac{1}{m}\sum_j\Big(\fl{U}^{+(\ell-1)})\fl{v}\Big)^{\s{T}}\fl{x}^{(i)}_j(\fl{x}^{(i)}_j)^{\s{T}}\Big(\fl{U}^{+(\ell-1)}\fl{v}\Big). \label{eq:lemma-optimize_dp-lrs-wbound-A-temp1}
\end{align}
Further, note that 
\begin{align}
    \Big(\fl{U}^{+(\ell-1)}\fl{v}\Big)^{\s{T}}\Big(\fl{U}^{+(\ell-1)}\fl{v}\Big) &= \s{Tr}\Big(\Big(\fl{U}^{+(\ell-1)}\fl{v}\Big)^{\s{T}}\Big(\fl{U}^{+(\ell-1)}\fl{v}\Big)\Big)\nonumber\\
    &= \s{Tr}\Big(\fl{v}^{\s{T}}\Big((\fl{U}^{+(\ell-1)})^{\s{T}}\fl{U}^{+(\ell-1)}\Big)\fl{v}\Big). \label{eq:lemma-optimize_dp-lrs-wbound-A-temp2}
\end{align}
Using \eqref{eq:lemma-optimize_dp-lrs-wbound-A-temp1} and \eqref{eq:lemma-optimize_dp-lrs-wbound-A-temp2} in Lemma~\ref{lemma:useful1} with $\fl{a} = \fl{b} = \fl{U}^{+(\ell-1)}\fl{v}$ gives
\begin{align}
    &\left|\fl{v}^{\s{T}}\Big(\frac{1}{m}\sum_j(\fl{U}^{+(\ell-1)})^{\s{T}}\fl{x}^{(i)}_j(\fl{x}^{(i)}_j)^{\s{T}}\fl{U}^{+(\ell-1)} - (\fl{U}^{+(\ell-1)})^{\s{T}}\fl{U}^{+(\ell-1)}\Big)\Big)\fl{v}\right| \nonumber\\
    &\qquad \leq c\|\fl{U}^{+(\ell-1)}\fl{v}\|^2_2\max\Big(\sqrt{\frac{\log (1/\delta_0)}{m}},\frac{\log (1/\delta_0)}{m}\Big)\nonumber\\
    \implies  &\|\fl{v}^{\s{T}}\fl{E}\fl{v}\| \leq c\|\fl{U}^{+(\ell-1)}\fl{v}\|^2_2\max\Big(\sqrt{\frac{\log (|N_\epsilon|/\delta_0)}{m}},\frac{\log (|N_\epsilon|/\delta_0)}{m}\Big)\nonumber\\
     &\leq c\|\fl{U}^{+(\ell-1)}\fl{v}\|^2_2\max\Big(\sqrt{\frac{\log((1+2/\epsilon)^r/\delta_0)}{m}},\frac{\log ((1+2/\epsilon)^r/\delta_0)}{m}\Big) \quad \forall \fl{v} \in N_\epsilon\label{eq:lemma-optimize_dp-lrs-wbound-A-temp3}
\end{align}
where $\fl{E} \triangleq \frac{1}{m}\sum_j(\fl{U}^{+(\ell-1)})^{\s{T}}\fl{x}^{(i)}_j(\fl{x}^{(i)}_j)^{\s{T}}\fl{U}^{+(\ell-1)} - (\fl{U}^{+(\ell-1)})^{\s{T}}\fl{U}^{+(\ell-1)}$. Since $\fl{E}$ is symmetric, therefore $\|\fl{E}\| = (\fl{v}')^{\s{T}}\fl{E}\fl{v}'$ where $\fl{v}' \in \fl{\s{V}}$ is the largest eigenvector of $\fl{E}$. Further, $\exists$ $\fl{v} \in N_\epsilon$ s.t. $\|\fl{v}' - \fl{v}\| \leq \epsilon$. This implies
\begin{align}
    \|\fl{E}\| = (\fl{v}')^{\s{T}}\fl{E}\fl{v}' &= (\fl{v}' - \fl{v})^{\s{T}}\fl{E}\fl{v} + (\fl{v}')^{\s{T}}\fl{E}(\fl{v}' - \fl{v}) + \fl{v}^{\s{T}}\fl{E}\fl{v}\nonumber\\ 
    & \leq \|\fl{v}' - \fl{v}\|\|\fl{E}\|\|\fl{v}\| + \|\fl{v}'\|\|\fl{E}\|\|\fl{v}' - \fl{v}\| + \fl{v}^{\s{T}}\fl{E}\fl{v}\nonumber\\ 
    & \leq 2\epsilon\|\fl{E}\| + \fl{v}^{\s{T}}\fl{E}\fl{v}\nonumber\\ 
    \implies \|\fl{E}\| &\leq \frac{\fl{v}^{\s{T}}\fl{E}\fl{v}}{1 - 2\epsilon}.\label{eq:lemma-optimize_dp-lrs-wbound-A-temp4}
\end{align}
Using \eqref{eq:lemma-optimize_dp-lrs-wbound-A-temp3} and \eqref{eq:lemma-optimize_dp-lrs-wbound-A-temp4} and setting $\epsilon \gets 1/4$ and $c \gets 2c\sqrt{\log(9)}$ then gives:
\begin{align}
    \|\fl{E}\| \leq c\|\fl{U}^{+(\ell-1)}\fl{v}\|^2_2\max\sqrt{\frac{r\log(1/ \delta_0)}{m}}\label{eq:lemma-optimize_dp-lrs-wbound-E-value}
\end{align}
Using \eqref{eq:lemma-optimize_dp-lrs-wbound-E-value} in \eqref{eq:lemma-optimize_dp-lrs-wbound-Ainv-unplugged} then gives
\begin{align}
    \|\fl{A}^{-1}\| &\geq \|\fl{U}^{+(\ell-1)}\fl{v}\|^2_2\Big(1 - c\sqrt{\frac{r\log(1/ \delta_0)}{m}}\Big)\nonumber\\
    &\geq \lambda_{\min}\Big((\fl{U}^{+(\ell-1)})^{\s{T}}\fl{U}^{+(\ell-1)}\Big)\Big(1 - c\sqrt{\frac{r\log(1/ \delta_0)}{m}}\Big)\nonumber\\
    \implies \|\fl{A}\| &\leq \frac{1}{\lambda_{\min}\Big((\fl{U}^{+(\ell-1)})^{\s{T}}\fl{U}^{+(\ell-1)}\Big)\Big(1 - c\sqrt{\frac{r\log(1/ \delta_0)}{m}}\Big)}\nonumber\\
    &= \frac{1}{1 - c\sqrt{\frac{r\log(1/ \delta_0)}{m}}}\label{eq:lemma-optimize_dp-lrs-wbound-A-bound}\\
\end{align}
since $(\fl{U}^{+(\ell-1)})^\s{T}\fl{U}^{+(\ell-1)} = \fl{I}$.\\\\
\textbf{Analysis of $\fl{z}$:}\\\\
Similarly, we have
\begin{align}
    \fl{z}&= \frac{1}{m}(\fl{X}^{(i)}\fl{U}^{+(\ell-1)})^{\s{T}}(\fl{y}^{(i)} -  \fl{X}^{(i)}\fl{b}^{(i, \ell)}) - \frac{1}{m}(\fl{X}^{(i)}\fl{U}^{+(\ell-1)})^{\s{T}}(\fl{X}^{(i)}\fl{U}^{+(\ell-1)})(\fl{Q}^{(\ell-1)})^{-1}\fl{w}^{\star(i)}\nonumber\\
    &= \underbrace{\frac{1}{m}(\fl{U}^{+(\ell-1)})^{\s{T}}(\fl{X}^{(i)})^{\s{T}}\fl{X}^{(i)}(\fl{U}^{\star}\fl{Q}^{(\ell-1)} - \fl{U}^{+(\ell-1)})(\fl{Q}^{(\ell-1)})^{-1}\fl{w}^{\star(i)}}_{:= \fl{d}^{(i, \ell)}_1} \nonumber\\
    &\qquad + \underbrace{\frac{1}{m}(\fl{U}^{+(\ell-1)})^{\s{T}}(\fl{X}^{(i)})^{\s{T}}\fl{X}^{(i)}(\fl{b}^{\star(i)} - \fl{b}^{(i, \ell)})}_{:= \fl{d}^{(i, \ell)}_2}+\underbrace{\frac{1}{m}(\fl{U}^{+(\ell-1)})^{\s{T}}(\fl{X}^{(i)})^{\s{T}}\fl{z}^{(i)}}_{\fl{d}_3^{(i,\ell)}}.
\end{align}

\underline{Analysis of $\fl{d}^{(i, \ell)}_3$:}\\
Let us condition on the vector $\fl{z}^{(i)}$. In that case $(\fl{X}^{(i)})^{\s{T}}\fl{z}^{(i)}$ is a $d\times 1$ vector, each of whose entry is generated independently according to $\ca{N}(0,\lr{\fl{z}^{(i)}}_2^2)$. Therefore, if we consider any vector $\fl{v}$ satisfying $\lr{\fl{v}}_2=1$, we have
\begin{align}
    \fl{v}^{\s{T}} (\fl{X}^{(i)})^{\s{T}}\fl{z}^{(i)} \sim \ca{N}(0,\lr{\fl{z}^{(i)}}_2^2)\nonumber
\end{align}
and therefore, with probability $1-\delta$, we must have 
\begin{align}
    \lr{\frac{1}{m}(\fl{U}^{+(\ell-1)})^{\s{T}}(\fl{X}^{(i)})^{\s{T}}\fl{z}^{(i)}}_2  \le \sigma\sqrt{\frac{r\log^2 (r\delta^{-1})}{m}}.\nonumber
\end{align}

\underline{Analysis of $\fl{d}^{(i, \ell)}_1$:}
\begin{align}
    \fl{d}^{(i, \ell)}_1 &= \frac{1}{m}(\fl{U}^{+(\ell-1)})^{\s{T}}(\fl{X}^{(i)})^{\s{T}}\fl{X}^{(i)}(\fl{U}^{\star}\fl{Q}^{(\ell-1)} - \fl{U}^{+(\ell-1)})(\fl{Q}^{(\ell-1)})^{-1}\fl{w}^{\star(i)}\nonumber\\
    &= \underbrace{(\fl{U}^{+(\ell-1)})^{\s{T}}\Big(\frac{1}{m}(\fl{X}^{(i)})^{\s{T}}\fl{X}^{(i)}-\fl{I}\Big)(\fl{U}^{\star}\fl{Q}^{(\ell-1)} - \fl{U}^{+(\ell-1)})(\fl{Q}^{(\ell-1)})^{-1}\fl{w}^{\star(i)}}_{\fl{d}^{(i, \ell)}_{1,1}}\nonumber \\
    &\qquad + \underbrace{(\fl{U}^{+(\ell-1)})^{\s{T}}(\fl{U}^{\star}\fl{Q}^{(\ell-1)}) - \fl{U}^{+(\ell-1)})(\fl{Q}^{(\ell-1)})^{-1}\fl{w}^{\star(i)}}_{\fl{d}^{(i, \ell)}_{1,2}}.\label{eq:lemma-optimize_dp-lrs-wbound-z1-unpluggeed}
\end{align}
Note that
\begin{align}
    \E{\fl{d}^{(i, \ell)}_{1,1}} &= \E{(\fl{U}^{+(\ell-1)})^{\s{T}}\Big(\frac{1}{m}(\fl{X}^{(i)})^{\s{T}}\fl{X}^{(i)}-\fl{I}\Big)(\fl{U}^{\star}\fl{Q}^{(\ell-1)} - \fl{U}^{+(\ell-1)})(\fl{Q}^{(\ell-1)})^{-1}\fl{w}^{\star(i)}}\nonumber\\
    &= \vzero.\nonumber
\end{align}
Further,
\begin{align}
    (z^{(i, \ell)}_{1,1})_k &= \frac{1}{m}\sum_j(\fl{u}^{(k, \ell-1)})^{\s{T}}\fl{x}^{(i)}_j(\fl{x}^{(i)}_j)^{\s{T}}(\fl{U}^{\star}\fl{Q}^{(\ell-1)} - \fl{U}^{+(\ell-1)})(\fl{Q}^{(\ell-1)})^{-1}\fl{w}^{\star(i)} \nonumber\\
    &\qquad - (\fl{u}^{(k, \ell-1)})^{\s{T}}(\fl{U}^{\star}\fl{Q}^{(\ell-1)} - \fl{U}^{+(\ell-1)})(\fl{Q}^{(\ell-1)})^{-1}\fl{w}^{\star(i)}.\nonumber
\end{align}
Using Lemma \ref{lemma:useful1} in the above with $\fl{a} = \fl{u}^{(k, \ell-1)}$ and $\fl{b} = (\fl{U}^{\star}\fl{Q} - \fl{U}^{+(\ell-1)})\fl{Q}^{-1}\fl{w}^{\star(i)}$, we get
\begin{align}
    (z^{(i, \ell)}_{1,1})_k &\leq c\sqrt{\frac{\log(1/\delta_0)}{m}}\|\fl{u}^{(k, \ell-1)}\|_2\|(\fl{U}^{\star}\fl{Q}^{(\ell-1)} - \fl{U}^{+(\ell-1)})(\fl{Q}^{(\ell-1)})^{-1}\fl{w}^{\star(i)}\|_2.\label{eq:lemma-optimize_dp-lrs-wbound-z11k-bound}
\end{align}
Taking the Union Bound overall entries $k \in [r]$, we have
\begin{align}
    \|\fl{d}^{(i, \ell)}_{1,1}\|_2 &= \sqrt{\sum_{k \in [r]}|(z^{(i, \ell)}_{1,1})_k|^2}\nonumber\\
    &\leq c\sqrt{\frac{\log(r/\delta_0)}{m}}\sqrt{\sum_{k \in [r]}\|\fl{u}^{(k, \ell-1)}\|_2^2}\cdot\|(\fl{U}^{\star}\fl{Q}^{(\ell-1)} - \fl{U}^{+(\ell-1)})(\fl{Q}^{(\ell-1)})^{-1}\fl{w}^{\star(i)}\|_2\nonumber\\
    &= c\sqrt{\frac{\log(r/\delta_0)}{m}}\|\fl{U}^{+(\ell-1)}\|_{\s{F}}\|(\fl{U}^{\star}\fl{Q}^{(\ell-1)} - \fl{U}^{+(\ell-1)})(\fl{Q}^{(\ell-1)})^{-1}\fl{w}^{\star(i)}\|_2 .\label{eq:lemma-optimize_dp-lrs-wbound-z11k-2norm-bound}
\end{align}
Further,
\begin{align}
    \|\fl{d}^{(i, \ell)}_{1,2}\|_2 &= \|(\fl{U}^{+(\ell-1)})^{\s{T}}(\fl{U}^{\star}\fl{Q}^{(\ell-1)} - \fl{U}^{+(\ell-1)})(\fl{Q}^{(\ell-1)})^{-1}\fl{w}^{\star(i)}\|_2\nonumber\\
    &= \|(\fl{U}^{+(\ell-1)})^{\s{T}}(\fl{I} - \fl{U}^{\star}(\fl{U}^{\star})^{\s{T}})\fl{U}^{+(\ell-1)}(\fl{Q}^{(\ell-1)})^{-1}\fl{w}^{\star(i)}\|_2\nonumber\\
    &= \|(\fl{U}^{+(\ell-1)})^{\s{T}}(\fl{I} - \fl{U}^{\star}(\fl{U}^{\star})^{\s{T}})^2\fl{U}^{+(\ell-1)}(\fl{Q}^{(\ell-1)})^{-1}\fl{w}^{\star(i)}\|_2\nonumber\\
    &= \Big\|\Big((\fl{U}^{+(\ell-1)})^{\s{T}} - (\fl{U}^{+(\ell-1)})^{\s{T}}\fl{U}^{\star}(\fl{U}^{\star})^{\s{T}}\Big)\cdot\nonumber\\
    &\qquad \Big(\fl{U}^{+(\ell-1)} - \fl{U}^{\star}(\fl{U}^{\star})^{\s{T}}\fl{U}^{+(\ell-1)}\Big)(\fl{Q}^{(\ell-1)})^{-1}\fl{w}^{\star(i)}\Big\|_2\nonumber\\
    &\leq \|\fl{U}^{+(\ell-1)} - \fl{U}^{\star}(\fl{U}^{\star})^{\s{T}}\fl{U}^{+(\ell-1)}\|^2_\s{F}\|(\fl{U}^{(\ell-1)} - \fl{U}^{\star}(\fl{U}^{\star})^{\s{T}}\fl{U}^{+(\ell-1)})(\fl{Q}^{(\ell-1)})^{-1}\fl{w}^{\star(i)}\|_2\nonumber\\
    &= \|\fl{U}^{+(\ell-1)} - \fl{U}^{\star}\fl{Q}^{(\ell-1)}\|_\s{F}\|(\fl{U}^{+(\ell-1)} - \fl{U}^{\star}\fl{Q}^{(\ell-1)})(\fl{Q}^{(\ell-1)})^{-1}\fl{w}^{\star(i)}\|_2\nonumber\\
    &\leq \|\fl{U}^{+(\ell-1)} - \fl{U}^{\star}\fl{Q}^{(\ell-1)}\|^2_\s{F}\|(\fl{Q}^{(\ell-1)})^{-1}\|\|\fl{w}^{\star(i)}\|_2.\label{eq:lemma-optimize_dp-lrs-wbound-z12-2norm-bound}
\end{align}
We will also use the sharper bound below later for finding the Frobenius norm of $\fl{H}^{(\ell)}$
\begin{align}\label{eq:sharp1}
    \resizebox{0.99\textwidth}{!}{$\|\fl{d}^{(i, \ell)}_{1,2}\|_2 = \|(\fl{U}^{+(\ell-1)})^{\s{T}} - (\fl{U}^{+(\ell-1)})^{\s{T}}\fl{U}^{\star}(\fl{U}^{\star})^{\s{T}}\|_{\s{F}}\|(\fl{U}^{+(\ell-1)} - \fl{U}^{\star}\fl{Q}^{(\ell-1)})(\fl{Q}^{(\ell-1)})^{-1}\fl{w}^{\star(i)}\|_2$}.
\end{align}
Using \eqref{eq:lemma-optimize_dp-lrs-wbound-z11k-2norm-bound} and \eqref{eq:lemma-optimize_dp-lrs-wbound-z12-2norm-bound} in \eqref{eq:lemma-optimize_dp-lrs-wbound-z1-unpluggeed}, we have
\begin{align}
    \|\fl{d}^{(i, \ell)}_1\|_2 &\leq c\sqrt{\frac{\log(r/\delta_0)}{m}}\|\fl{U}^{+(\ell-1)}\|_{\s{F}}\|(\fl{U}^{\star}\fl{Q}^{(\ell-1)} - \fl{U}^{+(\ell-1)})(\fl{Q}^{(\ell-1)})^{-1}\fl{w}^{\star(i)}\|_2 \\
    &\qquad + \|\fl{U}^{+(\ell-1)} - \fl{U}^{\star}\fl{Q}^{(\ell-1)}\|^2_\s{F}\|(\fl{Q}^{(\ell-1)})^{-1}\|\|\fl{w}^{\star(i)}\|_2\\
    &\leq \|\fl{U}^{+(\ell-1)} - \fl{U}^{\star}\fl{Q}^{(\ell-1)}\|_\s{F}\|(\fl{Q}^{(\ell-1)})^{-1}\|\|\fl{w}^{\star(i)}\|_2\cdot\\
    &\qquad \Big(\|\fl{U}^{+(\ell-1)} - \fl{U}^{\star}\fl{Q}^{(\ell-1)}\|_\s{F}+c\sqrt{\frac{\log(r/\delta_0)}{m}}\|\fl{U}^{+(\ell-1)}\|_{\s{F}}\Big)\label{eq:lemma-optimize_dp-lrs-wbound-z1-2norm-bound}
\end{align}

As before, using \eqref{eq:lemma-optimize_dp-lrs-wbound-z11k-2norm-bound} and \eqref{eq:sharp1}, we have
\begin{align}
   \|\fl{d}^{(i, \ell)}_1\|_2 &\leq \|(\fl{U}^{+(\ell-1)} - \fl{U}^{\star}\fl{Q}^{(\ell-1)})(\fl{Q}^{(\ell-1)})^{-1}
   \fl{w}^{\star(i)}\|_2\cdot\nonumber\\
   &\qquad \Big(\|\fl{U}^{+(\ell-1)} - \fl{U}^{\star}\fl{Q}^{(\ell-1)}\|_\s{F}+c\sqrt{\frac{\log(r/\delta_0)}{m}}\|\fl{U}^{(\ell-1)}\|_{\s{F}}\Big).\label{eq:sharp2}
\end{align}
\textbf{Analysis of $\fl{d}^{(i, \ell)}_2$:}\\
Note that
\begin{align}
    \fl{d}^{(i, \ell)}_2 &= \frac{1}{m}(\fl{U}^{+(\ell-1)})^{\s{T}}(\fl{X}^{(i)})^{\s{T}}\fl{X}^{(i)}(\fl{b}^{\star(i)} - \fl{b}^{(i, \ell)})\nonumber\\
    &= \underbrace{(\fl{U}^{+(\ell-1)})^{\s{T}}\Big(\frac{1}{m}(\fl{X}^{(i)})^{\s{T}}\fl{X}^{(i)}-\fl{I}\Big)(\fl{b}^{\star(i)} - \fl{b}^{(i, \ell)})}_{\fl{d}^{(i, \ell)}_{2,1}} + \underbrace{(\fl{U}^{+(\ell-1)})^{\s{T}}(\fl{b}^{\star(i)} - \fl{b}^{(i, \ell)})}_{\fl{d}^{(i, \ell)}_{2,2}}\nonumber\\
    \text{and } \E{\fl{d}^{(i, \ell)}_{2,1}} &= \E{(\fl{U}^{+(\ell-1)})^{\s{T}}\Big(\frac{1}{m}(\fl{X}^{(i)})^{\s{T}}\fl{X}^{(i)}-\fl{I}\Big)(\fl{b}^{\star(i)} - \fl{b}^{(i, \ell)})} = \vzero.\nonumber
\end{align}
Further,
\begin{align}
    (z^{(i, \ell)}_{2,1})_k &= \frac{1}{m}\sum_j(\fl{u}^{(k, \ell-1)})^{\s{T}}\fl{x}^{(i)}_j(\fl{x}^{(i)}_j)^{\s{T}}(\fl{b}^{\star(i)} - \fl{b}^{(i, \ell)}) - (\fl{u}^{(k, \ell-1)})^{\s{T}}(\fl{b}^{\star(i)} - \fl{b}^{(i, \ell)}).\nonumber
\end{align}
Using Lemma \ref{lemma:useful1} in the above with $\fl{a} = \fl{u}^{(k, \ell-1)}$ and $\fl{b} = (\fl{b}^{\star(i)} - \fl{b}^{(i, \ell)})$, we get
\begin{align}
    (z^{(i, \ell)}_{2,1})_k &\leq c\sqrt{\frac{\log(1/\delta_0)}{m}}\|\fl{u}^{(k, \ell-1)}\|_2\|\fl{b}^{\star(i)} - \fl{b}^{(i, \ell)}\|_2.\label{eq:lemma-optimize_dp-lrs-wbound-z21k-bound}
\end{align}
Taking the Union Bound overall entries $k \in [r]$, using the above we have
\begin{align}
    \|\fl{d}^{(i, \ell)}_{2,1}\|_2 &= \sqrt{\sum_{k \in [r]}|(z^{(i, \ell)}_{2,1})_k|^2}\nonumber\\
    &\leq c\sqrt{\frac{\log(r/\delta_0)}{m}}\sqrt{\sum_{k \in [r]}\|\fl{u}^{(k, \ell-1)}\|_2^2}\cdot\|\fl{b}^{\star(i)} - \fl{b}^{(i, \ell)}\|_2\nonumber\\
    &= c\sqrt{\frac{\log(r/\delta_0)}{m}}\|\fl{U}^{+(\ell-1)}\|_{\s{F}}\|\fl{b}^{\star(i)} - \fl{b}^{(i, \ell)}\|_2 .\label{eq:lemma-optimize_dp-lrs-wbound-z21k-2norm-bound}
\end{align}
Further,
\begin{align}
    \|\fl{d}^{(i, \ell)}_{2,2}\|_2 &= \|(\fl{U}^{+(\ell-1)})^{\s{T}}(\fl{b}^{\star(i)} - \fl{b}^{(i, \ell)})\|_2\nonumber\\
    &= \|\Big((\fl{U}^{+(\ell-1)})^{\s{T}}(\fl{b}^{\star(i)} - \fl{b}^{(i, \ell)})\Big)_{\s{supp}(\fl{b}^{\star(i)})}\|_2\nonumber\\
    &\leq \|\fl{U}^{+(\ell-1)}_{\s{supp}(\fl{b}^{\star(i)})}\|_2\|(\fl{b}^{\star(i)} - \fl{b}^{(i, \ell)})_{\s{supp}(\fl{b}^{\star(i)})}\|_2\nonumber\\
    &\leq \sqrt{k}\|\fl{U}^{+(\ell-1)}\|_{2, \infty}\|\fl{b}^{\star(i)} - \fl{b}^{(i, \ell)}\|_2.\label{eq:lemma-optimize_dp-lrs-wbound-z22-2norm-bound}
\end{align}
Using \eqref{eq:lemma-optimize_dp-lrs-wbound-z21k-2norm-bound} and \eqref{eq:lemma-optimize_dp-lrs-wbound-z22-2norm-bound} we have
\begin{align}
    \|\fl{d}^{(i, \ell)}_2\|_2 \leq \|\fl{b}^{\star(i)} - \fl{b}^{(i, \ell)}\|_2\Big(\sqrt{k}\|\fl{U}^{+(\ell-1)}\|_{2, \infty}+c\sqrt{\frac{\log(r/\delta_0)}{m}}\|\fl{U}^{+(\ell-1)}\|_{\s{F}} \Big).\label{eq:lemma-optimize_dp-lrs-wbound-z2-2norm-bound}
\end{align}
Using \eqref{eq:lemma-optimize_dp-lrs-wbound-A-bound}, \eqref{eq:lemma-optimize_dp-lrs-wbound-z1-2norm-bound} and \eqref{eq:lemma-optimize_dp-lrs-wbound-z2-2norm-bound} we have
\begin{align}
    &\|\fl{h}^{(i, \ell)}\|_2 \leq \frac{1}{1 - c\sqrt{\frac{r\log(1/ \delta_0)}{m}}}\cdot\Big\{\|\fl{U}^{+(\ell-1)} - \fl{U}^{\star}\fl{Q}^{(\ell-1)}\|_\s{F}\|(\fl{Q}^{(\ell-1)})^{-1}\|\|\fl{w}^{\star(i)}\|_2\cdot\nonumber\\
    &\qquad \Big(\|\fl{U}^{+(\ell-1)} - \fl{U}^{\star}\fl{Q}^{(\ell-1)}\|_\s{F}+c\sqrt{\frac{\log(r/\delta_0)}{m}}\|\fl{U}^{(\ell-1)}\|_{\s{F}}\Big)\nonumber\\
    &\qquad +\|\fl{b}^{\star(i)} - \fl{b}^{(i, \ell)}\|_2\Big(\sqrt{k}\|\fl{U}^{+(\ell-1)}\|_{2, \infty}+c\sqrt{\frac{\log(r/\delta_0)}{m}}\|\fl{U}^{+(\ell-1)}\|_{\s{F}} \Big)+\sigma\sqrt{\frac{r\log^2 (r\delta^{-1})}{m}}\Big\}.\label{eq:lemma-optimize_dp-lrs-wbound-type1-hl-2norm-type1-bound}
\end{align}
Using \eqref{eq:lemma-optimize_dp-lrs-wbound-A-bound}, \eqref{eq:sharp2} and \eqref{eq:lemma-optimize_dp-lrs-wbound-z2-2norm-bound}, we also have the sharper bound
\begin{align}
    &\|\fl{h}^{(i, \ell)}\|_2 \leq \frac{1}{1 - c\sqrt{\frac{r\log(1/ \delta_0)}{m}}}\Big\{\|(\fl{U}^{+(\ell-1)} - \fl{U}^{\star}\fl{Q}^{(\ell-1)})(\fl{Q}^{(\ell-1)})^{-1}
   \fl{w}^{\star(i)}\|_2\cdot\nonumber\\
   &\qquad \Big(\|\fl{U}^{+(\ell-1)} - \fl{U}^{\star}\fl{Q}^{(\ell-1)}\|_\s{F}+c\sqrt{\frac{\log(r/\delta_0)}{m}}\|\fl{U}^{(\ell-1)}\|_{\s{F}}\Big)\nonumber\\
    &\qquad +\|\fl{b}^{\star(i)} - \fl{b}^{(i, \ell)}\|_2\Big(\sqrt{k}\|\fl{U}^{+(\ell-1)}\|_{2, \infty}+c\sqrt{\frac{\log(r/\delta_0)}{m}}\|\fl{U}^{+(\ell-1)}\|_{\s{F}} \Big)+\sigma\sqrt{\frac{r\log^2 (r\delta^{-1})}{m}}\Big\}.\label{eq:lemma-optimize_dp-lrs-wbound-hl-2norm-type2-bound}
\end{align}
Further note that $\sum_{i \in [t]} \|(\fl{U}^{+(\ell-1)} - \fl{U}^{\star}\fl{Q}^{(\ell-1)})(\fl{Q}^{(\ell-1)})^{-1}
   \fl{w}^{\star(i)}\|_2^2$:
\begin{align}
    &= \sum_{i \in [t]} \s{Tr}\Big(\Big((\fl{U}^{+(\ell-1)} - \fl{U}^{\star}\fl{Q}^{(\ell-1)})(\fl{Q}^{(\ell-1)})^{-1}
    \fl{w}^{\star(i)}\Big)^\s{T}\cdot\nonumber\\
    &\qquad\qquad \Big((\fl{U}^{+(\ell-1)} - \fl{U}^{\star}\fl{Q}^{(\ell-1)})(\fl{Q}^{(\ell-1)})^{-1}
    \fl{w}^{\star(i)}\Big)\Big)\nonumber\\
    &= \sum_{i \in [t]} \s{Tr}\Big((\fl{w}^{\star(i)})^\s{T}\Big((\fl{U}^{+(\ell-1)} - \fl{U}^{\star}\fl{Q}^{(\ell-1)})(\fl{Q}^{(\ell-1)})^{-1}
    \Big)^\s{T}\cdot\nonumber\\
    &\qquad\qquad \Big((\fl{U}^{+(\ell-1)} - \fl{U}^{\star}\fl{Q}^{(\ell-1)})(\fl{Q}^{(\ell-1)})^{-1}
    \Big)\fl{w}^{\star(i)}\Big)\nonumber\\
    &=  \s{Tr}\Big(\Big(\fl{U}^{+(\ell-1)} - \fl{U}^{\star}\fl{Q}^{(\ell-1)})(\fl{Q}^{(\ell-1)})^{-1}
    \Big)^\s{T}\cdot\nonumber\\
    &\qquad\qquad \Big((\fl{U}^{+(\ell-1)} - \fl{U}^{\star}\fl{Q}^{(\ell-1)})(\fl{Q}^{(\ell-1)})^{-1}
    \Big)\sum_{i \in [t]}\fl{w}^{\star(i)}(\fl{w}^{\star(i)})^\s{T}\Big)\nonumber\\
    &\leq  \s{Tr}\Big(\Big(\fl{U}^{+(\ell-1)} - \fl{U}^{\star}\fl{Q}^{(\ell-1)})(\fl{Q}^{(\ell-1)})^{-1}
    \Big)^\s{T}\cdot\nonumber\\
    &\qquad\qquad \Big((\fl{U}^{+(\ell-1)} - \fl{U}^{\star}\fl{Q}^{(\ell-1)})(\fl{Q}^{(\ell-1)})^{-1}
    \Big)\Big)\lambda_{\max}\Big(\sum_{i \in [t]}\fl{w}^{\star(i)}(\fl{w}^{\star(i)})^\s{T}\Big)\nonumber\\
    &=  \|\fl{U}^{+(\ell-1)} - \fl{U}^{\star}\fl{Q}^{(\ell-1)}\|_\s{F}^2\|(\fl{Q}^{(\ell-1)})^{-1}\|^2\cdot \frac{t}{r}\lambda^\star_1,\label{eq:lemma-optimize_dp-lrs-wbound-hl-useful1}
\end{align}
and $\sum_{i \in [t]} \|(\fl{U}^{+(\ell-1)})^{\s{T}}(\fl{b}^{\star(i)} - \fl{b}^{(i, \ell)})\|_2^2$
\begin{align}
    &= \sum_{i \in [t]} \sum_{j \in [r]}\Big((\fl{U}^{+(\ell-1, j)})^{\s{T}}(\fl{b}^{\star(i)} - \fl{b}^{(i, \ell)})\Big)^2\nonumber\\
    &= \sum_{i \in [t]} \sum_{j \in [r]}\Big( \sum_{p \in \s{supp}(\fl{b}^{\star(i)})}\fl{U}^{+(\ell-1, j)}_p(\fl{b}^{\star(i)}_p - \fl{b}^{(i, \ell)}_p)\Big)^2\nonumber\\
    &\leq k\sum_{i \in [t]} \sum_{j \in [r]}\sum_{p \in \s{supp}(\fl{b}^{\star(i)})}(\fl{U}^{+(\ell-1, j)}_p)^2(\fl{b}^{\star(i)}_p - \fl{b}^{(i, \ell)}_p)^2\nonumber\\
    &\leq k\zeta\sum_{j \in [r]}\sum_{p \in \s{supp}(\fl{b}^{\star(i)})}(\fl{U}^{+(\ell-1, j)}_p)^2\|\fl{b}^{\star(i)} - \fl{b}^{(i, \ell)}\|_\infty^2\nonumber\\
    &= k\zeta \|\fl{b}^{\star(i)} - \fl{b}^{(i, \ell)}\|_\infty^2 \sum_{j \in [r]}\sum_{p \in \s{supp}(\fl{b}^{\star(i)})}(\fl{U}^{+(\ell-1, j)}_p)^2\nonumber\\
    &\leq k\zeta \|\fl{U}^\star\|_\s{F}^2\|\fl{b}^{\star(i)} - \fl{b}^{(i, \ell)}\|_\infty^2.\nonumber
\end{align}
The above two equations with \eqref{eq:lemma-optimize_dp-lrs-wbound-hl-2norm-type2-bound} imply
\begin{align}
    &\|\fl{H}^{(\ell)}\|_\s{F} \leq \frac{1}{1 - c\sqrt{\frac{r\log(1/ \delta_0)}{m}}}\Big\{\|\fl{U}^{+(\ell-1)} - \fl{U}^{\star}\fl{Q}^{(\ell-1)}\|_\s{F}\|(\fl{Q}^{(\ell-1)})^{-1}\|\sqrt{\frac{t}{r}\lambda^\star_1}\cdot\nonumber\\
    &\qquad \Big(\|\fl{U}^{+(\ell-1)} - \fl{U}^{\star}\fl{Q}^{(\ell-1)}\|_\s{F}+c\sqrt{\frac{\log(r/\delta_0)}{m}}\|\fl{U}^{(\ell-1)}\|_{\s{F}}\Big)\nonumber\\
    &\qquad + \sqrt{k\zeta}\|\fl{U}^\star\|_\s{F}\|\fl{b}^{\star(i)} - \fl{b}^{(i, \ell)}\|_\infty + \sqrt{\sum_{i \in [t]} \Big(c\|\fl{b}^{\star(i)} - \fl{b}^{(i, \ell)}\|_2\sqrt{\frac{\log(r/\delta_0)}{m}}\|\fl{U}^{+(\ell-1)}\|_{\s{F}}\Big)^2}\nonumber\\
    &\qquad +\sigma\sqrt{\frac{rt\log^2 (r\delta^{-1})}{m}}\Big\}.\label{eq:lemma-optimize_dp-lrs-wbound-Hl-Fnorm-bound}
\end{align}
\end{proof}

\begin{coro}\label{inductive-corollary:optimize_dp-lrs-h,H-bounds}
    If $\s{B}_{\fl{U}^{(\ell-1)}} =  \cO\Big(\frac{1}{1\sqrt{r\mu^\star}}\Big)$, $\sqrt{\frac{r\log(1/ \delta_0)}{m}} = \cO(1)$, $\sqrt{\nu^{(\ell-1)}} = \cO\Big(\frac{1}{\sqrt{r\mu^\star}}\Big)$, $\sqrt{\frac{r^2\log(r/\delta_0)}{m}} = \cO\Big(\frac{1}{\sqrt{\mu^\star}}\Big)$, $\epsilon < \sqrt{\mu^\star\lambda_r^\star}$, $\sqrt{\frac{r^2\zeta}{t}} = \cO\Big(\frac{1}{\sqrt{\mu^\star}}\Big)$, $\Lambda' = \cO\Big(\sqrt{\frac{\lambda_r^\star}{\lambda_1^\star}}\Big)$ and Assumption \ref{assum:init} holds for iteration $\ell-1$, then w.p. $1-\cO(\delta_0)$
    \begin{align}
        & \resizebox{\textwidth}{!}{$\|\fl{h}^{(i, \ell)}\|_2 = \cO\Big(\frac{\max\{\epsilon, \|\fl{w}^{\star(i)}\|_2\}\s{B}_{\fl{U^{(\ell-1)}}}\sqrt{\frac{\lambda_r^\star}{\lambda_1^\star}}}{\sqrt{r\mu^\star}}\Big) + \cO\Big(\frac{\Lambda'\|\fl{w}^{\star(i)}\|_2}{\sqrt{r\mu^\star}}\Big) + \cO\Big(\frac{\Lambda}{\sqrt{r\mu^\star}}\Big) + \cO\Big(\sigma\sqrt{\frac{r\log^2 (r\delta^{-1})}{m}}\Big)$,}\nonumber\\
        & \resizebox{\textwidth}{!}{$\|\fl{H}^{(\ell)}\|_\s{F} \leq \cO\Big(\frac{\s{B}_{\fl{U^{(\ell-1)}}}\frac{\lambda_r^\star}{\lambda_1^\star}\sqrt{\frac{t}{r}\lambda^\star_1}}{\sqrt{r\mu^\star}}\Big) + \cO\Big(\Lambda'\sqrt{\frac{t}{r}\lambda^\star_r}\frac{1}{\sqrt{r\mu^\star}}\Big) + \cO\Big(\frac{1}{\sqrt{r\mu^\star}}\sqrt{\frac{t}{r}}\Lambda\Big) + \cO\Big(\sigma\sqrt{\frac{rt\log^2 (r\delta^{-1})}{m}}\Big)$.}\nonumber
    \end{align}
    \end{coro}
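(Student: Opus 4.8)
The plan is to substitute the inductive assumptions (Assumption~\ref{assum:init}) into the two bounds of Lemma~\ref{lemma:optimize_dp-lrs-wbound}, term by term, and then invoke the stated hypotheses on the smallness of $\s{B}_{\fl{U}^{(\ell-1)}}$, $\nu^{(\ell-1)}$, $m$, $t$, $\epsilon$, and $\Lambda'$ to collapse each factor into the claimed $\cO(\cdot)$ form. First I would record the elementary consequences of the hypotheses: since $1/2<\lambda_{\min}(\fl{Q}^{(\ell-1)})\le\lambda_{\max}(\fl{Q}^{(\ell-1)})<1$ (part 1 of Assumption~\ref{assum:init}), we have $\|(\fl{Q}^{(\ell-1)})^{-1}\|\le 2$; since $\fl{U}^{\star}$ is orthonormal, $\|\fl{U}^{\star}\|_{\s F}=\sqrt r$, and likewise $\|\fl{U}^{+(\ell-1)}\|_{\s F}=\sqrt r$ because $\fl{U}^{+(\ell-1)}$ is orthonormal; the hypothesis $\sqrt{r\log(1/\delta_0)/m}=\cO(1)$ (indeed $\le 1/2$ after absorbing constants) makes the prefactor $(1-c\sqrt{r\log(1/\delta_0)/m})^{-1}=\Theta(1)$; and part 5 of Assumption~\ref{assum:init} gives $\sqrt k\,\|\fl{U}^{+(\ell-1)}\|_{2,\infty}\le\sqrt{\nu^{(\ell-1)}}$.

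For the $\|\fl{h}^{(i,\ell)}\|_2$ bound I would use the \emph{sharp} form~\eqref{eq:lemma-optimize_dp-lrs-wbound-hl-2norm-type2-bound} rather than~\eqref{eq:lemma-optimize_dp-lrs-wbound-type1-hl-2norm-type1-bound}. The first line of~\eqref{eq:lemma-optimize_dp-lrs-wbound-hl-2norm-type2-bound} is $\|(\fl{U}^{+(\ell-1)}-\fl{U}^{\star}\fl{Q}^{(\ell-1)})(\fl{Q}^{(\ell-1)})^{-1}\fl{w}^{\star(i)}\|_2$ times $(\|\Delta(\fl{U}^{+(\ell-1)},\fl{U}^{\star})\|_{\s F}+c\sqrt{\log(r/\delta_0)/m}\,\sqrt r)$; the first factor is at most $\|\Delta(\fl{U}^{+(\ell-1)},\fl{U}^{\star})\|_{\s F}\cdot 2\|\fl{w}^{\star(i)}\|_2\le(\s{B}_{\fl{U^{(\ell-1)}}}\sqrt{\lambda_r^\star/\lambda_1^\star}+\Lambda')\cdot 2\|\fl{w}^{\star(i)}\|_2$ by part 2, and the second factor is $\cO(1/\sqrt{r\mu^\star})$ after using part 2 again together with $\s{B}_{\fl{U^{(\ell-1)}}}=\cO(1/\sqrt{r\mu^\star})$, $\Lambda'=\cO(\sqrt{\lambda_r^\star/\lambda_1^\star})$ hence $\cO(1)$, and $\sqrt{r^2\log(r/\delta_0)/m}=\cO(1/\sqrt{\mu^\star})$; multiplying out and splitting the $\s{B}$ and $\Lambda'$ contributions yields the first two $\cO$-terms. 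The second line is $\|\fl{b}^{\star(i)}-\fl{b}^{(i,\ell)}\|_2$ times $(\sqrt k\|\fl{U}^{+(\ell-1)}\|_{2,\infty}+c\sqrt{\log(r/\delta_0)/m}\sqrt r)=\cO(\sqrt{\nu^{(\ell-1)}})=\cO(1/\sqrt{r\mu^\star})$; bounding $\|\fl{b}^{\star(i)}-\fl{b}^{(i,\ell)}\|_2$ by part 3 (which gives $c'\max\{\epsilon,\|\fl{w}^{\star(i)}\|_2\}\s{B}_{\fl{U^{(\ell-1)}}}\sqrt{\lambda_r^\star/\lambda_1^\star}+\Lambda$) distributes into the $\s{B}$-term and the $\Lambda$-term. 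The noise line $\sigma\sqrt{r\log^2(r\delta^{-1})/m}$ passes through unchanged. Collecting these gives exactly the first displayed bound. For $\|\fl{H}^{(\ell)}\|_{\s F}$ I would apply the same substitutions to~\eqref{eq:lemma-optimize_dp-lrs-wbound-Hl-Fnorm-bound}: the first line carries an extra $\sqrt{(t/r)\lambda_1^\star}$ relative to the $\fl{h}$ case, producing the $\s{B}_{\fl{U^{(\ell-1)}}}(\lambda_r^\star/\lambda_1^\star)\sqrt{(t/r)\lambda_1^\star}$ and $\Lambda'\sqrt{(t/r)\lambda_r^\star}$ terms (using $\lambda_1^\star\cdot(\lambda_r^\star/\lambda_1^\star)=\lambda_r^\star$ for the $\Lambda'$ piece); the $\sqrt{k\zeta}\|\fl{U}^\star\|_{\s F}\|\fl{b}^{\star(i)}-\fl{b}^{(i,\ell)}\|_\infty$ term is handled by $\|\fl{U}^\star\|_{\s F}=\sqrt r$, part 4 for $\|\cdot\|_\infty$, and the hypothesis $\sqrt{r^2\zeta/t}=\cO(1/\sqrt{\mu^\star})$ so that $\sqrt{k\zeta}\cdot\sqrt r\cdot(\cdot)/\sqrt k=\sqrt{r\zeta}\cdot(\cdot)$ feeds into the $\frac{1}{\sqrt{r\mu^\star}}\sqrt{t/r}\,\Lambda$ term and the $\s{B}$-term; the $\sqrt{\sum_i(c\|\fl{b}^{\star(i)}-\fl{b}^{(i,\ell)}\|_2\sqrt{\log(r/\delta_0)/m}\sqrt r)^2}$ term uses $\sum_i\|\fl{b}^{\star(i)}-\fl{b}^{(i,\ell)}\|_2^2\le t\max_i(\cdot)^2$ together with $\sqrt{r^2\log(r/\delta_0)/m}=\cO(1/\sqrt\mu^\star)$; and the $\sigma\sqrt{rt\log^2(r\delta^{-1})/m}$ term passes through.

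The main obstacle I anticipate is purely bookkeeping: keeping the three ``source'' quantities $\s{B}_{\fl{U^{(\ell-1)}}}$-scaled error, the fixed $\Lambda'$-scaled error, and the fixed $\Lambda$-scaled (noise-origin) error cleanly separated through every product, since each of the two bounds in Lemma~\ref{lemma:optimize_dp-lrs-wbound} multiplies a sum of such terms by another sum of such terms, and cross-terms (e.g.\ $\s{B}\cdot\Lambda'$ or $\Lambda'^2$) must be shown to be dominated by one of the three retained terms. This is exactly where the hypotheses $\s{B}_{\fl{U^{(\ell-1)}}}=\cO(1/\sqrt{r\mu^\star})\le\cO(1)$ and $\Lambda'=\cO(\sqrt{\lambda_r^\star/\lambda_1^\star})\le\cO(1)$ and $\Lambda'<1/1000$ are used — they let each ``outer'' factor that is not already the small $\cO(1/\sqrt{r\mu^\star})$ one be bounded by an absolute constant, so that $\s{B}\cdot\Lambda'\le\cO(\Lambda')\le\cO(\Lambda'\|\fl w^{\star(i)}\|_2)$ (up to assuming, as the surrounding analysis implicitly does, that $\|\fl w^{\star(i)}\|_2$ and $\lambda_r^\star$ are $\Theta(1)$-normalized so these absorptions are legitimate) and similarly $\Lambda'^2\le\cO(\Lambda')$. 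Modulo this routine (if tedious) term-chasing, the corollary follows directly from Lemma~\ref{lemma:optimize_dp-lrs-wbound} and no genuinely new estimate is required.
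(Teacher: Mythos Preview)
Your proposal is correct and follows essentially the same approach as the paper: both proofs proceed by direct substitution of the inductive assumptions (Assumption~\ref{assum:init}) and the stated smallness hypotheses into the two displayed bounds of Lemma~\ref{lemma:optimize_dp-lrs-wbound}, then simplify term by term. The paper's proof starts from the non-sharp form stated in the lemma (with $\|\fl{U}^{+(\ell-1)}-\fl{U}^{\star}\fl{Q}^{(\ell-1)}\|_{\s F}\|(\fl{Q}^{(\ell-1)})^{-1}\|\|\fl{w}^{\star(i)}\|_2$) rather than~\eqref{eq:lemma-optimize_dp-lrs-wbound-hl-2norm-type2-bound}, but the two forms differ only by a factor already bounded by $\cO(1)$ under the hypotheses, and the remaining bookkeeping of cross-terms is handled identically.
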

     \begin{proof} The proof follows from plugging the various constant bounds of the lemma statement and Inductive Assumption \ref{assum:init} in the expressions of Lemma~\ref{lemma:optimize_dp-lrs-wbound}:
    \begin{align}
        &\|\fl{h}^{(i, \ell)}\|_2  \leq \frac{1}{1 - c\sqrt{\frac{r\log(1/ \delta_0)}{m}}}\Big\{\|\fl{U}^{+(\ell-1)} - \fl{U}^{\star}\fl{Q}^{(\ell-1)}\|_\s{F}\|(\fl{Q}^{(\ell-1)})^{-1}\|\|\fl{w}^{\star(i)}\|_2\cdot\nonumber\\
        &\qquad \Big(\|\fl{U}^{+(\ell-1)} - \fl{U}^{\star}\fl{Q}^{(\ell-1)}\|_\s{F}+c\sqrt{\frac{\log(r/\delta_0)}{m}}\|\fl{U}^{+(\ell-1)}\|_{\s{F}}\Big)\nonumber\\
        &\qquad +\|\fl{b}^{\star(i)} - \fl{b}^{(i, \ell)}\|_2\Big(\sqrt{k}\|\fl{U}^{+(\ell-1)}\|_{2, \infty}+c\sqrt{\frac{\log(r/\delta_0)}{m}}\|\fl{U}^{+(\ell-1)}\|_{\s{F}} \Big)+ \sigma\sqrt{\frac{r\log^2 (r\delta^{-1})}{m}}\Big\}.
    \end{align}
    \begin{align}
        &\leq \frac{1}{1 - c\sqrt{\frac{r\log(1/ \delta_0)}{m}}}\cdot\nonumber\\
        & \Big\{\Big(\s{B}_{\fl{U^{(\ell-1)}}}\sqrt{\frac{\lambda_r^\star}{\lambda_1^\star}}+\Lambda'\Big)\cdot 2\|\fl{w}^{\star(i)}\|_2\Big(\Big(\s{B}_{\fl{U^{(\ell-1)}}}\sqrt{\frac{\lambda_r^\star}{\lambda_1^\star}} + \Lambda'\Big)+c\sqrt{\frac{\log(r/\delta_0)}{m}}\cdot \sqrt{r}\Big) \nonumber\\
        &\resizebox{0.9\textwidth}{!}{$+ \Big(c'\max\{\epsilon, \|\fl{w}^{\star(i)}\|_2\}\s{B}_{\fl{U^{(\ell-1)}}}\sqrt{\frac{\lambda_r^\star}{\lambda_1^\star}} + \Lambda\Big)\Big(\sqrt{\nu^{(\ell-1)}}+c\sqrt{\frac{\log(r/\delta_0)}{m}} \sqrt{r}\Big)+ \sigma\sqrt{\frac{r\log^2 (r\delta^{-1})}{m}} \Big\}$.}
    \end{align}
    Using $\Lambda' = \cO\Big(\frac{1}{\sqrt{r\mu^\star}}\sqrt{\frac{\lambda_r^\star}{\lambda_1^\star}}\Big)$, the above becomes
    \begin{align}
        &\leq \frac{\max\{\epsilon, \|\fl{w}^{\star(i)}\|_2\}\s{B}_{\fl{U^{(\ell-1)}}}\sqrt{\frac{\lambda_r^\star}{\lambda_1^\star}}}{1 - c\sqrt{\frac{r\log(1/ \delta_0)}{m}}}\cdot\nonumber\\
        &\qquad \Big\{2\Big(\s{B}_{\fl{U^{(\ell-1)}}}+\cO\Big(\frac{1}{\sqrt{r\mu^\star}}\sqrt{\frac{\lambda_r^\star}{\lambda_1^\star}}\Big)+c\sqrt{\frac{r\log(r/\delta_0)}{m}}\Big)+c'\Big(\sqrt{\nu^{(\ell-1)}}+c\sqrt{\frac{r\log(r/\delta_0)}{m}} \Big)\Big\} \nonumber\\
        & + \frac{1}{1 - c\sqrt{\frac{r\log(1/ \delta_0)}{m}}}\Big\{2\Lambda'\|\fl{w}^{\star(i)}\|_2\Big(\s{B}_{\fl{U^{(\ell-1)}}}+ \cO\Big(\frac{1}{\sqrt{r\mu^\star}}\sqrt{\frac{\lambda_r^\star}{\lambda_1^\star}}\Big)+c\sqrt{\frac{r\log(r/\delta_0)}{m}}\Big)\nonumber\\
        &\qquad + \Lambda\Big(\sqrt{\nu^{(\ell-1)}}+c\sqrt{\frac{r\log(r/\delta_0)}{m}} \Big)+ \sigma\sqrt{\frac{r\log^2 (r\delta^{-1})}{m}}\Big\}.
    \end{align}
    Further, using $c\sqrt{\frac{r\log(1/ \delta_0)}{m}} = \cO(1)$, $\s{B}_{\fl{U}^{(\ell-1)}} =  \cO\Big(\frac{1}{\sqrt{r\mu^\star}}\Big)$, $\sqrt{\nu^{(\ell-1)}} = \cO\Big(\frac{1}{\sqrt{r\mu^\star}}\Big)$, $\sqrt{\frac{r^2\log(r/\delta_0)}{m}} = \cO\Big(\frac{1}{\sqrt{\mu^\star}}\Big)$, $\lambda_r^\star \leq \lambda_1^\star$, $r \geq 1$ in the above, we get
    \begin{align}
        & \resizebox{0.9\textwidth}{!}{= $\cO\Big(\frac{\max\{\epsilon, \|\fl{w}^{\star(i)}\|_2\}\s{B}_{\fl{U^{(\ell-1)}}}\sqrt{\frac{\lambda_r^\star}{\lambda_1^\star}}}{\sqrt{r\mu^\star}}\Big) + \cO\Big(\frac{\Lambda'\|\fl{w}^{\star(i)}\|_2}{\sqrt{r\mu^\star}}\Big) + \cO\Big(\frac{\Lambda}{\sqrt{r\mu^\star}}\Big) + \cO\Big(\sigma\sqrt{\frac{r\log^2 (r\delta^{-1})}{m}}\Big)$.}
    \end{align}
    Similarly using the Inductive Assumption expressions from ~\ref{assum:init}, we also have
    \begin{align}
        &\|\fl{H}^{(\ell)}\|_\s{F} \leq \frac{1}{1 - c\sqrt{\frac{r\log(1/ \delta_0)}{m}}}\Big\{\|\fl{U}^{+(\ell-1)} - \fl{U}^{\star}\fl{Q}^{(\ell-1)}\|_\s{F}\|(\fl{Q}^{(\ell-1)})^{-1}\|\sqrt{\frac{t}{r}\lambda^\star_1}\cdot\nonumber\\
        &\qquad \Big(\|\fl{U}^{+(\ell-1)} - \fl{U}^{\star}\fl{Q}^{(\ell-1)}\|_\s{F}+c\sqrt{\frac{\log(r/\delta_0)}{m}}\|\fl{U}^{(\ell-1)}\|_{\s{F}}\Big)\nonumber\\
        &\qquad + \sqrt{k\zeta}\|\fl{U}^\star\|_\s{F}\|\fl{b}^{\star(i)} - \fl{b}^{(i, \ell)}\|_\infty + \sqrt{\sum_{i \in [t]} \Big(c\|\fl{b}^{\star(i)} - \fl{b}^{(i, \ell)}\|_2\sqrt{\frac{\log(r/\delta_0)}{m}}\|\fl{U}^{+(\ell-1)}\|_{\s{F}}\Big)^2}\nonumber\\
        &\qquad + \sigma\sqrt{\frac{rt\log^2 (r\delta^{-1})}{m}}\Big\}\\
        &\leq \frac{1}{1 - c\sqrt{\frac{r\log(1/ \delta_0)}{m}}}\Big\{\Big(\s{B}_{\fl{U^{(\ell-1)}}}\sqrt{\frac{\lambda_r^\star}{\lambda_1^\star}}+\Lambda'\Big)\cdot 2\sqrt{\frac{t}{r}\lambda^\star_1}\Big(\s{B}_{\fl{U^{(\ell-1)}}}\sqrt{\frac{\lambda_r^\star}{\lambda_1^\star}} + \Lambda'+c\sqrt{\frac{\log(r/\delta_0)}{m}}\sqrt{r}\Big)\nonumber\\
        &\qquad + \sqrt{k\zeta}\sqrt{r}\Big(c'\max\{\epsilon, \|\fl{w}^{\star(i)}\|_2\}\s{B}_{\fl{U^{(\ell-1)}}}\sqrt{\frac{\lambda_r^\star}{\lambda_1^\star k}} + \frac{\Lambda}{\sqrt{k}}\Big)\nonumber\\
        &\qquad +c\sqrt{\frac{\log(r/\delta_0)}{m}} \sqrt{r}\cdot \Big(c'\max\{\epsilon, \|\fl{w}^{\star(i)}\|_2\}\s{B}_{\fl{U^{(\ell-1)}}}\sqrt{\frac{\lambda_r^\star}{\lambda_1^\star}} + \Lambda\Big)\sqrt{t}+ \sigma\sqrt{\frac{rt\log^2 (r\delta^{-1})}{m}}\Big\}.
        \end{align}
        As before, using the bound on $\Lambda'$, the above becomes
        \begin{align}
        &\leq \frac{\s{B}_{\fl{U^{(\ell-1)}}}\sqrt{\frac{\lambda_r^\star}{\lambda_1^\star}}}{1 - c\sqrt{\frac{r\log(1/ \delta_0)}{m}}}\Big\{ 2\sqrt{\frac{t}{r}\lambda^\star_1}\Big(\s{B}_{\fl{U^{(\ell-1)}}}\sqrt{\frac{\lambda_r^\star}{\lambda_1^\star}}+\cO\Big(\frac{1}{\sqrt{r\mu^\star}}\sqrt{\frac{\lambda_r^\star}{\lambda_1^\star}}\Big)+c\sqrt{\frac{r\log(r/\delta_0)}{m}}\Big)\nonumber\\
        &\qquad + \sqrt{k\zeta}\sqrt{r}c'\max\{\epsilon, \|\fl{w}^{\star(i)}\|_2\}\frac{1}{\sqrt{k}} + c\sqrt{\frac{r\log(r/\delta_0)}{m}}\cdot c'\sqrt{t}\max\{\epsilon, \sqrt{\mu^\star\lambda^\star_r}\}\Big\}\nonumber\\
        &\qquad \frac{1}{1 - c\sqrt{\frac{r\log(1/ \delta_0)}{m}}}\cdot\Big\{2\Lambda'\sqrt{\frac{t}{r}\lambda^\star_1}\Big(\s{B}_{\fl{U^{(\ell-1)}}}\sqrt{\frac{\lambda_r^\star}{\lambda_1^\star}} + \cO\Big(\frac{1}{\sqrt{r\mu^\star}}\sqrt{\frac{\lambda_r^\star}{\lambda_1^\star}}\Big)+c\sqrt{\frac{\log(r/\delta_0)}{m}}\sqrt{r}\Big)\nonumber \\
        &\qquad + \sqrt{r\zeta}\Lambda + c\sqrt{\frac{\log(r/\delta_0)}{m}} \sqrt{r}\cdot \Lambda\sqrt{t}+ \sigma\sqrt{\frac{rt\log^2 (r\delta^{-1})}{m}}\Big\}
    \end{align}
        Further, using $c\sqrt{\frac{r\log(1/ \delta_0)}{m}} = \cO(1)$, $\s{B}_{\fl{U}^{(\ell-1)}} =  \cO\Big(\frac{1}{\sqrt{r\mu^\star}}\Big)$, $\sqrt{\nu^{(\ell-1)}} = \cO\Big(\frac{1}{\sqrt{r\mu^\star}}\Big)$, $\sqrt{\frac{r^2\log(r/\delta_0)}{m}} = \cO\Big(\frac{1}{\sqrt{\mu^\star}}\Big)$, $\lambda_r^\star \leq \lambda_1^\star$, $r \geq 1$ and $\sqrt{\frac{r^2\zeta}{t}} = \cO\Big(\frac{1}{\sqrt{\mu^\star}}\Big)$ in the above, we get
    \begin{align}
        &\leq \frac{\s{B}_{\fl{U^{(\ell-1)}}}\frac{\lambda_r^\star}{\lambda_1^\star}\sqrt{\frac{t}{r}\lambda^\star_1}}{1 - c\sqrt{\frac{r\log(1/ \delta_0)}{m}}}\Big\{ 2\Big(\s{B}_{\fl{U^{(\ell-1)}}}+\frac{1}{181\sqrt{r\mu^\star}}+c\sqrt{\frac{r\log(r/\delta_0)}{m}}\sqrt{\frac{\lambda_1^\star}{\lambda_r^\star}}\Big) \nonumber\\
        &\qquad + \frac{1}{\sqrt{\lambda_r^\star}}\max\{\epsilon, \sqrt{
        \mu^\star\lambda_r^\star}\}c'\Big(\sqrt{\frac{r^2\zeta}{t}} + c\sqrt{\frac{r^2\log(r/\delta_0)}{m}}\cdot \Big)\Big\}\nonumber\\
        &\qquad +\cO(1)\cdot\Big\{2\Lambda'\sqrt{\frac{t}{r}\lambda^\star_r}\cdot\cO\Big(\frac{1}{\sqrt{r\mu^\star}}\Big) + \sqrt{r\zeta}\Lambda + \cO\Big(\frac{1}{\sqrt{r\mu^\star}}\Big)\cdot\Lambda\sqrt{t}+ \sigma\sqrt{\frac{rt\log^2 (r\delta^{-1})}{m}}\Big\}\nonumber\\
        &= \cO\Big(\frac{\s{B}_{\fl{U^{(\ell-1)}}}\frac{\lambda_r^\star}{\lambda_1^\star}\sqrt{\frac{t}{r}\lambda^\star_1}}{\sqrt{r\mu^\star}}\Big) + \cO\Big(\Lambda'\sqrt{\frac{t}{r}\lambda^\star_r}\frac{1}{\sqrt{r\mu^\star}}\Big) + \cO\Big(\frac{1}{\sqrt{r\mu^\star}}\sqrt{\frac{t}{r}}\Lambda\Big) + \cO\Big(\sigma\sqrt{\frac{rt\log^2 (r\delta^{-1})}{m}}\Big).
    \end{align}
\end{proof}

\begin{coro}\label{inductive-inductive-corollary:optimize_dp-lrs-w-incoherence}
    If Assumption~\ref{assum:init} and Corollary~\ref{inductive-corollary:optimize_dp-lrs-h,H-bounds} hold, and $\Lambda' = \cO\Big(\sqrt{\frac{\lambda_r^\star}{\lambda_1^\star}}\Big)$, $\Lambda = \cO(\sqrt{\lambda_r^\star})$ and $\sigma\sqrt{\frac{r^2\log^2 (r\delta^{-1})}{m}} \\= \cO(\sqrt{\lambda_r^\star})$ then, $\fl{W}^{(\ell)}$ is incoherent w.p. probability $1-\cO(\delta_0)$ for $\exists c''>0$, s.t. 
    \begin{align}
        \|\fl{w}^{(i, \ell)}\|_2 &\leq (2+c'')\sqrt{\mu^\star\lambda_r\Big(\frac{r}{t}(\fl{W^\star})^\s{T}\fl{W^\star}\Big)},\nonumber\\
        (1-c'')\sqrt{\lambda^\star_r\Big(\frac{r}{t}(\fl{W}^{\star})^\s{T}\fl{W}^{\star}\Big)} &\leq \sqrt{\lambda_r\Big(\frac{r}{t}(\fl{W}^{(\ell)})^\s{T}\fl{W}^{(\ell)}\Big)}\leq (1+c'')\sqrt{\lambda^\star_r\Big(\frac{r}{t}(\fl{W}^{\star})^\s{T}\fl{W}^{\star}\Big)},\nonumber\\
        \|\fl{W}^{(\ell)}\| &\leq (1+c'')\sqrt{\frac{t}{r}}\sqrt{\lambda_1\Big(\frac{r}{t}(\fl{W}^{\star})^\s{T}\fl{W}^{\star}\Big)},\nonumber\\
        \sqrt{\mu^{(\ell)}} &:= \frac{2+c''}{1-c''}\sqrt{\mu^\star} \leq (1+2c'')\sqrt{\mu^\star}.\nonumber
    \end{align}
\end{coro}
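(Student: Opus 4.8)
The plan is to write each updated task weight as a perturbation of the corresponding ground-truth weight pushed through $(\fl{Q}^{(\ell-1)})^{-1}$, namely $\fl{w}^{(i,\ell)} = (\fl{Q}^{(\ell-1)})^{-1}\fl{w}^{\star(i)} + \fl{h}^{(i,\ell)}$ (equivalently $\fl{W}^{(\ell)} = \fl{W}^{\star}(\fl{Q}^{(\ell-1)})^{-\s{T}} + \fl{H}^{(\ell)}$), and then control the two ingredients separately: the conditioning of $\fl{Q}^{(\ell-1)}$, and the size of the perturbations $\fl{h}^{(i,\ell)},\fl{H}^{(\ell)}$ already supplied by Corollary~\ref{inductive-corollary:optimize_dp-lrs-h,H-bounds}.

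First I would record that $\fl{Q}^{(\ell-1)}=(\fl{U}^{\star})^{\s{T}}\fl{U}^{+(\ell-1)}$ is nearly orthogonal: since $\fl{U}^{+(\ell-1)}$ is orthonormal and $(\fl{U}^{\star})^{\s{T}}(\fl{I}-\fl{U}^{\star}(\fl{U}^{\star})^{\s{T}})=\vzero$, writing $\fl{E}:=(\fl{I}-\fl{U}^{\star}(\fl{U}^{\star})^{\s{T}})\fl{U}^{+(\ell-1)}$ gives $\fl{I}=(\fl{U}^{+(\ell-1)})^{\s{T}}\fl{U}^{+(\ell-1)}=(\fl{Q}^{(\ell-1)})^{\s{T}}\fl{Q}^{(\ell-1)}+\fl{E}^{\s{T}}\fl{E}$, so every singular value of $\fl{Q}^{(\ell-1)}$ lies in $[\sqrt{1-\|\fl{E}\|^{2}},1]$. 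By item~2 of Assumption~\ref{assum:init} and the stated bounds, $\|\fl{E}\|\le\|\fl{E}\|_{\s{F}}\le\s{B}_{\fl{U^{(\ell-1)}}}\sqrt{\lambda_{r}^{\star}/\lambda_{1}^{\star}}+\Lambda'$ is a small universal constant, so $\|(\fl{Q}^{(\ell-1)})^{-1}\|=1/\sqrt{1-\|\fl{E}\|^{2}}\le 1+O(\|\fl{E}\|^{2})<2$, and for $j\in\{1,r\}$ and any $\fl{A}$, $\sigma_{j}(\fl{A})\le\sigma_{j}(\fl{A}(\fl{Q}^{(\ell-1)})^{-\s{T}})\le(1+O(\|\fl{E}\|^{2}))\,\sigma_{j}(\fl{A})$ (submultiplicativity of extreme singular values against a well-conditioned square factor).

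Next, for the row-norm bound I would use $\|\fl{w}^{(i,\ell)}\|_{2}\le\|(\fl{Q}^{(\ell-1)})^{-1}\|\,\|\fl{w}^{\star(i)}\|_{2}+\|\fl{h}^{(i,\ell)}\|_{2}$, bound $\|\fl{w}^{\star(i)}\|_{2}\le\sqrt{\mu^{\star}\lambda_{r}^{\star}}$ by Assumption~\ref{assum:task_diversity}, and plug the constant hypotheses ($\s{B}_{\fl{U^{(\ell-1)}}}=O(1/\sqrt{r\mu^{\star}})$, $\nu^{(\ell-1)}$ small, $\Lambda=O(\sqrt{\lambda_{r}^{\star}})$, $\Lambda'=O(\sqrt{\lambda_{r}^{\star}/\lambda_{1}^{\star}})$, $\epsilon<\sqrt{\mu^{\star}\lambda_{r}^{\star}}$, the $\sigma$-condition, and $\mu^{\star}\ge 1$) into the four terms of Corollary~\ref{inductive-corollary:optimize_dp-lrs-h,H-bounds} to get $\|\fl{h}^{(i,\ell)}\|_{2}\le c''\sqrt{\mu^{\star}\lambda_{r}^{\star}}$; this yields $\|\fl{w}^{(i,\ell)}\|_{2}\le(2+c'')\sqrt{\mu^{\star}\lambda_{r}^{\star}}$, the first claim since $\lambda_{r}^{\star}=\lambda_{r}(\tfrac{r}{t}(\fl{W}^{\star})^{\s{T}}\fl{W}^{\star})$. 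For the spectral claims I would invoke Weyl's inequality, $|\sigma_{j}(\fl{W}^{(\ell)})-\sigma_{j}(\fl{W}^{\star}(\fl{Q}^{(\ell-1)})^{-\s{T}})|\le\|\fl{H}^{(\ell)}\|\le\|\fl{H}^{(\ell)}\|_{\s{F}}$, combine with the conditioning estimate above, and show via the same constant bounds that $\|\fl{H}^{(\ell)}\|_{\s{F}}\le c''\,\sigma_{r}(\fl{W}^{\star})=c''\sqrt{\tfrac{t}{r}\lambda_{r}^{\star}}\le c''\,\sigma_{1}(\fl{W}^{\star})$; since $\sqrt{\lambda_{j}(\tfrac{r}{t}(\fl{W}^{(\ell)})^{\s{T}}\fl{W}^{(\ell)})}=\sqrt{\tfrac{r}{t}}\,\sigma_{j}(\fl{W}^{(\ell)})$, this gives $(1-c'')\sqrt{\lambda_{r}^{\star}}\le\sqrt{\lambda_{r}(\tfrac{r}{t}(\fl{W}^{(\ell)})^{\s{T}}\fl{W}^{(\ell)})}\le(1+c'')\sqrt{\lambda_{r}^{\star}}$ and $\|\fl{W}^{(\ell)}\|=\sigma_{1}(\fl{W}^{(\ell)})\le(1+c'')\sqrt{\tfrac{t}{r}\lambda_{1}^{\star}}$. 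Finally, dividing the row-norm bound by the $\lambda_{r}$-lower bound gives $\|\fl{W}^{(\ell)}\|_{2,\infty}\le\tfrac{2+c''}{1-c''}\sqrt{\mu^{\star}}\,\sqrt{\lambda_{r}(\tfrac{r}{t}(\fl{W}^{(\ell)})^{\s{T}}\fl{W}^{(\ell)})}$, i.e. $\fl{W}^{(\ell)}$ is $\mu^{(\ell)}$-incoherent with $\sqrt{\mu^{(\ell)}}=\tfrac{2+c''}{1-c''}\sqrt{\mu^{\star}}\le(1+2c'')\sqrt{\mu^{\star}}$ after renaming the small constant; all statements hold on the intersection of the high-probability events of Lemma~\ref{lemma:optimize_dp-lrs-wbound} and Corollary~\ref{inductive-corollary:optimize_dp-lrs-h,H-bounds} over $i\in[t]$, which has probability $1-O(\delta_{0})$.

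The main obstacle is the lower bound $\sigma_{r}(\fl{W}^{(\ell)})\ge(1-c'')\sigma_{r}(\fl{W}^{\star})$: it forces us to verify that $\|\fl{H}^{(\ell)}\|_{\s{F}}$ together with the $\fl{Q}^{(\ell-1)}$-conditioning slack $O(\|\fl{E}\|^{2})$ are both genuinely $O(1)$-small \emph{relative to} $\sigma_{r}(\fl{W}^{\star})=\sqrt{\tfrac{t}{r}\lambda_{r}^{\star}}$, not merely small in absolute terms. This requires carefully tracking the $\sqrt{t/r}$, $\mu^{\star}$, $r$, and $\lambda_{r}^{\star}/\lambda_{1}^{\star}$ factors inside the four terms of Corollary~\ref{inductive-corollary:optimize_dp-lrs-h,H-bounds}, and it is exactly where the choices $\s{B}_{\fl{U^{(\ell-1)}}}=O(1/\sqrt{r\mu^{\star}})$, $\Lambda=O(\sqrt{\lambda_{r}^{\star}})$, $\Lambda'=O(\sqrt{\lambda_{r}^{\star}/\lambda_{1}^{\star}})$, and $\sigma\sqrt{r^{2}\log^{2}(r\delta^{-1})/m}=O(\sqrt{\lambda_{r}^{\star}})$ get used; a secondary subtlety is bounding $\sigma_{j}(\fl{W}^{\star}(\fl{Q}^{(\ell-1)})^{-\s{T}})$ tightly against $\sigma_{j}(\fl{W}^{\star})$ rather than losing the crude factor $\|(\fl{Q}^{(\ell-1)})^{-1}\|<2$ in the eigenvalue estimates.
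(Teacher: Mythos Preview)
Your proposal is correct and follows essentially the same route as the paper: decompose $\fl{w}^{(i,\ell)}=(\fl{Q}^{(\ell-1)})^{-1}\fl{w}^{\star(i)}+\fl{h}^{(i,\ell)}$, use the triangle inequality together with $\|(\fl{Q}^{(\ell-1)})^{-1}\|<2$ and Corollary~\ref{inductive-corollary:optimize_dp-lrs-h,H-bounds} for the row-norm bound, then combine Weyl's inequality with the observation $\sigma_{j}(\fl{W}^{\star})\le\sigma_{j}(\fl{W}^{\star}(\fl{Q}^{(\ell-1)})^{-\s{T}})$ (since $\sigma_{\min}((\fl{Q}^{(\ell-1)})^{-1})\ge 1$) and the bound on $\|\fl{H}^{(\ell)}\|_{\s{F}}$ for the spectral claims. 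The only cosmetic difference is that the paper cites Assumption~\ref{assum:init} item~1 directly for the conditioning of $\fl{Q}^{(\ell-1)}$ and invokes Lemma~\ref{lemma:psd-sigma-lambda} for the inequality $\sigma_{j}(\fl{W}^{\star})\le\sigma_{j}(\fl{W}^{\star}(\fl{Q}^{(\ell-1)})^{-\s{T}})$, whereas you rederive both from the identity $\fl{I}=(\fl{Q}^{(\ell-1)})^{\s{T}}\fl{Q}^{(\ell-1)}+\fl{E}^{\s{T}}\fl{E}$.
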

\begin{proof}
    Using Triangle Inequality, Assumption~\ref{assum:init} and Corollary~\ref{inductive-corollary:optimize_dp-lrs-h,H-bounds}, we have
    \begin{align}
        \|\fl{w}^{(i, \ell)}\|_2 &\leq \|(\fl{Q}^{(\ell)})^{-1}\fl{w}^{\star(i)}\|_2 + \|\fl{h}^{(i, \ell)}\|_2\nonumber\\
        &\leq 2\|\fl{w}^{(\star)}\|_2 + \cO\Big(\frac{\max\{\epsilon, \|\fl{w}^{\star(i)}\|_2\}\s{B}_{\fl{U^{(\ell-1)}}}\sqrt{\frac{\lambda_r^\star}{\lambda_1^\star}}}{\sqrt{r\mu^\star}}\Big) + \cO\Big(\frac{\Lambda'\|\fl{w}^{\star(i)}\|_2}{\sqrt{r\mu^\star}}\Big) + \cO\Big(\frac{\Lambda}{\sqrt{r\mu^\star}}\Big) \nonumber\\
        &\qquad + \cO\Big(\sigma\sqrt{\frac{r\log^2 (r\delta^{-1})}{m}}\Big)\nonumber\\
        &= \Big(2+ \cO\Big(\frac{\s{B}_{\fl{U^{(\ell-1)}}}}{\sqrt{r\mu^\star}}\sqrt{\frac{\lambda_r^\star}{\lambda_1^\star}}\Big) +  \Big(\frac{\Lambda'}{\sqrt{r\mu^\star}}\Big) + \Big(\frac{1}{\sqrt{r\mu^\star}}\Big) + \cO(1)\Big)\sqrt{\mu^\star\lambda_r^\star}\nonumber\\
        &\leq  (2+c'')\sqrt{\mu^\star\lambda_r^\star},
    \end{align}
    for some $c''> 0$, where in the last two lines, we use the fact that $\epsilon < \sqrt{\mu^\star\lambda_r^\star}$, $\Lambda' = \cO\Big(\sqrt{\frac{\lambda_r^\star}{\lambda_1^\star}}$\Big), $\Lambda = \Big(\sqrt{\mu^\star\lambda_r^\star}\Big)$ and $\sigma\sqrt{\frac{r\log^2 (r\delta^{-1})}{m}} = \cO(\sqrt{\mu^\star\lambda_r^\star})$ and $r, \mu^\star > 1$.
     Using Lemma~\ref{lemma:psd-sigma-lambda} with $\fl{A} = (\fl{Q}^{(\ell-1)})^{-1}$ and $\fl{B} = \frac{r}{t}(\fl{W}^{\star})^\s{T}\fl{W}^{\star}$ we have
    \begin{align}
        \sigma^2_{\min}\Big((\fl{Q}^{(\ell-1)})^{-1}\Big)\lambda_r\Big(\frac{r}{t}(\fl{W}^{\star})^\s{T}\fl{W}^{\star}\Big)
        &\leq \frac{r}{t}\lambda_r\Big((\fl{Q}^{(\ell-1)})^{-1}(\fl{W}^{\star})^\s{T}\fl{W}^{\star}(\fl{Q}^{(\ell-1)})^{-\s{T}}\Big)\label{eq:corollary-w-incoherence-1}
    \end{align}
    Further, since $\sigma_{\min}\Big((\fl{Q}^{(\ell-1)})^{-1}\Big) = \sigma_{\min}\Big(((\fl{U}^{\star})^\s{T}\fl{U}^{+(\ell-1)})^{-1}\Big) > 1$, we have $\forall$ $j \in [r]$
    \begin{align}
        \lambda_j\Big(\frac{r}{t}(\fl{W}^{\star})^\s{T}\fl{W}^{\star}\Big) \leq \sigma^2_{\min}\Big((\fl{Q}^{(\ell-1)})^{-1}\Big)\lambda_j\Big(\frac{r}{t}(\fl{W}^{\star})^\s{T}\fl{W}^{\star}\Big)\label{eq:corollary-w-incoherence-2}
    \end{align}
    Using \eqref{eq:corollary-w-incoherence-2} in \eqref{eq:corollary-w-incoherence-1} we get
    \begin{align}
        \lambda_j\Big(\frac{r}{t}(\fl{W}^{\star})^\s{T}\fl{W}^{\star}\Big) &\leq \frac{r}{t}\lambda_j\Big((\fl{Q}^{(\ell-1)})^{-1}(\fl{W}^{\star})^\s{T}\fl{W}^{\star}(\fl{Q}^{(\ell-1)})^{-\s{T}}\Big)\\
        &\leq \frac{r}{t}\sigma_j^2\Big((\fl{Q}^{(\ell-1)})^{-1}(\fl{W}^{\star})^\s{T}\Big)\label{eq:corollary-w-incoherence-3}
    \end{align}
    Now, since $\fl{W}^{\star}(\fl{Q}^{(\ell-1)})^{-\s{T}} =  \fl{W}^{(\ell)} +\fl{W}^{\star}(\fl{Q}^{(\ell-1)})^{-\s{T}} - \fl{W}^{(\ell)}$, Using Lemma~\ref{lemma:weyl} with $\fl{A} = \fl{W}^{\star}(\fl{Q}^{(\ell-1)})^{-\s{T}}$, $\fl{B} = \fl{W}^{(\ell)}$ and $\fl{C} = \fl{W}^{\star}(\fl{Q}^{(\ell-1)})^{-\s{T}} - \fl{W}^{(\ell)}$, we have
    \begin{align}
        \Big|\sigma_j\Big(\fl{W}^{\star}(\fl{Q}^{(\ell-1)})^{-\s{T}}\Big) - \sigma_j\Big(\fl{W}^{(\ell)}\Big)\Big|&\leq  
         \|\fl{W}^{\star}(\fl{Q}^{(\ell-1)})^{-\s{T}} - \fl{W}^{(\ell)}\| \label{eq:corollary-w-incoherence-4}.
    \end{align}    
    Using \eqref{eq:corollary-w-incoherence-3} and Corollary~\ref{inductive-corollary:optimize_dp-lrs-h,H-bounds} in \eqref{eq:corollary-w-incoherence-4}, we have
    \begin{align}
        &\Big|\sqrt{\lambda_j\Big(\frac{r}{t}(\fl{W}^{\star})^\s{T}\fl{W}^{\star}\Big)} - \sqrt{\lambda_j\Big(\frac{r}{t}(\fl{W}^{(i, \ell)})^\s{T}\fl{W}^{(\ell)}\Big)}\Big|
        \leq  \sqrt{\frac{r}{t}}\|\fl{W}^{\star}(\fl{Q}^{(\ell-1)})^{-\s{T}} - \fl{W}^{(\ell)}\|\nonumber\\
        &\leq \sqrt{\frac{r}{t}}\|\fl{H}^{(\ell)}\|\nonumber\\
        &\leq \sqrt{\frac{r}{t}}\|\fl{H}^{(\ell)}\|_\s{F}\nonumber\\
        &\resizebox{\textwidth}{!}{$\leq  \sqrt{\frac{r}{t}}\Big\{\cO\Big(\frac{\s{B}_{\fl{U^{(\ell-1)}}}\frac{\lambda_r^\star}{\lambda_1^\star}\sqrt{\frac{t}{r}\lambda^\star_1}}{\sqrt{r\mu^\star}}\Big) + \cO\Big(\Lambda'\sqrt{\frac{t}{r}\lambda^\star_r}\frac{1}{\sqrt{r\mu^\star}}\Big) + \cO\Big(\frac{1}{\sqrt{r\mu^\star}}\sqrt{\frac{t}{r}}\Lambda\Big) + \cO\Big(\sigma\sqrt{\frac{rt\log^2 (r\delta^{-1})}{m}}\Big)\Big\}$}\nonumber\\
        &= \cO\Big(\frac{\s{B}_{\fl{U^{(\ell-1)}}}\frac{\lambda_r^\star}{\lambda_1^\star}\sqrt{\lambda^\star_1}}{\sqrt{r\mu^\star}}\Big) + \cO\Big(\frac{\Lambda'\sqrt{\lambda^\star_r}}{\sqrt{r\mu^\star}}\Big) + \cO\Big(\frac{\Lambda}{\sqrt{r\mu^\star}}\Big)+\cO\Big(\sigma\sqrt{\frac{r^2\log^2 (r\delta^{-1})}{m}}\Big)\nonumber\\
        &= \Big\{\cO\Big(\frac{\s{B}_{\fl{U^{(\ell-1)}}}}{\sqrt{r\mu^\star}}\Big) + \cO\Big(\frac{\Lambda'}{\sqrt{r\mu^\star}}\Big) + \Big(\frac{\Lambda}{\sqrt{r\mu^\star\lambda_r^\star}}\Big)+\Big(\sigma\sqrt{\frac{r^2\log^2 (r\delta^{-1})}{m\lambda_r^\star}}\Big\}\sqrt{\lambda_r^\star}\nonumber\\
        &= \cO(\sqrt{\lambda_r^\star}),
    \end{align}
    where in the last two steps we use $\sqrt{\frac{\lambda_r^\star}{\lambda_1^\star}} < 1$, $\Lambda' = \cO\Big(\sqrt{\frac{\lambda_r^\star}{\lambda_1^\star}}\Big)$, $\Lambda =  \cO(\sqrt{\lambda_r^\star})$ and $\sigma\sqrt{\frac{r^2\log^2 (r\delta^{-1})}{m}} = \cO(\sqrt{\lambda_r^\star})$. Note that for $j = r$, the above implies for some $c'' > 0$
    \begin{align}
        &\Big|\sqrt{\lambda_r\Big(\frac{r}{t}(\fl{W}^{(\ell)})^\s{T}\fl{W}^{(\ell)}\Big)} - \sqrt{\lambda^\star_r\Big(\frac{r}{t}(\fl{W}^{\star})^\s{T}\fl{W}^{\star}\Big)}\Big|
        \leq c''\sqrt{\lambda^\star_r\Big(\frac{r}{t}(\fl{W}^{\star})^\s{T}\fl{W}^{\star}\Big)}\nonumber\\
        &\iff (1-c'')\sqrt{\lambda^\star_r\Big(\frac{r}{t}(\fl{W}^{\star})^\s{T}\fl{W}^{\star}\Big)} \leq \sqrt{\lambda_r\Big(\frac{r}{t}(\fl{W}^{(\ell)})^\s{T}\fl{W}^{(\ell)}\Big)}
        \leq (1+c'')\sqrt{\lambda^\star_r\Big(\frac{r}{t}(\fl{W}^{\star})^\s{T}\fl{W}^{\star}\Big)}.
    \end{align}
    and for $j=1$, 
    \begin{align}
        \sqrt{\lambda_1\Big(\frac{r}{t}(\fl{W}^{(\ell)})^\s{T}\fl{W}^{(\ell)}\Big)} &\leq \sqrt{\lambda_1\Big(\frac{r}{t}(\fl{W}^{\star})^\s{T}\fl{W}^{\star}\Big)} + c''\sqrt{\lambda^\star_r\Big(\frac{r}{t}(\fl{W}^{\star})^\s{T}\fl{W}^{\star}\Big)}\nonumber\\
        \implies \|\fl{W}^{(\ell)}\| &\leq (1+c'')\sqrt{\frac{t}{r}}\sqrt{\lambda_1\Big(\frac{r}{t}(\fl{W}^{\star})^\s{T}\fl{W}^{\star}\Big)}\\
        &= \cO\Big(\sqrt{\frac{t}{r}}\sqrt{\lambda_1\Big(\frac{r}{t}(\fl{W}^{\star})^\s{T}\fl{W}^{\star}\Big)}\Big)
    \end{align}
\end{proof}

\begin{lemma}\label{lemma:Q-bound}
     If $\|\fl{U}^{(\ell)} - \fl{U}^\star(\fl{U}^\star)^\s{T}\fl{U}^{(\ell)}\| \leq \|\fl{U}^{(\ell)} - \fl{U}^\star(\fl{U}^\star)^\s{T}\fl{U}^{(\ell)}\|_\s{F} \leq \frac{3}{4}$ then $\frac{1}{2} \leq \|\fl{Q}^{(\ell)}\| \leq 1$.
\end{lemma}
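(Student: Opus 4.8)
The plan is to use the QR factorization $\fl{U}^{(\ell)}=\fl{U}^{+(\ell)}\fl{R}$ together with orthonormality of $\fl{U}^{+(\ell)}$ and $\fl{U}^{\star}$, where $\fl{Q}^{(\ell)}=(\fl{U}^{\star})^{\s{T}}\fl{U}^{+(\ell)}$. The upper bound $\|\fl{Q}^{(\ell)}\|\le 1$ is immediate, since $\fl{Q}^{(\ell)}$ is a product of two matrices of operator norm $1$. For the lower bound I would first record the orthogonality identity coming from $(\fl{U}^{+(\ell)})^{\s{T}}\fl{U}^{+(\ell)}=\fl{I}_r$:
\[
\fl{Q}^{(\ell)\s{T}}\fl{Q}^{(\ell)}+\fl{P}^{\s{T}}\fl{P}=\fl{I}_r,\qquad \fl{P}:=(\fl{I}-\fl{U}^{\star}(\fl{U}^{\star})^{\s{T}})\fl{U}^{+(\ell)},
\]
so that $\sigma_{\min}(\fl{Q}^{(\ell)})^{2}=1-\|\fl{P}\|^{2}$. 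Hence it suffices to prove $\|\fl{P}\|<\sqrt{3}/2$, which actually yields the stronger conclusion $\sigma_{\min}(\fl{Q}^{(\ell)})>\tfrac12$ (so $\|\fl{Q}^{(\ell)}\|\ge\sigma_{\min}(\fl{Q}^{(\ell)})>\tfrac12$), the form needed to re-establish Assumption~\ref{assum:init}(1).

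The second step transfers the hypothesis, which is stated for the unnormalized iterate $\fl{U}^{(\ell)}$, to $\fl{U}^{+(\ell)}$. On the standing high-probability event where \eqref{eq:soln_u} is full rank, $\fl{R}$ is invertible, $\fl{U}^{+(\ell)}=\fl{U}^{(\ell)}\fl{R}^{-1}$, and $\sigma_i(\fl{R})=\sigma_i(\fl{U}^{(\ell)})$ for every $i$ (because $\fl{U}^{+(\ell)}$ has orthonormal columns). Then $\fl{P}=(\fl{I}-\fl{U}^{\star}(\fl{U}^{\star})^{\s{T}})\fl{U}^{(\ell)}\fl{R}^{-1}$, so by submultiplicativity and the hypothesis
\[
\|\fl{P}\|\;\le\;\big\|(\fl{I}-\fl{U}^{\star}(\fl{U}^{\star})^{\s{T}})\fl{U}^{(\ell)}\big\|\cdot\|\fl{R}^{-1}\|\;\le\;\frac{3/4}{\sigma_{\min}(\fl{U}^{(\ell)})}.
\]
(Geometrically, $\|\fl{P}\|=\sin\theta_{\max}$ for the largest principal angle $\theta_{\max}$ between $\mathrm{col}(\fl{U}^{(\ell)})=\mathrm{col}(\fl{U}^{+(\ell)})$ and $\mathrm{col}(\fl{U}^{\star})$, which one can equally bound by $\tan\theta_{\max}=\|(\fl{I}-\fl{U}^{\star}(\fl{U}^{\star})^{\s{T}})\fl{U}^{(\ell)}\fl{A}^{-1}\|$ with $\fl{A}=(\fl{U}^{\star})^{\s{T}}\fl{U}^{(\ell)}$.) Thus the lemma reduces to the single quantitative statement $\sigma_{\min}(\fl{U}^{(\ell)})>\sqrt{3}/2$.

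Establishing that bound is the only substantive point, and it is where the surrounding analysis must enter: it cannot follow from the stated hypothesis in isolation, since a $\fl{U}^{(\ell)}$ whose mass lies nearly orthogonal to $\mathrm{col}(\fl{U}^{\star})$ satisfies $\|(\fl{I}-\fl{U}^{\star}(\fl{U}^{\star})^{\s{T}})\fl{U}^{(\ell)}\|_{\s{F}}\le 3/4$ yet can make $\|\fl{Q}^{(\ell)}\|$ arbitrarily small. I would import it from the analysis of the $\fl{U}$-update (the counterpart of Lemma~\ref{lemma:optimize_dp-lrs-wbound} for $\fl{U}^{(\ell)}$, obtained by Taylor-expanding the least-squares solution in \eqref{eq:soln_u}): under the inductive bounds of Assumption~\ref{assum:init} and the sample-size requirements of Theorem~\ref{thm:main}, $\fl{U}^{(\ell)}$ is within a small constant, in operator norm, of $\fl{U}^{\star}$ times an invertible matrix equal to $\fl{Q}^{(\ell-1)}$ up to lower-order terms, while Assumption~\ref{assum:init}(2) forces $\|(\fl{I}-\fl{U}^{\star}(\fl{U}^{\star})^{\s{T}})\fl{U}^{+(\ell-1)}\|$ small, i.e.\ $\sigma_{\min}(\fl{Q}^{(\ell-1)})=1-o(1)$; combining these gives $\sigma_{\min}(\fl{U}^{(\ell)})\ge 1-o(1)>\sqrt{3}/2$. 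Substituting back, $\|\fl{P}\|\le (3/4)/(\sqrt{3}/2)=\sqrt{3}/2$ with room to spare, whence $\sigma_{\min}(\fl{Q}^{(\ell)})>\tfrac12$ and $\tfrac12\le\|\fl{Q}^{(\ell)}\|\le 1$. I expect the only fiddly part to be the constant bookkeeping — the threshold $3/4$ in the hypothesis and the implicit constants governing the $\fl{U}$-update error are calibrated precisely so this chain closes — and no conceptual difficulty beyond that.
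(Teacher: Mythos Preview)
Your proposal takes a detour that the paper avoids by what is really a notational slip in the lemma statement. Throughout its proof the paper treats the matrix in the hypothesis as orthonormal --- it writes ``since both $\fl{U}^\star,\fl{U}^{(\ell)}$ are orthonormal'' and uses $(\fl{U}^{(\ell)})^{\s{T}}\fl{U}^{(\ell)}=\fl{I}$ --- so the $\fl{U}^{(\ell)}$ appearing in the statement is really the QR-orthonormalized iterate $\fl{U}^{+(\ell)}$, consistent with the definition $\fl{Q}^{(\ell)}=(\fl{U}^{\star})^{\s{T}}\fl{U}^{+(\ell)}$ in Assumption~\ref{assum:init}. Under that reading the lemma is entirely self-contained: your identity $(\fl{Q}^{(\ell)})^{\s{T}}\fl{Q}^{(\ell)}+\fl{P}^{\s{T}}\fl{P}=\fl{I}_r$ with $\fl{P}=(\fl{I}-\fl{U}^\star(\fl{U}^\star)^{\s{T}})\fl{U}^{+(\ell)}$ is exactly what the paper uses (it writes $(\fl{U}^{+(\ell)})^{\s{T}}\fl{P}=\fl{I}-(\fl{Q}^{(\ell)})^{\s{T}}\fl{Q}^{(\ell)}$ and applies Weyl's inequality), and the hypothesis $\|\fl{P}\|\le 3/4$ then gives $\sigma_{\min}(\fl{Q}^{(\ell)})\ge\sqrt{1-3/4}=1/2$ directly.

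You instead read the hypothesis literally as concerning the un-normalized $\fl{U}^{(\ell)}$, correctly observe that then the statement cannot be self-contained, and propose to import $\sigma_{\min}(\fl{U}^{(\ell)})>\sqrt{3}/2$ from the surrounding $\fl{U}$-update analysis. That route can be pushed through under the paper's inductive bounds (Corollary~\ref{inductive-corollary:optimize_dp-lrs-U-F-norm-bounds} does control $\|\fl{U}^{(\ell)}-\fl{U}^\star\fl{Q}^{(\ell-1)}\|_\s{F}$), but it is unnecessary machinery: the transfer via $\fl{R}^{-1}$ and the outside singular-value bound both disappear once the hypothesis is parsed as being about $\fl{U}^{+(\ell)}$. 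In short, the core identity you use is the paper's, and the only divergence is extra work induced by the paper's own notational inconsistency.
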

\begin{proof}
\textbf{Upper Bound:}
\begin{align}
    \|\fl{Q}^{(\ell)}\| = \|(\fl{U}^{\star})^\s{T}\fl{U}^{(\ell)}\| \leq \|\fl{U}^{\star}\|\|\fl{U}^{(\ell)}\| \leq 1,\nonumber
\end{align}
since both $\fl{U}^\star, \fl{U}^{(\ell)} \in \bR^{d\times r}$ are orthonormal.

\textbf{Lower Bound:}\\
Now, let $\fl{E} := \fl{U}^{(\ell)} - \fl{U}^\star(\fl{U}^\star)^\s{T}\fl{U}^{(\ell)}$ and $\fl{Q} = (\fl{U}^\star)^\s{T}\fl{U}^{(\ell)}$. Then $(\fl{U}^{(\ell)})^\s{T}\fl{E} = \fl{I} - ((\fl{U}^{\star})^\s{T}\fl{U}^{(\ell)})^\s{T}(\fl{U}^{\star})^\s{T}\fl{U}^{(\ell)} = \fl{I} - (\fl{Q}^{(\ell)})^\s{T}\fl{Q}^{(\ell)}$. Then using Lemma~\ref{lemma:weyl} with $\fl{A} = \fl{I}, \fl{B} = (\fl{Q}^{(\ell)})^\s{T}\fl{Q}^{(\ell)}$ and $\fl{C} = (\fl{U}^{(\ell)})^\s{T}\fl{E}$, we get that
\begin{align}
    \sigma_k(\fl{I}) - \sigma_k(\fl{(\fl{Q}^{(\ell)})^\s{T}\fl{Q}^{(\ell)}}) &\leq \|\fl{(\fl{U}^{(\ell)})^\s{T}\fl{E}}\| \leq \|\fl{\fl{U}^{(\ell)}}\|\|\fl{E}\|\nonumber\\
    \implies 1 - \sigma_k^2(\fl{\fl{Q}^{(\ell)}}) &\leq \|\fl{U}^{(\ell)} - \fl{U}^\star(\fl{U}^\star)^\s{T}\fl{U}^{(\ell)}\|\nonumber\\
    \implies \sigma_k(\fl{\fl{Q}^{(\ell)}}) &\geq \sqrt{1 - \|\fl{U}^{(\ell)} - \fl{U}^\star(\fl{U}^\star)^\s{T}\fl{U}^{(\ell)}\|}.\nonumber
\end{align}
Therefore, $\|\fl{U}^{(\ell)} - \fl{U}^\star(\fl{U}^\star)^\s{T}\fl{U}^{(\ell)}\|_\s{F} \leq \frac{3}{4} \implies \sigma_k(\fl{\fl{Q}^{(\ell)}}) \geq \frac{1}{2}$ $\forall$ $k \in [r]$.
\end{proof}

\begin{lemma}\label{lemma:Fnorm-helper1}
Let $\fl{V} = \frac{1}{mt}\sum_i \sum_j \fl{x}^{(i)}_j(\fl{x}^{(i)}_j)^{\s{T}}\fl{R}^{(i)}$ where both $\fl{V}, \fl{R}^{(i)} \in \bR^{d\times r}$. Then, 
\begin{align}
        \|\fl{V}\|_\s{F}^2 &\leq 2\sum_p\sum_q \Big(\frac{1}{t}\sum_i \fl{R}^{(i, \ell)}_{p,q}\Big)^2 + 16\|\fl{R}^{(i, \ell)}\|_\s{F}^2\frac{d\log (rd/\delta_0)}{mt}.\nonumber
\end{align}
\end{lemma}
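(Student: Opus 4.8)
The plan is to bound $\|\fl{V}\|_\s{F}^2 = \sum_p \sum_q V_{p,q}^2$ entry by entry, where
\[
V_{p,q} = \frac{1}{mt}\sum_i \sum_j x^{(i)}_{j,p} (\fl{x}^{(i)}_j)^\s{T} \fl{R}^{(i)}_{\cdot, q},
\]
with $\fl{R}^{(i)}_{\cdot,q}$ the $q^{\s{th}}$ column of $\fl{R}^{(i)}$. First I would split each summand into its diagonal part $(x^{(i)}_{j,p})^2 \fl{R}^{(i)}_{p,q}$ and the cross terms $x^{(i)}_{j,p} x^{(i)}_{j,u} \fl{R}^{(i)}_{u,q}$ for $u \neq p$, exactly as in the proof of the earlier lemma bounding $\fl{v}$ in Algorithm~\ref{alg:Algorithm-2}. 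The whole quantity $(x^{(i)}_{j,p})^2 \fl{R}^{(i)}_{p,q} + \sum_{u \neq p} x^{(i)}_{j,p} x^{(i)}_{j,u} \fl{R}^{(i)}_{u,q}$ is a sub-exponential random variable with parameters $(4\|\fl{R}^{(i)}_{\cdot,q}\|_2^2,\, 4\|\fl{R}^{(i)}_{\cdot,q}\|_\infty)$, and its expectation is $\fl{R}^{(i)}_{p,q}$; averaging over $i,j$ gives mean $\frac{1}{t}\sum_i \fl{R}^{(i)}_{p,q}$ and the sub-exponential parameters scale by $1/(mt)$ in the Bernstein bound.

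Next I would apply the sub-exponential tail bound (Lemma~\ref{tail:sub_exp}) to obtain, with probability $\geq 1-\delta_0$,
\[
\Big| V_{p,q} - \frac{1}{t}\sum_i \fl{R}^{(i)}_{p,q}\Big| \leq \max\Big(2\sqrt{\textstyle\sum_i \|\fl{R}^{(i)}_{\cdot,q}\|_2^2}\sqrt{\tfrac{2\log(1/\delta_0)}{mt}},\ 2\sqrt{\textstyle\sum_i\|\fl{R}^{(i)}_{\cdot,q}\|_\infty^2}\tfrac{2\log(1/\delta_0)}{mt}\Big),
\]
then take a union bound over all $rd$ pairs $(p,q)$ so the logarithmic factor becomes $\log(rd/\delta_0)$ and, in the regime $mt = \widetilde\Omega(d)$, the first term in the max dominates. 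Squaring, using $(a+b)^2 \leq 2a^2 + 2b^2$, and summing over $p,q$ gives
\[
\|\fl{V}\|_\s{F}^2 \leq 2\sum_p\sum_q \Big(\tfrac{1}{t}\sum_i \fl{R}^{(i)}_{p,q}\Big)^2 + 8\sum_p\sum_q \cdot 2\Big(\textstyle\sum_i\|\fl{R}^{(i)}_{\cdot,q}\|_2^2\Big)\tfrac{2\log(rd/\delta_0)}{mt},
\]
and the double sum of $\sum_i\|\fl{R}^{(i)}_{\cdot,q}\|_2^2$ over $p$ and $q$ collapses: summing over $p$ multiplies by $d$, summing over $q$ recovers $\sum_i \|\fl{R}^{(i)}\|_\s{F}^2$, yielding the claimed $16\|\fl{R}^{(i)}\|_\s{F}^2 \frac{d\log(rd/\delta_0)}{mt}$ (the notation $\|\fl{R}^{(i)}\|_\s{F}$ in the statement should be read as $\sum_i \|\fl{R}^{(i)}\|_\s{F}^2$ inside the bound, matching the convention of the earlier lemmas).

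I expect the main obstacle to be purely bookkeeping rather than conceptual: carefully tracking how the per-entry sub-exponential parameters $\|\fl{R}^{(i)}_{\cdot,q}\|_2$ and $\|\fl{R}^{(i)}_{\cdot,q}\|_\infty$ aggregate under the union bound and the final double summation, and justifying that the $\log(rd/\delta_0)/(mt)$ (linear) term is negligible compared to the $\sqrt{\log(rd/\delta_0)/(mt)}$ (square-root) term so that only the latter survives in the stated bound. This mirrors the dominance argument used around equations~\eqref{eq:lemma-unknown-support-plugged}--\eqref{eq:lemma-unknown-support-num-val}, so no new technique is needed — it is essentially a vector-valued generalization of the concentration computation already carried out for the rank-$1$ warm-up.
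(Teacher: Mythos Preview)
Your proposal is correct and follows essentially the same approach as the paper: entry-wise decomposition of $V_{p,q}$ into diagonal and cross terms, identification of the sub-exponential parameters $(4\|\fl{R}^{(i)}_{\cdot,q}\|_2^2,\,4\|\fl{R}^{(i)}_{\cdot,q}\|_\infty)$, application of the Bernstein-type tail bound, union bound over $(p,q)\in[d]\times[r]$, and then squaring and summing with the dominance argument to discard the linear-in-$\log$ term. Your treatment of the aggregation over $i$ inside the sub-exponential parameters is actually slightly more careful than the paper's, which leaves the dependence on $i$ in $\|\fl{R}^{(i,\ell)}\|_\s{F}$ notationally implicit.
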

\begin{proof}
    Notice that then for any $(p, q)\in [d]\times[r]$, we have
    \begin{align}
     \fl{V}_{p,q} &= \Big(\frac{1}{mt}\sum_i \sum_j \fl{x}^{(i)}_j(\fl{x}^{(i)}_j)^{\s{T}}\fl{R}^{(i, \ell)}\Big)_{p,q}\nonumber\\
         &= \frac{1}{mt}\sum_i \sum_j \fl{x}^{(i)}_{j, p}(\fl{x}^{(i)}_j)^{\s{T}}\fl{R}^{(i, \ell, q)}\nonumber\\
         &= \frac{1}{mt}\sum_i \sum_j \fl{x}^{(i)}_{j, p}\Big(\sum_{u=1}^d \fl{x}^{(i)}_{j, u}\fl{R}^{(i, \ell)}_{u, q}\Big)\nonumber\\
        &= \frac{1}{mt}\sum_i\sum_j \Big(\Big(\fl{x}^{(i)}_{j,p}\Big)^{2} \fl{R}^{(i, \ell)}_{p,q} +\sum_{u:u\neq p}\fl{x}^{(i)}_{j,p}\fl{x}^{(i)}_{j,u}\fl{R}^{(i, \ell)}_{u,q}\Big). \label{eq:lemma-Fnorm-helper1-vpq-unplugged}
    \end{align}
    Now, note that the random variable $\Big(\fl{x}^{(i)}_{j,p}\Big)^{2} \fl{R}^{(i, \ell)}_{p,q}$ is a $\Big(4(\fl{R}^{(i, \ell)}_{p,q})^2, 4|\fl{R}^{(i, \ell)}_{p,q}|\Big)$ sub-exponential random variable. Similarly, $\fl{x}^{(i)}_{j,p}\fl{x}^{(i)}_{j,u}\fl{R}^{(i, \ell)}_{u,q}$ is a $\Big(2(\fl{R}^{(i, \ell)}_{u,q})^2, \sqrt{2} |\fl{R}^{(i, \ell)}_{u,q}| \Big)$ sub-exponential random variable. Therefore,
    \begin{align}
        &\Big(\fl{x}^{(i)}_{j,p}\Big)^{2} \fl{R}^{(i, \ell)}_{p,q} +\sum_{u:u\neq h}\fl{x}^{(i)}_{j,p}\fl{x}^{(i)}_{j,u}\fl{R}^{(i, \ell)}_{u,q}\nonumber \\
        &= \Big( 4(\fl{R}^{(i, \ell)}_{p,q})^2 + 2\sum_{u:u\neq p} (\fl{R}^{(i, \ell)}_{u,q})^2, \max\Big(4|\fl{R}^{(i, \ell)}_{p,q}|, \max_{u:u\neq p}(\sqrt{2} |\fl{R}^{(i, \ell)}_{u,q}|)\Big) \Big)\nonumber\\
        &= \Big( 4 \|\fl{R}^{(i, \ell, q)}\|^2_2, 4\|\fl{R}^{(i, \ell, q)}\|_{\infty}\Big) \text{ sub-exponential random variable.}\label{eq:lemma-Fnorm-helper1-1}
    \end{align}
    Furthermore, 
    \begin{align}
        &\E{\fl{V}_{p,q} } \nonumber\\ 
        &= \frac{1}{mt}\sum_i\sum_j \Big(\E{\Big(\fl{x}^{(i)}_{j,p}\Big)^{2} \fl{R}^{(i, \ell)}_{p,q} } + \E{\sum_{u:u\neq p}\fl{x}^{(i)}_{j,p}\fl{x}^{(i)}_{j,u}\fl{R}^{(i, \ell)}_{u,q} }\Big)\nonumber\\
        &= \frac{1}{mt}\sum_i\sum_j \Big(\fl{R}^{(i, \ell)}_{p,q} + \vzero\Big)\nonumber\\
        &= \frac{1}{t}\sum_i\fl{R}^{(i, \ell)}_{p,q}. \label{eq:lemma-Fnorm-helper1-2}
    \end{align}
    Using \eqref{eq:lemma-Fnorm-helper1-1}, \eqref{eq:lemma-Fnorm-helper1-2} and Lemma~\ref{tail:sub_exp} in \eqref{eq:lemma-Fnorm-helper1-vpq-unplugged} gives
    \begin{align}
        \left|\fl{V}_{p,q} - \frac{1}{t}\sum_i \fl{R}^{(i, \ell)}_{p,q}\right| &\leq \underbrace{\max\Big(2 \|\fl{R}^{(i, \ell, q)}\|_2\sqrt{\frac{2\log (1/\delta_0)}{mt}}, 2 \|\fl{R}^{(i, \ell, q)}\|_{\infty}\frac{2\log (1/\delta_0)}{mt}\Big)}_{\epsilon_{p,q}}.\label{eq:lemma-optimize_dp-lrs-U-Fnorm-bound-vh-plugged}
    \end{align}
    Note that $\|\fl{V}\|_\s{F}^2 = \sum_p\sum_q \fl{V}_{p,q}^2$. Hence taking a union bound over all entries $(p,q) \in [d]\times[r]$, we have
    \begin{align}
        \sum_{p}\sum_q \fl{V}_{p,q}^2 &\leq \sum_p\sum_q 2\Big(\Big(\frac{1}{t}\sum_i \fl{R}^{(i, \ell)}_{p,q}\Big)^2+\epsilon_{p,q}^2\Big) \nonumber\\
        &\leq 2\sum_p\sum_q \Big(\frac{1}{t}\sum_i \fl{R}^{(i, \ell)}_{p,q}\Big)^2+ 2\sum_p\sum_q \Big(2 \|\fl{R}^{(i, \ell, q)}\|_2\sqrt{\frac{2\log (rd/\delta_0)}{mt}}\Big)^2\nonumber\\
        &\leq 2\sum_p\sum_q \Big(\frac{1}{t}\sum_i \fl{R}^{(i, \ell)}_{p,q}\Big)^2 + 8\sum_p\sum_q \|\fl{R}^{(i, \ell, q)}\|_2^2\frac{2\log (rd/\delta_0)}{mt}\nonumber\\
        &\leq 2\sum_p\sum_q \Big(\frac{1}{t}\sum_i \fl{R}^{(i, \ell)}_{p,q}\Big)^2 + 8\|\fl{R}^{(i, \ell)}\|_\s{F}^2\frac{2d\log (rd/\delta_0)}{mt},
    \end{align}
    where we use that $2 \|\fl{R}^{(i, \ell, q)}\|_{2} \sqrt{\frac{2\log (rd/\delta_0)}{mt}} > 2 \|\fl{R}^{(i, \ell, q)}\|_{\infty} \frac{2\log (rd/\delta_0)}{mt}$.
    Hence, with probability at least $1-\delta_0$, we have
    \begin{align}
        \|\fl{V}\|_\s{F}^2 &\leq 2\sum_p\sum_q \Big(\frac{1}{t}\sum_i \fl{R}^{(i, \ell)}_{p,q}\Big)^2 + 16\|\fl{R}^{(i, \ell)}\|_\s{F}^2\frac{d\log (rd/\delta_0)}{mt}\label{eq:lemma-Fnorm-helper1-num-temp}.
    \end{align}
\end{proof}

\begin{lemma}\label{lemma:a^TEb-concentration}
    Let 
    \begin{align}
        \fl{A}_{rd\times rd} &= \frac{1}{mt}\sum_{i \in [t]}\Big(\fl{w}^{(i, \ell)}(\fl{w}^{(i, \ell)})^{\s{T}} \otimes (\fl{X}^{(i)})^{\s{T}}\fl{X}^{(i)}\Big) \nonumber\\
        &= \frac{1}{mt}\sum_{i \in [t]}\Big(\fl{w}^{(i, \ell)}(\fl{w}^{(i, \ell)})^{\s{T}} \otimes \Big(\sum_{j=1}^{m}\fl{x}^{(i)}_j(\fl{x}^{(i)}_j)^{\s{T}}\Big)\Big)\nonumber
    \end{align}
    s.t. $\fl{A}^{-1} = \fl{\E{\fl{A}}} + \fl{E}$ where $\fl{E}$ is the error matrix due to perturbation. Then for vectors $\fl{a}, \fl{b} \in \bR^{rd}$, we have
    \begin{align}
        \Big|\fl{a}^\s{T}\fl{E}\fl{b}\Big| &\leq c\|\fl{a}\|_2\|\fl{b}\|_2\sqrt{\sum_{i \in [t]}\sum_{j \in [m]}\|\fl{w}^{(i, \ell)}\|_2^4\frac{\log (1/\delta_0)}{m^2t^2}}.\nonumber
    \end{align}
\end{lemma}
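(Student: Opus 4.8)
I read the displayed decomposition as $\fl{A} = \E{\fl{A}} + \fl{E}$, so that $\fl{E} := \fl{A} - \E{\fl{A}}$ is the statistical fluctuation of the Kronecker-structured empirical covariance $\fl{A}$ (this is the perturbation that is then expanded in a Neumann series when $\fl{A}^{-1}$ is applied to $\s{vec}(\fl{V})$ in the $\fl{U}^{(\ell)}$ update), and I will bound $\fl{a}^{\s{T}}\fl{E}\fl{b}$ for this $\fl{E}$. First I would write $\fl{M}^{(i)}_j := \fl{w}^{(i,\ell)}(\fl{w}^{(i,\ell)})^{\s{T}} \otimes \fl{x}^{(i)}_j(\fl{x}^{(i)}_j)^{\s{T}}$, so $\fl{A} = \frac{1}{mt}\sum_{i\in[t]}\sum_{j\in[m]}\fl{M}^{(i)}_j$, and condition on $\{\fl{w}^{(i,\ell)}\}_{i\in[t]}$. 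Since the batch of covariates used to form $\fl{A}$ is drawn fresh (independent of the batches defining the $\fl{w}^{(i,\ell)}$ and the $\fl{b}^{(i,\ell)}$), conditionally the $\fl{M}^{(i)}_j$ are independent with $\E{\fl{M}^{(i)}_j} = \fl{w}^{(i,\ell)}(\fl{w}^{(i,\ell)})^{\s{T}} \otimes \fl{I}_d$, hence $\E{\fl{A}} = \big(\tfrac{1}{t}(\fl{W}^{(\ell)})^{\s{T}}\fl{W}^{(\ell)}\big)\otimes\fl{I}_d$. The final bound depends only on the $\fl{w}^{(i,\ell)}$, so it transfers from the conditional statement to an unconditional one.

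Next I would reduce $\fl{a}^{\s{T}}\fl{E}\fl{b}$ to a scalar sum via the $\s{vec}$–Kronecker identities. Put $\fl{P} := \s{vec}^{-1}_{d\times r}(\fl{a})$, $\fl{Q} := \s{vec}^{-1}_{d\times r}(\fl{b})$, so $\lr{\fl{P}}_{\s{F}} = \lr{\fl{a}}_2$ and $\lr{\fl{Q}}_{\s{F}} = \lr{\fl{b}}_2$. Using $(\fl{C}\otimes\fl{D})\s{vec}(\fl{Z}) = \s{vec}(\fl{D}\fl{Z}\fl{C}^{\s{T}})$ together with $\fl{p}^{\s{T}}\s{vec}(\fl{Z}) = \Tr(\s{vec}^{-1}(\fl{p})^{\s{T}}\fl{Z})$, each term becomes
\begin{align}
\fl{a}^{\s{T}}\fl{M}^{(i)}_j\fl{b} = \Tr\!\big(\fl{P}^{\s{T}}\fl{x}^{(i)}_j(\fl{x}^{(i)}_j)^{\s{T}}\fl{Q}\,\fl{w}^{(i,\ell)}(\fl{w}^{(i,\ell)})^{\s{T}}\big) = u^{(i)}_j\, v^{(i)}_j, \nonumber
\end{align}
with $u^{(i)}_j := (\fl{x}^{(i)}_j)^{\s{T}}\fl{P}\fl{w}^{(i,\ell)}$ and $v^{(i)}_j := (\fl{x}^{(i)}_j)^{\s{T}}\fl{Q}\fl{w}^{(i,\ell)}$, so that $\fl{a}^{\s{T}}\fl{E}\fl{b} = \frac{1}{mt}\sum_{i,j}\big(u^{(i)}_j v^{(i)}_j - \E{u^{(i)}_j v^{(i)}_j}\big)$.

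Then I would apply a sub-exponential Bernstein bound. Conditionally on $\{\fl{w}^{(i,\ell)}\}$, for fixed $(i,j)$ the pair $(u^{(i)}_j, v^{(i)}_j)$ is jointly centred Gaussian with $\E{(u^{(i)}_j)^2} = \lr{\fl{P}\fl{w}^{(i,\ell)}}_2^2 \le \lr{\fl{a}}_2^2\lr{\fl{w}^{(i,\ell)}}_2^2$ and $\E{(v^{(i)}_j)^2} = \lr{\fl{Q}\fl{w}^{(i,\ell)}}_2^2 \le \lr{\fl{b}}_2^2\lr{\fl{w}^{(i,\ell)}}_2^2$; via the polarization identity $uv = \tfrac14\big((u+v)^2 - (u-v)^2\big)$ and the fact that a centred $\chi^2_1$ variable of scale $s$ is $(cs^2, cs)$-sub-exponential (the same computation already used for the cross terms in Lemmas~\ref{lemma:useful1} and~\ref{tail:sub_exp}), the centred product $u^{(i)}_j v^{(i)}_j - \E{u^{(i)}_j v^{(i)}_j}$ is sub-exponential with both parameters $O\big(\lr{\fl{a}}_2\lr{\fl{b}}_2\lr{\fl{w}^{(i,\ell)}}_2^2\big)$. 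These summands are independent over $(i,j)$ (fresh i.i.d.\ covariates), so Lemma~\ref{tail:sub_exp} applied to $\tfrac1{mt}\sum_{i,j}(\cdot)$ yields, with probability at least $1-\delta_0$,
\begin{align}
\big|\fl{a}^{\s{T}}\fl{E}\fl{b}\big| \le c\max\!\Bigg(\lr{\fl{a}}_2\lr{\fl{b}}_2\sqrt{\sum_{i\in[t]}\sum_{j\in[m]}\lr{\fl{w}^{(i,\ell)}}_2^4\,\frac{\log(1/\delta_0)}{m^2t^2}},\ \frac{\log(1/\delta_0)}{mt}\max_{i,j}\lr{\fl{a}}_2\lr{\fl{b}}_2\lr{\fl{w}^{(i,\ell)}}_2^2\Bigg), \nonumber
\end{align}
and since $mt \gg \log(1/\delta_0)$ under the sample-complexity conditions of Theorem~\ref{thm:main}, the first term dominates, which is precisely the claimed bound.

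The main obstacle is the bookkeeping in the third step: establishing that the product of the two \emph{correlated} Gaussians $u^{(i)}_j, v^{(i)}_j$ is sub-exponential with scale only at the level $\lr{\fl{P}\fl{w}^{(i,\ell)}}_2\lr{\fl{Q}\fl{w}^{(i,\ell)}}_2 \le \lr{\fl{a}}_2\lr{\fl{b}}_2\lr{\fl{w}^{(i,\ell)}}_2^2$ (which forces the passage from operator to Frobenius norms of $\fl{P},\fl{Q}$), and checking that the fresh-batch structure of Algorithm~\ref{algo:optimize_lrs_main} genuinely renders the $mt$ summands independent after conditioning on the $\fl{w}^{(i,\ell)}$, so that the Bernstein inequality applies as stated.
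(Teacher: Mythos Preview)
Your proposal is correct and follows essentially the same approach as the paper. Both proofs rewrite $\fl{a}^{\s{T}}\fl{M}^{(i)}_j\fl{b}$ as the Gaussian quadratic form $(\fl{x}^{(i)}_j)^{\s{T}}\big(\fl{P}\fl{w}^{(i,\ell)}\big)\big(\fl{Q}\fl{w}^{(i,\ell)}\big)^{\s{T}}\fl{x}^{(i)}_j$ (the paper expresses this via the block decomposition $\sum_{p,q}w^{(i,\ell)}_p w^{(i,\ell)}_q\,\fl{a}_p\fl{b}_q^{\s{T}}=(\fl{P}\fl{w}^{(i,\ell)})(\fl{Q}\fl{w}^{(i,\ell)})^{\s{T}}$, which is identical to your $\s{vec}$–Kronecker computation) and then bound the Frobenius norm of this rank-one matrix by $\lr{\fl{a}}_2\lr{\fl{b}}_2\lr{\fl{w}^{(i,\ell)}}_2^2$; the only cosmetic difference is that the paper invokes Hanson--Wright (Lemma~\ref{lemma:useful0}) directly on this quadratic form, whereas you unpack the same bound via polarization plus the sub-exponential Bernstein inequality (Lemma~\ref{tail:sub_exp}), which for a rank-one matrix is the same estimate.
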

\begin{proof}
    We can rewrite the vectors $\fl{a}$ and $\fl{b}$ s.t. $\fl{a}^\s{T} = \begin{bmatrix}\fl{a}^\s{T}_1, \fl{a}^\s{T}_2, \dots, \fl{a}^\s{T}_r \end{bmatrix}$ and $\fl{b}^\s{T} = \begin{bmatrix}\fl{b}^\s{T}_1, \fl{b}^\s{T}_2, \dots, \fl{b}^\s{T}_r \end{bmatrix}$ respectively where each $\fl{a}_i, \fl{b}_i \in \bR^{d}$. Note that 
    \begin{align}
        \E{\fl{A}} &= \frac{1}{t}\sum_{i \in [t]}\Big(\fl{w}^{(i, \ell)}(\fl{w}^{(i, \ell)})^{\s{T}} \otimes \fl{I}\Big),\nonumber\\
        \text{and } \fl{a}^\s{T}\fl{A}\fl{b} &= \fl{a}^\s{T}\Big(\frac{1}{mt}\sum_{i \in [t]}\Big(\fl{w}^{(i, \ell)}(\fl{w}^{(i, \ell)})^{\s{T}} \otimes \Big(\sum_{j=1}^{m}\fl{x}^{(i)}_j(\fl{x}^{(i)}_j)^{\s{T}}\Big)\Big)\Big)\fl{b}\nonumber\\
        &= \frac{1}{mt}\sum_{i \in [t]}\sum_{j \in [m]}\sum_{p \in [r]}\sum_{q \in [r]} \fl{a}^\s{T}_p\Big(w^{(i, \ell)}_p w^{(i, \ell)}_q \fl{x}^{(i)}_j(\fl{x}^{(i)}_j)^{\s{T}}\Big)\fl{b}_q\nonumber\\
        &= \frac{1}{mt}\sum_{i \in [t]}\sum_{j \in [m]}(\fl{x}^{(i)}_j)^\s{T}\Big(\sum_{p \in [r]}\sum_{q \in [r]} w^{(i, \ell)}_p w^{(i, \ell)}_q \fl{a}_p(\fl{b}_q)^{\s{T}}\Big)\fl{x}^{(i)}_j.\label{eq:lemma-a^TEb-concentration-denom-1}
    \end{align}
    Furthermore, 
    \begin{align}
        \s{Tr}\Big(\sum_{p \in [r]}\sum_{q \in [r]} w^{(i, \ell)}_p w^{(i, \ell)}_q \fl{a}_p(\fl{b}_q)^{\s{T}}\Big) &= \s{Tr}\Big( \fl{a}^\s{T}\Big(\fl{w}^{(i, \ell)}(\fl{w}^{(i, \ell)})^{\s{T}} \otimes \fl{I}\Big)\fl{b}\Big)\nonumber\\
        &= \fl{a}^\s{T}\Big(\fl{w}^{(i, \ell)}(\fl{w}^{(i, \ell)})^{\s{T}} \otimes \fl{I}\Big)\fl{b}\label{eq:lemma-a^TEb-concentration-denom-2}
    \end{align}
    and
    \begin{align}
        &\|\sum_{p \in [r]}\sum_{q \in [r]} w^{(i, \ell)}_p w^{(i, \ell)}_q \fl{a}_p(\fl{b}_q)^{\s{T}}\|_\s{F} \nonumber\\  
        &= \|\Big(\sum_{p \in [r]}w^{(i, \ell)}_p \fl{a}_p\Big) \Big(\sum_{q \in [r]} w^{(i, \ell)}_q \fl{b}_q\Big)^{\s{T}}\|_\s{F}\nonumber\\
        &= \|\sum_{p \in [r]}w^{(i, \ell)}_p \fl{a}_p\|_2\|\sum_{q \in [r]} w^{(i, \ell)}_q \fl{b}_q\|_2\nonumber\\
        &\leq \Big(\sum_{p \in [r]}|w^{(i, \ell)}_p|\|\fl{a}_p\|_2\Big)\Big(\sum_{q \in [r]}|w^{(i, \ell)}_q|\|\fl{b}_q\|_2\Big)\nonumber\\
        &\leq \sqrt{\Big(\sum_{p \in [r]}(w^{(i, \ell)}_p)^2\Big)\Big(\sum_{p \in [r]} \|\fl{a}_p\|_2^2\Big)}\sqrt{\Big(\sum_{q \in [r]}(w^{(i, \ell)}_q)^2\Big)\Big(\sum_{q \in [r]} \|\fl{b}_q\|_2^2\Big)}\nonumber\\
        &= \|\fl{w}^{(i, \ell)}\|_2^2\|\fl{a}\|_2\|\fl{b}\|_2.\label{eq:lemma-a^TEb-concentration-denom-3}
    \end{align}
    Therefore, using \eqref{eq:lemma-a^TEb-concentration-denom-1}, \eqref{eq:lemma-a^TEb-concentration-denom-2} and \eqref{eq:lemma-a^TEb-concentration-denom-3} in Lemma~\ref{lemma:useful0} we get
    \begin{align}
        \Big|\fl{a}^\s{T}\Big(\fl{A} - \frac{1}{mt}\sum_{i \in [t]}\sum_{j \in [m]} \Big(\fl{w}^{(i, \ell)}(\fl{w}^{(i, \ell)})^{\s{T}} \otimes \fl{I}\Big)\Big)\fl{b}\Big| &\leq c\sqrt{\sum_{i \in [t]}\sum_{j \in [m]}\|\fl{w}^{(i, \ell)}\|_2^4\|\fl{a}\|_2^2\|\fl{b}\|_2^2\frac{\log (1/\delta_0)}{m^2t^2}}\nonumber\\
        \iff \Big|\fl{a}^\s{T}\fl{E}\fl{b}\Big| &\leq c\|\fl{a}\|_2\|\fl{b}\|_2\sqrt{\sum_{i \in [t]}\sum_{j \in [m]}\|\fl{w}^{(i, \ell)}\|_2^4\frac{\log (1/\delta_0)}{m^2t^2}}.
    \end{align}
\end{proof}

\begin{lemma}\label{lemma:lemma-optimize_dp-lrs-U-Fnorm-bound}
For some constant $c>0$ and for any iteration indexed by $\ell>0$, we  have 
\begin{align}
    &\|\fl{U}^{(\ell)} - \fl{U}^{\star}\fl{Q}^{(\ell-1)}\|_\s{F} \nonumber\\
    &\leq \resizebox{.97\textwidth}{!}{$\frac{1}{\frac{1}{r}\lambda_r\Big(\frac{r}{t}(\fl{W}^{(\ell)})^\s{T}\fl{W}^{(\ell)} \Big) - c\sqrt{\sum_{i \in [t]}\sum_{j \in [m]}\|\fl{w}^{(i, \ell)}\|_2^4}\sqrt{\frac{rd\log(1/ \delta_0)}{m^2t^2}}-\frac{\sigma_1}{mt}\Big(2\sqrt{rd} + 4\sqrt{\log rd}\Big)} \cdot$}\nonumber\\
    &\qquad \Big\{\frac{2}{t}\|\fl{U}^{\star}\fl{Q}^{(\ell-1)}(\fl{H}^{(\ell)})^\s{T}\fl{W}^{(\ell)}\|_\s{F} + \sqrt{\frac{4\zeta}{t}}(\max_i\|\fl{w}^{(i, \ell)}\|_2)\|\fl{b}^{\star(i)} - \fl{b}^{(i, \ell)}\|_2\nonumber\\ 
    &\qquad +4\Big(\|\fl{U}^{\star}\fl{Q}^{(\ell-1)}\|\|\fl{h}^{(i, \ell)}\|_2\|\fl{w}^{(i, \ell)}\|_2 + \|\fl{b}^{\star(i)} - \fl{b}^{(i, \ell)}\|_2\|\fl{w}^{(i, \ell)}\|_2\Big)\sqrt{\frac{d\log (rd/\delta_0)}{mt}}\nonumber\\
    &\qquad + \frac{\sigma_2}{mt}6\sqrt{rd\log(rd)} + \frac{\sigma_1}{mt}\Big(2\sqrt{rd} + 4\sqrt{\log rd}\Big)\sqrt{r} + \frac{2\sigma\sqrt{d\mu^{\star}\lambda^{\star}_r}\log(2rdmt/\delta_0)}{\sqrt{mt}}\Big\}\nonumber,
    \end{align}
    \begin{align}
     &\|\fl{U}^{(\ell)}\|_\s{F} \nonumber\\
     &\leq \resizebox{.97\textwidth}{!}{$\frac{1}{\frac{1}{r}\lambda_r\Big(\frac{r}{t}(\fl{W}^{(\ell)})^\s{T}\fl{W}^{(\ell)} \Big) - c\sqrt{\sum_{i \in [t]}\sum_{j \in [m]}\|\fl{w}^{(i, \ell)}\|_2^4}\sqrt{\frac{rd\log(1/ \delta_0)}{m^2t^2}}-\frac{\sigma_1}{mt}\Big(2\sqrt{rd} + 4\sqrt{\log rd}\Big)} \cdot$}\nonumber\\
    &\qquad \Big\{\frac{2}{t}\|\fl{U}^{\star}(\fl{W}^{\star})^\s{T}\fl{W}^{(\ell)}\|_\s{F} + \sqrt{\frac{4\zeta}{t}}(\max_i\|\fl{w}^{(i, \ell)}\|_2)\|\fl{b}^{\star(i)} - \fl{b}^{(i, \ell)}\|_2\nonumber\\ 
    &\qquad +4\Big(\|\fl{U}^{\star}\|\|\fl{w}^{\star(i)}\|_2\|\fl{w}^{(i, \ell)}\|_2 + \|\fl{b}^{\star(i)} - \fl{b}^{(i, \ell)}\|_2\|\fl{w}^{(i, \ell)}\|_2\Big)\sqrt{\frac{d\log (rd/\delta_0)}{mt}}\nonumber\\
    &\qquad + \frac{\sigma_2}{mt}6\sqrt{rd\log(rd)} + \frac{2\sigma\sqrt{d\mu^{\star}\lambda^{\star}_r}\log(2rdmt/\delta_0)}{\sqrt{mt}}\Big\}\nonumber,
    \end{align}
    \begin{align}
    &\lr{\Delta(\fl{U}^{+(\ell)}, \fl{U}^{\star})}_\s{F} \nonumber\\
    &\leq \resizebox{.97\textwidth}{!}{$\Big\{\frac{c\sqrt{\sum_{i \in [t]}\sum_{j \in [m]}\|\fl{w}^{(i, \ell)}\|_2^4}\sqrt{\frac{rd\log(1/ \delta_0)}{m^2t^2}}+\frac{\sigma_1}{mt}\Big(2\sqrt{rd} + 4\sqrt{\log rd}\Big)}{\frac{1}{r}\lambda_r\Big(\frac{r}{t}(\fl{W}^{(\ell)})^\s{T}\fl{W}^{(\ell)} \Big) - c\sqrt{\sum_{i \in [t]}\sum_{j \in [m]}\|\fl{w}^{(i, \ell)}\|_2^4}\sqrt{\frac{rd\log(1/ \delta_0)}{m^2t^2}}-\frac{\sigma_1}{mt}\Big(2\sqrt{rd} + 4\sqrt{\log rd}\Big)}\cdot $}\nonumber\\
    &\qquad \Big(\frac{2}{t}\|\fl{U}^{\star}\fl{Q}^{(\ell-1)}(\fl{H}^{(\ell)})^\s{T}\fl{W}^{(\ell)}\|_\s{F}\Big) \nonumber\\
    &+ \resizebox{.97\textwidth}{!}{$\frac{1}{\frac{1}{r}\lambda_r\Big(\frac{r}{t}(\fl{W}^{(\ell)})^\s{T}\fl{W}^{(\ell)} \Big) - c\sqrt{\sum_{i \in [t]}\sum_{j \in [m]}\|\fl{w}^{(i, \ell)}\|_2^4}\sqrt{\frac{rd\log(1/ \delta_0)}{m^2t^2}}-\frac{\sigma_1}{mt}\Big(2\sqrt{rd} + 4\sqrt{\log rd}\Big)}\cdot$}\nonumber\\ 
    &\qquad \Big\{\Big(\sqrt{\frac{4\zeta}{t}}(\max_i\|\fl{w}^{(i, \ell)}\|_2)\|\fl{b}^{\star(i)} - \fl{b}^{(i, \ell)}\|_2\Big) \nonumber\\
    &+ 4\Big(\|\fl{U}^{\star}\fl{Q}^{(\ell-1)}\|\|\fl{h}^{(i, \ell)}\|_2\|\fl{w}^{(i, \ell)}\|_2 + \|\fl{b}^{\star(i)} - \fl{b}^{(i, \ell)}\|_2\|\fl{w}^{(i, \ell)}\|_2\Big)\sqrt{\frac{d\log (rd/\delta_0)}{mt}}\nonumber\\
    &\qquad \frac{\sigma_2}{mt}6\sqrt{rd\log(rd)} + \frac{\sigma_1}{mt}\Big(2\sqrt{rd} + 4\sqrt{\log rd}\Big)\sqrt{r} + \frac{2\sigma\sqrt{d\mu^{\star}\lambda^{\star}_r}\log(2rdmt/\delta_0)}{\sqrt{mt}}\Big\}\Big\}\|\fl{R}^{-1}\|\nonumber
\end{align}
with probability at least $1-\cO(\delta_0)$.
\end{lemma}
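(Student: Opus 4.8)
The plan is to start from the closed form $\fl{U}^{(\ell)} = \s{vec}^{-1}_{d\times r}(\fl{A}^{-1}\s{vec}(\fl{V}))$, i.e. the normal equation $\fl{A}\,\s{vec}(\fl{U}^{(\ell)}) = \s{vec}(\fl{V})$, and to isolate the ``signal'' inside $\fl{V}$. Substituting the generative model $\fl{y}^{(i)} = \fl{X}^{(i)}(\fl{U}^{\star}\fl{w}^{\star(i)} + \fl{b}^{\star(i)}) + \fl{z}^{(i)}$ and writing $\fl{w}^{(i,\ell)} = (\fl{Q}^{(\ell-1)})^{-1}\fl{w}^{\star(i)} + \fl{h}^{(i,\ell)}$ — so that $\fl{U}^{\star}\fl{w}^{\star(i)} = \fl{U}^{\star}\fl{Q}^{(\ell-1)}\fl{w}^{(i,\ell)} - \fl{U}^{\star}\fl{Q}^{(\ell-1)}\fl{h}^{(i,\ell)}$ — one checks, using $\s{vec}(\fl{M}_1\fl{M}_2\fl{M}_3) = (\fl{M}_3^{\s{T}}\otimes\fl{M}_1)\s{vec}(\fl{M}_2)$, that the term of $\s{vec}(\fl{V})$ coming from $\fl{X}^{(i)}\fl{U}^{\star}\fl{Q}^{(\ell-1)}\fl{w}^{(i,\ell)}$ is exactly $\fl{A}\,\s{vec}(\fl{U}^{\star}\fl{Q}^{(\ell-1)})$ (modulo the $\tfrac1{mt}\fl{N}^{(1)}$ additive noise sitting inside $\fl{A}$ in the private instantiation, and modulo clipping, which is inactive with high probability under the stated choices of $\s{A}_1,\dots,\s{A}_w$). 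Hence $\fl{U}^{(\ell)} - \fl{U}^{\star}\fl{Q}^{(\ell-1)} = \s{vec}^{-1}_{d\times r}(\fl{A}^{-1}\s{vec}(\fl{G}))$, where $\fl{G}$ collects (i) the $\fl{H}^{(\ell)}$-term $-\tfrac1{mt}\sum_i(\fl{X}^{(i)})^{\s{T}}\fl{X}^{(i)}\fl{U}^{\star}\fl{Q}^{(\ell-1)}\fl{h}^{(i,\ell)}(\fl{w}^{(i,\ell)})^{\s{T}}$, (ii) the sparse-error term $\tfrac1{mt}\sum_i(\fl{X}^{(i)})^{\s{T}}\fl{X}^{(i)}(\fl{b}^{\star(i)}-\fl{b}^{(i,\ell)})(\fl{w}^{(i,\ell)})^{\s{T}}$, (iii) the measurement-noise term $\tfrac1{mt}\sum_i(\fl{X}^{(i)})^{\s{T}}\fl{z}^{(i)}(\fl{w}^{(i,\ell)})^{\s{T}}$, (iv) $\tfrac1{mt}\fl{N}^{(2)}$, and (v) the term coming from removing $\tfrac1{mt}\fl{N}^{(1)}\s{vec}(\fl{U}^{\star}\fl{Q}^{(\ell-1)})$ (the latter two vanish when $\sigma_1 = \sigma_2 = 0$). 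Since $\s{vec}^{-1}_{d\times r}$ is an isometry, $\|\fl{U}^{(\ell)} - \fl{U}^{\star}\fl{Q}^{(\ell-1)}\|_{\s{F}} \le \|\fl{A}^{-1}\|\,\|\fl{G}\|_{\s{F}}$, and the two factors are estimated separately.

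For $\|\fl{A}^{-1}\|$, write $\fl{A} = \bar{\fl{A}} + \fl{E}_{\fl{A}} + \tfrac1{mt}\fl{N}^{(1)}$ with $\bar{\fl{A}} = \tfrac1t\big((\fl{W}^{(\ell)})^{\s{T}}\fl{W}^{(\ell)}\big)\otimes\fl{I}$, whose smallest eigenvalue is $\tfrac1r\lambda_r\!\big(\tfrac rt(\fl{W}^{(\ell)})^{\s{T}}\fl{W}^{(\ell)}\big)$. The quadratic form of the perturbation $\fl{E}_{\fl{A}}$ is controlled by Lemma~\ref{lemma:a^TEb-concentration}; upgrading this to an operator-norm bound through an $\epsilon$-net over the unit sphere of $\bb{R}^{rd}$ (of size $(1+2/\epsilon)^{rd}$, with $\epsilon = 1/4$) yields $\|\fl{E}_{\fl{A}}\| \le c\sqrt{\sum_i\sum_j\|\fl{w}^{(i,\ell)}\|_2^4}\sqrt{\tfrac{rd\log(1/\delta_0)}{m^2t^2}}$, while $\tfrac1{mt}\|\fl{N}^{(1)}\| \le \tfrac{\sigma_1}{mt}(2\sqrt{rd}+4\sqrt{\log rd})$ by the standard operator-norm bound for a Gaussian matrix. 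Weyl's inequality then gives the lower bound on $\lambda_{\min}(\fl{A})$ that is exactly the denominator appearing in all three claimed inequalities.

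For $\|\fl{G}\|_{\s{F}}$ I bound each piece. In (i) and (ii), split $\tfrac1m(\fl{X}^{(i)})^{\s{T}}\fl{X}^{(i)} = \fl{I} + (\tfrac1m(\fl{X}^{(i)})^{\s{T}}\fl{X}^{(i)} - \fl{I})$: the $\fl{I}$-parts produce $-\tfrac1t\fl{U}^{\star}\fl{Q}^{(\ell-1)}(\fl{H}^{(\ell)})^{\s{T}}\fl{W}^{(\ell)}$ and $\tfrac1t\sum_i(\fl{b}^{\star(i)}-\fl{b}^{(i,\ell)})(\fl{w}^{(i,\ell)})^{\s{T}}$, and the Frobenius norm of the latter is bounded using Assumption A1 — each row of $\fl{B}^{\star}$, hence of $\fl{B}^{\star}-\fl{B}^{(\ell)}$ (recall the support-containment fact $\s{supp}(\fl{b}^{(i,\ell)})\subseteq\s{supp}(\fl{b}^{\star(i)})$), has at most $\zeta$ nonzeros — together with Cauchy--Schwarz, which gives the $\sqrt{4\zeta/t}\,(\max_i\|\fl{w}^{(i,\ell)}\|_2)\|\fl{b}^{\star(i)}-\fl{b}^{(i,\ell)}\|_2$ term; the $(\tfrac1m(\fl{X}^{(i)})^{\s{T}}\fl{X}^{(i)} - \fl{I})$-parts are bounded entrywise via Lemma~\ref{lemma:useful1} with a union bound over the $rd$ entries (and over tasks), producing the two $\sqrt{d\log(rd/\delta_0)/(mt)}$ terms. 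For (iii), conditioning on the independent $\fl{z}^{(i)}$'s, $(\fl{X}^{(i)})^{\s{T}}\fl{z}^{(i)}(\fl{w}^{(i,\ell)})^{\s{T}}$ is a scaled Gaussian; using $\|\fl{z}^{(i)}\|_2 = \widetilde{O}(\sigma\sqrt m)$, the incoherence bound $\|\fl{w}^{(i,\ell)}\|_2 = \widetilde{O}(\sqrt{\mu^{\star}\lambda_r^{\star}})$ from Corollary~\ref{inductive-inductive-corollary:optimize_dp-lrs-w-incoherence}, and independence across $i$, the sum over tasks contributes the $\tfrac{\sigma\sqrt{d\mu^{\star}\lambda_r^{\star}}}{\sqrt{mt}}$ term. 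Finally (iv) contributes $\tfrac{\sigma_2}{mt}\widetilde{O}(\sqrt{rd})$ and (v) contributes $\tfrac{\sigma_1}{mt}(2\sqrt{rd}+4\sqrt{\log rd})\sqrt r$ (using $\|\fl{U}^{\star}\fl{Q}^{(\ell-1)}\|_{\s{F}}\le\sqrt r$ since $\|\fl{Q}^{(\ell-1)}\|\le 1$). Summing and dividing by $\lambda_{\min}(\fl{A})$ gives the first bound. The bound on $\|\fl{U}^{(\ell)}\|_{\s{F}}$ is proved identically except that we bound $\|\fl{V}\|_{\s{F}}$ directly without subtracting the signal: its $\fl{I}$-part now yields $\tfrac1t\fl{U}^{\star}(\fl{W}^{\star})^{\s{T}}\fl{W}^{(\ell)}$ plus the same sparse and noise terms, and the $\fl{N}^{(1)}$ contribution is absent (hence no $\tfrac{\sigma_1}{mt}(\cdot)\sqrt r$ term there).

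For the third bound, the QR step gives $\fl{U}^{+(\ell)} = \fl{U}^{(\ell)}\fl{R}^{-1}$, so $\|\Delta(\fl{U}^{+(\ell)},\fl{U}^{\star})\|_{\s{F}} = \|(\fl{I}-\fl{U}^{\star}(\fl{U}^{\star})^{\s{T}})(\fl{U}^{(\ell)}-\fl{U}^{\star}\fl{Q}^{(\ell-1)})\fl{R}^{-1}\|_{\s{F}} \le \|(\fl{I}-\fl{U}^{\star}(\fl{U}^{\star})^{\s{T}})(\fl{U}^{(\ell)}-\fl{U}^{\star}\fl{Q}^{(\ell-1)})\|_{\s{F}}\,\|\fl{R}^{-1}\|$. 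The refinement over the first bound comes from the observation that the dominant piece of $\fl{G}$, the $\fl{I}$-part $-\tfrac1t\fl{U}^{\star}\fl{Q}^{(\ell-1)}(\fl{H}^{(\ell)})^{\s{T}}\fl{W}^{(\ell)}$ of (i), lies in the column space of $\fl{U}^{\star}$; and since $\bar{\fl{A}}^{-1} = t((\fl{W}^{(\ell)})^{\s{T}}\fl{W}^{(\ell)})^{-1}\otimes\fl{I}$ only recombines the $r$ columns of a matrix (it acts as the identity on the $d$-dimensional row index), applying $\bar{\fl{A}}^{-1}$ keeps a range-$\fl{U}^{\star}$ matrix in the range of $\fl{U}^{\star}$, which is then annihilated by $\fl{I}-\fl{U}^{\star}(\fl{U}^{\star})^{\s{T}}$. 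Plugging the resolvent identity $\fl{A}^{-1} = \bar{\fl{A}}^{-1} - \fl{A}^{-1}(\fl{A}-\bar{\fl{A}})\bar{\fl{A}}^{-1}$ into $\fl{A}^{-1}\s{vec}(\fl{G})$, only the correction term acts nontrivially on this leading piece after projection, and it carries the extra factor $\|\fl{A}-\bar{\fl{A}}\| = \|\fl{E}_{\fl{A}} + \tfrac1{mt}\fl{N}^{(1)}\|$ already estimated above — this is exactly why the $\tfrac2t\|\fl{U}^{\star}\fl{Q}^{(\ell-1)}(\fl{H}^{(\ell)})^{\s{T}}\fl{W}^{(\ell)}\|_{\s{F}}$ term enters the third bound multiplied by $\big(c\sqrt{\sum_i\sum_j\|\fl{w}^{(i,\ell)}\|_2^4}\sqrt{\tfrac{rd\log(1/\delta_0)}{m^2t^2}} + \tfrac{\sigma_1}{mt}(2\sqrt{rd}+4\sqrt{\log rd})\big)/\lambda_{\min}(\fl{A})$ rather than $1/\lambda_{\min}(\fl{A})$, while the remaining (genuinely small) terms of $\fl{G}$ retain the coefficient $1/\lambda_{\min}(\fl{A})$; multiplying through by $\|\fl{R}^{-1}\|$ gives the stated inequality. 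I expect this last step to be the main obstacle: carrying out the resolvent/projection argument cleanly in the Kronecker picture and correctly tracking which summands of $\fl{G}$ lie in $\mathrm{range}(\fl{U}^{\star})$, together with the moderately delicate Frobenius-norm (not operator-norm) bookkeeping for the $\tfrac1m(\fl{X}^{(i)})^{\s{T}}\fl{X}^{(i)} - \fl{I}$ cross-terms summed over all $t$ tasks (where a Lemma~\ref{lemma:Fnorm-helper1}-type entrywise argument is needed).
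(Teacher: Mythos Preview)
Your proposal is correct and follows essentially the same route as the paper. You identify the same vectorized normal equation, the same $\epsilon$-net argument over $\bb{R}^{rd}$ (via Lemma~\ref{lemma:a^TEb-concentration}) for $\lambda_{\min}(\fl{A})$, the same mean-plus-fluctuation split for $\|\fl{V}\|_{\s{F}}$ (which is exactly what Lemma~\ref{lemma:Fnorm-helper1} packages), and the same key observation that $\bar{\fl{A}}^{-1}$ applied to $-\tfrac1t\fl{U}^{\star}\fl{Q}^{(\ell-1)}(\fl{H}^{(\ell)})^{\s{T}}\fl{W}^{(\ell)}$ stays in the range of $\fl{U}^{\star}$ and is killed by the projection; the paper phrases this last step as ``add and subtract $\E{\fl{A}}^{-1}\s{vec}(\tfrac1t\fl{U}^{\star}\fl{Q}^{(\ell-1)}(\fl{H}^{(\ell)})^{\s{T}}\fl{W}^{(\ell)})$'' rather than via the resolvent identity, but the two formulations are algebraically equivalent.
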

\begin{proof}
    \textbf{Analysis of $\|\fl{U}^{(\ell)} - \fl{U}^{\star}\fl{Q}^{(\ell-1)}\|_\s{F}$:}\\
    Update step for $\fl{U}$ of the Algorithm without DP Noise for the $\ell^{\s{th}}$ iteration gives us
    \begin{align}
    &\sum_{i \in [t]}(\fl{X}^{(i)})^{\s{T}}\fl{X}^{(i)}\fl{U}^{(\ell)}\fl{w}^{(i, \ell)}(\fl{w}^{(i, \ell)})^{\s{T}} \nonumber\\
    &=  \sum_{i \in [t]} (\fl{X}^{(i)})^{\s{T}}\fl{X}^{(i)}\Big(\fl{U}^{\star}\fl{w}^{\star(i)} + (\fl{b}^{\star(i)} - \fl{b}^{(i, \ell)})\Big)(\fl{w}^{(i, \ell)})^{\s{T}}+\sum_{i\in [t]}(\fl{X}^{(i)})^{\s{T}}\fl{\xi}^{(i)}(\fl{w}^{(i, \ell)})^{\s{T}} \nonumber\\ 
        \implies& \sum_{i \in [t]}(\fl{X}^{(i)})^{\s{T}}\fl{X}^{(i)}(\fl{U}^{(\ell)} - \fl{U}^{\star}\fl{Q}^{(\ell-1)})\fl{w}^{(i, \ell)}(\fl{w}^{(i, \ell)})^{\s{T}}\nonumber\\ 
        &=  \sum_{i \in [t]} (\fl{X}^{(i)})^{\s{T}}\fl{X}^{(i)}\Big(\fl{U}^{\star}(\fl{w}^{\star(i)} - \fl{Q}^{(\ell-1)}\fl{w}^{(i, \ell)}) + (\fl{b}^{\star(i)} - \fl{b}^{(i, \ell)})\Big)(\fl{w}^{(i, \ell)})^{\s{T}}\nonumber\\
        &\qquad +\sum_{i\in [t]}(\fl{X}^{(i)})^{\s{T}}\fl{\xi}^{(i)}(\fl{w}^{(i, \ell)})^{\s{T}}.\nonumber
    \end{align}
    Using Lemma~\ref{tensor-product:ABC}, the above can be written as:
    \begin{align}
        \fl{A}\s{vec}(\fl{U}^{(\ell)}) &=  \s{vec}(\fl{V}' + \fl{\Xi}),\nonumber\\
        \text{and }\fl{A}\s{vec}(\fl{U}^{(\ell)} - \fl{U}^\star\fl{Q}^{(\ell-1)}) &=  \s{vec}(\fl{V} + \fl{\Xi}),\label{lemma-optimize_dp-lrs-U-Fnorm-bound-vectorisedform}
    \end{align}
    where
    \begin{align}
        \fl{A}_{rd\times rd} &= \frac{1}{mt}\sum_{i \in [t]}\Big(\fl{w}^{(i, \ell)}(\fl{w}^{(i, \ell)})^{\s{T}} \otimes (\fl{X}^{(i)})^{\s{T}}\fl{X}^{(i)}\Big) \nonumber\\
        &= \frac{1}{mt}\sum_{i \in [t]}\Big(\fl{w}^{(i, \ell)}(\fl{w}^{(i, \ell)})^{\s{T}} \otimes \Big(\sum_{j=1}^{m}\fl{x}^{(i)}_j(\fl{x}^{(i)}_j)^{\s{T}}\Big)\Big),\nonumber\\
        \fl{V}_{d\times r} &= \frac{1}{mt}\sum_{i \in [t]} (\fl{X}^{(i)})^{\s{T}}\fl{X}^{(i)}\Big(\fl{U}^{\star}(\fl{w}^{\star(i)} - \fl{Q}^{(\ell-1)}\fl{w}^{(i, \ell)}) + (\fl{b}^{\star(i)} - \fl{b}^{(i, \ell)})\Big)(\fl{w}^{(i, \ell)})^{\s{T}}\nonumber\\
        &= \frac{1}{mt}\sum_{i \in [t]} (\fl{X}^{(i)})^{\s{T}}\fl{X}^{(i)}\Big(-\fl{U}^{\star}\fl{Q}^{(\ell-1)}\fl{h}^{(i, \ell)} + (\fl{b}^{\star(i)} - \fl{b}^{(i, \ell)})\Big)(\fl{w}^{(i, \ell)})^{\s{T}},\nonumber\\
        \fl{V}'_{d\times r} &= \frac{1}{mt}\sum_{i \in [t]} (\fl{X}^{(i)})^{\s{T}}\fl{X}^{(i)}\Big(\fl{U}^{\star}\fl{w}^{\star(i)} + (\fl{b}^{\star(i)} - \fl{b}^{(i, \ell)})\Big)(\fl{w}^{(i, \ell)})^{\s{T}},\nonumber
    \end{align}
    $\fl{h}^{(i, \ell)} = \fl{w}^{(i, \ell)} - (\fl{Q}^{(\ell-1)})^{-1}\fl{w}^{\star(i)} $, $\xi^{(i)}_j \sim \cN(0, \sigma^2)$ and $\fl{\Xi}_{d\times r} = \frac{1}{mt}\sum_{i\in [t]}(\fl{X}^{(i)})^{\s{T}}\fl{\xi}^{(i)}(\fl{w}^{(i, \ell)})^{\s{T}}$. 
    Now introducing DP noise we get:
    \begin{align}
        &\s{vec}(\fl{U}^{(\ell)}) =  \Big(\fl{A} + \frac{\fl{N}_1}{mt}\Big)^{-1}
        \Big(\s{vec}\Big(\fl{V}' + \frac{\fl{N}_2}{mt}+\fl{\Xi}\Big)\Big)\nonumber\\
        &\implies \s{vec}(\fl{U}^{(\ell)} - \fl{U}^\star\fl{Q}^{(\ell-1)})\nonumber\\ 
        &=  \Big(\fl{A} + \frac{\fl{N}_1}{mt}\Big)^{-1}\Big(\s{vec}\Big(\fl{V}' +\fl{\Xi} + \frac{\fl{N}_2}{mt}\Big) - \Big(\fl{A} + \frac{\fl{N}_1}{mt}\Big)\s{vec}(\fl{U}^\star\fl{Q}^{(\ell-1)})\Big)\nonumber\\
        &=  \Big(\fl{A} + \frac{\fl{N}_1}{mt}\Big)^{-1}\cdot\nonumber\\
        &\qquad \Big(\Big(\s{vec}(\fl{V}') - \fl{A}\s{vec}(\fl{U}^\star\fl{Q}^{(\ell-1)})\Big) + \s{vec}\Big(\frac{\fl{N}_2}{mt}\Big) -  \frac{\fl{N}_1}{mt}\s{vec}(\fl{U}^\star\fl{Q}^{(\ell-1)}) + \s{vec}(\fl{\Xi})\Big)\nonumber\\
        &=  \Big(\fl{A} + \frac{\fl{N}_1}{mt}\Big)^{-1}\Big(\s{vec}(\fl{V}) + \s{vec}\Big(\frac{\fl{N}_2}{mt}\Big) -  \frac{\fl{N}_1}{mt}\s{vec}(\fl{U}^\star\fl{Q}^{(\ell-1)})\Big) + \s{vec}(\fl{\Xi})\Big),\label{eq:lemma-optimize_dp-lrs-U-Fnorm-bound-exp1}
    \end{align}
    where
    \begin{align}
        \fl{N}_1 &\sim \sigma_1\cM\cN_{rd\times rd}(\vzero, \fl{I}_{rd\times rd}, \fl{I}_{rd\times rd}),\nonumber\\
        \fl{N}_2 &\sim \sigma_2\cM\cN_{d\times r}(\vzero, \fl{I}_{d\times d}, \fl{I}_{r\times r}),\nonumber
    \end{align}
    where $\cM\cN$ denotes the Matrix Normal Distribution. Note that \eqref{eq:lemma-optimize_dp-lrs-U-Fnorm-bound-exp1} gives:
    \begin{align}
        &\|\s{vec}(\fl{U}^{(\ell)} - \fl{U}^{\star}\fl{Q}^{(\ell-1)})\|_2 \nonumber\\
        &\qquad \leq \lr{\Big(\fl{A}+\frac{\fl{N}_1}{mt}\Big)^{-1}}\lr{\s{vec}(\fl{V}) + \s{vec}\Big(\frac{\fl{N}_2}{mt}\Big) -  \frac{\fl{N}_1}{mt}\s{vec}(\fl{U}^\star\fl{Q}^{(\ell-1)})+\s{vec}(\fl{\Xi})}_2\nonumber\\
        \iff &\|\fl{U}^{(\ell)} - \fl{U}^{\star}\fl{Q}^{(\ell-1)}\|_\s{F} \nonumber\\
        &\qquad \leq \lr{\Big(\fl{A}+\frac{\fl{N}_1}{mt}\Big)^{-1}}\Big(\|\fl{V}\|_\s{F} + \lr{\frac{\fl{N}_2}{mt}}_\s{F} + \lr{\frac{\fl{N}_1}{mt}}_2\|\fl{U}^{\star}\fl{Q}^{(\ell-1)}\|_\s{F}+\|\fl{\Xi}\|_{\s{F}}\Big). \label{eq:lemma-optimize_dp-lrs-U-Fnorm-bound-exp2}
    \end{align}
    
    First, with high probability, we will bound the following quantity
    \begin{align}
       \|\fl{\Xi}\|_{2, \infty} &= \lr{\frac{1}{mt}\sum_{i\in [t]}(\fl{X}^{(i)})^{\s{T}}\fl{\xi}^{(i)}(\fl{w}^{(i, \ell)})^{\s{T}}}_{2,\infty}. \nonumber
    \end{align}
    Condition on the vector $\fl{\xi}^{(i)}$.
    Notice that the random vector $(\fl{X}^{(i)})^{\s{T}}\fl{\xi}^{(i)}$  is a $d$ dimensional vectors where each entry is independently generated according to $\ca{N}(0,\lr{\fl{\xi}^{(i)}}^2_2)$.
    Therefore for a fixed row indexed by $s$, we have $\ell_2$ norm of the $s^{\s{th}}$ row of $\frac{1}{mt}\sum_{i\in [t]}(\fl{X}^{(i)})^{\s{T}}\fl{\xi}^{(i)}(\fl{w}^{(i, \ell)})^{\s{T}}$ is going to be $\frac{1}{mt}\lr{\fl{W}^{(\ell)}}_{\s{F}}\lr{\fl{\xi}^{(i)}}_2\sqrt{\log (dr\delta^{-1})}$. Hence, we have that with probability $1-\delta_0$ (provided that $m=\widetilde{\Omega}(\sigma^2)$),
    \begin{align}
        \lr{\frac{1}{mt}\sum_{i\in [t]}(\fl{X}^{(i)})^{\s{T}}\fl{\xi}^{(i)}(\fl{w}^{(i, \ell)})^{\s{T}}}_{2,\infty} \le \frac{2\sigma\sqrt{\mu^{\star}\lambda^{\star}_r}\log(2rdmt/\delta_0)}{\sqrt{mt}}\label{lemma-optimize_dp-lrs-U-Fnorm-bound-Xi-2,infty-norm}
    \end{align}
    and
    \begin{align}
        \lr{\frac{1}{mt}\sum_{i\in [t]}(\fl{X}^{(i)})^{\s{T}}\fl{\xi}^{(i)}(\fl{w}^{(i, \ell)})^{\s{T}}}_{\s{F}} \le \frac{2\sigma\sqrt{d\mu^{\star}\lambda^{\star}_r}\log(2rdmt/\delta_0)}{\sqrt{mt}}.\label{lemma-optimize_dp-lrs-U-Fnorm-bound-Xi-F-norm}
    \end{align}
    Now, We will analyse the two multiplicands separately.
    \begin{align}
        \E{\fl{V}} &= \frac{1}{t}\sum_{i \in [t]}\Big(-\fl{U}^{\star}\fl{Q}^{(\ell-1)}\fl{h}^{(i, \ell)} + (\fl{b}^{\star(i)} - \fl{b}^{(i, \ell)})\Big)(\fl{w}^{(i, \ell)})^{\s{T}}\nonumber\\
        &= \frac{1}{t}\sum_{i \in [t]}\Big(-\fl{U}^{\star}\fl{Q}^{(\ell-1)}\fl{h}^{(i, \ell)}(\fl{w}^{(i, \ell)})^{\s{T}} + (\fl{b}^{\star(i)} - \fl{b}^{(i, \ell)})(\fl{w}^{(i, \ell)})^{\s{T}}\Big)\nonumber\\
        &= \frac{-1}{t}\fl{U}^{\star}\fl{Q}^{(\ell-1)}(\fl{H}^{(\ell)})^\s{T}\fl{W}^{(\ell)} + \frac{1}{t}\sum_{i \in [t]}(\fl{b}^{\star(i)} - \fl{b}^{(i, \ell)})(\fl{w}^{(i, \ell)})^{\s{T}}.\nonumber
    \end{align}
    Using Lemma~\ref{lemma:Fnorm-helper1} with $\fl{R}^{(i, \ell)} = \Big(-\fl{U}^{\star}\fl{Q}^{(\ell-1)}\fl{h}^{(i, \ell)} + (\fl{b}^{\star(i)} - \fl{b}^{(i, \ell)})\Big)(\fl{w}^{(i, \ell)})^{\s{T}}$, we have with probability at least $1-\delta_0$
    \begin{align}
        &\sum_{p}\sum_q \fl{V}_{p,q}^2\nonumber\\
        &\leq 2\sum_p\sum_q \Big(\frac{1}{t}\sum_i \fl{R}^{(i, \ell)}_{p,q}\Big)^2 + 16\|\fl{R}^{(i, \ell)}\|_\s{F}^2\frac{d\log (rd/\delta_0)}{mt}\nonumber \\
        &\leq 2\sum_p\sum_q \Big(\frac{1}{t}\sum_i\Big(-\fl{U}^{\star}\fl{Q}^{(\ell-1)}\fl{h}^{(i, \ell)} + (\fl{b}^{\star(i)} - \fl{b}^{(i, \ell)})\Big)(\fl{w}^{(i, \ell)})^{\s{T}}\Big)_{p,q}^2 \nonumber\\
        &\qquad + 16\|\fl{R}^{(i, \ell)}\|_\s{F}^2\frac{d\log (rd/\delta_0)}{mt} \nonumber\\
        &\leq 2\sum_p\sum_q \frac{1}{t^2}\Big(-\fl{U}^{\star}\fl{Q}^{(\ell-1)}(\fl{H}^{(\ell)})^\s{T}\fl{W}^{(\ell)} + \sum_i (\fl{b}^{\star(i)} - \fl{b}^{(i, \ell)})(\fl{w}^{(i, \ell)})^{\s{T}}\Big)_{p,q}^2\nonumber\\
        &\qquad + 16\|\fl{R}^{(i, \ell)}\|_\s{F}^2\frac{d\log (rd/\delta_0)}{mt} \nonumber\\
        &\leq \frac{4}{t^2}\sum_p\sum_q \Big(\fl{U}^{\star}\fl{Q}^{(\ell-1)}(\fl{H}^{(\ell)})^\s{T}\fl{W}^{(\ell)}\Big)_{p,q}^2 + \frac{4}{t^2}\sum_p\sum_q \Big(\sum_i (\fl{b}^{\star(i)} - \fl{b}^{(i, \ell)})(\fl{w}^{(i, \ell)})^{\s{T}}\Big)_{p,q}^2 \nonumber\\
        &\qquad + 16\|\fl{R}^{(i, \ell)}\|_\s{F}^2\frac{d\log (rd/\delta_0)}{mt}\nonumber\\
        &\leq \frac{4}{t^2}\|\fl{U}^{\star}\fl{Q}^{(\ell-1)}(\fl{H}^{(\ell)})^\s{T}\fl{W}^{(\ell)}\|_\s{F}^2 + \frac{4}{t}\sum_p\sum_q \sum_i (\fl{b}^{\star(i)}_p - \fl{b}^{(i, \ell)}_p)^2(\fl{w}^{(i, \ell)}_s)^2 \nonumber\\
        &\qquad + 16\|\fl{R}^{(i, \ell)}\|_\s{F}^2\frac{d\log (rd/\delta_0)}{mt}\nonumber\\
        &\leq \frac{4}{t^2}\|\fl{U}^{\star}\fl{Q}^{(\ell-1)}(\fl{H}^{(\ell)})^\s{T}\fl{W}^{(\ell)}\|_\s{F}^2 + \frac{4}{t}\zeta (\max_i \|\fl{w}^{(i, \ell)}\|^2_2)\|\fl{b}^{\star(i)} - \fl{b}^{(i, \ell)}\|^2_2 + 16\|\fl{R}^{(i, \ell)}\|_\s{F}^2\frac{d\log (rd/\delta_0)}{mt}.\nonumber
    \end{align}
    \begin{align}
        \implies \|\fl{V}\|_\s{F} &\leq \frac{2}{t}\|\fl{U}^{\star}\fl{Q}^{(\ell-1)}(\fl{H}^{(\ell)})^\s{T}\fl{W}^{(\ell)}\|_\s{F} + \sqrt{\frac{4\zeta}{t}}(\max_i\|\fl{w}^{(i, \ell)}\|_2)\|\fl{b}^{\star(i)} - \fl{b}^{(i, \ell)}\|_2 \nonumber\\
        &\qquad + 4\|\fl{R}^{(i, \ell)}\|_\s{F}\sqrt{\frac{d\log (rd/\delta_0)}{mt}}\label{eq:lemma-optimize_dp-lrs-u-2norm-bound-type1-num-temp}.
    \end{align}
    Since $\fl{R}^{(i, \ell)} = -\fl{U}^{\star}\fl{Q}^{(\ell-1)}\fl{h}^{(i, \ell)}(\fl{w}^{(i, \ell)})^{\s{T}} + (\fl{b}^{\star(i)} - \fl{b}^{(i, \ell)})(\fl{w}^{(i, \ell)})^{\s{T}} $, we have
    \begin{align}
        \|\fl{R}^{(i, \ell)}\|_{\s{F}} &= \|-\fl{U}^{\star}\fl{Q}^{(\ell-1)}\fl{h}^{(i, \ell)}(\fl{w}^{(i, \ell)})^{\s{T}} + (\fl{b}^{\star(i)} - \fl{b}^{(i, \ell)})(\fl{w}^{(i, \ell)})^{\s{T}}\|_{\s{F}}\nonumber\\
        &\leq \|\fl{U}^{\star}\fl{Q}^{(\ell-1)}\fl{h}^{(i, \ell)}(\fl{w}^{(i, \ell)})^{\s{T}}\|_{\s{F}} + \|(\fl{b}^{\star(i)} - \fl{b}^{(i, \ell)})(\fl{w}^{(i, \ell)})^{\s{T}} \|_\s{F}\nonumber\\
        &\leq \|\fl{U}^{\star}\fl{Q}^{(\ell-1)}\|\|\fl{h}^{(i, \ell)}\|_2\|\fl{w}^{(i, \ell)}\|_2 + \|\fl{b}^{\star(i)} - \fl{b}^{(i, \ell)}\|_2\|\fl{w}^{(i, \ell)}\|_2 \label{eq:lemma-optimize_dp-lrs-u-2norm-bound-type1-num-term2}
    \end{align}
    Using \eqref{eq:lemma-optimize_dp-lrs-u-2norm-bound-type1-num-term2} in \eqref{eq:lemma-optimize_dp-lrs-u-2norm-bound-type1-num-temp} gives
    \begin{align}
        \|\fl{V}\|_\s{F} &\leq \frac{2}{t}\|\fl{U}^{\star}\fl{Q}^{(\ell-1)}(\fl{H}^{(\ell)})^\s{T}\fl{W}^{(\ell)}\|_\s{F} + \sqrt{\frac{4\zeta}{t}}(\max_i\|\fl{w}^{(i, \ell)}\|_2)\|\fl{b}^{\star(i)} - \fl{b}^{(i, \ell)}\|_2\nonumber\\ 
        &\qquad +4\Big(\|\fl{U}^{\star}\fl{Q}^{(\ell-1)}\|\|\fl{h}^{(i, \ell)}\|_2\|\fl{w}^{(i, \ell)}\|_2 + \|\fl{b}^{\star(i)} - \fl{b}^{(i, \ell)}\|_2\|\fl{w}^{(i, \ell)}\|_2\Big)\sqrt{\frac{d\log (rd/\delta_0)}{mt}}\label{eq:lemma-optimize_dp-lrs-u-2norm-bound-type1-num-val}
    \end{align}
    Now, let $\fl{\cV} \triangleq \{\fl{v} \in \bR^{rd} | \|\fl{v}\|_2 = 1\}$. Then for $\epsilon \leq 1$, there exists an $\epsilon$-net, $N_\epsilon \subset \fl{\cV}$, of size $(1 + 2/\epsilon)^
    {rd}$ w.r.t the Euclidean norm, i.e. $\forall$  $\fl{v} \in \fl{\cV}$, $\exists$ $\fl{v}' \in N_\epsilon$ s.t. $\|\fl{v} - \fl{v}'\|_2 \leq \epsilon$.
    Now consider any $\fl{v}^\s{T} = \begin{bmatrix}\fl{v}^\s{T}_1, \fl{v}^\s{T}_2, \dots, \fl{v}^\s{T}_r \end{bmatrix} \in N_\epsilon$ where each $\fl{v}_i \in \bR^{d}$. Then using Lemma~\ref{lemma:a^TEb-concentration} with $\fl{a} = \fl{b} = \fl{v}$, we get:
    \begin{align}
        \Big|\fl{v}^\s{T}\Big(\fl{A} - \frac{1}{mt}\sum_{i \in [t]}\sum_{j \in [m]} \Big(\fl{w}^{(i, \ell)}(\fl{w}^{(i, \ell)})^{\s{T}} \otimes \fl{I}\Big)\Big)\fl{v}\Big| &\leq c\sqrt{\sum_{i \in [t]}\sum_{j \in [m]}\|\fl{w}^{(i, \ell)}\|_2^4\frac{\log (1/\delta_0)}{m^2t^2}}\nonumber
    \end{align}
    \begin{align}
        \implies  \|\fl{v}^{\s{T}}\fl{E}\fl{v}\| &\leq c\sqrt{\sum_{i \in [t]}\sum_{j \in [m]}\|\fl{w}^{(i, \ell)}\|_2^4}\sqrt{\frac{\log (|N_\epsilon|/\delta_0)}{m^2t^2}}\nonumber\\
        &\leq c\sqrt{\sum_{i \in [t]}\sum_{j \in [m]}\|\fl{w}^{(i, \ell)}\|_2^4}\sqrt{\frac{\log ((1+2/\epsilon)^{rd}/\delta_0)}{m^2t^2}}, \quad \forall \fl{v} \in N_\epsilon\label{eq:lemma-optimize_dp-lrs-u-2norm-bound-type1-A-temp3}
    \end{align}
    w.p. $1-\delta_)$ where $\fl{E} \triangleq \fl{A} - \frac{1}{mt}\sum_{i \in [t]}\sum_{j \in [m]} \Big(\fl{w}^{(i, \ell)}(\fl{w}^{(i, \ell)})^{\s{T}} \otimes \fl{I}\Big)$. Since $\fl{E}$ is symmetric, therefore $\|\fl{E}\| = (\fl{v}')^{\s{T}}\fl{E}\fl{v}'$ where $\fl{v}' \in \fl{\s{V}}$ is the largest eigenvector of $\fl{E}$. Further, $\exists$ $\fl{v} \in N_\epsilon$ s.t. $\|\fl{v}' - \fl{v}\| \leq \epsilon$. This implies:
    \begin{align}
        \|\fl{E}\| = (\fl{v}')^{\s{T}}\fl{E}\fl{v}' &= (\fl{v}' - \fl{v})^{\s{T}}\fl{E}\fl{v} + (\fl{v}')^{\s{T}}\fl{E}(\fl{v}' - \fl{v}) + \fl{v}^{\s{T}}\fl{E}\fl{v}\nonumber\\ 
        & \leq \|\fl{v}' - \fl{v}\|\|\fl{E}\|\|\fl{v}\| + \|\fl{v}'\|\|\fl{E}\|\|\fl{v}' - \fl{v}\| + \fl{v}^{\s{T}}\fl{E}\fl{v}\nonumber\\
        & \leq 2\epsilon\|\fl{E}\| + \fl{v}^{\s{T}}\fl{E}\fl{v}\nonumber\\ 
        \implies \|\fl{E}\| &\leq \frac{\fl{v}^{\s{T}}\fl{E}\fl{v}}{1 - 2\epsilon}.\label{eq:lemma-optimize_dp-lrs-u-2norm-bound-type1-A-temp4}
    \end{align}
    Using \eqref{eq:lemma-optimize_dp-lrs-u-2norm-bound-type1-A-temp3} and \eqref{eq:lemma-optimize_dp-lrs-u-2norm-bound-type1-A-temp4} and setting $\epsilon \gets 1/4$ and $c \gets 2c\sqrt{\log(9)}$ then gives:
    \begin{align}
        \|\fl{E}\| \leq c\sqrt{\sum_{i \in [t]}\sum_{j \in [m]}\|\fl{w}^{(i, \ell)}\|_2^4}\sqrt{\frac{rd\log(1/ \delta_0)}{m^2t^2}}\label{eq:lemma-optimize_dp-lrs-u-2norm-bound-type1-A-E-value}
    \end{align}
    Using \eqref{eq:lemma-optimize_dp-lrs-u-2norm-bound-type1-A-E-value} then gives
    \begin{align}
        \|\fl{A}\| &\geq \min_{\fl{v}}\Big(\frac{1}{mt}\sum_{i \in [t]}\sum_{j \in [m]} \fl{v}^\s{T}\Big(\fl{w}^{(i, \ell)}(\fl{w}^{(i, \ell)})^{\s{T}} \otimes \fl{I}\Big)\fl{v}\Big) - c\sqrt{\sum_{i \in [t]}\sum_{j \in [m]}\|\fl{w}^{(i, \ell)}\|_2^4}\sqrt{\frac{rd\log(1/ \delta_0)}{m^2t^2}}\nonumber\\
        &\geq \lambda_r\Big(\frac{1}{mt}\sum_{i \in [t]}\sum_{j \in [m]} \fl{w}^{(i, \ell)}(\fl{w}^{(i, \ell)})^{\s{T}} \otimes \fl{I}\Big) - c\sqrt{\sum_{i \in [t]}\sum_{j \in [m]}\|\fl{w}^{(i, \ell)}\|_2^4}\sqrt{\frac{rd\log(1/ \delta_0)}{m^2t^2}}\nonumber\\
        &\geq \frac{1}{r}\lambda_r\Big(\frac{r}{t}(\fl{W}^{(\ell)})^\s{T}\fl{W}^{(\ell)} \Big) - c\sqrt{\sum_{i \in [t]}\sum_{j \in [m]}\|\fl{w}^{(i, \ell)}\|_2^4}\sqrt{\frac{rd\log(1/ \delta_0)}{m^2t^2}}.\nonumber
    \end{align}
    Next, note that the matrix $\s{N}/mt$ can be written as $\alpha \ca{N}(0,\fl{I}_{rd\times rd})$ where $\alpha=\frac{\sigma_1}{mt}$. Therefore, with probability at least $1-(rd)^{-8}$, the minimum eigenvalue of the matrix is at least $-\frac{4\sigma_1\sqrt{\log rd}}{mt}$. Further we have using standard gaussian concentration inequalities, 
    \begin{align}
        \lr{\frac{\fl{N}_1}{mt}} &\leq \frac{\sigma_1}{mt}\Big(2\sqrt{rd} + 4\sqrt{\log rd}\Big)\label{eq:lemma-optimize_dp-lrs-U-Fnorm-bound-N1-norm},\\
        \lr{\frac{\fl{N}_2}{mt}} &\leq \frac{\sigma_2}{mt}\Big(\sqrt{d} + \sqrt{r} + 4\sqrt{\log rd}\Big)\label{eq:lemma-optimize_dp-lrs-U-Fnorm-bound-N2-norm},\\
        \lr{\frac{\fl{N}_2}{mt}}_\s{F} &= \frac{\sigma_2}{mt}\|\fl{N}_2\|_\s{F}\nonumber\\
        &\leq \frac{\sigma_2}{mt}\sqrt{\sum_{i \in [d]}\sum_{j \in [r]}2\log((rd)^{2\cdot 9})}\nonumber\\
        &= \frac{\sigma_2}{mt}6\sqrt{rd\log(rd)}\label{eq:lemma-optimize_dp-lrs-U-Fnorm-bound-N2-Fnorm}.
    \end{align}
    
    Hence, the minimum eigenvalue of the matrix $\fl{A} + \frac{\fl{N}}{mt}$ is bounded from below by   
\begin{align}
\lambda_{\min}\Big(\fl{A} + \frac{\fl{N}}{mt}\Big) &\ge   \frac{1}{r}\lambda_r\Big(\frac{r}{t}(\fl{W}^{(\ell)})^\s{T}\fl{W}^{(\ell)} \Big) \\
&- c\sqrt{\sum_{i \in [t]}\sum_{j \in [m]}\|\fl{w}^{(i, \ell)}\|_2^4}\sqrt{\frac{rd\log(1/ \delta_0)}{m^2t^2}}-\frac{\sigma_1}{mt}\Big(2\sqrt{rd} + 4\sqrt{\log rd}\Big).
\end{align}
Therefore, the maximum eigenvalue of $(\fl{A}+\frac{\fl{N}_1}{mt})^{-1}$ is bounded from above by, $\lr{\Big(\fl{A}+\frac{\fl{N}_1}{mt}\Big)^{-1}}$
    \begin{align}
        &\leq \frac{1}{\frac{1}{r}\lambda_r\Big(\frac{r}{t}(\fl{W}^{(\ell)})^\s{T}\fl{W}^{(\ell)} \Big) - c\sqrt{\sum_{i \in [t]}\sum_{j \in [m]}\|\fl{w}^{(i, \ell)}\|_2^4}\sqrt{\frac{rd\log(1/ \delta_0)}{m^2t^2}}-\frac{\sigma_1}{mt}\Big(2\sqrt{rd} + 4\sqrt{\log rd}\Big)} .\label{eq:lemma-optimize_dp-lrs-u-2norm-bound-type1-A-bound}
    \end{align}
    Using \eqref{eq:lemma-optimize_dp-lrs-u-2norm-bound-type1-num-val}, \eqref{eq:lemma-optimize_dp-lrs-u-2norm-bound-type1-A-bound}, \eqref{eq:lemma-optimize_dp-lrs-U-Fnorm-bound-N1-norm}, \eqref{eq:lemma-optimize_dp-lrs-U-Fnorm-bound-N2-norm}, \eqref{eq:lemma-optimize_dp-lrs-U-Fnorm-bound-N2-Fnorm} and \eqref{lemma-optimize_dp-lrs-U-Fnorm-bound-Xi-F-norm} in \eqref{eq:lemma-optimize_dp-lrs-U-Fnorm-bound-exp2} gives $\|\fl{U}^{(\ell)} - \fl{U}^{\star}\fl{Q}^{(\ell-1)}\|_\s{F}$
    \begin{align}
            &\leq \frac{1}{\frac{1}{r}\lambda_r\Big(\frac{r}{t}(\fl{W}^{(\ell)})^\s{T}\fl{W}^{(\ell)} \Big) - c\sqrt{\sum_{i \in [t]}\sum_{j \in [m]}\|\fl{w}^{(i, \ell)}\|_2^4}\sqrt{\frac{rd\log(1/ \delta_0)}{m^2t^2}}-\frac{\sigma_1}{mt}\Big(2\sqrt{rd} + 4\sqrt{\log rd}\Big)} \cdot\nonumber\\
            &\qquad \Big\{\frac{2}{t}\|\fl{U}^{\star}\fl{Q}^{(\ell-1)}(\fl{H}^{(\ell)})^\s{T}\fl{W}^{(\ell)}\|_\s{F} + \sqrt{\frac{4\zeta}{t}}(\max_i\|\fl{w}^{(i, \ell)}\|_2)\|\fl{b}^{\star(i)} - \fl{b}^{(i, \ell)}\|_2\nonumber\\ 
            &\qquad +4\Big(\|\fl{U}^{\star}\fl{Q}^{(\ell-1)}\|\|\fl{h}^{(i, \ell)}\|_2\|\fl{w}^{(i, \ell)}\|_2 + \|\fl{b}^{\star(i)} - \fl{b}^{(i, \ell)}\|_2\|\fl{w}^{(i, \ell)}\|_2\Big)\sqrt{\frac{d\log (rd/\delta_0)}{mt}}\nonumber\\
            &\qquad + \frac{\sigma_2}{mt}6\sqrt{rd\log(rd)} + \frac{\sigma_1}{mt}\Big(2\sqrt{rd} + 4\sqrt{\log rd}\Big)\sqrt{r} + \frac{2\sigma\sqrt{d\mu^{\star}\lambda^{\star}_r}\log(2rdmt/\delta_0)}{\sqrt{mt}}\Big\}. \label{eq:lemma-optimize_dp-lrs-DeltaU-Fnorm-bound-val}
    \end{align}
    
    \textbf{Analysis of $\lr{\fl{U}^{(\ell)}}_\s{F}$:}
    
    The analysis will follow along similar lines as in the previous section except that we will now have:
    \begin{align}
        \s{vec}(\fl{U}^{(\ell)}) &=  \Big(\fl{A} + \frac{\fl{N}_1}{mt}\Big)^{-1}\s{vec}\Big(\fl{V}' + \frac{\fl{N}_2}{mt} + \fl{\Xi}\Big)
         \label{eq:lemma-optimize_dp-lrs-U-Fnorm-bound-type2-exp1}
    \end{align}
    where
    \begin{align}
        \fl{A}_{rd\times rd} &= \frac{1}{mt}\sum_{i \in [t]}\Big(\fl{w}^{(i, \ell)}(\fl{w}^{(i, \ell)})^{\s{T}} \otimes (\fl{X}^{(i)})^{\s{T}}\fl{X}^{(i)}\Big) \nonumber\\
        &= \frac{1}{mt}\sum_{i \in [t]}\Big(\fl{w}^{(i, \ell)}(\fl{w}^{(i, \ell)})^{\s{T}} \otimes \Big(\sum_{j=1}^{m}\fl{x}^{(i)}_j(\fl{x}^{(i)}_j)^{\s{T}}\Big)\Big),\nonumber\\
        \fl{V}'_{d\times r} &= \frac{1}{mt}\sum_{i \in [t]} (\fl{X}^{(i)})^{\s{T}}\fl{X}^{(i)}\Big(\fl{U}^{\star}\fl{w}^{\star(i)} + (\fl{b}^{\star(i)} - \fl{b}^{(i, \ell)})\Big)(\fl{w}^{(i, \ell)})^{\s{T}}.\nonumber
    \end{align}
    i.e. we have the term $-\fl{U}^{\star}\fl{Q}^{(\ell-1)}\fl{h}^{(i, \ell)}$ replaced by $\fl{U}^{\star}\fl{w}^{\star(i)}$. The above gives:
    \begin{align}
        \|\s{vec}(\fl{U}^{(\ell)}\|_2 &\leq \lr{\Big(\fl{A}+\frac{\fl{N}_1}{mt}\Big)^{-1}}\lr{\s{vec}\Big(\fl{V}' + \frac{\fl{N}_2}{mt} + \fl{\Xi}\Big)}_2\nonumber\\
        \iff \|\fl{U}^{(\ell)}\|_\s{F} &\leq \lr{\Big(\fl{A}+\frac{\fl{N}_1}{mt}\Big)^{-1}}\Big(\|\fl{V}'\|_\s{F} + \lr{\frac{\fl{N}_2}{mt}}_\s{F} + \|\fl{\Xi}\|_\s{F}\Big). \label{eq:lemma-optimize_dp-lrs-U-Fnorm-bound-type2-exp2}
    \end{align}
    We can compute the above following similar lines as before.
    \begin{align}
        \E{\fl{V}'} &= \frac{1}{t}\sum_{i \in [t]}\Big(\fl{U}^{\star}\fl{w}^{\star(i)} + (\fl{b}^{\star(i)} - \fl{b}^{(i, \ell)})\Big)(\fl{w}^{(i, \ell)})^{\s{T}}\nonumber\\
        &= \frac{1}{t}\sum_{i \in [t]}\Big(\fl{U}^{\star}\fl{w}^{\star(i)}(\fl{w}^{(i, \ell)})^{\s{T}} + (\fl{b}^{\star(i)} - \fl{b}^{(i, \ell)})(\fl{w}^{(i, \ell)})^{\s{T}}\Big)\nonumber\\
        &= \frac{1}{t}\fl{U}^{\star}(\fl{W}^{\star})^\s{T}\fl{W}^{(\ell)} + \frac{1}{t}\sum_{i \in [t]}(\fl{b}^{\star(i)} - \fl{b}^{(i, \ell)})(\fl{w}^{(i, \ell)})^{\s{T}}.\nonumber
    \end{align}
    Using Lemma~\ref{lemma:Fnorm-helper1} with $\fl{R}^{(i, \ell)} = \Big(\fl{U}^{\star}\fl{w}^{\star(i)} + (\fl{b}^{\star(i)} - \fl{b}^{(i, \ell)})\Big)(\fl{w}^{(i, \ell)})^{\s{T}}$, we have with probability at least $1-\delta_0$
    \begin{align}
        &\sum_{p}\sum_q (\fl{V}'_{p,q})^2 \nonumber\\ 
        &\leq 2\sum_p\sum_q \Big(\frac{1}{t}\sum_i \fl{R}^{(i, \ell)}_{p,q}\Big)^2 + 16\|\fl{R}^{(i, \ell)}\|_\s{F}^2\frac{d\log (rd/\delta_0)}{mt} \nonumber\\
        &\leq 2\sum_p\sum_q \Big(\frac{1}{t}\sum_i\Big(\fl{U}^{\star}\fl{w}^{\star(i)} + (\fl{b}^{\star(i)} - \fl{b}^{(i, \ell)})\Big)(\fl{w}^{(i, \ell)})^{\s{T}}\Big)_{p,q}^2 + 16\|\fl{R}^{(i, \ell)}\|_\s{F}^2\frac{d\log (rd/\delta_0)}{mt} \nonumber\\
        &\leq 2\sum_p\sum_q \frac{1}{t^2}\Big(\fl{U}^{\star}(\fl{W}^\star)^\s{T}\fl{W}^{(\ell)} + \sum_i (\fl{b}^{\star(i)} - \fl{b}^{(i, \ell)})(\fl{w}^{(i, \ell)})^{\s{T}}\Big)_{p,q}^2++ 16\|\fl{R}^{(i, \ell)}\|_\s{F}^2\frac{d\log (rd/\delta_0)}{mt} \nonumber\\
        &\leq \frac{4}{t^2}\sum_p\sum_q \Big(\fl{U}^{\star}(\fl{W}^\star)^\s{T}\fl{W}^{(\ell)}\Big)_{p,q}^2 + \frac{4}{t^2}\sum_p\sum_q \Big(\sum_i (\fl{b}^{\star(i)} - \fl{b}^{(i, \ell)})(\fl{w}^{(i, \ell)})^{\s{T}}\Big)_{p,q}^2 \nonumber\\
        &\qquad + + 16\|\fl{R}^{(i, \ell)}\|_\s{F}^2\frac{d\log (rd/\delta_0)}{mt}\nonumber\\
        &\leq \frac{4}{t^2}\|\fl{U}^{\star}(\fl{W}^\star)^\s{T}\fl{W}^{(\ell)}\|_\s{F}^2 + \frac{4}{t}\sum_p\sum_q \sum_i (\fl{b}^{\star(i)}_p - \fl{b}^{(i, \ell)}_p)^2(\fl{w}^{(i, \ell)}_s)^2 + 16\|\fl{R}^{(i, \ell)}\|_\s{F}^2\frac{d\log (rd/\delta_0)}{mt}\nonumber\\
        &\leq \frac{4}{t^2}\|\fl{U}^{\star}(\fl{W}^\star)^\s{T}\fl{W}^{(\ell)}\|_\s{F}^2 + \frac{4}{t}\zeta (\max_i \|\fl{w}^{(i, \ell)}\|^2_2)\|\fl{b}^{\star(i)} - \fl{b}^{(i, \ell)}\|^2_2 + 16\|\fl{R}^{(i, \ell)}\|_\s{F}^2\frac{d\log (rd/\delta_0)}{mt}.\nonumber
    \end{align}
    \begin{align}
        \implies \|\fl{V}'\|_\s{F} &\leq \frac{2}{t}\|\fl{U}^{\star}(\fl{W}^\star)^\s{T}\fl{W}^{(\ell)}\|_\s{F} + \sqrt{\frac{4\zeta}{t}}(\max_i\|\fl{w}^{(i, \ell)}\|_2)\|\fl{b}^{\star(i)} - \fl{b}^{(i, \ell)}\|_2 \nonumber\\
        &\qquad + 4\|\fl{R}^{(i, \ell)}\|_\s{F}\sqrt{\frac{d\log (rd/\delta_0)}{mt}}\label{eq:lemma-optimize_dp-lrs-u-2norm-bound-type2-num-temp}.
    \end{align}
    Since $\fl{R}^{(i, \ell)} = \fl{U}^{\star}\fl{w}^{\star(i)}(\fl{w}^{(i, \ell)})^{\s{T}} + (\fl{b}^{\star(i)} - \fl{b}^{(i, \ell)})(\fl{w}^{(i, \ell)})^{\s{T}} $, we have
    \begin{align}
        \|\fl{R}^{(i, \ell)}\|_{\s{F}} &= \|\fl{U}^{\star}\fl{w}^{\star(i)}(\fl{w}^{(i, \ell)})^{\s{T}} + (\fl{b}^{\star(i)} - \fl{b}^{(i, \ell)})(\fl{w}^{(i, \ell)})^{\s{T}}\|_{\s{F}}\nonumber\\
        &\leq \|\fl{U}^{\star}\fl{w}^{\star(i)}(\fl{w}^{(i, \ell)})^{\s{T}}\|_{\s{F}} + \|(\fl{b}^{\star(i)} - \fl{b}^{(i, \ell)})(\fl{w}^{(i, \ell)})^{\s{T}} \|_\s{F}\nonumber\\
        &\leq \|\fl{U}^{\star}\|\|\fl{w}^{\star(i)}\|_2\|\fl{w}^{(i, \ell)}\|_2 + \|\fl{b}^{\star(i)} - \fl{b}^{(i, \ell)}\|_2\|\fl{w}^{(i, \ell)}\|_2 \label{eq:lemma-optimize_dp-lrs-u-2norm-bound-type2-num-term2}
    \end{align}
    Using \eqref{eq:lemma-optimize_dp-lrs-u-2norm-bound-type2-num-term2} in \eqref{eq:lemma-optimize_dp-lrs-u-2norm-bound-type2-num-temp} gives
    \begin{align}
        \|\fl{V}'\|_\s{F} &\leq \frac{2}{t}\|\fl{U}^{\star}(\fl{W}^{\star})^\s{T}\fl{W}^{(\ell)}\|_\s{F} + \sqrt{\frac{4\zeta}{t}}(\max_i\|\fl{w}^{(i, \ell)}\|_2)\|\fl{b}^{\star(i)} - \fl{b}^{(i, \ell)}\|_2\nonumber\\ 
        &\qquad +4\Big(\|\fl{U}^{\star}\|\|\fl{w}^{\star(i)}\|_2\|\fl{w}^{(i, \ell)}\|_2 + \|\fl{b}^{\star(i)} - \fl{b}^{(i, \ell)}\|_2\|\fl{w}^{(i, \ell)}\|_2\Big)\sqrt{\frac{d\log (rd/\delta_0)}{mt}}\label{eq:lemma-optimize_dp-lrs-u-2norm-bound-type2-num-val}
    \end{align}
    Using \eqref{eq:lemma-optimize_dp-lrs-u-2norm-bound-type2-num-val}. \eqref{eq:lemma-optimize_dp-lrs-U-Fnorm-bound-N2-Fnorm} and \eqref{eq:lemma-optimize_dp-lrs-u-2norm-bound-type1-A-bound} in \eqref{eq:lemma-optimize_dp-lrs-U-Fnorm-bound-type2-exp2} gives $\|\fl{U}^{(\ell)}\|_\s{F}$
    \begin{align}
            &\leq \frac{1}{\frac{1}{r}\lambda_r\Big(\frac{r}{t}(\fl{W}^{(\ell)})^\s{T}\fl{W}^{(\ell)} \Big) - c\sqrt{\sum_{i \in [t]}\sum_{j \in [m]}\|\fl{w}^{(i, \ell)}\|_2^4}\sqrt{\frac{rd\log(1/ \delta_0)}{m^2t^2}}-\frac{\sigma_1}{mt}\Big(2\sqrt{rd} + 4\sqrt{\log rd}\Big)} \cdot\nonumber\\
            &\qquad \Big\{\frac{2}{t}\|\fl{U}^{\star}(\fl{W}^{\star})^\s{T}\fl{W}^{(\ell)}\|_\s{F} + \sqrt{\frac{4\zeta}{t}}(\max_i\|\fl{w}^{(i, \ell)}\|_2)\|\fl{b}^{\star(i)} - \fl{b}^{(i, \ell)}\|_2\nonumber\\ 
            &\qquad +4\Big(\|\fl{U}^{\star}\|\|\fl{w}^{\star(i)}\|_2\|\fl{w}^{(i, \ell)}\|_2 + \|\fl{b}^{\star(i)} - \fl{b}^{(i, \ell)}\|_2\|\fl{w}^{(i, \ell)}\|_2\Big)\sqrt{\frac{d\log (rd/\delta_0)}{mt}} \nonumber\\
            &\qquad + \frac{\sigma_2}{mt}6\sqrt{rd\log(rd)} + \frac{2\sigma\sqrt{d\mu^{\star}\lambda^{\star}_r}\log(2rdmt/\delta_0)}{\sqrt{mt}}\Big\}. \label{eq:lemma-optimize_dp-lrs-U-Fnorm-bound-val}
    \end{align}

    \textbf{Analysis of $\|\Delta(\fl{U}^{+(\ell)}, \fl{U}^{\star})\|_\s{F}$:}
    
    \begin{align}
        \|\Delta(\fl{U}^{+(\ell)}, \fl{U}^{\star})\|_\s{F} &= \|(\fl{I} - \fl{U}^{\star}(\fl{U}^{\star})^{\s{T}})\fl{U}^{+(\ell)}\|_\s{F} \nonumber\\
        &= \min_{\fl{Q}^{+}}\|\fl{U}^{+(\ell)} - \fl{U}^{\star}\fl{Q}^{+}\|_{\s{F}}\nonumber\\
        &= \|\fl{U}^{(\ell)} - \fl{U}^{\star}(\fl{U}^{\star})^\s{T}\fl{U}^{(\ell)}\|_{\s{F}}\|\fl{R}^{-1}\|
        \label{eq:lemma-optimize_dp-lrs-U-Fnorm-bound-Delta-exp1}
    \end{align}
    From \eqref{eq:lemma-optimize_dp-lrs-U-Fnorm-bound-exp1}, we have:
    \begin{align}
        &\s{vec}(\fl{U}^{(\ell)} - \fl{U}^\star\fl{Q}^{(\ell-1)}) =  \Big(\fl{A} + \frac{\fl{N}_1}{mt}\Big)^{-1}\Big(\s{vec}(\fl{V}) + \Big(\s{vec}\Big(\frac{\fl{N}_2}{mt}\Big) -  \frac{\fl{N}_1}{mt}\s{vec}(\fl{U}^\star\fl{Q}^{(\ell-1)})\Big)\Big)\nonumber\\
        &= \Big(\fl{A} + \frac{\fl{N}_1}{mt}\Big)^{-1}\cdot\nonumber\\
        &\qquad \Big(\s{vec}\Big(\frac{1}{mt}\sum_{i \in [t]} (\fl{X}^{(i)})^{\s{T}}\fl{X}^{(i)}\Big(-\fl{U}^{\star}\fl{Q}^{(\ell-1)}\fl{h}^{(i, \ell)} + (\fl{b}^{\star(i)} - \fl{b}^{(i, \ell)})\Big)(\fl{w}^{(i, \ell)})^{\s{T}}\Big)\nonumber \\
        &\qquad +\Big(\s{vec}\Big(\frac{\fl{N}_2}{mt}\Big) -  \frac{\fl{N}_1}{mt}\s{vec}(\fl{U}^\star\fl{Q}^{(\ell-1)})\Big)\Big).\nonumber
    \end{align}
    Note that $\s{vec}^{-1}\Big(\E{\Big(\fl{A} + \frac{\fl{N}_1}{mt}\Big)^{-1}\s{vec}\Big(\frac{1}{mt}\sum_{i \in [t]} (\fl{X}^{(i)})^{\s{T}}\fl{X}^{(i)}\Big(-\fl{U}^{\star}\fl{Q}^{(\ell-1)}\fl{h}^{(i, \ell)}\Big)(\fl{w}^{(i, \ell)})^{\s{T}}\Big)}\Big) = \s{vec}^{-1}\Big(\E{\fl{A}}^{-1}\s{vec}\Big(\frac{-1}{t}\fl{U}^{\star}\fl{Q}^{(\ell-1)}(\fl{H}^{(\ell)})^\s{T}\fl{W}^{(\ell)}\Big)\Big)$ lies in the subspace parallel to $\fl{U}^\star$ and therefore does not contribute to the distance $\|\Delta(\fl{U}^{(\ell)}, \fl{U}^{\star})\|_\s{F}$. Subtracting this in \eqref{eq:lemma-optimize_dp-lrs-U-Fnorm-bound-Delta-exp1}, we get $\|\Delta(\fl{U}^{+(\ell)}, \fl{U}^{\star})\|_\s{F}$
    \begin{align}
        &\leq \|\fl{U}^{(\ell)} - \fl{U}^{\star}\fl{Q}^{(\ell-1)} + \s{vec}^{-1}\Big(\E{\fl{A}}^{-1}\s{vec}\Big(\frac{1}{t}\fl{U}^{\star}\fl{Q}^{(\ell-1)}(\fl{H}^{(\ell)})^\s{T}\fl{W}^{(\ell)}\Big)\Big)\|\fl{R}^{-1}\|\\
        &= \|\s{vec}\Big(\fl{U}^{(\ell)} - \fl{U}^{\star}\fl{Q}^{(\ell-1)}\Big) + \E{\fl{A}}^{-1}\s{vec}\Big(\frac{1}{t}\fl{U}^{\star}\fl{Q}^{(\ell-1)}(\fl{H}^{(\ell)})^\s{T}\fl{W}^{(\ell)}\Big)\|_{\s{2}}\|\fl{R}^{-1}\|\\
        &= \|\Big(\fl{A} + \frac{\fl{N}_1}{mt}\Big)^{-1}\Big(\s{vec}(\fl{V}) + \Big(\s{vec}\Big(\frac{\fl{N}_2}{mt}\Big) -  \frac{\fl{N}_1}{mt}\s{vec}(\fl{U}^\star\fl{Q}^{(\ell-1)})\Big)\Big) \\
        &+ \E{\fl{A}}^{-1}\s{vec}\Big(\frac{1}{t}\fl{U}^{\star}\fl{Q}^{(\ell-1)}(\fl{H}^{(\ell)})^\s{T}\fl{W}^{(\ell)}\Big)\|_2\|\fl{R}^{-1}\|\\
        &= \|\fl{R}^{-1}\|\Big\{\Big\|\Big(\fl{A} + \frac{\fl{N}_1}{mt}\Big)^{-1}\Big(\s{vec}(\fl{V}) + \s{vec}\Big(\frac{\fl{N}_2}{mt}\Big) -  \frac{\fl{N}_1}{mt}\s{vec}(\fl{U}^\star\fl{Q}^{(\ell-1)})\Big) \\
        &\qquad + \Big(\fl{A} + \frac{\fl{N}_1}{mt}\Big)\E{\fl{A}}^{-1}\s{vec}\Big(\frac{1}{t}\fl{U}^{\star}\fl{Q}^{(\ell-1)}(\fl{H}^{(\ell)})^\s{T}\fl{W}^{(\ell)}\Big)\Big\|_2\Big\}
        \end{align}
        \begin{align}
        &\leq\Big\{\frac{c\sqrt{\sum_{i \in [t]}\sum_{j \in [m]}\|\fl{w}^{(i, \ell)}\|_2^4}\sqrt{\frac{rd\log(1/ \delta_0)}{m^2t^2}}+\frac{\sigma_1}{mt}\Big(2\sqrt{rd} + 4\sqrt{\log rd}\Big)}{\frac{1}{r}\lambda_r\Big(\frac{r}{t}(\fl{W}^{(\ell)})^\s{T}\fl{W}^{(\ell)} \Big) - c\sqrt{\sum_{i \in [t]}\sum_{j \in [m]}\|\fl{w}^{(i, \ell)}\|_2^4}\sqrt{\frac{rd\log(1/ \delta_0)}{m^2t^2}}-\frac{\sigma_1}{mt}\Big(2\sqrt{rd} + 4\sqrt{\log rd}\Big)}\cdot \nonumber\\
        &\qquad \Big(\frac{2}{t}\|\fl{U}^{\star}\fl{Q}^{(\ell-1)}(\fl{H}^{(\ell)})^\s{T}\fl{W}^{(\ell)}\|_\s{F}\Big) \nonumber\\
        &+ \frac{1}{\frac{1}{r}\lambda_r\Big(\frac{r}{t}(\fl{W}^{(\ell)})^\s{T}\fl{W}^{(\ell)} \Big) - c\sqrt{\sum_{i \in [t]}\sum_{j \in [m]}\|\fl{w}^{(i, \ell)}\|_2^4}\sqrt{\frac{rd\log(1/ \delta_0)}{m^2t^2}}-\frac{\sigma_1}{mt}\Big(2\sqrt{rd} + 4\sqrt{\log rd}\Big)}\cdot\nonumber\\ 
        &\qquad \Big\{\Big(\sqrt{\frac{4\zeta}{t}}(\max_i\|\fl{w}^{(i, \ell)}\|_2)\|\fl{b}^{\star(i)} - \fl{b}^{(i, \ell)}\|_2\Big) \nonumber\\
        &\qquad + 4\Big(\|\fl{U}^{\star}\fl{Q}^{(\ell-1)}\|\|\fl{h}^{(i, \ell)}\|_2\|\fl{w}^{(i, \ell)}\|_2 + \|\fl{b}^{\star(i)} - \fl{b}^{(i, \ell)}\|_2\|\fl{w}^{(i, \ell)}\|_2\Big)\sqrt{\frac{d\log (rd/\delta_0)}{mt}}\nonumber\\
        &\qquad \frac{\sigma_2}{mt}6\sqrt{rd\log(rd)} + \frac{\sigma_1}{mt}\Big(2\sqrt{rd} + 4\sqrt{\log rd}\Big)\sqrt{r} + \frac{2\sigma\sqrt{d\mu^{\star}\lambda^{\star}_r}\log(2rdmt/\delta_0)}{\sqrt{mt}}\Big\}\Big\} \|\fl{R}^{-1}\|.
    \end{align}
\end{proof}

\begin{coro}\label{inductive-corollary:optimize_dp-lrs-U-F-norm-bounds}
    If $\s{B}_{\fl{U}^{(\ell-1)}} = \cO\Big(\frac{1}{\sqrt{r\mu^\star}}\Big)$, $\sqrt{\frac{r\log(1/ \delta_0)}{m}} = \cO(1)$, $\sqrt{\nu^{(\ell-1)}} = \cO\Big(\frac{1}{\sqrt{r\mu^\star}}\Big)$, $\sqrt{\frac{r^2\log(r/\delta_0)}{m}} = \cO\Big(\frac{1}{\sqrt{\mu^\star}}\Big)$, $\epsilon < \sqrt{\mu^\star\lambda_r^\star}$, $\sqrt{\frac{r^2\zeta}{t}} = \cO\Big(\frac{1}{\sqrt{\mu^\star}}\Big)$, $\Lambda' = \cO\Big(\sqrt{\frac{\lambda_r^\star}{\lambda_1^\star}}\Big)$, $\Lambda = \cO(\sqrt{\lambda_r^\star})$, $\sigma\sqrt{\frac{r^2\log^2 (r\delta^{-1})}{m}} = \cO(\sqrt{\lambda_r^\star})$, $\sqrt{\frac{r^3d\log(1/ \delta_0)}{mt}} = \min\{\cO\Big(\frac{1}{\mu^\star}\Big), \cO\Big(\frac{1}{\mu^\star\lambda_r^\star}\Big)\}$, $\frac{\sigma_1}{mt}\Big(2\sqrt{rd} + 4\sqrt{\log rd}\Big) = \min\{\cO\Big(\frac{\lambda_r^\star}{r}\Big), \cO\Big(\frac{1}{r}\Big)\}$, $mt=\widetilde{\Omega}(dr^2
    \mu^{\star}(1\\ + \frac{1}{\lambda_r^{\star}}))$, $\zeta=\widetilde{O}(t(\mu^{\star} \lambda^{\star}_r)^{-1})$, $m=\widetilde{\Omega}(\sigma^2r^3/\lambda^{\star}_r)$ and Assumption \ref{assum:init} holds for iteration $\ell-1$, then, with probability $1-\cO(\delta_0)$,
    \begin{align}
        &\|\fl{U}^{(\ell)} - \fl{U}^{\star}\fl{Q}^{(\ell-1)}\|_\s{F} \nonumber\\ 
        &= \cO\Big(\s{B}_{\fl{U^{(\ell-1)}}}\sqrt{\frac{\lambda_r^\star}{\lambda_1^\star}}\Big) +  \cO\Big(\Lambda' + \frac{\Lambda}{\sqrt{\mu^\star\lambda_r^\star}} + \frac{\sigma_2r}{mt\lambda_r^\star}\sqrt{rd\log(rd)} + \frac{\sigma_1r^{3/2}}{mt\lambda_r^\star}\sqrt{rd\log rd} \nonumber\\
        &\qquad + \sigma\Big(\sqrt{\frac{r^3d\mu^\star\log^2 (r\delta^{-1})}{mt\lambda_r^\star}}+\sqrt{\frac{r^3\log^2(r\delta^{-1})}{m\lambda^{\star}_r}}\Big)\Big),\nonumber
    \end{align}
    \begin{align}
        \text{ and } &\lr{\Delta(\fl{U}^{+(\ell)}, \fl{U}^{\star})}_\s{F}\nonumber\\  
        &\leq  \cO\Big(\|\fl{R}^{-1}\|\s{B}_{\fl{U^{(\ell-1)}}}\sqrt{\frac{\lambda_r^\star}{\lambda_1^\star}}\Big) \nonumber\\
        &+ \resizebox{0.92\textwidth}{!}{$\cO\Big(\|\fl{R}^{-1}\|\Big\{\frac{\Lambda'\sqrt{\lambda^{\star}_r}}{r} + \frac{\Lambda}{r} + \frac{\sigma_2r}{mt\lambda_r^\star}\sqrt{rd\log(rd)} +
        \frac{\sigma_1r\sqrt{r}}{mt\lambda_r^\star}\sqrt{rd\log rd} + \sigma\Big(\sqrt{\frac{r^3d\mu^\star\log^2 (rdmt/\delta_0)}{mt\lambda_r^\star}}\Big)\Big\}\Big)$.}\nonumber
    \end{align}
\end{coro}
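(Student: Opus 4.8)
The statement is obtained by substituting the structural bounds of Lemma~\ref{lemma:lemma-optimize_dp-lrs-U-Fnorm-bound} together with Corollaries~\ref{inductive-corollary:optimize_dp-lrs-h,H-bounds} and~\ref{inductive-inductive-corollary:optimize_dp-lrs-w-incoherence}, Lemma~\ref{lemma:Q-bound}, and the inductive hypothesis (Assumption~\ref{assum:init}), and then dropping lower-order terms under the stated sample-complexity conditions. The first step is to control the common denominator $\tfrac{1}{r}\lambda_r\big(\tfrac{r}{t}(\fl{W}^{(\ell)})^\s{T}\fl{W}^{(\ell)}\big) - c\sqrt{\sum_{i,j}\|\fl{w}^{(i,\ell)}\|_2^4}\sqrt{\tfrac{rd\log(1/\delta_0)}{m^2t^2}} - \tfrac{\sigma_1}{mt}(2\sqrt{rd}+4\sqrt{\log rd})$ that appears in all three bounds of Lemma~\ref{lemma:lemma-optimize_dp-lrs-U-Fnorm-bound}: Corollary~\ref{inductive-inductive-corollary:optimize_dp-lrs-w-incoherence} gives $\lambda_r\big(\tfrac{r}{t}(\fl{W}^{(\ell)})^\s{T}\fl{W}^{(\ell)}\big) \ge (1-c'')^2\lambda_r^\star$, and the crude estimate $\sum_{i,j}\|\fl{w}^{(i,\ell)}\|_2^4 \le mt\,(\max_i\|\fl{w}^{(i,\ell)}\|_2^2)^2 = \cO(mt(\mu^\star\lambda_r^\star)^2)$ (again via Corollary~\ref{inductive-inductive-corollary:optimize_dp-lrs-w-incoherence}) shows the perturbation term is $\cO\big(\mu^\star\lambda_r^\star\sqrt{\tfrac{rd\log(1/\delta_0)}{mt}}\big)$; the hypotheses $\sqrt{\tfrac{r^3d\log(1/\delta_0)}{mt}}=\min\{\cO(1/\mu^\star),\cO(1/(\mu^\star\lambda_r^\star))\}$ and $\tfrac{\sigma_1}{mt}(2\sqrt{rd}+4\sqrt{\log rd})=\min\{\cO(\lambda_r^\star/r),\cO(1/r)\}$ then make both subtracted terms at most a constant fraction of $\tfrac{1}{r}\lambda_r(\cdot)$, so the denominator is $\Theta(\lambda_r^\star/r)$.

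\textbf{The $\|\fl{U}^{(\ell)}-\fl{U}^\star\fl{Q}^{(\ell-1)}\|_\s{F}$ bound.} I would treat the four numerator groups of Lemma~\ref{lemma:lemma-optimize_dp-lrs-U-Fnorm-bound} separately. For the bias term $\tfrac{2}{t}\|\fl{U}^\star\fl{Q}^{(\ell-1)}(\fl{H}^{(\ell)})^\s{T}\fl{W}^{(\ell)}\|_\s{F}$, bound it by $\tfrac{2}{t}\|\fl{Q}^{(\ell-1)}\|\,\|\fl{H}^{(\ell)}\|_\s{F}\,\|\fl{W}^{(\ell)}\|$ and plug in $\|\fl{Q}^{(\ell-1)}\|\le 1$ (Lemma~\ref{lemma:Q-bound}), $\|\fl{W}^{(\ell)}\|=\cO(\sqrt{t/r}\sqrt{\lambda_1^\star})$ (Corollary~\ref{inductive-inductive-corollary:optimize_dp-lrs-w-incoherence}) and the bound on $\|\fl{H}^{(\ell)}\|_\s{F}$ from Corollary~\ref{inductive-corollary:optimize_dp-lrs-h,H-bounds}; this yields, after division by the $\Theta(\lambda_r^\star/r)$ denominator, the $\cO(\s{B}_{\fl{U}^{(\ell-1)}}\sqrt{\lambda_r^\star/\lambda_1^\star})$ contribution plus $\cO(\Lambda')$, $\cO(\Lambda/\sqrt{\mu^\star\lambda_r^\star})$ and the Gaussian piece $\cO(\sigma\sqrt{r^3\log^2(r\delta^{-1})/(m\lambda_r^\star)})$. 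For the sparse-error term $\sqrt{4\zeta/t}(\max_i\|\fl{w}^{(i,\ell)}\|_2)\|\fl{b}^{\star(i)}-\fl{b}^{(i,\ell)}\|_2$, use $\max_i\|\fl{w}^{(i,\ell)}\|_2=\cO(\sqrt{\mu^\star\lambda_r^\star})$, the hypothesis $\sqrt{\zeta/t}=\cO(1/\sqrt{\mu^\star\lambda_r^\star})$ (from $\zeta=\widetilde{O}(t(\mu^\star\lambda_r^\star)^{-1})$), and the inductive bound \eqref{eq:rec-3} on $\|\fl{b}^{\star(i)}-\fl{b}^{(i,\ell)}\|_2$, together with $\epsilon<\sqrt{\mu^\star\lambda_r^\star}$ and $\|\fl{w}^{\star(i)}\|_2=\cO(\sqrt{\mu^\star\lambda_r^\star})$; this again contributes $\cO(\s{B}_{\fl{U}^{(\ell-1)}}\sqrt{\lambda_r^\star/\lambda_1^\star})+\cO(\Lambda)$. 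The third group carries the small factor $\sqrt{d\log(rd/\delta_0)/(mt)}$ and is lower order by the sample conditions, since each of $\|\fl{h}^{(i,\ell)}\|_2$, $\|\fl{b}^{\star(i)}-\fl{b}^{(i,\ell)}\|_2$, $\|\fl{w}^{(i,\ell)}\|_2$, $\|\fl{U}^\star\fl{Q}^{(\ell-1)}\|$ is already controlled. Finally the DP/noise terms $\tfrac{\sigma_2}{mt}6\sqrt{rd\log(rd)}$, $\tfrac{\sigma_1}{mt}(2\sqrt{rd}+4\sqrt{\log rd})\sqrt r$ and $\tfrac{2\sigma\sqrt{d\mu^\star\lambda_r^\star}\log(2rdmt/\delta_0)}{\sqrt{mt}}$, divided by the $\Theta(\lambda_r^\star/r)$ denominator, become exactly $\tfrac{\sigma_2 r}{mt\lambda_r^\star}\sqrt{rd\log(rd)}$, $\tfrac{\sigma_1 r^{3/2}}{mt\lambda_r^\star}\sqrt{rd\log rd}$ and $\sigma\sqrt{\tfrac{r^3d\mu^\star\log^2(\cdot)}{mt\lambda_r^\star}}$, matching the claimed expression.

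\textbf{The $\|\Delta(\fl{U}^{+(\ell)},\fl{U}^\star)\|_\s{F}$ bound.} The estimate in Lemma~\ref{lemma:lemma-optimize_dp-lrs-U-Fnorm-bound} has the same ingredients, with two differences: (i) everything is multiplied by $\|\fl{R}^{-1}\|$, and (ii) the bias term is pre-multiplied by the ratio $\tfrac{\text{perturbation}+\text{DP}}{\text{denom}}$ rather than by $\tfrac{1}{\text{denom}}$. Using the denominator estimate from the first step, this ratio is $\cO(1)$ (and in fact small under the sample conditions), so the $\fl{H}^{(\ell)}$-dependent part contributes $\cO\big(\|\fl{R}^{-1}\|\,\s{B}_{\fl{U}^{(\ell-1)}}\sqrt{\lambda_r^\star/\lambda_1^\star}\big)$ together with $\cO\big(\|\fl{R}^{-1}\|(\Lambda'\sqrt{\lambda_r^\star}/r+\Lambda/r+\dots)\big)$; the remaining (sparse-error, small-factor, DP/noise) groups are handled as above, now carrying the extra $\|\fl{R}^{-1}\|$, producing the asserted $\sigma_1,\sigma_2,\sigma$ terms with the stated powers of $r$.

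\textbf{Main obstacle.} The bulk of the work is bookkeeping, but the delicate point is making all the $\cO$-constants and the powers of $r,\mu^\star,\lambda_r^\star,\lambda_1^\star$ line up simultaneously in every term so that Assumption~\ref{assum:init} is reproduced at level $\ell$ with the \emph{same} fixed $\Lambda,\Lambda'$ and with a strictly contracting coefficient $\|\fl{R}^{-1}\|\cdot\cO(1)<1$ on the $\s{B}_{\fl{U}^{(\ell-1)}}\sqrt{\lambda_r^\star/\lambda_1^\star}$ term; this forces the hypothesis $\sqrt{r^3d\log/(mt)}=\min\{\cO(1/\mu^\star),\cO(1/(\mu^\star\lambda_r^\star))\}$ to be used twice (once in the denominator, once in the bias ratio) and requires careful use of $\|\fl{Q}^{(\ell-1)}\|\le 1$ and $\|(\fl{Q}^{(\ell-1)})^{-1}\|\le 2$ from Lemma~\ref{lemma:Q-bound}. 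Verifying that $\|\fl{R}^{-1}\|$ itself stays $\le 1+o(1)$ — which follows from the second bound of Lemma~\ref{lemma:lemma-optimize_dp-lrs-U-Fnorm-bound} showing $\|\fl{U}^{(\ell)}\|_\s{F}$ is close to orthonormal — is the last ingredient needed to subsequently close the induction.
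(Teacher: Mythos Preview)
Your proposal is correct and follows essentially the same route as the paper: plug the bounds of Lemma~\ref{lemma:lemma-optimize_dp-lrs-U-Fnorm-bound} together with Corollaries~\ref{inductive-corollary:optimize_dp-lrs-h,H-bounds}, \ref{inductive-inductive-corollary:optimize_dp-lrs-w-incoherence} and Assumption~\ref{assum:init}, show the denominator is $\Theta(\lambda_r^\star/r)$ via the sample-size hypotheses, and then simplify the numerator term-by-term (the paper organizes this as a split into ``noiseless'' $J_1,J_1'$ and ``noise'' $J_2,J_2'$ pieces rather than your four numerator groups, but the algebra is identical). One small remark on your final paragraph: the paper does not extract the $\|\fl{R}^{-1}\|$ bound from the $\|\fl{U}^{(\ell)}\|_\s{F}$ estimate but proves it in a separate Corollary~\ref{inductive-corollary:optimize_dp-lrs-R-bounds} by directly lower-bounding $\min_{\|\fl{z}\|_2=1}\|\fl{U}^{(\ell)}\fl{z}\|_2$; however, as you correctly note, that step is outside the present corollary and only needed later to close the induction.
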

\begin{proof}
     The proof follows from plugging the various constant bounds from the corollary statement and Inductive Assumption \ref{assum:init} in the expressions of Lemma~\ref{lemma:lemma-optimize_dp-lrs-U-Fnorm-bound}. Note that, $\|\fl{U}^{(\ell)} - \fl{U}^{\star}\fl{Q}^{(\ell-1)}\|_\s{F}$
    \begin{align}
        &\leq \frac{1}{\frac{1}{r}\lambda_r - c\mu\lambda_r\sqrt{\frac{rd\log(1/ \delta_0)}{mt}}-\frac{\sigma_1}{mt}\Big(2\sqrt{rd} + 4\sqrt{\log rd}\Big)} \cdot\nonumber\\
        &\qquad \Big\{\frac{2}{t}\|\fl{U}^{\star}\fl{Q}^{(\ell-1)}\|\|(\fl{H}^{(\ell)})^\s{T}\fl{W}^{(\ell)}\|_\s{F} + \sqrt{\frac{4\zeta}{t}}(\max_i\|\fl{w}^{(i, \ell)}\|_2\|\fl{b}^{\star(i)} - \fl{b}^{(i, \ell)}\|_2)\nonumber\\ 
        &\qquad +4\Big(\|\fl{U}^{\star}\fl{Q}^{(\ell-1)}\|\|\fl{h}^{(i, \ell)}\|_2\|\fl{w}^{(i, \ell)}\|_2 + \|\fl{b}^{\star(i)} - \fl{b}^{(i, \ell)}\|_2\|\fl{w}^{(i, \ell)}\|_2\Big)\sqrt{\frac{d\log (rd/\delta_0)}{mt}}\nonumber\\
        &\qquad + \frac{\sigma_2}{mt}6\sqrt{rd\log(rd)} + \frac{\sigma_1}{mt}\Big(2\sqrt{rd} + 4\sqrt{\log rd}\Big)\sqrt{r} + \frac{2\sigma\sqrt{d\mu^{\star}\lambda^{\star}_r}\log(2rdmt/\delta_0)}{\sqrt{mt}}\Big\}. 
    \end{align}
    Using Assumption~\ref{assum:init} for $\fl{b}^{(i, \ell)}$ and $\fl{Q}^{(\ell-1)}$ terms, the fact that $\fl{U}^\star$ is orthonormal and eigenvalue ratios and incoherence bounds for $\fl{H}^{(\ell)}$ and $\fl{W}^{(\ell)}$ from Corollaries \ref{inductive-corollary:optimize_dp-lrs-h,H-bounds} and \ref{inductive-inductive-corollary:optimize_dp-lrs-w-incoherence}, the above becomes, $\|\fl{U}^{(\ell)} - \fl{U}^{\star}\fl{Q}^{(\ell-1)}\|_\s{F}$
    \begin{align}
        &\leq \frac{1}{\frac{1}{r}\lambda_r - c\mu\lambda_r\sqrt{\frac{rd\log(1/ \delta_0)}{mt}}-\frac{\sigma_1}{mt}\Big(2\sqrt{rd} + 4\sqrt{\log rd}\Big)} \cdot\nonumber\\
        & \resizebox{0.97\textwidth}{!}{$\Big\{\frac{2}{t}\Big(\cO\Big(\frac{\s{B}_{\fl{U^{(\ell-1)}}}\frac{\lambda_r^\star}{\lambda_1^\star}\sqrt{\frac{t}{r}\lambda^\star_1}}{\sqrt{r\mu^\star}}\Big) + \cO\Big(\Lambda'\sqrt{\frac{t}{r}\lambda^\star_r}\frac{1}{\sqrt{r\mu^\star}}\Big) + \cO\Big(\frac{1}{\sqrt{r\mu^\star}}\sqrt{\frac{t}{r}}\Lambda\Big) + \cO\Big(\sigma\sqrt{\frac{rt\log^2 (r\delta^{-1})}{m}}\Big)\Big)\cdot$} \nonumber\\
        &\qquad \cO(\sqrt{t\mu^\star\lambda_r^\star}) + \sqrt{\frac{4\zeta}{t}}\cdot \cO(\sqrt{\mu^\star\lambda_r^\star})\cdot \Big(c'\sqrt{\mu^\star\lambda_r^\star}\s{B}_{\fl{U^{(\ell-1)}}}\sqrt{\frac{\lambda_r^\star}{\lambda_1^\star}} + \Lambda\Big)\nonumber
        \end{align}
        \begin{align}
        &+4\sqrt{\frac{d\log (rd/\delta_0)}{mt}}\cO(\sqrt{\mu^\star\lambda_r^\star})\Big(\cO\Big(\frac{\max\{\epsilon, \|\fl{w}^{\star(i)}\|_2\}\s{B}_{\fl{U^{(\ell-1)}}}\sqrt{\frac{\lambda_r^\star}{\lambda_1^\star}}}{\sqrt{r\mu^\star}}\Big) + \cO\Big(\frac{\Lambda'\|\fl{w}^{\star(i)}\|_2}{\sqrt{r\mu^\star}}\Big)\nonumber\\
        &\qquad + \cO\Big(\frac{\Lambda}{\sqrt{r\mu^\star}}\Big) + \cO\Big(\sigma\sqrt{\frac{r\log^2 (r\delta^{-1})}{m}}\Big) + c'\max\{\epsilon, \|\fl{w}^{\star(i)}\|_2\}\s{B}_{\fl{U^{(\ell-1)}}}\sqrt{\frac{\lambda_r^\star}{\lambda_1^\star}} + \Lambda\Big)\nonumber\\
        & + \frac{\sigma_2}{mt}6\sqrt{rd\log(rd)} + \frac{\sigma_1}{mt}\Big(2\sqrt{rd} + 4\sqrt{\log rd}\Big)\sqrt{r}+ \frac{2\sigma\sqrt{d\mu^{\star}\lambda^{\star}_r}\log(2rdmt/\delta_0)}{\sqrt{mt}}\Big\}\nonumber\\
        &= J_1 + J_2\label{eq:inductive-corollary-optimize_dp-lrs-DeltaU-F-norm-bounds-unplugged}
    \end{align}
    where $J_1$ denotes the terms which arise from analysing the problem in the noiseless setting and $J_2$ denotes the contribution of noise terms ($\sigma_1, \sigma_2, \sigma, \Lambda, \Lambda'$). We will analyse both separately. Note that $J_1$
    \begin{align}
        &= \frac{1}{\frac{1}{r}\lambda_r - \frac{\mu}{\mu^\star}\cdot\frac{c\mu^\star\lambda_r}{r}\cdot\sqrt{\frac{r^3d\log(1/ \delta_0)}{mt}}-\frac{\lambda_r}{r}\cdot\frac{\lambda_r^\star}{\lambda_r}\cdot\frac{\sigma_1}{mtr\lambda_r^\star}\Big(2\sqrt{rd} + 4\sqrt{\log rd}\Big)} \cdot\nonumber\\
        &\Big\{\frac{2}{t}\cdot \cO\Big(\frac{\s{B}_{\fl{U^{(\ell-1)}}}\frac{\lambda_r^\star}{\lambda_1^\star}\sqrt{\frac{t}{r}\lambda^\star_1}}{\sqrt{r\mu^\star}}\cdot \sqrt{t\mu^\star\lambda_r^\star}\Big)+ \cO\Big(\sqrt{\frac{\zeta}{t}}\cdot \sqrt{\mu^\star\lambda_r^\star}\cdot \sqrt{\mu^\star\lambda_r^\star}\s{B}_{\fl{U^{(\ell-1)}}}\sqrt{\frac{\lambda_r^\star}{\lambda_1^\star}}\Big)\nonumber\\
        &+ \resizebox{0.93\textwidth}{!}{$\cO\Big(\sqrt{\frac{d\log (rd/\delta_0)}{mt}}\cdot\sqrt{\mu^\star\lambda_r^\star}\cdot\Big(\frac{\max\{\epsilon, \|\fl{w}^{\star(i)}\|_2\}\s{B}_{\fl{U^{(\ell-1)}}}\sqrt{\frac{\lambda_r^\star}{\lambda_1^\star}}}{\sqrt{r\mu^\star}} + \max\{\epsilon, \|\fl{w}^{\star(i)}\|_2\}\s{B}_{\fl{U^{(\ell-1)}}}\sqrt{\frac{\lambda_r^\star}{\lambda_1^\star}}\Big)\Big)\Big\}.$}\nonumber
        \end{align}
        Using $\s{B}_{\fl{U}^{(\ell-1)}} =  \cO\Big(\frac{1}{\sqrt{r\mu^\star}}\Big)$, $\lambda_r^\star \leq \lambda_1^\star$, $r\geq1$,$\mu^\star \geq 1$, the bracketed expression in the above simplifies to
        \begin{align}
        &\leq \frac{\s{B}_{\fl{U^{(\ell-1)}}}\sqrt{\frac{\lambda_r^\star}{\lambda_1^\star}}}{\frac{1}{r}\lambda_r - \frac{\mu}{\mu^\star}\frac{c\mu^\star\lambda_r}{r}\sqrt{\frac{r^3d\log(1/ \delta_0)}{mt}}-\frac{\lambda_r}{r}\cdot\frac{\lambda_r^\star}{\lambda_r}\cdot\frac{\sigma_1}{mtr\lambda_r^\star}\Big(2\sqrt{rd} + 4\sqrt{\log rd}\Big)} \cdot\nonumber\\
        &\qquad \cO\Big(\frac{\lambda_r^\star}{r} + \sqrt{\frac{\zeta}{t}}\mu^\star\lambda_r^\star + \sqrt{\frac{d\log (rd/\delta_0)}{mt}}\mu^\star\lambda_r^\star\Big(\frac{1}{\sqrt{r\mu^\star}} + 1\Big)\Big).\nonumber
    \end{align}
    Further, using $\sqrt{\frac{r^3d\log(1/ \delta_0)}{mt}} = \cO(\frac{1}{\mu^\star})$, $\frac{\sigma_1}{mtr}\Big(2\sqrt{rd} + 4\sqrt{\log rd}\Big) = \cO(\lambda_r^\star)$, $\sqrt{\frac{r^2\zeta}{t}} = \cO\Big(\frac{1}{\mu^\star}\Big)$, and eigenvalue and incoherence ratios from Corollary~\ref{inductive-inductive-corollary:optimize_dp-lrs-w-incoherence} in the above, we have
    \begin{align}
        J_1 &= \frac{\s{B}_{\fl{U^{(\ell-1)}}}\sqrt{\frac{\lambda_r^\star}{\lambda_1^\star}}}{1 - \cO(1)} \cdot\cO\Big(\frac{1}{\sqrt{r\mu^\star}} + 1\Big)\nonumber\\
        &= \cO\Big(\s{B}_{\fl{U^{(\ell-1)}}}\sqrt{\frac{\lambda_r^\star}{\lambda_1^\star}}\Big).\label{eq:inductive-corollary-optimize_dp-lrs-DeltaU-F-norm-bounds-J1-value}
    \end{align}
    Similarly, we have $J_2$ 
    \begin{align}
        &= \frac{1}{\frac{1}{r}\lambda_r - \frac{\mu}{\mu^\star}\frac{c\mu^\star\lambda_r}{r}\sqrt{\frac{r^3d\log(1/ \delta_0)}{mt}}-\frac{\lambda_r}{r}\cdot\frac{\lambda_r^\star}{\lambda_r}\cdot\frac{\sigma_1}{mtr\lambda_r^\star}\Big(2\sqrt{rd} + 4\sqrt{\log rd}\Big)} \cdot\nonumber\\
        &\qquad \Big\{\frac{2}{t}\cO\Big(\sqrt{\frac{t}{r}\lambda^\star_r}\frac{\Lambda'}{\sqrt{r\mu^\star}} + \frac{1}{\sqrt{r\mu^\star}}\sqrt{\frac{t}{r}}\Lambda+\sigma\sqrt{\frac{rt\log^2 (r\delta^{-1})}{m}}\Big)\cdot \cO(\sqrt{t\mu^\star\lambda_r^\star})\nonumber\\ 
        &\qquad + \resizebox{0.92\textwidth}{!}{$\cO\Big(\sqrt{\frac{\zeta}{t}}\cdot \sqrt{\mu^\star\lambda_r^\star}\Lambda\Big)
        +\cO\Big(\sqrt{\frac{d\log (rd/\delta_0)}{mt}}\sqrt{\mu^\star\lambda_r^\star}\Big(\frac{\Lambda'\|\fl{w}^{\star(i)}\|_2}{\sqrt{r\mu^\star}} + \Lambda + \frac{1}{\sqrt{r\mu^\star}}\sigma\sqrt{\frac{r\log^2 (r\delta^{-1})}{m}}\Big)\Big)$} \nonumber\\
        &\qquad +\frac{\sigma_2}{mt}6\sqrt{rd\log(rd)} + \frac{\sigma_1}{mt}\Big(2\sqrt{rd} + 4\sqrt{\log rd}\Big)\sqrt{r} + \frac{2\sigma\sqrt{d\mu^{\star}\lambda^{\star}_r}\log(2rdmt/\delta_0)}{\sqrt{mt}}\Big\}.\nonumber
        \end{align}
        Now, using $\sqrt{\frac{r^3d\log(1/ \delta_0)}{mt}} = \cO(\frac{1}{\mu^\star})$, $\frac{\sigma_1}{mtr}\Big(2\sqrt{rd} + 4\sqrt{\log rd}\Big) = \cO(\lambda_r^\star)$, eigenvalue and incoherence ratios from Corollary~\ref{inductive-inductive-corollary:optimize_dp-lrs-w-incoherence} for the denominator term and $\sqrt{\frac{r^2\zeta}{t}} = \cO\Big(\frac{1}{\mu^\star}\Big)$, $\lambda_r^\star \leq \lambda_1^\star$, $r\geq1$,$\mu^\star \geq 1$ for the bracketed terms in the above, we get $J_2$ 
        \begin{align}
        &= \frac{1}{1 - \cO(1)}\Big\{\cO\Big(\Lambda' + \frac{\Lambda}{\sqrt{\lambda_r^\star}} +\sigma \sqrt{\frac{r^3\mu^\star\log^2 (r\delta^{-1})}{m\lambda_r^\star}}\Big) + \cO\Big(\sqrt{\frac{\zeta}{t}}\cdot \sqrt{\mu^\star\lambda_r^\star}\Lambda\Big)\nonumber\\ 
        &\qquad + \cO\Big(\sqrt{\frac{r^2d\log (rd/\delta_0)}{mt}}\sqrt{\mu^\star}\Big(\frac{\Lambda'}{\sqrt{r}} + \frac{\Lambda}{\sqrt{\lambda_r^\star}} +\frac{\sigma}{\sqrt{r\mu^\star\lambda_r^\star}}\sqrt{\frac{r\log^2 (r\delta^{-1})}{m}}\Big)\Big)\nonumber\\
        &\qquad +\frac{\sigma_2r}{mt\lambda_r^\star}6\sqrt{rd\log(rd)} + \frac{\sigma_1r}{mt\lambda_r^\star}\Big(2\sqrt{rd} + 4\sqrt{\log rd}\Big)\sqrt{r}+ \frac{2\sigma r\sqrt{d\mu^{\star}\lambda^{\star}_r}\log(2rdmt/\delta_0)}{\sqrt{mt}\lambda_r^\star}\Big\}.
        \end{align}
        Rearranging the terms in above gives
        \begin{align}
        &= \cO\Big(\Big\{\Lambda'\Big(1 + \sqrt{\frac{r^2d\log (rd/\delta_0)}{mt}}\frac{\sqrt{\mu^\star}}{\sqrt{r}}\Big) + \Lambda\Big(\frac{1}{\sqrt{\lambda_r^\star}} +\sqrt{\frac{\zeta}{t}}\cdot \sqrt{\mu^\star\lambda_r^\star} + \sqrt{\frac{r^2d\log (rd/\delta_0)}{mt}}\frac{\sqrt{\mu^\star}}{\sqrt{\lambda_r^\star}}\Big)\nonumber\\
        & +\frac{\sigma_2r}{mt\lambda_r^\star}\sqrt{rd\log(rd)} + \frac{\sigma_1r}{mt\lambda_r^\star}\Big(\sqrt{rd} + \sqrt{\log rd}\Big)\sqrt{r}\nonumber\\
        & + \resizebox{0.97\textwidth}{!}{$\sigma\Big(\sqrt{\frac{r^3\log^2 (r\delta^{-1})}{m\lambda_r^\star}}  + \sqrt{\frac{r^2d\log (rd/\delta_0)}{mt}}\cdot\frac{1}{\sqrt{r\lambda_r^\star}}\sqrt{\frac{r\log^2 (r\delta^{-1})}{m}} + \frac{ r\sqrt{d\mu^{\star}\lambda^{\star}_r}\log(2rdmt/\delta_0)}{\sqrt{mt}\lambda_r^\star}\Big)\Big\}.$}
    \end{align}
    Substituting $mt=\widetilde{\Omega}(dr^2
    \mu^{\star}(1+\frac{1}{\lambda_r^{\star}}))$, $\zeta=\widetilde{O}(t(\mu^{\star} \lambda^{\star}_r)^{-1})$, we have that 
    \begin{align}
        J_2 &= \cO\Big(\Lambda' + \frac{\Lambda}{\sqrt{\mu^\star\lambda_r^\star}} + \frac{\sigma_2r}{mt\lambda_r^\star}\sqrt{rd\log(rd)} + \frac{\sigma_1r^{3/2}}{mt\lambda_r^\star}\sqrt{rd\log rd} \nonumber\\
        &\qquad + \sigma\Big(\sqrt{\frac{r^3d\mu^\star\log^2 (r\delta^{-1})}{mt\lambda_r^\star}}+\sqrt{\frac{r^3\log^2(r\delta^{-1})}{m\lambda^{\star}_r}}\Big)\Big).\label{eq:inductive-corollary-optimize_dp-lrs-DeltaU-F-norm-bounds-J2-value}
    \end{align}
    Using \eqref{eq:inductive-corollary-optimize_dp-lrs-DeltaU-F-norm-bounds-J1-value}, \eqref{eq:inductive-corollary-optimize_dp-lrs-DeltaU-F-norm-bounds-J2-value} in \eqref{eq:inductive-corollary-optimize_dp-lrs-DeltaU-F-norm-bounds-unplugged} gives us the required norm bound for $\|\fl{U}^{(\ell)} - \fl{U}^{\star}\fl{Q}^{(\ell-1)}\|_\s{F}$.
    
    Similarly, we can simplify the following $\lr{\Delta(\fl{U}^{+(\ell)}, \fl{U}^{\star})}_\s{F}$ bound expression from the Lemma statement.
    \begin{align}
     &\leq \Big\{\frac{c\mu\lambda_r\sqrt{\frac{rd\log(1/ \delta_0)}{mt}}+\frac{\sigma_1}{mt}\Big(2\sqrt{rd} + 4\sqrt{\log rd}\Big)}{\frac{1}{r}\lambda_r - c\mu\lambda_r\sqrt{\frac{rd\log(1/ \delta_0)}{mt}}-\frac{\sigma_1}{mt}\Big(2\sqrt{rd} + 4\sqrt{\log rd}\Big)}\cdot \Big(\frac{2}{t}\|\fl{U}^{\star}\fl{Q}^{(\ell-1)}(\fl{H}^{(\ell)})^\s{T}\fl{W}^{(\ell)}\|_\s{F}\Big) \nonumber\\
    &\qquad + \frac{1}{\frac{1}{r}\lambda_r - c\mu\lambda_r\sqrt{\frac{rd\log(1/ \delta_0)}{mt}}-\frac{\sigma_1}{mt}\Big(2\sqrt{rd} + 4\sqrt{\log rd}\Big)}\cdot\nonumber\\ 
    &\qquad \Big\{\Big(\sqrt{\frac{4\zeta}{t}}(\max_i\|\fl{w}^{(i, \ell)}\|_2)\|\fl{b}^{\star(i)} - \fl{b}^{(i, \ell)}\|_2\Big) + \nonumber\\ 
    & 4\Big(\|\fl{U}^{\star}\fl{Q}^{(\ell-1)}\|\|\fl{h}^{(i, \ell)}\|_2\|\fl{w}^{(i, \ell)}\|_2 + \|\fl{b}^{\star(i)} - \fl{b}^{(i, \ell)}\|_2\|\fl{w}^{(i, \ell)}\|_2\Big)\sqrt{\frac{d\log (rd/\delta_0)}{mt}}\nonumber\\
    &\qquad \frac{\sigma_2}{mt}6\sqrt{rd\log(rd)} + \frac{\sigma_1}{mt}\Big(2\sqrt{rd} + 4\sqrt{\log rd}\Big)\sqrt{r} + \frac{2\sigma\sqrt{d\mu^{\star}\lambda^{\star}_r}\log(2rdmt/\delta_0)}{\sqrt{mt}}\Big\}\Big\}\|\fl{R}^{-1}\|
    \end{align}
    Using the stated bounds on $\sqrt{\frac{r^3d\log(1/ \delta_0)}{mt}}$, $\frac{\sigma_1}{mtr}\Big(2\sqrt{rd} + 4\sqrt{\log rd}\Big)$, Corollary~\ref{inductive-inductive-corollary:optimize_dp-lrs-w-incoherence} as well as $\fl{H}^{(\ell)}$ and $\fl{W}^{(\ell)}$ bounds from Corollaries \ref{inductive-corollary:optimize_dp-lrs-h,H-bounds} and \ref{inductive-inductive-corollary:optimize_dp-lrs-w-incoherence} in the above, we have
    \begin{align}
    &\leq \|\fl{R}^{-1}\|\Big\{\frac{\cO(1)}{1 - \cO(1)}\frac{1}{\lambda_r^\star}\cdot \frac{2}{t}\cO\Big(\Big(\frac{\s{B}_{\fl{U^{(\ell-1)}}}\frac{\lambda_r^\star}{\lambda_1^\star}\sqrt{\frac{t}{r}\lambda^\star_1}}{\sqrt{r\mu^\star}} + \sqrt{\frac{t}{r}\lambda^\star_r}\frac{\Lambda'}{\sqrt{r\mu^\star}} \nonumber\\
    &\qquad + \frac{1}{\sqrt{r\mu^\star}}\sqrt{\frac{t}{r}}\Lambda + \sigma\sqrt{\frac{r\log^2 (r\delta^{-1})}{m}}\Big)\sqrt{t\mu^\star\lambda_r^\star}\Big) + \frac{1}{1 - \cO(1)}\frac{r}{\lambda_r}\cdot \nonumber\\
    &\qquad \Big\{\sqrt{\frac{4\zeta}{t}}\cdot \cO(\sqrt{\mu^\star\lambda_r^\star})\Big( c'\max\{\epsilon, \|\fl{w}^{\star(i)}\|_2\}\s{B}_{\fl{U^{(\ell-1)}}}\sqrt{\frac{\lambda_r^\star}{\lambda_1^\star}} + \Lambda\Big) \nonumber\\
    &\qquad + 4\sqrt{\frac{d\log (rd/\delta_0)}{mt}}\cdot\cO(\sqrt{\mu^\star\lambda_r^\star})\cdot\cO\Big(\frac{\max\{\epsilon, \|\fl{w}^{\star(i)}\|_2\}\s{B}_{\fl{U^{(\ell-1)}}}\sqrt{\frac{\lambda_r^\star}{\lambda_1^\star}}}{\sqrt{r\mu^\star}} + \frac{\Lambda'\|\fl{w}^{\star(i)}\|_2}{\sqrt{r\mu^\star}}\nonumber\\
    &\qquad + \frac{\Lambda}{\sqrt{r\mu^\star}} +  \sigma\sqrt{\frac{r\log^2 (r\delta^{-1})}{m}} +  \max\{\epsilon, \|\fl{w}^{\star(i)}\|_2\}\s{B}_{\fl{U^{(\ell-1)}}}\sqrt{\frac{\lambda_r^\star}{\lambda_1^\star}} + \Lambda\Big)\nonumber\\
    &\qquad+\frac{\sigma_2}{mt}6\sqrt{rd\log(rd)} + \frac{\sigma_1}{mt}\Big(2\sqrt{rd} + 4\sqrt{\log rd}\Big)\sqrt{r} + \frac{2\sigma\sqrt{d\mu^{\star}\lambda^{\star}_r}\log(2rdmt/\delta_0)}{\sqrt{mt}}\Big\}\Big\}\\
    &= J_1' + J_2'\label{eq:inductive-corollary-optimize_dp-lrs-DeltaU-F-norm-bounds-unplugged-2}
    \end{align}
    where as before, $J_1'$ denotes the terms which arise from analysing the problem in the noiseless setting and $J_2'$ denotes the contribution of noise terms ($\sigma_1, \sigma_2, \sigma, \Lambda, \Lambda'$). Analysing both the terms separately, we have  $J_1'$
    \begin{align}
        &= \|\fl{R}^{-1}\|\Big\{\cO(1)\frac{1}{\lambda_r^\star} \cdot \frac{2}{t}\cO\Big(\frac{\s{B}_{\fl{U^{(\ell-1)}}}\frac{\lambda_r^\star}{\lambda_1^\star}\sqrt{\frac{t}{r}\lambda^\star_1}}{\sqrt{r\mu^\star}}\Big)\sqrt{t}\cO(\sqrt{\mu^\star\lambda_r^\star})\nonumber\\
        &\qquad + \cO\Big(\frac{r}{\lambda_r}\Big)\Big\{\sqrt{\frac{4\zeta}{t}}\cdot \cO(\sqrt{\mu^\star\lambda_r^\star})\cdot \max\{\epsilon, \|\fl{w}^{\star(i)}\|_2\}\s{B}_{\fl{U^{(\ell-1)}}}\sqrt{\frac{\lambda_r^\star}{\lambda_1^\star}}\nonumber\\
        &\qquad + 4\sqrt{\frac{d\log (rd/\delta_0)}{mt}}\cdot\cO(\sqrt{\mu^\star\lambda_r^\star})\cdot \cO\Big(\frac{\max\{\epsilon, \|\fl{w}^{\star(i)}\|_2\}\s{B}_{\fl{U^{(\ell-1)}}}\sqrt{\frac{\lambda_r^\star}{\lambda_1^\star}}}{\sqrt{r\mu^\star}}\Big)\Big\}\Big\}\\
        &= \cO\Big(\|\fl{R}^{-1}\|\s{B}_{\fl{U^{(\ell-1)}}}\sqrt{\frac{\lambda_r^\star}{\lambda_1^\star}}\Big\{ \frac{1}{r} + \Big\{\sqrt{\frac{r^2\zeta}{t}}\mu^\star + \sqrt{\frac{r^2d\log (rd/\delta_0)}{mt}}\cdot\frac{\mu^\star}{\sqrt{r\mu^\star}}\Big\}\Big\}\Big)\nonumber\\
        &= \cO\Big(\|\fl{R}^{-1}\|\s{B}_{\fl{U^{(\ell-1)}}}\sqrt{\frac{\lambda_r^\star}{\lambda_1^\star}}\Big),\label{eq:inductive-corollary-optimize_dp-lrs-DeltaU-F-norm-bounds-J1'-value}
    \end{align}
    where in the last two steps we use the fact that $\lambda_r^\star \leq \lambda_1^\star$, $r\geq1$,$\mu^\star \geq 1$, $\sqrt{\frac{r^2\zeta}{t}} = \cO\Big(\frac{1}{\mu^\star}\Big)$ and $\sqrt{\frac{r^3d\log(1/ \delta_0)}{mt}} = \cO(\frac{1}{\mu^\star})$. Similarly we have $J_2'$
    \begin{align}
        &= \|\fl{R}^{-1}\|\Big\{\cO(1)\cdot\frac{2}{t}\cO\Big(\sqrt{\frac{t}{r}\lambda^\star_r}\frac{\Lambda'}{\sqrt{r\mu^\star}} + \frac{1}{\sqrt{r\mu^\star}}\sqrt{\frac{t}{r}}\Lambda + \sigma\sqrt{\frac{r\log^2 (r\delta^{-1})}{m}}\Big)\cdot\sqrt{t}\cO(\sqrt{\mu^\star\lambda_r^\star})\nonumber\\
        &\qquad + \cO\Big(\frac{r}{\lambda_r}\Big)\cdot \Big\{\sqrt{\frac{4\zeta}{t}}\cdot \cO(\sqrt{\mu^\star\lambda_r^\star})\Lambda \nonumber\\
        &\qquad + 4\sqrt{\frac{d\log (rd/\delta_0)}{mt}}\cdot\cO(\sqrt{\mu^\star\lambda_r^\star})\cO\Big(\frac{\Lambda'\|\fl{w}^{\star(i)}\|_2}{\sqrt{r\mu^\star}} + \frac{\Lambda}{\sqrt{r\mu^\star}} + \sigma\sqrt{\frac{r\log^2 (r\delta^{-1})}{m}} + \Lambda\Big)\nonumber\\
        &\qquad + \frac{\sigma_2}{mt}6\sqrt{rd\log(rd)} + \frac{\sigma_1}{mt}\Big(2\sqrt{rd} + 4\sqrt{\log rd}\Big)\sqrt{r} + \frac{2\sigma\sqrt{d\mu^{\star}\lambda^{\star}_r}\log(2rdmt/\delta_0)}{\sqrt{mt}}\Big\}\Big\}.\nonumber
    \end{align}
    Rearranging the terms in above gives
    \begin{align}
        &= \cO\Big(\|\fl{R}^{-1}\|\Big\{\Lambda'\Big(\frac{\sqrt{\lambda_r^\star}}{r} + \sqrt{\frac{r^2d\log (rd/\delta_0)}{mt}}\cdot\frac{\mu^\star}{\sqrt{r\mu^\star}}\Big) \nonumber\\
        &\qquad + \Lambda\Big(\frac{1}{r} +  \sqrt{\frac{r^2\zeta}{t}}\cdot \frac{\sqrt{\mu^\star\lambda_r^\star}}{\lambda_r^\star}  + \sqrt{\frac{d\log (rd/\delta_0)}{mt}}\frac{\sqrt{\mu^\star\lambda_r^\star}}{\lambda_r^\star}\cdot\Big(1+\frac{1}{\sqrt{r\mu^\star}}\Big) \Big)\nonumber\\
        &\qquad + \frac{\sigma_2r}{mt\lambda_r^\star}\sqrt{rd\log(rd)} + \frac{\sigma_1r}{mt\lambda_r^\star}\Big(\sqrt{rd} + \sqrt{\log rd}\Big)\sqrt{r}\nonumber\\
        &\qquad + \sigma\frac{r\sqrt{\mu^\star\lambda_r^\star}}{\lambda_r^\star}\Big(\sqrt{\frac{r\log^2 (r\delta^{-1})}{mt}}+ \sqrt{\frac{d\log (rd/\delta_0)}{mt}} \sqrt{\frac{r\log^2 (r\delta^{-1})}{m}} + \frac{\sqrt{d}\log(2rdmt/\delta_0)}{\sqrt{mt}}\Big)
        \Big\}\Big).
    \end{align}
    Assuming $mt=\widetilde{\Omega}(dr^2
    \mu^{\star}(1+\frac{1}{\lambda_r^{\star}}))$, $mt=\widetilde{\Omega}\Big(\frac{\sqrt{dr^3}}{\lambda^{\star}_r}\max(\sigma_1\sqrt{r},\sigma_2)\Big)$, $m=\widetilde{\Omega}(\sigma^2r^3/\lambda^{\star}_r)$, we have the above as
    \begin{align}
        \resizebox{0.98\textwidth}{!}{$J_2' = \cO\Big(\|\fl{R}^{-1}\|\Big\{\frac{\Lambda'\sqrt{\lambda^{\star}_r}}{r} + \frac{\Lambda}{r} + \frac{\sigma_2r}{mt\lambda_r^\star}\sqrt{rd\log(rd)} +
        \frac{\sigma_1r\sqrt{r}}{mt\lambda_r^\star}\sqrt{rd\log rd} + \sigma\Big(\sqrt{\frac{r^3d\mu^\star\log^2 (rdmt/\delta_0)}{mt\lambda_r^\star}}\Big)\Big\}\Big)$}\label{eq:inductive-corollary-optimize_dp-lrs-DeltaU-F-norm-bounds-J2'-value}
    \end{align}
    Using \eqref{eq:inductive-corollary-optimize_dp-lrs-DeltaU-F-norm-bounds-J1'-value} and \eqref{eq:inductive-corollary-optimize_dp-lrs-DeltaU-F-norm-bounds-J2'-value} in \eqref{eq:inductive-corollary-optimize_dp-lrs-DeltaU-F-norm-bounds-unplugged-2} gives us the required bound for $\lr{\Delta(\fl{U}^{+(\ell)}, \fl{U}^{\star})}_\s{F}$.
\end{proof}

\begin{coro}\label{inductive-corollary:optimize_dp-lrs-R-bounds}
    If $\frac{1}{2} \leq \sigma_{\min}(\fl{Q}^{(\ell-1)}) \leq \sigma_{\max}(\fl{Q}^{(\ell-1)}) < 1$, $\frac{2r}{\lambda_r^\star}\Big(\frac{\sigma_2}{mt}6\sqrt{rd\log(rd)} + \frac{\sigma_1}{mt}\Big(2\sqrt{rd} + 4\sqrt{\log rd}\Big)\sqrt{r} + \frac{2\sigma\sqrt{d\mu^{\star}\lambda^{\star}_r}\log(2rdmt/\delta_0)}{\sqrt{mt}}\Big) = \cO(1)$ then $\fl{R}$ is invertible and $\|\fl{R}^{-1}\| \leq 2 + c''$ and $\lr{\fl{R}} \leq 1+c''$ for some $c'' > 0$ w.p. $1-\cO(\delta_0)$.
\end{coro}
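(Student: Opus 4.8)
\textbf{Setup.} Recall that $\fl{U}^{(\ell)} = \fl{U}^{+(\ell)}\fl{R}$ is the $\s{QR}$ decomposition of $\fl{U}^{(\ell)}$, so $\fl{R}\in\bR^{r\times r}$ is upper triangular and $\fl{R}^\s{T}\fl{R} = (\fl{U}^{(\ell)})^\s{T}\fl{U}^{(\ell)}$, whence $\sigma_j(\fl{R}) = \sqrt{\lambda_j\big((\fl{U}^{(\ell)})^\s{T}\fl{U}^{(\ell)}\big)}$ for all $j\in[r]$. Thus controlling $\lr{\fl{R}}$ and $\lr{\fl{R}^{-1}}$ reduces to two-sided control of the singular values of $\fl{U}^{(\ell)}$, i.e. to showing $\tfrac12 \le \sigma_r(\fl{U}^{(\ell)}) \le \sigma_1(\fl{U}^{(\ell)}) \le 1 + c''$. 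In particular $\fl{R}$ is invertible as soon as $\sigma_r(\fl{U}^{(\ell)}) > 0$.

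\textbf{Step 1: Upper bound $\sigma_1(\fl{U}^{(\ell)})$.} I would write $\fl{U}^{(\ell)} = \fl{U}^\star\fl{Q}^{(\ell-1)} + (\fl{U}^{(\ell)} - \fl{U}^\star\fl{Q}^{(\ell-1)})$ and apply the triangle inequality for the operator norm, using $\lr{\fl{U}^\star\fl{Q}^{(\ell-1)}} \le \lr{\fl{U}^\star}\lr{\fl{Q}^{(\ell-1)}} \le 1$ (both orthonormal, and $\sigma_{\max}(\fl{Q}^{(\ell-1)}) < 1$ by hypothesis) together with the bound $\lr{\fl{U}^{(\ell)} - \fl{U}^\star\fl{Q}^{(\ell-1)}} \le \lr{\fl{U}^{(\ell)} - \fl{U}^\star\fl{Q}^{(\ell-1)}}_\s{F}$ from Lemma~\ref{lemma:lemma-optimize_dp-lrs-U-Fnorm-bound} (or its consequence Corollary~\ref{inductive-corollary:optimize_dp-lrs-U-F-norm-bounds}). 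Under the stated sample-complexity and noise hypotheses this Frobenius quantity is $\cO(1)$ and in fact small (it is the quantity that eventually contracts), so $\sigma_1(\fl{U}^{(\ell)}) \le 1 + c''$ for a suitable small constant $c'' > 0$.

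\textbf{Step 2: Lower bound $\sigma_r(\fl{U}^{(\ell)})$ --- the main obstacle.} This is the delicate direction, since $\fl{U}^{(\ell)}$ before orthonormalization need not have well-conditioned columns a priori. Here I would \emph{not} go through $\fl{U}^\star\fl{Q}^{(\ell-1)}$ (whose smallest singular value $\sigma_r(\fl{Q}^{(\ell-1)}) \ge \tfrac12$ would combine badly with the perturbation). Instead I would return to the defining linear system \eqref{lemma-optimize_dp-lrs-U-Fnorm-bound-vectorisedform}, $\big(\fl{A} + \tfrac{\fl{N}_1}{mt}\big)\s{vec}(\fl{U}^{(\ell)}) = \s{vec}(\fl{V}' + \tfrac{\fl{N}_2}{mt} + \fl{\Xi})$, and bound $\sigma_r(\fl{U}^{(\ell)})$ from below by $\lr{(\fl{A}+\tfrac{\fl{N}_1}{mt})^{-1}}^{-1}$ times the smallest singular value of $\s{vec}^{-1}(\fl{V}' + \tfrac{\fl{N}_2}{mt} + \fl{\Xi})$. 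The leading term of $\E{\fl{V}'}$ is $\tfrac1t\fl{U}^\star(\fl{W}^\star)^\s{T}\fl{W}^{(\ell)}$, so I would use Weyl's inequality (Lemma~\ref{lemma:weyl}) together with the incoherence/eigenvalue transfer from Corollary~\ref{inductive-inductive-corollary:optimize_dp-lrs-w-incoherence} to lower bound its $r$-th singular value in terms of $\sqrt{\lambda_r^\star}$, then absorb the sparse-estimate error terms and the noise contributions $\tfrac{\sigma_2}{mt}, \tfrac{\sigma_1}{mt}, \sigma$ --- all of which are $\cO(1)$-small relative to $\tfrac{\lambda_r^\star}{r}$ precisely by the hypothesis $\tfrac{2r}{\lambda_r^\star}\big(\tfrac{\sigma_2}{mt}6\sqrt{rd\log(rd)} + \tfrac{\sigma_1}{mt}(2\sqrt{rd} + 4\sqrt{\log rd})\sqrt{r} + \tfrac{2\sigma\sqrt{d\mu^\star\lambda_r^\star}\log(2rdmt/\delta_0)}{\sqrt{mt}}\big) = \cO(1)$ --- and by the other standing assumptions on $\zeta$, $mt$. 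Combining with the eigenvalue bound on $\fl{A} + \tfrac{\fl{N}_1}{mt}$ already established inside the proof of Lemma~\ref{lemma:lemma-optimize_dp-lrs-U-Fnorm-bound} yields $\sigma_r(\fl{U}^{(\ell)}) \ge \tfrac12$ for a suitable choice of the implicit constants.

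\textbf{Step 3: Translate back to $\fl{R}$.} Given $\tfrac12 \le \sigma_r(\fl{U}^{(\ell)})$ and $\sigma_1(\fl{U}^{(\ell)}) \le 1 + c''$, the identity $\sigma_j(\fl{R}) = \sigma_j(\fl{U}^{(\ell)})$ gives immediately that $\fl{R}$ is invertible with $\lr{\fl{R}} = \sigma_1(\fl{U}^{(\ell)}) \le 1 + c''$ and $\lr{\fl{R}^{-1}} = 1/\sigma_r(\fl{U}^{(\ell)}) \le 2 \le 2 + c''$. All events above hold with probability $1 - \cO(\delta_0)$ by a union bound over the finitely many high-probability events invoked (the concentration bounds in Lemma~\ref{lemma:lemma-optimize_dp-lrs-U-Fnorm-bound}, Corollaries~\ref{inductive-corollary:optimize_dp-lrs-h,H-bounds}, \ref{inductive-inductive-corollary:optimize_dp-lrs-w-incoherence}, \ref{inductive-corollary:optimize_dp-lrs-U-F-norm-bounds}), which proves the claim. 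The only subtlety to watch is consistency of the constant $c''$ across Step~1, Step~2, and Corollary~\ref{inductive-inductive-corollary:optimize_dp-lrs-w-incoherence}; one fixes $c''$ at the end to be the maximum of the finitely many constants produced.
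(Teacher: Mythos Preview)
Your Steps~1 and~3 are correct and match the paper. The issue is Step~2, where you dismiss the direct perturbation route and propose an alternative that does not work as stated.

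\textbf{The gap in your Step~2.} You claim to lower bound $\sigma_r(\fl{U}^{(\ell)})$ by $\lr{(\fl{A}+\tfrac{\fl{N}_1}{mt})^{-1}}^{-1}$ times $\sigma_r\big(\s{vec}^{-1}(\fl{V}'+\tfrac{\fl{N}_2}{mt}+\fl{\Xi})\big)$. This inequality is not valid in general: the map $\s{vec}(\fl{U}^{(\ell)}) = \fl{M}^{-1}\s{vec}(\fl{V}'+\cdots)$ applies an $rd\times rd$ operator to the \emph{vectorized} matrix, and unless $\fl{M}$ has Kronecker structure (which $\fl{A}+\tfrac{\fl{N}_1}{mt}$ does not, only $\E{\fl{A}}$ does), there is no clean relationship between the $r$-th singular value of the de-vectorized output and that of the de-vectorized input. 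Concretely, for unit $\fl{z}\in\bR^r$ one has $\fl{U}^{(\ell)}\fl{z} = \s{vec}^{-1}\big((\fl{z}^\s{T}\otimes\fl{I}_d)\fl{M}^{-1}\s{vec}(\cdot)\big)$, and $(\fl{z}^\s{T}\otimes\fl{I}_d)\fl{M}^{-1}$ does not factor through $(\fl{z}^\s{T}\otimes\fl{I}_d)$ on the right. You would have to first peel off $\E{\fl{A}}^{-1}$ (which \emph{is} Kronecker) and treat the remainder as a perturbation---at which point you are back to the paper's argument.

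\textbf{Your worry was unfounded.} The paper does exactly what you rejected: it writes, for unit $\fl{z}$,
\[
\|\fl{U}^{(\ell)}\fl{z}\|_2 \;\ge\; \sigma_{\min}(\fl{Q}^{(\ell-1)}) \;-\; \Big\|\s{vec}^{-1}\Big(\E{\fl{A}}^{-1}\s{vec}\Big(\tfrac{1}{t}\fl{U}^\star\fl{Q}^{(\ell-1)}(\fl{H}^{(\ell)})^\s{T}\fl{W}^{(\ell)}\Big)\Big)\Big\|_\s{F} \;-\; \|\text{residual}\|_\s{F},
\]
where the ``residual'' is precisely the quantity inside Corollary~\ref{inductive-corollary:optimize_dp-lrs-U-F-norm-bounds} that bounds $\lr{\Delta(\fl{U}^{+(\ell)},\fl{U}^\star)}_\s{F}$ before multiplication by $\|\fl{R}^{-1}\|$. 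Both subtracted terms are $c''$-small under the standing assumptions (the first via Corollary~\ref{inductive-corollary:optimize_dp-lrs-h,H-bounds} on $\|\fl{H}^{(\ell)}\|_\s{F}$, the second via Corollary~\ref{inductive-corollary:optimize_dp-lrs-U-F-norm-bounds}), so one gets $\sigma_r(\fl{U}^{(\ell)}) \ge \tfrac12 - c''$ and hence $\|\fl{R}^{-1}\| \le (1/2-c'')^{-1} \le 2+c''$ after relabeling constants. In fact even the \emph{naive} decomposition $\sigma_r(\fl{U}^{(\ell)}) \ge \sigma_{\min}(\fl{Q}^{(\ell-1)}) - \|\fl{U}^{(\ell)}-\fl{U}^\star\fl{Q}^{(\ell-1)}\|_\s{F}$ would already work here, since Corollary~\ref{inductive-corollary:optimize_dp-lrs-U-F-norm-bounds} shows $\|\fl{U}^{(\ell)}-\fl{U}^\star\fl{Q}^{(\ell-1)}\|_\s{F} = \cO(\s{B}_{\fl{U}^{(\ell-1)}}\sqrt{\lambda_r^\star/\lambda_1^\star}) + \Lambda'$-type terms, all of which are small constants. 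The paper's extra centering by the parallel correction is a refinement for sharper constants, not a necessity; your ``combines badly'' concern does not materialize.
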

\begin{proof}
    $\forall$ $\fl{z} \in \bR^r$, we have:
    \begin{align}
        \|\fl{U}^{+(\ell)}\fl{R}\fl{z}\|_2 &= \|\fl{U}^{(\ell)}\fl{z}\|\nonumber\\
        &= \|\fl{U}^{\star}\fl{Q}^{(\ell-1)}\fl{z} - \s{vec}^{-1}\Big(\E{\fl{A}}^{-1}\s{vec}\Big(\frac{1}{t}\fl{U}^{\star}\fl{Q}^{(\ell-1)}(\fl{H}^{(\ell)})^\s{T}\fl{W}^{(\ell)}\Big)\Big) \nonumber\\
        &\qquad + \Big(\fl{U}^{(\ell)} - \fl{U}^{\star}\fl{Q}^{(\ell-1)} + \s{vec}^{-1}\Big(\E{\fl{A}}^{-1}\s{vec}\Big(\frac{1}{t}\fl{U}^{\star}\fl{Q}^{(\ell-1)}(\fl{H}^{(\ell)})^\s{T}\fl{W}^{(\ell)}\Big)\Big)\Big)\fl{z}\Big\|_2\label{eq:lemma-inductive-corollary-R-bounds-1}
    \end{align}   
    Using \eqref{eq:lemma-inductive-corollary-R-bounds-1}, we have:
    \begin{align}
        &\min_{\|\fl{z}\|_2 = 1}\|\fl{U}^{+(\ell)}\fl{R}\fl{z}\|_2 \nonumber\\ 
        &\geq \resizebox{0.97\textwidth}{!}{$\min_{\|\fl{z}\|_2 = 1}\sqrt{\fl{z}^\s{T}(\fl{Q}^{(\ell-1)})^\s{T}(\fl{U}^{\star})^\s{T}\fl{U}^{\star}\fl{Q}^{(\ell-1)}\fl{z}} - \|\s{vec}^{-1}\Big(\E{\fl{A}}^{-1}\s{vec}\Big(\frac{1}{t}\fl{U}^{\star}\fl{Q}^{(\ell-1)}(\fl{H}^{(\ell)})^\s{T}\fl{W}^{(\ell)}\Big)\Big)\|$} \nonumber\\
        &\qquad - \|\fl{U}^{(\ell)} - \fl{U}^{\star}\fl{Q}^{(\ell-1)} + \s{vec}^{-1}\Big(\E{\fl{A}}^{-1}\s{vec}\Big(\frac{1}{t}\fl{U}^{\star}\fl{Q}^{(\ell-1)}(\fl{H}^{(\ell)})^\s{T}\fl{W}^{(\ell)}\Big)\Big)\|\nonumber\\
        &\geq \min_{\|\fl{z}\|_2 = 1}\sqrt{\fl{z}^\s{T}(\fl{Q}^{(\ell-1)})^\s{T}\fl{Q}^{(\ell-1)}\fl{z}} - \|\s{vec}^{-1}\Big(\E{\fl{A}}^{-1}\s{vec}\Big(\frac{1}{t}\fl{U}^{\star}\fl{Q}^{(\ell-1)}(\fl{H}^{(\ell)})^\s{T}\fl{W}^{(\ell)}\Big)\Big)\|_\s{F}\nonumber\\
        &\qquad - \|\fl{U}^{(\ell)} - \fl{U}^{\star}\fl{Q}^{(\ell-1)} - \s{vec}^{-1}\Big(\E{\fl{A}}^{-1}\s{vec}\Big(\frac{1}{t}\fl{U}^{\star}\fl{Q}^{(\ell-1)}(\fl{H}^{(\ell)})^\s{T}\fl{W}^{(\ell)}\Big)\Big)\|_\s{F}\nonumber\\
        &\geq \sigma_{\min}(\fl{Q}^{(\ell-1)}) - \|\E{\fl{A}}^{-1}\|\|\frac{1}{t}\fl{U}^{\star}\fl{Q}^{(\ell-1)}(\fl{H}^{(\ell)})^\s{T}\fl{W}^{(\ell)}\|_\s{F}\nonumber\\
        &\qquad - \|\fl{U}^{(\ell)} - \fl{U}^{\star}\fl{Q}^{(\ell-1)} + \s{vec}^{-1}\Big(\E{\fl{A}}^{-1}\s{vec}\Big(\frac{1}{t}\fl{U}^{\star}\fl{Q}^{(\ell-1)}(\fl{H}^{(\ell)})^\s{T}\fl{W}^{(\ell)}\Big)\Big)\|_\s{F}.\nonumber
    \end{align}
        Using the bounds from Corollary~\ref{inductive-corollary:optimize_dp-lrs-h,H-bounds} and \ref{inductive-corollary:optimize_dp-lrs-U-F-norm-bounds} and Inductive Assumption \ref{assum:init} in the above we get,
    \begin{align}
        &\geq \frac{1}{2} - \frac{r}{t\lambda_r}\sqrt{t\mu\lambda_r}\cdot \cO\Big(\frac{\s{B}_{\fl{U^{(\ell-1)}}}\frac{\lambda_r^\star}{\lambda_1^\star}\sqrt{\frac{t}{r}\lambda^\star_1}}{\sqrt{r\mu^\star}} + \sqrt{\frac{t}{r}\lambda^\star_r}\frac{\Lambda'}{\sqrt{r\mu^\star}} + \frac{\Lambda}{\sqrt{r\mu^\star}}\sqrt{\frac{t}{r}} + \sigma\sqrt{\frac{r\log^2 (r\delta^{-1})}{m}}\Big)\nonumber\\
        &\qquad - \cO\Big(\s{B}_{\fl{U^{(\ell-1)}}}\sqrt{\frac{\lambda_r^\star}{\lambda_1^\star}}+\Lambda' + \frac{\Lambda}{\sqrt{\mu^\star\lambda_r^\star}} + \frac{\sigma_2r}{mt\lambda_r^\star}\sqrt{rd\log(rd)}\nonumber \\  
        &\qquad + \frac{\sigma_1r\sqrt{r}}{mt\lambda_r^\star}\sqrt{rd\log rd} + \sigma\sqrt{\frac{r^3d\mu^\star\log^2 (rdmt/\delta_0)}{mt\lambda_r^\star}}\Big)\label{eq:inductive-corollary-optimize_dp-lrs-R-bounds-temp1}.
    \end{align}
    Rearranging the terms in the above gives
    \begin{align}
        &\geq \frac{1}{2} - \cO\Big(\s{B}_{\fl{U^{(\ell-1)}}}\Big(\sqrt{\frac{\mu}{\mu^\star}}\sqrt{\frac{\lambda_r}{\lambda_r^\star}} +1\Big) + \Lambda'\Big(\sqrt{\frac{\mu}{\mu^\star}}\sqrt{\frac{\lambda_r}{\lambda_r^\star}} + 1\Big) + \frac{\Lambda}{\mu^\star\lambda_r^\star}\Big(\sqrt{\frac{\mu}{\mu^\star}}\sqrt{\frac{\lambda_r}{\lambda_r^\star}} + 1\Big) \nonumber\\
        &\qquad + \frac{\sigma_2r}{mt\lambda_r^\star}\sqrt{rd\log(rd)} + \frac{\sigma_1r\sqrt{r}}{mt\lambda_r^\star}\sqrt{rd\log rd}\nonumber\\
        &\qquad + \sigma\Big(\sqrt{\frac{\mu}{\mu^\star}}\sqrt{\frac{\lambda_r}{\lambda_r^\star}}\sqrt{\frac{\mu^\star r^2\log^2 (r\delta^{-1})}{mt\lambda_r^\star}}+\sqrt{\frac{r^3d\mu^\star\log^2 (rdmt/\delta_0)}{mt\lambda_r^\star}}+\sqrt{\frac{r^3\log^2(r\delta^{-1})}{m\lambda^{\star}_r}}\Big)\Big)\nonumber\\
        &\geq \frac{1}{2} - c''.\label{eq:inductive-corollary-optimize_dp-lrs-R-bounds-val1}
    \end{align}
where $c'' > 0$
Here, we need to use that $\Lambda'<10^{-3}, \Lambda <10^{-3}\mu^{\star}\lambda^{\star}_r$, $mt=\Omega(\max(\sigma_2,\sigma_1\sqrt{r})\sqrt{rd\log d}/\lambda_r^{\star})$ and $mt=\widetilde{\Omega}(\sigma^2 dr^3\mu^{\star}/\lambda^{\star}_r)$, $m=\widetilde{\Omega}(\sigma^2r^3/\lambda^{\star}_r)$.    
    
    Similarly, using \eqref{eq:lemma-inductive-corollary-R-bounds-1}, we also have:
    \begin{align}
        &\max_{\|\fl{z}\|_2 = 1}\|\fl{U}^{+(\ell)}\fl{R}\fl{z}\|_2 \nonumber\\ 
        &\leq \max_{\|\fl{z}\|_2 = 1}\sqrt{\fl{z}^\s{T}(\fl{Q}^{(\ell-1)})^\s{T}(\fl{U}^{\star})^\s{T}\fl{U}^{\star}\fl{Q}^{(\ell-1)}\fl{z}}  + \|\s{vec}^{-1}\Big(\E{\fl{A}}^{-1}\s{vec}\Big(\frac{1}{t}\fl{U}^{\star}\fl{Q}^{(\ell-1)}(\fl{H}^{(\ell)})^\s{T}\fl{W}^{(\ell)}\Big)\Big)\| \nonumber\\
        &\qquad + \|\fl{U}^{(\ell)} - \fl{U}^{\star}\fl{Q}^{(\ell-1)} + \s{vec}^{-1}\Big(\E{\fl{A}}^{-1}\s{vec}\Big(\frac{1}{t}\fl{U}^{\star}\fl{Q}^{(\ell-1)}(\fl{H}^{(\ell)})^\s{T}\fl{W}^{(\ell)}\Big)\Big)\|\nonumber\\
        &\leq \max_{\|\fl{z}\|_2 = 1}\sqrt{\fl{z}^\s{T}(\fl{Q}^{(\ell-1)})^\s{T}\fl{Q}^{(\ell-1)}\fl{z}}  + \|\s{vec}^{-1}\Big(\E{\fl{A}}^{-1}\s{vec}\Big(\frac{1}{t}\fl{U}^{\star}\fl{Q}^{(\ell-1)}(\fl{H}^{(\ell)})^\s{T}\fl{W}^{(\ell)}\Big)\Big)\|_\s{F}\nonumber\\
        &\qquad + \|\fl{U}^{(\ell)} - \fl{U}^{\star}\fl{Q}^{(\ell-1)} - \s{vec}^{-1}\Big(\E{\fl{A}}^{-1}\s{vec}\Big(\frac{1}{t}\fl{U}^{\star}\fl{Q}^{(\ell-1)}(\fl{H}^{(\ell)})^\s{T}\fl{W}^{(\ell)}\Big)\Big)\|_\s{F}\nonumber\\
        &\leq \sigma_{\max}(\fl{Q}^{(\ell-1)}) + \|\E{\fl{A}}^{-1}\|\|\frac{1}{t}\fl{U}^{\star}\fl{Q}^{(\ell-1)}(\fl{H}^{(\ell)})^\s{T}\fl{W}^{(\ell)}\|_\s{F}\nonumber\\
        &\qquad + \|\fl{U}^{(\ell)} - \fl{U}^{\star}\fl{Q}^{(\ell-1)} + \s{vec}^{-1}\Big(\E{\fl{A}}^{-1}\s{vec}\Big(\frac{1}{t}\fl{U}^{\star}\fl{Q}^{(\ell-1)}(\fl{H}^{(\ell)})^\s{T}\fl{W}^{(\ell)}\Big)\Big)\|_\s{F}.\nonumber
        \end{align}
        Directly using the bounds of  $\|\E{\fl{A}}^{-1}\|\|\frac{1}{t}\fl{U}^{\star}\fl{Q}^{(\ell-1)}(\fl{H}^{(\ell)})^\s{T}\fl{W}^{(\ell)}\|_\s{F}$ and $\|\fl{U}^{(\ell)} - \fl{U}^{\star}\fl{Q}^{(\ell-1)} + \s{vec}^{-1}\Big(\E{\fl{A}}^{-1}\s{vec}\Big(\frac{1}{t}\fl{U}^{\star}\fl{Q}^{(\ell-1)}(\fl{H}^{(\ell)})^\s{T}\fl{W}^{(\ell)}\Big)\Big)\|_\s{F}$ from the previous calculations as well inductive Assumption~\ref{assum:init}, we get
        \begin{align}
        \max_{\|\fl{z}\|_2 = 1}\|\fl{U}^{+(\ell)}\fl{R}\fl{z}\|_2  &\leq 1 + c''\nonumber\\
        \implies \|\fl{R}\| &\leq \max_{\|\fl{z}\|_2 = 1}\|\fl{R}\fl{z}\|_2  = \max_{\|\fl{z}\|_2 = 1}\|\fl{U}^{+(\ell)}\fl{R}\fl{z}\|_2 \leq 1+c''.\label{eq:inductive-corollary-optimize_dp-lrs-R-bounds-val2} 
    \end{align}
    Bounds \eqref{eq:inductive-corollary-optimize_dp-lrs-R-bounds-val1} and \eqref{eq:inductive-corollary-optimize_dp-lrs-R-bounds-val2} complete the proof.
\end{proof}

\begin{lemma}\label{lemma:optimize_dp-lrs-2,infty-norm}
    \begin{align}
    \|\fl{U}^{(\ell)}\|_{2, \infty}
    &\leq \|\fl{U}^\star\|_{2, \infty}\|\frac{1}{t}\sum_{i \in [t]}\Big(\frac{1}{t}(\fl{W}^{(\ell)})^\s{T}\fl{W}^{(\ell)}\Big)^{-1}\fl{w}^{\star(i)}(\fl{w}^{(i, \ell)})^{\s{T}}\|_{2} \nonumber\\
    &\qquad + c\|\fl{U}^\star\|_\s{F}\|\Big(\frac{1}{t}(\fl{W}^{(\ell)})^\s{T}\fl{W}^{(\ell)}\Big)^{-1}\|\|\fl{w}^{\star(i)}(\fl{w}^{(i, \ell)})^{\s{T}}\|_\s{F}\sqrt{\frac{\log (1/\delta_0)}{mt}}\nonumber\\
    &\qquad + \zeta(\max_i \|\fl{b}^{\star(i)} - \fl{b}^{(i, \ell)}\|_\infty\|\fl{w}^{(i, \ell)}\|_2)\|\Big((\fl{W}^{(\ell)})^\s{T}\fl{W}^{(\ell)}\Big)^{-1}\|_{2} \nonumber\\
    &\qquad + c\|\Big(\frac{1}{t}(\fl{W}^{(\ell)})^\s{T}\fl{W}^{(\ell)}\Big)^{-1}\|_\s{F}\|\fl{b}^{\star(i)} - \fl{b}^{(i, \ell)}\|_2\|\fl{w}^{(i, \ell)}\|_\s{2}\sqrt{\frac{\log (1/\delta_0)}{mt}}\nonumber\\
    &\qquad + \resizebox{0.8\textwidth}{!}{$\frac{2\sigma_2}{mt}\sqrt{\log(rd/\delta_0)}\cdot\frac{r\sqrt{r}}{\lambda_r\Big(\frac{r}{t}(\fl{W}^{(\ell)})^\s{T}\fl{W}^{(\ell)}\Big)} + \frac{2\sigma\sqrt{\mu^{\star}\lambda^{\star}_r}\log(2rdmt/\delta_0)}{\sqrt{mt}}\frac{r}{\lambda_r\Big(\frac{r}{t}(\fl{W}^{(\ell)})^\s{T}\fl{W}^{(\ell)}\Big)}$}\nonumber\\
    &\qquad + \frac{\sqrt{r}\Big(c\sqrt{\sum_{i \in [t]}\sum_{j \in [m]}\|\fl{w}^{(i, \ell)}\|_2^4}\sqrt{\frac{rd\log(rd/ \delta_0)}{m^2t^2}} + \frac{\sigma_1}{mt}\Big(2\sqrt{rd} + 2\sqrt{2rd\log(2rd/\delta_0)}\Big)\Big)}{\frac{1}{r}\lambda_r\Big(\frac{r}{t}(\fl{W}^{(\ell)})^\s{T}\fl{W}^{(\ell)} \Big)}\cdot\nonumber\\
    &\qquad \Big\{\frac{2}{t}\|\fl{U}^{\star}(\fl{W}^{\star})^\s{T}\fl{W}^{(\ell)}\|_\s{F} + \sqrt{\frac{4\zeta}{t}}(\max_i\|\fl{w}^{(i, \ell)}\|_2)\|\fl{b}^{\star(i)} - \fl{b}^{(i, \ell)}\|_2\nonumber\\ 
    &\qquad +4\Big(\|\fl{U}^{\star}\|\|\fl{w}^{\star(i)}\|_2\|\fl{w}^{(i, \ell)}\|_2 + \|\fl{b}^{\star(i)} - \fl{b}^{(i, \ell)}\|_2\|\fl{w}^{(i, \ell)}\|_2\Big)\sqrt{\frac{d\log (rd/\delta_0)}{mt}}\nonumber\\
    &\qquad + \frac{\sigma_2}{mt}6\sqrt{rd\log(rd)} + \frac{2\sigma\sqrt{d\mu^{\star}\lambda^{\star}_r}\log(2rdmt/\delta_0)}{\sqrt{mt}}\Big\}\nonumber
    \end{align}
    and  $\|\fl{U}^{(\ell)} - \fl{U}^{\star}\fl{Q}^{(\ell-1)}\|_{2,\infty}$
    \begin{align}
    &\leq \|\fl{U}^\star\|_{2, \infty}\|\frac{1}{t}\sum_{i \in [t]}\Big(\frac{1}{t}(\fl{W}^{(\ell)})^\s{T}\fl{W}^{(\ell)}\Big)^{-1}\fl{h}^{(i, \ell)}(\fl{w}^{(i, \ell)})^{\s{T}}\|_{2} \nonumber\\
    &\qquad + c\|\fl{U}^\star\|_\s{F}\|\Big(\frac{1}{t}(\fl{W}^{(\ell)})^\s{T}\fl{W}^{(\ell)}\Big)^{-1}\|\|\fl{h}^{(i, \ell)}(\fl{w}^{(i, \ell)})^{\s{T}}\|_\s{F}\sqrt{\frac{\log (1/\delta_0)}{mt}}\nonumber\\
    &\qquad + \zeta(\max_i \|\fl{b}^{\star(i)} - \fl{b}^{(i, \ell)}\|_\infty\|\fl{w}^{(i, \ell)}\|_2)\|\Big((\fl{W}^{(\ell)})^\s{T}\fl{W}^{(\ell)}\Big)^{-1}\|_{2} \nonumber\\
    &\qquad + c\|\Big(\frac{1}{t}(\fl{W}^{(\ell)})^\s{T}\fl{W}^{(\ell)}\Big)^{-1}\|_\s{F}\|\fl{b}^{\star(i)} - \fl{b}^{(i, \ell)}\|_2\|\fl{w}^{(i, \ell)}\|_\s{2}\sqrt{\frac{\log (1/\delta_0)}{mt}}\nonumber\\
    &\qquad + \frac{2\sigma_2}{mt}\sqrt{\log(rd/\delta_0)}\cdot\frac{r\sqrt{r}}{\lambda_r\Big(\frac{r}{t}(\fl{W}^{(\ell)})^\s{T}\fl{W}^{(\ell)}\Big)} + \frac{r^2\sigma_1\sqrt{rd\cdot2\log(2r^2d^2/\delta_0)}}{mt\lambda_r\Big(\frac{r}{t}\fl{W}^{(\ell)})^\s{T}\fl{W}^{(\ell)}\Big)} \nonumber\\
    &\qquad + \frac{2\sigma\sqrt{\mu^{\star}\lambda^{\star}_r}\log(2rdmt/\delta_0)}{\sqrt{mt}}\frac{r}{\lambda_r\Big(\frac{r}{t}(\fl{W}^{(\ell)})^\s{T}\fl{W}^{(\ell)}\Big)}\nonumber\\
    &\qquad + \frac{\sqrt{r}\Big(c\sqrt{\sum_{i \in [t]}\sum_{j \in [m]}\|\fl{w}^{(i, \ell)}\|_2^4}\sqrt{\frac{rd\log(rd/ \delta_0)}{m^2t^2}} + \frac{\sigma_1}{mt}\Big(2\sqrt{rd} + 2\sqrt{2rd\log(2rd/\delta_0)}\Big)\Big)}{\frac{1}{r}\lambda_r\Big(\frac{r}{t}(\fl{W}^{(\ell)})^\s{T}\fl{W}^{(\ell)} \Big)}\cdot\nonumber\\
    &\qquad \Big\{\frac{2}{t}\|\fl{U}^{\star}\fl{Q}^{(\ell-1)}(\fl{H}^{(\ell)})^\s{T}\fl{W}^{(\ell)}\|_\s{F} + \sqrt{\frac{4\zeta}{t}}(\max_i\|\fl{w}^{(i, \ell)}\|_2)\|\fl{b}^{\star(i)} - \fl{b}^{(i, \ell)}\|_2\nonumber\\ 
    &\qquad +4\Big(\|\fl{U}^{\star}\fl{Q}^{(\ell-1)}\|\|\fl{h}^{(i, \ell)}\|_2\|\fl{w}^{(i, \ell)}\|_2 + \|\fl{b}^{\star(i)} - \fl{b}^{(i, \ell)}\|_2\|\fl{w}^{(i, \ell)}\|_2\Big)\sqrt{\frac{d\log (rd/\delta_0)}{mt}}\nonumber\\
    &\qquad + \frac{\sigma_2}{mt}6\sqrt{rd\log(rd)} + \frac{\sigma_1}{mt}\Big(2\sqrt{rd} + 4\sqrt{\log rd}\Big)\sqrt{r} + \frac{2\sigma\sqrt{d\mu^{\star}\lambda^{\star}_r}\log(2rdmt/\delta_0)}{\sqrt{mt}}\Big\}\nonumber
    \end{align}
    w.p. $1 - \cO(\delta_0)$
\end{lemma}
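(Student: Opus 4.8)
The plan is to carry out the same decomposition as in the proof of Lemma~\ref{lemma:lemma-optimize_dp-lrs-U-Fnorm-bound}, but to track the row-wise norm $\lr{\cdot}_{2,\infty}$ in place of the Frobenius norm throughout. Starting from the vectorized update \eqref{lemma-optimize_dp-lrs-U-Fnorm-bound-vectorisedform}--\eqref{eq:lemma-optimize_dp-lrs-U-Fnorm-bound-exp1}, I would write $(\fl{A}+\fl{N}_1/mt)^{-1} = \E{\fl{A}}^{-1} + \fl{E}'$, where $\E{\fl{A}} = \tfrac1t(\fl{W}^{(\ell)})^{\s{T}}\fl{W}^{(\ell)}\otimes\fl{I}_d$ has the clean Kronecker form and $\fl{E}'$ absorbs both the sampling perturbation $\fl{A}-\E{\fl{A}}$ and the DP-noise perturbation $\fl{N}_1/mt$; the operator norm of $\fl{E}'$ (and hence of $(\fl{A}+\fl{N}_1/mt)^{-1}$) has already been controlled in Lemma~\ref{lemma:lemma-optimize_dp-lrs-U-Fnorm-bound}. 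This splits $\fl{U}^{(\ell)}$ (resp.\ $\fl{U}^{(\ell)}-\fl{U}^{\star}\fl{Q}^{(\ell-1)}$) into a ``clean'' term $\s{vec}^{-1}(\E{\fl{A}}^{-1}\s{vec}(\E{\fl{V}'}))$, which by the identity $(\fl{C}\otimes\fl{I})\s{vec}(\fl{M}) = \s{vec}(\fl{M}\fl{C})$ for symmetric $\fl{C}$ equals $\E{\fl{V}'}\big(\tfrac1t(\fl{W}^{(\ell)})^{\s{T}}\fl{W}^{(\ell)}\big)^{-1}$ and can be analysed row by row, plus several fluctuation and noise terms that are lower order and may be bounded crudely by their Frobenius norms.

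For the clean term I would use $\E{\fl{V}'} = \tfrac1t\fl{U}^{\star}(\fl{W}^{\star})^{\s{T}}\fl{W}^{(\ell)} + \tfrac1t\sum_i(\fl{b}^{\star(i)}-\fl{b}^{(i,\ell)})(\fl{w}^{(i,\ell)})^{\s{T}}$ (with $-\fl{U}^{\star}\fl{Q}^{(\ell-1)}\fl{h}^{(i,\ell)}$ replacing the first summand in the $\Delta$ version). After right-multiplication by $\big(\tfrac1t(\fl{W}^{(\ell)})^{\s{T}}\fl{W}^{(\ell)}\big)^{-1}$, the first summand has the form $\fl{U}^{\star}\fl{M}$ with $\fl{M}\in\bb{R}^{r\times r}$, so each of its rows is $\fl{U}^{\star}_s\fl{M}$ and $\lr{\fl{U}^{\star}\fl{M}}_{2,\infty}\le\lr{\fl{U}^{\star}}_{2,\infty}\lr{\fl{M}}$, which is precisely where Assumption~A3 enters. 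The second summand is the crux of the argument: for a fixed row $s$, only tasks $i$ with $s\in\s{supp}(\fl{b}^{\star(i)})$ contribute, and since $\s{supp}(\fl{b}^{(i,\ell)})\subseteq\s{supp}(\fl{b}^{\star(i)})$ (the output guarantee of \textsc{OptimizeSparseVector}, cf.\ Lemma~\ref{inductive-corollary:optimize_dp-lrs-innerloop-output-bounds}) and each row of $\fl{B}^{\star}$ has at most $\zeta$ nonzeros by Assumption~A1, the $s$-th row is a sum of at most $\zeta$ rank-one terms; a triangle inequality then produces the factor $\zeta\,(\max_i\lr{\fl{b}^{\star(i)}-\fl{b}^{(i,\ell)}}_{\infty}\lr{\fl{w}^{(i,\ell)}}_2)$ rather than a dimension-dependent bound.

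It remains to collect the fluctuation and noise contributions. The sampling fluctuation $\fl{V}'-\E{\fl{V}'}$ is handled row-wise by invoking Lemma~\ref{lemma:useful0} (and Lemma~\ref{lemma:useful1}) with $\fl{a}=\fl{e}_s$, followed by a union bound over the $d$ rows, which is exactly the step that yields the $\sqrt{\log(d/\delta_0)/mt}$ rate instead of the $\sqrt{d/mt}$ rate of the Frobenius bound. The DP matrix term $\s{vec}^{-1}(\E{\fl{A}}^{-1}\s{vec}(\fl{N}_2/mt)) = (\fl{N}_2/mt)\big(\tfrac1t(\fl{W}^{(\ell)})^{\s{T}}\fl{W}^{(\ell)}\big)^{-1}$ has $\lr{\cdot}_{2,\infty}$ at most $\lr{\fl{N}_2/mt}_{2,\infty}\cdot r/\lambda_r(\tfrac rt(\fl{W}^{(\ell)})^{\s{T}}\fl{W}^{(\ell)})$, and an entrywise Gaussian tail bound with a union bound over the $rd$ entries gives $\lr{\fl{N}_2/mt}_{2,\infty}\lesssim\tfrac{\sigma_2}{mt}\sqrt{r\log(rd/\delta_0)}$; the measurement-noise term $\fl{\Xi}$ is bounded in $\lr{\cdot}_{2,\infty}$ directly by \eqref{lemma-optimize_dp-lrs-U-Fnorm-bound-Xi-2,infty-norm} and multiplied by the same operator-norm factor; and the $\fl{E}'$ term acting on $\fl{V}'+\fl{N}_2/mt+\fl{\Xi}$ is simply bounded by its Frobenius norm, reusing the bounds on $\lr{\fl{V}'}_{\s{F}}$ and on $\lr{\fl{E}'}$ from Lemma~\ref{lemma:lemma-optimize_dp-lrs-U-Fnorm-bound} (the extra $\sqrt{r}$ in that block being the only place the column dimension shows up). Assembling everything by the triangle inequality gives the two claimed bounds. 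I expect the main obstacle to be the row-sparse residual step of the second paragraph: keeping its contribution at scale $\zeta$ while ensuring the support-containment invariant is preserved across iterations and interacts correctly with the inductive hypotheses, since this is precisely the mechanism (flagged in the Technical Challenges discussion) by which the $\s{L}_{2,\infty}$ control of $\fl{U}^{(\ell)}$ feeds back into progress on learning $\fl{B}^{\star}$.
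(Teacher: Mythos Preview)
Your proposal is correct and matches the paper's approach: both split off the ``clean'' contribution $\fl{C}\,\s{vec}(\fl{V}')$ via the Kronecker identity $((\tfrac1t\fl{W}^{\s T}\fl{W})^{-1}\otimes\fl{I})\s{vec}(\fl{M})=\s{vec}(\fl{M}(\tfrac1t\fl{W}^{\s T}\fl{W})^{-1})$, bound the $\fl{U}^{\star}$-block through $\|\fl{U}^{\star}\|_{2,\infty}$, the $\fl{b}$-residual block through the $\zeta$-row-sparsity of $\fl{B}^{\star}$ together with the support-containment invariant, and the sampling fluctuation by row-wise Hanson--Wright. Two minor points where the paper differs: (i) the higher-order correction is handled not by a single operator-norm bound on $\fl{E}'$ but by the Neumann series combined with an $\epsilon$-net on the unit sphere of $\bb{R}^{rd}$, giving a uniform $\ell_\infty$ bound on $\fl{C}(\fl{E}+\fl{N}_1/mt)\fl{z}$---this is what produces the exact $\sqrt{r}/(\tfrac1r\lambda_r)$ prefactor in the last block (your route gives a slightly larger constant there, harmless for the downstream corollaries but not literally the stated form); (ii) in the $\Delta$-version the right-hand side acquires an additional DP contribution $-\fl{C}\,(\fl{N}_1/mt)\,\s{vec}(\fl{U}^{\star}\fl{Q}^{(\ell-1)})$ (the paper's $\fl{V}_4$), which is missing from your list of noise terms and is precisely the source of the standalone $r^2\sigma_1\sqrt{rd\log(\cdot)}/(mt\lambda_r)$ term in the second bound---it is dispatched by an entrywise Gaussian tail bound plus union bound.
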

\begin{proof}
    Recall that
    \begin{align}
        \s{vec}(\fl{U}^{(\ell)}) &=  \Big(\underbrace{\fl{A'} + \frac{\fl{N}_1}{mt}}_{\fl{A}}\Big)^{-1}\s{vec}\Big(\fl{V} + \frac{\fl{N}_2}{mt} + \fl{\Xi}\Big) \label{eq:lemma-optimize_dp-lrs-2,infty-norm-bound-type1-exp1}
    \end{align}
    where
    \begin{align}
        \fl{A'}_{rd\times rd} &= \frac{1}{mt}\sum_{i \in [t]}\Big(\fl{w}^{(i, \ell)}(\fl{w}^{(i, \ell)})^{\s{T}} \otimes (\fl{X}^{(i)})^{\s{T}}\fl{X}^{(i)}\Big) \nonumber\\
        &= \frac{1}{mt}\sum_{i \in [t]}\Big(\fl{w}^{(i, \ell)}(\fl{w}^{(i, \ell)})^{\s{T}} \otimes \Big(\sum_{j=1}^{m}\fl{x}^{(i)}_j(\fl{x}^{(i)}_j)^{\s{T}}\Big)\Big),\nonumber\\
        \fl{V}'_{d\times r} &= \frac{1}{mt}\sum_{i \in [t]} (\fl{X}^{(i)})^{\s{T}}\fl{X}^{(i)}\Big(\fl{U}^{\star}\fl{w}^{\star(i)} + (\fl{b}^{\star(i)} - \fl{b}^{(i, \ell)})\Big)(\fl{w}^{(i, \ell)})^{\s{T}}.\nonumber
    \end{align}
    Note that
    \begin{align}
        \E{\fl{A}} &= \frac{1}{t}\sum_{i \in [t]}\Big(\fl{w}^{(i, \ell)}(\fl{w}^{(i, \ell)})^{\s{T}} \otimes \fl{I}\Big) + \vzero\nonumber\\
        &= \frac{1}{t}\cdot(\fl{W}^{(\ell)})^\s{T}\fl{W}^{(\ell)} \otimes \fl{I}.\label{eq:lemma-optimize_dp-lrs-2,infty-norm-EA}
    \end{align}
    Let $\fl{C} := \E{\fl{A}}^{-1}$ and $\fl{A}' = \fl{I} + \fl{E}$. 
    Then, we have:
    \begin{align}
        \|\fl{C}\|_2 &= \|\E{\fl{A}}^{-1}\|_2\nonumber\\
        &\leq \frac{1}{\lambda_{\min}\Big(\frac{1}{t}\cdot(\fl{W}^{(\ell)})^\s{T}\fl{W}^{(\ell)} \otimes \fl{I}\Big)}\nonumber\\
        &\leq \frac{1}{\frac{1}{r}\lambda_r\Big(\frac{r}{t}(\fl{W}^{(\ell)})^\s{T}\fl{W}^{(\ell)} \Big)}.\label{eq:lemma-optimize_dp-lrs-2,infty-norm-Cnorm}
    \end{align}
    and
    \begin{align}
        \fl{A}^{-1} &= \Big(\E{\fl{A}}\E{\fl{A}}^{-1}\fl{A}\Big)^{-1}\nonumber\\
        &= \Big(\E{\fl{A}}\Big(\fl{I} + \E{\fl{A}}^{-1}\fl{E} + \E{\fl{A}}^{-1}\frac{\fl{N}_1}{mt}\Big)\Big)^{-1}\nonumber\\
        &= \Big(\fl{I} + \E{\fl{A}}^{-1}\fl{E} + \E{\fl{A}}^{-1}\frac{\fl{N}_1}{mt}\Big)^{-1}\E{\fl{A}}^{-1}\nonumber\\
        &= \Big(\fl{I} + \fl{C}\fl{E} + \frac{\fl{C}\fl{N}_1}{mt}\Big)^{-1}\fl{C}\nonumber\\
        &= \sum_{p=0}^\infty(-1)^p\Big(\fl{C}\fl{E} + \frac{\fl{C}\fl{N}_1}{mt}\Big)^p\fl{C}\nonumber\\
        &= \fl{C} + \sum_{p=1}^\infty(-1)^p\Big(\fl{C}\fl{E} + \frac{\fl{C}\fl{N}_1}{mt}\Big)^p\fl{C}\nonumber
    \end{align}
    since $\|\fl{C}\fl{E}\| < 1$. 
    Now, let $\fl{\cZ} \triangleq \{\fl{z} \in \bR^{rd} | \|\fl{z}\|_2 = 1\}$. Then for $\epsilon \leq 1$, there exists an $\epsilon$-net, $N_\epsilon \subset \fl{\cZ}$, of size $(1 + 2/\epsilon)^
    {rd}$ w.r.t the Euclidean norm, i.e. $\forall$  $\fl{z} \in \fl{\cZ}$, $\exists$ $\fl{z}' \in N_\epsilon$ s.t. $\|\fl{z} - \fl{z}'\|_2 \leq \epsilon$. We will now bound $\lr{\Big(\fl{C}\fl{E} + \frac{\fl{C}\fl{N}_1}{mt}\Big)\fl{z}}_\infty$ $\forall$ $\fl{z} \in \fl{Z}$.
    Now consider any $\fl{z}^\s{T} = \begin{bmatrix}\fl{z}^\s{T}_1, \fl{z}^\s{T}_2, \dots, \fl{z}^\s{T}_r \end{bmatrix} \in N_\epsilon$ where each $\fl{z}_i \in \bR^{d}$. Then for $s$-th standard basis vector $\fl{e}_s \in \bR^{rd}$, we have
    using Lemma~\ref{lemma:a^TEb-concentration}
    \begin{align}
        |\fl{e}_s^\s{T}\fl{C}\fl{E}\fl{z}| &\leq c\|\fl{C}^\s{T}\fl{e}_s\|_2\|\fl{z}\|_2\sqrt{\sum_{i \in [t]}\sum_{j \in [m]}\|\fl{w}^{(i, \ell)}\|_2^4\frac{\log (1/\delta_0)}{m^2t^2}}\nonumber\\
        &\leq c\|\fl{C}\|_2\sqrt{\sum_{i \in [t]}\sum_{j \in [m]}\|\fl{w}^{(i, \ell)}\|_2^4\frac{\log (1/\delta_0)}{m^2t^2}}
        \\
        \text{and } \Big|\fl{e}_s^\s{T}\frac{\fl{C}\fl{N}_1}{mt}\fl{z}\Big| &\leq \|\fl{C}^\s{T}\fl{e}_s\|_2\|\frac{\fl{N}_1}{mt}\|_2\|\fl{z}\|_2\nonumber\\
        &\leq \|\fl{C}\|_2\frac{\sigma_1}{mt}\Big(2\sqrt{rd} + 2\sqrt{2\log(1/\delta_0)}\Big).\nonumber
    \end{align}
    \begin{align}
        &\implies  \lr{\fl{C}\Big(\fl{E} + \frac{\fl{N}_1}{mt}\Big)\fl{z}}_\infty\nonumber\\
        & \leq \|\fl{C}\|_2\Big(c\sqrt{\sum_{i \in [t]}\sum_{j \in [m]}\|\fl{w}^{(i, \ell)}\|_2^4}\sqrt{\frac{\log (|N_\epsilon|/\delta_0)}{m^2t^2}} + \frac{\sigma_1}{mt}\Big(2\sqrt{rd} + \sqrt{2\log(2|N_\epsilon|/\delta_0)}\Big)\Big)\nonumber\\
        &\leq \resizebox{0.95\hsize}{!}{$\|\fl{C}\|_2\Big(c\sqrt{\sum_{i \in [t]}\sum_{j \in [m]}\|\fl{w}^{(i, \ell)}\|_2^4}\sqrt{\frac{\log ((1+2/\epsilon)^{rd}/\delta_0)}{m^2t^2}} + \frac{\sigma_1}{mt}\Big(2\sqrt{rd} + \sqrt{2\log(2(1+2/\epsilon)^{rd}/\delta_0)}\Big)\Big)$}.\nonumber
    \end{align}
    Further, $\exists$ $\fl{z} \in N_\epsilon$ s.t. $\|\fl{z}' - \fl{z}\|_2 \leq \epsilon$. This implies that setting $\epsilon \gets 1/4$ and $c \gets 2c\sqrt{\log(9)}$ gives $\lr{\fl{C}\Big(\fl{E} + \frac{\fl{N}_1}{mt}\Big)\fl{z}'}_\infty$
    \begin{align}
        &\leq \lr{\fl{C}\Big(\fl{E} + \frac{\fl{N}_1}{mt}\Big)(\fl{z}-\fl{z}')}_\infty + \lr{\fl{C}\Big(\fl{E} + \frac{\fl{N}_1}{mt}\Big)\fl{z}}_\infty\nonumber\\
        &\leq \lr{\fl{C}\Big(\fl{E} + \frac{\fl{N}_1}{mt}\Big)(\fl{z}-\fl{z}')}_2 + \lr{\fl{C}\Big(\fl{E} + \frac{\fl{N}_1}{mt}\Big)\fl{z}}_\infty\nonumber\\
        &\leq \lr{\fl{C}\Big(\fl{E} + \frac{\fl{N}_1}{mt}\Big)}_2\epsilon + \lr{\fl{C}\Big(\fl{E} + \frac{\fl{N}_1}{mt}\Big)\fl{z}}_\infty\nonumber\\
        &\leq \|\fl{C}\|\cdot \frac{1}{4} \cdot \Big(c\sqrt{\sum_{i \in [t]}\sum_{j \in [m]}\|\fl{w}^{(i, \ell)}\|_2^4}\sqrt{\frac{rd\log(rd/ \delta_0)}{m^2t^2}} + \frac{\sigma_1}{mt}\Big(2\sqrt{rd} + 4\sqrt{rd\log(rd/ \delta_0)}\Big) \Big) \nonumber\\
        &\qquad + \|\fl{C}\|_2\Big(c\sqrt{\sum_{i \in [t]}\sum_{j \in [m]}\|\fl{w}^{(i, \ell)}\|_2^4}\sqrt{\frac{rd\log(rd/ \delta_0)}{m^2t^2}} + \frac{\sigma_1}{mt}\Big(2\sqrt{rd} + 4\sqrt{rd\log(rd/ \delta_0)}\Big)\Big)\nonumber\\
        &\leq \|\fl{C}\|_2\Big(c\sqrt{\sum_{i \in [t]}\sum_{j \in [m]}\|\fl{w}^{(i, \ell)}\|_2^4}\sqrt{\frac{rd\log(rd/ \delta_0)}{m^2t^2}} + \frac{\sigma_1}{mt}\Big(2\sqrt{rd} + 4\sqrt{rd\log(rd/ \delta_0)}\Big)\Big).\label{eq:lemma-optimize_dp-lrs-2,infty-norm-CECN1norm}
    \end{align}
 with probability at least $1-\delta_0$ where we use \eqref{eq:lemma-optimize_dp-lrs-u-2norm-bound-type1-A-E-value}, \eqref{eq:lemma-optimize_dp-lrs-U-Fnorm-bound-N1-norm} and the fact that $\|\fl{M}\fl{N}\|_2 \leq \|\fl{M}\|_2\|\fl{N}\|_2$. Hence, with probability at least $1-O(\delta_0)$, we have $\lr{\fl{CE}}_2$ and $\lr{\fl{CE}\fl{z}}_{\infty}$ for all $\fl{z}\in \ca{Z}$. Therefore, let us condition on these events in order to prove the next steps. We will now show an upper bound on $\lr{\fl{A}^{-1}\s{vec}\Big(\fl{V}' + \frac{\fl{N}_2}{mt}\Big)}_\infty$. Note that
$\lr{\fl{A}^{-1}\s{vec}\Big(\fl{V}' + \frac{\fl{N}_2}{mt} + \fl{\Xi}\Big)}_\infty$
\begin{align}
    &= \lr{\fl{C}\s{vec}\Big(\fl{V}' + \frac{\fl{N}_2}{mt}+ \fl{\Xi}\Big) + \sum_{p=1}^\infty(-1)^p\Big(\fl{C}\fl{E}+\frac{\fl{C}\fl{N}_1}{mt}\Big)^p\fl{C}\s{vec}\Big(\fl{V}' + \frac{\fl{N}_2}{mt} + \fl{\Xi}\Big)}_\infty\nonumber\\
    &\leq \lr{\fl{C}\s{vec}\Big(\fl{V}' + \frac{\fl{N}_2}{mt} + \fl{\Xi}\Big)}_\infty + \sum_{p=1}^\infty\lr{\Big(\fl{C}\fl{E} + \frac{\fl{C}\fl{N}_1}{mt}\Big)^p\fl{C}\s{vec}\Big(\fl{V}' + \frac{\fl{N}_2}{mt} + \fl{\Xi}\Big)}_\infty.\label{eq:lemma-optimize_dp-lrs-2,infty-norm-AinvV-infty-unplugged}
\end{align}
We have with probability at least $1-\delta_0$, $\sum_{p=1}^\infty
    \lr{\Big(\fl{C}\fl{E} + \frac{\fl{C}\fl{N}_1}{mt}\Big)^p\s{vec}\Big(\fl{V}' + \frac{\fl{N}_2}{mt} + \fl{\Xi}\Big)}_\infty$
\begin{align}
     &\resizebox{0.95\hsize}{!}{$= \sum_{p=1}^\infty \lr{\Big(\fl{C}\fl{E} + \frac{\fl{C}\fl{N}_1}{mt}\Big)\cdot\lr{\Big(\fl{C}\fl{E} + \frac{\fl{C}\fl{N}_1}{mt}\Big)^{p-1}\s{vec}\Big(\fl{V}' + \frac{\fl{N}_2}{mt} + \fl{\Xi}\Big)}_2\cdot\frac{\Big(\fl{C}\fl{E} + \frac{\fl{C}\fl{N}_1}{mt}\Big)^{p-1}\s{vec}\Big(\fl{V}' + \frac{\fl{N}_2}{mt} + \fl{\Xi}\Big)}{\lr{\Big(\fl{C}\fl{E} + \frac{\fl{C}\fl{N}_1}{mt}\Big)^{p-1}\s{vec}\Big(\fl{V}' + \frac{\fl{N}_2}{mt} + \fl{\Xi}\Big)}_2}}_\infty$}\nonumber\\
    &= \resizebox{0.95\hsize}{!}{$\sum_{p=1}^\infty\lr{\Big(\fl{C}\fl{E} + \frac{\fl{C}\fl{N}_1}{mt}\Big)^{p-1}\s{vec}\Big(\fl{V}' + \frac{\fl{N}_2}{mt} + \fl{\Xi}\Big)}_2\lr{\Big(\fl{C}\fl{E}+\frac{\fl{C}\fl{N}_1}{mt}\Big)\cdot\frac{\Big(\fl{C}\fl{E} + \frac{\fl{C}\fl{N}_1}{mt}\Big)^{p-1}\s{vec}\Big(\fl{V}' + \frac{\fl{N}_2}{mt} + \fl{\Xi}\Big)}{\lr{\Big(\fl{C}\fl{E} + \frac{\fl{C}\fl{N}_1}{mt}\Big)^{p-1}\s{vec}\Big(\fl{V}' + \frac{\fl{N}_2}{mt} + \fl{\Xi}\Big)}_2}}_\infty$}\nonumber\\
    &\leq \sum_{p=1}^\infty\lr{\fl{C}\fl{E}+\frac{\fl{C}\fl{N}_1}{mt}}^{p-1}\lr{\s{vec}\Big(\fl{V}' + \frac{\fl{N}_2}{mt} + \fl{\Xi}\Big)}_2\cdot \nonumber\\
    &\qquad \|\fl{C}\|_2\Big(c\sqrt{\sum_{i \in [t]}\sum_{j \in [m]}\|\fl{w}^{(i, \ell)}\|_2^4}\sqrt{\frac{rd\log(rd/ \delta_0)}{m^2t^2}} + \frac{\sigma_1}{mt}\Big(2\sqrt{rd} + 4\sqrt{rd\log(rd/\delta_0)}\Big)\Big)\label{eq:lemma-optimize_dp-lrs-2,infty-norm-temp2}\\
    &\leq \sum_{p=1}^\infty \Big(\frac{c\sqrt{\sum_{i \in [t]}\sum_{j \in [m]}\|\fl{w}^{(i, \ell)}\|_2^4}\sqrt{\frac{rd\log(rd/ \delta_0)}{m^2t^2}}+\frac{\sigma_1}{mt}\Big(2\sqrt{rd} + 4\sqrt{\log rd}\Big)}{\frac{1}{r}\lambda_r\Big(\frac{r}{t}(\fl{W}^{(\ell)})^\s{T}\fl{W}^{(\ell)} \Big)}\Big)^{p-1}\nonumber\\
    &\qquad \|\fl{C}\|_2\Big(c\sqrt{\sum_{i \in [t]}\sum_{j \in [m]}\|\fl{w}^{(i, \ell)}\|_2^4}\sqrt{\frac{rd\log(1/ \delta_0)}{m^2t^2}} \nonumber\\
    &\qquad + \frac{\sigma_1}{mt}\Big(2\sqrt{rd} + 2\sqrt{2rd\log(2/\delta_0)}\Big)\Big)\lr{\s{vec}\Big(\fl{V}' + \frac{\fl{N}_2}{mt} + \fl{\Xi}\Big)}_2\nonumber\\
    &\leq \sum_{p=1}^\infty \Big(c'\sqrt{\frac{r^3d\log(rd/ \delta_0)}{mt}} + c''\frac{\sigma_1}{mt\lambda_r}\sqrt{r^3d\log(rd/ \delta_0)}\Big)^{p-1}\cdot\nonumber\\
    &\qquad \|\fl{C}\|_2\Big(c\sqrt{\sum_{i \in [t]}\sum_{j \in [m]}\|\fl{w}^{(i, \ell)}\|_2^4}\sqrt{\frac{rd\log(1/ \delta_0)}{m^2t^2}} \nonumber\\
    &\qquad + \frac{\sigma_1}{mt}\Big(2\sqrt{rd} + 2\sqrt{2rd\log(2/\delta_0)}\Big)\Big)\lr{\s{vec}\Big(\fl{V}' + \frac{\fl{N}_2}{mt} + \fl{\Xi}\Big)}_2,\label{eq:lemma-optimize_dp-lrs-2,infty-norm-Ainv-infty-temp}
\end{align}    
where in \eqref{eq:lemma-optimize_dp-lrs-2,infty-norm-temp2} we use \eqref{eq:lemma-optimize_dp-lrs-2,infty-norm-CECN1norm} and in \eqref{eq:lemma-optimize_dp-lrs-2,infty-norm-Ainv-infty-temp} the fact that $mt=\Omega(\max\{r^3d\log (rd/\delta_0), \frac{\sigma_1}{\lambda_r}\sqrt{r^3d\log(rd/ \delta_0)}\})$. By taking a union bound we must have with probability at least $1-\delta_0$, $\sum_{p=1}^{\infty}\lr{\Big(\fl{C}\fl{E} + \frac{\fl{C}\fl{N}_1}{mt}\Big)^p\s{vec}\Big(\fl{V}' + \frac{\fl{N}_2}{mt} + \fl{\Xi}\Big)}_{\infty}$
\begin{align}\label{eq:lemma-optimize_dp-lrs-2,infty-uinfty-unplugged-temp}
    &= \cO\Big(\|\fl{C}\|_2\Big(c\sqrt{\sum_{i \in [t]}\sum_{j \in [m]}\|\fl{w}^{(i, \ell)}\|_2^4}\sqrt{\frac{rd\log(1/ \delta_0)}{m^2t^2}}+ \frac{\sigma_1}{mt}\Big(2\sqrt{rd} \nonumber\\
    &\qquad + 2\sqrt{2rd\log(2/\delta_0)}\Big)\Big)\lr{\s{vec}\Big(\fl{V}' + \frac{\fl{N}_2}{mt} + \fl{\Xi}\Big)}_2\Big).
\end{align}
Using the above and \eqref{eq:lemma-optimize_dp-lrs-2,infty-norm-Cnorm} in \eqref{eq:lemma-optimize_dp-lrs-2,infty-norm-AinvV-infty-unplugged}, we have $\lr{\fl{A}^{-1}\s{vec}\Big(\fl{V}' + \frac{\fl{N}_2}{mt}\Big)}_\infty$
\begin{align}
    &\leq \|\fl{C}\s{vec}\Big(\fl{V}' + \frac{\fl{N}_2}{mt} \fl{\Xi}\Big)\|_\infty  + \|\fl{C}\|_2\Big(c\sqrt{\sum_{i \in [t]}\sum_{j \in [m]}\|\fl{w}^{(i, \ell)}\|_2^4}\sqrt{\frac{rd\log(1/ \delta_0)}{m^2t^2}} \nonumber\\
    &\qquad + \frac{\sigma_1}{mt}\Big(2\sqrt{rd} + 2\sqrt{2rd\log(2/\delta_0)}\Big)\Big)\lr{\s{vec}\Big(\fl{V}' + \frac{\fl{N}_2}{mt} + \fl{\Xi}\Big)}_2\nonumber
    \end{align}
    \begin{align}
    &\leq \lr{\fl{C}\s{vec}\Big(\fl{V}' + \frac{\fl{N}_2}{mt} + \fl{\Xi}\Big)}_\infty \nonumber\\
    &\qquad + \frac{\Big(c\sqrt{\sum_{i \in [t]}\sum_{j \in [m]}\|\fl{w}^{(i, \ell)}\|_2^4}\sqrt{\frac{rd\log(rd/ \delta_0)}{m^2t^2}} + \frac{\sigma_1}{mt}\Big(2\sqrt{rd} + 2\sqrt{2rd\log(2rd/\delta_0)}\Big)\Big)}{\frac{1}{r}\lambda_r\Big(\frac{r}{t}(\fl{W}^{(\ell)})^\s{T}\fl{W}^{(\ell)} \Big)}\cdot\nonumber\\
    &\qquad \lr{\fl{V}' + \frac{\fl{N}_2}{mt} + \fl{\Xi}}_\s{F}.\label{eq:lemma-optimize_dp-lrs-2,infty-norm-AinvV-infty-unplugged-2}
\end{align}
Similarly, we also have the following bound $\|\fl{U}^{(\ell)}\|_{2, \infty} = \lr{\s{vec}^{-1}\Big(\fl{A}^{-1}\s{vec}\Big(\fl{V}' + \frac{\fl{N}_2}{mt} + \fl{\Xi} \Big)\Big)}_{2, \infty}$
\begin{align}
    &\leq \lr{\s{vec}^{-1}\Big(\fl{C}\s{vec}\Big(\fl{V}' + \frac{\fl{N}_2}{mt} + \fl{\Xi}\Big)\Big)}_{2, \infty} \nonumber\\
    &\qquad + \frac{\sqrt{r}\Big(c\sqrt{\sum_{i \in [t]}\sum_{j \in [m]}\|\fl{w}^{(i, \ell)}\|_2^4}\sqrt{\frac{rd\log(rd/ \delta_0)}{m^2t^2}} + \frac{\sigma_1}{mt}\Big(2\sqrt{rd} + 2\sqrt{2rd\log(2rd/\delta_0)}\Big)\Big)}{\frac{1}{r}\lambda_r\Big(\frac{r}{t}(\fl{W}^{(\ell)})^\s{T}\fl{W}^{(\ell)} \Big)}\cdot\nonumber\\
    &\qquad \lr{\fl{V}' + \frac{\fl{N}_2}{mt} + \fl{\Xi}}_\s{F}.\label{eq:lemma-optimize_dp-lrs-2,infty-norm-AinvV-2,infty-unplugged-2}
\end{align}
Now $\fl{C}\s{vec}(\fl{V}')$
\begin{align}
    &= \Big(\frac{1}{t}(\fl{W}^{(\ell)})^\s{T}\fl{W}^{(\ell)} \otimes \fl{I}\Big)^{-1}\s{vec}\Big(\frac{1}{mt}\sum_{i \in [t]} (\fl{X}^{(i)})^{\s{T}}\fl{X}^{(i)}\Big(\fl{U}^{\star}\fl{w}^{\star(i)} + (\fl{b}^{\star(i)} - \fl{b}^{(i, \ell)})\Big)(\fl{w}^{(i, \ell)})^{\s{T}}\Big)\nonumber\\
    &= \Big(\Big(\frac{1}{t}(\fl{W}^{(\ell)})^\s{T}\fl{W}^{(\ell)}\Big)^{-1} \otimes \fl{I}\Big)\cdot\nonumber\\
    &\qquad \frac{1}{mt}\sum_{i \in [t]}\Big(\s{vec}\Big( (\fl{X}^{(i)})^{\s{T}}\fl{X}^{(i)}\fl{U}^{\star}\fl{w}^{\star(i)}(\fl{w}^{(i, \ell)})^{\s{T}}\Big) + \s{vec}\Big( (\fl{X}^{(i)})^{\s{T}}\fl{X}^{(i)}(\fl{b}^{\star(i)} - \fl{b}^{(i, \ell)})(\fl{w}^{(i, \ell)})^{\s{T}}\Big)\Big)\nonumber\\
    &= \Big(\Big(\frac{1}{t}(\fl{W}^{(\ell)})^\s{T}\fl{W}^{(\ell)}\Big)^{-1} \otimes \fl{I}\Big)\cdot\nonumber\\
    &\qquad \frac{1}{mt}\sum_{i \in [t]}\Big(\Big(\fl{w}^{\star(i)}(\fl{w}^{(i, \ell)})^{\s{T}}\otimes(\fl{X}^{(i)})^{\s{T}}\fl{X}^{(i)}\Big)\s{vec}( \fl{U}^{\star}) \nonumber\\
    &\qquad\qquad + \s{vec}\Big( (\fl{X}^{(i)})^{\s{T}}\fl{X}^{(i)}(\fl{b}^{\star(i)} - \fl{b}^{(i, \ell)})(\fl{w}^{(i, \ell)})^{\s{T}}\Big)\Big)\nonumber\\
    &= \frac{1}{mt}\sum_{i \in [t]} \Big(\Big(\frac{1}{t}(\fl{W}^{(\ell)})^\s{T}\fl{W}^{(\ell)}\Big)^{-1} \otimes \fl{I}\Big)\cdot\Big(\fl{w}^{\star(i)}(\fl{w}^{(i, \ell)})^{\s{T}}\otimes(\fl{X}^{(i)})^{\s{T}}\fl{X}^{(i)}\Big)\s{vec}( \fl{U}^{\star}) \nonumber\\
    &\qquad + \frac{1}{mt}\sum_{i \in [t]}\Big(\Big(\frac{1}{t}(\fl{W}^{(\ell)})^\s{T}\fl{W}^{(\ell)}\Big)^{-1} \otimes \fl{I}\Big)\s{vec}\Big( (\fl{X}^{(i)})^{\s{T}}\fl{X}^{(i)}(\fl{b}^{\star(i)} - \fl{b}^{(i, \ell)})(\fl{w}^{(i, \ell)})^{\s{T}}\Big)\nonumber\\
    &= \underbrace{\frac{1}{mt}\sum_{i \in [t]}\Big(\Big(\frac{1}{t}(\fl{W}^{(\ell)})^\s{T}\fl{W}^{(\ell)}\Big)^{-1}\fl{w}^{\star(i)}(\fl{w}^{(i, \ell)})^{\s{T}} \otimes (\fl{X}^{(i)})^{\s{T}}\fl{X}^{(i)}\Big)\s{vec}( \fl{U}^{\star})}_{\s{vec}(\fl{V}'_1)} \nonumber\\
    &\qquad + 
    \underbrace{\frac{1}{mt}\sum_{i \in [t]}\Big(\Big(\frac{1}{t}(\fl{W}^{(\ell)})^\s{T}\fl{W}^{(\ell)}\Big)^{-1} \otimes \fl{I}\Big)\s{vec}\Big( (\fl{X}^{(i)})^{\s{T}}\fl{X}^{(i)}(\fl{b}^{\star(i)} - \fl{b}^{(i, \ell)})(\fl{w}^{(i, \ell)})^{\s{T}}\Big)}_{\s{vec}(\fl{V}'_2)}.\label{eq:lemma-optimize_dp-lrs-2,infty-norm-CvecV-unplugged}
\end{align}

\textbf{Analysis for $\s{vec}(\fl{V}'_1)$:}

Let $\fl{J}^{(i, \ell)} := \Big(\frac{1}{t}(\fl{W}^{(\ell)})^\s{T}\fl{W}^{(\ell)}\Big)^{-1}\fl{w}^{\star(i)}(\fl{w}^{(i, \ell)})^{\s{T}}$. Then we can rewrite $\fl{V}'_1$ as, 
\begin{align}
    \s{vec}(\fl{V}'_1) &= \frac{1}{mt}\sum_{i \in [t]}\Big(\fl{J}^{(i, \ell)} \otimes (\fl{X}^{(i)})^{\s{T}}\fl{X}^{(i)}\Big)\s{vec}( \fl{U}^{\star}) \\
    \iff \fl{V}'_1 &= \frac{1}{mt}\sum_{i \in [t]}(\fl{X}^{(i)})^{\s{T}}\fl{X}^{(i)}\fl{U}^{\star}(\fl{J}^{(i, \ell)})^\s{T}\\
    \implies \E{\s{vec}(\fl{V}'_1)} &= \Big(\frac{1}{t}\sum_{i \in [t]}\fl{J}^{(i, \ell)} \otimes \fl{I}\Big)\s{vec}( \fl{U}^{\star}),\\
    \iff \E{\fl{V}'_1} &= \fl{U}^\star\Big(\frac{1}{t}\sum_{i \in [t]}\fl{J}^{(i, \ell)}\Big)^\s{T}\\
    \implies \|\E{\fl{V}'_1}\|_{2, \infty} &= \|\fl{U}^\star\Big(\frac{1}{t}\sum_{i \in [t]}\fl{J}^{(i, \ell)}\Big)^\s{T}\|_{2, \infty}\nonumber\\
    &= \max_{p \in [d]}\|\fl{U}^\star_p\|_2\|\frac{1}{t}\sum_{i \in [t]}\fl{J}^{(i, \ell)}\|_{2}\nonumber\\
    &\leq \|\fl{U}^\star\|_{2, \infty}\|\frac{1}{t}\sum_{i \in [t]}\fl{J}^{(i, \ell)}\|_{2}\label{eq:lemma-optimize_dp-lrs-2,infty-norm-EV1-infintynorm}
\end{align}
Now consider the $s$-th standard basis vector $\fl{e}_s \in \bR^{rd}$ s.t. $s$ falls under the $q^\star$-th fragment ($q \in [r]$), i.e. $\forall$ $\fl{a} \in \bR^{rd}$, if we write denote $\fl{a}^\s{T} = \begin{bmatrix}\fl{a}^\s{T}_1, \fl{a}^\s{T}_2, \dots, \fl{a}^\s{T}_r \end{bmatrix}$ where each $\fl{a}_i \in \bR^d$, then $\fl{e}_s^\s{T}\fl{a} = \begin{bmatrix}\fl{0}^\s{T}_1 \dots, \underbrace{\begin{bmatrix}0 \dots, 1_{s}, \dots 0 \end{bmatrix}}_{q^\star}, \dots \fl{0}^\s{T}_r \end{bmatrix}\begin{bmatrix}\fl{a}_1 \\ \dots \\ \fl{a}_{q^\star} \\ \dots \\ \fl{a}_r \end{bmatrix} = \begin{bmatrix}0 \dots, 1_{s}, \dots 0 \end{bmatrix} \fl{a}_{q^\star} = \s{vec}(\fl{a})_s$. Then following along similar lines of Lemma~\ref{lemma:a^TEb-concentration}, we have
\begin{align}
    \fl{e}_s^\s{T}\s{vec}(\fl{V}'_1) &= \fl{e}_s^\s{T}\frac{1}{mt}\sum_{i \in [t]}\Big(\fl{J}^{(i, \ell)} \otimes (\fl{X}^{(i)})^{\s{T}}\fl{X}^{(i)}\Big)\s{vec}( \fl{U}^{\star})\nonumber\\
    &= \frac{1}{mt}\sum_{i \in [t]}\sum_{j \in [m]}\sum_{p \in [r]}\sum_{q \in [r]} \fl{e}^\s{T}_p\Big(J^{(i, \ell)}_{p, q} \fl{x}^{(i)}_j(\fl{x}^{(i)}_j)^{\s{T}}\Big)\s{vec}(\fl{U}^\star)_q\nonumber\\
    &= \frac{1}{mt}\sum_{i \in [t]}\sum_{j \in [m]}(\fl{x}^{(i)}_j)^\s{T}\Big(\sum_{p \in [r]}\sum_{q \in [r]} J^{(i, \ell)}_{p, q} \s{vec}(\fl{U}^\star)_p\fl{e}_q^{\s{T}}\Big)\fl{x}^{(i)}_j\label{eq:lemma-optimize_dp-lrs-2,infty-norm-eTvecV1}
\end{align}
and
    \begin{align}
        \|\sum_{p \in [r]}\sum_{q \in [r]} J^{(i, \ell)}_{p, q} \s{vec}(\fl{U}^\star)_p\fl{e}_q^{\s{T}}\|_\s{F} &= \|\sum_{p \in [r]} J^{(i, \ell)}_{p, q^\star} \s{vec}(\fl{U}^\star)_p\fl{e}_{q^\star}\|_\s{F}\nonumber\\
        &= \|\sum_{p \in [r]} J^{(i, \ell)}_{p, q^\star} \s{vec}(\fl{U}^\star)_p\|_2\|\fl{e}_{q^\star}^{\s{T}}\|_2\nonumber\\
        &\leq \|\sum_{p \in [r]} |J^{(i, \ell)}_{p, q^\star}| \s{vec}(\fl{U}^\star)_p\|_2\\
        &\leq \sqrt{\Big(\sum_{p \in [r]}(J^{(i, \ell)}_{p, q^\star})^2\Big)\Big(\sum_{p \in [r]} \|\s{vec}(\fl{U}^\star)_p\|_2^2\Big)}\nonumber\\
        &= \|\fl{J}^{(i, \ell, q^\star)}\|_2\|\s{vec}(\fl{U}^\star)\|_2\nonumber\\
        &= \|\fl{J}^{(i, \ell, q^\star)}\|_2\|\fl{U}^\star\|_\s{F}.\label{eq:lemma-optimize_dp-lrs-2,infty-norm-vecV1-conc}
    \end{align}
Thus, using \eqref{eq:lemma-optimize_dp-lrs-2,infty-norm-eTvecV1} and \eqref{eq:lemma-optimize_dp-lrs-2,infty-norm-vecV1-conc} in Lemma~\ref{lemma:useful0} we have
\begin{align}
        \Big|\fl{e}^\s{T}_s\Big(\s{vec}(\fl{V}'_1) - \E{\s{vec}(\fl{V}'_1)}\Big)\Big| &\leq c\sqrt{\sum_{i \in [t]}\sum_{j \in [m]}\|\fl{J}^{(i, \ell, q^\star)}\|_2^2\|\fl{U}^\star\|_\s{F}^2\frac{\log (1/\delta_0)}{m^2t^2}}.\label{eq:lemma-optimize_dp-lrs-2,infty-norm-eT(vecV1-E)}
\end{align}
Now, note that 
\begin{align}
    \|\fl{V}'_1 - \E{\fl{V}'_1}\|_{2, \infty} &= \max_{p \in [d]} \|(\fl{V}'_1)_p - \E{\fl{V}'_1}_p\|_2\nonumber\\
    &= \max_{p \in [d]} \sqrt{\sum_{q \in [r]}|(V_1)_{p, q} - \E{V_1}_{p, q}|_2^2}\nonumber\\
    &= \max_{p \in [d]} \sqrt{\sum_{s = \{(q-1)d+p: q \in [r]\}}|\fl{e}_s^\s{T}(\s{vec}(\fl{V}'_1) - \fl{\E{\s{vec}(\fl{V}'_1}}|^2}\nonumber\\
    &\leq \max_{p \in [d]} \sqrt{\sum_{s = \{(q-1)d+p: q \in [r]\}}\Big(c\sqrt{\sum_{i \in [t]}\sum_{j \in [m]}\|\fl{J}^{(i, \ell, q^\star)}\|_2^2\|\fl{U}^\star\|_\s{F}^2\frac{\log (1/\delta_0)}{m^2t^2}}\Big)^2},\nonumber
\end{align}
where in the last step we use \eqref{eq:lemma-optimize_dp-lrs-2,infty-norm-eT(vecV1-E)}. Now note that as per the notation discussed above, $s = (q-1)d+p$ lies in the $q$-th (= $q^\star$) segment. Since $q \in [r]$, therefore summation over $s$ is equivalent to summation over $q^\star \in [r]$. Using this fact, the above becomes:
\begin{align}
    \|\fl{V}'_1 - \E{\fl{V}'_1}\|_{2, \infty} &\leq \max_{p \in [d]} \sqrt{\sum_{q^\star \in [r]}\Big(c^2\sum_{i \in [t]}\sum_{j \in [m]}\|\fl{J}^{(i, \ell, q^\star)}\|_2^2\|\fl{U}^\star\|_\s{F}^2\frac{\log (1/\delta_0)}{m^2t^2}\Big)}\nonumber\\
    &\leq \max_{p \in [d]} c\|\fl{U}^\star\|_\s{F}\sqrt{\sum_{q^\star \in [r]}\sum_{i \in [t]}\sum_{j \in [m]}\|\fl{J}^{(i, \ell, q^\star)}\|_2^2}\sqrt{\frac{\log (1/\delta_0)}{m^2t^2}}\nonumber\\
    &\leq c\|\fl{U}^\star\|_\s{F}\sqrt{\sum_{i \in [t]}\sum_{j \in [m]}\|\fl{J}^{(i, \ell)}\|_\s{F}^2}\sqrt{\frac{\log (1/\delta_0)}{m^2t^2}}.
\end{align}
Therefore, using \eqref{eq:lemma-optimize_dp-lrs-2,infty-norm-EV1-infintynorm} in the above, we have
\begin{align}
    \|\fl{V}'_1\|_{2, \infty} &\leq \|\E{\fl{V}'_1}\|_{2, \infty} + c\|\fl{U}^\star\|_\s{F}\sqrt{\sum_{i \in [t]}\sum_{j \in [m]}\|\fl{J}^{(i, \ell)}\|_\s{F}^2}\sqrt{\frac{\log (1/\delta_0)}{m^2t^2}}\nonumber\\
    &\leq \|\fl{U}^\star\|_{2, \infty}\|\frac{1}{t}\sum_{i \in [t]}\fl{J}^{(i, \ell)}\|_{2} + c\|\fl{U}^\star\|_\s{F}\sqrt{\sum_{i \in [t]}\sum_{j \in [m]}\|\fl{J}^{(i, \ell)}\|_\s{F}^2}\sqrt{\frac{\log (1/\delta_0)}{m^2t^2}}\nonumber\\
    &\leq \|\fl{U}^\star\|_{2, \infty}\|\frac{1}{t}\sum_{i \in [t]}\Big(\frac{1}{t}(\fl{W}^{(\ell)})^\s{T}\fl{W}^{(\ell)}\Big)^{-1}\fl{w}^{\star(i)}(\fl{w}^{(i, \ell)})^{\s{T}}\|_{2} \nonumber\\
    &\qquad + c\|\fl{U}^\star\|_\s{F}\|\Big(\frac{1}{t}(\fl{W}^{(\ell)})^\s{T}\fl{W}^{(\ell)}\Big)^{-1}\|\|\fl{w}^{\star(i)}(\fl{w}^{(i, \ell)})^{\s{T}}\|_\s{F}\sqrt{\frac{\log (1/\delta_0)}{mt}}.\label{eq:lemma-optimize_dp-lrs-2,infty-norm-V1-2,infinty-norm-value}
\end{align}

\textbf{Analysis for $\s{vec}(\fl{V}'_2)$:}\\
Let $\fl{L}^{(i, \ell)} := \Big(\frac{1}{t}(\fl{W}^{(\ell)})^\s{T}\fl{W}^{(\ell)}\Big)^{-1}$. Then,
\begin{align}
     \s{vec}(\fl{V}'_2) &= \frac{1}{mt}\sum_{i \in [t]}\Big(\fl{L}^{(i, \ell)} \otimes \fl{I}\Big)\s{vec}\Big( (\fl{X}^{(i)})^{\s{T}}\fl{X}^{(i)}(\fl{b}^{\star(i)} - \fl{b}^{(i, \ell)})(\fl{w}^{(i, \ell)})^{\s{T}}\Big) \\
     &= \frac{1}{mt}\sum_{i \in [t]}\Big(\fl{L}^{(i, \ell)} \otimes \fl{I}_{d\times d}\Big)\Big(\fl{I}_{r\times r} \otimes (\fl{X}^{(i)})^{\s{T}}\fl{X}^{(i)}\Big)\s{vec}\Big((\fl{b}^{\star(i)} - \fl{b}^{(i, \ell)})(\fl{w}^{(i, \ell)})^{\s{T}}\Big)\nonumber\\
    &= \frac{1}{mt}\sum_{i \in [t]}\Big(\fl{L}^{(i, \ell)} \otimes (\fl{X}^{(i)})^{\s{T}}\fl{X}^{(i)}\Big)\s{vec}\Big((\fl{b}^{\star(i)} - \fl{b}^{(i, \ell)})(\fl{w}^{(i, \ell)})^{\s{T}}\Big)\\
    \iff \fl{V}'_2 &= \frac{1}{mt}\sum_{i \in [t]}(\fl{X}^{(i)})^{\s{T}}\fl{X}^{(i)}(\fl{b}^{\star(i)} - \fl{b}^{(i, \ell)})(\fl{w}^{(i, \ell)})^{\s{T}}\Big(\fl{L}^{(i, \ell)}\Big)^\s{T},\\
    \implies \E{\s{vec}(\fl{V}'_2)} &= \frac{1}{t}\sum_{i \in [t]}\Big(\fl{L}^{(i, \ell)} \otimes \fl{I}\Big)\s{vec}\Big((\fl{b}^{\star(i)} - \fl{b}^{(i, \ell)})(\fl{w}^{(i, \ell)})^{\s{T}}\Big)\nonumber\\
     \iff \E{\fl{V}'_2} &= \frac{1}{t}\sum_{i \in [t]}\Big((\fl{b}^{\star(i)} - \fl{b}^{(i, \ell)})(\fl{w}^{(i, \ell)})^{\s{T}}\Big)\Big(\fl{L}^{(i, \ell)}\Big)^{\s{T}}\\
     \implies \|\E{\fl{V}'_2}\|_{2, \infty} &= \|\frac{1}{t}\sum_{i \in [t]}\Big((\fl{b}^{\star(i)} - \fl{b}^{(i, \ell)})(\fl{w}^{(i, \ell)})^{\s{T}}\Big)\Big(\fl{L}^{(i, \ell)}\Big)^{\s{T}}\|_{2, \infty}\nonumber\\
     &\leq \frac{1}{t}\sum_{i \in [t]} \max_{p \in [d]}\|\Big((\fl{b}^{\star(i)} - \fl{b}^{(i, \ell)})(\fl{w}^{(i, \ell)})^{\s{T}}\Big)_p\|_2\|\fl{L}^{(i, \ell)}\|_{2}\nonumber\\
     &\leq \frac{1}{t}\sum_{i \in [t]} \max_{p \in [d]}|\fl{b}^{\star(i)}_p - \fl{b}^{(i, \ell)}_p|\|\fl{w}^{(i, \ell)}\|_2\|\fl{L}^{(i, \ell)}\|_{2}\nonumber\\
     &\leq \zeta(\max_i \|\fl{b}^{\star(i)} - \fl{b}^{(i, \ell)}\|_\infty\|\fl{w}^{(i, \ell)}\|_2)\|\Big((\fl{W}^{(\ell)})^\s{T}\fl{W}^{(\ell)}\Big)^{-1}\|_{2}\label{eq:lemma-optimize_dp-lrs-2,infty-norm-EV2-infintynorm}
\end{align}
Further, we have 
\begin{align}
    \fl{e}_s^\s{T}\s{vec}(\fl{V}'_2) &= \fl{e}_s^\s{T}\frac{1}{mt}\sum_{i \in [t]}\Big(\fl{L}^{(i, \ell)} \otimes (\fl{X}^{(i)})^{\s{T}}\fl{X}^{(i)}\Big)\s{vec}\Big((\fl{b}^{\star(i)} - \fl{b}^{(i, \ell)})(\fl{w}^{(i, \ell)})^{\s{T}}\Big)\nonumber\\
    &= \frac{1}{mt}\sum_{i \in [t]}\sum_{j \in [m]}\sum_{p \in [r]}\sum_{q \in [r]} \fl{e}^\s{T}_p\Big(L^{(i, \ell)}_{p, q} \fl{x}^{(i)}_j(\fl{x}^{(i)}_j)^{\s{T}}\Big)\s{vec}\Big((\fl{b}^{\star(i)} - \fl{b}^{(i, \ell)})(\fl{w}^{(i, \ell)})^{\s{T}}\Big)_q\nonumber\\
    &= \frac{1}{mt}\sum_{i \in [t]}\sum_{j \in [m]}(\fl{x}^{(i)}_j)^\s{T}\Big(\sum_{p \in [r]}\sum_{q \in [r]} L^{(i, \ell)}_{p, q} \s{vec}\Big((\fl{b}^{\star(i)} - \fl{b}^{(i, \ell)})(\fl{w}^{(i, \ell)})^{\s{T}}\Big)_p\fl{e}_q^{\s{T}}\Big)\fl{x}^{(i)}_j\label{eq:lemma-optimize_dp-lrs-2,infty-norm-eTvecV2}
\end{align}
and $\|\sum_{p \in [r]}\sum_{q \in [r]} L^{(i, \ell)}_{p, q} \s{vec}\Big((\fl{b}^{\star(i)} - \fl{b}^{(i, \ell)})(\fl{w}^{(i, \ell)})^{\s{T}}\Big)_p\fl{e}_q^{\s{T}}\|_\s{F}$
    \begin{align}
        &= \|\sum_{p \in [r]} L^{(i, \ell)}_{p, q^\star} \s{vec}\Big((\fl{b}^{\star(i)} - \fl{b}^{(i, \ell)})(\fl{w}^{(i, \ell)})^{\s{T}}\Big)_p\fl{e}_{q^\star}\|_\s{F}\nonumber\\
        &= \|\sum_{p \in [r]} L^{(i, \ell)}_{p, q^\star} \s{vec}\Big((\fl{b}^{\star(i)} - \fl{b}^{(i, \ell)})(\fl{w}^{(i, \ell)})^{\s{T}}\Big)_p\|_2\|\fl{e}_{q^\star}^{\s{T}}\|_2\nonumber\\
        &\leq \|\sum_{p \in [r]} |L^{(i, \ell)}_{p, q^\star}| \s{vec}\Big((\fl{b}^{\star(i)} - \fl{b}^{(i, \ell)})(\fl{w}^{(i, \ell)})^{\s{T}}\Big)_p\|_2\nonumber\\
        &\leq \sqrt{\Big(\sum_{p \in [r]}(L^{(i, \ell)}_{p, q^\star})^2\Big)\Big(\sum_{p \in [r]} \|\s{vec}\Big((\fl{b}^{\star(i)} - \fl{b}^{(i, \ell)})(\fl{w}^{(i, \ell)})^{\s{T}}\Big)_p\|_2^2\Big)}\nonumber\\
        &= \|\fl{L}^{(i, \ell, q^\star)}\|_2\|\s{vec}\Big((\fl{b}^{\star(i)} - \fl{b}^{(i, \ell)})(\fl{w}^{(i, \ell)})^{\s{T}}\Big)\|_2\nonumber\\
        &= \|\fl{L}^{(i, \ell, q^\star)}\|_2\|(\fl{b}^{\star(i)} - \fl{b}^{(i, \ell)})(\fl{w}^{(i, \ell)})^{\s{T}}\|_\s{F}\label{eq:lemma-optimize_dp-lrs-2,infty-norm-vecV2-conc}
    \end{align}
Thus, using \eqref{eq:lemma-optimize_dp-lrs-2,infty-norm-eTvecV2} and \eqref{eq:lemma-optimize_dp-lrs-2,infty-norm-vecV2-conc} in Lemma~\ref{lemma:useful0} we have
\begin{align}
        \Big|\fl{e}^\s{T}_s\Big(\s{vec}(\fl{V}'_2) - \E{\s{vec}(\fl{V}'_2)}\Big)\Big| &\leq c\sqrt{\sum_{i \in [t]}\sum_{j \in [m]}\|\fl{L}^{(i, \ell, q^\star)}\|_2^2\|(\fl{b}^{\star(i)} - \fl{b}^{(i, \ell)})(\fl{w}^{(i, \ell)})^{\s{T}}\|_\s{F}^2\frac{\log (1/\delta_0)}{m^2t^2}}
\end{align}
Now, note that $\|\fl{V}'_2 - \E{\fl{V}'_2}\|_{2, \infty}$
\begin{align}
    &= \max_{p \in [d]} \|(\fl{V}'_1)_p - \E{\fl{V}'_1}_p\|_2\nonumber\\
    &= \max_{p \in [d]} \sqrt{\sum_{q \in [r]}|(V_1)_{p, q} - \E{V_1}_{p, q}|_2^2}\nonumber\\
    &= \max_{p \in [d]} \sqrt{\sum_{s = \{(q-1)d+p: q \in [r]\}}|\fl{e}_s^\s{T}(\s{vec}(\fl{V}'_1) - \fl{\E{\s{vec}(\fl{V}'_1}}|^2}\nonumber\\
    &\leq \max_{p \in [d]} \sqrt{\sum_{s = \{(q-1)d+p: q \in [r]\}}\Big(c\sqrt{\sum_{i \in [t]}\sum_{j \in [m]}\|\fl{L}^{(i, \ell, q^\star)}\|_2^2\|(\fl{b}^{\star(i)} - \fl{b}^{(i, \ell)})(\fl{w}^{(i, \ell)})^{\s{T}}\|_\s{F}^2\frac{\log (1/\delta_0)}{m^2t^2}}\Big)^2}\nonumber\\
\end{align}
Now note that as per the notation discussed above, $s = (q-1)d+p$ lies in the $q$-th (= $q^\star$) segment. Since $q \in [r]$, therefore summation over $s$ is equivalent to summation over $q^\star \in [r]$. Using this fact the above becomes, $\|\fl{V}'_2 - \E{\fl{V}'_2}\|_{2, \infty}$
\begin{align}
    &\leq \max_{p \in [d]} \sqrt{\sum_{q^\star \in [r]}\Big(c^2\sum_{i \in [t]}\sum_{j \in [m]}\|\fl{L}^{(i, \ell, q^\star)}\|_2^2\|(\fl{b}^{\star(i)} - \fl{b}^{(i, \ell)})(\fl{w}^{(i, \ell)})^{\s{T}}\|_\s{F}^2\frac{\log (1/\delta_0)}{m^2t^2}\Big)}\nonumber\\
    &\leq \max_{p \in [d]} c\sqrt{\sum_{q^\star \in [r]}\sum_{i \in [t]}\sum_{j \in [m]}\|\fl{L}^{(i, \ell, q^\star)}\|_2^2\|(\fl{b}^{\star(i)} - \fl{b}^{(i, \ell)})(\fl{w}^{(i, \ell)})^{\s{T}}\|_\s{F}^2}\sqrt{\frac{\log (1/\delta_0)}{m^2t^2}}\nonumber\\
    &\leq \max_{p \in [d]} c\sqrt{\sum_{i \in [t]}\sum_{j \in [m]}\|\fl{L}^{(i, \ell)}\|_\s{F}^2\|(\fl{b}^{\star(i)} - \fl{b}^{(i, \ell)})(\fl{w}^{(i, \ell)})^{\s{T}}\|_\s{F}^2}\sqrt{\frac{\log (1/\delta_0)}{m^2t^2}}\nonumber\\
    &= \max_{p \in [d]} c\sqrt{\sum_{i \in [t]}\sum_{j \in [m]}\|\Big(\frac{1}{t}(\fl{W}^{(\ell)})^\s{T}\fl{W}^{(\ell)}\Big)^{-1}\|_\s{F}^2\|(\fl{b}^{\star(i)} - \fl{b}^{(i, \ell)})(\fl{w}^{(i, \ell)})^{\s{T}}\|_\s{F}^2}\sqrt{\frac{\log (1/\delta_0)}{m^2t^2}}\nonumber\\
    &\leq c\|\Big(\frac{1}{t}(\fl{W}^{(\ell)})^\s{T}\fl{W}^{(\ell)}\Big)^{-1}\|_\s{F}\sqrt{\sum_{i \in [t]}\sum_{j \in [m]}\|\fl{b}^{\star(i)} - \fl{b}^{(i, \ell)}\|_2^2\|\fl{w}^{(i, \ell)}\|_\s{2}^2}\sqrt{\frac{\log (1/\delta_0)}{m^2t^2}}\nonumber\\
    &\leq c\|\Big(\frac{1}{t}(\fl{W}^{(\ell)})^\s{T}\fl{W}^{(\ell)}\Big)^{-1}\|_\s{F}\|\fl{b}^{\star(i)} - \fl{b}^{(i, \ell)}\|_2\|\fl{w}^{(i, \ell)}\|_\s{2}\sqrt{\frac{\log (1/\delta_0)}{mt}}.
\end{align}
Therefore, using \eqref{eq:lemma-optimize_dp-lrs-2,infty-norm-EV2-infintynorm} in the above, we have $\|\fl{V}'_2\|_{2, \infty}$
\begin{align}
    &\leq \|\E{\fl{V}'_2}\|_{2, \infty} + c\|\Big(\frac{1}{t}(\fl{W}^{(\ell)})^\s{T}\fl{W}^{(\ell)}\Big)^{-1}\|_\s{F}(\max_i \sqrt{\zeta k}\|\fl{b}^{\star(i)} - \fl{b}^{(i, \ell)}\|_\infty\|\fl{w}^{(i, \ell)}\|_2)\sqrt{\frac{\log (1/\delta_0)}{mt^2}}\nonumber\\
    &\leq \zeta(\max_i \|\fl{b}^{\star(i)} - \fl{b}^{(i, \ell)}\|_\infty\|\fl{w}^{(i, \ell)}\|_2)\|\Big((\fl{W}^{(\ell)})^\s{T}\fl{W}^{(\ell)}\Big)^{-1}\|_{2} \nonumber\\
    &\qquad c\|\Big(\frac{1}{t}(\fl{W}^{(\ell)})^\s{T}\fl{W}^{(\ell)}\Big)^{-1}\|_\s{F}\|\fl{b}^{\star(i)} - \fl{b}^{(i, \ell)}\|_2\|\fl{w}^{(i, \ell)}\|_\s{2}\sqrt{\frac{\log (1/\delta_0)}{mt}}.\label{eq:lemma-optimize_dp-lrs-2,infty-norm-V2-2,infinty-norm-value}
\end{align}
Analysis for $\fl{C}\s{vec}\Big(\frac{\fl{N}_2}{mt}\Big)$:

Note that:
\begin{align}
    \fl{C}\s{vec}\Big(\frac{\fl{N}_2}{mt}\Big) &= \Big(\frac{1}{t}(\fl{W}^{(\ell)})^\s{T}\fl{W}^{(\ell)} \otimes \fl{I}\Big)^{-1}\s{vec}\Big(\frac{\fl{N}_2}{mt}\Big) \nonumber\\
    &= \Big(\Big(\frac{1}{t}(\fl{W}^{(\ell)})^\s{T}\fl{W}^{(\ell)}\Big)^{-1} \otimes \fl{I}\Big)\s{vec}\Big(\frac{\fl{N}_2}{mt}\Big) := \s{vec}(\fl{V}'_3)\nonumber\\
    \implies \fl{V}'_3 &= \fl{I}\cdot \frac{\fl{N}_2}{mt}\cdot \Big(\frac{1}{t}(\fl{W}^{(\ell)})^\s{T}\fl{W}^{(\ell)}\Big)^\s{-T}\nonumber\\
    \implies \|\fl{V}'_3\|_{2, \infty} &= \|\frac{\fl{N}_2}{mt}\cdot \Big(\frac{1}{t}(\fl{W}^{(\ell)})^\s{T}\fl{W}^{(\ell)}\Big)^{-1}\|_{2, \infty}\nonumber\\
    &\leq \frac{1}{mt}\|\fl{N}_2\|_{2, \infty}\lr{\Big(\frac{1}{t}(\fl{W}^{(\ell)})^\s{T}\fl{W}^{(\ell)}\Big)^{-1}}_2\nonumber\\
    &\leq \frac{1}{mt}\|\fl{N}_2\|_{2, \infty}\frac{r}{\lambda_r\Big(\frac{r}{t}(\fl{W}^{(\ell)})^\s{T}\fl{W}^{(\ell)}\Big)}\nonumber\\
    &\leq \frac{2\sigma_2}{mt}\sqrt{\log(rd/\delta_0)}\cdot\frac{r\sqrt{r}}{\lambda_r\Big(\frac{r}{t}(\fl{W}^{(\ell)})^\s{T}\fl{W}^{(\ell)}\Big)}.\label{eq:lemma-optimize_dp-lrs-2,infty-norm-V3-2,infinty-norm-value}
\end{align}
Analysis for $\fl{C}\s{vec}(\fl{\Xi})$:

Note that:
\begin{align}
    \fl{C}\s{vec}(\fl{\Xi}) &= \Big(\frac{1}{t}(\fl{W}^{(\ell)})^\s{T}\fl{W}^{(\ell)} \otimes \fl{I}\Big)^{-1}\s{vec}(\fl{\Xi}) \nonumber\\
    &= \Big(\Big(\frac{1}{t}(\fl{W}^{(\ell)})^\s{T}\fl{W}^{(\ell)}\Big)^{-1} \otimes \fl{I}\Big)\s{vec}(\fl{\Xi}) := \s{vec}(\fl{V}'_\xi)\nonumber\\
    \implies \fl{V}'_\xi &= \fl{I}\cdot\fl{\Xi}\cdot \Big(\frac{1}{t}(\fl{W}^{(\ell)})^\s{T}\fl{W}^{(\ell)}\Big)^\s{-T}\nonumber\\
    \implies \|\fl{V}'_\xi\|_{2, \infty} &= \|\fl{\Xi}\cdot \Big(\frac{1}{t}(\fl{W}^{(\ell)})^\s{T}\fl{W}^{(\ell)}\Big)^{-1}\|_{2, \infty}\nonumber\\
    &\leq \|\fl{\Xi}\|_{2, \infty}\lr{\Big(\frac{1}{t}(\fl{W}^{(\ell)})^\s{T}\fl{W}^{(\ell)}\Big)^{-1}}_2\nonumber\\
    &\leq \frac{2\sigma\sqrt{\mu^{\star}\lambda^{\star}_r}\log(2rdmt/\delta_0)}{\sqrt{mt}}\frac{r}{\lambda_r\Big(\frac{r}{t}(\fl{W}^{(\ell)})^\s{T}\fl{W}^{(\ell)}\Big)}\label{eq:lemma-optimize_dp-lrs-2,infty-norm-Vxi-2,infinty-norm-value}
\end{align}

Combining $\fl{V}'_1$, $\fl{V}'_2$, $\fl{V}'_3$ and $\fl{V}'_\xi$ from \eqref{eq:lemma-optimize_dp-lrs-2,infty-norm-V1-2,infinty-norm-value}, \eqref{eq:lemma-optimize_dp-lrs-2,infty-norm-V2-2,infinty-norm-value}, \eqref{eq:lemma-optimize_dp-lrs-2,infty-norm-V3-2,infinty-norm-value} and \eqref{eq:lemma-optimize_dp-lrs-2,infty-norm-Vxi-2,infinty-norm-value} respectively in \eqref{eq:lemma-optimize_dp-lrs-2,infty-norm-CvecV-unplugged}, we have:
\begin{align}
    \fl{C}\s{vec}(\fl{V}') &= \s{vec}(\fl{V}'_1) + \s{vec}(\fl{V}'_2)\nonumber\\
    \iff \fl{C}\s{vec}\Big(\fl{V}' + \frac{\fl{N}_2}{mt} + \fl{\Xi}\Big) &= \s{vec}(\fl{V}'_1) + \s{vec}(\fl{V}'_2) + \s{vec}(\fl{V}'_3) + \s{vec}(\fl{V}'_\xi)\nonumber
\end{align}
\begin{align}
    &\implies \lr{\s{vec}^{-1}\Big(\fl{C}\s{vec}\Big(\fl{V}' + \frac{\fl{N}_2}{mt} + \fl{\Xi}\Big)\Big)}_{2, \infty} \nonumber\\
    &= \|\fl{V}'_1\|_{2, \infty} + \|\fl{V}'_2\|_{2, \infty} + \|\fl{V}'_3\|_{2, \infty} + \|\fl{V}'_\xi\|_{2, \infty}\nonumber
    \end{align}
    \begin{align}
    &\leq \|\fl{U}^\star\|_{2, \infty}\|\frac{1}{t}\sum_{i \in [t]}\Big(\frac{1}{t}(\fl{W}^{(\ell)})^\s{T}\fl{W}^{(\ell)}\Big)^{-1}\fl{w}^{\star(i)}(\fl{w}^{(i, \ell)})^{\s{T}}\|_{2} \nonumber\\
    &\qquad + c\|\fl{U}^\star\|_\s{F}\|\Big(\frac{1}{t}(\fl{W}^{(\ell)})^\s{T}\fl{W}^{(\ell)}\Big)^{-1}\|\|\fl{w}^{\star(i)}(\fl{w}^{(i, \ell)})^{\s{T}}\|_\s{F}\sqrt{\frac{\log (1/\delta_0)}{mt}}\nonumber\\
    & + \zeta(\max_i \|\fl{b}^{\star(i)} - \fl{b}^{(i, \ell)}\|_\infty\|\fl{w}^{(i, \ell)}\|_2)\|\Big((\fl{W}^{(\ell)})^\s{T}\fl{W}^{(\ell)}\Big)^{-1}\|_{2} \nonumber\\
    &\qquad + c\|\Big(\frac{1}{t}(\fl{W}^{(\ell)})^\s{T}\fl{W}^{(\ell)}\Big)^{-1}\|_\s{F}\|\fl{b}^{\star(i)} - \fl{b}^{(i, \ell)}\|_2\|\fl{w}^{(i, \ell)}\|_\s{2}\sqrt{\frac{\log (1/\delta_0)}{mt}}\nonumber\\
    & + \frac{2\sigma_2}{mt}\sqrt{\log(rd/\delta_0)}\cdot\frac{r\sqrt{r}}{\lambda_r\Big(\frac{r}{t}(\fl{W}^{(\ell)})^\s{T}\fl{W}^{(\ell)}\Big)} + \frac{2\sigma\sqrt{\mu^{\star}\lambda^{\star}_r}\log(2rdmt/\delta_0)}{\sqrt{mt}}\frac{r}{\lambda_r\Big(\frac{r}{t}(\fl{W}^{(\ell)})^\s{T}\fl{W}^{(\ell)}\Big)}\label{eq:lemma-optimize_dp-lrs-2,infty-norm-CvecV-value}
\end{align}
Therefore, using \eqref{eq:lemma-optimize_dp-lrs-2,infty-norm-CvecV-value}, \eqref{eq:lemma-optimize_dp-lrs-u-2norm-bound-type1-num-val} and \eqref{eq:lemma-optimize_dp-lrs-U-Fnorm-bound-N2-Fnorm}  in \eqref{eq:lemma-optimize_dp-lrs-2,infty-norm-AinvV-2,infty-unplugged-2}, we have $ \|\fl{U}^{(\ell)}\|_{2, \infty}$
\begin{align}
   &\leq \|\fl{U}^\star\|_{2, \infty}\|\frac{1}{t}\sum_{i \in [t]}\Big(\frac{1}{t}(\fl{W}^{(\ell)})^\s{T}\fl{W}^{(\ell)}\Big)^{-1}\fl{w}^{\star(i)}(\fl{w}^{(i, \ell)})^{\s{T}}\|_{2} \nonumber\\
    & + c\|\fl{U}^\star\|_\s{F}\|\Big(\frac{1}{t}(\fl{W}^{(\ell)})^\s{T}\fl{W}^{(\ell)}\Big)^{-1}\|\|\fl{w}^{\star(i)}(\fl{w}^{(i, \ell)})^{\s{T}}\|_\s{F}\sqrt{\frac{\log (1/\delta_0)}{mt}}\nonumber\\
    & + \zeta(\max_i \|\fl{b}^{\star(i)} - \fl{b}^{(i, \ell)}\|_\infty\|\fl{w}^{(i, \ell)}\|_2)\|\Big((\fl{W}^{(\ell)})^\s{T}\fl{W}^{(\ell)}\Big)^{-1}\|_{2} \nonumber\\
    & + c\|\Big(\frac{1}{t}(\fl{W}^{(\ell)})^\s{T}\fl{W}^{(\ell)}\Big)^{-1}\|_\s{F}\|\fl{b}^{\star(i)} - \fl{b}^{(i, \ell)}\|_2\|\fl{w}^{(i, \ell)}\|_\s{2}\sqrt{\frac{\log (1/\delta_0)}{mt}}\nonumber\\
    & + \frac{2\sigma_2}{mt}\sqrt{\log(rd/\delta_0)}\cdot\frac{r\sqrt{r}}{\lambda_r\Big(\frac{r}{t}(\fl{W}^{(\ell)})^\s{T}\fl{W}^{(\ell)}\Big)} + \frac{2\sigma\sqrt{\mu^{\star}\lambda^{\star}_r}\log(2rdmt/\delta_0)}{\sqrt{mt}}\frac{r}{\lambda_r\Big(\frac{r}{t}(\fl{W}^{(\ell)})^\s{T}\fl{W}^{(\ell)}\Big)} \nonumber
    \end{align}
    \begin{align}
    & + \frac{\sqrt{r}\Big(c\sqrt{\sum_{i \in [t]}\sum_{j \in [m]}\|\fl{w}^{(i, \ell)}\|_2^4}\sqrt{\frac{rd\log(rd/ \delta_0)}{m^2t^2}} + \frac{\sigma_1}{mt}\Big(2\sqrt{rd} + 2\sqrt{2rd\log(2rd/\delta_0)}\Big)\Big)}{\frac{1}{r}\lambda_r\Big(\frac{r}{t}(\fl{W}^{(\ell)})^\s{T}\fl{W}^{(\ell)} \Big)}\cdot\nonumber\\
    & \Big\{\frac{2}{t}\|\fl{U}^{\star}(\fl{W}^{\star})^\s{T}\fl{W}^{(\ell)}\|_\s{F} + \sqrt{\frac{4\zeta}{t}}(\max_i\|\fl{w}^{(i, \ell)}\|_2)\|\fl{b}^{\star(i)} - \fl{b}^{(i, \ell)}\|_2\nonumber\\ 
    & +4\Big(\|\fl{U}^{\star}\|\|\fl{w}^{\star(i)}\|_2\|\fl{w}^{(i, \ell)}\|_2 + \|\fl{b}^{\star(i)} - \fl{b}^{(i, \ell)}\|_2\|\fl{w}^{(i, \ell)}\|_2\Big)\sqrt{\frac{d\log (rd/\delta_0)}{mt}}\nonumber\\
    & + \frac{\sigma_2}{mt}6\sqrt{rd\log(rd)} + \frac{2\sigma\sqrt{d\mu^{\star}\lambda^{\star}_r}\log(2rdmt/\delta_0)}{\sqrt{mt}}\Big\}.\label{eq:lemma-optimize_dp-lrs-2,infty-val1}
\end{align}

\textbf{Calculation for $\|\fl{U}^{(\ell)} - \fl{U}^{\star}\fl{Q}^{(\ell-1)}\|_{2,\infty}$:}

The analysis will follow along similar lines as in the previous section except that we will now have:
    \begin{align}
        \s{vec}(\fl{U}^{(\ell)} - \fl{U}^{\star}\fl{Q}^{(\ell-1)}) &=  \Big(\underbrace{\fl{A'} + \frac{\fl{N}_1}{mt}}_{\fl{A}}\Big)^{-1}\Big(\s{vec}(\fl{V}) + \Big(\s{vec}\Big(\frac{\fl{N}_2}{mt}\Big) -  \frac{\fl{N}_1}{mt}\s{vec}(\fl{U}^\star\fl{Q}^{(\ell-1)})\Big)\Big) \label{eq:lemma-optimize_dp-lrs-U-2,infty-norm-bound-type2-exp1}
    \end{align}
    where
    \begin{align}
        \fl{A'}_{rd\times rd} &= \frac{1}{mt}\sum_{i \in [t]}\Big(\fl{w}^{(i, \ell)}(\fl{w}^{(i, \ell)})^{\s{T}} \otimes (\fl{X}^{(i)})^{\s{T}}\fl{X}^{(i)}\Big)\nonumber \\
        &= \frac{1}{mt}\sum_{i \in [t]}\Big(\fl{w}^{(i, \ell)}(\fl{w}^{(i, \ell)})^{\s{T}} \otimes \Big(\sum_{j=1}^{m}\fl{x}^{(i)}_j(\fl{x}^{(i)}_j)^{\s{T}}\Big)\Big),\nonumber\\
        \fl{V}_{d\times r} &= \frac{1}{mt}\sum_{i \in [t]} (\fl{X}^{(i)})^{\s{T}}\fl{X}^{(i)}\Big(\fl{U}^{\star}(\fl{w}^{\star(i)} - \fl{Q}^{(\ell-1)}\fl{w}^{(i, \ell)}) + (\fl{b}^{\star(i)} - \fl{b}^{(i, \ell)})\Big)(\fl{w}^{(i, \ell)})^{\s{T}}\nonumber\\
        &= \frac{1}{mt}\sum_{i \in [t]} (\fl{X}^{(i)})^{\s{T}}\fl{X}^{(i)}\Big(-\fl{U}^{\star}\fl{Q}^{(\ell-1)}\fl{h}^{(i, \ell)} + (\fl{b}^{\star(i)} - \fl{b}^{(i, \ell)})\Big)(\fl{w}^{(i, \ell)})^{\s{T}},\nonumber
    \end{align}
    i.e. we have the term $\fl{U}^{\star}\fl{w}^{\star(i)}$ replaced by $-\fl{U}^{\star}\fl{Q}^{(\ell-1)}\fl{h}^{(i, \ell)}$ where $\fl{h}^{(i, \ell)} = \fl{w}^{(i, \ell)} - (\fl{Q}^{(\ell-1)})^{-1}\fl{w}^{\star(i)} $.
    
    Therefore, following along similar lines as for the analysis of $\|\fl{U}^{(\ell)}\|_{2, \infty}$, we have $\|\fl{U}^{(\ell)} - \fl{U}^{\star}\fl{Q}^{(\ell-1)}\|_{2,\infty}$
    \begin{align}
    &\leq \lr{\s{vec}^{-1}\Big(\fl{C}\Big(\s{vec}(\fl{V}) + \Big(\s{vec}\Big(\frac{\fl{N}_2}{mt}\Big) -  \frac{\fl{N}_1}{mt}\s{vec}(\fl{U}^\star\fl{Q}^{(\ell-1)})\Big)\Big)}_{2, \infty} \nonumber\\
    &\qquad + \frac{\sqrt{r}\Big(c\sqrt{\sum_{i \in [t]}\sum_{j \in [m]}\|\fl{w}^{(i, \ell)}\|_2^4}\sqrt{\frac{rd\log(rd/ \delta_0)}{m^2t^2}} + \frac{\sigma_1}{mt}\Big(2\sqrt{rd} + 2\sqrt{2rd\log(2rd/\delta_0)}\Big)\Big)}{\frac{1}{r}\lambda_r\Big(\frac{r}{t}(\fl{W}^{(\ell)})^\s{T}\fl{W}^{(\ell)} \Big)}\cdot\nonumber\\
    &\qquad \lr{\Big(\s{vec}(\fl{V}) + \Big(\s{vec}\Big(\frac{\fl{N}_2}{mt}\Big) -  \frac{\fl{N}_1}{mt}\s{vec}(\fl{U}^\star\fl{Q}^{(\ell-1)})\Big)\Big)}_2\nonumber\\
    &\leq \lr{\s{vec}^{-1}\Big(\fl{C}\Big(\s{vec}(\fl{V}) + \Big(\s{vec}\Big(\frac{\fl{N}_2}{mt}\Big) -  \frac{\fl{N}_1}{mt}\s{vec}(\fl{U}^\star\fl{Q}^{(\ell-1)})\Big)\Big)}_{2, \infty} \nonumber\\
    &\qquad + \frac{\sqrt{r}\Big(c\sqrt{\sum_{i \in [t]}\sum_{j \in [m]}\|\fl{w}^{(i, \ell)}\|_2^4}\sqrt{\frac{rd\log(rd/ \delta_0)}{m^2t^2}} + \frac{\sigma_1}{mt}\Big(2\sqrt{rd} + 2\sqrt{2rd\log(2rd/\delta_0)}\Big)\Big)}{\frac{1}{r}\lambda_r\Big(\frac{r}{t}(\fl{W}^{(\ell)})^\s{T}\fl{W}^{(\ell)} \Big)}\cdot\nonumber\\
    &\qquad \Big(\|\fl{V}\|_\s{F} + \lr{\frac{\fl{N}_2}{mt}}_\s{F} + \lr{\frac{\fl{N}_1}{mt}}_2\|\fl{U}^{\star}\fl{Q}^{(\ell-1)}\|_\s{F}\Big).\label{eq:lemma-optimize_dp-lrs-2,infty-norm-AinvV,infty-unplugged-2-type2}
    \end{align}
    
    Now, note that:
\begin{align}
    \fl{C}\s{vec}(\fl{V}) &= \underbrace{\frac{1}{mt}\sum_{i \in [t]}\Big(\Big(\frac{1}{t}(\fl{W}^{(\ell)})^\s{T}\fl{W}^{(\ell)}\Big)^{-1}\fl{h}^{(i, \ell)}(\fl{w}^{(i, \ell)})^{\s{T}} \otimes (\fl{X}^{(i)})^{\s{T}}\fl{X}^{(i)}\Big)\s{vec}( \fl{U}^{\star}\fl{Q}^{(\ell-1)})}_{\s{vec}(\fl{V}_1)} \nonumber\\
    &\qquad + 
    \underbrace{\frac{1}{mt}\sum_{i \in [t]}\Big(\Big(\frac{1}{t}(\fl{W}^{(\ell)})^\s{T}\fl{W}^{(\ell)}\Big)^{-1} \otimes \fl{I}\Big)\s{vec}\Big( (\fl{X}^{(i)})^{\s{T}}\fl{X}^{(i)}(\fl{b}^{\star(i)} - \fl{b}^{(i, \ell)})(\fl{w}^{(i, \ell)})^{\s{T}}\Big)}_{\s{vec}(\fl{V}_2)}\label{eq:lemma-optimize_dp-lrs-2,infty-norm-CvecV-unplugged-type2}
\end{align}

Let $\fl{J}^{(i, \ell)} := \Big(\frac{1}{t}(\fl{W}^{(\ell)})^\s{T}\fl{W}^{(\ell)}\Big)^{-1}\fl{h}^{(i, \ell)}(\fl{w}^{(i, \ell)})^{\s{T}}$. Then \eqref{eq:lemma-optimize_dp-lrs-2,infty-norm-V1-2,infinty-norm-value} in this case becomes
\begin{align}
    \|\fl{V}_1\|_{2, \infty} &\leq \|\fl{U}^\star\|_{2, \infty}\|\frac{1}{t}\sum_{i \in [t]}\fl{J}^{(i, \ell)}\|_{2} + c\|\fl{U}^\star\|_\s{F}\sqrt{\sum_{i \in [t]}\sum_{j \in [m]}\|\fl{J}^{(i, \ell)}\|_\s{F}^2}\sqrt{\frac{\log (1/\delta_0)}{m^2t^2}}\nonumber\\
    &\leq \|\fl{U}^\star\|_{2, \infty}\|\frac{1}{t}\sum_{i \in [t]}\Big(\frac{1}{t}(\fl{W}^{(\ell)})^\s{T}\fl{W}^{(\ell)}\Big)^{-1}\fl{h}^{(i, \ell)}(\fl{w}^{(i, \ell)})^{\s{T}}\|_{2} \nonumber\\
    &\qquad + c\|\fl{U}^\star\|_\s{F}\|\Big(\frac{1}{t}(\fl{W}^{(\ell)})^\s{T}\fl{W}^{(\ell)}\Big)^{-1}\|\|\fl{h}^{(i, \ell)}(\fl{w}^{(i, \ell)})^{\s{T}}\|_\s{F}\sqrt{\frac{\log (1/\delta_0)}{mt}}\label{eq:lemma-optimize_dp-lrs-2,infty-norm-V1-2,infinty-norm-value-type2}
\end{align}
while \eqref{eq:lemma-optimize_dp-lrs-2,infty-norm-V2-2,infinty-norm-value}, \eqref{eq:lemma-optimize_dp-lrs-2,infty-norm-V3-2,infinty-norm-value} and \eqref{eq:lemma-optimize_dp-lrs-2,infty-norm-Vxi-2,infinty-norm-value} remain the same
\begin{align}
    \|\fl{V}_2\|_{2, \infty} &\leq \zeta(\max_i \|\fl{b}^{\star(i)} - \fl{b}^{(i, \ell)}\|_\infty\|\fl{w}^{(i, \ell)}\|_2)\|\Big((\fl{W}^{(\ell)})^\s{T}\fl{W}^{(\ell)}\Big)^{-1}\|_{2} \nonumber\\
    &\qquad c\|\Big(\frac{1}{t}(\fl{W}^{(\ell)})^\s{T}\fl{W}^{(\ell)}\Big)^{-1}\|_\s{F}\|\fl{b}^{\star(i)} - \fl{b}^{(i, \ell)}\|_2\|\fl{w}^{(i, \ell)}\|_\s{2}\sqrt{\frac{\log (1/\delta_0)}{mt}}.\label{eq:lemma-optimize_dp-lrs-2,infty-norm-V2-2,infinty-norm-value-type2}\\
    \|\fl{V}_3\|_{2, \infty} &\leq \frac{2\sigma_2}{mt}\sqrt{\log(rd/\delta_0)}\cdot\frac{r\sqrt{r}}{\lambda_r\Big(\frac{r}{t}(\fl{W}^{(\ell)})^\s{T}\fl{W}^{(\ell)}\Big)},\label{eq:lemma-optimize_dp-lrs-2,infty-norm-V3-2,infinty-norm-value-type2}\\
    \|\fl{V}_\xi\|_{2, \infty} &\leq \frac{2\sigma\sqrt{\mu^{\star}\lambda^{\star}_r}\log(2rdmt/\delta_0)}{\sqrt{mt}}\frac{r}{\lambda_r\Big(\frac{r}{t}(\fl{W}^{(\ell)})^\s{T}\fl{W}^{(\ell)}\Big)}\label{eq:lemma-optimize_dp-lrs-2,infty-norm-Vxi-2,infinty-norm-value-type2}
\end{align}
We also have the additional term $\fl{V}_4$ s.t.
\begin{align}
    \fl{C}\cdot\frac{\fl{N}_1}{mt}\s{vec}(\fl{U}^\star\fl{Q}^{(\ell-1)}) &= \Big(\frac{1}{t}(\fl{W}^{(\ell)})^\s{T}\fl{W}^{(\ell)} \otimes \fl{I}\Big)^{-1}\frac{\fl{N}_1}{mt}\s{vec}(\fl{U}^\star\fl{Q}^{(\ell-1)}) := \s{vec}(\fl{V}_4)\\
    \iff \fl{V}_4 &= \fl{I}\cdot \s{vec}^{-1}\Big(\frac{\fl{N}_1}{mt}\s{vec}(\fl{U}^\star\fl{Q}^{(\ell-1)})\Big) \cdot \Big(\frac{1}{t}(\fl{W}^{(\ell)})^\s{T}\fl{W}^{(\ell)}\Big)^\s{-T}\nonumber\\
    \implies \|\fl{V}_4\|_{2, \infty} &\leq \lr{\s{vec}^{-1}\Big(\frac{\fl{N}_1}{mt}\s{vec}(\fl{U}^\star\fl{Q}^{(\ell-1)})\Big) \cdot \Big(\frac{1}{t}(\fl{W}^{(\ell)})^\s{T}\fl{W}^{(\ell)}\Big)^\s{-T}}_{2, \infty}\nonumber\\
    &\leq \lr{\s{vec}^{-1}\Big(\frac{\fl{N}_1}{mt}\s{vec}(\fl{U}^\star\fl{Q}^{(\ell-1)})\Big)}_{2, \infty}\lr{\Big(\frac{1}{t}(\fl{W}^{(\ell)})^\s{T}\fl{W}^{(\ell)}\Big)^\s{-T}}_{2}\nonumber\\
    &\leq \lr{\s{vec}^{-1}\Big(\frac{\fl{N}_1}{mt}\s{vec}(\fl{U}^\star\fl{Q}^{(\ell-1)})\Big)}_{2, \infty}\frac{r}{\lambda_r\Big(\frac{r}{t}\fl{W}^{(\ell)})^\s{T}\fl{W}^{(\ell)}\Big)}.\label{eq:lemma-optimize_dp-lrs-2,infty-norm-V4-2,infinty-norm-temp1-type2}
\end{align}
Now, $\fl{N}_1 \sim \sigma_1\cM\cN(\vzero, \fl{I}_{rd\times rd}, \fl{I}_{rd\times rd}) \implies \frac{\fl{N}_1}{mt}\s{vec}(\fl{U}^\star\fl{Q}^{(\ell-1)}) = \frac{1}{mt} \begin{bmatrix} \fl{N}_{1,1}^\s{T}\s{vec}(\fl{U}^\star\fl{Q}^{(\ell-1)})\\ \vdots \\ \fl{N}_{1,j}^\s{T}\s{vec}(\fl{U}^\star\fl{Q}^{(\ell-1)})\\ \vdots \\\fl{N}_{1,rd}^\s{T}\s{vec}(\fl{U}^\star\fl{Q}^{(\ell-1)}) \end{bmatrix}$ where each $\fl{N}_{1,j} \sim \sigma_1\cN(\vzero, \fl{I}_{rd\times rd})$.
Therefore,
\begin{align}
    \lr{\s{vec}^{-1}\Big(\frac{\fl{N}_1}{mt}\s{vec}(\fl{U}^\star\fl{Q}^{(\ell-1)})\Big)}_{2, \infty}^2 = \frac{1}{m^2t^2}\max_{p \in [d]} \sum_{q \in [r]} \Big(\fl{N}_{1, p + q}^\s{T}\s{vec}(\fl{U}^\star\fl{Q}^{(\ell-1)})\Big)^2,\label{eq:lemma-optimize_dp-lrs-2,infty-norm-V4-2,infinty-norm-temp2-type2}
\end{align}
where each $(\fl{N}_{1, p + q})_{e,f}(\s{vec}(\fl{U}^\star\fl{Q}^{(\ell-1)}))_{e,f} \in \cN(0, \sigma_1^2)$  $\forall$ $e,f \in d \times r$. Therefore, using standard gaussian concentration results and taking union bound over $e, f$, we have w.p. atleast $1-\delta_0$,
\begin{align}
    \Big(\fl{N}_{1, p + q}^\s{T}\s{vec}(\fl{U}^\star\fl{Q}^{(\ell-1)})\Big)^2 &= \Big(\sum_e\sum_f (\fl{N}_{1, p + q})_{e,f}(\s{vec}(\fl{U}^\star\fl{Q}^{(\ell-1)}))_{e,f}\Big)^2\nonumber\\
    &\leq rd\sum_e\sum_f |(\fl{N}_{1, p + q})_{e,f}|^2|(\s{vec}(\fl{U}^\star\fl{Q}^{(\ell-1)}))_{e,f}|^2\nonumber\\
    &\leq rd\sum_e\sum_f \sigma_1^2\cdot2\log(2rd/\delta_0)|(\s{vec}(\fl{U}^\star\fl{Q}^{(\ell-1)}))_{e,f}|^2\nonumber\\
    &= rd\sigma_1^2\cdot2\log(2rd/\delta_0)\|\s{vec}(\fl{U}^\star\fl{Q}^{(\ell-1)})\|_2^2.\label{eq:lemma-optimize_dp-lrs-2,infty-norm-V4-2,infinty-norm-temp3-type2}
\end{align}
Using \eqref{eq:lemma-optimize_dp-lrs-2,infty-norm-V4-2,infinty-norm-temp3-type2} in \eqref{eq:lemma-optimize_dp-lrs-2,infty-norm-V4-2,infinty-norm-temp2-type2} and taking a Union Bound $\forall$ $p, q$, we have w.p. $\geq 1- \delta_0$
\begin{align}
    &\lr{\s{vec}^{-1}\Big(\frac{\fl{N}_1}{mt}\s{vec}(\fl{U}^\star\fl{Q}^{(\ell-1)})\Big)}_{2, \infty}^2 \leq \frac{1}{m^2t^2}\max_{p \in [d]} \sum_{q \in [r]} r^2d\sigma_1^2\cdot2\log(2r^2d^2/\delta_0)\|\s{vec}(\fl{U}^\star\fl{Q}^{(\ell-1)})\|_2^2\nonumber\\
    &\implies \lr{\s{vec}^{-1}\Big(\frac{\fl{N}_1}{mt}\s{vec}(\fl{U}^\star\fl{Q}^{(\ell-1)})\Big)}_{2, \infty} \leq \frac{1}{mt}r\sigma_1\sqrt{d\cdot2\log(2r^2d^2/\delta_0)}\|\fl{U}^\star\fl{Q}^{(\ell-1)}\|_\s{F}.\label{eq:lemma-optimize_dp-lrs-2,infty-norm-V4-2,infinty-norm-temp4-type2}
\end{align}
Using \eqref{eq:lemma-optimize_dp-lrs-2,infty-norm-V4-2,infinty-norm-temp4-type2} in \eqref{eq:lemma-optimize_dp-lrs-2,infty-norm-V4-2,infinty-norm-temp1-type2} we get
\begin{align}
    \|\fl{V}_4\|_{2, \infty} &\leq
    \frac{r^2\sigma_1\sqrt{rd\cdot2\log(2r^2d^2/\delta_0)}}{mt\lambda_r\Big(\frac{r}{t}\fl{W}^{(\ell)})^\s{T}\fl{W}^{(\ell)}\Big)}.\label{eq:lemma-optimize_dp-lrs-2,infty-norm-V4-2,infinty-norm-value-type2}
\end{align}

Combining $\fl{V}_1$, $\fl{V}_2$, $\fl{V}_3$, $\fl{V}_4$ and $\fl{V}_\xi$ from \eqref{eq:lemma-optimize_dp-lrs-2,infty-norm-V1-2,infinty-norm-value-type2}, \eqref{eq:lemma-optimize_dp-lrs-2,infty-norm-V2-2,infinty-norm-value-type2}, \eqref{eq:lemma-optimize_dp-lrs-2,infty-norm-V3-2,infinty-norm-value-type2}, \eqref{eq:lemma-optimize_dp-lrs-2,infty-norm-V4-2,infinty-norm-value-type2} and \eqref{eq:lemma-optimize_dp-lrs-2,infty-norm-Vxi-2,infinty-norm-value-type2} respectively in \eqref{eq:lemma-optimize_dp-lrs-2,infty-norm-CvecV-unplugged-type2}, we have $\lr{\s{vec}^{-1}\Big(\fl{C}\Big(\s{vec}(\fl{V}) + \s{vec}\Big(\frac{\fl{N}_2}{mt}\Big) -  \frac{\fl{N}_1}{mt}\s{vec}(\fl{U}^\star\fl{Q}^{(\ell-1)}) + \s{vec}(\fl{\Xi})\Big)}_{2, \infty}$
\begin{align}
    &= \|\fl{V}_1\|_{2, \infty} + \|\fl{V}_2\|_{2, \infty} + \|\fl{V}_3\|_{2, \infty} + \|\fl{V}_4\|_{2, \infty} + \|\fl{V}_\xi\|_{2, \infty}\nonumber\\
    &\leq \|\fl{U}^\star\|_{2, \infty}\|\frac{1}{t}\sum_{i \in [t]}\Big(\frac{1}{t}(\fl{W}^{(\ell)})^\s{T}\fl{W}^{(\ell)}\Big)^{-1}\fl{h}^{(i, \ell)}(\fl{w}^{(i, \ell)})^{\s{T}}\|_{2} \nonumber\\
    &\qquad + c\|\fl{U}^\star\|_\s{F}\|\Big(\frac{1}{t}(\fl{W}^{(\ell)})^\s{T}\fl{W}^{(\ell)}\Big)^{-1}\|\|\fl{h}^{(i, \ell)}(\fl{w}^{(i, \ell)})^{\s{T}}\|_\s{F}\sqrt{\frac{\log (1/\delta_0)}{mt}}\nonumber\\
    &\qquad + \zeta(\max_i \|\fl{b}^{\star(i)} - \fl{b}^{(i, \ell)}\|_\infty\|\fl{w}^{(i, \ell)}\|_2)\|\Big((\fl{W}^{(\ell)})^\s{T}\fl{W}^{(\ell)}\Big)^{-1}\|_{2} \nonumber\\
    &\qquad + c\|\Big(\frac{1}{t}(\fl{W}^{(\ell)})^\s{T}\fl{W}^{(\ell)}\Big)^{-1}\|_\s{F}\|\fl{b}^{\star(i)} - \fl{b}^{(i, \ell)}\|_2\|\fl{w}^{(i, \ell)}\|_\s{2}\sqrt{\frac{\log (1/\delta_0)}{mt}}\nonumber\\
    &\qquad + \frac{2\sigma_2}{mt}\sqrt{\log(rd/\delta_0)}\cdot\frac{r\sqrt{r}}{\lambda_r\Big(\frac{r}{t}(\fl{W}^{(\ell)})^\s{T}\fl{W}^{(\ell)}\Big)} + \frac{r^2\sigma_1\sqrt{rd\cdot2\log(2r^2d^2/\delta_0)}}{mt\lambda_r\Big(\frac{r}{t}\fl{W}^{(\ell)})^\s{T}\fl{W}^{(\ell)}\Big)}\nonumber\\
    &\qquad + \frac{2\sigma\sqrt{\mu^{\star}\lambda^{\star}_r}\log(2rdmt/\delta_0)}{\sqrt{mt}}\frac{r}{\lambda_r\Big(\frac{r}{t}(\fl{W}^{(\ell)})^\s{T}\fl{W}^{(\ell)}\Big)}.\label{eq:lemma-optimize_dp-lrs-2,infty-norm-CvecV-value-type2}
\end{align}
Therefore, using \eqref{eq:lemma-optimize_dp-lrs-2,infty-norm-CvecV-value-type2}, \eqref{eq:lemma-optimize_dp-lrs-u-2norm-bound-type1-num-val}, \eqref{eq:lemma-optimize_dp-lrs-U-Fnorm-bound-N2-Fnorm}  and \eqref{lemma-optimize_dp-lrs-U-Fnorm-bound-Xi-F-norm} in \eqref{eq:lemma-optimize_dp-lrs-2,infty-norm-AinvV,infty-unplugged-2-type2}, we have $\|\fl{U}^{(\ell)} - \fl{U}^{\star}\fl{Q}^{(\ell-1)}\|_{2,\infty}$
\begin{align}
    &\leq \|\fl{U}^\star\|_{2, \infty}\|\frac{1}{t}\sum_{i \in [t]}\Big(\frac{1}{t}(\fl{W}^{(\ell)})^\s{T}\fl{W}^{(\ell)}\Big)^{-1}\fl{h}^{(i, \ell)}(\fl{w}^{(i, \ell)})^{\s{T}}\|_{2} \nonumber\\
    &\qquad + c\|\fl{U}^\star\|_\s{F}\|\Big(\frac{1}{t}(\fl{W}^{(\ell)})^\s{T}\fl{W}^{(\ell)}\Big)^{-1}\|\|\fl{h}^{(i, \ell)}(\fl{w}^{(i, \ell)})^{\s{T}}\|_\s{F}\sqrt{\frac{\log (1/\delta_0)}{mt}}\nonumber\\
    &\qquad + \zeta(\max_i \|\fl{b}^{\star(i)} - \fl{b}^{(i, \ell)}\|_\infty\|\fl{w}^{(i, \ell)}\|_2)\|\Big((\fl{W}^{(\ell)})^\s{T}\fl{W}^{(\ell)}\Big)^{-1}\|_{2} \nonumber\\
    &\qquad + c\|\Big(\frac{1}{t}(\fl{W}^{(\ell)})^\s{T}\fl{W}^{(\ell)}\Big)^{-1}\|_\s{F}\|\fl{b}^{\star(i)} - \fl{b}^{(i, \ell)}\|_2\|\fl{w}^{(i, \ell)}\|_\s{2}\sqrt{\frac{\log (1/\delta_0)}{mt}}\nonumber\\
    &\qquad + \frac{2\sigma_2}{mt}\sqrt{\log(rd/\delta_0)}\cdot\frac{r\sqrt{r}}{\lambda_r\Big(\frac{r}{t}(\fl{W}^{(\ell)})^\s{T}\fl{W}^{(\ell)}\Big)} + \frac{r^2\sigma_1\sqrt{rd\cdot2\log(2r^2d^2/\delta_0)}}{mt\lambda_r\Big(\frac{r}{t}\fl{W}^{(\ell)})^\s{T}\fl{W}^{(\ell)}\Big)} \nonumber\\
    &\qquad + \frac{2\sigma\sqrt{\mu^{\star}\lambda^{\star}_r}\log(2rdmt/\delta_0)}{\sqrt{mt}}\frac{r}{\lambda_r\Big(\frac{r}{t}(\fl{W}^{(\ell)})^\s{T}\fl{W}^{(\ell)}\Big)}\nonumber\\
    &\qquad + \frac{\sqrt{r}\Big(c\sqrt{\sum_{i \in [t]}\sum_{j \in [m]}\|\fl{w}^{(i, \ell)}\|_2^4}\sqrt{\frac{rd\log(rd/ \delta_0)}{m^2t^2}} + \frac{\sigma_1}{mt}\Big(2\sqrt{rd} + 2\sqrt{2rd\log(2rd/\delta_0)}\Big)\Big)}{\frac{1}{r}\lambda_r\Big(\frac{r}{t}(\fl{W}^{(\ell)})^\s{T}\fl{W}^{(\ell)} \Big)}\cdot\nonumber\\
    &\qquad \Big\{\frac{2}{t}\|\fl{U}^{\star}\fl{Q}^{(\ell-1)}(\fl{H}^{(\ell)})^\s{T}\fl{W}^{(\ell)}\|_\s{F} + \sqrt{\frac{4\zeta}{t}}(\max_i\|\fl{w}^{(i, \ell)}\|_2)\|\fl{b}^{\star(i)} - \fl{b}^{(i, \ell)}\|_2\nonumber\\ 
    &\qquad +4\Big(\|\fl{U}^{\star}\fl{Q}^{(\ell-1)}\|\|\fl{h}^{(i, \ell)}\|_2\|\fl{w}^{(i, \ell)}\|_2 + \|\fl{b}^{\star(i)} - \fl{b}^{(i, \ell)}\|_2\|\fl{w}^{(i, \ell)}\|_2\Big)\sqrt{\frac{d\log (rd/\delta_0)}{mt}}\nonumber\\
    &\qquad + \frac{\sigma_2}{mt}6\sqrt{rd\log(rd)} + \frac{\sigma_1}{mt}\Big(2\sqrt{rd} + 4\sqrt{\log rd}\Big)\sqrt{r} + \frac{2\sigma\sqrt{d\mu^{\star}\lambda^{\star}_r}\log(2rdmt/\delta_0)}{\sqrt{mt}}\Big\}.\label{eq:lemma-optimize_dp-lrs-2,infty-val2}
\end{align}
\eqref{eq:lemma-optimize_dp-lrs-2,infty-val1} and \eqref{eq:lemma-optimize_dp-lrs-2,infty-val2} give us the required result.
\end{proof}

\begin{coro}\label{inductive-corollary:optimize_dp-lrs-U-infty-norm-bounds}
If $\s{B}_{\fl{U}^{(\ell-1)}} =   \cO\Big(\frac{1}{\sqrt{r\mu^\star}}\}\Big)$, $\sqrt{\frac{r\log(1/ \delta_0)}{m}} = \cO(1)$, $\sqrt{\nu^{(\ell-1)}} = \cO\Big(\frac{1}{\sqrt{r\mu^\star}}$, $\sqrt{\frac{r^2\log(r/\delta_0)}{m}} = \cO\Big(\frac{1}{\sqrt{\mu^\star}}\Big)$, $\epsilon < \sqrt{\mu^\star\lambda_r^\star}$, $\sqrt{\frac{r^2\zeta}{t}} = \min\{\cO\Big(\frac{1}{\sqrt{\mu^\star}}\Big), \cO\Big(\frac{1}{\mu^\star}\Big)\}$, $\Lambda' = \cO\Big(\sqrt{\frac{\lambda_r^\star}{\lambda_1^\star}}\Big)$, $\Lambda = \cO(\sqrt{\lambda_r^\star})$, $\sigma\sqrt{\frac{r^2\log^2 (r\delta^{-1})}{m}} = \cO(\sqrt{\lambda_r^\star})$, $\sqrt{\frac{r^3d\log(1/ \delta_0)}{mt}} = \min\{\cO\Big(\frac{1}{\mu^\star}\Big), \cO\Big(\frac{1}{\mu^\star\lambda_r^\star}\Big), \cO\Big(\frac{1}{\sqrt{r}(\mu^\star)^2\sqrt{\mu^\star}\lambda_r^\star\sqrt{k}}\Big)\}$,  $\frac{\sigma_1}{mt}\Big(2\sqrt{rd} + 4\sqrt{\log rd}\Big) = \min\{\cO(\lambda_r^\star), \cO\Big(\frac{1}{r}\Big), \cO\Big(\frac{1}{r\sqrt{r}\mu^\star\sqrt{\mu^\star}\sqrt{k}}\Big)\}$, $\sqrt{\nu^\star} = \cO\Big(\frac{1}{\sqrt{\mu^\star}}\frac{\lambda_r^\star}{\lambda_1^\star}\Big)$, $\Lambda = \cO(\sqrt{\lambda_r^{\star}})$, $mt=\Omega(\sigma_2 r(\mu^{\star})^{1/2}\sqrt{rd\log d}/\lambda_r^{\star})$ and $mt=\widetilde{\Omega}( \sigma^2dr^3\mu^{\star}(1+1/\lambda^{\star}_r))$, $t=\widetilde{\Omega}(\zeta(\mu^{\star}\lambda^{\star}_r)\max(1,\lambda^{\star}_r/r))$, $m=\widetilde{\Omega}(\sigma^2r^3/\lambda^{\star}_r)$. $\sqrt{\frac{k}{d}} = \cO(1)$ and Assumption \ref{assum:init} holds for iteration $\ell-1$, then, w.p. $1-\cO(\delta_0)$,


   \begin{align}
        \fl{U}^{(\ell)}_{2, \infty} &= \resizebox{.9\textwidth}{!}{$ \cO\Big(\frac{1}{\sqrt{k\mu^\star}}\Big) + \frac{1}{\sqrt{k}}\Big\{\cO\Big(\frac{\Lambda}{\sqrt{\mu^\star\lambda_r^\star}}\Big) + \cO\Big(\frac{\sigma_2r}{mt\lambda_r^\star}\sqrt{rd\log(rd)}\Big) +  \cO\Big(\sigma\sqrt{\frac{r^3d\log^2 (2rdmt/\delta_0)}{mt\lambda_r^\star}}\Big\}$}\nonumber
    \end{align}
    and $\|\fl{U}^{(\ell)} - \fl{U}^{\star}\fl{Q}^{(\ell-1)}\|_{2,\infty}$
    \begin{align}
         &= 
        \cO\Big(\s{B}_{\fl{U^{(\ell-1)}}}\sqrt{\frac{\lambda_r^\star}{\lambda_1^\star k}}\Big) + \frac{1}{\sqrt{k}}\Big\{\cO(\Lambda' + \cO\Big(\frac{\Lambda}{\sqrt{\mu^\star\lambda_r^\star}}\Big)+ \cO\Big(\frac{\sigma_2r}{mt\lambda_r^\star}\sqrt{rd\log(rd)}\Big)\nonumber\\
        &\qquad + \cO\Big(\frac{\sigma_1r^{3/2}}{mt\lambda_r^\star}\sqrt{rd\log rd}\Big) + \cO\Big(\sigma\sqrt{\frac{r^3d\mu^\star\log^2 (r\delta^{-1})}{mt\lambda_r^\star}}\Big)\Big\}.\nonumber
   \end{align}
\end{coro}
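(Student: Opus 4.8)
The plan is to obtain the corollary from Lemma~\ref{lemma:optimize_dp-lrs-2,infty-norm} purely by substitution: that lemma already carries all the probabilistic content and expresses $\lr{\fl{U}^{(\ell)}}_{2,\infty}$ and $\lr{\fl{U}^{(\ell)}-\fl{U}^{\star}\fl{Q}^{(\ell-1)}}_{2,\infty}$ as explicit sums of terms, so the corollary is just the collapse of those sums under the stated scalar hypotheses together with the structural estimates already established. First I would assemble the ingredients to be reused: from Corollary~\ref{inductive-inductive-corollary:optimize_dp-lrs-w-incoherence}, $\lambda_r(\tfrac{r}{t}(\fl{W}^{(\ell)})^{\s{T}}\fl{W}^{(\ell)})\asymp\lambda_r^{\star}$, $\lr{\fl{W}^{(\ell)}}\lesssim\sqrt{\tfrac{t}{r}\lambda_1^{\star}}$, $\max_i\lr{\fl{w}^{(i,\ell)}}_2\lesssim\sqrt{\mu^{\star}\lambda_r^{\star}}$, $\mu^{(\ell)}\lesssim\mu^{\star}$ (hence $\lr{(\tfrac{1}{t}(\fl{W}^{(\ell)})^{\s{T}}\fl{W}^{(\ell)})^{-1}}\lesssim r/\lambda_r^{\star}$); from Corollary~\ref{inductive-corollary:optimize_dp-lrs-h,H-bounds}, the bounds on $\lr{\fl{h}^{(i,\ell)}}_2$ and $\lr{\fl{H}^{(\ell)}}_{\s{F}}$; from Assumption~\ref{assum:init}, the $\ell_2$ and $\ell_\infty$ bounds on $\fl{b}^{(i,\ell)}-\fl{b}^{\star(i)}$ and $\tfrac12\le\sigma_{\min}(\fl{Q}^{(\ell-1)})\le\lr{\fl{Q}^{(\ell-1)}}\le 1$; and Assumption~\ref{assum:incoherence_u} giving $\lr{\fl{U}^{\star}}_{2,\infty}\le\sqrt{\nu^{\star}/k}$ alongside $\lr{\fl{U}^{\star}}_{\s{F}}=\sqrt{r}$.

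Then I would proceed term by term. For $\lr{\fl{U}^{(\ell)}}_{2,\infty}$ the dominant term is $\lr{\fl{U}^{\star}}_{2,\infty}\lr{\tfrac1t\sum_i(\tfrac1t(\fl{W}^{(\ell)})^{\s{T}}\fl{W}^{(\ell)})^{-1}\fl{w}^{\star(i)}(\fl{w}^{(i,\ell)})^{\s{T}}}_2$; factoring out the Gram inverse and writing $\tfrac1t\sum_i\fl{w}^{\star(i)}(\fl{w}^{(i,\ell)})^{\s{T}}=\tfrac1t(\fl{W}^{\star})^{\s{T}}\fl{W}^{(\ell)}$ with $\lr{\fl{W}^{\star}},\lr{\fl{W}^{(\ell)}}\lesssim\sqrt{\tfrac tr\lambda_1^{\star}}$ bounds it by $\lr{\fl{U}^{\star}}_{2,\infty}\cdot\cO(\lambda_1^{\star}/\lambda_r^{\star})=\cO(k^{-1/2}\sqrt{\nu^{\star}}\,\lambda_1^{\star}/\lambda_r^{\star})$, which is $\cO(1/\sqrt{k\mu^{\star}})$ precisely because the hypothesis forces $\sqrt{\nu^{\star}}=\cO(\tfrac1{\sqrt{\mu^{\star}}}\tfrac{\lambda_r^{\star}}{\lambda_1^{\star}})$. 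For $\lr{\fl{U}^{(\ell)}-\fl{U}^{\star}\fl{Q}^{(\ell-1)}}_{2,\infty}$ the analogous leading term has $\fl{h}^{(i,\ell)}$ in place of $\fl{w}^{\star(i)}$, so I bound $\tfrac1t\sum_i\fl{h}^{(i,\ell)}(\fl{w}^{(i,\ell)})^{\s{T}}=\tfrac1t(\fl{H}^{(\ell)})^{\s{T}}\fl{W}^{(\ell)}$ using $\lr{\fl{H}^{(\ell)}}_{\s{F}}$ from Corollary~\ref{inductive-corollary:optimize_dp-lrs-h,H-bounds}, obtaining $\lr{\fl{U}^{\star}}_{2,\infty}\cdot\cO(\s{B}_{\fl{U}^{(\ell-1)}}/\sqrt{r\mu^{\star}})$, and $\sqrt{\nu^{\star}/k}\cdot\s{B}_{\fl{U}^{(\ell-1)}}/\sqrt{r\mu^{\star}}=\cO(\s{B}_{\fl{U}^{(\ell-1)}}\sqrt{\lambda_r^{\star}/(\lambda_1^{\star}k)})$ since $\sqrt{\lambda_r^{\star}/\lambda_1^{\star}}\le\mu^{\star}\sqrt{r}$. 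The remaining summands split into (i) the $\lr{\fl{U}^{\star}}_{\s{F}}=\sqrt{r}$ concentration terms carrying $\sqrt{\log/(mt)}$, (ii) the $\zeta$-weighted and $\sqrt{\zeta/t}$-weighted terms coming from $\lr{\fl{b}^{\star(i)}-\fl{b}^{(i,\ell)}}_\infty$ and $\lr{\fl{b}^{\star(i)}-\fl{b}^{(i,\ell)}}_2$, and (iii) the DP-noise terms in $\sigma_1,\sigma_2,\sigma$; for each I would invoke the matching hypothesis — $\sqrt{r^3d\log/(mt)}$ small, $\tfrac{\sigma_1}{mt}(2\sqrt{rd}+4\sqrt{\log rd})$ small, $\sqrt{r^2\zeta/t}$ small, $t=\widetilde{\Omega}(\zeta\mu^{\star}\lambda_r^{\star}\max(1,\lambda_r^{\star}/r))$, and $mt$, $m$ large — and use $k\le d$ (i.e. $\sqrt{k/d}=\cO(1)$, hence also $k\le rd$) to convert the $\sqrt r$-type and $\sigma$-type terms, which carry no bare $1/\sqrt k$, into the claimed $k^{-1/2}(\cdot)$ form by trading $\sqrt k\le\sqrt d$.

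The hard part will be exactly this bookkeeping: each of the two bounds in Lemma~\ref{lemma:optimize_dp-lrs-2,infty-norm} unfolds into roughly a dozen summands, and for every one I must identify which of the many scalar conditions kills it and check that, after plugging in the $\fl{W}^{(\ell)}$-incoherence, $\fl{h}/\fl{H}$, $\fl{b}$- and $\fl{Q}$-bounds, it is dominated by one of the few displayed error terms while preserving the $1/\sqrt k$ normalization — the delicate cases being those that only acquire $1/\sqrt k$ after invoking $k\le d$ or $k\le rd$. No fresh probabilistic step is needed, since all concentration already lives inside Lemma~\ref{lemma:optimize_dp-lrs-2,infty-norm}, so the event holds with the same union-bound probability $1-\cO(\delta_0)$ inherited from that lemma and from Corollaries~\ref{inductive-corollary:optimize_dp-lrs-h,H-bounds}--\ref{inductive-corollary:optimize_dp-lrs-R-bounds}. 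This is the identical plug-and-collapse pattern already carried out for those corollaries.
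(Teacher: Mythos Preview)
Your proposal is correct and follows essentially the same approach as the paper: start from Lemma~\ref{lemma:optimize_dp-lrs-2,infty-norm}, substitute the $\fl{W}^{(\ell)}$-incoherence and eigenvalue bounds from Corollary~\ref{inductive-inductive-corollary:optimize_dp-lrs-w-incoherence}, the $\fl{h}/\fl{H}$ bounds from Corollary~\ref{inductive-corollary:optimize_dp-lrs-h,H-bounds}, the $\fl{b}$- and $\fl{Q}$-bounds from Assumption~\ref{assum:init}, then collapse each summand via the scalar hypotheses and $\sqrt{k/d}=\cO(1)$. The paper's only organizational difference is that it groups the summands into a noiseless part $J_1$ (resp.\ $J_1'$) and a noise part $J_2$ (resp.\ $J_2'$) before simplifying, whereas you isolate the leading $\lr{\fl{U}^{\star}}_{2,\infty}$ term first and then handle the remaining categories; both routes are the same ``plug-and-collapse'' bookkeeping with no new probabilistic input.
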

\begin{proof}
    From the Lemma statement we have,
    $ \lr{\fl{U}^{(\ell)}}_{2, \infty} $
    \begin{align}
       &\leq \|\fl{U}^\star\|_{2, \infty}\|\frac{1}{t}\sum_{i \in [t]}\Big(\frac{1}{t}(\fl{W}^{(\ell)})^\s{T}\fl{W}^{(\ell)}\Big)^{-1}\fl{w}^{\star(i)}(\fl{w}^{(i, \ell)})^{\s{T}}\|_{2} \nonumber\\
        & + c\|\fl{U}^\star\|_\s{F}\|\Big(\frac{1}{t}(\fl{W}^{(\ell)})^\s{T}\fl{W}^{(\ell)}\Big)^{-1}\|\|\fl{w}^{\star(i)}(\fl{w}^{(i, \ell)})^{\s{T}}\|_\s{F}\sqrt{\frac{\log (1/\delta_0)}{mt}}\nonumber\\
        & + \zeta(\max_i \|\fl{b}^{\star(i)} - \fl{b}^{(i, \ell)}\|_\infty\|\fl{w}^{(i, \ell)}\|_2)\|\Big((\fl{W}^{(\ell)})^\s{T}\fl{W}^{(\ell)}\Big)^{-1}\|_{2} \nonumber\\
        & + c\|\Big(\frac{1}{t}(\fl{W}^{(\ell)})^\s{T}\fl{W}^{(\ell)}\Big)^{-1}\|_\s{F}\|\fl{b}^{\star(i)} - \fl{b}^{(i, \ell)}\|_2\|\fl{w}^{(i, \ell)}\|_\s{2}\sqrt{\frac{\log (1/\delta_0)}{mt}}\nonumber\\
        & + \frac{2\sigma_2}{mt}\sqrt{\log(rd/\delta_0)}\cdot\frac{r\sqrt{r}}{\lambda_r\Big(\frac{r}{t}(\fl{W}^{(\ell)})^\s{T}\fl{W}^{(\ell)}\Big)} + \frac{2\sigma\sqrt{\mu^{\star}\lambda^{\star}_r}\log(2rdmt/\delta_0)}{\sqrt{mt}}\frac{r}{\lambda_r\Big(\frac{r}{t}(\fl{W}^{(\ell)})^\s{T}\fl{W}^{(\ell)}\Big)}\nonumber\\
        & + \frac{\sqrt{r}\Big(c\sqrt{\sum_{i \in [t]}\sum_{j \in [m]}\|\fl{w}^{(i, \ell)}\|_2^4}\sqrt{\frac{rd\log(rd/ \delta_0)}{m^2t^2}} + \frac{\sigma_1}{mt}\Big(2\sqrt{rd} + 2\sqrt{2rd\log(2rd/\delta_0)}\Big)\Big)}{\frac{1}{r}\lambda_r\Big(\frac{r}{t}(\fl{W}^{(\ell)})^\s{T}\fl{W}^{(\ell)} \Big)}\cdot\nonumber\\
        & \Big\{\frac{2}{t}\|\fl{U}^{\star}(\fl{W}^{\star})^\s{T}\fl{W}^{(\ell)}\|_\s{F} + \sqrt{\frac{4\zeta}{t}}(\max_i\|\fl{w}^{(i, \ell)}\|_2)\|\fl{b}^{\star(i)} - \fl{b}^{(i, \ell)}\|_2\nonumber\\ 
        & +4\Big(\|\fl{U}^{\star}\|\|\fl{w}^{\star(i)}\|_2\|\fl{w}^{(i, \ell)}\|_2 + \|\fl{b}^{\star(i)} - \fl{b}^{(i, \ell)}\|_2\|\fl{w}^{(i, \ell)}\|_2\Big)\sqrt{\frac{d\log (rd/\delta_0)}{mt}}\nonumber\\
        & + \frac{\sigma_2}{mt}6\sqrt{rd\log(rd)} + \frac{2\sigma\sqrt{d\mu^{\star}\lambda^{\star}_r}\log(2rdmt/\delta_0)}{\sqrt{mt}}\Big\}
    \end{align}
    As done in the analysis for Corollary~\ref{inductive-corollary:optimize_dp-lrs-U-F-norm-bounds}, using Assumption~\ref{assum:init} to plug in values for $\fl{b}^{(i, \ell)}$ and $\fl{Q}^{(\ell-1)}$, the fact that $\fl{U}^\star$ is orthonormal and norm and incoherence bounds for $\fl{H}^{(\ell)}$ and $\fl{W}^{(\ell)}$ from Corollaries \ref{inductive-corollary:optimize_dp-lrs-h,H-bounds} and \ref{inductive-inductive-corollary:optimize_dp-lrs-w-incoherence}, the above becomes $\fl{U}^{(\ell)}_{2, \infty}$
    \begin{align}
        &\leq \resizebox{.97\textwidth}{!}{$\sqrt{\frac{\nu^\star}{k}}\frac{\sqrt{\lambda_1\lambda_1^\star}}{\lambda_r} + c\frac{r\sqrt{r}}{\lambda_r}\sqrt{\mu\lambda_r}\sqrt{\mu^\star\lambda_r^\star}\sqrt{\frac{\log (1/\delta_0)}{mt}} + \zeta\Big(c'\sqrt{\mu^\star\lambda_r^\star}\s{B}_{\fl{U^{(\ell-1)}}}\sqrt{\frac{\lambda_r^\star}{\lambda_1^\star k}} + \frac{\Lambda}{\sqrt{k}}\Big)\frac{r\sqrt{\mu\lambda_r}}{t\lambda_r}$} \nonumber\\
        & + c\frac{r\sqrt{r}}{\lambda_r}\Big(c'\sqrt{\mu^\star\lambda_r^\star}\s{B}_{\fl{U^{(\ell-1)}}}\sqrt{\frac{\lambda_r^\star}{\lambda_1^\star}} + \Lambda\Big)\sqrt{\mu\lambda_r}\sqrt{\frac{\log (1/\delta_0)}{mt}}\nonumber\\
        & + \frac{2\sigma_2}{mt}\sqrt{\log(rd/\delta_0)}\cdot\frac{r\sqrt{r}}{\lambda_r} + \frac{2\sigma\sqrt{\mu^{\star}\lambda^{\star}_r}\log(2rdmt/\delta_0)}{\sqrt{mt}}\frac{r}{\lambda_r}\nonumber\\
        & + \sqrt{r}\Big(c\mu\sqrt{\frac{r^3d\log(rd/ \delta_0)}{mt}} + \frac{\sigma_1r}{mt\lambda_r}\Big(2\sqrt{rd} + 2\sqrt{2rd\log(2rd/\delta_0)}\Big)\Big)\cdot\nonumber\\
        & \Big\{2\sqrt{\mu^\star\lambda_r^\star}\sqrt{\mu\lambda_r} + \sqrt{\frac{4\zeta}{t}}\sqrt{\mu\lambda_r}\Big(c'\sqrt{\mu^\star\lambda_r^\star}\s{B}_{\fl{U^{(\ell-1)}}}\sqrt{\frac{\lambda_r^\star}{\lambda_1^\star}} + \Lambda\Big)\nonumber\\ 
        & +4\sqrt{\mu\lambda_r}\Big(\sqrt{\mu^\star\lambda_r^\star} + \Big(c'\sqrt{\mu^\star\lambda_r^\star}\s{B}_{\fl{U^{(\ell-1)}}}\sqrt{\frac{\lambda_r^\star}{\lambda_1^\star}} + \Lambda\Big)\Big)\sqrt{\frac{d\log (rd/\delta_0)}{mt}}\nonumber\\
        & + \frac{\sigma_2}{mt}6\sqrt{rd\log(rd)} + \frac{2\sigma\sqrt{d\mu^{\star}\lambda^{\star}_r}\log(2rdmt/\delta_0)}{\sqrt{mt}}\Big\}\\
        &= J_1 + J_2 \label{eq:inductive-corollary-optimize_dp-lrs-U-infty-norm-bounds-unplugged1}
    \end{align}
    where $J_1$ denotes the terms which arise from analysing the problem in the noiseless setting and $J_2$ denotes the contribution of noise terms ($\sigma_1, \sigma_2, \sigma, \Lambda, \Lambda'$). Now, $J_1$
    \begin{align}
    &= \sqrt{\frac{\nu^\star}{k}}\frac{\sqrt{\lambda_1\lambda_1^\star}}{\lambda_r} + c\sqrt{r}\frac{r}{\lambda_r}\sqrt{\mu\lambda_r}\sqrt{\mu^\star\lambda_r^\star}\sqrt{\frac{\log (1/\delta_0)}{mt}} + \zeta\cdot c'\sqrt{\mu^\star\lambda_r^\star}\s{B}_{\fl{U^{(\ell-1)}}}\sqrt{\frac{\lambda_r^\star}{\lambda_1^\star k}} \cdot \sqrt{\mu\lambda_r}\frac{r}{t\lambda_r} \nonumber\\
    &\qquad + c\frac{r\sqrt{r}}{\lambda_r}\cdot c'\sqrt{\mu^\star\lambda_r^\star}\s{B}_{\fl{U^{(\ell-1)}}}\sqrt{\frac{\lambda_r^\star}{\lambda_1^\star}} \cdot \sqrt{\mu\lambda_r}\sqrt{\frac{\log (1/\delta_0)}{mt}}\nonumber\\
    &\qquad + \sqrt{r}\Big(c\mu\sqrt{\frac{r^3d\log(rd/ \delta_0)}{mt}} + \frac{\sigma_1r}{mt\lambda_r}\Big(2\sqrt{rd} + 2\sqrt{2rd\log(2rd/\delta_0)}\Big)\Big)\cdot\nonumber\\
    &\qquad \Big\{2\sqrt{\mu^\star\lambda_r^\star}\sqrt{\mu\lambda_r} + \sqrt{\frac{4\zeta}{t}}\sqrt{\mu\lambda_r}\cdot c'\sqrt{\mu^\star\lambda_r^\star}\s{B}_{\fl{U^{(\ell-1)}}}\sqrt{\frac{\lambda_r^\star}{\lambda_1^\star}}\nonumber\\ 
    &\qquad +4\sqrt{\mu\lambda_r}\Big(\sqrt{\mu^\star\lambda_r^\star} + c'\sqrt{\mu^\star\lambda_r^\star}\s{B}_{\fl{U^{(\ell-1)}}}\sqrt{\frac{\lambda_r^\star}{\lambda_1^\star}}\Big)\sqrt{\frac{d\log (rd/\delta_0)}{mt}}\}.
    \end{align}
     Using $\sqrt{\nu^\star} = \cO\Big(\frac{\lambda_r^\star}{\lambda_1^\star\sqrt{\mu^\star}}\Big)$ and rearranging the terms in the above we get $J_1$
    \begin{align}
    &= \frac{1}{\sqrt{k\mu^\star}}\Big\{\cO\Big(\frac{\lambda_r^\star}{\lambda_r}\sqrt{\frac{\lambda_1^\star}{\lambda_1}}\Big) + \sqrt{\frac{k}{d}}\sqrt{\frac{\lambda_r^\star}{\lambda_r}}\sqrt{\frac{\mu}{\mu^\star}}\sqrt{\frac{r^3d\log (1/\delta_0)}{mt}} + \s{B}_{\fl{U^{(\ell-1)}}}\sqrt{\frac{\lambda_r^\star}{\lambda_1^\star }}\frac{r\zeta}{t} c'\sqrt{\mu^\star\mu}\sqrt{\frac{\lambda_r^\star}{\lambda_r}}\nonumber\\
    &\qquad + \s{B}_{\fl{U^{(\ell-1)}}}\sqrt{\frac{\lambda_r^\star}{\lambda_1^\star}}cc'\sqrt{\frac{\lambda_r^\star}{\lambda_r}}\sqrt{\mu^\star\mu}\sqrt{\mu^\star}\sqrt{\frac{r^3d\log (1/\delta_0)}{mt}}\sqrt{\frac{k}{d}}\nonumber\\
    &\qquad + \sqrt{r}\Big(c\mu\sqrt{\frac{r^3d\log(rd/ \delta_0)}{mt}} + \frac{\sigma_1r}{mt\lambda_r}\Big(2\sqrt{rd} + 2\sqrt{2rd\log(2rd/\delta_0)}\Big)\Big) \nonumber\\
    &\qquad \sqrt{\mu^\star}\sqrt{\mu^\star\lambda_r^\star}\sqrt{\mu\lambda_r}\Big\{2 + \sqrt{\frac{4\zeta}{t}}c'\s{B}_{\fl{U^{(\ell-1)}}}\sqrt{\frac{\lambda_r^\star}{\lambda_1^\star}} +4\Big(1 + c'\s{B}_{\fl{U^{(\ell-1)}}}\sqrt{\frac{\lambda_r^\star}{\lambda_1^\star}}\Big)\sqrt{\frac{d\log (rd/\delta_0)}{mt}}\Big\}\Big\}.\nonumber
    \end{align}
Using $\sqrt{\frac{r^3d\log(1/ \delta_0)}{mt}} = \cO\Big(\frac{1}{\mu^\star}\Big)$, $\sqrt{\frac{k}{d}} = \cO(1)$, $\s{B}_{\fl{U}^{(\ell-1)}} = \cO\Big(\frac{1}{\sqrt{r\mu^\star}}\Big)$, $\sqrt{\frac{r^2\log(r/\delta_0)}{m}} = \cO\Big(\frac{1}{\sqrt{\mu^\star}}\Big)$, $\sqrt{\frac{r^3d\log(1/ \delta_0)}{mt}} = \cO\Big(\frac{1}{\sqrt{r}(\mu^\star)^2\sqrt{\mu^\star}\lambda_r^\star\sqrt{k}}\Big)$, $\frac{\sigma_1}{mt}\Big(2\sqrt{rd} + 4\sqrt{\log rd}\Big) = \cO\Big(\frac{1}{r\sqrt{r}\mu^\star\sqrt{\mu^\star}\sqrt{k}}\Big)$,  $\sqrt{\frac{r^2\zeta}{t}} = \cO\Big(\frac{1}{\sqrt{\mu^\star}}\Big)$, eigenvalue ratios and incoherence bounds for $\fl{H}^{(\ell)}$ and $\fl{W}^{(\ell)}$ from Corollaries \ref{inductive-corollary:optimize_dp-lrs-h,H-bounds} and \ref{inductive-inductive-corollary:optimize_dp-lrs-w-incoherence} as well as $\s{B}_{\fl{U}^{(\ell-1)}} =  \cO\Big(\frac{1}{\sqrt{r\mu^\star}}\Big)$, $\lambda_r^\star \leq \lambda_1^\star$, $r\geq1$,$\mu^\star \geq 1$ for the bracketed terms, we get
    \begin{align}
    &= \frac{1}{\sqrt{k\mu^\star}}\Big\{\cO(1)\Big\}\nonumber\\
    &= \cO\Big(\frac{1}{\sqrt{k\mu^\star}}\Big).\label{eq:inductive-corollary-optimize_dp-lrs-U-infty-norm-bounds-J1-val}
\end{align}
Similarly, we have $J_2$ 
\begin{align}
    &\leq \zeta\frac{\Lambda}{\sqrt{k}}\sqrt{\mu\lambda_r}\frac{r}{t\lambda_r} + c\frac{r\sqrt{r}}{\lambda_r}\Lambda\sqrt{\mu\lambda_r}\sqrt{\frac{\log (1/\delta_0)}{mt}} + \frac{2\sigma_2}{mt}\sqrt{\log(rd/\delta_0)}\cdot\frac{r\sqrt{r}}{\lambda_r} \nonumber\\
    & + \resizebox{.97\textwidth}{!}{$\frac{2\sigma\sqrt{\mu^{\star}\lambda^{\star}_r }\log(2rdmt/\delta_0)r}{\sqrt{mt}\lambda_r} + \sqrt{r}\Big(c\mu\sqrt{\frac{r^3d\log(rd/ \delta_0)}{mt}} + \frac{\sigma_1r}{mt\lambda_r}\Big(2\sqrt{rd} + 2\sqrt{2rd\log(2rd/\delta_0)}\Big)\Big)\cdot$}\nonumber\\
    & \Big\{\sqrt{\frac{4\zeta}{t}}\sqrt{\mu\lambda_r}\Lambda +4\sqrt{\mu\lambda_r} \Lambda \sqrt{\frac{d\log (rd/\delta_0)}{mt}} + \frac{\sigma_2}{mt}6\sqrt{rd\log(rd)} + \frac{2\sigma\sqrt{d\mu^{\star}\lambda^{\star}_r}\log(2rdmt/\delta_0)}{\sqrt{mt}}\Big\}.
\end{align}
Using $\sqrt{\frac{r^3d\log(1/ \delta_0)}{mt}} = \cO\Big(\frac{1}{\sqrt{r}(\mu^\star)^2\sqrt{\mu^\star}\lambda_r^\star\sqrt{k}}\Big)$, $\frac{\sigma_1}{mt}\Big(2\sqrt{rd} + 4\sqrt{\log rd}\Big) = \cO\Big(\frac{1}{r\sqrt{r}\mu^\star\sqrt{\mu^\star}\sqrt{k}}\Big)$ and rearranging the terms in above gives
\begin{align}
    &= \resizebox{.97\textwidth}{!}{$\Lambda\Big\{\frac{r\zeta}{t}\frac{\sqrt{\mu\lambda_r}}{\lambda_r\sqrt{k}} + \frac{c\sqrt{\mu}}{\sqrt{\lambda_r}}\sqrt{\frac{r^3\log (1/\delta_0)}{mt}} + \sqrt{\mu\lambda_r}\cdot\cO\Big(\frac{1}{\sqrt{r}(\mu^\star)^2\sqrt{\mu^\star}\lambda_r^\star\sqrt{k}}\Big)\Big(\sqrt{\frac{4\zeta}{t}} +4\sqrt{\frac{d\log (rd/\delta_0)}{mt}} \Big)\Big\}$}\nonumber\\  
    &+ \sigma_2\Big\{\frac{2}{mt}\sqrt{\log(rd/\delta_0)}\cdot\frac{r\sqrt{r}}{\lambda_r} + \cO\Big(\frac{1}{\sqrt{r}(\mu^\star)^2\sqrt{\mu^\star}\lambda_r^\star\sqrt{k}}\Big)\cdot \frac{1}{mt}6\sqrt{rd\log(rd)}\Big\} \nonumber\\
    &+ \sigma\Big\{\frac{2\sqrt{\mu^{\star}\lambda^{\star}_r}\log(2rdmt/\delta_0)}{\sqrt{mt}}\frac{r}{\lambda_r} + \cO\Big(\frac{1}{\sqrt{r}(\mu^\star)^2\sqrt{\mu^\star}\lambda_r^\star\sqrt{k}}\Big)\cdot\frac{2\sqrt{d\mu^{\star}\lambda^{\star}_r}\log(2rdmt/\delta_0)}{\sqrt{mt}}\Big\}.\nonumber
\end{align}
Using  $\sqrt{\frac{r^2\zeta}{t}} = \cO\Big(\frac{1}{\sqrt{\mu^\star}}\Big)$, $\sqrt{\frac{k}{d}} = \cO(1)$, $\sqrt{\frac{r^3d\log(1/ \delta_0)}{mt}} = \cO\Big(\frac{1}{\mu^\star}\Big)$, $\Lambda = \cO(\sqrt{\lambda_r^{\star}})$, $mt=\Omega(\sigma_2 r(\mu^{\star})^{1/2}\sqrt{rd\log d}/\lambda_r^{\star})$ and $mt=\widetilde{\Omega}( \sigma^2dr^3\mu^{\star}/\lambda^{\star}_r)$, $t=\widetilde{\Omega}(\zeta(\mu^{\star}\lambda^{\star}_r)\max(1,\lambda^{\star}_r/r))$, eigenvalue ratios and incoherence bounds for $\fl{H}^{(\ell)}$ and $\fl{W}^{(\ell)}$ from Corollaries \ref{inductive-corollary:optimize_dp-lrs-h,H-bounds} and \ref{inductive-inductive-corollary:optimize_dp-lrs-w-incoherence} as well as $\lambda_r^\star \leq \lambda_1^\star$, $r\geq1$,$\mu^\star \geq 1$ for the bracketed terms, the above simplifies to   
\begin{align}
    J_2 &= \frac{1}{\sqrt{k}}\Big\{\cO\Big(\frac{\Lambda}{\sqrt{\mu^\star\lambda_r^\star}}\Big) + \cO\Big(\frac{\sigma_2r}{mt\lambda_r^\star}\sqrt{rd\log(rd)}\Big) +  \cO\Big(\sigma\sqrt{\frac{r^3d\log^2 (2rdmt/\delta_0)}{mt\lambda_r^\star}}\Big\}. \label{eq:inductive-corollary-optimize_dp-lrs-U-infty-norm-bounds-J2-val}
\end{align}
Using \eqref{eq:inductive-corollary-optimize_dp-lrs-U-infty-norm-bounds-J1-val} and \eqref{eq:inductive-corollary-optimize_dp-lrs-U-infty-norm-bounds-J2-val} in \eqref{eq:inductive-corollary-optimize_dp-lrs-U-infty-norm-bounds-unplugged1}
gives us the required bound for $\|\fl{U}^{(\ell)}\|_{2,\infty}$.

Similarly, using Assumption~\ref{assum:init} to plug in values for $\fl{b}^{(i, \ell)}$ and $\fl{Q}^{(\ell-1)}$, the fact that $\fl{U}^\star$ is orthonormal and norm and incoherence bounds for $\fl{H}^{(\ell)}$ and $\fl{W}^{(\ell)}$ from Corollaries \ref{inductive-corollary:optimize_dp-lrs-h,H-bounds} and \ref{inductive-inductive-corollary:optimize_dp-lrs-w-incoherence}, the above becomes $\fl{U}^{(\ell)}_{2, \infty}$, we can simplify and express $\|\fl{U}^{(\ell-1)} - \fl{U}^{\star}\fl{Q}^{(\ell-1)}\|_{2,\infty} $ as
\begin{align}
    &\leq \sqrt{\frac{\nu^\star}{k}}\frac{r}{t\lambda_r}\cdot \cO\Big(\sqrt{\frac{t}{r}}\sqrt{\lambda_1^\star}\Big)\cdot\nonumber\\
    &\qquad \Big(\cO\Big(\frac{\s{B}_{\fl{U^{(\ell-1)}}}\frac{\lambda_r^\star}{\lambda_1^\star}\sqrt{\frac{t}{r}\lambda^\star_1}}{\sqrt{r\mu^\star}}\Big) + \cO\Big(\sqrt{\frac{t}{r}\lambda^\star_r}\frac{\Lambda'}{\sqrt{r\mu^\star}}\Big) + \cO\Big(\frac{\Lambda}{\sqrt{r\mu^\star}}\sqrt{\frac{t}{r}}) + \cO\Big(\sigma\sqrt{\frac{r\log^2 (r\delta^{-1})}{m}}\Big)\Big) \nonumber\\
    &\qquad + \sqrt{r}\cO\Big(\frac{r}{\lambda_r}\sqrt{\mu\lambda_r}\Big)\sqrt{\frac{\log (1/\delta_0)}{mt}}\cdot\nonumber\\
    &\qquad \Big(\cO\Big(\frac{\sqrt{\mu^\star\lambda_r^\star}\s{B}_{\fl{U^{(\ell-1)}}}\sqrt{\frac{\lambda_r^\star}{\lambda_1^\star}}}{\sqrt{r\mu^\star}}\Big) + \cO\Big(\frac{\Lambda'\|\fl{w}^{\star(i)}\|_2}{\sqrt{r\mu^\star}}\Big) + \cO\Big(\frac{\Lambda}{\sqrt{r\mu^\star}}\Big) + \cO\Big(\sigma\sqrt{\frac{r\log^2 (r\delta^{-1})}{m}}\Big)\Big)\nonumber\\
    &\qquad + \zeta\Big(c'\sqrt{\mu^\star\lambda_r^\star}\s{B}_{\fl{U^{(\ell-1)}}}\sqrt{\frac{\lambda_r^\star}{\lambda_1^\star k}} + \frac{\Lambda}{\sqrt{k}}\Big)\sqrt{\mu\lambda_r}\frac{r}{t\lambda_r} \nonumber\\
    &\qquad + c\frac{r\sqrt{r}}{\lambda_r}\Big(c'\sqrt{\mu^\star\lambda_r^\star}\s{B}_{\fl{U^{(\ell-1)}}}\sqrt{\frac{\lambda_r^\star}{\lambda_1^\star}} + \Lambda\Big)\sqrt{\mu\lambda_r}\sqrt{\frac{\log (1/\delta_0)}{mt}}\nonumber\\
    &\qquad + \frac{2\sigma_2}{mt}\sqrt{\log(rd/\delta_0)}\cdot\frac{r\sqrt{r}}{\lambda_r} + \frac{r^2\sigma_1\sqrt{rd\cdot2\log(2r^2d^2/\delta_0)}}{mt\lambda_r} \nonumber\\
    &\qquad + \frac{2\sigma\sqrt{\mu^{\star}\lambda^{\star}_r}\log(2rdmt/\delta_0)}{\sqrt{mt}}\frac{r}{\lambda_r}\nonumber
    \end{align}
    \begin{align}
    &\qquad + r\sqrt{r}\Big(c\mu\sqrt{\frac{rd\log(rd/ \delta_0)}{mt}} + \frac{\sigma_1}{mt\lambda_r}\Big(2\sqrt{rd} + 2\sqrt{2rd\log(2rd/\delta_0)}\Big)\Big)\cdot\nonumber\\
    &\qquad \resizebox{.92\textwidth}{!}{$\Big\{\frac{2}{t}\sqrt{t\mu\lambda_r}\Big( \cO\Big(\frac{\s{B}_{\fl{U^{(\ell-1)}}}\frac{\lambda_r^\star}{\lambda_1^\star}\sqrt{\frac{t}{r}\lambda^\star_1}}{\sqrt{r\mu^\star}}\Big) + \cO\Big(\sqrt{\frac{t}{r}\lambda^\star_r}\frac{\Lambda'}{\sqrt{r\mu^\star}}\Big) + \cO\Big(\frac{\Lambda}{\sqrt{r\mu^\star}}\sqrt{\frac{t}{r}}\Big)+ \cO\Big(\sigma\sqrt{\frac{r\log^2 (r\delta^{-1})}{m}}\Big)\Big)$} \nonumber\\
    &\qquad + \sqrt{\frac{4\zeta}{t}}\sqrt{\mu\lambda_r}\Big(c'\sqrt{\mu^\star\lambda_r^\star}\s{B}_{\fl{U^{(\ell-1)}}}\sqrt{\frac{\lambda_r^\star}{\lambda_1^\star}} + \Lambda\Big)\nonumber\\ 
    &\qquad + \resizebox{.9\textwidth}{!}{$\sqrt{\mu\lambda_r}\Big(\cO\Big(\frac{\sqrt{\mu^\star\lambda_r^\star}\s{B}_{\fl{U^{(\ell-1)}}}\sqrt{\frac{\lambda_r^\star}{\lambda_1^\star}}}{\sqrt{r\mu^\star}}\Big) + \cO\Big(\frac{\Lambda'\|\fl{w}^{\star(i)}\|_2}{\sqrt{r\mu^\star}}\Big) + \cO\Big(\frac{\Lambda}{\sqrt{r\mu^\star}}\Big)+ \cO\Big(\sigma\sqrt{\frac{r\log^2 (r\delta^{-1})}{m}}\Big)$}\nonumber\\
    &\qquad + \Big(c'\sqrt{\mu^\star\lambda_r^\star}\s{B}_{\fl{U^{(\ell-1)}}}\sqrt{\frac{\lambda_r^\star}{\lambda_1^\star}} + \Lambda\Big)\Big)\sqrt{\frac{d\log (rd/\delta_0)}{mt}}\nonumber\\
    &\qquad + \frac{\sigma_2}{mt}6\sqrt{rd\log(rd)} + \frac{\sigma_1}{mt}\Big(2\sqrt{rd} + 4\sqrt{\log rd}\Big)\sqrt{r} + \frac{2\sigma\sqrt{d\mu^{\star}\lambda^{\star}_r}\log(2rdmt/\delta_0)}{\sqrt{mt}}\Big\}\\
    &= J_1' + J_2'\label{eq:inductive-corollary-optimize_dp-lrs-U-infty-norm-bounds-unplugged2}
\end{align}
where as before, $J_1'$ denotes the terms which arise from analysing the problem in the noiseless setting and $J_2'$ denotes the contribution of noise terms ($\sigma_1, \sigma_2, \sigma, \Lambda, \Lambda'$). Now, $J'_1$
\begin{align}
    &= \sqrt{\frac{\nu^\star}{k}}\frac{r}{t\lambda_r}\cdot\cO\Big(\sqrt{\frac{t}{r}}\sqrt{\lambda_1^\star}\Big)\cdot\cO\Big(\frac{\s{B}_{\fl{U^{(\ell-1)}}}\frac{\lambda_r^\star}{\lambda_1^\star}\sqrt{\frac{t}{r}\lambda^\star_1}}{\sqrt{r\mu^\star}}\Big) \nonumber\\
    & + \sqrt{r}\cO\Big(\frac{r}{\lambda_r}\sqrt{\mu\lambda_r}\Big)\sqrt{\frac{\log (1/\delta_0)}{mt}}\cdot\cO\Big(\frac{\sqrt{\mu^\star\lambda_r^\star}\s{B}_{\fl{U^{(\ell-1)}}}\sqrt{\frac{\lambda_r^\star}{\lambda_1^\star}}}{\sqrt{r\mu^\star}}\Big) \nonumber\\
    & + \zeta c'\sqrt{\mu^\star\lambda_r^\star}\s{B}_{\fl{U^{(\ell-1)}}}\sqrt{\frac{\lambda_r^\star}{\lambda_1^\star k}}\sqrt{\mu\lambda_r}\frac{r}{t\lambda_r} + c\frac{r\sqrt{r}}{\lambda_r}c'\sqrt{\mu^\star\lambda_r^\star}\s{B}_{\fl{U^{(\ell-1)}}}\sqrt{\frac{\lambda_r^\star}{\lambda_1^\star}} \sqrt{\mu\lambda_r}\sqrt{\frac{\log (1/\delta_0)}{mt}}\nonumber\\
    & + r\sqrt{r}\Big(c\mu\sqrt{\frac{rd\log(rd/ \delta_0)}{mt}} + \frac{\sigma_1}{mt\lambda_r}\Big(2\sqrt{rd} + 2\sqrt{2rd\log(2rd/\delta_0)}\Big)\Big)\cdot \nonumber\\
    & \Big\{\frac{2}{t}\sqrt{t\mu\lambda_r}\cdot \cO\Big(\frac{\s{B}_{\fl{U^{(\ell-1)}}}\frac{\lambda_r^\star}{\lambda_1^\star}\sqrt{\frac{t}{r}\lambda^\star_1}}{\sqrt{r\mu^\star}}\Big) + \sqrt{\frac{4\zeta}{t}}\sqrt{\mu\lambda_r}\cdot c'\sqrt{\mu^\star\lambda_r^\star}\s{B}_{\fl{U^{(\ell-1)}}}\sqrt{\frac{\lambda_r^\star}{\lambda_1^\star}} \nonumber\\
    & +4\sqrt{\mu\lambda_r}\cdot \Big(\cO\Big(\frac{\sqrt{\mu^\star\lambda_r^\star}\s{B}_{\fl{U^{(\ell-1)}}}\sqrt{\frac{\lambda_r^\star}{\lambda_1^\star}}}{\sqrt{r\mu^\star}}\Big) + c'\sqrt{\mu^\star\lambda_r^\star}\s{B}_{\fl{U^{(\ell-1)}}}\sqrt{\frac{\lambda_r^\star}{\lambda_1^\star}} \Big)\sqrt{\frac{d\log (rd/\delta_0)}{mt}}\Big\}.\nonumber
    \end{align}
    Taking $\s{B}_{\fl{U^{(\ell-1)}}}\sqrt{\frac{\lambda_r^\star}{\lambda_1^\star k}}$ common across all terms, using eigenvalue ratio and incoherence bounds, $\s{B}_{\fl{U}^{(\ell-1)}} =   \cO\Big(\frac{1}{\sqrt{r\mu^\star}}\Big)$,  and rearranging the terms in the above we get $J_1'$
    \begin{align}
    &= \s{B}_{\fl{U^{(\ell-1)}}}\sqrt{\frac{\lambda_r^\star}{\lambda_1^\star k}}\Big\{\sqrt{\nu^\star}\frac{1}{\lambda_r}\cdot\cO(\sqrt{\lambda_1^\star})\cdot\cO\Big(\frac{\sqrt{\lambda^\star_r}}{\sqrt{r\mu^\star}}\Big) \nonumber\\
    &+ c\sqrt{r}\cO\Big(\frac{r}{\lambda_r}\sqrt{\mu\lambda_r}\Big)\cdot\cO\Big(\frac{\sqrt{\mu^\star\lambda_r^\star}}{\sqrt{r\mu^\star}}\Big)\cdot \sqrt{\frac{k\log (1/\delta_0)}{mt}} \nonumber\\
    & + \frac{\zeta}{t} c'\sqrt{\mu^\star\lambda_r^\star}\sqrt{\mu\lambda_r}\frac{r}{\lambda_r}  + cc'\frac{r\sqrt{r}}{\lambda_r}\sqrt{\mu^\star\lambda_r^\star}\sqrt{\frac{k\log (1/\delta_0)}{mt}}\nonumber\\
    &+ r\sqrt{r}\sqrt{k}\Big(c\mu\sqrt{\frac{rd\log(rd/ \delta_0)}{mt}} + \frac{\sigma_1}{mt\lambda_r}\Big(2\sqrt{rd} + 2\sqrt{2rd\log(2rd/\delta_0)}\Big)\Big)\cdot \mu^\star\lambda_r^\star\cdot\nonumber\\
    & \Big\{\frac{2}{r}\cdot \cO\Big(\frac{1}{\mu^\star}\Big) + \sqrt{\frac{4\zeta}{t}}*\cdot \cO(1) + 4 \cdot \Big(\cO\Big(\frac{1}{\sqrt{r\mu^\star}}\Big) + \cO(1)\Big)\sqrt{\frac{d\log (rd/\delta_0)}{mt}}\Big\}.\nonumber
    \end{align}
    Using $\sqrt{\nu^\star} = \cO\Big(\frac{1}{\sqrt{\mu^\star}}\frac{\lambda_r^\star}{\lambda_1^\star}\Big)$, $\sqrt{\frac{k}{d}} = \cO(1)$, $\sqrt{\frac{r^3d\log(1/ \delta_0)}{mt}} = \cO\Big(\frac{1}{\mu^\star}\Big)$, $\sqrt{\frac{r^2\zeta}{t}} = \cO\Big(\frac{1}{\mu^\star}\Big)$, $\sqrt{\frac{r^3d\log(1/ \delta_0)}{mt}} = \cO\Big(\frac{1}{\sqrt{r}(\mu^\star)^2\sqrt{\mu^\star}\lambda_r^\star\sqrt{k}}\Big)$, $\frac{\sigma_1}{mt}\Big(2\sqrt{rd} + 4\sqrt{\log rd}\Big) = \cO\Big(\frac{1}{r\sqrt{r}\mu^\star\sqrt{\mu^\star}\sqrt{k}}\Big)$ for the bracketed terms the above becomes
    \begin{align}
    &= \s{B}_{\fl{U^{(\ell-1)}}}\sqrt{\frac{\lambda_r^\star}{\lambda_1^\star k}}\cdot \cO(1)\nonumber\\
    &= \cO\Big(\s{B}_{\fl{U^{(\ell-1)}}}\sqrt{\frac{\lambda_r^\star}{\lambda_1^\star k}}\Big).\label{eq:inductive-corollary-optimize_dp-lrs-U-infty-norm-bounds-J1'-val} 
\end{align}
Similarly, we have $J_2'$ 
\begin{align}
    &= \sqrt{\frac{\nu^\star}{k}}\frac{r}{t\lambda_r}\cdot\cO\Big(\sqrt{\frac{t}{r}}\sqrt{\lambda_1^\star}\Big)\Big(\cO\Big(\sqrt{\frac{t}{r}\lambda^\star_r}\frac{\Lambda'}{\sqrt{r\mu^\star}}\Big) + \cO\Big(\frac{\Lambda}{\sqrt{r\mu^\star}}\sqrt{\frac{t}{r}}\Big) + \cO\Big(\sigma\sqrt{\frac{r\log^2 (r\delta^{-1})}{m}}\Big)\Big) \nonumber\\
    & + \sqrt{r}\cO\Big(\frac{r}{\lambda_r}\sqrt{\mu\lambda_r}\Big)\cdot\Big(\cO\Big(\frac{\Lambda'\|\fl{w}^{\star(i)}\|_2}{\sqrt{r\mu^\star}}\Big) + \cO\Big(\frac{\Lambda}{\sqrt{r\mu^\star}}\Big) + \cO\Big(\sigma\sqrt{\frac{r\log^2 (r\delta^{-1})}{m}}\Big)\Big)\sqrt{\frac{\log (1/\delta_0)}{mt}}\nonumber\\
    & + \zeta\frac{\Lambda}{\sqrt{k}}\sqrt{\mu\lambda_r}\frac{r}{t\lambda_r} + c\frac{r\sqrt{r}}{\lambda_r}\Lambda\sqrt{\mu\lambda_r}\sqrt{\frac{\log (1/\delta_0)}{mt}} + \frac{2\sigma_2}{mt}\sqrt{\log(rd/\delta_0)}\cdot\frac{r\sqrt{r}}{\lambda_r} \nonumber\\
    & + \frac{r^2\sigma_1\sqrt{rd\cdot2\log(2r^2d^2/\delta_0)}}{mt\lambda_r} + \frac{2\sigma\sqrt{\mu^{\star}\lambda^{\star}_r}\log(2rdmt/\delta_0)}{\sqrt{mt}}\frac{r}{\lambda_r}\nonumber\\
    & + r\sqrt{r}\Big(c\mu\sqrt{\frac{rd\log(rd/ \delta_0)}{mt}} + \frac{\sigma_1}{mt\lambda_r}\Big(2\sqrt{rd} + 2\sqrt{2rd\log(2rd/\delta_0)}\Big)\Big)\cdot\nonumber\\
    & \Big\{\frac{2}{t}\sqrt{t\mu\lambda_r}\Big(\cO\Big(\sqrt{\frac{t}{r}\lambda^\star_r}\frac{\Lambda'}{\sqrt{r\mu^\star}}\Big) + \cO\Big(\frac{\Lambda}{\sqrt{r\mu^\star}}\sqrt{\frac{t}{r}}\Big)+ \cO\Big(\sigma\sqrt{\frac{r\log^2 (r\delta^{-1})}{m}}\Big)\Big)\nonumber \\
    & + \resizebox{.97\textwidth}{!}{$\sqrt{\frac{4\zeta}{t}}\sqrt{\mu\lambda_r}\Lambda+4\sqrt{\mu\lambda_r}\Big(\cO\Big(\frac{\Lambda'\|\fl{w}^{\star(i)}\|_2}{\sqrt{r\mu^\star}}\Big) + \cO\Big(\frac{\Lambda}{\sqrt{r\mu^\star}}\Big) + \cO\Big(\sigma\sqrt{\frac{r\log^2 (r\delta^{-1})}{m}}\Big) + \Lambda\Big)\sqrt{\frac{d\log (rd/\delta_0)}{mt}}$}\nonumber\\
    & + \frac{\sigma_2}{mt}6\sqrt{rd\log(rd)} + \frac{\sigma_1}{mt}\Big(2\sqrt{rd} + 4\sqrt{\log rd}\Big)\sqrt{r} + \frac{2\sigma\sqrt{d\mu^{\star}\lambda^{\star}_r}\log(2rdmt/\delta_0)}{\sqrt{mt}}\Big\}.\nonumber
\end{align}
Using $\sqrt{\frac{r^3d\log(1/ \delta_0)}{mt}} = \cO\Big(\frac{1}{\sqrt{r}(\mu^\star)^2\sqrt{\mu^\star}\lambda_r^\star\sqrt{k}}\Big)$, $\frac{\sigma_1}{mt}\Big(2\sqrt{rd} + 4\sqrt{\log rd}\Big) = \cO\Big(\frac{1}{r\sqrt{r}\mu^\star\sqrt{\mu^\star}\sqrt{k}}\Big)$, eigenvalue ratio bounds and rearranging the terms in above gives
\begin{align}
    &= \Lambda\Big\{\sqrt{\frac{\nu^\star}{k}}\cO\Big(\frac{\sqrt{\lambda_1^\star}}{\lambda_r^\star\sqrt{r\mu^\star}}\Big) + \cO\Big(\frac{1}{\sqrt{r\lambda_r^\star}}\sqrt{\frac{r^3\log (1/\delta_0)}{mt}} + \frac{r\zeta}{t}\cO\Big(\frac{1\sqrt{\mu^\star}}{\sqrt{k}\sqrt{\lambda_r^\star}}\Big)\nonumber\\
    & + \resizebox{.97\textwidth}{!}{$\cO\Big(\frac{1}{\mu^\star\sqrt{\mu^\star}\lambda_r^\star\sqrt{k}}\Big)\Big(\frac{2}{r}\sqrt{\mu\lambda_r}\cO\Big(\frac{1}{\sqrt{r\mu^\star}}\Big) + \sqrt{\frac{4\zeta}{t}}\sqrt{\mu\lambda_r} + 4\sqrt{\mu\lambda_r}\cO\Big(\frac{1}{\sqrt{r\mu^\star}}+1\Big)\sqrt{\frac{d\log (rd/\delta_0)}{mt}}\Big)\Big\}$}\nonumber\\
    & +\Lambda'\Big\{\sqrt{\frac{\nu^\star}{k}}\cdot\cO\Big(\sqrt{\frac{\lambda_1^\star}{\lambda_r^\star}}\frac{1}{\sqrt{r\mu^\star}}\Big) + c\sqrt{r}\frac{r}{\lambda_r}\sqrt{\mu\lambda_r}\cdot\cO\Big(\frac{\|\fl{w}^{\star(i)}\|_2}{\sqrt{r\mu^\star}}\Big) \nonumber\\
    & + \cO\Big(\frac{1}{\mu^\star\sqrt{\mu^\star}\lambda_r^\star\sqrt{k}}\Big)\Big(\frac{2}{t}\sqrt{t\mu\lambda_r}\sqrt{\frac{t}{r}\lambda^\star_r}\cO\Big(\frac{1}{\sqrt{r\mu^\star}}\Big) + 4\sqrt{\mu\lambda_r}\cO\Big(\frac{\|\fl{w}^{\star(i)}\|_2}{\sqrt{r\mu^\star}}\Big)\Big)\Big\}\nonumber\\
    & + \sigma_1\Big\{\frac{r^2\sqrt{rd\cdot2\log(2r^2d^2/\delta_0)}}{mt\lambda_r} + \cO\Big(\frac{1}{\mu^\star\sqrt{\mu^\star}\lambda_r^\star\sqrt{k}}\Big)\cdot \frac{\sqrt{r}}{mt}\Big(2\sqrt{rd} + 4\sqrt{\log rd}\Big)\Big\}\nonumber\\
    & + \sigma_2\Big\{\frac{2}{mt}\sqrt{\log(rd/\delta_0)}\cdot\frac{r\sqrt{r}}{\lambda_r} + \cO\Big(\frac{1}{\mu^\star\sqrt{\mu^\star}\lambda_r^\star\sqrt{k}}\Big)\cdot \frac{1}{mt}6\sqrt{rd\log(rd)}\Big\}
    \end{align}
    \begin{align}
    & + \sigma\Big\{\sqrt{\frac{\nu^\star}{k}}\frac{r}{t\lambda_r}\cdot\cO\Big(\sqrt{\frac{t}{r}}\sqrt{\lambda_1^\star}\Big)\cdot \Big(\sqrt{\frac{r\log^2 (r\delta^{-1})}{m}}\Big) \nonumber\\
    & + \resizebox{.97\textwidth}{!}{$c\sqrt{r}\frac{r}{\lambda_r}\cdot \cO\Big(\sqrt{\frac{r\log^2 (r\delta^{-1})}{m}}\sqrt{\frac{\log (1/\delta_0)}{mt}}\Big)
    + \frac{2\sqrt{\mu^{\star}\lambda^{\star}_r}\log(2rdmt/\delta_0)}{\sqrt{mt}}\frac{r}{\lambda_r} + \cO\Big(\frac{1}{\mu^\star\sqrt{\mu^\star}\lambda_r^\star\sqrt{k}}\Big)\cdot\sqrt{\mu^{\star}\lambda^{\star}_r}$}\nonumber\\
    &\qquad \resizebox{.95\textwidth}{!}{$\Big(\frac{2}{\sqrt{t}}\cdot\cO\Big(\sqrt{\frac{r\log^2 (r\delta^{-1})}{m}}\Big) + 4\cO\Big(\sqrt{\frac{r\log^2 (r\delta^{-1})}{m}}\sqrt{\frac{d\log (rd/\delta_0)}{mt}}\Big) + \frac{2\sigma\sqrt{d}\log(2rdmt/\delta_0)}{\sqrt{mt}}\Big)\Big\}$}.\nonumber
\end{align}
Taking $\sqrt{\frac{1}{k}}$ common and using  $\sqrt{\nu^\star} = \cO\Big( \frac{1}{\sqrt{\mu^\star}}\frac{\lambda_r^\star}{\lambda_1^\star}\Big)$, $\sqrt{\frac{r^3d\log(1/ \delta_0)}{mt}} = \cO\Big(\frac{1}{\mu^\star}\Big)$, $\sqrt{\frac{k}{d}} = \cO(1)$, $\sqrt{\frac{r^2\zeta}{t}} = \cO\Big(\frac{1}{\sqrt{\mu^\star}}\Big)$, $\sqrt{\frac{k}{d}} = \cO(1)$, $\sqrt{\frac{r^3d\log(1/ \delta_0)}{mt}} = \cO\Big(\frac{1}{\mu^\star}\Big)$, $\Lambda = \cO(\sqrt{\lambda_r^{\star}})$, $mt=\Omega(\sigma_2 r(\mu^{\star})^{1/2}\sqrt{rd\log d}/\lambda_r^{\star})$ and $mt=\widetilde{\Omega}( \sigma^2dr^3\mu^{\star}/\lambda^{\star}_r)$, $t=\widetilde{\Omega}(\zeta(\mu^{\star}\lambda^{\star}_r)\max(1,\lambda^{\star}_r/r))$, eigenvalue ratios and incoherence bounds for $\fl{H}^{(\ell)}$ and $\fl{W}^{(\ell)}$ from Corollaries \ref{inductive-corollary:optimize_dp-lrs-h,H-bounds} and \ref{inductive-inductive-corollary:optimize_dp-lrs-w-incoherence} as well as $\lambda_r^\star \leq \lambda_1^\star$, $r\geq1$,$\mu^\star \geq 1$ for the bracketed terms, the above simplifies to   
\begin{align}
    J_2' &= \frac{1}{\sqrt{k}}\Big\{\cO(\Lambda' + \cO\Big(\frac{\Lambda}{\sqrt{\mu^\star\lambda_r^\star}}\Big) + \cO\Big(\frac{\sigma_2r}{mt\lambda_r^\star}\sqrt{rd\log(rd)}\Big) + \cO\Big(\frac{\sigma_1r^{3/2}}{mt\lambda_r^\star}\sqrt{rd\log rd}\Big) \nonumber\\
    &\qquad + \cO\Big(\sigma\sqrt{\frac{r^3d\mu^\star\log^2 (r\delta^{-1})}{mt\lambda_r^\star}}\Big)\Big\}\label{eq:inductive-corollary-optimize_dp-lrs-U-infty-norm-bounds-J2'-val}
\end{align}
Using \eqref{eq:inductive-corollary-optimize_dp-lrs-U-infty-norm-bounds-J1'-val} and \eqref{eq:inductive-corollary-optimize_dp-lrs-U-infty-norm-bounds-J2'-val} in \eqref{eq:inductive-corollary-optimize_dp-lrs-U-infty-norm-bounds-unplugged2}
gives us the required bound for $\|\fl{U}^{(\ell-1)} - \fl{U}^{\star}\fl{Q}^{(\ell-1)}\|_{2,\infty} $.
\end{proof}

\begin{lemma}\label{lemma:alg2-innerloop-iteration-bounds}
For some constant $c>0$ and for any iteration indexed by $\ell>0, j>0$, we  set
\begin{align}
    \Delta^{(\ell-1, j)} :=  \alpha^{(\ell-1)}+c\sqrt{\frac{\log (d/\delta)}{m}}\Big(\gamma^{(\ell-1, j-1)}+\beta^{(\ell-1)}\Big) + \sigma\sqrt{\frac{\log (d\delta^{-1})}{m}}.  \nonumber 
\end{align},
where $\alpha^{(\ell-1)}$, $\gamma^{(\ell-1, j-1)}$ and $\beta^{(\ell-1)}$ are known upper-bounds on $\|\fl{U}^{+(\ell-1)}\fl{w}^{(i, \ell-1)} - \fl{U}^{\star}\fl{w}^{\star(i)}\|_{\infty}$, $\|\fl{b}^{(i,\ell-1,j-1)}-\fl{b}^{\star (i)}\|_2$ and $\|\fl{U}^{+(\ell-1)}\fl{w}^{(i, \ell-1)} - \fl{U}^{\star}\fl{w}^{\star(i)}\|_2$ respectively. Subsequently, we have 
\begin{align}
    \lr{\fl{b}^{(i,\ell-1,j)}-\fl{b}^{\star (i)} }_{\infty} &\le 2\Delta^{(\ell-1, j)},\nonumber\\
    \lr{\fl{b}^{(i,\ell-1, j)}-\fl{b}^{\star (i)} }_{2} &\le 2\sqrt{k}\Delta^{(\ell-1, j)} := \gamma^{(\ell-1, j)},\nonumber\\
    \text{and } \s{support}(\fl{b}^{(i,\ell-1, j}) &\subseteq \s{support}(\fl{b}^{\star(i)})\nonumber
\end{align} 
with probability at least $1-\delta$.
\end{lemma}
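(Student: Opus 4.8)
The argument is the rank-$r$, noisy analogue of Lemma~\ref{lemma:unknown-support-bounds}: it carries, in addition to the $(\fl{I}-\tfrac1m(\fl{X}^{(i)})^{\s{T}}\fl{X}^{(i)})$ contraction on the sparse error, an extra linear term in the (fixed) estimate $\fl{v}\equiv\fl{U}^{+(\ell-1)}\fl{w}^{(i,\ell-1)}$ of $\fl{v}^\star\equiv\fl{U}^{\star}\fl{w}^{\star(i)}$, plus a term from the measurement noise $\fl{z}^{(i)}$. Writing $\fl{b}\equiv\fl{b}^{(i,\ell-1,j-1)}$ for the incoming iterate and substituting $\fl{y}^{(i)}=\fl{X}^{(i)}(\fl{v}^\star+\fl{b}^{\star(i)})+\fl{z}^{(i)}$ into the gradient step of Algorithm~\ref{algo:optimize_b}, the first step is to derive the exact identity
\begin{align}
  \fl{c}^{(i,\ell-1,j)}-\fl{b}^{\star(i)} &= \Big(\fl{I}-\tfrac{1}{m}(\fl{X}^{(i)})^{\s{T}}\fl{X}^{(i)}\Big)(\fl{b}-\fl{b}^{\star(i)}) \nonumber\\
  &\quad -\tfrac{1}{m}(\fl{X}^{(i)})^{\s{T}}\fl{X}^{(i)}(\fl{v}-\fl{v}^\star) + \tfrac{1}{m}(\fl{X}^{(i)})^{\s{T}}\fl{z}^{(i)}, \nonumber
\end{align}
and then to bound the right-hand side entrywise.

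Second, I would control each of the three pieces at a fixed coordinate $s$ via the basis vector $\fl{e}_s$. For the first piece, Lemma~\ref{lemma:useful1} with $\fl{a}=\fl{e}_s$ and $\fl{b}=\fl{b}-\fl{b}^{\star(i)}$ gives $|\fl{e}_s^{\s{T}}(\fl{I}-\tfrac{1}{m}(\fl{X}^{(i)})^{\s{T}}\fl{X}^{(i)})(\fl{b}-\fl{b}^{\star(i)})|\le c\sqrt{\log(1/\delta)/m}\,\gamma^{(\ell-1,j-1)}$ once $m=\widetilde{\Omega}(1)$. For the second, split $\tfrac1m(\fl{X}^{(i)})^{\s{T}}\fl{X}^{(i)}(\fl{v}-\fl{v}^\star)=(\fl{v}-\fl{v}^\star)+(\tfrac1m(\fl{X}^{(i)})^{\s{T}}\fl{X}^{(i)}-\fl{I})(\fl{v}-\fl{v}^\star)$: the first term contributes at most $\|\fl{v}-\fl{v}^\star\|_\infty\le\alpha^{(\ell-1)}$ at coordinate $s$, and the second, again by Lemma~\ref{lemma:useful1}, at most $c\sqrt{\log(1/\delta)/m}\,\beta^{(\ell-1)}$. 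For the noise piece, condition on $\fl{z}^{(i)}$: then $\tfrac1m\fl{e}_s^{\s{T}}(\fl{X}^{(i)})^{\s{T}}\fl{z}^{(i)}\sim\ca{N}(0,\lr{\fl{z}^{(i)}}_2^2/m^2)$, and combining a $\chi^2$ tail bound for $\lr{\fl{z}^{(i)}}_2\le 2\sigma\sqrt{m}$ (from Lemma~\ref{tail:sub_exp}) with a Gaussian tail gives $|\tfrac1m\fl{e}_s^{\s{T}}(\fl{X}^{(i)})^{\s{T}}\fl{z}^{(i)}|=O(\sigma\sqrt{\log(1/\delta)/m})$. Taking a union bound over $s\in[d]$ and folding lower-order $\log/m$ factors into the constant yields $\lr{\fl{c}^{(i,\ell-1,j)}-\fl{b}^{\star(i)}}_\infty\le\Delta^{(\ell-1,j)}$ on an event of probability $\ge1-\delta$, with exactly the $\Delta^{(\ell-1,j)}$ in the statement (which matches, up to the constant $c_1$ and the renaming of variables, the threshold actually set inside Algorithm~\ref{algo:optimize_b}).

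Third, the hard-thresholding bookkeeping is identical to that in Lemma~\ref{lemma:unknown-support-bounds}. Since $\s{HT}(\cdot,\Delta^{(\ell-1,j)})$ zeroes every coordinate of magnitude $\le\Delta^{(\ell-1,j)}$, for any $s\notin\s{support}(\fl{b}^{\star(i)})$ we have $|c^{(i,\ell-1,j)}_s|=|c^{(i,\ell-1,j)}_s-b^{\star(i)}_s|\le\Delta^{(\ell-1,j)}$, hence $b^{(i,\ell-1,j)}_s=0$ and $\s{support}(\fl{b}^{(i,\ell-1,j)})\subseteq\s{support}(\fl{b}^{\star(i)})$; and for $s\in\s{support}(\fl{b}^{\star(i)})$, either $b^{(i,\ell-1,j)}_s=c^{(i,\ell-1,j)}_s$ with $|b^{(i,\ell-1,j)}_s-b^{\star(i)}_s|\le\Delta^{(\ell-1,j)}$, or $b^{(i,\ell-1,j)}_s=0$ with $|b^{\star(i)}_s|\le|b^{\star(i)}_s-c^{(i,\ell-1,j)}_s|+|c^{(i,\ell-1,j)}_s|\le2\Delta^{(\ell-1,j)}$. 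This gives $\lr{\fl{b}^{(i,\ell-1,j)}-\fl{b}^{\star(i)}}_\infty\le2\Delta^{(\ell-1,j)}$, and since the error is then supported on a set of size at most $k$, $\lr{\fl{b}^{(i,\ell-1,j)}-\fl{b}^{\star(i)}}_2\le\sqrt{k}\cdot2\Delta^{(\ell-1,j)}=\gamma^{(\ell-1,j)}$.

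The step I expect to need the most care (rather than a genuine obstacle) is the independence accounting: $\fl{b}^{(i,\ell-1,j-1)}$ and $\fl{v}$ depend on $\fl{X}^{(i)}$, so Lemma~\ref{lemma:useful1} cannot be invoked verbatim with them as the "fixed" vector. As in the rest of the paper, I would handle this either by supplying a fresh sample batch to each inner iteration of \textsc{OptimizeSparseVector} (making the incoming $\fl{b},\fl{v}$ independent of the current $\fl{X}^{(i)}$), or by an $\epsilon$-net argument over the $O(k)$-sparse directions the error can point in, which costs only an extra $\widetilde{O}(k)$ inside the logarithm and is absorbed by the sample-complexity budget $m=\widetilde{\Omega}(k)$. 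Beyond that, the only quantitative requirement is that $m=\widetilde{\Omega}(1)$ be large enough that $c\sqrt{\log(d/\delta)/m}<1$, so that the geometric contraction captured by $\Delta^{(\ell-1,j)}$ is meaningful when this lemma is iterated over $j$.
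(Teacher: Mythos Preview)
Your proposal is correct and essentially identical to the paper's proof: the same gradient-step identity for $\fl{c}^{(i,\ell-1,j)}-\fl{b}^{\star(i)}$, the same coordinate-wise application of Lemma~\ref{lemma:useful1} with $\fl{a}=\fl{e}_s$, the same noise bound and union bound over $s\in[d]$, and the same hard-thresholding case analysis. The independence concern you flag is not handled any more carefully in the paper's own proof of this lemma (fresh batches are invoked only at the outer-loop level, not across inner iterations $j$), so your caveat is well placed.
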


\begin{proof}
It is easy to see that update step of the algorithm gives us $\fl{c}^{(i,\ell-1, j)}-\fl{b}^{\star (i)}$
\begin{align}
    &= \Big(\fl{I}-\frac{1}{m} (\fl{X}^{(i)})^{\s{T}}\fl{X}^{(i)} \Big)\Big(\fl{b}^{(i,\ell-1, j-1)}-\fl{b}^{\star (i)}\Big)\nonumber\\
    &\qquad +\frac{1}{m}(\fl{X}^{(i)})^{\s{T}}\fl{X}^{(i)}(\fl{U}^{\star}\fl{w}^{\star(i)}-\fl{U}^{+(\ell-1)}\fl{w}^{(i, \ell-1)})+\frac{1}{m}(\fl{X}^{(i)})^{\s{T}}\fl{\xi}^{(i)}\nonumber\\
    \implies &\fl{c}^{(i,\ell-1,j)}-\fl{b}^{\star (i)} -\fl{U}^{\star}\fl{w}^{\star(i)}+\fl{U}^{+(\ell-1)}\fl{w}^{(i, \ell-1)} \nonumber\\
    &\qquad = \Big(\fl{I}-\frac{1}{m} (\fl{X}^{(i)})^{\s{T}}\fl{X}^{(i)} \Big)\Big(\fl{b}^{(i,\ell-1,j-1)}-\fl{b}^{\star (i)}\Big)\nonumber\\
    &\qquad +\Big(\fl{I}-\frac{1}{m}(\fl{X}^{(i)})^{\s{T}}\fl{X}^{(i)}\Big)(\fl{U}^{+(\ell-1)}\fl{w}^{(i, \ell-1)} - \fl{U}^{\star}\fl{w}^{\star(i)})+\frac{1}{m}(\fl{X}^{(i)})^{\s{T}}\fl{\xi}^{(i)}. \label{eq:alg2-lemma-unknown-support-c-unplugged}
\end{align}

Note that $\lr{\frac{1}{m}(\fl{X}^{(i)})^{\s{T}}\fl{\xi}^{(i)}}_{\infty}\le \sqrt{\frac{\log (d\delta_0^{-1})}{m}}$ with probability at least $1-\delta_0$. Let $\fl{e}_s\in \bb{R}^d$ denote the $s^{\s{th}}$ basis vector for which the $s^{\s{th}}$ coordinate entry is $1$ and all other coordinate entries are $0$. Then, note that $\left|\Big(\fl{c}^{(i,\ell-1,j)}-\fl{b}^{\star (i)} -\fl{U}^{\star}\fl{w}^{\star(i)}+\fl{U}^{+(\ell-1)}\fl{w}^{(i, \ell-1)}\Big)_s\right|$
\begin{align}
    &= \resizebox{0.97\hsize}{!}{$\left|\fl{e}_s^{\s{\s{T}}}\Big(\fl{I}-\frac{1}{m} (\fl{X}^{(i)})^{\s{T}}\fl{X}^{(i)} \Big)\Big(\fl{b}^{(i,\ell-1,j-1)}-\fl{b}^{\star (i)}\Big)+\fl{e}_s^{\s{T}}\Big(\fl{I}-\frac{1}{m}(\fl{X}^{(i)})^{\s{T}}\fl{X}^{(i)}\Big)(\fl{U}^{+(\ell-1)}\fl{w}^{(i, \ell-1)} - \fl{U}^{\star}\fl{w}^{\star(i)})\right|$}\nonumber\\
    &\qquad +\sigma\sqrt{\frac{\log (d\delta^{-1})}{m}} \nonumber\\
    &\leq \left|\fl{e}_s^{\s{T}}\Big(\fl{I}-\frac{1}{m} (\fl{X}^{(i)})^{\s{T}}\fl{X}^{(i)} \Big)\Big(\fl{b}^{(i,\ell-1, j-1)}-\fl{b}^{\star (i)}\Big)\right|\nonumber\\
    &\qquad +\left|\fl{e}_s^{\s{T}}\Big(\fl{I}-\frac{1}{m}(\fl{X}^{(i)})^{\s{T}}\fl{X}^{(i)}\Big)(\fl{U}^{+(\ell-1)}\fl{w}^{(i, \ell-1)} - \fl{U}^{\star}\fl{w}^{\star(i)})\right|+\sigma\sqrt{\frac{\log (d\delta^{-1})}{m}} \nonumber\\
    &\leq \left|\frac{1}{m} \fl{e}_s^{\s{T}}(\fl{X}^{(i)})^{\s{T}}\fl{X}^{(i)} (\fl{b}^{(i,\ell-1, j-1)}-\fl{b}^{\star (i)}) - \fl{e}_s^{\s{T}}(\fl{b}^{(i,\ell-1, j-1)}-\fl{b}^{\star (i)})\right| +\sigma\sqrt{\frac{\log (d\delta^{-1})}{m}}\nonumber\\
    &\qquad +\left|\frac{1}{m}\fl{e}_s^{\s{T}}(\fl{X}^{(i)})^{\s{T}}\fl{X}^{(i)}(\fl{U}^{+(\ell-1)}\fl{w}^{(i, \ell-1)} - \fl{U}^{\star}\fl{w}^{\star(i)}) - \fl{e}_s^{\s{T}}(\fl{U}^{+(\ell-1)}\fl{w}^{(i, \ell-1)} - \fl{U}^{\star}\fl{w}^{\star(i)})\right| \nonumber\\
    &\leq c\sqrt{\frac{\log (1/\delta)}{m}}\Big(\|\fl{b}^{(i,\ell-1,j-1)}-\fl{b}^{\star (i)}\|_2+\|\fl{U}^{+(\ell-1)}\fl{w}^{(i, \ell-1)} - \fl{U}^{\star}\fl{w}^{\star(i)}\|_2\Big)+\sigma\sqrt{\frac{\log (d\delta^{-1})}{m}},
\end{align}
w.p. $\geq 1-\delta$,
where we invoke Lemma \ref{lemma:useful1} in the last step and plugging $\fl{a} = \fl{e}_s$ and $\fl{b} = \fl{b}^{(i,\ell-1,j-1)}-\fl{b}^{\star (i)}$ and $w^{\star}_i\fl{u}^{\star}-w_i^{(\ell-1)}\widehat{\fl{u}}^{(\ell-1)}$ for the two terms respectively. Therefore, by taking a union bound over all entries $s\in [d]$, we can conclude that
\begin{align}
    &\lr{\fl{c}^{(i,\ell-1,j)}-\fl{b}^{\star (i)} -\fl{U}^{\star}\fl{w}^{\star(i)}+\fl{U}^{+(\ell-1)}\fl{w}^{(i, \ell-1)}}_{\infty} \nonumber\\
    &\leq c\sqrt{\frac{\log (d/\delta)}{m}}\Big(\|\fl{b}^{(i,\ell-1,j-1)}-\fl{b}^{\star (i)}\|_2+\|\fl{U}^{+(\ell-1)}\fl{w}^{(i, \ell-1)} - \fl{U}^{\star}\fl{w}^{\star(i)}\|_2\Big)+\sigma\sqrt{\frac{\log (d\delta^{-1})}{m}}\nonumber
\end{align}
\begin{align}
    &\lr{\fl{c}^{(i,\ell-1,j)}-\fl{b}^{\star (i)} }_{\infty}\nonumber \\
    &\leq \|\fl{U}^{+(\ell-1)}\fl{w}^{(i, \ell-1)} - \fl{U}^{\star}\fl{w}^{\star(i)}\|_{\infty} \nonumber\\
    &\quad +c\sqrt{\frac{\log (d/\delta)}{m}}\Big(\|\fl{b}^{(i,\ell-1,j-1)}-\fl{b}^{\star (i)}\|_2+\|\fl{U}^{+(\ell-1)}\fl{w}^{(i, \ell-1)} - \fl{U}^{\star}\fl{w}^{\star(i)}\|_2+\sigma\sqrt{\frac{\log (d\delta^{-1})}{m}}\Big)\nonumber\\
    & \triangleq \widehat{\Delta}^{(\ell-1,j)}\nonumber\\
    &\leq \underbrace{\alpha^{(\ell-1)} +c\sqrt{\frac{\log (d/\delta)}{m}} \Big(\gamma^{(\ell-1, j-1)}+\beta^{(\ell-1)}\Big)+\sigma\sqrt{\frac{\log (d\delta^{-1})}{m}}}_{\triangleq \Delta^{(\ell-1,j)}}
    \label{eq:alg2-lemma-unknown-support-term1}
\end{align}
w.p. $\geq 1-\delta$. Now, we have 
\begin{align}
    \fl{b}^{(i,\ell-1,j)} &= \s{HT}(\fl{c}^{(i,\ell-1,j)},\Delta^{(\ell-1, j)})\nonumber \\
    \implies \fl{b}^{(i,\ell-1,j)}_s &= \begin{cases}
    \fl{c}^{(i,\ell-1,j)}_s & \text{if } |\fl{c}^{(i,\ell-1,j)}_s| > \Delta^{(\ell-1, j)},\\
    0 & \text{ otherwise},
  \end{cases}\label{eq:alg2-lemma-unknown-support-term-temp}\\
  \implies |\fl{b}^{(i,\ell-1,j)}_s - \fl{b}^{\star(i)}_s | &= \begin{cases}
    |\fl{c}^{(i,\ell-1,j)}_s - \fl{b}^{\star(i)}_s| & \text{if } |\fl{c}^{(i,\ell-1,j)}_s| > \Delta^{(\ell-1, j)},\\
    |\fl{b}^{\star(i)}_s| & \text{ otherwise}.
  \end{cases}\label{eq:alg2-lemma-unknown-support-term2}
\end{align}
Therefore, by using \eqref{eq:alg2-lemma-unknown-support-term1} and \eqref{eq:alg2-lemma-unknown-support-term2}, we have
$\lr{\fl{b}^{(i,\ell-1,j)}-\fl{b}^{\star (i)} }_{\infty}\le \Delta^{(\ell-1, j)}$ and $\lr{\fl{b}^{(i,\ell-1,j)}-\fl{b}^{\star (i)} }_{2}\le 2\sqrt{k}\Delta^{(\ell-1, j)}$.
Further, from equation \eqref{eq:alg2-lemma-unknown-support-term1} we have for any coordinate $s$
\begin{align}
    \Big|\Big(\fl{c}^{(i,\ell-1,j)}-\fl{b}^{\star (i)}\Big)_s\Big| &\leq \Delta^{(\ell-1, j)}.\nonumber
\end{align}
Thus, if $s \notin	 \s{support}(\fl{b}^{\star(i)})$, then the above gives $|\fl{c}^{(i,\ell-1,j)}| \leq \Delta^{(\ell-1, j)}$. Using this in \eqref{eq:alg2-lemma-unknown-support-term-temp} then gives $\fl{b}^{(i,\ell-1,j)}_s = 0$, i.e., $\forall \, s \notin	 \s{support}(\fl{b}^{\star(i)}) \implies s \notin	 \s{support}(\fl{b}^{\star(i, \ell-1, j)})$. Hence, $\s{support}(\fl{b}^{(i,\ell-1,j)}) \subseteq \s{support}(\fl{b}^{\star(i)})$.
\end{proof}

\begin{lemma}\label{inductive-corollary:optimize_dp-lrs-innerloop-output-bounds}
Suppose $c>0$ and $c_1=c\sqrt{\frac{k\log (d/\delta)}{m}} \leq \frac{1}{2}$ be positive constants.
For any iteration indexed by $\ell>0$, after 
\begin{align}
     T^{(\ell)}=\Omega\Big(\ell\max_i \log\Big(\frac{\gamma^{(\ell-1)}}{\epsilon}\Big)\Big)\nonumber
\end{align}
iterations of the inner loop at Step 3 in the $\ell^{\s{th}}$ iteration of the outer loop, we have
\begin{align}
    \lr{\fl{b}^{(i,\ell)}-\fl{b}^{\star (i)} }_{2} \leq 2\varphi^{(i)} + \epsilon
    \text{ and } \lr{\fl{b}^{(i,\ell)}-\fl{b}^{\star (i)} }_{\infty} \leq \frac{1}{\sqrt{k}}\Big(2\varphi^{(i)} + \epsilon\Big)\nonumber
\end{align}
with probability at least $1-T^{(\ell)}\delta$, where $\varphi^{(i)}$ is an upperbound on $\widehat{\varphi}^{(i)}$ s.t. 
\begin{align}
    \widehat{\varphi}^{(i)} &=     \resizebox{.9\textwidth}{!}{$2\Big(\sqrt{k}\|\fl{U}^{+(\ell-1)}\fl{w}^{(i, \ell-1)} - \fl{U}^{\star}\fl{w}^{\star(i)}\|_{\infty}+c_1\|\fl{U}^{+(\ell-1)}\fl{w}^{(i, \ell-1)} - \fl{U}^{\star}\fl{w}^{\star(i)}\|_2+\sigma\sqrt{\frac{k\log (d\delta^{-1})}{m}}\Big)$} \nonumber\\
    &\leq \varphi^{(i)} = 2\Big(\sqrt{k}\alpha^{(\ell-1)}+c_1\beta^{(\ell-1)}+\sigma\sqrt{\frac{k\log (d\delta^{-1})}{m}}\Big).\nonumber
\end{align}
and $\alpha^{(\ell-1)}$, $\gamma^{(\ell-1, j-1)}$ and $\beta^{(\ell-1)}$ denote upperbounds on $\|\fl{U}^{+(\ell-1)}\fl{w}^{(i, \ell-1)} - \fl{U}^{\star}\fl{w}^{\star(i)}\|_{\infty}$, $\|\fl{b}^{(i,\ell-1,j-1)}-\fl{b}^{\star (i)}\|_2$ and $\|\fl{U}^{+(\ell-1)}\fl{w}^{(i, \ell-1)} - \fl{U}^{\star}\fl{w}^{\star(i)}\|_2$ respectively. Furthermore, we will also have that
$\s{support}(\fl{b}^{(i,\ell}) \subseteq \s{support}(\fl{b}^{\star(i)})$.
\end{lemma}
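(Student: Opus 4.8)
The statement follows by unrolling the one-step bound of Lemma~\ref{lemma:alg2-innerloop-iteration-bounds} over the $T^{(\ell)}$ inner iterations. For brevity write $\gamma^{(j)}\equiv\gamma^{(\ell-1,j)}$ for the $\ell_2$ error bound on $\fl{b}^{(i,\ell-1,j)}-\fl{b}^{\star(i)}$ and $c_1=c\sqrt{k\log(d/\delta)/m}$. First I would substitute the definition of $\Delta^{(\ell-1,j)}$ into the identity $\gamma^{(j)}=2\sqrt{k}\,\Delta^{(\ell-1,j)}$ from Lemma~\ref{lemma:alg2-innerloop-iteration-bounds}; collecting terms and using $2\sqrt{k}\cdot c\sqrt{\log(d/\delta)/m}=2c_1$ turns this into the affine recursion $\gamma^{(j)}\le 2c_1\,\gamma^{(j-1)}+\varphi^{(i)}$, where the constant term $2\sqrt{k}\alpha^{(\ell-1)}+2c_1\beta^{(\ell-1)}+2\sigma\sqrt{k\log(d\delta^{-1})/m}$ is \emph{exactly} $\varphi^{(i)}$, and $\varphi^{(i)}$ dominates $\widehat\varphi^{(i)}$ simply because $\alpha^{(\ell-1)}$ and $\beta^{(\ell-1)}$ are upper bounds on $\lr{\fl{U}^{+(\ell-1)}\fl{w}^{(i,\ell-1)}-\fl{U}^\star\fl{w}^{\star(i)}}_\infty$ and $\lr{\fl{U}^{+(\ell-1)}\fl{w}^{(i,\ell-1)}-\fl{U}^\star\fl{w}^{\star(i)}}_2$ respectively. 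Iterating the recursion gives $\gamma^{(j)}\le(2c_1)^j\gamma^{(0)}+\varphi^{(i)}/(1-2c_1)$.

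Next, since the sample-complexity hypothesis $m=\widetilde\Omega(k)$ forces $c_1\le 1/4$, we have $1-2c_1\ge 1/2$, hence $\varphi^{(i)}/(1-2c_1)\le 2\varphi^{(i)}$, and $2c_1\le 1/2$ makes each inner step contract the residual by a factor $\ge 2$. The inner loop is initialized with $\fl{b}^{(i,\ell-1)}$, so $\gamma^{(0)}=\lr{\fl{b}^{(i,\ell-1)}-\fl{b}^{\star(i)}}_2\le\sqrt{2k}\,\gamma^{(\ell-1)}$ (both vectors are $k$-sparse); therefore $T^{(\ell)}=\Omega\big(\ell\log(\gamma^{(\ell-1)}/\epsilon)\big)$ iterations — where the extra $\ell$ factor and the implicit constant comfortably absorb the $O(\log k)$ gap from the $\ell_\infty$-to-$\ell_2$ conversion in $\gamma^{(0)}$ and the union-bound bookkeeping below — make $(2c_1)^{T^{(\ell)}}\gamma^{(0)}\le\epsilon$. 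This yields $\gamma^{(T^{(\ell)})}=\lr{\fl{b}^{(i,\ell)}-\fl{b}^{\star(i)}}_2\le 2\varphi^{(i)}+\epsilon$. The $\ell_\infty$ bound is then immediate from the relation $\lr{\fl{b}^{(i,\ell-1,j)}-\fl{b}^{\star(i)}}_\infty\le 2\Delta^{(\ell-1,j)}=\gamma^{(j)}/\sqrt{k}$ in Lemma~\ref{lemma:alg2-innerloop-iteration-bounds} evaluated at $j=T^{(\ell)}$, and $\s{support}(\fl{b}^{(i,\ell)})\subseteq\s{support}(\fl{b}^{\star(i)})$ is inherited from the same lemma at the last step.

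For the probability accounting I would union-bound over the $T^{(\ell)}$ inner iterations, each invocation of Lemma~\ref{lemma:alg2-innerloop-iteration-bounds} holding with probability $\ge 1-\delta$, for a total failure probability $\le T^{(\ell)}\delta$. The one delicate point — and the main obstacle — is that the inner loop reuses the \emph{same} batch $(\fl{X}^{(i)},\fl{y}^{(i)})$ at every step, so the iterates $\fl{b}^{(i,\ell-1,j-1)}$ are data-dependent and the concentration bound (Lemma~\ref{lemma:useful1}, applied with the current error vector in the role of $\fl{b}$) cannot be invoked pointwise for a random argument. The resolution is the support-containment property itself: after the first hard-thresholding step every error vector lies in the \emph{fixed} $k$-dimensional coordinate subspace spanned by $\{\fl{e}_s:s\in\s{support}(\fl{b}^{\star(i)})\}$, so one only needs the deviations $\big|\fl{e}_s^{\s{T}}\big(\tfrac1m(\fl{X}^{(i)})^{\s{T}}\fl{X}^{(i)}-\fl{I}\big)\fl{v}\big|$ to be controlled uniformly over a $\tfrac14$-net of the unit ball of that subspace; such a net has cardinality $e^{O(k)}$ and the resulting $O(k)$ increase in the log-terms is absorbed into $m=\widetilde\Omega(k)$. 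On this single good event the per-step bound holds simultaneously for all iterates, which makes the chaining of the previous two paragraphs rigorous.
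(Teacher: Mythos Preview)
Your proposal is correct and follows essentially the same approach as the paper: both derive the affine recursion $\gamma^{(j)}\le 2c_1\gamma^{(j-1)}+\varphi^{(i)}$ from Lemma~\ref{lemma:alg2-innerloop-iteration-bounds}, unroll it into a geometric series bounded by $\varphi^{(i)}/(1-2c_1)+(2c_1)^{T^{(\ell)}}\gamma^{(\ell-1)}$, and then choose $T^{(\ell)}$ to kill the transient term below $\epsilon$; the $\ell_\infty$ bound and support containment are read off the last invocation of the inner-step lemma, and the failure probability is union-bounded over the $T^{(\ell)}$ steps.

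The one place you go beyond the paper is the final paragraph on data reuse. The paper's proof simply writes ``w.p.~$\ge 1-T^{(\ell)}\delta_0$'' after the unrolling, implicitly treating each invocation of Lemma~\ref{lemma:alg2-innerloop-iteration-bounds} as an independent event, without addressing that the iterate $\fl{b}^{(i,\ell-1,j-1)}$ fed into Lemma~\ref{lemma:useful1} is a function of the same batch $(\fl{X}^{(i)},\fl{y}^{(i)})$. Your observation that, once support containment kicks in, all error vectors live in the fixed $k$-dimensional coordinate subspace $\mathrm{span}\{\fl{e}_s:s\in\s{support}(\fl{b}^{\star(i)})\}$---so a single $\tfrac14$-net of size $e^{O(k)}$ suffices to make the concentration uniform over all iterates---is a clean way to close this gap that the paper leaves open, at a cost absorbed by $m=\widetilde\Omega(k)$. (One small addendum: the very first inner step needs the concentration applied to the \emph{initial} error $\fl{b}^{(i,\ell-1)}-\fl{b}^{\star(i)}$, but since that vector comes from the previous outer iteration and is independent of the current fresh batch, that step is unproblematic; the fixed vector $\fl{U}^{+(\ell-1)}\fl{w}^{(i,\ell-1)}-\fl{U}^\star\fl{w}^{\star(i)}$ is likewise batch-independent and needs its concentration established only once.)
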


\begin{proof}
Let $\varphi^{(i)}$ be an upperbound on $\widehat{\varphi}^{(i)}$ where, 
\begin{align}
    \widehat{\varphi}^{(i)} &= 2\Big(\sqrt{k}\|\fl{v}\|_{\infty}+c_1\|\fl{v}\|_2+\sigma\sqrt{\frac{k\log (d\delta^{-1})}{m}}\Big) \leq \varphi^{(i)}.\nonumber
\end{align}
where $\fl{v} := \fl{U}^{+(\ell-1)}\fl{w}^{(i, \ell-1)} - \fl{U}^{\star}\fl{w}^{\star(i)}$. Then $\varphi^{(i)} := 2\Big(\sqrt{k}\alpha^{(\ell-1)}+c_1\beta^{(\ell-1)}\Big)$ From Lemma~\ref{lemma:alg2-innerloop-iteration-bounds}, we have for each iteration $j$,
\begin{align}
    \gamma^{(\ell-1, j)} := \lr{\fl{b}^{(i,\ell-1,j)}-\fl{b}^{\star (i)} }_{2} &\leq \varphi^{(i)} + 2c_1\gamma^{(\ell-1, j-1)}\label{eq:lemma-alg2-innerloop-output-bounds-2norm-rec}\\
    \text{and } \lr{\fl{b}^{(i,\ell-1,j)}-\fl{b}^{\star (i)} }_{\infty} &\leq  \frac{\varphi^{(i)}}{\sqrt{k}} + \frac{2c_1}{\sqrt{k}}\gamma^{(\ell-1, j-1)}\label{eq:lemma-alg2-innerloop-output-bounds-inftynorm-rec}
\end{align}
with probability at least $1-\delta_0$, where $c_1 = c\sqrt{\frac{k\log(d/\delta_0)}{m}}$.
Therefore after $T^{(\ell)}$ iterations of Step 3 inner loop at the $\ell^{\s{th}}$ iteration of the outer loop, we have using \eqref{eq:lemma-alg2-innerloop-output-bounds-2norm-rec}:
\begin{align}
    \lr{\fl{b}^{(i,\ell)}-\fl{b}^{\star (i)} }_{2} &=     \lr{\fl{b}^{(i,\ell-1,T^{(\ell)})}-\fl{b}^{\star (i)} }_{2} \nonumber\\
    &\leq \varphi^{(i)} + 2c_1\gamma^{(\ell-1, T^{(\ell)}-1)}\nonumber\\
    &\leq \varphi^{(i)} + 2c_1\varphi^{(i)} + (2c_1)^2\gamma^{(\ell-1, T^{(\ell)}-2)}\nonumber\\
    & \dots\nonumber\\
    &\leq \varphi^{(i)}(1 + (2c_1)\varphi^{(i)} + (2c_1)^2 + \dots + (2c_1)^{T^{(\ell)}-1})+ (2c_1)^{T^{(\ell)}}\gamma^{(\ell-1, 0)}\nonumber\\
    &= \varphi^{(i)}\frac{1 - (2c_1)^{T^{(\ell)}}}{1-2c_1} + (2c_1)^{T^{(\ell)}}\gamma^{(\ell-1)}\nonumber\\
    &\leq \varphi^{(i)}\frac{1}{1-2c_1} + (2c_1)^{T^{(\ell)}}\gamma^{(\ell-1)}\label{eq:lemma-alg2-innerloop-output-bounds-2norm-closed},
\end{align}
w.p. $\geq 1-T^{(\ell)}\delta_0$ where $\gamma^{(\ell-1)} = \gamma^{(\ell-1, 0)}$ is the upper bound on $\|\fl{b}^{(i,\ell-1)}-\fl{b}^{\star (i)}\|_2 = \|\fl{b}^{(i,\ell-1,0)}-\fl{b}^{\star (i)}\|_2$. Similarly, unfolding \eqref{eq:lemma-alg2-innerloop-output-bounds-inftynorm-rec} gives
\begin{align}
    \lr{\fl{b}^{(i,\ell)}-\fl{b}^{\star (i)} }_{\infty} &= \lr{\fl{b}^{(i,\ell-1,T^{(\ell)})}-\fl{b}^{\star (i)} }_{\infty}\nonumber \\
    &\leq \frac{\varphi^{(i)}}{\sqrt{k}} + \frac{2c_1}{\sqrt{k}}\gamma^{(\ell-1, T^{(\ell)}-1)}\nonumber\\
    &\leq \frac{\varphi^{(i)}}{\sqrt{k}} + \frac{2c_1\varphi^{(i)}}{\sqrt{k}} + \frac{(2c_1)^2\gamma^{(\ell-1, T^{(\ell)}-2)}}{\sqrt{k}}\nonumber\\
    &\dots\nonumber\\
    &\leq \frac{\varphi^{(i)}}{\sqrt{k}}(1 + (2c_1)\varphi^{(i)} + (2c_1)^2 + \dots (2c_1)^{T^{(\ell)}-1})+ \frac{(2c_1)^{T^{(\ell)}}\gamma^{(\ell-1, 0)}}{\sqrt{k}}\nonumber\\
    &\leq \frac{\varphi^{(i)}}{\sqrt{k}}\frac{1}{1-2c_1} + (2c_1)^{T^{(\ell)}}\frac{\gamma^{(\ell-1)}}{\sqrt{k}}\label{eq:lemma-alg2-innerloop-output-bounds-inftynorm-closed}
\end{align}
w.p. $\geq 1-T^{(\ell)}\delta_0$. Therefore, if we set $T^{(\ell)} \geq \max_i \frac{1}{\log (1/2c_1)}\Big(\frac{\gamma^{(\ell-1)}}{\epsilon}\Big)$ and $c_1 < \frac{1}{2}$ is sufficiently small then  \eqref{eq:lemma-alg2-innerloop-output-bounds-2norm-closed} gives us
\begin{align}
    \lr{\fl{b}^{(i,\ell)}-\fl{b}^{\star (i)} }_{2} &\leq 2\varphi^{(i)} + \epsilon\label{eq:lemma-alg2-innerloop-output-bounds-2norm-value}
\end{align}
and \eqref{eq:lemma-alg2-innerloop-output-bounds-inftynorm-closed} gives us
\begin{align}
    \lr{\fl{b}^{(i,\ell)}-\fl{b}^{\star (i)} }_{\infty} &\leq \frac{1}{\sqrt{k}}\Big(2\varphi^{(i)} + \epsilon\Big)\label{eq:lemma-alg2-innerloop-output-bounds-inftynorm-value}
\end{align}
w.p. $\geq 1-T^{(\ell)}\delta_0$. Equations \eqref{eq:lemma-alg2-innerloop-output-bounds-2norm-value} \eqref{eq:lemma-alg2-innerloop-output-bounds-inftynorm-value} give us the required result. Also, note that we set 
\begin{align}
    T^{(\ell)}=\Omega\Big(\ell\max_i \log\Big(\frac{\gamma^{(\ell-1)}}{\epsilon}\Big)\Big)\label{eq:lemma-alg2-innerloop-output-bounds-T(ell)-unplugged}
\end{align}
\end{proof}

\begin{coro}\label{inductive-corollary:optimize_dp-lrs-b-bounds}
    Using Corollaries~\ref{inductive-corollary:optimize_dp-lrs-h,H-bounds},
    \ref{inductive-inductive-corollary:optimize_dp-lrs-w-incoherence}, \ref{inductive-corollary:optimize_dp-lrs-U-F-norm-bounds}, \ref{inductive-corollary:optimize_dp-lrs-R-bounds} and \ref{inductive-corollary:optimize_dp-lrs-U-infty-norm-bounds}, we have
    \begin{align}
        \lr{\fl{b}^{(i,\ell+1)}-\fl{b}^{\star (i)} }_{2} &\leq c'\max\{\|\fl{w}^{\star(i)}\|_2, \epsilon\}\s{B}_{\fl{U^{(\ell)}}}\sqrt{\frac{\lambda_r^\star}{\lambda_1^\star}}\nonumber\\
        \text{ and } \lr{\fl{b}^{(i,\ell+1)}-\fl{b}^{\star (i)} }_{\infty} &\leq c'\max\{\|\fl{w}^{\star(i)}\|_2, \epsilon\}\s{B}_{\fl{U^{(\ell)}}}\sqrt{\frac{\lambda_r^\star}{\lambda_1^\star k}}\nonumber
    \end{align}
    with $c' = \max\Big(\cO(1), O\Big(\frac{1} {\s{B}_{\fl{U}^{(0)}}}\frac{\lambda^{\star}_1}{\lambda^{\star}_r}\Big)\Big)$, and for sufficiently large constants $\tilde{c},\hat{c}>0$
\begin{align}
    &\Lambda = \tilde{c}\Big( \sqrt{\lambda^{\star}_r\mu^{\star}}\Big(\frac{\sigma_2 r}{mt\lambda^{\star}_r}+\frac{\sigma_1r^{3/2}}{mt\lambda_r^\star}\sqrt{rd\log rd}+\sigma\sqrt{\frac{r^3d\mu^\star\log^2 (r\delta^{-1})}{mt\lambda_r^\star}}\Big)\nonumber\\
    &\qquad + \sigma\Big(\sqrt{\frac{r^3\log^2(r\delta^{-1})}{m\lambda^{\star}_r}}\Big) + \sqrt{\frac{k\log (d\delta^{-1})}{m}}\Big)\Big) \nonumber\\
    &\Lambda' = \hat{c}\Big(\frac{\Lambda}{\sqrt{\mu^{\star}\lambda^{\star}_r}}\Big).\nonumber
\end{align}
\end{coro}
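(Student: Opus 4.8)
The vector $\fl{b}^{(i,\ell+1)}$ is the output of a single call to \textsc{OptimizeSparseVector} (Algorithm~\ref{algo:optimize_b}) inside outer iteration $\ell+1$ of Algorithm~\ref{algo:optimize_dp-lrs}, with the ``known part'' set to $\fl{v}=\fl{U}^{+(\ell)}\fl{w}^{(i,\ell)}$ and $T^{(\ell+1)}$ inner steps. The plan is to apply Lemma~\ref{inductive-corollary:optimize_dp-lrs-innerloop-output-bounds} first: taking $T^{(\ell+1)}=\Omega(\ell\log(\gamma^{(\ell)}/\epsilon))$ and the inner target $\epsilon$ below the eventual noise floor, this reduces the corollary to controlling the two scalars $\alpha:=\|\fl{v}-\fl{U}^{\star}\fl{w}^{\star(i)}\|_{\infty}$ and $\beta:=\|\fl{v}-\fl{U}^{\star}\fl{w}^{\star(i)}\|_{2}$, after which $\|\fl{b}^{(i,\ell+1)}-\fl{b}^{\star(i)}\|_{2}\lesssim\sqrt{k}\,\alpha+\beta+\sigma\sqrt{k\log(d/\delta)/m}$, $\|\fl{b}^{(i,\ell+1)}-\fl{b}^{\star(i)}\|_{\infty}\lesssim\alpha+\beta/\sqrt{k}+\sigma\sqrt{\log(d/\delta)/m}$, and $\s{supp}(\fl{b}^{(i,\ell+1)})\subseteq\s{supp}(\fl{b}^{\star(i)})$.

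Everything then hinges on the perturbation $\fl{v}-\fl{U}^{\star}\fl{w}^{\star(i)}=\fl{U}^{+(\ell)}\fl{w}^{(i,\ell)}-\fl{U}^{\star}\fl{w}^{\star(i)}$. With $\fl{Q}^{(\ell-1)}=(\fl{U}^{\star})^{\s{T}}\fl{U}^{+(\ell-1)}$ and $\fl{h}^{(i,\ell)}=\fl{w}^{(i,\ell)}-(\fl{Q}^{(\ell-1)})^{-1}\fl{w}^{\star(i)}$ as in Corollary~\ref{inductive-corollary:optimize_dp-lrs-h,H-bounds}, I would use
\[
\fl{v}-\fl{U}^{\star}\fl{w}^{\star(i)}=\big(\fl{U}^{+(\ell)}-\fl{U}^{\star}\fl{Q}^{(\ell-1)}\big)(\fl{Q}^{(\ell-1)})^{-1}\fl{w}^{\star(i)}+\fl{U}^{+(\ell)}\fl{h}^{(i,\ell)}.
\]
The second term is immediate: $\|\fl{U}^{+(\ell)}\fl{h}^{(i,\ell)}\|_{2}\le\|\fl{h}^{(i,\ell)}\|_{2}$ by orthonormality and $\|\fl{U}^{+(\ell)}\fl{h}^{(i,\ell)}\|_{\infty}\le\|\fl{U}^{+(\ell)}\|_{2,\infty}\|\fl{h}^{(i,\ell)}\|_{2}$, with $\|\fl{h}^{(i,\ell)}\|_{2}$ from Corollary~\ref{inductive-corollary:optimize_dp-lrs-h,H-bounds} and $\|\fl{U}^{+(\ell)}\|_{2,\infty}\le\|\fl{R}^{-1}\|\cdot\|\fl{U}^{(\ell)}\|_{2,\infty}$ from Corollaries~\ref{inductive-corollary:optimize_dp-lrs-R-bounds} and~\ref{inductive-corollary:optimize_dp-lrs-U-infty-norm-bounds}; the extra $1/\sqrt{r\mu^{\star}}$ in Corollary~\ref{inductive-corollary:optimize_dp-lrs-h,H-bounds} makes this summand dominated by the subspace term once the $\sqrt{k}$/$c_{1}$ factors of Lemma~\ref{inductive-corollary:optimize_dp-lrs-innerloop-output-bounds} are applied. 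For the first term, I would pass to the pre-QR iterate via $\fl{U}^{+(\ell)}=\fl{U}^{(\ell)}\fl{R}^{-1}$, writing $\fl{U}^{+(\ell)}-\fl{U}^{\star}\fl{Q}^{(\ell-1)}=(\fl{U}^{(\ell)}-\fl{U}^{\star}\fl{Q}^{(\ell-1)})\fl{R}^{-1}+\fl{U}^{\star}\fl{Q}^{(\ell-1)}(\fl{I}-\fl{R})\fl{R}^{-1}$: the first piece is controlled in Frobenius and $\s{L}_{2,\infty}$ norm by Corollaries~\ref{inductive-corollary:optimize_dp-lrs-U-F-norm-bounds} and~\ref{inductive-corollary:optimize_dp-lrs-U-infty-norm-bounds}, and for the second one uses $\|\fl{U}^{\star}\fl{Q}^{(\ell-1)}\|\le1$, $\|\fl{U}^{\star}\|_{2,\infty}\le\sqrt{\nu^{\star}/k}$ (Assumption~A3), $\|(\fl{Q}^{(\ell-1)})^{-1}\|\le2$ (part (1) of Assumption~\ref{assum:init}), $\|\fl{R}^{-1}\|\le2+c''$ (Corollary~\ref{inductive-corollary:optimize_dp-lrs-R-bounds}), and the observation that $\|\fl{I}-\fl{R}\|$ is of the same small order as the subspace error since $\fl{R}^{\s{T}}\fl{R}=\fl{U}^{(\ell)\s{T}}\fl{U}^{(\ell)}$ deviates from $\fl{I}$ by only $\cO(\|\fl{U}^{(\ell)}-\fl{U}^{\star}\fl{Q}^{(\ell-1)}\|+\|(\fl{I}-\fl{U}^{\star}(\fl{U}^{\star})^{\s{T}})\fl{U}^{+(\ell-1)}\|^{2})$. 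This yields $\alpha\lesssim\tfrac{1}{\sqrt{k}}\max\{\epsilon,\|\fl{w}^{\star(i)}\|_{2}\}\s{B}_{\fl{U}^{(\ell)}}\sqrt{\lambda_{r}^{\star}/\lambda_{1}^{\star}}+\text{(noise floor)}$ and $\beta\lesssim\max\{\epsilon,\|\fl{w}^{\star(i)}\|_{2}\}\s{B}_{\fl{U}^{(\ell)}}\sqrt{\lambda_{r}^{\star}/\lambda_{1}^{\star}}+\text{(noise floor)}$, with $\s{B}_{\fl{U}^{(\ell)}}$ read off as the leading coefficient of the recursion in Corollary~\ref{inductive-corollary:optimize_dp-lrs-U-F-norm-bounds}.

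Feeding these back into the two reduced inequalities, the leading terms collapse — the $\sqrt{k}$ multiplier exactly cancelling the $1/\sqrt{k}$ in $\alpha$ — into $c'\max\{\epsilon,\|\fl{w}^{\star(i)}\|_{2}\}\s{B}_{\fl{U}^{(\ell)}}\sqrt{\lambda_{r}^{\star}/\lambda_{1}^{\star}}$, with the same $c'$ as forced by the base case (Section~\ref{inductive-corollary:optimize_dp-lrs-base-case}), where the perturbation is exactly $-\fl{U}^{\star}\fl{w}^{\star(i)}$ and one needs $c'\s{B}_{\fl{U}^{(0)}}\sqrt{\lambda_{r}^{\star}/\lambda_{1}^{\star}}\gtrsim\sqrt{\nu^{\star}}+c_{1}$, i.e.\ $c'=\max(\cO(1),\cO(\s{B}_{\fl{U}^{(0)}}^{-1}\lambda_{1}^{\star}/\lambda_{r}^{\star}))$. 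All residual summands — the $\sigma_{1},\sigma_{2},\sigma$ contributions and the $\Lambda',\Lambda$ carry-overs from the cited corollaries — are collected, after the $\ell_{\infty}$ part is rescaled by $\sqrt{k}$, into $\cO(\Lambda)$ for the $\ell_{2}$ bound and $\cO(\Lambda/\sqrt{k})$ for the $\ell_{\infty}$ bound; requiring this to stay $\le\Lambda$ (resp.\ $\le\Lambda/\sqrt{k}$) is exactly what forces the displayed choice of $\Lambda,\Lambda'$ once $\tilde{c},\hat{c}$ are taken large enough relative to the constants produced above, and it is this self-consistency that propagates Assumption~\ref{assum:init}, parts (3)--(4), from step $\ell$ to step $\ell+1$.

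The $\ell_{2}$ estimates are essentially mechanical once the corollaries are available (operator-norm bounds $\le1$ for $\fl{U}^{+(\ell)}$ and $\fl{U}^{\star}\fl{Q}^{(\ell-1)}$, and $\le2$ for $(\fl{Q}^{(\ell-1)})^{-1}$, suffice). The delicate part, and the main obstacle, is the $\s{L}_{2,\infty}$ (equivalently $\ell_{\infty}$) control, which must thread three things simultaneously: (i) that the intermediate iterate $\fl{U}^{(\ell)}-\fl{U}^{\star}\fl{Q}^{(\ell-1)}$ genuinely has $\s{L}_{2,\infty}$ norm a $1/\sqrt{k}$-fraction of its Frobenius norm — i.e.\ inherits incoherence — which is Corollary~\ref{inductive-corollary:optimize_dp-lrs-U-infty-norm-bounds} and rests on the smallness hypotheses on $\zeta/t$, $r^{3}d/(mt)$, $\sigma_{1}/(mt)$, etc.; (ii) that the orthogonalization $\fl{U}^{(\ell)}\mapsto\fl{U}^{+(\ell)}$ through the triangular factor $\fl{R}$ inflates norms by only a constant and contributes merely a $\sqrt{\nu^{\star}/k}\,\|\fl{I}-\fl{R}\|$-sized term in $\ell_{\infty}$, which is harmless only because $\|\fl{I}-\fl{R}\|$ is of subspace-error order; and (iii) that after the $\sqrt{k}$ rescaling the noise-floor pieces still sum to at most $\Lambda$, which is precisely what pins down the admissible size of $\Lambda$ and hence, through the already-imposed sample-complexity hypotheses, the stated bounds on $m$, $mt$, $t$, $\zeta$. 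Once these are verified the two claimed bounds on $\|\fl{b}^{(i,\ell+1)}-\fl{b}^{\star(i)}\|_{2,\infty}$ follow and the induction closes.
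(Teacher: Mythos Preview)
Your proposal is correct and follows essentially the paper's approach: reduce via Lemma~\ref{inductive-corollary:optimize_dp-lrs-innerloop-output-bounds} to bounding $\alpha,\beta$, split $\fl{v}-\fl{U}^{\star}\fl{w}^{\star(i)}$ into an $\fl{h}^{(i,\ell)}$--term and a $(\fl{U}-\fl{U}^{\star}\fl{Q}^{(\ell-1)})$--term, bound each piece with Corollaries~\ref{inductive-corollary:optimize_dp-lrs-h,H-bounds}, \ref{inductive-corollary:optimize_dp-lrs-U-F-norm-bounds}, \ref{inductive-corollary:optimize_dp-lrs-R-bounds}, \ref{inductive-corollary:optimize_dp-lrs-U-infty-norm-bounds}, and then fix $\Lambda,\Lambda'$ by self-consistency. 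The only cosmetic difference is that the paper works directly with the pre-QR iterate $\fl{U}^{(\ell)}$ (absorbing the $\|\fl{R}\|,\|\fl{R}^{-1}\|=\cO(1)$ factors from Corollary~\ref{inductive-corollary:optimize_dp-lrs-R-bounds} into constants), whereas you pass through $\fl{U}^{+(\ell)}=\fl{U}^{(\ell)}\fl{R}^{-1}$ and explicitly peel off the $(\fl{I}-\fl{R})$ correction; both routes give the same bounds.
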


\begin{proof}
    Using Corollary~\ref{inductive-corollary:optimize_dp-lrs-U-infty-norm-bounds} we have $\|\fl{U}^{(\ell)}\fl{w}^{(i, \ell)} - \fl{U}^{\star}\fl{w}^{\star(i)}\|_{\infty}$
        \begin{align}
            &= \|\fl{U}^{(\ell)}\fl{w}^{(i, \ell)} - \fl{U}^{(\ell)}(\fl{Q}^{(\ell-1)})^{-1}\fl{w}^{\star(i)} + \fl{U}^{(\ell)}(\fl{Q}^{(\ell-1)})^{-1}\fl{w}^{\star(i)} - \fl{U}^{\star}\fl{w}^{\star(i)}\|_\infty\nonumber\\
            &\leq \|\fl{U}^{(\ell)}(\fl{w}^{(i, \ell)} - (\fl{Q}^{(\ell-1)})^{-1}\fl{w}^{\star(i)})\|_\infty + \|(\fl{U}^{(\ell)} - \fl{U}^{\star}\fl{Q}^{(\ell-1)})(\fl{Q}^{(\ell-1)})^{-1}\fl{w}^{\star(i)}\|_\infty\nonumber\\
            &\leq \|\fl{U}^{(\ell)}\|_{2, \infty}\|\fl{w}^{(i, \ell-1)} - (\fl{Q}^{(\ell-1)})^{-1}\fl{w}^{\star(i)})\|_2 + \|\fl{U}^{(\ell)} - \fl{U}^{\star}\fl{Q}^{(\ell-1)}\|_{2,\infty}\|(\fl{Q}^{(\ell-1)})^{-1}\fl{w}^{\star(i)}\|_2\nonumber\\
            &= \cO\Big(\frac{1}{\sqrt{k\mu^\star}}\Big)\cO\Big(\frac{\max\{\epsilon, \|\fl{w}^{\star(i)}\|_2\}\s{B}_{\fl{U^{(\ell-1)}}}\sqrt{\frac{\lambda_r^\star}{\lambda_1^\star}}}{\sqrt{r\mu^\star}} + \frac{\Lambda'\|\fl{w}^{\star(i)}\|_2}{\sqrt{r\mu^\star}} + \frac{\Lambda}{\sqrt{r\mu^\star}} + \sigma\sqrt{\frac{r\log^2 (r\delta^{-1})}{m}}\Big) \nonumber\\
            &\qquad + \cO\Big( \s{B}_{\fl{U^{(\ell-1)}}}\sqrt{\frac{\lambda_r^\star}{\lambda_1^\star k}} + \frac{1}{\sqrt{k}}\Big\{\Lambda' + \frac{\Lambda}{\sqrt{\mu^\star\lambda_r^\star}} \nonumber\\
            &\qquad + \frac{\sigma_2r}{mt\lambda_r^\star}\sqrt{rd\log(rd)} + \frac{\sigma_1r^{3/2}}{mt\lambda_r^\star}\sqrt{rd\log rd} + \sigma\sqrt{\frac{r^3d\mu^\star\log^2 (r\delta^{-1})}{mt\lambda_r^\star}}\Big\}\Big)2\|\fl{w}^{\star(i)}\|_2\nonumber\\
            &=\frac{1}{\sqrt{k}}\Big\{\cO\Big(\max\{\epsilon, \|\fl{w}^{\star(i)}\|_2\}\s{B}_{\fl{U^{(\ell-1)}}}\sqrt{\frac{\lambda_r^\star}{\lambda_1^\star}}\Big) + \cO(\Lambda'\|\fl{w}^{\star(i)}\|_2) + \cO(\Lambda)  \nonumber\\
            &\qquad + \cO\Big(\frac{\|\fl{w}^{\star(i)}\|_2\sigma_2r}{mt\lambda_r^\star}\sqrt{rd\log(rd)}\Big) + \cO\Big(\frac{\|\fl{w}^{\star(i)}\|_2\sigma_1r^{3/2}}{mt\lambda_r^\star}\sqrt{rd\log rd}\Big) \nonumber\\
            &\qquad + \cO\Big(\sigma\Big(\frac{1}{\sqrt{\mu^\star}}\sqrt{\frac{r\log^2 (r\delta^{-1})}{m}} + \|\fl{w}^{\star(i)}\|_2\sqrt{\frac{r^3d\mu^\star\log^2 (r\delta^{-1})}{mt\lambda_r^\star}}\Big)\Big)\Big\}\nonumber\\ 
            &:= \alpha^{(\ell-1)}.\label{eq:inductive-corollary-b-bound-1}
        \end{align}
    Similarly, Using Corollaries~\ref{inductive-corollary:optimize_dp-lrs-U-F-norm-bounds} and \ref{inductive-corollary:optimize_dp-lrs-R-bounds} we have $\|\fl{U}^{(\ell)}\fl{w}^{(i, \ell)} - \fl{U}^{\star}\fl{w}^{\star(i)}\|_{2}$ 
    \begin{align}
        &= \|\fl{U}^{(\ell)}\fl{w}^{(i, \ell)} - \fl{U}^{(\ell)}(\fl{Q}^{(\ell-1)})^{-1}\fl{w}^{\star(i)} + \fl{U}^{(\ell)}(\fl{Q}^{(\ell-1)})^{-1}\fl{w}^{\star(i)} - \fl{U}^{\star}\fl{w}^{\star(i)}\|_2\nonumber\\
        &\leq \|\fl{U}^{(\ell)}(\fl{w}^{(i, \ell)} - (\fl{Q}^{(\ell-1)})^{-1}\fl{w}^{\star(i)})\|_2 + \|(\fl{U}^{(\ell)} - \fl{U}^{\star}\fl{Q}^{(\ell-1)})(\fl{Q}^{(\ell-1)})^{-1}\fl{w}^{\star(i)}\|_2\nonumber\\
        &\leq \|\fl{U}^{(\ell)}\|_{2}\|\fl{w}^{(i, \ell)} - (\fl{Q}^{(\ell-1)})^{-1}\fl{w}^{\star(i)})\|_2 + \|\fl{U}^{(\ell)} - \fl{U}^{\star}\fl{Q}^{(\ell-1)}\|_{\s{F}}\|(\fl{Q}^{(\ell-1)})^{-1}\fl{w}^{\star(i)}\|_2\nonumber\\
        &\leq (1+c'')\cdot\cO\Big(\frac{\max\{\epsilon, \|\fl{w}^{\star(i)}\|_2\}\s{B}_{\fl{U^{(\ell-1)}}}\sqrt{\frac{\lambda_r^\star}{\lambda_1^\star}}}{\sqrt{r\mu^\star}} + \frac{\Lambda'\|\fl{w}^{\star(i)}\|_2}{\sqrt{r\mu^\star}} + \frac{\Lambda}{\sqrt{r\mu^\star}} + \sigma\sqrt{\frac{r\log^2 (r\delta^{-1})}{m}}\Big) \nonumber\\
        &\qquad + \cO\Big(\s{B}_{\fl{U^{(\ell-1)}}}\sqrt{\frac{\lambda_r^\star}{\lambda_1^\star}} + \Lambda' + \frac{\Lambda}{\sqrt{\mu^\star\lambda_r^\star}} + \frac{\sigma_2r}{mt\lambda_r^\star}\sqrt{rd\log(rd)} + \frac{\sigma_1r\sqrt{r}}{mt\lambda_r^\star}\sqrt{rd\log rd} \nonumber\\ 
        &\qquad + \sigma\Big(\sqrt{\frac{r^3d\mu^\star\log^2 (r\delta^{-1})}{mt\lambda_r^\star}}+\sqrt{\frac{r^3\log^2(r\delta^{-1})}{m\lambda^{\star}_r}}\Big)\Big)2\|\fl{w}^{\star(i)}\|_2\Big)\nonumber\\
        &= \cO\Big(\max\{\epsilon, \|\fl{w}^{\star(i)}\|_2\}\s{B}_{\fl{U^{(\ell-1)}}}\sqrt{\frac{\lambda_r^\star}{\lambda_1^\star}}\Big) + \cO(\Lambda'\|\fl{w}^{\star(i)}\|_2) + \cO(\Lambda)  \nonumber\\
        &\qquad + \cO\Big(\frac{\|\fl{w}^{\star(i)}\|_2\sigma_2r}{mt\lambda_r^\star}\sqrt{rd\log(rd)}\Big)+ \cO\Big(\frac{\|\fl{w}^{\star(i)}\|_2\sigma_1r^{3/2}}{mt\lambda_r^\star}\sqrt{rd\log rd}\Big) \nonumber\\
        &\qquad  + \cO\Big(\sigma\Big(\frac{1}{\sqrt{\mu^\star}}\sqrt{\frac{r\log^2 (r\delta^{-1})}{m}} + \|\fl{w}^{\star(i)}\|_2\Big(\sqrt{\frac{r^3d\mu^\star\log^2 (r\delta^{-1})}{mt\lambda_r^\star}}+\sqrt{\frac{r^3\log^2(r\delta^{-1})}{m\lambda^{\star}_r}}\Big)\Big)\Big)\nonumber\\ 
        &:= \beta^{(\ell-1)}.\label{eq:inductive-corollary-b-bound-2}
    \end{align}
    
    Using \eqref{eq:inductive-corollary-b-bound-1} and \eqref{eq:inductive-corollary-b-bound-2}, we have:
    \begin{align}
       \varphi^{(i)} &= 2\Big(\sqrt{k}\alpha^{(\ell-1)} + c_1\beta^{(\ell-1)} + \sigma\sqrt{\frac{k\log (d\delta^{-1})}{m}}\Big)\\
       &\leq \cO\Big(\max\{\epsilon, \|\fl{w}^{\star(i)}\|_2\}\s{B}_{\fl{U^{(\ell-1)}}}\sqrt{\frac{\lambda_r^\star}{\lambda_1^\star}}\Big) + \cO(\Lambda'\|\fl{w}^{\star(i)}\|_2) + \cO(\Lambda) \nonumber\\
       &\qquad +  \cO\Big(\frac{\|\fl{w}^{\star(i)}\|_2\sigma_2r}{mt\lambda_r^\star}\sqrt{rd\log(rd)}\Big) + \cO\Big(\frac{\|\fl{w}^{\star(i)}\|_2\sigma_1r^{3/2}}{mt\lambda_r^\star}\sqrt{rd\log rd}\Big)\nonumber\\
       &\qquad + \cO\Big(\sigma\Big(\frac{1}{\sqrt{\mu^\star}}\sqrt{\frac{r\log^2 (r\delta^{-1})}{m}} + \sqrt{\frac{k\log (d\delta^{-1})}{m}}\nonumber\\
       &\qquad\qquad + \|\fl{w}^{\star(i)}\|_2\Big(\sqrt{\frac{r^3d\mu^\star\log^2 (r\delta^{-1})}{mt\lambda_r^\star}}+\sqrt{\frac{r^3\log^2(r\delta^{-1})}{m\lambda^{\star}_r}}\Big)\Big)\Big).\label{eq:inductive-corollary-b-bound-temp99}
    \end{align}
    Using \eqref{eq:inductive-corollary-b-bound-temp99} in Lemma~\ref{inductive-corollary:optimize_dp-lrs-innerloop-output-bounds} and setting $\epsilon' \gets \cO\Big(\s{B}_{\fl{U^{(\ell-1)}}}\sqrt{\frac{\lambda_r^\star}{\lambda_1^\star}}\cdot \epsilon$\Big), we have:
    \begin{align}
        &\lr{\fl{b}^{(i,\ell+1)}-\fl{b}^{\star (i)} }_{2} \leq 2\varphi^{(i)} + \epsilon'\nonumber\\
        &= \cO\Big(\max\{\epsilon, \|\fl{w}^{\star(i)}\|_2\}\s{B}_{\fl{U^{(\ell-1)}}}\sqrt{\frac{\lambda_r^\star}{\lambda_1^\star}}\Big) + \cO(\Lambda'\|\fl{w}^{\star(i)}\|_2) + \cO(\Lambda) \nonumber\\
       &\qquad +  \cO\Big(\frac{\|\fl{w}^{\star(i)}\|_2\sigma_2r}{mt\lambda_r^\star}\sqrt{rd\log(rd)}\Big) + \cO\Big(\frac{\|\fl{w}^{\star(i)}\|_2\sigma_1r^{3/2}}{mt\lambda_r^\star}\sqrt{rd\log rd}\Big)\nonumber\\
       &\qquad + \cO\Big(\sigma\Big(\frac{1}{\sqrt{\mu^\star}}\sqrt{\frac{r\log^2 (r\delta^{-1})}{m}} +  \sqrt{\frac{k\log (d\delta^{-1})}{m}}\nonumber\\
       &\qquad\qquad +\|\fl{w}^{\star(i)}\|_2\Big(\sqrt{\frac{r^3d\mu^\star\log^2 (r\delta^{-1})}{mt\lambda_r^\star}}+\sqrt{\frac{r^3\log^2(r\delta^{-1})}{m\lambda^{\star}_r}}\Big)\Big)\Big).
    \end{align}
    Recall that from Corollaries~\ref{inductive-corollary:optimize_dp-lrs-U-F-norm-bounds} and \ref{inductive-corollary:optimize_dp-lrs-R-bounds}, we have
    \begin{align}
        \lr{\Delta(\fl{U}^{+(\ell)}, \fl{U}^{\star})}_\s{F}  &\leq  (1+c'')\cO\Big(\Big\{\s{B}_{\fl{U^{(\ell-1)}}}\sqrt{\frac{\lambda_r^\star}{\lambda_1^\star}}+\Lambda'\sqrt{\lambda^{\star}_r} + \frac{\Lambda}{r} + \frac{\sigma_2r}{mt\lambda_r^\star}\sqrt{rd\log(rd)} \nonumber\\
        &+
        \frac{\sigma_1r\sqrt{r}}{mt\lambda_r^\star}\sqrt{rd\log rd}+ \sigma\Big(\sqrt{\frac{r^3d\mu^\star\log^2 (rdmt/\delta_0)}{mt\lambda_r^\star}}\Big)\Big\}\Big)
    \end{align}

Therefore, it is sufficient to have for sufficiently large constants $\tilde{c},\hat{c}>0$
\begin{align}
    \Lambda &= \tilde{c}\Big( \sqrt{\lambda^{\star}_r\mu^{\star}}\Big(\frac{\sigma_2 r}{mt\lambda^{\star}_r}+\frac{\sigma_1r^{3/2}}{mt\lambda_r^\star}\sqrt{rd\log rd} +\sigma\sqrt{\frac{r^3d\mu^\star\log^2 (r\delta^{-1})}{mt\lambda_r^\star}}\Big)\nonumber\\
    &\qquad + \sigma\Big(\sqrt{\frac{r^3\log^2(r\delta^{-1})}{m\lambda^{\star}_r}}\Big) + \sqrt{\frac{k\log (d\delta^{-1})}{m}}\Big)\Big) \\
    \Lambda' &= \hat{c}\Big(\frac{\Lambda}{\sqrt{\mu^{\star}\lambda^{\star}_r}}\Big).
\end{align}
such that
$\lr{\fl{b}^{(i,\ell+1)}-\fl{b}^{\star(i)}}_2 \le \frac{1}{10}\max\{\epsilon, \|\fl{w}^{\star(i)}\|_2\}\s{B}_{\fl{U^{(\ell-1)}}}\sqrt{\frac{\lambda_r^\star}{\lambda_1^\star}} +\Lambda$ and $ \lr{\Delta(\fl{U}^{+(\ell)}, \fl{U}^{\star})}_\s{F} \le  \frac{\s{B}_{\fl{U^{(\ell-1)}}}}{100}\sqrt{\frac{\lambda_r^\star}{\lambda_1^\star}}+\Lambda'$ which satisfies the induction assumption and therefore completes the proof.

Comparing the contribution of noise-deficit terms on both sides for the next iteration, we also get the value of c' as 
\begin{align}
    c'''\max\{\|\fl{w}^{\star(i)}\|_2, \epsilon\}\s{B}_{\fl{U^{(\ell-1)}}}\sqrt{\frac{\lambda_r^\star}{\lambda_1^\star}} &:= c'\max\{\|\fl{w}^{\star(i)}\|_2\nonumber, \epsilon\}\s{B}_{\fl{U^{(\ell)}}}\sqrt{\frac{\lambda_r^\star}{\lambda_1^\star}}\nonumber\nonumber\\ &\leq \frac{51}{50*200}c'\max\{\|\fl{w}^{\star(i)}\|_2, \epsilon\}\s{B}_{\fl{U^{(\ell-1)}}}\sqrt{\frac{\lambda_r^\star}{\lambda_1^\star}}\nonumber\\
    \implies c' &:= \frac{50*200*c'''}{51} < 5.\nonumber
\end{align}
using sufficiently large $m$ and $t$ to pull down the value of $c'''$. Combining with the Base Case we have $c' = \max\Big(\cO(1), O\Big(\frac{1} {\s{B}_{\fl{U}^{(0)}}}\frac{\lambda^{\star}_1}{\lambda^{\star}_r}\Big)\Big)$.
\end{proof}

\begin{thm*}[Restatement of Theorem \ref{thm:private2} (Parameter Estimation)]
Consider the LRS problem \eqref{prob:general} with all parameters $m,t,\zeta$ obeying the bounds stated in Theorem \ref{thm:main} with $\zeta = O\Big(t(r^2(\mu^{\star})^2)^{-1}(\frac{\lambda_r^{\star}}{\lambda_1^{\star}})^2\Big)$, $k=O\Big(d\cdot  (\frac{\lambda_r^{\star}}{\lambda_1^{\star}})^2\Big)$, $m=\widetilde{\Omega}\Big(k + r^2\mu^\star\Big(\frac{\lambda_1^\star}{\lambda_r^\star}\Big)^2 + \frac{\sigma^2r^3}{\lambda_r^\star}\Big)$, $mt=\widetilde{\Omega}\Big(r^3d\mu^{\star}\Big(r(\mu^\star)^4(\lambda_r^\star)^2k + \mu^\star\Big(\frac{\lambda_1^\star}{\lambda_r^\star}\Big)^2 + \mu^\star(\lambda_r^\star)^2 + \sigma^2\Big(1+\frac{1}{\lambda_r^\star}\Big)\Big)\Big)$ and furthermore, $t=\widetilde{\Omega}\Big((rd)^{3/2}\mu^\star\Big(1+\lambda_r^\star + \sqrt{rk}(\mu^\star)^{3/2}\lambda_r^\star + \sqrt{\mu^\star}\Big(1+\frac{(\max_i \|\fl{b}^\star{(i)}\|_2}{\sqrt{\mu^\star\lambda_r^\star}} + \sqrt{\frac{\lambda_r^\star}{\lambda_1^\star}}\Big)\Big)\frac{\sqrt{\log(1/\delta)+\epsilon}}{\epsilon}\Big)$.
Suppose we run Algorithm \ref{algo:optimize_dp-lrs}  for $\s{L}=\log\left(\frac{\lambda^{\star}_r}{\sigma\sqrt{\lambda^\star_1}}\cdot \sqrt{\frac{mt}{\mu^{\star} r d}}\right)$ iterations with parameters: 
\begin{align}
\s{A}_1 = \widetilde{O}(\sqrt{d}), \s{A}_2 = \widetilde{O}(\sqrt{\mu^\star\lambda_r^\star} + (\max_i \|\fl{b}^{\star(i)}\|_2)), 
\s{A}_3= \widetilde{O}\Big(\lambda_r^\star\sqrt{\frac{\mu^\star}{\lambda_1^\star}}\Big), \s{A}_w = \widetilde{O}(\sqrt{\mu^\star\lambda_r^\star}). \nonumber
\end{align}
Then, w.p. $\geq 1-O(\delta_0)$, the outputs $\fl{U}^{+(\s{L})},\{\fl{b}^{(i,\s{L})}\}_{i\in [t]}$ satisfies: 
\begin{align}
     &\lr{(\fl{I}-\fl{U}^{\star}(\fl{U}^{\star})^{\s{T}})\fl{U}^{+(\s{L})}}_{\s{F}} = \widetilde{O}\Big(\frac{\sigma}{\sqrt{\mu^\star\lambda_r^\star}}\Big(\mu^\star\sqrt{\frac{r^3d}{mt}}+ \sqrt{\frac{r^3}{m\lambda^{\star}_r}} + \sqrt{\frac{k}{m}}\Big)\Big)+\frac{\sqrt{k}\eta}{\sqrt{\mu^\star\lambda_r^\star}}\nonumber \\  &\left|\left|\fl{b}^{(i, \s{L})}-\fl{b}^{\star (i)}\right|\right|_{\infty} \le \widetilde{O}\Big(\frac{\sigma}{\sqrt{k}}\Big(\mu^\star\sqrt{\frac{r^3d}{mt}}+ \sqrt{\frac{r^3}{m\lambda^{\star}_r}} + \sqrt{\frac{k}{m}}\Big)\Big)+\eta ,\text{ for all } i\in [t],
     \nonumber
\end{align}
where $\sqrt{k}\eta=\widetilde{O}\Big(t^{-1}(\mu^\star)^{3/2}\sqrt{\lambda_r^\star} r\sqrt{d}\Big(1+ \max_{i\in[t]}\frac{\|\fl{b}^{\star(i)}\|_2}{\sqrt{\mu^\star\lambda_r^\star}}+\sqrt{\frac{\lambda_r^\star}{\lambda_1^\star}} + rd\Big)\frac{\sqrt{\log(1/\delta)+\epsilon}}{\epsilon}\Big)$. 
\end{thm*}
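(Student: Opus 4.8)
The plan is to prove Theorem~\ref{thm:private2} by re-running the inductive argument already developed for Theorem~\ref{thm:main}, now carrying the two differential-privacy perturbation matrices $\fl{N}_1,\fl{N}_2$ (equivalently their scalar multipliers $\sigma_1,\sigma_2$) through every per-iteration update of Algorithm~\ref{algo:optimize_dp-lrs}. The entire chain of per-iteration statements in Appendix~\ref{app:detailed_proof} --- Corollary~\ref{inductive-corollary:optimize_dp-lrs-h,H-bounds} (control of $\fl{h}^{(i,\ell)}$ and $\lr{\fl{H}^{(\ell)}}_{\s{F}}$), Corollary~\ref{inductive-inductive-corollary:optimize_dp-lrs-w-incoherence} (incoherence and eigenvalue preservation for $\fl{W}^{(\ell)}$), Corollaries~\ref{inductive-corollary:optimize_dp-lrs-U-F-norm-bounds} and~\ref{inductive-corollary:optimize_dp-lrs-R-bounds} ($\s{F}$-norm contraction of $\lr{\Delta(\fl{U}^{+(\ell)},\fl{U}^{\star})}_{\s{F}}$ and bounds on $\fl{R}^{-1}$), Corollary~\ref{inductive-corollary:optimize_dp-lrs-U-infty-norm-bounds} ($\s{L}_{2,\infty}$ incoherence of $\fl{U}^{+(\ell)}$), and Corollary~\ref{inductive-corollary:optimize_dp-lrs-b-bounds} (which closes the loop on $\fl{b}^{(i,\ell+1)}$) --- has been stated in its full noisy generality, with the privacy contribution absorbed into the iteration-independent floor terms $\Lambda,\Lambda'$ of the Inductive Assumption~\ref{assum:init}. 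So the skeleton is: verify the base case (Section~\ref{inductive-corollary:optimize_dp-lrs-base-case}), confirm that the hypotheses of each corollary follow from the stated sample-complexity bounds, run the induction for $\s{L}$ steps, and finally translate $\sigma_1,\sigma_2$ into the clipping parameters and into $(\epsilon,\delta)$ via Theorem~\ref{thm:private1}.

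First I would dispatch the clipping. With the prescribed thresholds $\s{A}_1=\widetilde{O}(\sqrt{d})$, $\s{A}_2=\widetilde{O}(\sqrt{\mu^{\star}\lambda_r^{\star}}+\max_i\lr{\fl{b}^{\star(i)}}_2)$, $\s{A}_3=\widetilde{O}(\lambda_r^{\star}\sqrt{\mu^{\star}/\lambda_1^{\star}})$, $\s{A}_w=\widetilde{O}(\sqrt{\mu^{\star}\lambda_r^{\star}})$, Gaussian tail bounds give $\lr{\fl{x}^{(i)}_j}_2=\widetilde{O}(\sqrt{d})$, $|y^{(i)}_j|=\widetilde{O}(\lr{\f{\Theta}^{(i)}}_2+\sigma)$ and $|(\fl{x}^{(i)}_j)^{\s{T}}\fl{b}^{(i,\ell)}|=\widetilde{O}(\lr{\fl{b}^{(i,\ell)}}_2)$ with probability $\geq 1-\delta_0$ per sample; combined with the inductive control $\lr{\fl{b}^{(i,\ell)}}_2=\widetilde{O}(\max\{\lr{\fl{w}^{\star(i)}}_2,\epsilon\}+\lr{\fl{b}^{\star(i)}}_2)$ and $\lr{\fl{w}^{(i,\ell)}}_2=\widetilde{O}(\sqrt{\mu^{\star}\lambda_r^{\star}})$ from Corollaries~\ref{inductive-inductive-corollary:optimize_dp-lrs-w-incoherence} and~\ref{inductive-corollary:optimize_dp-lrs-b-bounds}, a union bound shows that on the good event $\widehat{\fl{x}^{(i)}_j}=\fl{x}^{(i)}_j$, $\widehat{y^{(i)}_j}=y^{(i)}_j$, $\widehat{(\fl{x}^{(i)}_j)^{\s{T}}\fl{b}^{(i,\ell)}}=(\fl{x}^{(i)}_j)^{\s{T}}\fl{b}^{(i,\ell)}$ and $\widehat{\fl{w}^{(i,\ell)}}=\fl{w}^{(i,\ell)}$ for all $i,j,\ell$. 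Hence the clipped updates~\eqref{line:opq1}--\eqref{line:opq2} equal the un-clipped ones plus additive Gaussian noise $\fl{N}_1/mt,\fl{N}_2/mt$ with per-entry scales $\sigma_1/mt$, $\sigma_1=m\,\s{A}_1^2\s{A}_w^2\sqrt{\s{L}}\,\sigma_{\s{DP}}$, and $\sigma_2/mt$, $\sigma_2=m\,\s{A}_1(\s{A}_2+\s{A}_3)\s{A}_w\sqrt{\s{L}}\,\sigma_{\s{DP}}$ --- exactly the quantities through which the appendix corollaries are phrased.

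With clipping removed, I would run the induction: granting Assumption~\ref{assum:init} at iteration $\ell-1$ with coefficient $\s{B}_{\fl{U}^{(\ell-1)}}$ and floors $\Lambda,\Lambda'$, the corollaries compose in order to re-establish it at iteration $\ell$ with $\s{B}_{\fl{U}^{(\ell)}}\leq \frac{51}{50\cdot 200}\,\s{B}_{\fl{U}^{(\ell-1)}}$ (a constant-factor contraction) and the same $\Lambda,\Lambda'$, provided the stated lower bounds on $m$, $mt$, and $t$ hold; these are precisely the union of hypotheses collected across Corollaries~\ref{inductive-corollary:optimize_dp-lrs-h,H-bounds}--\ref{inductive-corollary:optimize_dp-lrs-b-bounds}, and the conditions involving $\sigma_1,\sigma_2$ (e.g. $mt=\widetilde{\Omega}(\sigma_2 r\sqrt{rd\log rd}/\lambda_r^{\star})$ and $mt=\widetilde{\Omega}(\sigma_1 r^{3/2}\sqrt{rd\log rd}/\lambda_r^{\star})$), after substituting the $\s{A}_\bullet$ and $\sigma_{\s{DP}}=\widetilde{O}(\sqrt{\log(1/\delta)+\epsilon}/\epsilon)$ from Theorem~\ref{thm:private1}, become exactly the requirement $t=\widetilde{\Omega}((rd)^{3/2}\mu^{\star}(\dots)\sqrt{\log(1/\delta)+\epsilon}/\epsilon)$ in the statement. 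After $\s{L}=\log(\frac{\lambda_r^{\star}}{\sigma\sqrt{\lambda_1^{\star}}}\sqrt{mt/(\mu^{\star}rd)})$ steps the geometric part $\s{B}_{\fl{U}^{(0)}}c^{\s{L}}\sqrt{\lambda_r^{\star}/\lambda_1^{\star}}$ falls below the statistical term, so $\lr{\Delta(\fl{U}^{+(\s{L})},\fl{U}^{\star})}_{\s{F}}\lesssim \Lambda'$ and $\lr{\fl{b}^{(i,\s{L})}-\fl{b}^{\star(i)}}_{\infty}\lesssim \Lambda/\sqrt{k}$. Splitting $\Lambda$ (Corollary~\ref{inductive-corollary:optimize_dp-lrs-b-bounds}) into its $\sigma$-only statistical piece and its $\sigma_1,\sigma_2$ privacy piece and plugging in the $\s{A}_\bullet$ and $\sigma_{\s{DP}}$, the privacy piece collapses to $\sqrt{k}\eta=\widetilde{O}(t^{-1}(\mu^{\star})^{3/2}\sqrt{\lambda_r^{\star}}\,r\sqrt{d}(1+\max_i\lr{\fl{b}^{\star(i)}}_2/\sqrt{\mu^{\star}\lambda_r^{\star}}+\sqrt{\lambda_r^{\star}/\lambda_1^{\star}}+rd)\sqrt{\log(1/\delta)+\epsilon}/\epsilon)$, while the statistical piece reproduces $\widetilde{O}(\sigma\,\s{S})$ with $\s{S}=\mu^{\star}\sqrt{r^3d/mt}+\sqrt{r^3/(m\lambda_r^{\star})}+\sqrt{k/m}$, giving the two displayed bounds.

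I expect the main obstacle to be the interleaving of the clipping-transparency claim with the induction rather than any single concentration estimate: the proof that no sample is clipped needs $\lr{\fl{b}^{(i,\ell)}}_2$ and $\lr{\fl{w}^{(i,\ell)}}_2$ bounded, but those bounds are themselves outputs of the inductive step, so clipping-transparency must be folded into the inductive hypothesis and re-verified each round; one also has to confirm that the (exponentially small) event where an outlier sample \emph{is} clipped cannot spoil the sub-exponential and matrix-concentration inequalities the corollaries invoke, which is exactly why the thresholds are set at the $\widetilde{O}(\cdot)$-scale of the typical magnitudes with room to spare. The remaining hurdle is bookkeeping: checking that the long list of $\widetilde{\Omega}(\cdot)$ conditions in the statement is precisely the union of the hypotheses of Corollaries~\ref{inductive-corollary:optimize_dp-lrs-h,H-bounds}--\ref{inductive-corollary:optimize_dp-lrs-b-bounds} (and Corollary~\ref{inductive-corollary:optimize_dp-lrs-R-bounds}) once $\sigma_1,\sigma_2$ are expressed in terms of $d,\mu^{\star},\lambda_r^{\star},\lambda_1^{\star}$ and $\sigma_{\s{DP}}$, which must be done carefully to land the advertised dependence $t\sim d^{3/2}/\epsilon$.
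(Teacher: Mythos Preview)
Your proposal is correct and follows essentially the same route as the paper's proof: establish clipping transparency via Gaussian concentration with the stated $\s{A}_\bullet$ thresholds, set $\sigma_1,\sigma_2$ in terms of $\sigma_{\s{DP}}$ and invoke Theorem~\ref{thm:private1} for privacy, then run the Inductive Assumption~\ref{assum:init} through the exact chain of Corollaries~\ref{inductive-corollary:optimize_dp-lrs-h,H-bounds}--\ref{inductive-corollary:optimize_dp-lrs-b-bounds} (plus the Base Case~\ref{inductive-corollary:optimize_dp-lrs-base-case}) to obtain a geometrically shrinking term plus the fixed floor $\Lambda,\Lambda'$, and finally choose $\s{L}$ so the geometric part drops below the floor. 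Your discussion of the clipping--induction interleaving and of the bookkeeping that matches the stated $m,mt,t$ bounds to the corollaries' hypotheses is more explicit than the paper's terse treatment, but the underlying argument is identical.
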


\begin{proof}
We will denote the DP noise by $\sigma_{\s{DP}}>0$. 
Using standard gaussian concentration inequalities, we set $\s{A}_1, \s{A}_2, \s{A}_3$ and $\s{A}_w$ as written in the theorem statement which ensures that for all $i,j,\ell$ in $\fl{U}$ update of Algorithm, let $\|\fl{x}^{(i)}_j\|_2\leq A_1$, $\|\fl{w}^{(i, \ell)}\|_2 \leq A_w$, $|y^{(i)}_j|\leq A_2$, and $\|(\fl{x}^{(i)}_j)^\top\fl{b}^{(i,\ell)}\|_2\leq A_3$ with probability $1-\cO(\frac{1}{\s{Poly}(mt\s{L})})$.
Setting each entry of $\fl{N}_1$ independently according to $\mathcal{N}\left(0,m^2\cdot A_1^4\cdot A_w^4\cdot {\sf L}\cdot\sigma_{\s{DP}}^2\right)$ ($\sigma_1^2=m^2\cdot A_1^4\cdot A_w^4\cdot {\sf L}\cdot\sigma_{\s{DP}}^2$) and each entry of $\fl{N}_2$ is independently set  $\mathcal{N}\left(0,m^2\cdot A_1^2(A_2+A_3)^2A_w^2\cdot {\sf L}\cdot\sigma_\s{DP}^2\right)$ ($\sigma_2^2=m^2\cdot A_1^2(A_2+A_3)^2A_w^2\cdot {\sf L}\cdot\sigma_\s{DP}^2$) ensures that the algorithm
satisfies $\frac{1}{\sigma_\s{DP}}^2$-zCDP and equivalently $(\epsilon, \delta)$ Approximate Differential Privacy if $\sigma_\s{DP} \geq \frac{\sqrt{\log(1/\delta)+\epsilon}}{\epsilon}$ (Theorem~\ref{thm:private1}).

Using the bounds on $m,t,mt,\zeta, k$ in terms of the ground truth model parameters $\mu^{\star},\lambda^{\star}_1,\lambda^{\star}_r$ expressed in the theorem statement, we invoke Corollaries~\ref{inductive-corollary:optimize_dp-lrs-h,H-bounds}, \ref{inductive-inductive-corollary:optimize_dp-lrs-w-incoherence}, \ref{inductive-corollary:optimize_dp-lrs-U-F-norm-bounds}, \ref{inductive-corollary:optimize_dp-lrs-R-bounds}, ~\ref{inductive-corollary:optimize_dp-lrs-U-infty-norm-bounds} and \ref{inductive-corollary:optimize_dp-lrs-b-bounds} as well as the Base Case  \ref{inductive-corollary:optimize_dp-lrs-base-case} ($\ell = 1$) to show that our Inductive Assumption ~\ref{assum:init} holds for each iteration of $\ell$ and complete out proof using the Principle of Induction.

Now, note that the error bound guarantees in ~\ref{assum:init} have two terms in the upper bounds: the first one (a multiple of $\s{B}_{\fl{U^{(\ell-1)}}}$, which stems from analysing the problem in the noiseless setting) decreases exponentially with the number of iterations the second unchanging one ($\Lambda$ and $\Lambda'$ depends on the inherent noise $\sigma$ and DP noise $\sigma_{\s{DP}}$). Plugging $\s{L}=\log\left(\frac{\lambda^{\star}_r}{\sigma\sqrt{\lambda^\star_1}}\cdot \sqrt{\frac{mt}{\mu^{\star} r d}}\right)$ in the geometric series expression, we obtain the guarantees as stated in the main theorem.
\end{proof}

\begin{coro}[Restatement of Theorem \ref{thm:main} (Parameter Estimation)]
Consider the \lrs problem \eqref{prob:general} with $t$ linear regression tasks and samples obtained by \eqref{eq:samples}. Let model parameters satisfy assumptions A1, A2. Also, let the row sparsity of $\fl{B}^{\star}$ satisfy $\zeta = O\Big(t(r^2(\mu^{\star})^2)^{-1}(\frac{\lambda_r^{\star}}{\lambda_1^{\star}})^2\Big)$, $k=O\Big(d\cdot  (\frac{\lambda_r^{\star}}{\lambda_1^{\star}})^2\Big)$, $m=\widetilde{\Omega}\Big(k + r^2\mu^\star\Big(\frac{\lambda_1^\star}{\lambda_r^\star}\Big)^2 + \frac{\sigma^2r^3}{\lambda_r^\star}\Big)$. 
Suppose Algorithm \ref{algo:optimize_lrs1} is initialized with $\fl{U}^{+(0)}$ such that $\lr{(\fl{I}-\fl{U}^{\star}(\fl{U}^{\star})^{\s{T}})\fl{U}^{+(0)}}_{\s{F}} = O\Big(\sqrt{\frac{\lambda_r^{\star}}{\lambda_1^{\star}}}\Big)$ and  $\lr{\fl{U}^{+(0)}}_{2,\infty}=O(\sqrt{\mu^{\star}r/d})$, and is run for $\s{L}=\log\left(\frac{\lambda^{\star}_r}{\sigma\sqrt{\lambda^\star_1}}\cdot \sqrt{\frac{mt}{\mu^{\star} r d}}\right)$ iterations. Then, w.p. $\geq 1-O(\delta_0)$, the outputs $\fl{U}^{+(\s{L})},\{\fl{b}^{(i,\s{L})}\}_{i\in [t]}$ satisfies:  
\begin{align}
     &\lr{(\fl{I}-\fl{U}^{\star}(\fl{U}^{\star})^{\s{T}})\fl{U}^{+(\s{L})}}_{\s{F}} = \widetilde{O}\Big(\frac{\sigma}{\sqrt{\mu^\star\lambda_r^\star}}\Big(\mu^\star\sqrt{\frac{r^3d}{mt}}+ \sqrt{\frac{r^3}{m\lambda^{\star}_r}} + \sqrt{\frac{k}{m}}\Big)\Big), \nonumber\\  &\left|\left|\fl{b}^{(i, \s{L})}-\fl{b}^{\star (i)}\right|\right|_{\infty} \le \widetilde{O}\Big(\frac{\sigma}{\sqrt{k}}\Big(\mu^\star\sqrt{\frac{r^3d}{mt}}+ \sqrt{\frac{r^3}{m\lambda^{\star}_r}} + \sqrt{\frac{k}{m}}\Big)\Big) ,\text{ for all } i\in [t],\nonumber
\end{align}
where, the total number of samples satisfies: 
\begin{align}
&m=\widetilde{\Omega}\Big(k + r^2\mu^\star\Big(\frac{\lambda_1^\star}{\lambda_r^\star}\Big)^2 + \frac{\sigma^2r^3}{\lambda_r^\star}\Big), \nonumber\\
&mt=\widetilde{\Omega}\Big(r^3d\mu^{\star}\Big(r(\mu^\star)^4(\lambda_r^\star)^2k + \mu^\star\Big(\frac{\lambda_1^\star}{\lambda_r^\star}\Big)^2 + \mu^\star(\lambda_r^\star)^2 + \sigma^2\Big(1+\frac{1}{\lambda_r^\star}\Big)\Big)\Big)\nonumber.
\end{align}
\end{coro}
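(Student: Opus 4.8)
The plan is to obtain this as the $\sigma_{\s{DP}}=0$ specialization of the private guarantee (the restatement of Theorem~\ref{thm:private2}) established just above. Algorithm~\ref{algo:optimize_lrs1} is exactly Algorithm~\ref{algo:optimize_dp-lrs} with clipping removed and $\fl{N}^{(1)}=\fl{N}^{(2)}=\fl{0}$, i.e.\ $\sigma_1=\sigma_2=0$; so the entire inductive apparatus developed for the DP algorithm applies verbatim, every term carrying a $\sigma_1$ or $\sigma_2$ prefactor vanishes, and every hypothesis of those corollaries of the form ``$\tfrac{\sigma_1}{mt}(2\sqrt{rd}+4\sqrt{\log rd})=\ldots$'' becomes vacuous. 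I would therefore prove the corollary by induction on the outer iteration $\ell$, maintaining Inductive Assumption~\ref{assum:init}, exactly as in the proof of Theorem~\ref{thm:private2} but with $\sigma_1=\sigma_2=0$, so that the fixed ``noise floor'' quantities of Corollary~\ref{inductive-corollary:optimize_dp-lrs-b-bounds} reduce to $\Lambda=\widetilde{O}(\sigma\,\s{S})$ and $\Lambda'=\widetilde{O}(\sigma\,\s{S}/\sqrt{\mu^\star\lambda_r^\star})$ with $\s{S}=\mu^\star\sqrt{r^3d/(mt)}+\sqrt{r^3/(m\lambda_r^\star)}+\sqrt{k/m}$.

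The base case $\ell=1$ is supplied by Section~\ref{inductive-corollary:optimize_dp-lrs-base-case}: the initialization hypotheses $\lr{(\fl{I}-\fl{U}^\star(\fl{U}^\star)^{\s{T}})\fl{U}^{+(0)}}_{\s{F}}=O(\sqrt{\lambda_r^\star/\lambda_1^\star})$ and $\lr{\fl{U}^{+(0)}}_{2,\infty}=O(\sqrt{\mu^\star r/d})$ give $\s{B}_{\fl{U}^{(0)}}=O(\sqrt{\lambda_r^\star/\lambda_1^\star})$ and, using $k=O(d(\lambda_r^\star/\lambda_1^\star)^2)$, also Assumption~\ref{assum:incoherence_u}~(A3) with a sufficiently small $\nu^\star$; hence Assumption~\ref{assum:init} holds at $\ell=0$. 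For the inductive step, assuming Assumption~\ref{assum:init} at $\ell-1$, I would chain the six corollaries in order: Corollary~\ref{inductive-corollary:optimize_dp-lrs-h,H-bounds} (bounds on $\fl{h}^{(i,\ell)}$ and $\fl{H}^{(\ell)}$); Corollary~\ref{inductive-inductive-corollary:optimize_dp-lrs-w-incoherence} (incoherence and top/bottom eigenvalue control of $\fl{W}^{(\ell)}$); Corollary~\ref{inductive-corollary:optimize_dp-lrs-U-F-norm-bounds} (the Frobenius subspace-error bound on $\Delta(\fl{U}^{+(\ell)},\fl{U}^\star)$); Corollary~\ref{inductive-corollary:optimize_dp-lrs-R-bounds} (invertibility and conditioning of the QR factor, $\|\fl{R}^{-1}\|\le 2+c''$); Corollary~\ref{inductive-corollary:optimize_dp-lrs-U-infty-norm-bounds} (the $\s{L}_{2,\infty}$ bounds on $\fl{U}^{(\ell)}$ and on $\Delta(\fl{U}^{(\ell)},\fl{U}^\star)$); and Corollary~\ref{inductive-corollary:optimize_dp-lrs-b-bounds} (the updated sparse-vector errors $\lr{\fl{b}^{(i,\ell)}-\fl{b}^{\star(i)}}_2$ and $\lr{\fl{b}^{(i,\ell)}-\fl{b}^{\star(i)}}_\infty$, which rest on the inner-loop analysis of Lemmas~\ref{lemma:alg2-innerloop-iteration-bounds} and~\ref{inductive-corollary:optimize_dp-lrs-innerloop-output-bounds}). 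Together these re-establish Assumption~\ref{assum:init} at iteration $\ell$.

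The step I expect to be the main (though essentially routine) obstacle is the bookkeeping: verifying that the four clean sample-complexity hypotheses stated in the corollary --- the bounds on $k$, $\zeta$, $m$, and $mt$ --- jointly imply every smallness condition scattered across those six corollaries (the various bounds on $\sqrt{r^3d\log(1/\delta_0)/(mt)}$, on $\sqrt{r^2\zeta/t}$, on $\sqrt{r^2\log(r/\delta_0)/m}$, on $\sigma\sqrt{r^2\log^2(r\delta^{-1})/m}$, on $\sqrt{\nu^\star}$, and so on). Each such condition is a direct consequence of one of the four bounds together with $\lambda_r^\star\le\lambda_1^\star$ and $r,\mu^\star\ge 1$, and the conditions involving $\sigma_1,\sigma_2$ are now empty, so no extra assumptions beyond the non-private ones are needed; one also checks that $\s{L}=\log\bigl(\tfrac{\lambda_r^\star}{\sigma\sqrt{\lambda_1^\star}}\sqrt{mt/(\mu^\star r d)}\bigr)$ is enough iterations for the geometric ($\s{B}_{\fl{U}^{(\ell)}}$-driven) part of the error to drop below the fixed floor.

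Finally, reading off the bounds: at the end of the recursion Assumption~\ref{assum:init} gives $\lr{\Delta(\fl{U}^{+(\s{L})},\fl{U}^\star)}_{\s{F}}\le c\,\s{B}_{\fl{U}^{(\s{L}-1)}}\sqrt{\lambda_r^\star/\lambda_1^\star}+\Lambda'$, with $\s{B}_{\fl{U}^{(\ell)}}$ contracting by a constant factor strictly less than $1$ each iteration (Corollaries~\ref{inductive-corollary:optimize_dp-lrs-R-bounds} and~\ref{inductive-corollary:optimize_dp-lrs-b-bounds}); so after $\s{L}$ steps the geometric term is dominated, leaving $\lr{\Delta(\fl{U}^{+(\s{L})},\fl{U}^\star)}_{\s{F}}=\widetilde{O}(\Lambda')=\widetilde{O}(\sigma\,\s{S}/\sqrt{\mu^\star\lambda_r^\star})$, and similarly $\lr{\fl{b}^{(i,\s{L})}-\fl{b}^{\star(i)}}_\infty=\widetilde{O}(\Lambda/\sqrt{k})=\widetilde{O}(\sigma\,\s{S}/\sqrt{k})$. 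These are precisely the claimed bounds with $\s{S}=\mu^\star\sqrt{r^3d/(mt)}+\sqrt{r^3/(m\lambda_r^\star)}+\sqrt{k/m}$, which completes the proof.
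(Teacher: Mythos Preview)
Your proposal is correct and follows exactly the paper's approach: the paper's proof of this corollary is a single sentence, ``The proof follows by substituting $\sigma_{\s{DP}}=0$ (hence $\sigma_1,\sigma_2=0$) in the proof of Theorem~\ref{thm:private2},'' and you have spelled out precisely what that substitution entails (the inductive chain of Corollaries~\ref{inductive-corollary:optimize_dp-lrs-h,H-bounds}--\ref{inductive-corollary:optimize_dp-lrs-b-bounds} with all $\sigma_1,\sigma_2$ terms dropped, leaving only the $\sigma$-driven floor $\Lambda=\widetilde{O}(\sigma\,\s{S})$). Your more detailed unpacking is faithful to the underlying argument and matches the paper's intended derivation.
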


\begin{proof}
The proof follows by substituting $\sigma_{\s{DP}}=0$ (hence $\sigma_1,\sigma_2=0$) in the proof of Theorem \ref{thm:private2}.
\end{proof}

\subsection{Proof of Theorem~\ref{thm:private1}}
Following along similar lines of proof techniques used for privacy guarantees used in \cite{pmlr-v178-varshney22a}, our proof will broadly involve computing the Zero Mean Concentrated Differential Privacy (zCDP) parameters and then using them to prove the Approximate Differential Privacy. The Update Step for $\fl{U}^{(\ell)}$ without the additive DP Noise is:
\begin{align}
    & \widehat{\fl{x}^{(i)}_j}\leftarrow {\sf clip}\left(\fl{x}^{(i)}_j,\s{A}_1\right), \widehat{y^{(i)}_j} \leftarrow {\sf clip}\left(y^{(i)}_j,\s{A}_2\right),\widehat{(\fl{x}^{(i)}_j)^\s{T}\fl{b}^{(i, \ell)}}\leftarrow {\sf clip}\left((\fl{x}^{(i)}_j)^\s{T}\fl{b}^{(i, \ell)},\s{A}_3\right)\nonumber\\
    &\text{ and } \widehat{\fl{w}^{(i, \ell)}}\leftarrow {\sf clip}\left(\fl{w}^{(i, \ell)},\s{A}_w\right)\label{line:dp-proof-udpate1}\\
    &\fl{A} := \frac{1}{mt}\sum_{i \in [t]}\Big(\widehat{\fl{w}^{(i, \ell)}}(\widehat{\fl{w}^{(i, \ell)}})^{\s{T}} \otimes \Big(\sum_{j=1}^{m}\widehat{\fl{x}^{(i)}_j}(\widehat{\fl{x}^{(i)}_j})^{\s{T}}\Big)\Big)\label{line:dp-proof-udpate2}\\
    &\fl{V} := \frac{1}{mt}\sum_{i \in [t]}\sum_{j \in [m]} \widehat{\fl{x}^{(i)}_j}\Big(\widehat{y^{(i)}_j} - (\widehat{\fl{x}^{(i)}_j)^{\s{T}}\fl{b}^{(i, \ell)}} \Big)(\widehat{\fl{w}^{(i, \ell)}})^{\s{T}}\label{line:dp-proof-udpate3}\\
    &\fl{U}^{(\ell)} \gets  \s{vec}^{-1}_{d\times r}(\fl{A}^{-1}\s{vec}(\fl{V}))
    \label{line:dp-proof-udpate4}.
\end{align}
where $\s{clip}(\dot,\dot)$ denotes the clipping function. Therefore, the sensitivity of $\fl{A}$ and $\fl{V}$ due to samples from $i^{\s{th}}$-task (w.r.t. the Frobenius norm) is 
$\Gamma_1 = m\s{A}_1^2\s{A}_w^2$, and $\Gamma_2 = m\s{A}_1(\s{A}_2 + \s{A}_3)\s{A_w}$ respectively.
Now, since each entry of $\fl{N}^{(1)}$ is independently generated from $\mathcal{N}\left(0,m^2\cdot \s{A}_1^4\cdot \s{A}_w^4\cdot {\sf L}\cdot\sigma_{\s{DP}}^2\right)$ and each entry of $\fl{N}^{(2)}$ is independently generated from  $\mathcal{N}\left(0,m^2\cdot \s{A}_1^2(\s{A}_2+\s{A}_3)^2\s{A}_w^2\cdot {\sf L}\cdot\sigma_{\s{DP}}^2\right)$, the update steps \eqref{line:dp-proof-udpate2} and \eqref{line:dp-proof-udpate3} are $\Big(\rho_{\ell, 1} = \frac{\Gamma_1^2}{2\cdot m^2\cdot \s{A}_1^4\cdot \s{A}_w^4\cdot {\sf L}\cdot\sigma_{\s{DP}}^2} = \frac{1}{2{\sf L}\cdot\sigma_{\s{DP}}^2}\Big)$-zCDP and $\Big(\rho_{\ell, 2} = \frac{\Gamma_2^2}{2\cdot m^2\cdot \s{A}_1^4\cdot \s{A}_w^4\cdot {\sf L}\cdot\sigma_{\s{DP}}^2} = \frac{1}{2{\sf L}\cdot\sigma_{\s{DP}}^2}\Big)$-zCDP respectively by virtue of the DP noise standard deviations~\cite{bun2016concentrated}. Therefore by composition and robustness to post-processing, each iteration step is $\Big(\rho_\ell = \rho_{\ell,1} + \rho_{\ell,2} = \frac{1}{{\sf L}\cdot\sigma_{\s{DP}}^2}\Big)$-zCDP.  By composition of zCDPs, the overall $\rho$ for the algorithm  is given by $\rho = \sum_{\ell = 1}^{\s{L}} \rho_{\ell} = \frac{1}{\sigma_\s{DP}^2}$.

Recall $\rho$-zCDP  for an algorithm is equivalent to obtaining a $(\mu,\mu\rho)$-Renyi differential privacy (RDP)~\cite{mironov2017renyi} guarantee. Now, we will optimize for $\mu\in[1,\infty)$ and demonstrate that for the choice of the noise multiplier $\sigma_\s{DP}$ mentioned in the theorem statement satisfies $(\epsilon,\delta)$-DP. Our analysis is similar to that of Theorem 1 of \cite{chien2021private}.

Note that $(\mu,\mu\rho)$-(RDP) $\implies$ $(\epsilon, \delta)$ Approximate Privacy where $\epsilon = \mu\rho + \frac{\log(1/\delta)}{\mu - 1}$ $\forall \mu > 1$. Also note that $\epsilon_{\min} = \rho + 2\sqrt{\rho\log(1/\delta)}$ is attained at $\frac{\text{d}\epsilon}{\text{d}\mu} = 0 \implies \mu = 1 + \sqrt{\frac{\log(1/\delta)}{\rho}}$.

Consider a fixed $\epsilon$. Since we want to minimize $\sigma_\s{DP}$ (which scales as $1/\sqrt{\rho}$), we need to compute the maximum permissable $\rho$ s.t. $\epsilon_{\min}(\rho) \leq \epsilon$. Since $\epsilon_{\min}(\rho)$ is an increasing function of $\rho$ (thus an increasing function of $\sigma_\s{DP}$) and a second order polynomial in $\sqrt{\rho}$ with root at $\sqrt{\rho} = \sqrt{\log (1/\delta) + \epsilon_{\min}} - \sqrt{\log(1/\delta)}$, the maximum is achieved at $\epsilon_{\min}(\rho) = \epsilon$. Therefore,
\begin{align}
    \frac{1}{\sigma_\s{DP}^2} &= (\sqrt{\log (1/\delta) + \epsilon} - \sqrt{\log(1/\delta)})^2= \frac{\epsilon^2}{(\sqrt{\log (1/\delta) + \epsilon} + \sqrt{\log(1/\delta)})^2}.
\end{align}
Since the above value of $\sigma_\s{DP}$ satisfies $(\epsilon, \delta)$-DP and 
\begin{align}
    \frac{\epsilon^2}{(\sqrt{\log (1/\delta) + \epsilon} + \sqrt{\log(1/\delta)})^2} \geq \frac{\epsilon^2}{4(\log(1/\delta) + \epsilon)},
\end{align}
choosing $\sigma_\s{DP}\geq \frac{2\sqrt{(\log(1/\delta)+\epsilon)}}{\epsilon}$ ensures $(\epsilon, \delta)$-DP.

\section{Algorithm and Proof of Theorem \ref{thm:main} (Generalization Guarantees) }\label{app:generalization}

\begin{algorithm}[!b]
\caption{\textsc{AM-New Task}\label{algo:generalize}}
\begin{algorithmic}[1]
\REQUIRE Data $\{(\fl{X}\in \bb{R}^{m'\times d},\fl{y}\in \bb{R}^{m'})\}$, known bounds $\lr{\fl{b}^{\star}}_{\infty}\le \s{C}$. Set parameter $\epsilon>0$ appropriately. Estimate $\fl{U}^{+}$ of $\fl{U}^{\star}$ satisfying $\lr{(\fl{I}-\fl{U}^{\star}(\fl{U}^{\star})^{\s{T}})\fl{U}^{+}}_{\s{F}} \le \rho$. Parameter $\s{A}$.

\FOR{$\ell = 1, 2, \dots$}
    \STATE Initialize $\fl{w}^{(0)},\fl{b}^{(0)}=\f{0}$. Set $\phi^{(0)}=2$ since $\lr{\fl{w}^{(0)}-(\fl{U}^{\star})^{\s{T}}\fl{U}^{+})^{-1}\fl{w}^{\star}}_2 \le \phi^{(0)}\lr{\fl{w}^{\star}}_2\le 2\lr{\fl{w}^{\star}}_2$. Set $\gamma^{(0)} \ge \lr{\fl{b}^{\star}}_{\infty}$.
    \FOR{$i = 1,2,3, \dots, t$}
        \STATE Set $T^{(\ell)} = \Omega\Big(\ell\log\Big(\frac{\gamma^{(\ell-1)}}{\epsilon}\Big)\Big)$.
        \STATE  $\fl{w}^{(\ell)} = \Big((\fl{X}^{(i)}\fl{U}^{+(\ell-1)})^{\s{T}}(\fl{X}^{(i)}\fl{U}^{+(\ell-1)})\Big)^{-1}\Big((\fl{X}^{(i)}\fl{U}^{+(\ell-1)})^{\s{T}}(\fl{y}^{(i)} -  \fl{X}^{(i)}\fl{b}^{(i, \ell)})\Big)$ \COMMENT{Use a fresh batch of data samples}
        \STATE $\fl{b}^{(\ell)} \gets \s{Optimize Sparse Vector}(\fl{X}, \fl{y}, \alpha = \s{A}+ c_1\phi^{(\ell-1)}\lr{\fl{w}^{\star}}_2+ \frac{2\rho\lr{\fl{w}^{\star}}_2}{\sqrt{k}}, \beta = \s{A}+\phi^{(\ell-1)}\lr{\fl{w}^{\star}}_2+2\rho\lr{\fl{w}^{\star}}_2, \gamma = \s{A}+\frac{\lr{\fl{w}^{\star}}_2}{\sqrt{k}}\Big(\phi^{(\ell-1)}c'+\lr{\fl{w}^{\star}}_2\rho(1+c'')\Big), \s{T} = T^{(\ell)})$  \COMMENT{Use a fresh batch of data samples, constants $c_1,c',c''$ set appropriately.}
        \STATE Set $\Phi^{(\ell)} \leftarrow \lr{\fl{w}^{\star}}_2\Phi^{(\ell-1)}c_3+2\rho\lr{\fl{w}^{\star}}_2\Big(1+c_4\Big)+\s{A}$. \COMMENT{$c_3,c_4$ can be made arbitrarily small by increasing number of samples $m'$.}
    \ENDFOR
\ENDFOR
\STATE Return  $\fl{w^{(\ell)}}$ and $\fl{b}^{( \ell)}$.
\end{algorithmic}
\end{algorithm}

Consider a new task for which we get the samples $\{(\fl{x}_i,y_i)\}_{i=1}^{m'}$ i.e. $y_i= \langle\fl{x}_i, \fl{U}^{\star}\fl{w}^{\star}+\fl{b}^{\star}\rangle$ for all $i\in [m']$. 
Suppose we have an estimate $\fl{U}^{+}$ of $\fl{U}^{\star}$ such that $(\fl{U}^{+})^{\s{T}}\fl{U}^{+}=\fl{I}$ and 
\begin{align}
    &\lr{(\fl{I}-\fl{U}^{\star}(\fl{U}^{\star})^{\s{T}})\fl{U}^{+}}_{\s{F}} \le \rho, \;
     \lr{(\fl{I}-\fl{U}^{\star}(\fl{U}^{\star})^{\s{T}})\fl{U}^{+}}_{2,\infty} \le \frac{\rho}{\sqrt{k}} \text{ and } \lr{\fl{U}^{+}}_{2,\infty} \le \sqrt{\frac{\nu}{k}} 
\end{align}
for some known parameters $\nu,\rho$.
Our goal is to recover the vectors $\fl{w}^{\star}\in \bb{R}^r$ and $\fl{b}^{\star}\in \bb{R}^d$ satisfying $\lr{\fl{b}}_0 \le k$. 
We will again use an Alternating Minimization algorithm for recovery of $\fl{w}^{\star},\fl{b}^{\star}$. In the $\ell^{\s{th}}$ iteration, with probability at least $1-O(\delta/\s{L})$ for $m=\Omega(k\log (d\s{L}\delta^{-1}))$ we have the following updates for some constant $c>0$, (note that the $\ell^{\s{th}}$ iterates of $\fl{w}^{\star},\fl{b}^{\star}$ are given by $\fl{w}^{(\ell)},\fl{b}^{(\ell)}$). 

At the $\ell^{\s{th}}$ iteration, we will denote a known upper bound 
\begin{align}
    \lr{\fl{w}^{(\ell-1)}-\fl{Q}^{-1}\fl{w}^{\star}}_2 \le \phi^{(\ell-1)}\lr{\fl{w}^{\star}}_2+2\sigma\frac{\sqrt{k\log (d\delta^{-1})}}{\sqrt{m}}+2\sigma\frac{\sqrt{r\log^2 (r\delta^{-1})}}{\sqrt{m}}
\end{align}
where $\phi^{(\ell)}$ is known. We can use Lemma \ref{inductive-corollary:optimize_dp-lrs-innerloop-output-bounds} to have 
\begin{align}
    \lr{\fl{b}^{(\ell)}-\fl{b}^{\star}}_{2} \leq 2\varphi^{(i)} + \epsilon
    \text{ and } \lr{\fl{b}^{(\ell)}-\fl{b} }_{\infty} \leq \frac{1}{\sqrt{k}}\Big(2\varphi^{(i)} + \epsilon\Big)
\end{align}
with probability at least $1-T^{(\ell)}\delta$, where $\varphi$ is an upperbound on $\widehat{\varphi}$ s.t. 
\begin{align}
    \widehat{\varphi} &= 2\Big(\sqrt{k}\|\fl{U}^{+}\fl{w}^{(\ell-1)} - \fl{U}^{\star}\fl{w}^{\star}\|_{\infty}+c_1\|\fl{U}^{+}\fl{w}^{(\ell-1)} - \fl{U}^{\star}\fl{w}^{\star}\|_2+\sigma\frac{\sqrt{k\log (d\delta^{-1})}}{\sqrt{m}}\Big) \\
    &\leq \varphi = 2\Big(\sqrt{k}\alpha^{(\ell-1)}+c_1\beta^{(\ell-1)}+\sigma\frac{\sqrt{k\log (d\delta^{-1})}}{\sqrt{m}}+2\sigma\frac{\sqrt{r\log^2 (r\delta^{-1})}}{\sqrt{m}}\Big).
\end{align}
and $\alpha^{(\ell-1)}$,  $\beta^{(\ell-1)}$ denote upper bounds on $\|\fl{U}^{+}\fl{w}^{(\ell-1)} - \fl{U}^{\star}\fl{w}^{\star}\|_{\infty}$, and $\|\fl{U}^{+}\fl{w}^{(\ell-1)} - \fl{U}^{\star}\fl{w}^{\star}\|_2$ respectively. Furthermore, we will also have that
$\s{support}(\fl{b}^{(\ell}) \subseteq \s{support}(\fl{b}^{\star})$. We denote $\fl{Q}=(\fl{U}^{\star})^{\s{T}}\fl{U}^{+}$. Using a similar analysis as in Corollary \ref{inductive-corollary:optimize_dp-lrs-b-bounds}, we have:
        \begin{align}
            &\|\fl{U}^{+}\fl{w}^{(\ell-1)} - \fl{U}^{\star}\fl{w}^{\star}\|_{\infty} = \|\fl{U}^{+}\fl{w}^{(\ell-1)} - \fl{U}^{(\ell-1)}\fl{Q}^{-1}\fl{w}^{\star} + \fl{U}^{(\ell-1)}\fl{Q}^{-1}\fl{w}^{\star} - \fl{U}^{\star}\fl{w}^{\star}\|_\infty\\
            &\leq \|\fl{U}^{+}\fl{w}^{(\ell-1)} - \fl{Q}^{-1}\fl{w}^{\star})\|_\infty + \|(\fl{U}^{(\ell-1)} - \fl{U}^{\star}\fl{Q})\fl{Q}^{-1}\fl{w}^{\star}\|_\infty\\
            &\leq \|\fl{U}^{+}\|_{2, \infty}\|\fl{w}^{(\ell-1)} - \fl{Q}^{-1}\fl{w}^{\star}\|_2 + \|\fl{U}^{+} - \fl{U}^{\star}\fl{Q}\|_{2,\infty}\|\fl{Q}^{-1}\fl{w}^{\star}\|_2\\
            &\leq \sqrt{\frac{\nu}{k}}\|\fl{w}^{(\ell-1)} - \fl{Q}^{-1}\fl{w}^{\star}\|_2+ \frac{2\rho\lr{\fl{w}^{\star}}_2}{\sqrt{k}} \\ 
            &\le \sqrt{\frac{\nu}{k}}\Big(\phi^{(\ell-1)}\lr{\fl{w}^{\star}}_2+2\sigma\frac{\sqrt{k\log (d\delta^{-1})}}{\sqrt{m}}+2\sigma\frac{\sqrt{r\log^2 (r\delta^{-1})}}{\sqrt{m}}\Big)+ \frac{2\rho\lr{\fl{w}^{\star}}_2}{\sqrt{k}}\\
            &:= \alpha^{(\ell-1)}.\label{eq:inductive-corollary-b-bound-1-gen}
        \end{align}

    Similarly, we have: 
    \begin{align}
        &\|\fl{U}^{(\ell-1)}\fl{w}^{( \ell-1)} - \fl{U}^{\star}\fl{w}^{\star}\|_{2} \\
        &= \|\fl{U}^{+}\fl{w}^{(\ell-1)} - \fl{U}^{(\ell-1)}\fl{Q}^{-1}\fl{w}^{\star} + \fl{U}^{(\ell-1)}\fl{Q}^{-1}\fl{w}^{\star} - \fl{U}^{\star}\fl{w}^{\star}\|_2  \\
            &\leq \|\fl{U}^{+}\fl{w}^{(\ell-1)} - \fl{Q}^{-1}\fl{w}^{\star})\|_2 + \|(\fl{U}^{(\ell-1)} - \fl{U}^{\star}\fl{Q})\fl{Q}^{-1}\fl{w}^{\star}\|_2\\
            &\leq \|\fl{U}^{+}\|_{2}\|\fl{w}^{(\ell-1)} - \fl{Q}^{-1}\fl{w}^{\star}\|_2 + \|\fl{U}^{+} - \fl{U}^{\star}\fl{Q}\|_{2}\|\fl{Q}^{-1}\fl{w}^{\star}\|_2\\
            &\leq \|\fl{w}^{(\ell-1)} - \fl{Q}^{-1}\fl{w}^{\star}\|_2+ 2\rho\lr{\fl{w}^{\star}}_2 \\
            &\le \phi^{(\ell-1)}\lr{\fl{w}^{\star}}_2+2\rho\lr{\fl{w}^{\star}}_2+2\sigma\frac{\sqrt{k\log (d\delta^{-1})}}{\sqrt{m}}+2\sigma\frac{\sqrt{r\log^2 (r\delta^{-1})}}{\sqrt{m}}\\
            &:= \beta^{(\ell-1)}.\label{eq:inductive-corollary-b-bound-2-gen}
    \end{align}
    
    Using \eqref{eq:inductive-corollary-b-bound-1-gen} and \eqref{eq:inductive-corollary-b-bound-2-gen}, we have:
    \begin{align}
       \varphi =  \phi^{(\ell-1)}\lr{\fl{w}^{\star}}_2(2\sqrt{\nu}+c_1)+\lr{\fl{w}^{\star}}_2(4\rho+4c_1\rho)+\sigma\frac{\sqrt{k\log (d\delta^{-1})}}{\sqrt{m}}\Big)
    \end{align}
    since $c_1 < \frac{1}{40}$, $\sqrt{\nu}\le \frac{1}{40}$ and $\rho \le \frac{1}{80}$.

    Using above in Lemma~\ref{inductive-corollary:optimize_dp-lrs-b-bounds} and setting $\epsilon \gets \varphi^{(i)}$, we have:
    \begin{align}
        \lr{\fl{b}^{(\ell)}-\fl{b}^{\star } }_{2} &\le 3\lr{\fl{w}^{\star}}_2\Big(\phi^{(\ell-1)}(2\sqrt{\nu}+c_1)+4\rho(1+c_1)\Big)\\
        &\qquad +3\sigma\frac{\sqrt{k\log (d\delta^{-1})}}{\sqrt{m}}+3\sigma\frac{\sqrt{r\log^2 (r\delta^{-1})}}{\sqrt{m}}\Big)
    \end{align}

Similarly, we will also have from our updates (with $\fl{S}=\frac{1}{m}\sum_{i=1}^{m'}\fl{x}_i(\fl{x}_i)^{\s{T}}$).
\begin{align}
    \fl{w}^{(\ell)}-\fl{Q}^{-1}\fl{w}^{\star} &= \Big(\fl{U}^{+\s{T}}\fl{S}\fl{U}^{+}\Big)^{-1}\Big(\fl{U}^{+\s{T}}\fl{S}(\fl{b}^{\star}-\fl{b}^{(\ell)})+\fl{U}^{+\s{T}}\fl{S}(\fl{U}^{\star}\fl{Q}-\fl{U}^{+})\fl{Q}^{-1}\fl{w}^{\star}\Big)\\
    &+\Big(\fl{U}^{+\s{T}}\fl{S}\fl{U}^{+}\Big)^{-1}\Big(\fl{U}^{\s{T}}(\fl{X}^{(i)})^{\s{T}}\fl{z}^{(i)})
\end{align}

We already know by using an $\epsilon$-net argument that $\lr{\Big(\fl{U}^{+\s{T}}\fl{S}\fl{U}^{+}\Big)^{-1}}_2\le 2$. We also know that 
\begin{align}
    &\lr{\bb{E}\fl{U}^{+\s{T}}\fl{S}(\fl{b}^{\star}-\widehat{\fl{b}}^{(\ell)})}_2 \le \sqrt{\nu}\lr{\fl{b}^{\star}-\fl{b}^{(\ell)}}_2 \\
    &\lr{\bb{E}\fl{U}^{+\s{T}}\fl{S}(\fl{U}^{\star}\fl{Q}-\fl{U}^{+})\fl{Q}^{-1}\fl{w}^{\star}}_2 =\lr{\fl{U}^{+\s{T}}(\fl{U}^{\star}\fl{Q}-\fl{U}^{+})\fl{Q}^{-1}\fl{w}^{\star}}_2 \le 2\rho\lr{\fl{w}^{\star}}_2 
\end{align}
and moreover, 
\begin{align}
    \lr{\fl{U}^{\star\s{T}}(\fl{S}-\fl{I})(\fl{b}^{\star}-\fl{b}^{(\ell)})}_2 &\le \lr{\fl{b}^{\star}-\fl{b}^{(\ell)}}_2 \sqrt{\frac{r\log \delta^{-1}}{m}} \\
    \lr{\fl{U}^{\star\s{T}}(\fl{S}-\fl{I})(\fl{U}^{\star}\fl{Q}-\fl{U}^{+})\fl{Q}^{-1}\fl{w}^{\star}}_2 &\le \lr{\fl{U}^{+\s{T}}(\fl{U}^{\star}\fl{Q}-\fl{U}^{+})\fl{Q}^{-1}\fl{w}^{\star}}_2 \sqrt{\frac{r\log \delta^{-1}}{m}} \\
    &\le 2\rho\sqrt{\frac{r\log \delta^{-1}}{m}}\lr{\fl{w}^{\star}}_2 \\
    \lr{\Big(\fl{U}^{+\s{T}}\fl{S}\fl{U}^{+}\Big)^{-1}\Big(\fl{U}^{\s{T}}(\fl{X}^{(i)})^{\s{T}}\fl{z}^{(i)})}_2 &\le \frac{\sigma\sqrt{r\log^2 (r\delta^{-1})}}{\sqrt{m}}
\end{align}
with probability at least $1-\delta/\s{L}$. 
Hence, we get that 

$\lr{\fl{w}^{(\ell)}-\fl{Q}^{-1}\fl{w}^{\star}}_2 \le  3\lr{\fl{w}^{\star}}_2\Big(\phi^{(\ell-1)}(2\sqrt{\nu}+c_1)+4\rho(1+c_1)\Big)\Big(\sqrt{\nu}+\sqrt{\frac{r\log \delta^{-1}}{m}}\Big)+2\rho\lr{\fl{w}^{\star}}_2\Big(1+\sqrt{\frac{r\log \delta^{-1}}{m}}\Big)+2\frac{\sigma\sqrt{r\log^2 (r\delta^{-1})}}{\sqrt{m}}+2\frac{\sigma\sqrt{k\log (d\delta^{-1})}}{\sqrt{m}}$. 
Therefore, for $m'=\Omega\Big(\max(k\log (d\s{L}\delta^{-1}), r\log\delta^{-1})\Big)$, we get a decrease along with a bias term. We can have $\phi^{(0)}=2\lr{\fl{w}^{\star}}_2$ by using $\fl{w}^{(0})=0$. After $\s{L}$ iterations,  we will get 
$\lr{\fl{w}^{(\s{L})}-\fl{Q}^{-1}\fl{w}^{\star}}_2 = O\Big(\rho\lr{\fl{w}^{\star}}_2+c'^{\s{L}-1}\lr{\fl{w}^{\star}}_2+\frac{\sigma\sqrt{k\log (d\delta^{-1})}}{m'}\Big)$; hence, we will have with $\s{L}=O\Big(\log \rho^{-1})$ that $\lr{\fl{w}^{(\s{L})}-\fl{Q}^{-1}\fl{w}^{\star}}_2 = O\Big(\rho\lr{\fl{w}^{\star}}_2+\frac{\sigma\sqrt{k\log (d\delta^{-1})}}{\sqrt{m}}+\frac{\sigma\sqrt{r\log^2 (r\delta^{-1})}}{\sqrt{m}}\Big)$.
The generalization error is given by 
\begin{align}
   \ca{L}(\fl{U}^{+},\fl{w}^{(\s{L})},\fl{b}^{(\s{L})})-\ca{L}(\fl{U}^{\star},\fl{w}^{\star},\fl{b}^{\star})
\end{align}
where $\ca{L}(\fl{U},\fl{w},\fl{b}) \triangleq \bb{E}_{(\fl{x},y)}(y-\langle \fl{x},\fl{U}\fl{w}+\fl{b}\rangle)^2$. Hence, we have that 
\begin{align}\label{eq:generalize}
    \ca{L}(\fl{U}^{+},\fl{w}^{(\s{L})},\fl{b}^{(\s{L})})-\ca{L}(\fl{U}^{\star},\fl{w}^{\star},\fl{b}^{\star}) \le \widetilde{O}\Big(\rho^2\lr{\fl{w}^{\star}}^2_2+\frac{\sigma^2(r+k)}{m}\Big). 
\end{align}

\begin{thm}[Restatement of Theorem \ref{thm:private2} (Generalization properties in private setting)]
Generalization error for a new task scales as: 
\begin{align}
&\ca{L}(\fl{U},\fl{w},\fl{b})-\ca{L}(\fl{U}^{\star},\fl{w}^{\star},\fl{b}^{\star})\\ &=\widetilde{O}\Big(\sigma^2\Big(\frac{r^3d(\mu^\star)^2}{mt}+ \frac{r^3}{m\lambda^{\star}_r} + \frac{k+r}{m}\Big)+\frac{dr^2(\log(1/\delta)+\epsilon)(\lambda_r^{\star}\mu^{\star})^2}{\epsilon^2 t^2}\cdot (\kappa^2+r^2d^2) \Big)    
\end{align}
where $\kappa=1+\sqrt{\frac{\lambda_r^\star}{\lambda_1^\star}} + \max_{i\in[t]}\frac{\|\fl{b}^{\star(i)}\|_2}{\sqrt{\mu^\star\lambda_r^\star}}$.
\end{thm}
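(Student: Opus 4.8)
The plan is to reduce the private generalization claim to two ingredients already in hand: (i) the private parameter-recovery bound for $\fl{U}^{+(\s{L})}$ from the restatement of Theorem~\ref{thm:private2} just proved, which controls $\rho := \lr{(\fl{I}-\fl{U}^{\star}(\fl{U}^{\star})^{\s{T}})\fl{U}^{+(\s{L})}}_{\s{F}}$, the matching $\s{L}_{2,\infty}$ quantity, and the incoherence $\lr{\fl{U}^{+(\s{L})}}_{2,\infty}\le\sqrt{\nu/k}$ (the last from Corollary~\ref{inductive-corollary:optimize_dp-lrs-U-infty-norm-bounds}); and (ii) the single-task analysis of Algorithm~\ref{algo:generalize} (AM-New Task) culminating in the generic bound \eqref{eq:generalize}, run on the new task's $m$ fresh samples with representation $\fl{U}^{+(\s{L})}$ and with the slack parameter $\s{A}$ set of order $\eta$ to absorb the residual DP-induced component of the representation error.

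First I would instantiate Algorithm~\ref{algo:generalize}: initialize $\fl{w}^{(0)}=\fl{b}^{(0)}=\f{0}$ and maintain inductively the bounds $\lr{\fl{w}^{(\ell)}-\fl{Q}^{-1}\fl{w}^{\star}}_2 \le \phi^{(\ell)}\lr{\fl{w}^{\star}}_2 + (\text{stat.\ floor}) + \s{A}$ and the corresponding $\ell_\infty$/$\ell_2$ control on $\fl{b}^{(\ell)}-\fl{b}^{\star}$, exactly as in the displayed single-task derivation. The $\fl{w}$-update is a closed-form least squares restricted to $\mathrm{col}(\fl{U}^{+(\s{L})})$ whose error Taylor-expands into a term $\propto\lr{\fl{w}^{(\ell-1)}-\fl{Q}^{-1}\fl{w}^{\star}}_2$, a $\rho\lr{\fl{w}^{\star}}_2$ bias, and a $\sigma\sqrt{r\log^2(r\delta^{-1})/m}$ noise term (via Lemmas~\ref{lemma:useful1} and \ref{lemma:useful0} plus an $\epsilon$-net on $\bb{S}^{r-1}$); the $\fl{b}$-update is \textsc{OptimizeSparseVector}, whose output Lemma~\ref{inductive-corollary:optimize_dp-lrs-innerloop-output-bounds} controls with support containment $\s{supp}(\fl{b}^{(\ell)})\subseteq\s{supp}(\fl{b}^{\star})$. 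For $m=\widetilde{\Omega}(\max(k,r))$ the resulting recursion in $\phi^{(\ell)}$ is a contraction, so after $\s{L}'=O(\log(1/\rho))$ iterations the transient $c'^{\,\s{L}'}\lr{\fl{w}^{\star}}_2$ is negligible and one is left with $\lr{\fl{w}^{(\s{L}')}-\fl{Q}^{-1}\fl{w}^{\star}}_2=\widetilde{O}\big(\rho\lr{\fl{w}^{\star}}_2 + \sigma\sqrt{(k+r)/m} + \s{A}\big)$ together with the analogous bounds for $\fl{b}^{(\s{L}')}-\fl{b}^{\star}$.

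Next I would pass to the population loss. Since $y=\langle\fl{x},\fl{U}^{\star}\fl{w}^{\star}+\fl{b}^{\star}\rangle+z$ with $\fl{x}\sim\ca{N}(\f{0},\fl{I}_d)$, the excess risk is exactly the squared parameter error $\ca{L}(\fl{U}^{+(\s{L})},\fl{w},\fl{b})-\ca{L}(\fl{U}^{\star},\fl{w}^{\star},\fl{b}^{\star})=\lr{(\fl{U}^{+(\s{L})}\fl{w}+\fl{b})-(\fl{U}^{\star}\fl{w}^{\star}+\fl{b}^{\star})}_2^2$; splitting $\fl{U}^{+(\s{L})}\fl{w}-\fl{U}^{\star}\fl{w}^{\star}=\fl{U}^{+(\s{L})}(\fl{w}-\fl{Q}^{-1}\fl{w}^{\star})+(\fl{U}^{+(\s{L})}-\fl{U}^{\star}\fl{Q})\fl{Q}^{-1}\fl{w}^{\star}$ and using orthonormality of $\fl{U}^{+(\s{L})}$, $\lr{\fl{Q}^{-1}}\le 2$, and the step-one bounds yields $\widetilde{O}(\rho^2\lr{\fl{w}^{\star}}_2^2+\sigma^2(k+r)/m+\s{A}^2)$, i.e. \eqref{eq:generalize} plus the $\s{A}^2$ slack. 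Finally I would substitute: by A2, $\lr{\fl{w}^{\star}}_2\le\sqrt{\mu^{\star}\lambda_r^{\star}}$; the private parameter bound gives $\rho=\widetilde{O}\big(\sigma\s{S}/\sqrt{\mu^{\star}\lambda_r^{\star}}\big)+\sqrt{k}\,\eta/\sqrt{\mu^{\star}\lambda_r^{\star}}$, so $\rho^2\lr{\fl{w}^{\star}}_2^2=\widetilde{O}(\sigma^2\s{S}^2)+\widetilde{O}(k\eta^2)$, and $\s{A}=\Theta(\eta)$ makes $\s{A}^2$ subdominant to $k\eta^2$; plugging in $\eta$ (with $\sigma_{\s{DP}}\asymp\sqrt{\log(1/\delta)+\epsilon}/\epsilon$ from Theorem~\ref{thm:private1}), expanding $\s{S}^2=\widetilde{O}\big((\mu^{\star})^2 r^3d/(mt)+r^3/(m\lambda_r^{\star})+k/m\big)$, and using $(\kappa+rd)^2\le 2(\kappa^2+r^2d^2)$ collapses everything to the claimed bound.

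The main obstacle I expect is this last bookkeeping step: tracking that the DP floor $\eta$ propagates through the single-task alternating fixed point with the right polynomial dependence — in particular that $\s{A}$ can consistently be taken of order $\eta$ inside the recursion for $(\alpha^{(\ell)},\beta^{(\ell)},\gamma^{(\ell)},\Phi^{(\ell)})$ while keeping the contraction factor $c'<1/2$ — and then verifying that $k\eta^2+\s{A}^2$ simplifies to $\widetilde{O}\big(\tfrac{dr^2(\log(1/\delta)+\epsilon)(\lambda_r^{\star}\mu^{\star})^2}{\epsilon^2 t^2}(\kappa^2+r^2d^2)\big)$; the $\mu^{\star}$--$\lambda_r^{\star}$ power counting here is the part most likely to require the stated lower bounds on $m$, $mt$, and $t$ to close cleanly. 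A secondary point, already handled by the earlier corollaries, is ensuring the new task's $m$ samples are drawn independently of $\fl{U}^{+(\s{L})}$ (sample splitting) and that the clipping in Algorithm~\ref{algo:optimize_dp-lrs} does not inflate $\rho$.
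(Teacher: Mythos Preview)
Your proposal is correct and follows essentially the same route as the paper: invoke the generic single-task bound \eqref{eq:generalize}, $\ca{L}(\fl{U}^{+},\fl{w},\fl{b})-\ca{L}(\fl{U}^{\star},\fl{w}^{\star},\fl{b}^{\star})\le\widetilde{O}(\rho^2\lr{\fl{w}^{\star}}_2^2+\sigma^2(k+r)/m)$, bound $\lr{\fl{w}^{\star}}_2^2\le\mu^{\star}\lambda_r^{\star}$ via Assumption~A2, and substitute for $\rho$ the private representation-recovery guarantee already established. The paper's own proof is just these two lines; your write-up additionally re-derives the contraction analysis behind \eqref{eq:generalize} and handles the DP floor via a separate slack $\s{A}=\Theta(\eta)$, whereas the paper simply absorbs the DP contribution into $\rho$ itself (the $\sqrt{k}\eta/\sqrt{\mu^{\star}\lambda_r^{\star}}$ term) so that $\rho^2\lr{\fl{w}^{\star}}_2^2$ directly produces the $k\eta^2$ term---but this is a cosmetic difference, not a different argument.
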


\begin{proof}
We assume that $\lr{\fl{w}^{\star}}_2\le \sqrt{\mu^{\star}\lambda^{\star}_r}$ due to the incoherence (see Assumption A2).
We substitute $\rho$ to be the guarantee that we had obtained in Theorem \ref{thm:private2}; hence we immediately obtain our desired guarantees by using equation \ref{eq:generalize}.
\end{proof}

\begin{coro}[Restatement of Theorem \ref{thm:main} (Generalization Properties in non-private setting)]
Furthermore, for a new task, Algorithm~\ref{algo:generalize} ensures the following generalization error bound: 
$$\ca{L}(\fl{U},\fl{w},\fl{b})-\ca{L}(\fl{U}^{\star},\fl{w}^{\star},\fl{b}^{\star}) =\widetilde{O}\Big(\sigma^2\Big(\frac{r^3d(\mu^\star)^2}{mt}+ \frac{r^3}{m\lambda^{\star}_r} + \frac{k+r}{m}\Big)\Big).$$
\end{coro}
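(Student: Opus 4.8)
The plan is to reduce the generalization bound to the parameter-recovery guarantee of Theorem~\ref{thm:main} (already available with $\sigma_{\s{DP}}=0$) by analyzing the auxiliary procedure \textsc{AM-New Task} (Algorithm~\ref{algo:generalize}) on the fresh task. Given the output $\fl{U}^{+}=\fl{U}^{+(\s{L})}$ of \method, which by Theorem~\ref{thm:main} satisfies $\lr{(\fl{I}-\fl{U}^{\star}(\fl{U}^{\star})^{\s{T}})\fl{U}^{+}}_{\s{F}}=:\rho=\widetilde{O}\big(\tfrac{\sigma}{\sqrt{\mu^\star\lambda_r^\star}}(\mu^\star\sqrt{r^3d/(mt)}+\sqrt{r^3/(m\lambda_r^\star)}+\sqrt{k/m})\big)$ as well as $\lr{\fl{U}^{+}}_{2,\infty}=O(\sqrt{\mu^\star r/d})$ (hence $\lr{(\fl{I}-\fl{U}^\star(\fl{U}^\star)^\s{T})\fl{U}^+}_{2,\infty}\le\rho/\sqrt{k}$ up to the rescaling $k\le d\nu^\star/\mu^\star$), I would run \textsc{AM-New Task} on the $m'=m$ samples of the new task and track, iteration by iteration, the two quantities $\lr{\fl{w}^{(\ell)}-\fl{Q}^{-1}\fl{w}^{\star}}_2$ (with $\fl{Q}=(\fl{U}^{\star})^{\s{T}}\fl{U}^{+}$) and $\lr{\fl{b}^{(\ell)}-\fl{b}^{\star}}_2$.

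First I would set up the one-step recursions. For the sparse update, invoking Lemma~\ref{inductive-corollary:optimize_dp-lrs-innerloop-output-bounds} with the modified response $\fl{y}-\fl{X}\fl{U}^{+}\fl{w}^{(\ell-1)}$ gives $\lr{\fl{b}^{(\ell)}-\fl{b}^{\star}}_2\lesssim \sqrt{k}\,\alpha^{(\ell-1)}+c_1\beta^{(\ell-1)}+\sigma\sqrt{k\log(d\delta^{-1})/m}$ together with $\s{supp}(\fl{b}^{(\ell)})\subseteq\s{supp}(\fl{b}^\star)$, where $\alpha^{(\ell-1)},\beta^{(\ell-1)}$ bound $\lr{\fl{U}^{+}\fl{w}^{(\ell-1)}-\fl{U}^{\star}\fl{w}^{\star}}_{\infty}$ and $\lr{\cdot}_2$; these are controlled by the split $\fl{U}^{+}\fl{w}^{(\ell-1)}-\fl{U}^{\star}\fl{w}^{\star}=\fl{U}^{+}(\fl{w}^{(\ell-1)}-\fl{Q}^{-1}\fl{w}^{\star})+(\fl{U}^{+}-\fl{U}^{\star}\fl{Q})\fl{Q}^{-1}\fl{w}^{\star}$ using $\lr{\fl{U}^{+}}_{2,\infty}\le\sqrt{\nu/k}$ and $\lr{(\fl{I}-\fl{U}^\star(\fl{U}^\star)^\s{T})\fl{U}^{+}}_{2,\infty}\le\rho/\sqrt{k}$. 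For the $\fl{w}$-update I would use its closed form and an $\epsilon$-net argument to get $\lr{(\fl{U}^{+\s{T}}\fl{S}\fl{U}^{+})^{-1}}_2\le 2$ where $\fl{S}=\tfrac{1}{m}\sum_i\fl{x}_i\fl{x}_i^{\s{T}}$, together with concentration of the form $|\fl{a}^\s{T}(\fl{S}-\fl{I})\fl{b}|\lesssim\lr{\fl{a}}_2\lr{\fl{b}}_2\sqrt{\log(1/\delta)/m}$ and the noise bound $\lr{(\fl{U}^{+\s{T}}\fl{S}\fl{U}^{+})^{-1}(\fl{U}^{+})^{\s{T}}\fl{X}^{\s{T}}\fl{z}}_2\lesssim\sigma\sqrt{r\log^2(r\delta^{-1})/m}$; this yields a contraction of $\lr{\fl{w}^{(\ell)}-\fl{Q}^{-1}\fl{w}^{\star}}_2$ by a constant $c'<1$ (valid once $\sqrt{\nu},\rho$ are small enough — here $\rho=o(1)$ by the recovery guarantee — and $m=\widetilde\Omega(\max(k,r))$) plus a bias of order $\widetilde{O}(\rho\lr{\fl{w}^{\star}}_2+\sigma\sqrt{k/m}+\sigma\sqrt{r/m})$. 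Iterating for $\s{L}=O(\log\rho^{-1})$ steps and using the initialization $\fl{w}^{(0)}=\f0$ to bound $\phi^{(0)}$ collapses the geometric series, giving $\lr{\fl{w}^{(\s{L})}-\fl{Q}^{-1}\fl{w}^{\star}}_2=\widetilde{O}(\rho\lr{\fl{w}^{\star}}_2+\sigma\sqrt{k/m}+\sigma\sqrt{r/m})$ and the analogous bound for $\lr{\fl{b}^{(\s{L})}-\fl{b}^{\star}}_2$.

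Next I would convert these parameter errors to population loss. Since the covariates are $\ca{N}(\f0,\fl{I}_d)$, $\ca{L}(\fl{U},\fl{w},\fl{b})-\ca{L}(\fl{U}^{\star},\fl{w}^{\star},\fl{b}^{\star})=\lr{\fl{U}\fl{w}+\fl{b}-\fl{U}^{\star}\fl{w}^{\star}-\fl{b}^{\star}}_2^2$; bounding the right side by $2\lr{\fl{U}^{+}\fl{w}^{(\s{L})}-\fl{U}^{\star}\fl{w}^{\star}}_2^2+2\lr{\fl{b}^{(\s{L})}-\fl{b}^{\star}}_2^2$ and again splitting $\fl{U}^{+}\fl{w}^{(\s{L})}-\fl{U}^{\star}\fl{w}^{\star}=\fl{U}^{+}(\fl{w}^{(\s{L})}-\fl{Q}^{-1}\fl{w}^{\star})+(\fl{U}^{+}-\fl{U}^{\star}\fl{Q})\fl{Q}^{-1}\fl{w}^{\star}$ with $\lr{(\fl{U}^{+}-\fl{U}^{\star}\fl{Q})\fl{Q}^{-1}\fl{w}^{\star}}_2\le 2\rho\lr{\fl{w}^{\star}}_2$ gives exactly \eqref{eq:generalize}, namely $\ca{L}(\fl{U}^{+},\fl{w}^{(\s{L})},\fl{b}^{(\s{L})})-\ca{L}(\fl{U}^{\star},\fl{w}^{\star},\fl{b}^{\star})=\widetilde{O}(\rho^2\lr{\fl{w}^{\star}}_2^2+\sigma^2(r+k)/m)$. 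Finally I would plug in the value of $\rho$ from Theorem~\ref{thm:main} (setting $\sigma_1=\sigma_2=0$) and use the incoherence bound $\lr{\fl{w}^{\star}}_2\le\sqrt{\mu^{\star}\lambda_r^{\star}}$ from Assumption~A2, so that $\rho^2\lr{\fl{w}^{\star}}_2^2\le\rho^2\mu^{\star}\lambda_r^{\star}=\widetilde{O}\big(\sigma^2\big((\mu^\star)^2r^3d/(mt)+r^3/(m\lambda_r^\star)+k/m\big)\big)$ after absorbing cross terms; adding $\widetilde{O}(\sigma^2(r+k)/m)$ yields the claimed $\widetilde{O}\big(\sigma^2\big(r^3d(\mu^\star)^2/(mt)+r^3/(m\lambda_r^\star)+(k+r)/m\big)\big)$.

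The main obstacle I anticipate is the iteration-by-iteration control of \textsc{AM-New Task}: one must verify that the cross-coupling between the $\fl{w}$- and $\fl{b}$-updates does not destroy the geometric contraction, which requires the incoherence-type quantities $\sqrt{\nu}$ and $\rho$ to lie below the explicit thresholds (e.g.\ $c_1<1/40$, $\sqrt{\nu}\le 1/40$, $\rho\le 1/80$) used in the inner analysis — here $\rho$ is itself $o(1)$ by Theorem~\ref{thm:main}, but the constants must be matched — and that the nested subroutine \textsc{OptimizeSparseVector}, run for $T^{(\ell)}=\Omega(\ell\log(\gamma^{(\ell-1)}/\epsilon))$ steps, drives $\lr{\fl{b}^{(\ell)}-\fl{b}^{\star}}_2$ down to the target accuracy while preserving support containment $\s{supp}(\fl{b}^{(\ell)})\subseteq\s{supp}(\fl{b}^{\star})$. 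Everything else — the Gaussian-design bias/variance identity and the final algebraic substitution — is routine.
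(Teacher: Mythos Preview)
Your proposal is correct and follows essentially the same route as the paper: derive the generic bound $\ca{L}(\fl{U}^{+},\fl{w}^{(\s{L})},\fl{b}^{(\s{L})})-\ca{L}(\fl{U}^{\star},\fl{w}^{\star},\fl{b}^{\star})=\widetilde{O}(\rho^2\lr{\fl{w}^{\star}}_2^2+\sigma^2(r+k)/m)$ by analyzing \textsc{AM-New Task} iteration by iteration (using the same decomposition $\fl{U}^{+}\fl{w}^{(\ell-1)}-\fl{U}^{\star}\fl{w}^{\star}=\fl{U}^{+}(\fl{w}^{(\ell-1)}-\fl{Q}^{-1}\fl{w}^{\star})+(\fl{U}^{+}-\fl{U}^{\star}\fl{Q})\fl{Q}^{-1}\fl{w}^{\star}$, the same Lemma~\ref{inductive-corollary:optimize_dp-lrs-innerloop-output-bounds}, and the same $\epsilon$-net/concentration arguments for the $\fl{w}$-update), then plug in $\rho$ from Theorem~\ref{thm:main} with $\sigma_1=\sigma_2=0$ and $\lr{\fl{w}^{\star}}_2\le\sqrt{\mu^{\star}\lambda_r^{\star}}$. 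The paper merely phrases the corollary's proof as ``set $\sigma_{\s{DP}}=0$ in the private generalization theorem,'' but that theorem in turn is exactly the chain you spell out.
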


\begin{proof}
The proof follows again by substituting $\sigma_{\s{DP}}=0$ (hence $\sigma_1,\sigma_2=0$) which removes the last term in the generalization properties in Theorem \ref{thm:private2}.
\end{proof}

\section{Discussion on obtaining initial estimates using Method of Moments}\label{app:mom}

\paragraph{Overview:}
Note that Algorithm \ref{algo:optimize_lrs1} has local convergence properties as described in Theorem \ref{thm:main}. In practice, typically we use random initialization for $\fl{U}^{+(0)}$. However, similar to the representation learning framework \cite{tripuraneni2021provable}, we can use the Method of Moments to obtain a good initialization. i.e. when the representation matrix $\fl{U}^{\star}$ is of rank $r$, we can compute the Singular Value Decomposition (SVD) of the matrix $(mt)^{-1}\sum_{i\in [t]}(y^{(i)}_j)^2\fl{x}^{(i)}_j(\fl{x}^{(i)}_j)^{\s{T}}$. This is similar to the Method of Moments technique used in \cite{tripuraneni2021provable} and has been used as an initialization technique in the AM framework of \cite{thekumparampil2021statistically} as well. Even in the presence of additional sparse vectors, the SVD decomposition is robust. Such a phenomenon has been also been characterized theoretically in the robust PCA setting \cite{netrapalli2014non}.  

\paragraph{Details for Rank-$1$:}
 Assume $\|\fl{u}^{\star}\|_2=\|\fl{w}^{\star}\|_2=1$ and $\|\fl{b}^{\star(i)}\|_0\le k$ for all $i\in [t]$. Moreover, for some constant $\mu > 0$, we will have $\|\fl{u}^{\star}\|_{\infty} \le \sqrt{\mu/d},\|\fl{w}^{\star}\|_{\infty} \le \sqrt{\mu/t}, \max_{i\in [t]}\|\fl{B}\|_{\infty} \le \mu/\sqrt{dt}$  where $\fl{B}$ is the matrix whose columns correspond to $\fl{b}^{\star (i)}$'s. Suppose, we obtain samples $(\fl{x},y)\in \bb{R}^{d}\times \bb{R}$ where each sample is randomly generated from the $t$ data generating models corresponding to each task. In order to generate the $i^{\s{th}}$  sample  we first draw a latent variable $j \sim_U [t]$ and subsequently generate the tuple according to the following process:
\begin{align}\label{eq:mixtures}
    \fl{x}^{(i)} \mid j  \sim \ca{N}(0,\fl{I}_d) \text{ and } y^{(i)} \mid \fl{x}^{(i)}, j \sim \ca{N}(\langle \fl{x}^{(i)}, w^{\star}_j \fl{u}^{\star}+\fl{b}^{\star (j)}  \rangle,\sigma^2)
\end{align}

We look at the quantity $y^2\fl{x}\fl{x}^{\s{T}}$. Our first result is the following lemma:
\begin{lemma}
Suppose we obtain samples $\{(\fl{x}^{(i)},y^{(i)})\}$ generated according to the model described in equation \ref{eq:mixtures}. In that case we have
\begin{align}\label{lem:mom_expectation}
    \E{y^2\fl{x}\fl{x}^{\s{T}}} = \fl{I}+\frac{2}{t}\sum_j \Big(w^{\star}_j \fl{u}^{\star}+\fl{b}^{\star (j)}\Big)\Big(w^{\star}_j \fl{u}^{\star}+\fl{b}^{\star (j)}\Big)^{\s{T}}
\end{align}
where $\fl{I}$ denotes the $d$-dimensional identity matrix.
\end{lemma}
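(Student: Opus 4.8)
The plan is to evaluate $\E{y^2\fl{x}\fl{x}^{\s{T}}}$ by the law of total expectation, conditioning first on the latent task index $j\sim_{U}[t]$ and then on the covariate $\fl{x}$, and to reduce everything to moments of a standard Gaussian. Writing $\f{\theta}_j := w^{\star}_j\fl{u}^{\star}+\fl{b}^{\star(j)}$, the generative model \eqref{eq:mixtures} says that conditioned on $j$ we have $y = \langle \fl{x},\f{\theta}_j\rangle + z$ with $\fl{x}\sim\ca{N}(\f{0},\fl{I}_d)$ and an independent $z\sim\ca{N}(0,\sigma^2)$. First I would integrate out the label noise $z$: since $z$ is independent of $\fl{x}$ with $\E{z}=0$ and $\E{z^2}=\sigma^2$, expanding $y^2=\langle\fl{x},\f{\theta}_j\rangle^2+2\langle\fl{x},\f{\theta}_j\rangle z+z^2$ yields $\E{y^2\mid\fl{x},j}=\langle\fl{x},\f{\theta}_j\rangle^2+\sigma^2$, whence
\[
\E{y^2\fl{x}\fl{x}^{\s{T}}\mid j}\;=\;\E{\langle\fl{x},\f{\theta}_j\rangle^2\,\fl{x}\fl{x}^{\s{T}}}\;+\;\sigma^2\,\E{\fl{x}\fl{x}^{\s{T}}}.
\]

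The second term is immediate: $\E{\fl{x}\fl{x}^{\s{T}}}=\fl{I}_d$. For the first term I would apply the Gaussian fourth-moment (Isserlis/Wick) identity entrywise: for $\fl{x}\sim\ca{N}(\f{0},\fl{I}_d)$ one has $\E{x_ax_bx_cx_d}=\delta_{ab}\delta_{cd}+\delta_{ac}\delta_{bd}+\delta_{ad}\delta_{bc}$, so that for a fixed $\fl{v}\in\bb{R}^d$, $\E{\langle\fl{x},\fl{v}\rangle^2 x_ax_b}=\sum_{c,d}v_cv_d\,\E{x_cx_dx_ax_b}=\lr{\fl{v}}_2^2\,\delta_{ab}+2v_av_b$, i.e. $\E{\langle\fl{x},\fl{v}\rangle^2\fl{x}\fl{x}^{\s{T}}}=\lr{\fl{v}}_2^2\fl{I}_d+2\fl{v}\fl{v}^{\s{T}}$. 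Taking $\fl{v}=\f{\theta}_j$ gives $\E{y^2\fl{x}\fl{x}^{\s{T}}\mid j}=\big(\lr{\f{\theta}_j}_2^2+\sigma^2\big)\fl{I}_d+2\f{\theta}_j\f{\theta}_j^{\s{T}}$.

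Finally I would average over $j\sim_{U}[t]$, obtaining $\E{y^2\fl{x}\fl{x}^{\s{T}}}=\big(\sigma^2+\tfrac{1}{t}\sum_j\lr{\f{\theta}_j}_2^2\big)\fl{I}_d+\tfrac{2}{t}\sum_j\f{\theta}_j\f{\theta}_j^{\s{T}}$; the first summand is a (known, computable) scalar multiple of the identity, equal to $\fl{I}_d$ under the normalization in force here and in any case immaterial for the subsequent SVD-based initialization since it merely shifts every eigenvalue, so this matches the stated form. There is no genuine obstacle in this lemma: the only step requiring care is the Wick expansion of the degree-four Gaussian moment and the bookkeeping of which pairings collapse onto $\fl{I}_d$ versus the rank-one piece $\f{\theta}_j\f{\theta}_j^{\s{T}}$; the independence of $z$ from $\fl{x}$ keeps the noise contribution clean. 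The substantive work in this subsection is not the lemma itself but the follow-up argument (not asked here) showing that the top eigenvector of the empirical version of this matrix, perturbed by the sparse columns $\fl{b}^{\star(j)}$, still aligns with $\fl{u}^{\star}$ up to the required accuracy.
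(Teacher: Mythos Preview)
Your approach is correct and is exactly what the paper has in mind: the paper's own proof is the one-line remark ``The proof follows from simple calculations,'' and your conditioning-then-Isserlis computation is precisely that calculation spelled out. Your observation that the scalar in front of $\fl{I}_d$ is actually $\sigma^2+\tfrac{1}{t}\sum_j\lr{\f{\theta}_j}_2^2$ rather than $1$ is accurate (the lemma as stated in the paper is slightly imprecise on this point), and you are right that this discrepancy is immaterial for the downstream spectral argument since a scalar shift of the identity leaves eigenvectors unchanged.
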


The proof follows from simple calculations. From the data $\{(\fl{x}^{(i)}_j,y_j^{(i)})\}_{j=1}^{m}$ for the $i^{\s{th}}$ task, we can compute an unbiased estimate $\fl{A}\triangleq \frac{1}{mt}\sum_{i=1}^{t}\sum_{j=1}^{m} (y_j^{(i)})^2 \fl{x}^{(i)}_j(\fl{x}^{(i)}_j)^\s{T}$ of the matrix $\E{y^2\fl{x}\fl{x}^\s{T}}$. Let us write $\fl{A}=\E{\fl{A}}+2\fl{F}$ where $2\fl{F}$ is the error in estimating $\E{\fl{A}}$. Also, let us denote $0.5t(\E{y^2\fl{x}\fl{x}^\s{T}}-\fl{I}) \triangleq \fl{L}$. In that case, we will have $0.5t(\fl{A}-\fl{I}) = 0.5t(\fl{A}-\E{  \fl{A}}+\E{\fl{A}}-\fl{I})=\fl{L}+\fl{F}$. We will also denote $$\fl{E}\triangleq \underbrace{\sum_{j=1}^{t} \Big(w^{\star}_j \fl{b}^{\star (j)}(\fl{u}^{\star})^{\s{T}}+w^{\star}_i\fl{u}^{\star}(\fl{b}^{\star (j)})^{\s{T}}+\fl{b}^{\star(j)}(\fl{b}^{\star (j)})^{\s{T}}\Big)}_{\fl{G}}+\fl{F}.$$ Our goal is to show that any eigenvector of $\fl{L}+\fl{F}$ is close to $\fl{u}^{\star}$ in infinity norm. Note that $(\fl{L}+\fl{F})\fl{z}=(\fl{u}(\fl{u}^{\star })^{\s{T}}+\fl{E})\fl{z}=\lambda \fl{z}$. Hence, we have 
\begin{align}\label{eq:spectral}
    \fl{z}=\Big(\fl{I}-\frac{\fl{E}}{\lambda}\Big)^{-1}\frac{\fl{u}^{\star}(\fl{u}^{\star})^{\s{T}}\fl{z}}{\lambda}. 
\end{align}
First, note that 
\begin{align}
     &\lambda \fl{z}\fl{z}^\s{T} - \fl{u}^{\star}(\fl{u}^{\star})^{\s{T}} = \frac{\fl{u}^{\star}(\fl{u}^{\star})^{\s{T}}\fl{z}\fl{z}^\s{T}\fl{u}^{\star}(\fl{u}^{\star})^{\s{T}}}{\lambda}+\sum_{p,q:p+q \ge 1} \frac{\fl{E}^p\fl{u}^{\star}(\fl{u}^{\star})^{\s{T}}\fl{z}\fl{z}^\s{T}\fl{u}^{\star}(\fl{u}^{\star})^{\s{T}}\fl{E}^q}{\lambda^{p+q+1}} \\
     & \resizebox{0.98\textwidth}{!}{$\|\lambda \fl{z}\fl{z}^{\s{T}} - \fl{u}^{\star}(\fl{u}^{\star})^{\s{T}}\|_{\infty} = \|\frac{\fl{u}^{\star}(\fl{u}^{\star})^{\s{T}}\fl{z}\fl{z}^\s{T}\fl{u}^{\star}(\fl{u}^{\star})^{\s{T}}}{\lambda}-\fl{u}^{\star}(\fl{u}^{\star})^{\s{T}}\|_{\infty}+\|\sum_{p,q:p+q \ge 1} \frac{\fl{E}^p\fl{u}^{\star}(\fl{u}^{\star})^{\s{T}}\fl{z}\fl{z}^{\s{T}}\fl{u}^{\star}(\fl{u}^{\star})^{\s{T}}\fl{E}^q}{\lambda^{p+q+1}}\|_{\infty}$}.
\end{align}

We have that 
\begin{align}
&\lr{\frac{\fl{u}^{\star}(\fl{u}^{\star})^{\s{T}}\fl{z}\fl{z}^\s{T}\fl{u}^{\star}(\fl{u}^{\star})^{\s{T}}}{\lambda}-\fl{u}^{\star}(\fl{u}^{\star})^{\s{T}}}_{\infty} \nonumber\\
= &\max_{ij} \fl{e}_i^{\s{T}}\Big(\frac{\fl{u}^{\star}(\fl{u}^{\star})^{\s{T}}\fl{z}\fl{z}^{\s{T}}\fl{u}^{\star}(\fl{u}^{\star})^{\s{T}}}{\lambda}-\fl{u}^{\star}(\fl{u}^{\star})^{\s{T}}\Big)\fl{e}_j \\
= &\fl{e}_i^{\s{T}}\Big(\fl{u}^{\star}(\fl{u}^{\star})^{\s{T}}+\fl{U}^{\star}_{\perp}(\fl{U}^{\star}_{\perp})^{\s{T}}\Big)\Big(\frac{\fl{u}^{\star}(\fl{u}^{\star})^{\s{T}}\fl{z}\fl{z}^{\s{T}}\fl{u}^{\star}(\fl{u}^{\star})^{\s{T}}}{\lambda}-\fl{u}^{\star}(\fl{u}^{\star})^{\s{T}}\Big)\Big(\fl{u}^{\star}(\fl{u}^{\star})^{\s{T}}+\fl{U}^{\star}_{\perp}(\fl{U}^{\star}_{\perp})^{\s{T}}\Big)\fl{e}_j \\
\le &\lr{\fl{u}^{\star}}_{\infty}^2\lr{\frac{\fl{u}^{\star}(\fl{u}^{\star})^{\s{T}}\fl{z}\fl{z}^\s{T}\fl{u}^{\star}(\fl{u}^{\star})^{\s{T}}}{\lambda}-\fl{u}^{\star}(\fl{u}^{\star})^{\s{T}}}_{2}  
\le \frac{\mu^2}{d}\Big(\frac{((\fl{u}^{\star})^{\s{T}}\fl{z})^2}{\lambda}-1\Big) 
\end{align}
where $\fl{U}^{\star}_{\perp}$ is the subspace orthogonal to the vector $\fl{u}^{\star}$. 

First, we will show an upper bound on $\lr{\fl{F}}_{\infty}$. 
Recall that according to the data generating mechanism, each co-variate $\fl{x}$ is generated according to $\ca{N}(\vzero,\fl{I}_d)$ and given the co-variate, the response $y\mid \fl{x} \sim \ca{N}(\langle \fl{x},w^\star \fl{u}^{\star}+\fl{b}^{\star} \rangle,\sigma^2)$ where $w^{\star},\fl{b}^{\star}$ is uniformly chosen at random from the set $\{w^{\star}_j,\fl{b}^{\star(j)}\}_{j}$. Hence, we can bound the magnitude of $y$ as follows:  $E{y^2|\fl{x}} = t^{-1}\sum_{j=1}^t\sigma^2+\langle \fl{x},w^{\star}_j \fl{u}^{\star}+\fl{b}^{\star(j)} \rangle^2$ and therefore $\E{y^2} = t^{-1}\sum_{j=1}^t \sigma^2+\lr{w^{\star}_j \fl{u}^{\star}+\fl{b}^{\star(j)}}_2^2$. Hence, $y\sim t^{-1}\sum_{j=1}^t \ca{N}(0,\sigma^2+\lr{w^{\star}_j \fl{u}^{\star}+\fl{b}^{\star(j)}}_2^2)$ and therefore, by using standard Gaussian concentration, we will have $|y|\le \sqrt{\sigma^2+\max_{j \in [t]}\lr{w^{\star}_j \fl{u}^{\star}+\fl{b}^{\star(j)}}_2^2}\log (mt) \le \sqrt{\sigma^2+4\mu t^{-1}}\log (mt)$ for all $mt$ samples w.p. at least $1-\s{poly}((mt)^{-1})$. Moreover, $|\fl{x}^{(i)}_{p,j}| \le \log (dmt)$ for all $i\in [t], p \in [m], j \in [d]$. Hence, with probability at least $1-\s{poly}((dmt)^{-1})$, by using standard concentration inequalities, we have $\lr{\fl{F}}_{\infty} \le \sqrt{\frac{\sigma^2+4\mu t^{-1}}{mt}}\log^3 (dmt)$. We will now bound $\lr{\fl{E}}_2 \le \lr{\fl{G}}_2+\lr{\fl{F}}_2$. 
In order to do so, fix unit vectors $\fl{x},\fl{y}$ such that $\lr{\fl{E}}_2 = \fl{x}^{\s{T}} \fl{E} \fl{y}=\sum_{is} x_i y_s \fl{E}_{is} =\frac12\sum_{is}(x_i^2+y_s^2)\fl{E}_{is} $. We have the following:
\begin{align}
  &\sum_{i} x_i^2 \sum_{j=1}^{t}\sum_{s=1}^{d} \fl{b}_i^{\star (j)} \fl{b}_s^{\star (j)} \le \zeta k \lr{\fl{B}}_{\infty}^2 \quad \text{ and } \quad  \sum_{s} y_s^2 \sum_{j=1}^{t} \fl{b}_s^{\star (j)} \sum_{i=1}^{d} \fl{b}_i^{\star (j)} \le \zeta k \lr{\fl{B}}_{\infty}^2 \le \frac{\mu^2 \zeta k}{dt} \\
  &\implies \sum_{i} x_i^2 \sum_{j=1}^{t} w^{\star}_j \fl{b}_i^{\star (j)} \sum_{s=1}^{d} \fl{u}_s^{\star} \le \zeta d \lr{\fl{B}}_{\infty}\lr{\fl{u}}_{\infty}\lr{\fl{w}}_{\infty}\nonumber\\
  &\quad \text{ and } \quad  \sum_{s} y_s^2 \sum_{j=1}^{t} w^{\star}_j \fl{u}_s^{\star} \sum_{i=1}^{d} \fl{b}_i^{\star (j)} \le  kt \lr{\fl{B}}_{\infty}\lr{\fl{u}}_{\infty}\lr{\fl{w}}_{\infty} \le \frac{\mu^2 k}{d} \\
&\implies \sum_{i} x_i^2 \sum_{j=1}^{t} w^{\star}_j \fl{u}_i^{\star (j)} \sum_{s=1}^{d} \fl{b}_s^{\star (j)} \le k t \lr{\fl{B}}_{\infty}\lr{\fl{u}}_{\infty}\lr{\fl{w}}_{\infty} \nonumber\\
&\quad \text{ and } \quad  \sum_{s} y_s^2 \sum_{j=1}^{t} w^{\star}_j \fl{b}_s^{\star (j)} \sum_{i=1}^{d} \fl{u}_i^{\star} \le  \zeta d \lr{\fl{B}}_{\infty}\lr{\fl{u}}_{\infty}\lr{\fl{w}}_{\infty}\le \frac{\mu^2 \zeta}{t}.
\end{align}
and similarly $\lr{\fl{F}}_2 \le \sqrt{d}\lr{\fl{F}}_{\infty}$. Hence $\lr{\fl{F}}_2 \le 1/800$ provided $mt=\Omega(d\sigma^2)$. 
 In that case, we have $\lr{\fl{E}}_2 \le 1/400$ provided $\zeta\le c_1 t$ and $k \le c_2 d$ for appropriate constants $0 \le c_1,c_2 \le 1$. 
   Therefore, $\lambda$ must be at least $399/400$ (Weyl's inequality) and $(\langle \fl{u}^{\star},\fl{z} \rangle^2-1) \le 4\lr{\fl{E}}_2$ (Davis Kahan). Hence, we have the following inequality: $\Big(\frac{((\fl{u}^{\star})^\s{T}\fl{z})^2}{\lambda}-1\Big)\le 1/100$. Again, we have 
\begin{align}
&\lr{\frac{\fl{E}^p\fl{u}^{\star}(\fl{u}^{\star})^{\s{T}}\fl{z}\fl{z}^\s{T}\fl{u}^{\star}(\fl{u}^{\star})^{\s{T}}\fl{E}^q}{\lambda^{p+q+1}}}_{\infty} \\
&= \max_{ij} \fl{e}_i^{\s{T}}\Big(\frac{\fl{E}^p\fl{u}^{\star}(\fl{u}^{\star})^{\s{T}}\fl{z}\fl{z}^\s{T}\fl{u}^{\star}(\fl{u}^{\star})^{\s{T}}\fl{E}^q}{\lambda^{p+q+1}}\Big)\fl{e}_j \\
&= \max_{ij} \fl{e}_i^{\s{T}}\fl{E}^p\Big(\fl{u}^{\star}(\fl{u}^{\star})^{\s{T}}+\fl{U}^{\star}_{\perp}(\fl{U}^{\star}_{\perp})^{\s{T}}\Big)\Big(\frac{\fl{u}^{\star}(\fl{u}^{\star})^{\s{T}}\fl{z}\fl{z}^\s{T}\fl{u}^{\star}(\fl{u}^{\star})^{\s{T}}}{\lambda^{p+q+1}}\Big)\Big(\fl{u}^{\star}(\fl{u}^{\star})^{\s{T}}+\fl{U}^{\star}_{\perp}(\fl{U}^{\star}_{\perp})^{\s{T}}\Big)\fl{E}^q\fl{e}_j \\
&\le \lr{\fl{E}^p\fl{u}^{\star}}_{\infty}\lr{\fl{E}^q\fl{u}^{\star}}_{\infty}\lr{\frac{\fl{u}^{\star}(\fl{u}^{\star})^{\s{T}}\fl{z}\fl{z}^\s{T}\fl{u}^{\star}(\fl{u}^{\star})^{\s{T}}}{\lambda^{p+q+1}}}_{2} \le \lr{\fl{E}^p\fl{u}^{\star}}_{\infty}\lr{\fl{E}^q\fl{u}^{\star}}_{\infty}\Big(\frac{((\fl{u}^{\star})^\s{T}\fl{z})^2}{\lambda^{p+q+1}}\Big)
\end{align}
where $\fl{U}^{\star}_{\perp}$ is the subspace orthogonal to the vector $\fl{u}^{\star}$.

\begin{lemma}
Let $\fl{e}_i\in \bb{R}^d$ denote the $i^{\s{th}}$ standard basis vector. In that case, we will have $$\max_i \lr{\fl{e}_i^{\s{T}}\fl{E}^p\fl{u}^{\star}}\le \frac{\mu}{\sqrt{d}}\Big(\frac{\mu^2\zeta k}{dt}+\frac{\mu^2 k}{d}+\frac{\mu^2\zeta}{t}+\sqrt{\frac{d}{mt}}\lr{\fl{F}}_{\infty}\Big)^{p}.$$
\end{lemma}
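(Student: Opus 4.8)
The plan is a straightforward induction on $p$, in which one propagates entry-wise ($\ell_{\infty}$) control of $\fl{E}^p\fl{u}^{\star}$ together with a crude companion $\ell_2$ bound. Write $C := \frac{\mu^2\zeta k}{dt}+\frac{\mu^2 k}{d}+\frac{\mu^2\zeta}{t}+\sqrt{\frac{d}{mt}}\lr{\fl{F}}_{\infty}$. I would prove simultaneously that $\lr{\fl{E}^p\fl{u}^{\star}}_{\infty}\le \frac{\mu}{\sqrt d}\,C^p$ and, as an auxiliary invariant, $\lr{\fl{E}^p\fl{u}^{\star}}_2\le (1/400)^p$; the latter is immediate from $\lr{\fl{u}^{\star}}_2=1$ and the spectral bound $\lr{\fl{E}}_2\le 1/400$ already established in this section (under $\zeta=O(t)$, $k=O(d)$, $mt=\Omega(d\sigma^2)$), since $\lr{\fl{E}^p\fl{u}^{\star}}_2\le\lr{\fl{E}}_2\lr{\fl{E}^{p-1}\fl{u}^{\star}}_2$. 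The base case $p=0$ is just the incoherence assumption $\lr{\fl{u}^{\star}}_{\infty}\le\sqrt{\mu/d}\le \mu/\sqrt{d}$ (using $\mu\ge1$).

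For the inductive step, set $\fl{v}:=\fl{E}^{p-1}\fl{u}^{\star}$ and decompose $\fl{E}=\fl{G}+\fl{F}$ with $\fl{G}=(\fl{B}\fl{w}^{\star})(\fl{u}^{\star})^{\s{T}}+\fl{u}^{\star}(\fl{B}\fl{w}^{\star})^{\s{T}}+\fl{B}\fl{B}^{\s{T}}$, where $\fl{B}$ has $\fl{b}^{\star(j)}$ as its $j$-th column. Then $(\fl{E}\fl{v})_i$ is a sum of four contributions: (i) $\langle\fl{u}^{\star},\fl{v}\rangle\,(\fl{B}\fl{w}^{\star})_i$, (ii) $u^{\star}_i\,\langle\fl{B}\fl{w}^{\star},\fl{v}\rangle$, (iii) $(\fl{B}\fl{B}^{\s{T}}\fl{v})_i=\sum_j b^{\star(j)}_i\langle\fl{b}^{\star(j)},\fl{v}\rangle$, and (iv) $(\fl{F}\fl{v})_i$. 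I would bound each coordinate-wise, using the incoherence bounds $\lr{\fl{u}^{\star}}_{\infty}\le\sqrt{\mu/d}$, $\lr{\fl{w}^{\star}}_{\infty}\le\sqrt{\mu/t}$, $\lr{\fl{B}}_{\infty}\le\mu/\sqrt{dt}$, the $\zeta$-sparsity of each row of $\fl{B}$ and $k$-sparsity of each column, and the two inductive invariants on $\fl{v}$. The recurring subroutine is that $(\fl{B}\fl{w}^{\star})_i=\langle\fl{w}^{\star},\fl{B}_i\rangle$ pairs against a $\zeta$-sparse vector, so $|(\fl{B}\fl{w}^{\star})_i|\le \zeta\lr{\fl{w}^{\star}}_{\infty}\lr{\fl{B}}_{\infty}\le \zeta\mu^{3/2}/(t\sqrt d)$, and each $\langle\fl{b}^{\star(j)},\fl{v}\rangle$ pairs against a $k$-sparse vector. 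Matching terms: (i) gives $\frac{\mu}{\sqrt d}\cdot\frac{\mu^2\zeta}{t}C^{p-1}$ via $|\langle\fl{u}^{\star},\fl{v}\rangle|\le\lr{\fl{u}^{\star}}_1\lr{\fl{v}}_{\infty}\le\sqrt{\mu d}\cdot\frac{\mu}{\sqrt d}C^{p-1}$ and the bound on $(\fl{B}\fl{w}^{\star})_i$; (ii) gives $\frac{\mu}{\sqrt d}\cdot\frac{\mu^2 k}{d}C^{p-1}$ by estimating $|\langle\fl{B}\fl{w}^{\star},\fl{v}\rangle|\le\lr{\fl{w}^{\star}}_2\big(\sum_j\langle\fl{b}^{\star(j)},\fl{v}\rangle^2\big)^{1/2}$, bounding each $\langle\fl{b}^{\star(j)},\fl{v}\rangle$ over its $k$-sparse support, and multiplying by $|u^{\star}_i|\le\sqrt{\mu/d}$; (iii) gives $\frac{\mu}{\sqrt d}\cdot\frac{\mu^2\zeta k}{dt}C^{p-1}$ since at most $\zeta$ values of $j$ touch coordinate $i$ and for each $|b^{\star(j)}_i\langle\fl{b}^{\star(j)},\fl{v}\rangle|\le\lr{\fl{B}}_{\infty}\cdot k\lr{\fl{B}}_{\infty}\lr{\fl{v}}_{\infty}$; and (iv), $|(\fl{F}\fl{v})_i|\le\lr{\fl{F}_i}_2\lr{\fl{v}}_2$, is controlled using the entry-wise bound $\lr{\fl{F}}_{\infty}\le\sqrt{(\sigma^2+4\mu t^{-1})/(mt)}\log^3(dmt)$ and the companion $\ell_2$ invariant, matching the $\sqrt{d/(mt)}\lr{\fl{F}}_{\infty}$ summand (given $mt=\widetilde{\Omega}(d\sigma^2)$). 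Adding the four contributions yields $\frac{\mu}{\sqrt d}C^{p-1}\cdot C=\frac{\mu}{\sqrt d}C^p$, which closes the induction; the $\ell_2$ invariant is maintained as noted above.

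The main obstacle will be the exact constant bookkeeping in the inductive step: for each of the four terms one must commit to a specific way of splitting the inner products ($\|\cdot\|_1\|\cdot\|_{\infty}$ versus $\|\cdot\|_2\|\cdot\|_2$ versus Cauchy–Schwarz restricted to a $\zeta$- or $k$-sparse support), and these choices must be orchestrated so that the prefactor $\mu/\sqrt d$ is reproduced exactly rather than inflated — which relies on $\lr{\fl{u}^{\star}}_{\infty}\le\sqrt{\mu/d}$ being strictly smaller than $\mu/\sqrt d$ and on $\zeta=O(t)$, $k=O(d)$ — and so that the four error summands in $C$ come out with the correct powers of $\mu$. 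The $\fl{F}$-contribution is the least clean part, since one is forced to trade the $\ell_2$ norm of $\fl{v}$ (only $O(1)$ from the companion invariant, not yet of order $C^{p-1}$) against the extra $1/\sqrt{mt}$-type slack from the sample-complexity assumption, so the parameter regime must be tracked carefully there. Everything else is routine applications of Cauchy–Schwarz and the incoherence/sparsity structure of $\fl{u}^{\star}$, $\fl{w}^{\star}$, and $\fl{B}$.
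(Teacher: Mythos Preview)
Your inductive scheme and the decomposition $\fl{E}=\fl{G}+\fl{F}$ with the three-way split of $\fl{G}$ is exactly the paper's approach; its own proof is a two-line sketch that says ``induction,'' writes $|\fl{e}_i^{\s{T}}\fl{E}^{k+1}\fl{u}^\star|^2=\big(\sum_j\fl{E}_{ij}(\fl{E}^k\fl{u}^\star)_j\big)^2$, and then asserts the final inequality without further justification. Your treatment of terms (i)--(iii) is therefore strictly more informative than the paper and checks out: the sparsity/incoherence bookkeeping you outline produces precisely the three $\fl{G}$-summands of $C$ with the prefactor $\mu/\sqrt d$.

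The genuine gap is term~(iv), which you rightly flag as the trickiest but then do not actually close. Your companion invariant $\|\fl{v}\|_2\le(1/400)^{p-1}$ gives only
\[
|(\fl{F}\fl{v})_i|\le\|\fl{F}_i\|_2\,\|\fl{v}\|_2\le\sqrt{d}\,\|\fl{F}\|_\infty\,(1/400)^{p-1},
\]
whereas the contribution you must reproduce is $\tfrac{\mu}{\sqrt d}\,C^{p-1}\cdot\sqrt{d/(mt)}\,\|\fl{F}\|_\infty=\tfrac{\mu}{\sqrt{mt}}\,\|\fl{F}\|_\infty\,C^{p-1}$. No sample-complexity assumption of the form $mt=\widetilde\Omega(d\sigma^2)$ absorbs the resulting $\sqrt{dmt}/\mu$ discrepancy: already at $p=1$ you would need $\sqrt d\le\mu/\sqrt{mt}$, which is false. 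The paper's proof is equally silent on this step, and in fact the last summand of $C$ as written ($\sqrt{d/(mt)}\,\|\fl{F}\|_\infty$) does not match the operator-norm estimate $\|\fl{F}\|_2\le\sqrt d\,\|\fl{F}\|_\infty$ used just above in the same section, so this looks like a typo in the lemma statement that neither your argument nor the paper's repairs as stated.
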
   

\begin{proof}
We can prove this statement via induction. For $p=1$, the statement follows from the incoherence of $\fl{u}^{\star}$. Suppose the statement holds for $p=k$ for some $k > 1$. Under this induction hypothesis, we are going to show that the statement holds for $p=k+1$. We will have 
\begin{align}
&\lr{\fl{e}_i^{\s{T}}\fl{E}^{k+1}\fl{u}^{\star}}_2^2 = \sum_{\ell} (\fl{e}_i^{\s{T}}\fl{E}\fl{E}^{k}\fl{u}^{\star}\fl{e}_{\ell})^2 = \sum_{\ell} (\sum_j\fl{E}_{ij}\fl{E}^{k}\fl{u}^{\star}\fl{e}_{\ell})^2 \\
&= \sum_{j_1j_2}\fl{E}_{ij_1}\fl{E}_{ij_2}\lr{\fl{e}_{j_1}^{\s{T}}\fl{E}^{k}\fl{u}^{\star}}_2\lr{\fl{e}_{j_2}^{\s{T}}\fl{E}^{k}\fl{u}^{\star}}_2 \le \frac{\mu^2}{d}\Big(\frac{\mu^2\zeta k}{dt}+\frac{\mu^2 k}{d}+\frac{\mu^2\zeta}{t}+\sqrt{\frac{d}{mt}}\lr{\fl{F}}_{\infty}\Big)^{2k+2}
\end{align}
\end{proof}
Hence, we must have $\|\sum_{p,q:p+q \ge 1} \frac{\fl{E}^p\fl{u}^{\star}(\fl{u}^{\star})^{\s{T}}\fl{z}\fl{z}^{\s{T}}\fl{u}^{\star}(\fl{u}^{\star})^{\s{T}}\fl{E}^q}{\lambda^{p+q+1}}\|_{\infty} $
\begin{align}
    &\le \sum_{p,q:p+q \ge 1} \frac{\mu^2}{d}\Big(\frac{\mu^2\zeta k}{dt}+\frac{\mu^2 k}{d}+\frac{\mu^2\zeta}{t}\Big)^{p+q} \Big(\frac{1}{\lambda}\Big)^{p+q}\Big(\frac{((\fl{u}^{\star})^\s{T}\fl{z})^2}{\lambda}\Big) \\
    &\le \frac{\mu^2}{d}\Big(\frac{((\fl{u}^{\star})^\s{T}\fl{z})^2}{\lambda}\Big)\sum_{p,q:p+q \ge 1} \alpha^{p+q} = \frac{\mu^2}{d}\Big(\frac{((\fl{u}^{\star})^\s{T}\fl{z})^2}{\lambda}\Big)\Big(\Big(\frac{1}{1-\alpha}\Big)^2-1\Big) 
\end{align}
where $\alpha=\frac{\mu^2\zeta k}{dt\lambda}+\frac{\mu^2 k}{d\lambda}+\frac{\mu^2\zeta}{t\lambda}+\sqrt{\frac{d}{mt}}\lr{\fl{F}}_{\infty}$. Again, if $\zeta \le c_1 t$ and $k\le c_2 d$ for appropriate constants $0 \le c_1,c_2 \le 1$, we will have $\lr{\lambda\fl{z}\fl{z}^{\s{T}}-\fl{u}^{\star}(\fl{u}^{\star})^{\s{T}}}_{\infty} = O\Big(\frac{\mu^2}{d}\Big)$ and similarly, from our previous calculations on the operator norms, we will have $\lr{\lambda\fl{z}\fl{z}^{\s{T}}-\fl{u}^{\star}(\fl{u}^{\star})^{\s{T}}}_{2} = O(1)\sigma \sqrt{\frac{d}{mt}}$. Hence, provided $mt=\Omega(d\sigma^2)$, by using Davis Kahan inequality, we obtain the initialization guarantees that we need for the rank-$1$ setting (see Theorem \ref{thm:main}).


\section{Useful Lemmas}

\begin{lemma}[Hanson-Wright lemma]\label{lemma:useful0}
Let $\fl{x}^{(1)},\fl{x}^{(2)},\dots,\fl{x}^{(m)}\sim \ca{N}(\vzero,\fl{I}_{d\times d})$ be $m$ i.i.d. standard isotropic Gaussian random vectors of dimension $d$. Then, for some universal constant $c\ge0$, the following holds true with a probability of at least $1-\delta_0$
\begin{align}
    \left|\frac{1}{m}\sum_{j=1}^{m}\fl{x}_j^{\s{T}}\fl{A}_j\fl{x}_j-\frac{1}{m}\sum_{j=1}^{m}\s{Tr}(\fl{A}_j)\right| \leq
    c\max\Big(\sqrt{\sum_{j=1}^{m}\|\fl{A}_j\|_{\s{F}}^{2}\frac{\log \delta_0^{-1}}{m^2}},\max_{j=1,\dots,m}\|\fl{A}_j\|_2\frac{\log \delta_0^{-1}}{m}\Big). \nonumber
\end{align}
\end{lemma}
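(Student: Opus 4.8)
The plan is to reduce this statement to the classical Hanson--Wright inequality for a single Gaussian quadratic form. First I would stack the $m$ independent vectors into one vector $\fl{x} = ((\fl{x}_1)^{\s{T}},\dots,(\fl{x}_m)^{\s{T}})^{\s{T}} \in \bb{R}^{md}$, which is distributed as $\ca{N}(\vzero,\fl{I}_{md})$, and form the block-diagonal matrix $\fl{A} = \tfrac1m\,\mathrm{blockdiag}(\fl{A}_1,\dots,\fl{A}_m) \in \bb{R}^{md\times md}$. Then $\tfrac1m\sum_j (\fl{x}_j)^{\s{T}}\fl{A}_j\fl{x}_j = \fl{x}^{\s{T}}\fl{A}\fl{x}$ and $\tfrac1m\sum_j \s{Tr}(\fl{A}_j) = \s{Tr}(\fl{A}) = \bb{E}[\fl{x}^{\s{T}}\fl{A}\fl{x}]$, so the left-hand side of the claim is exactly the deviation of a Gaussian quadratic form from its mean. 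Since the value of $\fl{x}^{\s{T}}\fl{A}\fl{x}$ is unchanged when $\fl{A}$ is replaced by its symmetrization $(\fl{A}+\fl{A}^{\s{T}})/2$, and this operation increases neither $\lr{\cdot}_{\s{F}}$ nor the operator norm, I may assume $\fl{A}$ symmetric.

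Next I would invoke the Hanson--Wright inequality in its standard two-regime form: for symmetric $\fl{A}$ and $\fl{x}\sim\ca{N}(\vzero,\fl{I})$, $\Pr\big[\,|\fl{x}^{\s{T}}\fl{A}\fl{x} - \s{Tr}(\fl{A})| > t\,\big] \le 2\exp\!\big(-c_0\min(t^2/\lr{\fl{A}}_{\s{F}}^2,\ t/\lr{\fl{A}}_2)\big)$ for a universal constant $c_0>0$. Setting the right-hand side equal to $\delta_0$ and solving for $t$ yields, with probability at least $1-\delta_0$, the bound $|\fl{x}^{\s{T}}\fl{A}\fl{x} - \s{Tr}(\fl{A})| \le c\max\big(\lr{\fl{A}}_{\s{F}}\sqrt{\log(2/\delta_0)},\ \lr{\fl{A}}_2\log(2/\delta_0)\big)$, the two branches of the max corresponding to whether the sub-Gaussian or the sub-exponential regime is binding. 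It then remains only to substitute the block-diagonal norms $\lr{\fl{A}}_{\s{F}} = \tfrac1m\sqrt{\sum_j \lr{\fl{A}_j^{\mathrm{sym}}}_{\s{F}}^2} \le \tfrac1m\sqrt{\sum_j \lr{\fl{A}_j}_{\s{F}}^2}$ and $\lr{\fl{A}}_2 = \tfrac1m\max_j \lr{\fl{A}_j^{\mathrm{sym}}}_2 \le \tfrac1m\max_j \lr{\fl{A}_j}_2$, and to absorb the $\log 2$ and $c_0^{-1}$ into the universal constant $c$; this produces precisely the stated inequality.

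As an alternative that avoids quoting Hanson--Wright as a black box, one could instead control the moment generating function of each centered quadratic form $Y_j := (\fl{x}_j)^{\s{T}}\fl{A}_j\fl{x}_j - \s{Tr}(\fl{A}_j)$ directly: diagonalizing $\fl{A}_j^{\mathrm{sym}}$ reduces $Y_j$ to a weighted sum of independent centered $\chi^2_1$ variables whose MGF is finite for $|\lambda| < (2\lr{\fl{A}_j}_2)^{-1}$ with $\log\bb{E}\,e^{\lambda Y_j} \le C\lambda^2\lr{\fl{A}_j}_{\s{F}}^2$ there, and then apply a Bernstein-type bound to the independent sum $\sum_j Y_j$ with variance proxy $\sum_j\lr{\fl{A}_j}_{\s{F}}^2$ and scale parameter $\max_j\lr{\fl{A}_j}_2$. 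I do not expect either route to present a genuine obstacle; the only points requiring care are the (routine) symmetrization for non-symmetric $\fl{A}_j$ and the bookkeeping of the two-regime tail so that the sub-Gaussian term scales as $\sqrt{\log\delta_0^{-1}}/m$ and the sub-exponential term as $\log\delta_0^{-1}/m$, exactly matching the form in the statement.
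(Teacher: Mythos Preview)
The paper states this lemma without proof; it is listed in the ``Useful Lemmas'' appendix as a known result (essentially the Hanson--Wright inequality) and used as a black box throughout the analysis. Your proposal is the standard derivation: embed the $m$ independent quadratic forms as a single quadratic form in $\bb{R}^{md}$ via a block-diagonal matrix, apply the classical Hanson--Wright tail bound, and read off the Frobenius and operator norms of the block-diagonal matrix to recover the stated $m^{-1}\sqrt{\sum_j\lr{\fl{A}_j}_{\s{F}}^2}$ and $m^{-1}\max_j\lr{\fl{A}_j}_2$ parameters. This is correct and is exactly what the paper's statement is packaging; there is nothing to compare against since the paper offers no alternative argument.
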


\begin{lemma}\label{lemma:useful1}
Let $\fl{x}^{(1)},\fl{x}^{(2)},\dots,\fl{x}^{(m)}\sim \ca{N}(\vzero,\fl{I}_{d\times d})$ be $m$ i.i.d. standard isotropic Gaussian random vectors of dimension $d$. Then, for some universal constant $c\ge0$, the following holds true with a probability of at least $1-\delta_0$. 
\begin{align}
    \left|\frac{1}{m}\sum_{j=1}^{m}\fl{a}^{\s{T}}(\fl{x}^{(j)}(\fl{x}^{(j)})^{\s{T}})\fl{b}-\fl{a}^{\s{T}}\fl{b}\right| \leq c\lr{\fl{a}}_2\lr{\fl{b}}_2\max\Big(\sqrt{\frac{\log \delta_0^{-1}}{m}},\frac{\log \delta_0^{-1}}{m}\Big). \nonumber
\end{align}
\end{lemma}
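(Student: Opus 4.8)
The plan is to deduce Lemma~\ref{lemma:useful1} as an immediate corollary of the Hanson--Wright-type bound in Lemma~\ref{lemma:useful0}. The key observation is that each summand is a quadratic form in the Gaussian vector $\fl{x}^{(j)}$: writing $Z_j := \fl{a}^{\s{T}}\big(\fl{x}^{(j)}(\fl{x}^{(j)})^{\s{T}}\big)\fl{b} = \langle \fl{x}^{(j)},\fl{a}\rangle\,\langle \fl{x}^{(j)},\fl{b}\rangle = (\fl{x}^{(j)})^{\s{T}}\big(\fl{a}\fl{b}^{\s{T}}\big)\fl{x}^{(j)}$, and since $(\fl{x}^{(j)})^{\s{T}}\fl{M}\fl{x}^{(j)} = (\fl{x}^{(j)})^{\s{T}}\tfrac12(\fl{M}+\fl{M}^{\s{T}})\fl{x}^{(j)}$ for any matrix $\fl{M}$, I would apply Lemma~\ref{lemma:useful0} with the symmetric matrices $\fl{A}_j := \tfrac12(\fl{a}\fl{b}^{\s{T}}+\fl{b}\fl{a}^{\s{T}})$ for every $j \in [m]$.

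First I would record the elementary facts about $\fl{A}_j$: $\s{Tr}(\fl{A}_j) = \fl{a}^{\s{T}}\fl{b}$, so that $\tfrac1m\sum_j \s{Tr}(\fl{A}_j) = \fl{a}^{\s{T}}\fl{b}$ matches the centering term; and by the triangle inequality together with $\lr{\fl{a}\fl{b}^{\s{T}}}_{\s{F}} = \lr{\fl{b}\fl{a}^{\s{T}}}_{\s{F}} = \lr{\fl{a}}_2\lr{\fl{b}}_2$ we get $\lr{\fl{A}_j}_{\s{F}} \le \lr{\fl{a}}_2\lr{\fl{b}}_2$, and similarly $\lr{\fl{A}_j}_2 \le \lr{\fl{a}}_2\lr{\fl{b}}_2$ (the operator norm of the rank-one matrix $\fl{a}\fl{b}^{\s{T}}$ equals $\lr{\fl{a}}_2\lr{\fl{b}}_2$, and symmetrizing does not inflate it). Plugging these into Lemma~\ref{lemma:useful0} gives, with probability at least $1-\delta_0$,
$$
\left|\frac1m\sum_{j=1}^m Z_j - \fl{a}^{\s{T}}\fl{b}\right| \le c\max\!\Big(\sqrt{\textstyle\sum_{j=1}^m \lr{\fl{a}}_2^2\lr{\fl{b}}_2^2 \tfrac{\log\delta_0^{-1}}{m^2}},\ \lr{\fl{a}}_2\lr{\fl{b}}_2 \tfrac{\log\delta_0^{-1}}{m}\Big),
$$
and since $\sum_{j=1}^m \lr{\fl{a}}_2^2\lr{\fl{b}}_2^2 = m\lr{\fl{a}}_2^2\lr{\fl{b}}_2^2$, the first term inside the maximum equals $\lr{\fl{a}}_2\lr{\fl{b}}_2\sqrt{\log\delta_0^{-1}/m}$, which is exactly the claimed bound after factoring out $\lr{\fl{a}}_2\lr{\fl{b}}_2$.

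There is essentially no obstacle here: the only point requiring a moment's care is the symmetrization of the rank-one matrix $\fl{a}\fl{b}^{\s{T}}$, so that Lemma~\ref{lemma:useful0} applies verbatim, together with the verification that this does not enlarge the Frobenius or operator norms beyond $\lr{\fl{a}}_2\lr{\fl{b}}_2$. A self-contained alternative, should one prefer not to invoke Lemma~\ref{lemma:useful0}, is to normalize $\fl{a},\fl{b}$ to unit vectors, observe that $\langle\fl{x}^{(j)},\fl{a}\rangle$ and $\langle\fl{x}^{(j)},\fl{b}\rangle$ are standard Gaussians with correlation $\langle\fl{a},\fl{b}\rangle$, hence each $Z_j$ is sub-exponential with absolute-constant parameters, and then apply Bernstein's inequality for i.i.d.\ sub-exponential variables; this reproduces the same two-regime tail. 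I would present the Hanson--Wright reduction as the proof and mention the Bernstein argument only in passing.
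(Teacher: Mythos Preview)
Your proposal is correct. The paper does not give an explicit proof of Lemma~\ref{lemma:useful1}; it simply lists it immediately after the Hanson--Wright Lemma~\ref{lemma:useful0} in the ``Useful Lemmas'' appendix, making clear it is intended as a direct specialization with $\fl{A}_j=\fl{a}\fl{b}^{\s{T}}$ (or its symmetrization), which is exactly the reduction you carry out.
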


\begin{lemma}\label{lemma:weyl}
For three real r-rank matrices, satisfying $\fl{A} - \fl{B} = \fl{C}$, Weyl’s inequality tells that
\begin{align}
    \sigma_k(\fl{A}) - \sigma_k(\fl{B}) \leq \|\fl{C}\|  \nonumber  
\end{align}
$\forall$ $k \in [r]$ where $\sigma_k(\cdot)$ is the k-th largest singular value operator.
\end{lemma}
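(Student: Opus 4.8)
The plan is to obtain the bound directly from the min--max variational characterization of singular values, which holds verbatim for rectangular matrices. Recall that for a matrix $\fl{M}\in\bb{R}^{m\times n}$ and any $k$ at most $\min(m,n)$,
\begin{align}
\sigma_k(\fl{M}) \;=\; \max_{\substack{S\subseteq\bb{R}^n\\ \dim S=k}}\ \min_{\substack{\fl{x}\in S\\ \lr{\fl{x}}_2=1}} \lr{\fl{M}\fl{x}}_2 .\nonumber
\end{align}
First I would fix a $k$-dimensional subspace $S^{\star}$ that attains the outer maximum for $\fl{A}$, so that $\sigma_k(\fl{A})=\min_{\fl{x}\in S^{\star},\,\lr{\fl{x}}_2=1}\lr{\fl{A}\fl{x}}_2$.

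Next, for every unit vector $\fl{x}\in S^{\star}$ the identity $\fl{A}=\fl{B}+\fl{C}$, the triangle inequality, and the definition of the operator norm give $\lr{\fl{A}\fl{x}}_2\le \lr{\fl{B}\fl{x}}_2+\lr{\fl{C}\fl{x}}_2\le \lr{\fl{B}\fl{x}}_2+\lr{\fl{C}}$. Minimizing over unit $\fl{x}\in S^{\star}$ and using that a minimum over the single subspace $S^{\star}$ cannot exceed the maximum over all $k$-dimensional subspaces, I would conclude
\begin{align}
\sigma_k(\fl{A}) \;=\; \min_{\substack{\fl{x}\in S^{\star}\\ \lr{\fl{x}}_2=1}}\lr{\fl{A}\fl{x}}_2 \;\le\; \min_{\substack{\fl{x}\in S^{\star}\\ \lr{\fl{x}}_2=1}}\lr{\fl{B}\fl{x}}_2 + \lr{\fl{C}} \;\le\; \sigma_k(\fl{B}) + \lr{\fl{C}},\nonumber
\end{align}
which is precisely the claimed inequality $\sigma_k(\fl{A})-\sigma_k(\fl{B})\le\lr{\fl{C}}$. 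An equivalent route, should one prefer to invoke a ready-made fact, is to pass to the Jordan--Wielandt dilation (the $(m+n)\times(m+n)$ symmetric block matrix with off-diagonal blocks $\fl{A}$ and $\fl{A}^{\s{T}}$, and likewise for $\fl{B}$ and $\fl{C}$), whose nonzero eigenvalues are $\pm\sigma_i(\cdot)$, and then apply the Weyl perturbation inequality for eigenvalues of symmetric matrices; this reduces the statement to the Hermitian case, which is classical.

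There is no genuine obstacle here, since the result is textbook material; the only points that need a moment's care are (i) orienting the min--max so that $\lr{\fl{C}}$ enters with the correct sign, and (ii) noting that the variational formula is valid for every $k$ up to the common rank $r$ appearing in the statement, so the restriction to rank-$r$ matrices is harmless. In the write-up I would also note that running the same argument with $\fl{A}$ and $\fl{B}$ interchanged (and $\fl{C}$ replaced by $-\fl{C}$, which has the same operator norm) yields the two-sided bound $|\sigma_k(\fl{A})-\sigma_k(\fl{B})|\le\lr{\fl{C}}$, of which the stated inequality is the half actually used elsewhere in the paper.
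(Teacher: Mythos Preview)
Your proof is correct and standard. The paper does not actually supply its own proof of this lemma; it is listed in the ``Useful Lemmas'' appendix as a known fact (Weyl's inequality) and simply cited where needed. Your min--max argument is the textbook derivation, and the Jordan--Wielandt dilation alternative you mention is equally valid, so there is nothing to compare against.
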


\begin{lemma}\label{lemma:useful2}
Let $\fl{x}^{(1)},\fl{x}^{(2)},\dots,\fl{x}^{(m)}\sim \ca{N}(\f{0},\fl{I}_{d\times d})$ be $m$ i.i.d. standard isotropic Gaussian random vectors of dimension $d$. Then, for some universal constant $c\ge0$, the following holds true with a probability of at least $1-\delta_0$, $\left|\left|\frac{1}{m}\sum_{j=1}^{m}\fl{a}_j(\fl{x}^{(j)}(\fl{x}^{(j)})^{\s{T}})-\frac{1}{m}\sum_{j=1}^{m}\fl{a}_j\fl{I}\right|\right|_2 $
\begin{align}
    \leq c\max\Big(\frac{\lr{\fl{a}}_2}{\sqrt{m}}\sqrt{\frac{d\log 9+\log \delta_0^{-1}}{m}},\lr{\fl{a}}_{\infty}\frac{d\log 9+\log \delta_0^{-1}}{m}\Big). \nonumber
\end{align}
\end{lemma}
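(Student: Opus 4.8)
The plan is to prove Lemma~\ref{lemma:useful2} by the same symmetrization-free covering argument used repeatedly in the appendix (cf. the step around \eqref{eq:lemma-optimize_dp-lrs-wbound-A-temp4}): reduce the operator-norm deviation bound to a scalar sub-exponential tail bound via an $\epsilon$-net on the sphere. First I would set $\fl{M} \triangleq \frac{1}{m}\sum_{j=1}^m \fl{a}_j \fl{x}^{(j)}(\fl{x}^{(j)})^{\s{T}} - \frac{1}{m}\big(\sum_{j=1}^m \fl{a}_j\big)\fl{I}$ and note that $\mathbb{E}[\fl{x}^{(j)}(\fl{x}^{(j)})^{\s{T}}] = \fl{I}$, so $\fl{M}$ is a centered, symmetric matrix; hence $\lr{\fl{M}}_2 = \sup_{\lr{\fl{z}}_2 = 1} |\fl{z}^{\s{T}}\fl{M}\fl{z}|$. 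Taking a $\tfrac14$-net $N_{1/4}$ of the unit sphere in $\bb{R}^d$, of cardinality at most $(1+2/\tfrac14)^d = 9^d$, the standard net-to-sphere inequality for symmetric matrices gives $\lr{\fl{M}}_2 \le \tfrac{1}{1-2\cdot 1/4}\max_{\fl{z}\in N_{1/4}} |\fl{z}^{\s{T}}\fl{M}\fl{z}| = 2\max_{\fl{z}\in N_{1/4}}|\fl{z}^{\s{T}}\fl{M}\fl{z}|$.

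Next I would fix $\fl{z}\in N_{1/4}$ and write $\fl{z}^{\s{T}}\fl{M}\fl{z} = \frac{1}{m}\sum_{j=1}^m \fl{a}_j\big(((\fl{x}^{(j)})^{\s{T}}\fl{z})^2 - 1\big)$, using that $((\fl{x}^{(j)})^{\s{T}}\fl{z})^2 \sim \chi_1^2$ has mean $1$. Each summand $\fl{a}_j\big(((\fl{x}^{(j)})^{\s{T}}\fl{z})^2 - 1\big)$ is a centered sub-exponential random variable with parameters $(c_1 \fl{a}_j^2,\, c_2|\fl{a}_j|)$ for absolute constants $c_1,c_2$, and since the $\fl{x}^{(j)}$ are independent, the average over $j$ is sub-exponential with parameters $\big(\tfrac{c_1}{m^2}\lr{\fl{a}}_2^2,\ \tfrac{c_2}{m}\lr{\fl{a}}_\infty\big)$. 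Applying the sub-exponential Bernstein bound (Lemma~\ref{tail:sub_exp}) at failure probability $\delta_0/9^d$ yields, with that probability,
\[
  |\fl{z}^{\s{T}}\fl{M}\fl{z}| \le c\max\Big(\tfrac{\lr{\fl{a}}_2}{m}\sqrt{d\log 9 + \log\delta_0^{-1}},\ \tfrac{\lr{\fl{a}}_\infty}{m}\,(d\log 9 + \log\delta_0^{-1})\Big),
\]
and a union bound over the at most $9^d$ net points makes this uniform over $N_{1/4}$ at total failure probability $\le \delta_0$. Combining with the net-to-sphere inequality, absorbing the factor $2$ and the $\chi^2$ sub-exponential constants into $c$, and rewriting $\tfrac{\lr{\fl{a}}_2}{m}\sqrt{d\log 9 + \log\delta_0^{-1}} = \tfrac{\lr{\fl{a}}_2}{\sqrt m}\sqrt{\tfrac{d\log 9 + \log\delta_0^{-1}}{m}}$ gives exactly the claimed bound.

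There is no deep obstacle here—this is a routine matrix-Bernstein-via-covering estimate, of the same flavor as Lemmas~\ref{lemma:useful0}–\ref{lemma:useful1}. The only points needing care are: (i) the weights $\fl{a}_j$ may be signed, which is harmless since Bernstein's inequality applies to sums of independent centered variables regardless of sign; (ii) one must invoke the two-regime sub-exponential tail rather than a sub-Gaussian one, since it is precisely this that produces the $\max(\cdot,\cdot)$ of a $\sqrt{\log}$-term and a $\log$-term in the conclusion; and (iii) one should check that the $\tfrac14$-net constant propagates cleanly into the universal constant $c$, which it does. An equivalent route, if preferred, is to apply Lemma~\ref{lemma:useful1} with $\fl{a}=\fl{b}=\fl{z}$ together with the $\fl{a}_j$-weighting and then the same net argument; this gives an essentially identical proof.
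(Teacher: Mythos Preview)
Your proposal is correct and is precisely the standard covering-plus-Bernstein argument one would expect here. The paper itself does not supply a proof of Lemma~\ref{lemma:useful2} (it is listed without proof in the ``Useful Lemmas'' appendix), but your argument matches exactly the $\epsilon$-net technique the paper employs elsewhere (e.g.\ around \eqref{eq:lemma-optimize_dp-lrs-wbound-A-temp3}--\eqref{eq:lemma-optimize_dp-lrs-wbound-A-temp4}) and is clearly the intended route.
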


\begin{lemma}\label{lemma:cauchy-matrix}
Let $\fl{a}_i, \fl{b}_i \in \bR^d$ $\forall$ $i \in [t]$. Then, 
\begin{align}
    \|\sum_i \fl{a}_i\fl{b}_i^\s{T}\|_p^2 \leq \|\sum_i \fl{a}_i\fl{a}_i^\s{T}\|_p\|\sum_i \fl{b}_i\fl{b}_i^\s{T}\|_p.  \nonumber
\end{align}
\end{lemma}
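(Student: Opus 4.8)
The plan is to recognize the claim as a matrix Cauchy--Schwarz inequality after rewriting everything in matrix form, and then to prove it at the level of generality actually needed. Collect the vectors into columns: set $\mA := [\fl{a}_1\ \cdots\ \fl{a}_t]\in\bb{R}^{d\times t}$ and $\mB := [\fl{b}_1\ \cdots\ \fl{b}_t]\in\bb{R}^{d\times t}$. Then $\sum_i \fl{a}_i\fl{b}_i^\s{T}=\mA\mB^\s{T}$, $\sum_i \fl{a}_i\fl{a}_i^\s{T}=\mA\mA^\s{T}$, and $\sum_i \fl{b}_i\fl{b}_i^\s{T}=\mB\mB^\s{T}$, so the statement becomes $\lr{\mA\mB^\s{T}}_p^2 \le \lr{\mA\mA^\s{T}}_p\,\lr{\mB\mB^\s{T}}_p$. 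In this paper $\lr{\cdot}_p$ is only ever instantiated with the operator norm ($p=2$) and the Frobenius norm ($p=\s{F}$), and for those two cases the proof is a one-liner each, which I would present first.

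For $p=2$: by submultiplicativity of the operator norm, $\lr{\mA\mB^\s{T}}_2\le\lr{\mA}_2\lr{\mB}_2$, and since $\lr{\mA}_2^2=\lr{\mA\mA^\s{T}}_2$ and $\lr{\mB}_2^2=\lr{\mB\mB^\s{T}}_2$, squaring gives the claim. For $p=\s{F}$: by cyclicity of the trace, $\lr{\mA\mB^\s{T}}_\s{F}^2=\Tr(\mB\mA^\s{T}\mA\mB^\s{T})=\Tr(\mA^\s{T}\mA\,\mB^\s{T}\mB)=\langle \mA^\s{T}\mA,\mB^\s{T}\mB\rangle_\s{F}$; both matrices are positive semidefinite, so this inner product is nonnegative, and Cauchy--Schwarz in the Frobenius inner product bounds it by $\lr{\mA^\s{T}\mA}_\s{F}\lr{\mB^\s{T}\mB}_\s{F}$. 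Finally, $\mA^\s{T}\mA$ and $\mA\mA^\s{T}$ have the same nonzero eigenvalues, hence $\lr{\mA^\s{T}\mA}_\s{F}=\lr{\mA\mA^\s{T}}_\s{F}$ (and likewise for $\mB$), completing this case.

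For a fully general Schatten-$p$ norm with $p\ge 1$, I would work through the SVD $\mA\mB^\s{T}=\mU\mSigma\mV^\s{T}$, and for each index $k$ write the singular value as $\sigma_k=\fl{u}_k^\s{T}\mA\mB^\s{T}\fl{v}_k$ with $\fl{u}_k,\fl{v}_k$ the $k$-th left/right singular vectors. Applying Cauchy--Schwarz to the vectors $\mA^\s{T}\fl{u}_k$ and $\mB^\s{T}\fl{v}_k$ gives $\sigma_k\le(\fl{u}_k^\s{T}\mA\mA^\s{T}\fl{u}_k)^{1/2}(\fl{v}_k^\s{T}\mB\mB^\s{T}\fl{v}_k)^{1/2}$. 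Raising to the $p$-th power, summing over $k$, and a second application of Cauchy--Schwarz to the resulting sum reduces the inequality to showing $\sum_k(\fl{u}_k^\s{T}\mA\mA^\s{T}\fl{u}_k)^p\le \lr{\mA\mA^\s{T}}_p^p$ (and the analogue for $\mB$). The left-hand side is the sum of $p$-th powers of the diagonal entries of the compression $\mP:=\mU^\s{T}(\mA\mA^\s{T})\mU$; Schur--Horn majorization (diagonal of a PSD matrix is majorized by its eigenvalues) plus convexity of $t\mapsto t^p$ gives $\sum_k \mP_{kk}^p\le\sum_k\lambda_k(\mP)^p$, and Cauchy interlacing for compressions gives $\lambda_k(\mP)\le\lambda_k(\mA\mA^\s{T})$, so the sum is at most $\lr{\mA\mA^\s{T}}_p^p$.

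The main obstacle is this last reduction in the general-$p$ case: passing from the diagonal of the compressed Gram matrix to the Schatten norm of the full Gram matrix, which requires the Schur--Horn theorem together with eigenvalue interlacing for compressions. Both are classical, but they are the only non-routine ingredients. If, as is the case in all invocations in this paper, only the operator and Frobenius norms are needed, this obstacle disappears entirely and the two one-line arguments above suffice.
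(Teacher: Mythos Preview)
The paper states this lemma in its ``Useful Lemmas'' appendix without proof, treating it as a standard fact; there is no paper proof to compare against. Your argument is correct. The matrix reformulation $\sum_i\fl{a}_i\fl{b}_i^\s{T}=\mA\mB^\s{T}$ is the right move, and your one-line proofs for the operator and Frobenius cases are clean and complete. The general Schatten-$p$ argument is also sound: the chain Cauchy--Schwarz on each singular value, Cauchy--Schwarz on the sum, then Schur--Horn plus Cauchy interlacing to bound $\sum_k(\fl{u}_k^\s{T}\mA\mA^\s{T}\fl{u}_k)^p$ by $\lr{\mA\mA^\s{T}}_p^p$ all go through, since the compression $\mU^\s{T}\mA\mA^\s{T}\mU$ is PSD and $t\mapsto t^p$ is convex on $[0,\infty)$ for $p\ge1$. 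Your assessment that only $p=2$ and $p=\s{F}$ are used in the paper is accurate, so the elementary arguments already suffice for every invocation.
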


\begin{lemma}\label{lemma:psd-sigma-lambda}
    For a real matrix $\fl{A} \in \bR^{m\times n}$ and a real symmetric positive semi-definite (PSD) matrix $\fl{B} \in \bR^{n\times n}$, the following holds true: $\sigma^2_{\min}(\fl{A})\lambda_{\min}(\fl{B}) \leq \lambda_{\min}(\fl{A}\fl{B}\fl{A}^\s{T})$, where $\sigma_{\min}(\cdot)$ and $\lambda_{\min}(\cdot)$ represents the minimum singular value and minimum eigenvalue operators respectively.
\end{lemma}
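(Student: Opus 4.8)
The plan is to prove the inequality directly from the Rayleigh--Ritz (variational) characterization of the minimum eigenvalue of a symmetric matrix, using the positive semi-definiteness of $\fl{B}$ as the only structural input. Note first that $\fl{A}\fl{B}\fl{A}^\s{T}$ is symmetric PSD, so $\lambda_{\min}(\fl{A}\fl{B}\fl{A}^\s{T})$ is a well-defined nonnegative real, and likewise $\lambda_{\min}(\fl{B})\ge 0$; thus the claimed bound is a statement about two nonnegative quantities.

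First I would fix an arbitrary unit vector $\fl{x}\in\bb{R}^m$ and set $\fl{y}:=\fl{A}^\s{T}\fl{x}\in\bb{R}^n$, so that $\fl{x}^\s{T}(\fl{A}\fl{B}\fl{A}^\s{T})\fl{x}=\fl{y}^\s{T}\fl{B}\fl{y}$. Since $\fl{B}$ is PSD, the variational bound gives $\fl{y}^\s{T}\fl{B}\fl{y}\ge\lambda_{\min}(\fl{B})\lr{\fl{y}}_2^2$. It then remains to lower bound $\lr{\fl{y}}_2^2=\lr{\fl{A}^\s{T}\fl{x}}_2^2=\fl{x}^\s{T}(\fl{A}\fl{A}^\s{T})\fl{x}\ge\lambda_{\min}(\fl{A}\fl{A}^\s{T})=\sigma_{\min}^2(\fl{A})$, where the last equality identifies the smallest eigenvalue of $\fl{A}\fl{A}^\s{T}$ with the square of the smallest singular value of $\fl{A}$. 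Chaining these two estimates yields $\fl{x}^\s{T}(\fl{A}\fl{B}\fl{A}^\s{T})\fl{x}\ge\sigma_{\min}^2(\fl{A})\,\lambda_{\min}(\fl{B})$ for every unit $\fl{x}$, and taking the infimum over unit vectors (Rayleigh--Ritz again) gives $\lambda_{\min}(\fl{A}\fl{B}\fl{A}^\s{T})\ge\sigma_{\min}^2(\fl{A})\,\lambda_{\min}(\fl{B})$, which is the desired inequality.

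The single point that requires care --- and it is really the only "obstacle," though a minor one --- is the identification $\lambda_{\min}(\fl{A}\fl{A}^\s{T})=\sigma_{\min}^2(\fl{A})$. This holds cleanly when $\fl{A}$ has full row rank (in particular whenever $m\le n$), since then $\fl{A}\fl{A}^\s{T}\in\bb{R}^{m\times m}$ is invertible and its $m$ eigenvalues are exactly the squares of the $m$ singular values of $\fl{A}$; this is precisely the regime in which the lemma is invoked, e.g. in Corollary~\ref{inductive-inductive-corollary:optimize_dp-lrs-w-incoherence} with the square matrix $\fl{A}=(\fl{Q}^{(\ell-1)})^{-1}$. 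I would therefore record this convention when stating the lemma, after which the three-line chain above completes the proof; no concentration inequality or other nontrivial tool is needed.
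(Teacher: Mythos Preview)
Your argument is correct and is the standard Rayleigh--Ritz proof of this inequality. The paper does not actually prove Lemma~\ref{lemma:psd-sigma-lambda}; it is listed among the ``Useful Lemmas'' without proof and is only invoked (exactly as you noted) in Corollary~\ref{inductive-inductive-corollary:optimize_dp-lrs-w-incoherence} with the square matrix $\fl{A}=(\fl{Q}^{(\ell-1)})^{-1}$. So there is nothing to compare against, and your observation about the convention $\lambda_{\min}(\fl{A}\fl{A}^\s{T})=\sigma_{\min}^2(\fl{A})$ requiring $m\le n$ (or full row rank) is a genuine caveat the paper leaves implicit; when $m>n$ the left side of the lemma can be positive while $\lambda_{\min}(\fl{A}\fl{B}\fl{A}^\s{T})=0$, so the statement needs that restriction or the convention that $\sigma_{\min}$ counts the zero singular values.
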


\begin{lemma}\label{tensor-product:ABC}
For any three matrices $\fl{A}, \fl{B}$, and $\fl{C}$ for which the matrix product $\fl{ABC}$ is defined,
\begin{align}
    \s{vec}(\fl{ABC}) = (\fl{C}^{\s{T}} \otimes \fl{A})\s{vec}(\fl{B}).\nonumber
\end{align}
\end{lemma}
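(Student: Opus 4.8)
The plan is to prove the vectorization identity by decomposing $\fl{B}$ into rank-one pieces, applying the elementary fact $\s{vec}(\fl{u}\fl{v}^{\s{T}}) = \fl{v}\otimes\fl{u}$ to each piece, and reassembling via the mixed-product property of the Kronecker product. No auxiliary result beyond these two standard facts is needed.

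First I would write $\fl{B} = \sum_k \fl{b}_k\fl{e}_k^{\s{T}}$, where $\fl{b}_k$ is the $k$-th column of $\fl{B}$ and $\fl{e}_k$ is the $k$-th standard basis vector of the appropriate dimension; this is exactly the column-stacking convention, which also yields $\s{vec}(\fl{B}) = \sum_k \fl{e}_k\otimes\fl{b}_k$. Then $\fl{A}\fl{B}\fl{C} = \sum_k (\fl{A}\fl{b}_k)(\fl{C}^{\s{T}}\fl{e}_k)^{\s{T}}$, so by linearity of $\s{vec}$ and the rank-one identity, $\s{vec}(\fl{A}\fl{B}\fl{C}) = \sum_k (\fl{C}^{\s{T}}\fl{e}_k)\otimes(\fl{A}\fl{b}_k)$. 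Next I would apply the mixed-product property $(\fl{X}\otimes\fl{Y})(\fl{Z}\otimes\fl{W}) = (\fl{X}\fl{Z})\otimes(\fl{Y}\fl{W})$ with $\fl{X}=\fl{C}^{\s{T}}$, $\fl{Y}=\fl{A}$, $\fl{Z}=\fl{e}_k$, $\fl{W}=\fl{b}_k$ to recognize each summand as $(\fl{C}^{\s{T}}\otimes\fl{A})(\fl{e}_k\otimes\fl{b}_k)$, and then pull the fixed factor $\fl{C}^{\s{T}}\otimes\fl{A}$ out of the sum to obtain $\s{vec}(\fl{A}\fl{B}\fl{C}) = (\fl{C}^{\s{T}}\otimes\fl{A})\sum_k(\fl{e}_k\otimes\fl{b}_k) = (\fl{C}^{\s{T}}\otimes\fl{A})\s{vec}(\fl{B})$. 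A fully self-contained alternative, avoiding any quoted Kronecker identities, is the coordinate-wise verification: expand $(\fl{A}\fl{B}\fl{C})_{ij} = \sum_{p,q}\fl{A}_{ip}\fl{B}_{pq}\fl{C}_{qj}$ and check that after stacking columns this matches the $(i,j)$-indexed block-entry of $(\fl{C}^{\s{T}}\otimes\fl{A})\s{vec}(\fl{B})$.

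There is no genuine obstacle here; the identity is classical. The only point requiring attention is consistency of conventions: that $\s{vec}$ stacks columns (matching $\s{vec}^{-1}_{d\times r}$ as used elsewhere in the paper), that each occurrence of $\fl{e}_k$ carries the correct ambient dimension, and that the Kronecker factors appear in the order $\fl{C}^{\s{T}}$ then $\fl{A}$. Once those are fixed, the derivation is a two-line manipulation, and it is precisely the form invoked when vectorizing the normal equations for the $\fl{U}$-update in Lemma~\ref{lemma:lemma-optimize_dp-lrs-U-Fnorm-bound}.
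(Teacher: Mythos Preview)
Your proof is correct. The paper does not actually prove this lemma; it is listed in the ``Useful Lemmas'' appendix as a standard identity without proof, so your argument via the column decomposition $\fl{B}=\sum_k \fl{b}_k\fl{e}_k^{\s{T}}$, the rank-one formula $\s{vec}(\fl{u}\fl{v}^{\s{T}})=\fl{v}\otimes\fl{u}$, and the mixed-product property is exactly the kind of short derivation one would supply, and it is consistent with the column-stacking convention for $\s{vec}$ used throughout the paper.
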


\begin{lemma}\label{tail:sub_exp}
For a $(\nu^2,\alpha)$ sub-exponential random variable, we have the following tail bound
\begin{align}
    \bb{P}(|X - \E{X}| \geq t) \leq e^{-\frac{1}{2}\min\{t^2/\nu^2, t/\alpha\}}.\nonumber
\end{align}
\end{lemma}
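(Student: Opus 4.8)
\textbf{Proof plan for Lemma \ref{tail:sub_exp}.} The plan is to prove the two‑sided tail bound by the standard Chernoff (exponential Markov) method, treating the two tails separately and then combining. Here a $(\nu^2,\alpha)$ sub‑exponential variable $X$ is one whose centered moment generating function obeys $\E{e^{\lambda(X-\E{X})}}\le e^{\nu^2\lambda^2/2}$ for all $|\lambda|\le 1/\alpha$ (this is exactly the convention used when the paper calls products such as $(x^{(i)}_{j,h})^2 z^{(i,\ell)}_h$ a $(4(z^{(i,\ell)}_h)^2,4|z^{(i,\ell)}_h|)$ sub‑exponential variable). I would state this convention explicitly at the outset, since the clean constants in the claim depend on which of the equivalent definitions of sub‑exponentiality is in force.

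First I would handle the upper tail. For any $0<\lambda\le 1/\alpha$, Markov's inequality applied to $e^{\lambda(X-\E{X})}$ gives $\bb{P}(X-\E{X}\ge t)\le e^{-\lambda t}\,\E{e^{\lambda(X-\E{X})}}\le \exp(-\lambda t+\nu^2\lambda^2/2)$. Next I would optimize the exponent $g(\lambda)=-\lambda t+\nu^2\lambda^2/2$ over the admissible interval $(0,1/\alpha]$. The unconstrained minimizer is $\lambda^\star=t/\nu^2$. If $t\le \nu^2/\alpha$ then $\lambda^\star\le 1/\alpha$ is admissible and substituting yields $\bb{P}(X-\E{X}\ge t)\le \exp(-t^2/(2\nu^2))$. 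If instead $t>\nu^2/\alpha$, the constrained minimum is attained at the boundary $\lambda=1/\alpha$, giving $\exp(-t/\alpha+\nu^2/(2\alpha^2))$; in this regime $\nu^2/(2\alpha^2)\le t/(2\alpha)$, so the bound is at most $\exp(-t/(2\alpha))$. Combining the two cases gives $\bb{P}(X-\E{X}\ge t)\le \exp(-\tfrac12\min\{t^2/\nu^2,\,t/\alpha\})$. The lower tail is identical: $-X$ is again $(\nu^2,\alpha)$ sub‑exponential, so the same computation yields $\bb{P}(X-\E{X}\le -t)\le \exp(-\tfrac12\min\{t^2/\nu^2,\,t/\alpha\})$.

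Adding the two one‑sided estimates then gives $\bb{P}(|X-\E{X}|\ge t)\le 2\exp(-\tfrac12\min\{t^2/\nu^2,\,t/\alpha\})$, and one recovers the stated form without the factor $2$ either by a negligible tightening of the constant in the exponent or simply by observing that the lemma is invoked in the paper only up to universal constants (e.g.\ inside Lemma \ref{lemma:useful0}, and in the $\fl{V}$‑concentration and $v_h$‑concentration steps). I would flag this harmless factor‑of‑two discrepancy in a remark rather than fight it.

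There is essentially no genuine obstacle here — the one point that needs a little care is the case split in the optimization over $\lambda$, which arises because the MGF control only holds on the bounded interval $|\lambda|\le 1/\alpha$; it is precisely this truncation that produces the linear‑in‑$t$ branch $t/\alpha$ of the minimum, and hence the characteristic sub‑exponential (rather than sub‑Gaussian) tail. Everything else is a one‑line Chernoff argument, so I would keep the write‑up short and, if desired, just cite a standard reference (e.g.\ a Bernstein‑type inequality) in place of the full derivation.
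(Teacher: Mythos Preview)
Your proposal is correct and is the standard Chernoff argument for sub-exponential tails; the paper itself does not prove this lemma at all, simply listing it among the ``Useful Lemmas'' as a known result to be cited. Your observation about the factor of $2$ is accurate and harmless for the paper's purposes, since every invocation absorbs constants anyway.
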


\end{document}